\documentclass[11pt]{article}
\PassOptionsToPackage{dvipsnames}{xcolor} %
\usepackage{amsmath,amsthm,amssymb,epsfig, psfrag, titlesec}
\usepackage[framemethod=tikz]{mdframed}
\setlength{\topmargin}{0pt}
\setlength{\textheight}{9in}
\setlength{\headheight}{0pt}
\setlength{\headsep}{0pt}
\setlength{\oddsidemargin}{0.25in}
\setlength{\textwidth}{6in}
\usepackage{xspace}
\usepackage{mathtools}
\usepackage[colorlinks=true,linkcolor=blue!70!black,
citecolor=blue!70!black,urlcolor=blue!70!black]{hyperref}
\usepackage{algorithm}
\usepackage[noend]{algpseudocode}
\usepackage{stackrel}
\usepackage[square,numbers]{natbib} 
\usepackage{comment}
\usepackage{enumitem}
\usepackage{breakcites}
\usepackage{transparent}
\usepackage{setspace}
\usepackage{mathrsfs}
\usepackage{nicefrac}
\usepackage{multicol}
\bibliographystyle{abbrvnat}

\usepackage{inconsolata}

\usepackage{etoolbox}
\usepackage{comment}
\newtoggle{public}
\toggletrue{public}
\newcommand{\public}[1]{\iftoggle{public}{#1}{}}
\newcommand{\private}[1]{\iftoggle{public}{}{#1}}
\private{\includecomment{privateblock}}
\public{\excludecomment{privateblock}}

\private{\usepackage{color-edits}}
\public{\usepackage[suppress]{color-edits}}
\addauthor{df}{ForestGreen}
\addauthor{min}{RoyalPurple}
\addauthor{vt}{Cerulean}
\addauthor{sr}{orange}

\usepackage[textsize=tiny]{todonotes}

\let\oldparagraph\paragraph
\renewcommand{\paragraph}[1]{\oldparagraph{#1.}}

\newcommand{\todontextversion}{\textbf{[Note: This subsection will be
    expanded in the next version.]}}

\clearpage{}%

\usepackage{thmtools}
\declaretheorem[qed=$\triangleleft$,name=Example,style=definition,
parent=section]{example}

\makeatletter
\newcommand{\neutralize}[1]{\expandafter\let\csname c@#1\endcsname\count@}
\makeatother

\newenvironment{examplecont}[1]
  {%
   \neutralize{example}\phantomsection
   \begin{example}}
  {\end{example}}

\usepackage{boiboites}
\usepackage{colortbl}
\newmdenv[backgroundcolor=LightBlue,linewidth=5pt, linecolor=orange!25, topline=false, bottomline=false, rightline=false,]{whiteblueframe}
\newmdenv[linecolor=orange,backgroundcolor=LightBlue]{orangeblueframe}
\definecolor{LightGrey}{rgb}{0.7, 0.7, 0.7}
\definecolor{LightBlue}{rgb}{0.95, 0.95, 0.99} 
\titleformat{\section}
  {\normalfont\sffamily\large\bfseries\centering\uppercase}
  {\thesection.}{.5em}{}
\titleformat{\subsection}
  {\normalfont\sffamily\bfseries}
  {\thesubsection}{.5em}{}

\newtheoremstyle{TH1}
  {\topsep}%
  {\topsep}%
  {\normalfont}%
  {}%
  {\bfseries}%
  {:}%
  {.5em}%
  {\thmname{#1} \thmnumber{#2}\thmnote{~(#3)}}%
\theoremstyle{TH1}

\mdfdefinestyle{boxedstylemd}{
  skipabove=\baselineskip,
  skipbelow=\baselineskip,
  hidealllines=true,
  innertopmargin=4pt,
  linewidth=4pt,
  linecolor=gray!40,
  singleextra={
    \draw[line width=3pt,gray!50,line cap=rect] (O|-P) -- +(1cm,0pt);
    \draw[line width=3pt,gray!50,line cap=rect] (O|-P) -- +(0pt,-1cm);
    \draw[line width=3pt,gray!50,line cap=rect] (O-|P) -- +(-1cm,0pt);
    \draw[line width=3pt,gray!50,line cap=rect] (O-|P) -- +(0pt,1cm);
    },
  firstextra={
    \draw[line width=3pt,gray!50,line cap=rect] (O|-P) -- +(1cm,0pt);
    \draw[line width=3pt,gray!50,line cap=rect] (O|-P) -- +(0pt,-1cm);
  },
  secondextra={
    \draw[line width=3pt,gray!50,line cap=rect] (O-|P) -- +(-1cm,0pt);
    \draw[line width=3pt,gray!50,line cap=rect] (O-|P) -- +(0pt,1cm);
  }
}
\mdfdefinestyle{exerciseStylemd}{font=\small}

\newmdtheoremenv[ 
    style = boxedstylemd,
]{theorem}{Theorem}

\newmdtheoremenv[
  style = boxedstylemd,
]{lem}{Lemma}

\newmdtheoremenv[
  style = boxedstylemd,
]{assumption}{Assumption}

\newmdtheoremenv[
  style = boxedstylemd,
]{prop}{Proposition}

\newmdtheoremenv[
  style = boxedstylemd,
]{cor}{Corollary}

\newmdtheoremenv[
  style = boxedstylemd,
]{definition}{Definition}

\newmdtheoremenv[
    style = boxedstylemd,
]{rem}{Remark}

\newmdtheoremenv[ style=exerciseStylemd,
  linewidth=3pt, linecolor=RoyalPurple!15, topline=false, bottomline=false, rightline=false, skipabove=5pt, skipbelow=5pt
]{exe}{Exercise}

\newmdtheoremenv[
skipabove=\baselineskip,
skipbelow=\baselineskip,
hidealllines=true,
innertopmargin=4pt,
linewidth=4pt,
linecolor=gray!40,
singleextra={
  \draw[line width=3pt,gray!50,line cap=rect] (O|-P) -- +(1cm,0pt);
  \draw[line width=3pt,gray!50,line cap=rect] (O|-P) -- +(0pt,-1cm);
  \draw[line width=3pt,gray!50,line cap=rect] (O-|P) -- +(-1cm,0pt);
  \draw[line width=3pt,gray!50,line cap=rect] (O-|P) -- +(0pt,1cm);
  },
firstextra={
  \draw[line width=3pt,gray!50,line cap=rect] (O|-P) -- +(1cm,0pt);
  \draw[line width=3pt,gray!50,line cap=rect] (O|-P) -- +(0pt,-1cm);
},
secondextra={
  \draw[line width=3pt,gray!50,line cap=rect] (O-|P) -- +(-1cm,0pt);
  \draw[line width=3pt,gray!50,line cap=rect] (O-|P) -- +(0pt,1cm);
}
]{greybox}{}

\newlength{\minipagewidth}
\setlength{\minipagewidth}{\textwidth}
\setlength{\fboxsep}{3mm}
\addtolength{\minipagewidth}{-\fboxrule}
\addtolength{\minipagewidth}{-\fboxrule}
\addtolength{\minipagewidth}{-\fboxsep}
\addtolength{\minipagewidth}{-\fboxsep}
\clearpage{}%
\clearpage{}%
\DeclareFontFamily{U}{mathx}{\hyphenchar\font45}
\DeclareFontShape{U}{mathx}{m}{n}{
      <5> <6> <7> <8> <9> <10>
      <10.95> <12> <14.4> <17.28> <20.74> <24.88>
      mathx10
      }{}
\DeclareSymbolFont{mathx}{U}{mathx}{m}{n}
\DeclareFontSubstitution{U}{mathx}{m}{n}
\DeclareMathAccent{\widecheck}{0}{mathx}{"71}
\DeclareMathAccent{\wideparen}{0}{mathx}{"75}

\newcommand{\Dgenshort}{D}
\newcommand{\Dgenpi}[3][\pi]{D^{#1}\prn*{#2\dmid{}#3}}

\newcommand{\Dsbrpi}[3][\pi]{D^{#1}_{\mathsf{sbr}}\prn*{#2\dmid#3}}
\newcommand{\Dsbrshort}{D_{\mathsf{sbr}}}

\newcommand{\Dflippi}[3][\pi]{\Dflipshort^{#1}\prn*{#2\dmid{}#3}}
\newcommand{\Dflipshort}{\widecheck{D}}

\newcommand{\compgenrand}[1][D]{\wb{\mathsf{dec}}_{\gamma}^{#1}}
\newcommand{\compHrand}[1][\gamma]{\wb{\mathsf{dec}}_{#1}^{\mathsf{H}}}
\newcommand{\compgenrandbasic}[1][D]{\wb{\mathsf{dec}}^{#1}}

\newcommand{\odec}{\normalfont{\textsf{o-dec}}}

\newcommand{\ocompsbr}[1][\gamma]{\odec^{\Dsbrshort}_{#1}}
\newcommand{\ocompD}[1][\gamma]{\odec^{D}_{#1}}
\newcommand{\ocompH}[1][\gamma]{\odec^{\mathsf{H}}_{#1}}

\newcommand{\EstOpt}{\mathrm{\normalfont{\textbf{OptEst}}}}
\newcommand{\EstOptD}[1][\gamma]{\EstOpt_{#1}^{D}}

\newcommand{\compH}{\compgen[\mathsf{H}]}

\newcommand{\suff}{\psi}
\newcommand{\suffhat}{\wh{\psi}}
\newcommand{\suffmap}{\mb{\psi}}
\newcommand{\Suff}{\Psi}
\newcommand{\fsuff}{f^{\suff}}
\newcommand{\pisuff}{\pi_{\suff}}

\newcommand{\etdopt}{\textsf{E\protect\scalebox{1.04}{2}D.Opt}\xspace}
\newcommand{\etdopttext}{Optimistic Estimation-to-Decisions\xspace}

\newcommand{\Lcont}{L_{\mathrm{lip}}}

\newcommand{\bz}{\mathbf{z}}

\newcommand{\bonus}{$^{\star}$}

\newcommand{\Ent}{\mathsf{Ent}}

\newcommand{\fucb}{\bar{f}}
\newcommand{\flcb}{\underline{f}}

\newcommand{\nt}[1][t]{n\ind{#1}}
\newcommand{\mt}[1][t]{m\ind{#1}}

\newcommand{\expthree}{Exp3\xspace}
\newcommand{\egreedy}{$\veps$-Greedy\xspace}

\newcommand{\ucbtext}{UCB\xspace}
\newcommand{\ucbshort}{UCB\xspace}
\newcommand{\ucblong}{Upper Confidence Bound\xspace}

\newcommand{\squarecb}{SquareCB\xspace}

\newcommand{\subgaussian}{sub-Gaussian\xspace}
\newcommand{\subg}{\subgaussian}

\newcommand{\RegBasic}{\Reg}

\newcommand{\lhs}{left-hand side\xspace}
\newcommand{\rhs}{right-hand side\xspace}

\newcommand{\generaldm}{general decision making\xspace}

\DeclarePairedDelimiter{\abs}{\lvert}{\rvert} %
\DeclarePairedDelimiter{\brk}{[}{]}
\DeclarePairedDelimiter{\crl}{\{}{\}}
\DeclarePairedDelimiter{\prn}{(}{)}
\DeclarePairedDelimiter{\nrm}{\|}{\|}
\DeclarePairedDelimiter{\tri}{\langle}{\rangle}

\DeclarePairedDelimiter{\floor}{\lfloor}{\rfloor}

\DeclareMathOperator*{\argmin}{arg\,min} %
\DeclareMathOperator*{\argmax}{arg\,max}

\newcommand{\algo}{\widehat{f}}

\newcommand{\exploss}{L}
\newcommand{\emploss}{\widehat{L}}
\newcommand{\loss}{\ell}
\newcommand{\logloss}{\loss_{\mathrm{\log}}}
\newcommand{\vloss}{\boldsymbol{\ell}}
\newcommand{\vlosstilde}{\widetilde{\vloss}}
\newcommand{\indic}[1]{\mathbb{I}\left\{#1\right\}}
\newcommand{\reals}{\mathbb{R}}
\newcommand{\inner}[1]{\left\langle #1 \right\rangle}

\newcommand{\norm}[1]{\left\|#1\right\|}

\newcommand{\tr}{\ensuremath{{\scriptscriptstyle\mathsf{\,T}}}} 
\newcommand{\Prob}[1]{\mathbb{P}\left(#1\right)}

\newcommand{\compl}{\mathsf{comp}}
\def\ddefloop#1{\ifx\ddefloop#1\else\ddef{#1}\expandafter\ddefloop\fi}
\def\ddef#1{\expandafter\def\csname 
bb#1\endcsname{\ensuremath{\mathbb{#1}}}}
\ddefloop ABCDEFGHIJKLMNOPQRSTUVWXYZ\ddefloop
\def\ddefloop#1{\ifx\ddefloop#1\else\ddef{#1}\expandafter\ddefloop\fi}
\def\ddef#1{\expandafter\def\csname 
b#1\endcsname{\ensuremath{\mathbf{#1}}}}
\ddefloop ABCDEFGHIJKLMNOPQRSTUVWXYZ\ddefloop
\def\ddef#1{\expandafter\def\csname 
c#1\endcsname{\ensuremath{\mathcal{#1}}}}
\ddefloop ABCDEFGHIJKLMNOPQRSTUVWXYZ\ddefloop
\def\ddef#1{\expandafter\def\csname 
h#1\endcsname{\ensuremath{\widehat{#1}}}}
\ddefloop ABCDEFGHIJKLMNOPQRSTUVWXYZ\ddefloop
\def\ddef#1{\expandafter\def\csname 
hc#1\endcsname{\ensuremath{\widehat{\mathcal{#1}}}}}
\ddefloop ABCDEFGHIJKLMNOPQRSTUVWXYZ\ddefloop
\def\ddef#1{\expandafter\def\csname 
t#1\endcsname{\ensuremath{\widetilde{#1}}}}
\ddefloop ABCDEFGHIJKLMNOPQRSTUVWXYZ\ddefloop
\def\ddef#1{\expandafter\def\csname 
tc#1\endcsname{\ensuremath{\widetilde{\mathcal{#1}}}}}
\ddefloop ABCDEFGHIJKLMNOPQRSTUVWXYZ\ddefloop
\def\ddef#1{\expandafter\def\csname s#1\endcsname{\ensuremath{\mathsf{#1}}}}
\ddefloop ABCDEFGHIJKLMNOPQRSTUVWXYZ\ddefloop
\clearpage{}%

\newcommand{\rank}{\mathrm{rank}}

\newcommand{\dimbel}{d_{\mathsf{B}}}

\newcommand{\Xm}[1][M]{X\sups{#1}}
\newcommand{\Wm}[1][M]{W\sups{#1}}
\newcommand{\Xmstar}[1][\Mstar]{X\sups{#1}}
\newcommand{\Wmstar}[1][\Mstar]{W\sups{#1}}

\newcommand{\piq}{\pi\subs{Q}}
\newcommand{\piqh}{\pi_{\sss{Q},h}}

\newcommand{\multiline}[1]{\parbox[t]{\dimexpr\linewidth-\algorithmicindent}{#1}}

\newcommand{\linqstar}{Linear-$Q^{\star}$\xspace}

\newcommand{\unitball}{\sB_2^d(1)}

\newcommand{\Enp}{\En_{\pi\sim{}p}}

\newcommand{\PiCov}{\Psi}
\newcommand{\pihsa}{\pi_{h,s,a}}

\newcommand{\pimhat}{\pi\subs{\Mhat}}

\newcommand{\ucbvi}{\textsf{UCB-VI}\xspace}
\newcommand{\lsviucb}{\textsf{LSVI-UCB}\xspace}
\newcommand{\bilinucb}{\textsf{BiLinUCB}\xspace}

\newcommand{\ones}{\mathbb{I}}

\newcommand{\phibar}{\wb{\phi}}

\newcommand{\pif}[1][f]{\pi_{#1}}
\newcommand{\pifhat}[1][\fhat]{\pi_{#1}}
\newcommand{\pifstar}[1][\fstar]{\pi_{#1}}

\newcommand{\cov}{\iota}

\newcommand{\underbracet}[2]{\underbrace{#1}_{\text{#2}}}

\newcommand{\Mo}{M_{\cO}}%
\newcommand{\Mr}{M_{\cR}}%
\newcommand{\Mhato}{\Mhat_{\cO}}%
\newcommand{\Mhatr}{\Mhat_{\cR}}%

\newcommand{\etdtext}{Estimation-to-Decisions\xspace}

\newcommand{\compc}[1][\veps]{\mathsf{dec}^{\mathrm{c}}_{#1}}
\newcommand{\deccreg}[1][\veps]{\mathsf{dec}^{\mathrm{c}}_{#1}}

\newcommand{\var}{\mathbb{V}}
\newcommand{\Var}{\var}

\newcommand{\Gm}{G\subs{M}}

\newcommand{\gmhat}{g\sups{\Mhat}}

\newcommand{\pmhat}{p\subs{\Mhat}}

\newcommand{\cTm}[1][M]{\cT\sups{#1}}

\newcommand{\AlgEsth}[1][h]{\mathrm{\mathbf{Alg}}_{\mathsf{Est};#1}}
\newcommand{\EstHelh}[1][h]{\mathrm{\mathbf{Est}}_{\mathsf{H};#1}}

\newcommand{\thetam}[1][M]{\theta\sups{#1}}

\newcommand{\wm}[1][M]{w\sups{#1}}

\newcommand{\mum}[1][M]{\mu\sups{#1}}

\newcommand{\phim}[1][M]{\phi\sups{#1}}

\newcommand{\qbar}{\bar{q}}

\newcommand{\relu}{\mathsf{relu}}

\newcommand{\El}{\mathsf{Edim}}
\newcommand{\ElCheck}{\underline{\El}}

\newcommand{\errm}[1][M]{\mathrm{err}\sups{#1}}

\newcommand{\pcigw}{\textsf{PC-IGW}\xspace}

\newcommand{\MinimaxReg}{\mathfrak{M}(\cM,T)}

\renewcommand{\emptyset}{\varnothing}

\newcommand{\NullObs}{\crl{\emptyset}}

\newcommand{\filt}{\mathscr{F}}

\newcommand{\hist}{\mathcal{H}}

\newcommand{\dom}{\nu}

\newcommand{\densm}[1][M]{m^{\sss{#1}}}

\newcommand{\FrameworkShort}{DMSO\xspace}

\newcommand{\learner}{learner\xspace}

\newcommand{\act}{\pi}
\newcommand{\Act}{\Pi}

\newcommand{\obs}{o}

\newcommand{\ObsSpace}{\mathcal{\cO}}

\newcommand{\RewardSpace}{\cR}
\newcommand{\RSpace}{\RewardSpace}

\newcommand{\cMhat}{\wh{\cM}}

\newcommand{\comp}[1][\gamma]{\mathsf{dec}_{#1}}
\newcommand{\decoreg}[1][\gamma]{\mathsf{dec}_{#1}}
\newcommand{\compbar}[1][\gamma]{\wb{\mathsf{dec}}_{#1}}

\newcommand{\compb}[1][\gamma]{\underline{\mathsf{dec}}_{#1}}

\newcommand{\compSq}[1][\gamma]{\mathsf{dec}_{#1}^{\mathrm{Sq}}}

\newcommand{\CompText}{Decision-Estimation Coefficient\xspace}
\newcommand{\CompAbbrev}{DEC\xspace}
\newcommand{\CompShort}{\CompAbbrev}

\newcommand{\compgen}[1][D]{\comp^{#1}}

\newcommand{\etd}{\textsf{E\protect\scalebox{1.04}{2}D}\xspace}

\newcommand{\M}[1]{^{{\scriptscriptstyle M}}}  %

\newcommand{\sups}[1]{^{{\scriptscriptstyle#1}}}
\newcommand{\subs}[1]{_{{\scriptscriptstyle#1}}}

\newcommand{\sss}[1]{{\scriptscriptstyle#1}}

\newcommand{\Ens}[2]{\En^{\sss{#1},#2}}

\newcommand{\Enm}[2]{\En^{\sss{#1},#2}}

\newcommand{\bbPm}[1][M]{\bbP\sups{#1}}

\newcommand{\bbPmhat}[1][\Mhat]{\bbP\sups{#1}}

\newcommand{\fmhat}{f\sups{\Mhat}}
\newcommand{\fmhatt}{f\sups{\Mhat\ind{t}}}

\newcommand{\fm}[1][M]{f\sups{#1}}
\newcommand{\pim}[1][M]{\pi_{\sss{#1}}}

\newcommand{\gm}{g\sups{M}}

\newcommand{\cFm}{\cF_{\cM}}

\newcommand{\cMf}{\cM_{\cF}}

\newcommand{\fmbar}{f\sups{\Mbar}}
\newcommand{\pimbar}{\pi\subs{\Mbar}}

\newcommand{\fmstar}{f\sups{\Mstar}}
\newcommand{\pimstar}{\pi\subs{\Mstar}}

\newcommand{\fstar}{f^{\star}}
\newcommand{\pistar}{\pi^{\star}}

\newcommand{\pihat}{\wh{\pi}}

\newcommand{\cMloc}[1][\veps]{\cM_{#1}}

\newcommand{\Mbar}{\wb{M}}

\newcommand{\fmi}{f\sups{M_i}}
\newcommand{\pimi}{\pi\subs{M_i}}

\newcommand{\Rm}[1][M]{R\sups{#1}}
\newcommand{\Pm}[1][M]{P\sups{#1}}
\newcommand{\Pmstar}[1][\Mstar]{P\sups{#1}}

\newcommand{\Pmhat}{P\sups{\Mhat}}
\newcommand{\Rmhat}{R\sups{\Mhat}}

\newcommand{\PiRNS}{\Pi_{\mathsf{rns}}} %
\newcommand{\PiGen}{\Pi_{\mathrm{RNS}}} %

\newcommand{\Qmstar}[1][M]{Q^{\sss{#1},\star}}
\newcommand{\Vmstar}[1][M]{V^{\sss{#1},\star}}
\newcommand{\Qmstarstar}[1][\Mstar]{Q^{\sss{#1},\star}}
\newcommand{\Vmstarstar}[1][\Mstar]{V^{\sss{#1},\star}}
\newcommand{\Qmpi}[1][\pi]{Q^{\sss{M},#1}}
\newcommand{\Vmpi}[1][\pi]{V^{\sss{M},#1}}
\newcommand{\Vmhatpi}[1][\pi]{V^{\sss{\Mhat},#1}}

\newcommand{\Qm}[2]{Q^{\sss{#1},#2}}

\newcommand{\dmpi}[1][\pi]{d^{\sss{M},#1}}

\newcommand{\dm}[2]{d^{\sss{#1},#2}}
\newcommand{\dbar}{\bar{d}}

\newcommand{\Reg}{\mathrm{\mathbf{Reg}}}

\newcommand{\Est}{\mathrm{\mathbf{Est}}}

\newcommand{\EstHel}{\mathrm{\mathbf{Est}}_{\mathsf{H}}}

\newcommand{\EstD}{\mathrm{\mathbf{Est}}_{\mathsf{D}}}

\newcommand{\EstSq}{\mathrm{\mathbf{Est}}_{\mathsf{Sq}}}
\newcommand{\EstSqOff}{\mathrm{\mathbf{Est}}_{\mathsf{Sq}}^{\mathsf{off}}}

\newcommand{\RegDM}{\Reg}

  \newcommand{\AlgEst}{\mathrm{\mathbf{Alg}}_{\mathsf{Est}}}

  \newcommand{\RegLog}{\Reg_{\mathsf{KL}}}

\newcommand{\Mhat}{\wh{M}}
\newcommand{\Mstar}{M^{\star}}

\newcommand{\thetahat}{\wh{\theta}}
\newcommand{\thetastar}{\theta^{\star}}

\newcommand{\tens}{\otimes}

\newcommand{\optionone}{\textsc{Option I}\xspace}

\newcommand{\algcomment}[1]{\textcolor{blue!70!black}{\small{\texttt{\textbf{//\hspace{2pt}#1}}}}}
\newcommand{\algcommentlight}[1]{\textcolor{blue!70!black}{\transparent{0.5}\small{\texttt{\textbf{//\hspace{2pt}#1}}}}}

\newcommand{\trn}{\top}

\newcommand{\psdgeq}{\succeq}

\newcommand{\psdgt}{\succ}
\newcommand{\approxleq}{\lesssim}
\newcommand{\approxgeq}{\gtrsim}

\newcommand{\fhat}{\wh{f}}

\newcommand{\fbar}{\bar{f}}

\newcommand{\ind}[1]{^{{\scriptscriptstyle#1}}}

\newcommand{\bigoh}{O}
\newcommand{\bigoht}{\wt{O}}
\newcommand{\bigom}{\Omega}
\newcommand{\bigomt}{\wt{\Omega}}

\newcommand{\poly}{\mathrm{poly}}
\newcommand{\polylog}{\mathrm{polylog}}

\newcommand{\Dsq}[2]{D_{\mathrm{Sq}}\prn*{#1,#2}}

\newcommand{\Dkl}[2]{D_{\mathsf{KL}}\prn*{#1\,\|\,#2}}

\newcommand{\Dhel}[2]{D_{\mathsf{H}}\prn*{#1,#2}}
\newcommand{\Df}[2]{D_{f}\prn*{#1\dmid{}#2}}
\newcommand{\Dgen}[2]{D\prn*{#1\dmid{}#2}}

\newcommand{\Dhels}[2]{D^{2}_{\mathsf{H}}\prn*{#1,#2}}

\newcommand{\Dtv}[2]{D_{\mathsf{TV}}\prn*{#1,#2}}
\newcommand{\Dtvs}[2]{D^2_{\mathsf{TV}}\prn*{#1,#2}}

\newcommand{\DgenX}[3]{D\prn[#1]{#2,#3}}
\newcommand{\DhelsX}[3]{D^{2}_{\mathsf{H}}\prn[#1]{#2,#3}}

\newcommand{\Ber}{\mathrm{Ber}}

\newcommand{\dmid}{\;\|\;}

\newcommand{\conv}{\mathrm{co}}

\newcommand{\Qhat}{\wh{Q}}
\newcommand{\Qbar}{\widebar{Q}}
\newcommand{\Vhat}{\wh{V}}
\newcommand{\Vbar}{\widebar{V}}

\newcommand{\Vstar}{V^{\star}}
\newcommand{\Qstar}{Q^{\star}}

\newcommand{\unif}{\mathrm{unif}}

\newcommand{\Phat}{\wh{P}}

\newcommand{\qm}{q\ind{M}}

\newcommand{\igw}{\textsf{IGW}}

\newcommand{\supp}{\mathrm{supp}}

\newcommand{\cFbar}{\widebar{\cF}}

\newcommand{\mathand}{\quad\text{and}\quad}

\def\multiset#1#2{\ensuremath{\left(\kern-.3em\left(\genfrac{}{}{0pt}{}{#1}{#2}\right)\kern-.3em\right)}}

\newcommand{\grad}{\nabla}

\newcommand{\iid}{i.i.d.\xspace}

\renewcommand{\emptyset}{\varnothing}

\usepackage[nameinlink,capitalize]{cleveref}

\newcommand{\pref}[1]{\cref{#1}}
\newcommand{\pfref}[1]{Proof of \pref{#1}}

\crefformat{equation}{#2(#1)#3}
\Crefformat{equation}{#2(#1)#3}

\Crefformat{figure}{#2Figure #1#3}
\Crefformat{assumption}{#2Assumption #1#3}

\Crefformat{exercise}{#2Exercise #1#3}
\Crefformat{exe}{#2Exercise #1#3}
\Crefformat{prop}{#2Proposition #1#3}
\Crefformat{rem}{#2Remark #1#3}
\Crefformat{lem}{#2Lemma #1#3}

\usepackage{crossreftools}
\pdfstringdefDisableCommands{%
    \let\Cref\crtCref
    \let\cref\crtcref
}

\makeatletter
\let\OldStatex\Statex
\renewcommand{\Statex}[1][3]{%
  \setlength\@tempdima{\algorithmicindent}%
  \OldStatex\hskip\dimexpr#1\@tempdima\relax}
\makeatother

\DeclareMathOperator{\En}{\mathbb{E}}

\newcommand{\mb}[1]{\boldsymbol{#1}}
\newcommand{\wt}[1]{\widetilde{#1}}
\newcommand{\wh}[1]{\widehat{#1}}
\newcommand{\wb}[1]{\widebar{#1}}

\def\ddefloop#1{\ifx\ddefloop#1\else\ddef{#1}\expandafter\ddefloop\fi}
\def\ddef#1{\expandafter\def\csname bb#1\endcsname{\ensuremath{\mathbb{#1}}}}
\ddefloop ABCDEFGHIJKLMNOPQRSTUVWXYZ\ddefloop
\def\ddefloop#1{\ifx\ddefloop#1\else\ddef{#1}\expandafter\ddefloop\fi}
\def\ddef#1{\expandafter\def\csname b#1\endcsname{\ensuremath{\mathbf{#1}}}}
\ddefloop ABCDEFGHIJKLMNOPQRSTUVWXYZ\ddefloop
\def\ddef#1{\expandafter\def\csname sf#1\endcsname{\ensuremath{\mathsf{#1}}}}
\ddefloop ABCDEFGHIJKLMNOPQRSTUVWXYZ\ddefloop
\def\ddef#1{\expandafter\def\csname c#1\endcsname{\ensuremath{\mathcal{#1}}}}
\ddefloop ABCDEFGHIJKLMNOPQRSTUVWXYZ\ddefloop
\def\ddef#1{\expandafter\def\csname h#1\endcsname{\ensuremath{\widehat{#1}}}}
\ddefloop ABCDEFGHIJKLMNOPQRSTUVWXYZ\ddefloop
\def\ddef#1{\expandafter\def\csname hc#1\endcsname{\ensuremath{\widehat{\mathcal{#1}}}}}
\ddefloop ABCDEFGHIJKLMNOPQRSTUVWXYZ\ddefloop
\def\ddef#1{\expandafter\def\csname t#1\endcsname{\ensuremath{\widetilde{#1}}}}
\ddefloop ABCDEFGHIJKLMNOPQRSTUVWXYZ\ddefloop
\def\ddef#1{\expandafter\def\csname tc#1\endcsname{\ensuremath{\widetilde{\mathcal{#1}}}}}
\ddefloop ABCDEFGHIJKLMNOPQRSTUVWXYZ\ddefloop
\def\ddefloop#1{\ifx\ddefloop#1\else\ddef{#1}\expandafter\ddefloop\fi}
\def\ddef#1{\expandafter\def\csname scr#1\endcsname{\ensuremath{\mathscr{#1}}}}
\ddefloop ABCDEFGHIJKLMNOPQRSTUVWXYZ\ddefloop

\newcommand{\ls}{\ell}

\newcommand{\veps}{\varepsilon}

\newcommand{\ldef}{\vcentcolon=}
\newcommand{\rdef}{=\vcentcolon}

\makeatletter
\let\save@mathaccent\mathaccent
\newcommand*\if@single[3]{%
  \setbox0\hbox{${\mathaccent"0362{#1}}^H$}%
  \setbox2\hbox{${\mathaccent"0362{\kern0pt#1}}^H$}%
  \ifdim\ht0=\ht2 #3\else #2\fi
  }
\newcommand*\rel@kern[1]{\kern#1\dimexpr\macc@kerna}
\newcommand*\widebar[1]{\@ifnextchar^{{\wide@bar{#1}{0}}}{\wide@bar{#1}{1}}}
\newcommand*\wide@bar[2]{\if@single{#1}{\wide@bar@{#1}{#2}{1}}{\wide@bar@{#1}{#2}{2}}}
\newcommand*\under@bar[2]{\if@single{#1}{\under@bar@{#1}{#2}{1}}{\under@bar@{#1}{#2}{2}}}
\newcommand*\wide@bar@[3]{%
  \begingroup
  \def\mathaccent##1##2{%
    \let\mathaccent\save@mathaccent
    \if#32 \let\macc@nucleus\first@char \fi
    \setbox\z@\hbox{$\macc@style{\macc@nucleus}_{}$}%
    \setbox\tw@\hbox{$\macc@style{\macc@nucleus}{}_{}$}%
    \dimen@\wd\tw@
    \advance\dimen@-\wd\z@
    \divide\dimen@ 3
    \@tempdima\wd\tw@
    \advance\@tempdima-\scriptspace
    \divide\@tempdima 10
    \advance\dimen@-\@tempdima
    \ifdim\dimen@>\z@ \dimen@0pt\fi
    \rel@kern{0.6}\kern-\dimen@
    \if#31
      \overline{\rel@kern{-0.6}\kern\dimen@\macc@nucleus\rel@kern{0.4}\kern\dimen@}%
      \advance\dimen@0.4\dimexpr\macc@kerna
      \let\final@kern#2%
      \ifdim\dimen@<\z@ \let\final@kern1\fi
      \if\final@kern1 \kern-\dimen@\fi
    \else
      \overline{\rel@kern{-0.6}\kern\dimen@#1}%
    \fi
  }%
  \macc@depth\@ne
  \let\math@bgroup\@empty \let\math@egroup\macc@set@skewchar
  \mathsurround\z@ \frozen@everymath{\mathgroup\macc@group\relax}%
  \macc@set@skewchar\relax
  \let\mathaccentV\macc@nested@a
  \if#31
    \macc@nested@a\relax111{#1}%
  \else
    \def\gobble@till@marker##1\endmarker{}%
    \futurelet\first@char\gobble@till@marker#1\endmarker
    \ifcat\noexpand\first@char A\else
      \def\first@char{}%
    \fi
    \macc@nested@a\relax111{\first@char}%
  \fi
  \endgroup
}
\newcommand*\under@bar@[3]{%
  \begingroup
  \def\mathaccent##1##2{%
    \let\mathaccent\save@mathaccent
    \if#32 \let\macc@nucleus\first@char \fi
    \setbox\z@\hbox{$\macc@style{\macc@nucleus}_{}$}%
    \setbox\tw@\hbox{$\macc@style{\macc@nucleus}{}_{}$}%
    \dimen@\wd\tw@
    \advance\dimen@-\wd\z@
    \divide\dimen@ 3
    \@tempdima\wd\tw@
    \advance\@tempdima-\scriptspace
    \divide\@tempdima 10
    \advance\dimen@-\@tempdima
    \ifdim\dimen@>\z@ \dimen@0pt\fi
    \rel@kern{0.6}\kern-\dimen@
    \if#31
      \underline{\rel@kern{-0.6}\kern\dimen@\macc@nucleus\rel@kern{0.4}\kern\dimen@}%
      \advance\dimen@0.4\dimexpr\macc@kerna
      \let\final@kern#2%
      \ifdim\dimen@<\z@ \let\final@kern1\fi
      \if\final@kern1 \kern-\dimen@\fi
    \else
      \underline{\rel@kern{-0.6}\kern\dimen@#1}%
    \fi
  }%
  \macc@depth\@ne
  \let\math@bgroup\@empty \let\math@egroup\macc@set@skewchar
  \mathsurround\z@ \frozen@everymath{\mathgroup\macc@group\relax}%
  \macc@set@skewchar\relax
  \let\mathaccentV\macc@nested@a
  \if#31
    \macc@nested@a\relax111{#1}%
  \else
    \def\gobble@till@marker##1\endmarker{}%
    \futurelet\first@char\gobble@till@marker#1\endmarker
    \ifcat\noexpand\first@char A\else
      \def\first@char{}%
    \fi
    \macc@nested@a\relax111{\first@char}%
  \fi
  \endgroup
}
\makeatother

\numberwithin{equation}{section}

\begin{document}
\begin{center}
    {\sf\Large Foundations of Reinforcement Learning and Interactive Decision Making} \\[1cm]
        {\sc Dylan J. Foster and Alexander Rakhlin}~\\
\end{center}

\begin{center}{\small
    Last Updated: December 2023}
\end{center}

\noindent
\rule{\textwidth}{1pt}

\medskip

\begin{center}
  \emph{These lecture notes are based on a course taught at MIT in
    \href{https://www.mit.edu/~rakhlin/course-decision-making.html}{Fall
      2022} and \href{https://www.mit.edu/~rakhlin/course-decision-making-f23.html}{Fall 2023}. This is a live draft, and all parts
    will be updated regularly. Please send us an email
    if you find a mistake, typo, or missing reference.}
\end{center}
\vspace{0.25cm}

\tableofcontents

\newpage

\section{Introduction}
\label{sec:intro}

\subsection{Decision Making}

This is a course about learning to make decisions in an
interactive, data-driven fashion. When we say interactive decision
making, we are thinking of problems such as:
\begin{itemize}
\item Medical treatment: based on a patient's 
  medical history and vital signs, we need to decide what treatment
  will lead to the most positive outcome.
  \item Controlling a robot: based on sensor signals, we need to decide what signals to
  send to a robot's actuators in order to
  navigate to a goal.
\end{itemize}
For both problems, we (the \emph{learner}/\emph{agent}) are interacting with an \emph{unknown
environment}. In the robotics example, we do not necessarily a-priori know how the signals
we send to our robot's actuators change its configuration, or what the
landscape it's trying to navigate looks like. However, because we are
able to \emph{actively} control the agent, we can \emph{learn} to
model the environment on the fly as we make decisions and collect data, which will reduce uncertainty and
allow us to make
better decisions in the future. The crux of the interactive decision
making problem is to make decisions in a way that balances
(i) exploring the environment to reduce our uncertainty and (ii)
maximizing our overall performance (e.g., reaching a goal state as
fast as possible).

\pref{fig:decision-making} depicts an idealized interactive decision
making setting, which we will return to throughout this 
course. Here, at each round $t$, the agent (doctor) observes the
medical history and vital signs of a patient, summarized in a \textit{context}
$x\sups{t}$, makes a treatment \textit{decision} $\pi\sups{t}$, and then
observes the outcomes of the treatment in the form of a \textit{reward}
$r\sups{t}$, and an auxiliary \textit{observation} $o\sups{t}$ about, say,
illness progression. With time, we hope that the doctor will learn a good
mapping $x\sups{t}\mapsto \pi\sups{t}$ from contexts to decisions. How can we develop an automated
system that can achieve this goal? 

It is tempting to cast the problem of finding a good mapping $x\sups{t}\mapsto \pi\sups{t}$ as a supervised learning problem. After all, modern deep neural networks are able to achieve excellent performance on many tasks, such as image classification and recognition, and it is not out of the question that there exists a good neural network for the medical example as well. The question is: how do we find it? In supervised learning, finding a good predictor often amounts to fitting an appropriate model---such as a neural network---to the data. In the above example, however, the available data may be limited to what treatments have been assigned to patients, potentially missing better options. It is the process of active data collection with a controlled amount of exploration that we would like to study in this course.

\begin{figure}[h]
    \centering
    \includegraphics[width=0.8\textwidth]{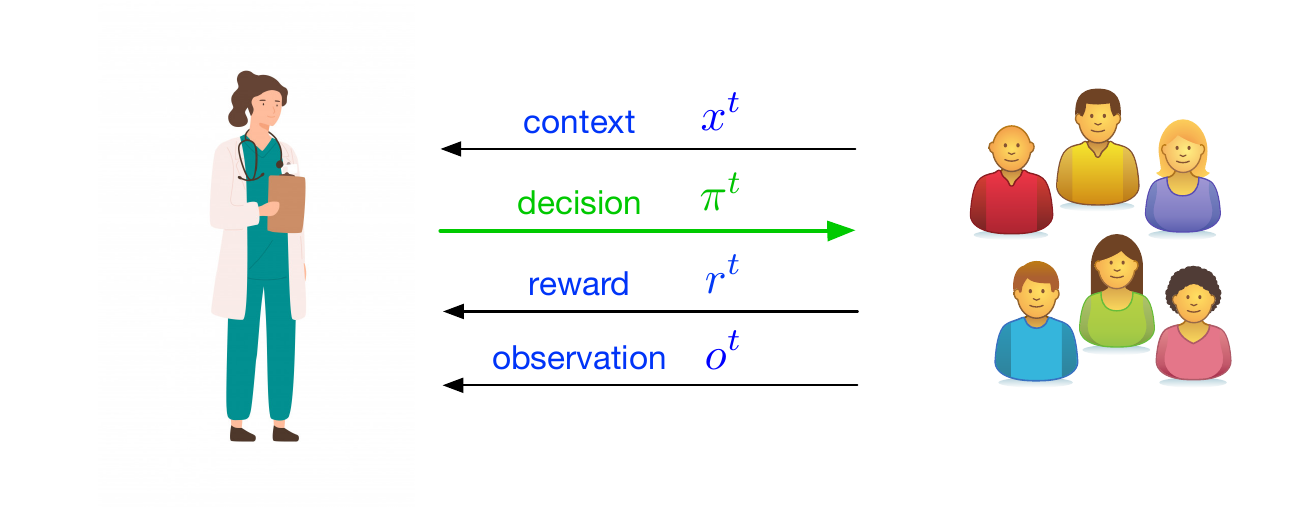}
    \caption{A general decision making problem.}
    \label{fig:decision-making}
\end{figure}

The decision making framework in \pref{fig:decision-making}
generalizes many interactive decision making problems the reader might
already be familiar with, including multi-armed bandits, contextual
bandits, and reinforcement learning. We will cover the
foundations of algorithm design and analysis for all of these settings
from a unified perspective, with an emphasis on \emph{sample
  efficiency} (i.e., how to learn a good decision making policy using
as few rounds of interaction as possible).
  \subsection{A Spectrum of Decision Making Problems}
  \begin{figure}[h]
    \centering
    \includegraphics[width=0.7\textwidth]{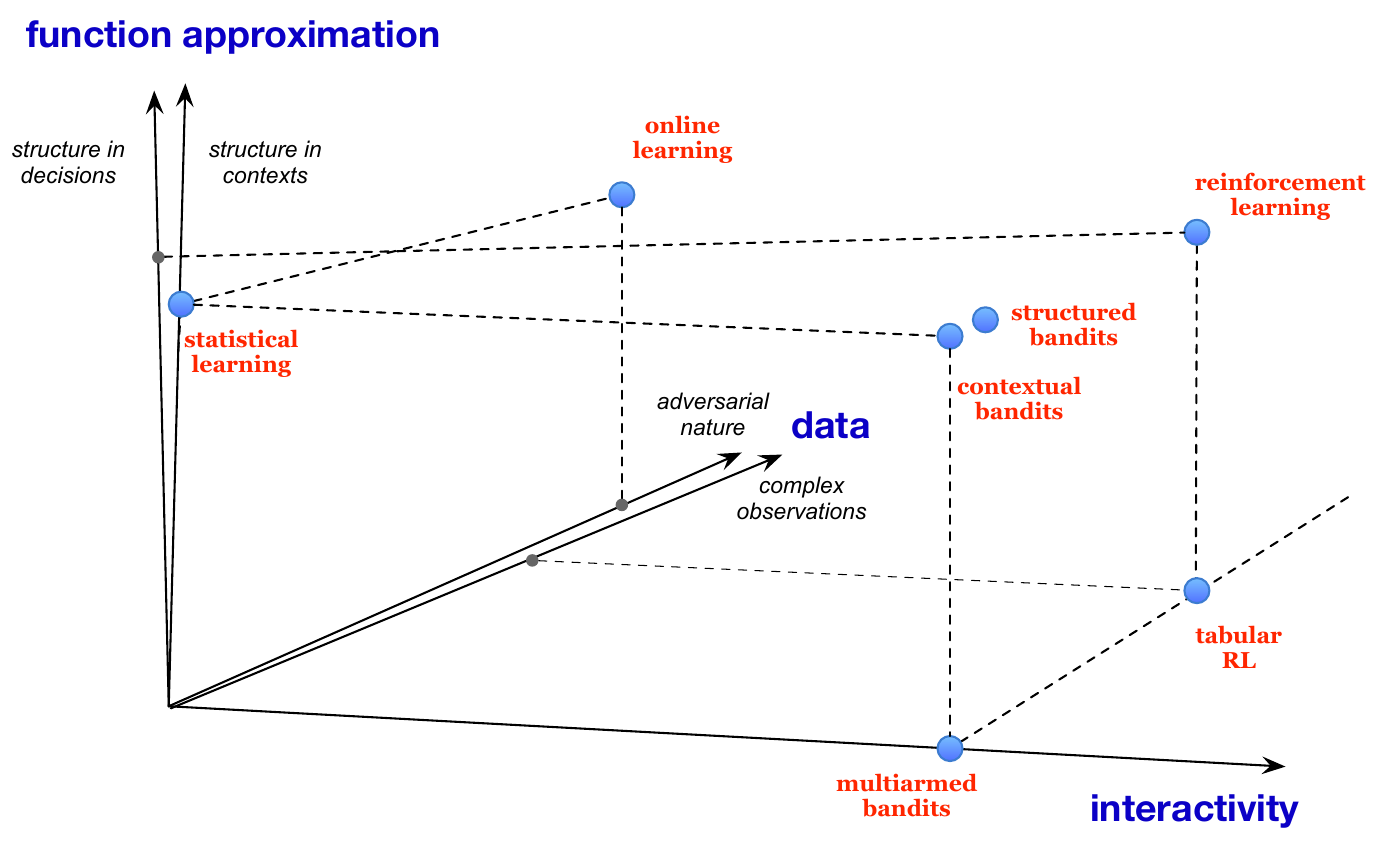}
    \caption{Landscape of decision making problems.}
    \label{fig:axes}
  \end{figure}
  To design algorithms for general interactive decision making
  problems such as \pref{fig:decision-making},
there are many complementary challenges we must
overcome. These challenges correspond to different assumptions we can
place on the underlying environment and decision making protocol, and
give rise to what we describe as a \emph{spectrum} of decision making problems, which
is illustrated in \pref{fig:axes}.
There are three core challenges we will focus on throughout the course, which
are given by the axes of \pref{fig:axes}.

\begin{itemize}
\item \textbf{Interactivity.} Does the learning agent observe data
  passively, or do the decisions they make actively influence what data
  we collect? In the setting of \pref{fig:decision-making}, the doctor
  observes the effects of the prescribed treatments, but not the
  counterfactuals (the effects of the treatments not given). Hence,
  doctor's decisions influence the data they can collect, which in turn
  may significantly alter the ability to estimate the effects of different treatments.
  On the other
  hand, in classical machine learning, a dataset is typically given
  to the learner upfront, with no control over how it is collected. 

          \item \textbf{Function approximation and generalization.}
            In supervised statistical learning and estimation, one typically employs
            \emph{function approximation} (e.g., models such as neural
            networks, kernels, or forests) to generalize across the
            space of covariates. For decision making, we can employ
            function approximation in a similar fashion, either to
            generalize across a space of contexts, or to generalize
            across the space of \emph{decisions}. In the setting of
            \pref{fig:decision-making}, the context $x\sups{t}$
            summarizing the medical history and vital signs might be a
            highly structured object. Likewise, the treatment
            $\pi\sups{t}$ might be a high-dimensional vector with
            interacting components, or a complex multi-stage treatment
            strategy.
            \dfdelete{For simple settings such as multi-armed bandits, however, it is common to
            assume the decision space is unstructured, and
            forgo generalization.}
			
        \item \textbf{Data.} Is the data (e.g., rewards or
          observations) observed by our learning algorithm produced by a
          fixed data-generating process, or does it evolve arbitrarily,
          and even adversarially in response to our actions? If
          there is fixed data-generating process, do we wish to directly model
          it, or should we instead aim to be agnostic? Do we observe
          only the labels of images, as in supervised learning, or a
          full trajectory of states/actions/rewards for a policy
          employed by the robot? 
          \end{itemize}
          As shown in \cref{fig:axes}, many basic decision making and learning frameworks (contextual
        bandits, structured bandits, statistical learning, online learning) can be thought of as idealized
        problems that each capture one or more of the possible
        challenges, while richer settings such as reinforcement
        learning encompass all of them.
		
		\cref{fig:axes} can be viewed as a roadmap for the
                course. We start with a brief introduction to
                Statistical Learning (\cref{sec:sl}) and Online
                Learning (\cref{sec:ol}); the concepts and  results
                stated here will serve as a backbone for the rest of
                the course. We will then study, in order, the problems
                of Multi-Armed Bandits (\cref{sec:mab}), Contextual
                Bandits (\cref{sec:cb}), Structured Bandits
                (\cref{sec:structured}), Tabular Reinforcement
                Learning (\cref{sec:mdp}), General Decision Making
                (\cref{sec:general_dm}), and Reinforcement Learning
                with General Function Approximation (\cref{sec:rl}). Each of these topics will add a layer of complexity, and our aim is to develop a unified approach to all the aforementioned problems, both in terms of statistical complexity (the number of interactions required to achieve the goal), and in terms of algorithm design.

\subsection{Minimax Perspective}

For much of the course, we take a \emph{minimax} point of
view. Abstractly, let $\cM$ be a set of possible models (or, choices
for the environment) that can be encountered by the learner/decision
maker. The set $\cM$ can be thought of as representing the prior knowledge of the learner about the underlying environment. Let $\textsf{Alg}$ denote a learning algorithm, and $\textsf{Perf}_T(\textsf{Alg}, M)$ be some notion of performance of algorithm $\textsf{Alg}$ on model $M\in\cM$ after $T$ rounds of interaction (or---in passive learning---after observing $T$ datapoints). We would like to develop algorithms that perform well, no matter what the model $M\in\cM$ is, in the sense that $\textsf{Alg}$ approximately solves the minimax problem
\begin{align}
	\label{eq:minimax_intro}
	\min_{\textsf{Alg}}~ \max_{M\in\cM}~ \textsf{Perf}_T(\textsf{Alg}, M).
\end{align}
Understanding the \emph{statistical complexity} (or, difficulty) of a
given problem amounts to establishing matching (or nearly matching)
upper bounds $\wb{\phi}_T(\cM)$ and lower bounds
$\underline{\phi}_T(\cM)$ on the  minimax value in
\cref{eq:minimax_intro}. While developing such upper and lower bounds
for specific model classes $\cM$ of interest might be a simple task,
the grand aim of this course is to develop a more fundamental, unified
understanding of what makes \emph{any} model class $\cM$ easy verus
hard, and to give sharp results for all (or nearly all) $\cM$.

On the algorithmic side, we would like to better understand the scope
of optimal algorithms that solve \eqref{eq:minimax_intro}. While the
minimax problem is itself an optimization problem, the space of all
algorithms is typically prohibitively large. One of the key insights
to be leveraged in this course is that for general decision making
problems, we can restrict ourselves to algorithms that interleave a type of
supervised learning called \emph{online estimation} (this will be
described in \cref{sec:sl,sec:ol}), with a principled choice of
\emph{exploration strategy} that balances greedily maximizing
performance (exploitation) with information acquisition (exploration). As we show, such algorithms achieve or nearly achieve optimality in \eqref{eq:minimax_intro} for a surprisingly wide range of decision making problems.

  \subsection{Statistical Learning: Brief Refresher}
  \label{sec:sl}
\newcommand{\sldist}{\Mstar}

We begin with a short refresher on the statistical learning
problem. Statistical learning is a purely passive problem in which the
learner does not directly interact with the environment, but it 
captures the challenge of \emph{generalization and function
  approximation} in the context of \pref{fig:axes}.

In the statistical learning problem, we receive examples
$(x\sups{1},y\sups{1}),\ldots,(x\sups{T},y\sups{T}) \in\cX\times\cY$,
i.i.d. from a (unknown) distribution $\sldist$. Here $x\ind{t}\in\cX$
are \emph{features} (sometimes called contexts or covariates), and $\cX$ is
the feature space. $y\ind{t}\in\cY$ are called \emph{outcomes}, and $\cY$ is
the outcome space. Given
$(x\sups{1},y\sups{1}),\ldots,(x\sups{T},y\sups{T})$, the goal is to
produce a model (or, estimator)
$\fhat:\cX\to\cY'$ that will do a good job \emph{predicting} outcomes from
features for future examples $(x,y)$ drawn from $\sldist$.\footnote{Note that
we allow the outcome space $\cY$ to be different from the prediction space $\cY'$.} 

To measure prediction performance, we take as given a \emph{loss
  function} $\loss:\cY'\times\cY\to\reals$. Standard examples include:
\begin{itemize}
\item Regression, where common losses include the \emph{square loss}
  $\loss(a,b)=(a-b)^2$ when $\cY=\cY'=\bbR$.
\item Classification, where $\cY=\cY'=\crl{0,1}$ and we consider the
  indicator (or $0$-$1$) loss $\loss(a,b)=\indic{a\neq{}b}$.
  \item Conditional density estimation with the \emph{logarithmic loss}
  (log loss). Here $\cY'=\Delta(\cY)$, the set of distributions on $\cY$, and for $p\in\cY'$,
  \begin{align}
    \logloss(p, y) = -\log p(y).
\end{align}
\end{itemize}

For a function $f:\cX\to\cY'$, we measure the prediction performance
via the \emph{population} (or, ``test'') loss:
\begin{align}
    \exploss(f) \ldef \En_{(x,y)\sim\sldist}\brk*{\loss(f(x),y)}.
\end{align}

Letting $\cH\sups{T}\ldef\{(x\sups{t},y\sups{t})\}_{t=1}^T$ denote the
dataset, a (deterministic) \emph{algorithm} is a map that takes the dataset as input and returns a function/predictor:
\begin{align}
    \algo(\cdot; \cH\sups{T}): \cX\to\cY'.
\end{align}
The goal in designing algorithms is to ensure that 
$\En\brk[\big]{\exploss(\fhat)}$ is minimized, where $\En\brk{\cdot}$
denotes expectation with respect to the draw of the dataset
$\cH\ind{T}$. Without any assumptions, it is not possible to
learn a good predictor unless the number of examples $T$ scales with
$\abs{\cX}$ (this is sometimes called the \emph{no-free-lunch
  theorem}). The basic idea behind statistical learning is to work
with a restricted class of functions
\[
\cF\subseteq\crl*{f:\cX\to\cY}
\]
in order to facilitate generalization. The class $\cF$ can be thought
of as (implicitly) encoding prior knowledge about the structure of
the data. For example, in computer vision, if the features $x\ind{t}$
correspond to images and the outcomes $y\ind{t}$ are labels
(e.g., ``cat'' or ``dog''), one might expect that choosing $\cF$ to be
a class of convolutional neural networks will work well, since this
encodes spatial structure.

\begin{rem}[Conditional density estimation]
  For the problem of conditional density estimation, we shall overload the notation and interchangeably write $f(x)$ and $f(\cdot|x)$ for the conditional distribution. In this setting, the learner is required to compute a distribution for each $x$ rather than form a point estimate (see \pref{fig:cond-density}). For an outcome $y$, the loss is the negative log of the conditional density for the outcome.
\end{rem}

\begin{figure}[htp]
    \centering
    \includegraphics[width=0.4\textwidth]{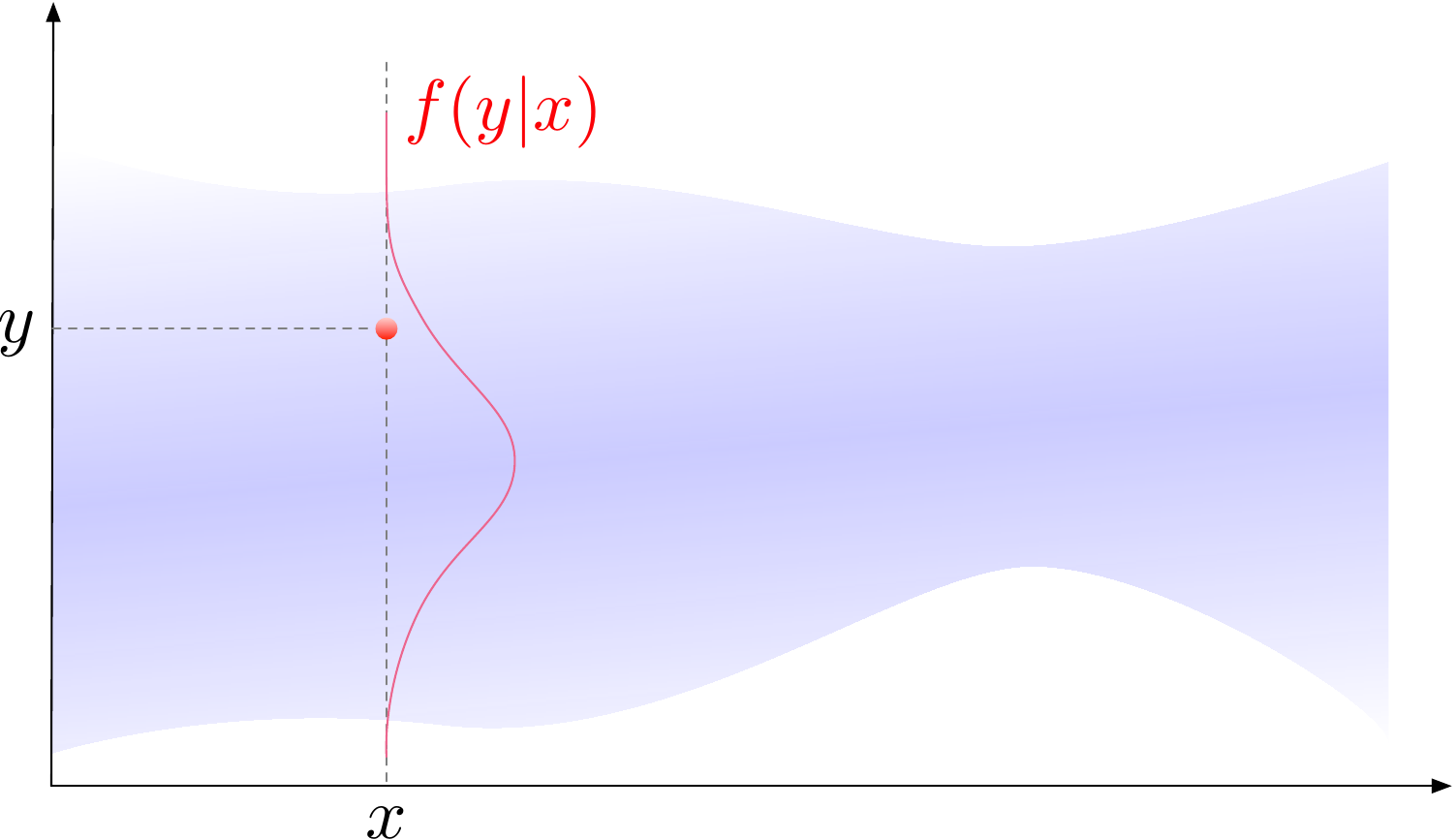}
    \caption{Conditional density estimation.}
    \label{fig:cond-density}
\end{figure}

\paragraph{Empirical risk minimization and excess risk}

The most basic and well-studied algorithmic principle for statistical
learning is \emph{Empirical Risk Minimization} (ERM). Define the empirical
loss for the dataset $\cH\ind{T}$ as
\begin{align}
    \emploss(f) = \frac{1}{T}\sum_{i=1}^T \loss( f(x\sups{i}),y\sups{i}).
\end{align}
Then, the empirical risk minimizer with respect to the class $\cF$
is given by 
\begin{align}
    \label{eq:erm_def}
    \algo \in \argmin_{f\in\cF} \emploss(f).
\end{align}

To measure the performance of ERM and other algorithms that attempt to
learn with $\cF$, we consider \emph{excess loss} (or, regret)
\begin{align}
  \label{eq:excess_risk}
    \mathcal{E}(f) = \exploss(f) - \min_{f'\in\cF} \exploss(f').
\end{align}
Intuitively, the quantity
$\min_{f'\in\cF} \exploss(f')$ in \pref{eq:excess_risk} captures
the best prediction performance any function in $\cF$ can achieve,
even with knowledge of the \emph{true distribution}. If an algorithm
$\fhat$ has low excess risk, this means that we
are predicting future outcomes nearly as well as any
algorithm based on samples can hope to perform. ERM and other
algorithms can ensure that  $\mathcal{E}(\algo)$ is small in expectation or with high probability
over draw of the dataset $\cH\ind{T}$. 

\paragraph{Connection to estimation}
An appealing feature of the formulation in \pref{eq:excess_risk} is that
it does not presuppose any relationship between the class $\cF$ and
the data distribution; in other words, it is agnostic. However, if
$\cF$ does happen to be good at modeling the data distribution, the
excess loss has an additional interpretation based on estimation.
\begin{definition}
    For prediction with square loss, we say that the problem is
    \emph{well-specified} (or, realizable) if the regression function $\fstar(a)\ldef\En[y|x=a]$ is in $\cF$. 
\end{definition}
The regression function $\fstar$ can also be seen as a minimizer of $\exploss(f)$ over measurable functions $f$, for the same reason that $\En_{z}(z-b)^2$ is minimized at $b=\En\brk*{z}$.
\begin{lem}
  \label{lem:square_well_specified}
	For the square loss, if the problem is \textit{well-specified},
        then for all $f:\cX\to\cY$,
	\begin{align}
	    \label{eq:excess_loss_l2}
	    \cE(f) =  \En_{x}\brk*{(f(x)-\fstar(x))^2}
	\end{align}
\end{lem}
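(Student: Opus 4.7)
The plan is to carry out the standard bias-variance decomposition for the square loss and then exploit the well-specified assumption to identify the minimizer inside $\cF$.

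First I would fix $x$ and condition on it, expanding the square loss using $\fstar(x) = \En[y \mid x = x]$:
\[
\En\bigl[(f(x) - y)^2 \,\bigm|\, x\bigr] = \En\bigl[(f(x) - \fstar(x))^2 \,\bigm|\, x\bigr] + 2\En\bigl[(f(x) - \fstar(x))(\fstar(x) - y) \,\bigm|\, x\bigr] + \En\bigl[(\fstar(x) - y)^2 \,\bigm|\, x\bigr].
\]
The middle cross term vanishes because conditional on $x$ the factor $f(x) - \fstar(x)$ is deterministic and $\En[\fstar(x) - y \mid x] = \fstar(x) - \En[y \mid x] = 0$. Taking expectation over $x$ yields
\[
\exploss(f) = \En_x\bigl[(f(x) - \fstar(x))^2\bigr] + \En_x\bigl[\Var(y \mid x)\bigr],
\]
where the last term depends only on $\Mstar$ and not on $f$.

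Next I would use the well-specified assumption that $\fstar \in \cF$. Plugging $f = \fstar$ into the decomposition above, the first summand is zero, so
\[
\min_{f' \in \cF} \exploss(f') \le \exploss(\fstar) = \En_x\bigl[\Var(y \mid x)\bigr].
\]
Since the first summand is always nonnegative, the reverse inequality holds for every $f' \in \cF$, so the minimum equals exactly $\En_x[\Var(y\mid x)]$. Subtracting yields
\[
\cE(f) = \exploss(f) - \min_{f' \in \cF} \exploss(f') = \En_x\bigl[(f(x) - \fstar(x))^2\bigr],
\]
which is the claim.

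There is no real obstacle here — the only thing to be careful about is invoking $\fstar \in \cF$ at precisely the right moment, so that one can both identify the minimum over $\cF$ with the noise variance and cancel it out. Note also that the identity \eqref{eq:excess_loss_l2} holds for every measurable $f:\cX\to\cY$, not just $f \in \cF$, which is immediate from the decomposition since we never used $f \in \cF$ in the first step.
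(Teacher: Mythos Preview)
Your proof is correct and follows essentially the same approach as the paper: both add and subtract $\fstar(x)$ inside the square, use $\En[y\mid x]=\fstar(x)$ to kill the cross term, and then invoke $\fstar\in\cF$ to identify the minimum of $\exploss$ over $\cF$ with $\exploss(\fstar)$. Your version is just more explicit about the conditioning and the two-sided inequality for the minimum.
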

\begin{proof}[\pfref{lem:square_well_specified}]
	Adding and subtracting $\fstar$ in the first term of \eqref{eq:excess_risk}, we have
	\begin{align*}
		\En(f(x)-y)^2 -  \En(\fstar(x)-y)^2 &=  \En(f(x)-\fstar(x))^2 + 2\En\brk*{(\fstar(x)-y)(f(x)-\fstar(x))}. 	
	\end{align*} 
\end{proof}
      Inspecting \pref{eq:excess_loss_l2}, we see that any $f$ achieving low
      excess loss necessarily estimates the
      true regression function $\fstar$; hence, the goals of prediction and estimation coincide.

\paragraph{Guarantees for ERM}
We give bounds on the excess loss of ERM for 
perhaps the simplest special case, in which $\cF$ is finite. 
\begin{prop}
    \label{prop:iid_finite_class}
    For any finite class $\cF$, empirical risk minimization satisfies
\begin{align}
    \En\brk[\big]{\cE(\algo)} \lesssim \compl(\cF, T),
\end{align}
where
\begin{enumerate}
\item For any bounded loss (including classification), $\compl(\cF,T)=\sqrt{\frac{\log|\cF|}{T}}$.
    \item For square loss regression, if the
      problem is well-specified, $\compl(\cF,T)=\frac{\log|\cF|}{T}$.
    \end{enumerate}
    In addition, there exists a (different) algorithm that achieves
    $\compl(\cF,T)=\frac{\log|\cF|}{T}$ for both square loss regression and
    conditional density estimation, even when the problem is not well-specified.
  \end{prop}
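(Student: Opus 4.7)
The three bounds correspond to three classical phenomena: (i) uniform convergence via Hoeffding for bounded losses, (ii) a Bernstein-type fast rate enabled by realizability for the square loss, and (iii) an aggregation (improper) algorithm that sidesteps the need for realizability. I address each in turn.

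\textbf{Part 1 (slow rate, bounded loss).} The plan is a standard uniform-convergence argument. Fix $f\in\cF$; then $\emploss(f)$ is an average of $T$ i.i.d.\ bounded random variables with mean $\exploss(f)$, so Hoeffding's inequality gives $|\emploss(f)-\exploss(f)|\lesssim\sqrt{\log(1/\delta)/T}$ with probability at least $1-\delta$. Union-bounding over $\cF$ yields a uniform deviation $\sup_{f\in\cF}|\emploss(f)-\exploss(f)|\lesssim\sqrt{\log(|\cF|/\delta)/T}$. Letting $\fstar\in\argmin_{f\in\cF}\exploss(f)$ and using the defining inequality $\emploss(\algo)\le\emploss(\fstar)$ of ERM, we get $\cE(\algo)=\exploss(\algo)-\exploss(\fstar)\le 2\sup_{f\in\cF}|\emploss(f)-\exploss(f)|$. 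Integrating the tail gives the bound in expectation.

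\textbf{Part 2 (fast rate, realizable square loss).} Here the key is a variance-to-mean inequality. Let $g_f(x,y)\ldef(f(x)-y)^2-(\fstar(x)-y)^2$, where under well-specification $\fstar\in\cF$ is the regression function; by \cref{lem:square_well_specified}, $\En[g_f]=\cE(f)\ge 0$. Since $|f(x)-y|,|\fstar(x)-y|$ are bounded, a short calculation gives $\var(g_f)\le C\cdot\En[g_f]=C\cdot\cE(f)$. I then apply Bernstein's inequality to $g_f$ together with a union bound over $\cF$, which yields, with high probability and simultaneously for all $f\in\cF$,
\[
\En[g_f]-\frac{1}{T}\sum_{i=1}^{T}g_f(x\sups{i},y\sups{i})\lesssim\sqrt{\frac{\cE(f)\log(|\cF|/\delta)}{T}}+\frac{\log(|\cF|/\delta)}{T}.
\]
Applied to $f=\algo$ and combined with $\frac{1}{T}\sum_i g_{\algo}(x\sups{i},y\sups{i})=\emploss(\algo)-\emploss(\fstar)\le 0$, this gives a quadratic inequality in $\sqrt{\cE(\algo)}$; solving it yields $\cE(\algo)\lesssim\log|\cF|/T$, and integrating the tail gives the expected-value bound. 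The main technical step is the variance-to-mean bound, which is what the realizability assumption buys us.

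\textbf{Part 3 (fast rate without realizability, improper algorithm).} Here ERM alone is insufficient, so I would use an aggregation (exponential-weights) estimator, which is \emph{improper}: the returned predictor need not lie in $\cF$. For the log loss, the natural choice is Bayesian averaging with uniform prior, $\algo(\cdot\mid{}x)=\sum_{f\in\cF}w_f f(\cdot\mid x)$ with $w_f\propto\exp(-\sum_{i\le t}\logloss(f(x\sups{i}),y\sups{i}))$; a telescoping/Kraft-style calculation on the posterior normalization constant, combined with convexity of the log loss (so that the averaged predictor is at least as good as the average loss), yields $\En[\exploss(\algo)]-\min_{f\in\cF}\exploss(f)\lesssim\log|\cF|/T$ with no realizability needed. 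For the square loss, I would apply the same idea through Vovk's aggregating/mixture algorithm with learning rate tuned to the loss's exp-concavity (on a bounded outcome space), again producing an improper predictor that satisfies the $\log|\cF|/T$ rate in the agnostic regime. In both cases the crux is an online-to-batch conversion of the per-round mixability bound.

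\textbf{Main obstacle.} The main difficulty is Part 2: establishing the variance-to-mean inequality for $g_f$ under realizability, and then extracting the fast rate from the resulting quadratic Bernstein bound. Parts 1 and 3 follow more mechanically from, respectively, Hoeffding plus a union bound and the standard analysis of exponential-weights aggregation.
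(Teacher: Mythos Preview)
Your proposal is correct and matches the paper's approach in all three parts: Part 1 via Hoeffding plus union bound (the paper's \cref{lem:erm_uniform_dev}), Part 2 via the variance-to-mean inequality for $g_f$ combined with Bernstein (exactly the route the paper lays out as a guided exercise), and Part 3 via exponential-weights aggregation with mixability and online-to-batch conversion (the paper's \cref{prop:online_bounds}, \cref{rem:square}, and \cref{prop:online_to_batch}).
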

Henceforth, we shall use the symbol $\lesssim$ to indicate an inequality that holds up to constants, or other problem parameters deemed less important for the present discussion. As an example, the range of losses for the first part is hidden in this notation, and we only focus on the dependence of the right-hand side on $\cF$ and $T$.

The rate $\compl(\cF,T)=\sqrt{\frac{\log|\cF|}{T}}$ above is sometimes
referred to as a \emph{slow rate}, and is optimal for generic losses. The rate
$\compl(\cF,T)=\frac{\log|\cF|}{T}$ is referred to as a \emph{fast rate}, and
takes advantage of additional structure (curvature, or strong
convexity) of the square loss. Critically, both bounds scale only with
the cardinality of $\cF$, and do not depend on the size of the feature
space $\cX$, which could be infinite. This reflects the fact that
working with a restricted function class is allowing us to generalize
across the feature space $\cX$. In this context the cardinality $\log\abs{\cF}$
should be thought of a notion of \emph{capacity}, or \emph{expressiveness} for $\cF$. Intuitively,
choosing a larger, more expressive class will require a larger amount of data,
but will make the excess loss bound in \pref{eq:excess_risk} more
meaningful, since the benchmark will be stronger.

\begin{rem}[From finite to infinite classes]
Throughout these lecture notes, we restrict our attention to finite
classes whenever possible in order to simplify presentation. If one
wishes to move beyond finite classes, a well-developed literature
  within statistical learning provides various notions of complexity
  for $\cF$ that lead to bounds on $\compl(\cF,T)$ for ERM and other
  algorithms. These include the Vapnik-Chervonenkis (VC) dimension for
  classification, Rademacher complexity, and covering
  numbers. Standard references include
  \citet{bousquet2004introduction,boucheron2005theory,anthony2009neural,shalev2014understanding}.
\end{rem}

\subsection{Refresher: Random Variables and Averages}

To prove \pref{prop:iid_finite_class} and similar generalization
bounds, the main tools we will use are \emph{concentration inequalities} (or,
tail bounds) for random variables.
\begin{definition}
\label{def:subGaussian}
  A random variable $Z$ is \subg
  with variance factor (or variance proxy) $\sigma^2$ if
	$$\forall \eta\in\reals,~~~~~\En e^{\eta (Z-\En\brk{Z})} \leq e^{\sigma^2 \eta^2/2}.$$
      \end{definition}
Note that if $Z\sim\cN(0,\sigma^2)$ is \emph{Gaussian} with variance $\sigma^2$, then it
is \subgaussian with variance proxy $\sigma^2$. In this sense,
sub-Gaussian random variables generalize the tail behavior of
Gaussians. A standard application of Chernoff method yields the following result.
\begin{lem}
  If $Z_1,\ldots,Z_T$ are \iid sub-Gaussian random variables with variance proxy
  $\sigma^2$, then
  \begin{align}
    \label{eq:subgaussian_tail}
    \Prob{\frac{1}{T}\sum_{i=1}^T Z_i - \En\brk*{Z}\geq u}\leq \exp\left\{ -\frac{Tu^2}{2\sigma^2} \right\}
  \end{align}
\end{lem}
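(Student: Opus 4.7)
The plan is to follow the standard Chernoff method. First, I would center the variables by defining $\bar{Z}_i = Z_i - \En[Z]$, which are still i.i.d. and \subgaussian with the same variance proxy $\sigma^2$ (the centering is already built into \pref{def:subGaussian}). The event of interest can then be rewritten as $\sum_{i=1}^T \bar Z_i \geq Tu$.

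Next, for any $\eta > 0$, I would apply Markov's inequality to the nonnegative random variable $\exp\bigl(\eta \sum_{i=1}^T \bar Z_i\bigr)$:
\begin{align*}
\Pr\left(\sum_{i=1}^T \bar Z_i \geq Tu\right)
= \Pr\left(e^{\eta \sum_{i=1}^T \bar Z_i} \geq e^{\eta T u}\right)
\leq e^{-\eta T u}\, \En\!\left[e^{\eta \sum_{i=1}^T \bar Z_i}\right].
\end{align*}
By independence of the $Z_i$, the moment generating function of the sum factorizes:
\begin{align*}
\En\!\left[e^{\eta \sum_{i=1}^T \bar Z_i}\right] = \prod_{i=1}^T \En\!\left[e^{\eta \bar Z_i}\right] \leq \prod_{i=1}^T e^{\sigma^2\eta^2/2} = e^{T\sigma^2\eta^2/2},
\end{align*}
where the inequality is the \subgaussian hypothesis from \pref{def:subGaussian} applied to each (centered) $Z_i$. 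Combining, one gets $\Pr(\cdot) \leq \exp\bigl(T\sigma^2\eta^2/2 - \eta T u\bigr)$.

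Finally, I would optimize the exponent in $\eta>0$. The exponent is a quadratic in $\eta$ minimized at $\eta^\star = u/\sigma^2$, yielding value $-Tu^2/(2\sigma^2)$; substituting back gives the desired bound in \eqref{eq:subgaussian_tail}. This proof is very routine; the only conceptual step is recognizing that Markov's inequality applied to the exponential transforms a tail bound into a question about the MGF, and that sub-Gaussianity is precisely the control on the MGF needed to make this work. There is no real obstacle here—independence handles the factorization, the \subgaussian definition handles each factor, and the optimization in $\eta$ is a one-variable quadratic.
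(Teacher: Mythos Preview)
Your proof is correct and is exactly the standard Chernoff argument the paper alludes to (the paper does not spell out the proof, only stating that ``a standard application of Chernoff method yields the following result''). Nothing to add.
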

Applying this result with $Z$ and $-Z$ and taking a union bound yields the following
two-sided guarantee:
\begin{align}
  \Prob{\abs*{\frac{1}{T}\sum_{i=1}^T Z_i - \En\brk*{Z}}\geq u}\leq
  2\exp\left\{ -\frac{Tu^2}{2\sigma^2}\right\}.\label{eq:hoeffding_two_sided}
\end{align}
Setting the right-hand side of \eqref{eq:hoeffding_two_sided} to $\delta$ and solving for $u$, we find that for any $\delta\in(0,1)$, with probability at least $1-\delta$,
\begin{align}
  \abs*{\frac{1}{T}\sum_{i=1}^T Z_i - \En\brk*{Z}}
  \leq{} \sqrt{\frac{2\sigma^2\log(2/\delta)}{T}}\label{eq:hoeffding_delta}.
\end{align}

\begin{rem}[Union bound]
  The factor $2$ under the logarithm in \cref{eq:hoeffding_delta} is
  the result of applying union bound to \eqref{eq:subgaussian_tail}. 
  Throughout the course, we will frequently apply the union bound to
  multiple---say $N$---high probability events involving sub-Gaussian
  random variables. In this case, the union
  bound will result in terms of the form $\log (N/\delta)$. The mild logarithmic dependence is
  due to the sub-Gaussian tail behavior of the averages.
\end{rem}

The following result shows that any bounded random variable is \subg.

\begin{lem}[Hoeffding's inequality]
\label{lem:hoeffding}
	Any random variable $Z$ taking values in $[a,b]$ is \subg with variance proxy $(b-a)^2/4$, i.e.
	\begin{align}
		\label{eq:hoeffding_mgf}
		\forall \eta\in\reals,~~~~~\ln \En \exp\{-\eta (Z-\En\brk*{Z})\} \leq \frac{\eta^2(b-a)^2}{8}.
	\end{align}
	As a consequence,
	for \iid random variables $Z_1,\ldots,Z_T$ taking values in $[a,b]$ almost surely, with probability at least $1-\delta$,
	\begin{align}
		\frac{1}{T}\sum_{i=1}^T Z_i - \En\brk*{Z}\leq (b-a)\sqrt{\frac{\log(1/\delta)}{2T}}
	\end{align}
\end{lem}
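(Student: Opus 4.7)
The plan is to establish the MGF bound \eqref{eq:hoeffding_mgf} first, and then deduce the deviation inequality as a direct corollary of the sub-Gaussian tail bound stated in \eqref{eq:subgaussian_tail}. For the MGF bound, the key idea is to exploit the fact that $Z$ lies in a bounded interval, which lets us use convexity of the exponential to reduce an analytic problem (bounding $\mathbb{E}e^{\eta(Z-\mathbb{E}Z)}$ for an arbitrary distribution) to a one-parameter calculus problem.

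Concretely, first I would center the variable: let $\mu = \En[Z]$ and $Y = Z - \mu$, so that $Y \in [\alpha,\beta]$ with $\alpha = a - \mu \leq 0 \leq \beta = b - \mu$ and $\beta - \alpha = b - a$. Since $x \mapsto e^{\eta x}$ is convex, writing any $y \in [\alpha,\beta]$ as the convex combination $y = \tfrac{\beta - y}{\beta - \alpha}\alpha + \tfrac{y-\alpha}{\beta-\alpha}\beta$ yields the pointwise bound
\[
e^{\eta y} \;\leq\; \frac{\beta - y}{\beta - \alpha}\, e^{\eta \alpha} + \frac{y - \alpha}{\beta - \alpha}\, e^{\eta \beta}.
\]
Taking expectations and using $\En[Y] = 0$ collapses the right-hand side into a function of a single real parameter. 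Writing $h = \eta(\beta - \alpha) = \eta(b-a)$ and $p = -\alpha/(\beta-\alpha) \in [0,1]$, the bound becomes $\ln \En[e^{\eta Y}] \leq \phi(h)$, where
\[
\phi(h) \;=\; -p h + \ln\!\bigl(1 - p + p e^{h}\bigr).
\]

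The main obstacle — and the only technical step — is showing $\phi(h) \leq h^2/8$ for all $h \in \mathbb{R}$. I would do this with a second-order Taylor expansion around $h=0$: since $\phi(0) = 0$ and $\phi'(0) = 0$, it suffices to bound $\phi''$ uniformly. A direct computation gives $\phi''(h) = q(h)\bigl(1 - q(h)\bigr)$ for $q(h) = p e^{h}/(1 - p + p e^{h}) \in (0,1)$, and since $t(1-t) \leq 1/4$ on $[0,1]$, one obtains $\phi''(h) \leq 1/4$ for all $h$. Integrating twice gives $\phi(h) \leq h^2/8$, which yields $\ln \En[e^{\eta(Z - \mu)}] \leq \eta^2(b-a)^2/8$, establishing \eqref{eq:hoeffding_mgf} and showing $Z$ is sub-Gaussian with variance proxy $(b-a)^2/4$ in the sense of \pref{def:subGaussian}.

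For the second part, I would invoke the sub-Gaussian tail bound \eqref{eq:subgaussian_tail} directly: since $Z_1,\ldots,Z_T$ are i.i.d.\ and sub-Gaussian with variance proxy $\sigma^2 = (b-a)^2/4$, it applies to give
\[
\Pr\!\left(\tfrac{1}{T}\sum_{i=1}^T Z_i - \En[Z] \geq u\right) \;\leq\; \exp\!\left(-\frac{Tu^2}{2\sigma^2}\right) \;=\; \exp\!\left(-\frac{2T u^2}{(b-a)^2}\right).
\]
Setting the right-hand side equal to $\delta$ and solving for $u$ gives $u = (b-a)\sqrt{\log(1/\delta)/(2T)}$, which is exactly the claimed deviation. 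No union bound is needed here since only a one-sided inequality is asserted.
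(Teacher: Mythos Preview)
Your proof is correct and follows the standard argument for Hoeffding's lemma: the convexity bound on $e^{\eta y}$ followed by the Taylor expansion of $\phi(h)$ with the $\phi''(h)=q(1-q)\leq 1/4$ observation is exactly the classical route, and your derivation of the tail bound from \eqref{eq:subgaussian_tail} is clean. The paper, however, does not actually prove this lemma; it is stated as a standard result and then used immediately (e.g., in the proof of \pref{lem:erm_uniform_dev} and in \eqref{eq:proof_of_exp_weights_hoeffding}). So there is no paper proof to compare against, but your argument would serve perfectly well as the omitted proof.
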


In particular, in the setting of \pref{sec:sl}, 

Using Hoeffding's inequality, we can prove now prove Part
1 (the slow rate) from \pref{prop:iid_finite_class}.
\begin{lem}[\pref{prop:iid_finite_class}, Part 1]
	\label{lem:erm_uniform_dev}
Let $\cF=\{f:\cX\to\cY\}$ be finite, and assume $\loss\circ f \in [0,1]$ almost surely. Then with probability at least $1-\delta$, ERM satisfies    
    $$\exploss(\algo)-\min_{f\in\cF} \exploss(f) \leq 2\sqrt{\frac{\log(2|\cF|/\delta)}{2T}}.$$
\end{lem}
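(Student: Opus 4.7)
The plan is to establish the bound by a standard uniform convergence argument that reduces the claim to Hoeffding's inequality plus a union bound over the finite class $\cF$.

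First, for each fixed $f \in \cF$, I would apply Hoeffding's inequality (\pref{lem:hoeffding}) to the i.i.d.\ random variables $Z_i = \loss(f(x\sups{i}), y\sups{i})$, which lie in $[0,1]$ almost surely. This yields a two-sided concentration bound of the form
\[
\bigl|\emploss(f) - \exploss(f)\bigr| \leq \sqrt{\frac{\log(2/\delta')}{2T}}
\]
with probability at least $1-\delta'$. Next, since $\cF$ is finite, I would take a union bound over all $f \in \cF$ with $\delta' = \delta/|\cF|$, obtaining the uniform deviation guarantee
\[
\sup_{f \in \cF} \bigl|\emploss(f) - \exploss(f)\bigr| \leq \sqrt{\frac{\log(2|\cF|/\delta)}{2T}}
\]
with probability at least $1-\delta$.

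Finally, I would convert uniform convergence into a bound on excess loss via the standard ERM decomposition. Letting $f^{\star} \in \argmin_{f \in \cF} \exploss(f)$, I would write
\[
\exploss(\algo) - \exploss(f^{\star}) = \underbrace{\bigl[\exploss(\algo) - \emploss(\algo)\bigr]}_{\leq \text{uniform dev.}} + \underbrace{\bigl[\emploss(\algo) - \emploss(f^{\star})\bigr]}_{\leq 0 \text{ by ERM}} + \underbrace{\bigl[\emploss(f^{\star}) - \exploss(f^{\star})\bigr]}_{\leq \text{uniform dev.}}.
\]
The middle term is non-positive by definition of the empirical risk minimizer in \cref{eq:erm_def}, and each of the other two terms is controlled by the uniform deviation bound, yielding a factor of $2$ and producing the stated rate.

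There is no real obstacle here: everything is routine once Hoeffding is in hand and we pay a $\log|\cF|$ price for the union bound. The only things worth being careful about are (i) applying Hoeffding to the random variables $\loss \circ f$ (which are bounded in $[0,1]$ by assumption, so the variance proxy is $1/4$) and (ii) keeping the factor of $2$ from the two-sided tail inside the logarithm, so that the union bound produces $\log(2|\cF|/\delta)$ as stated.
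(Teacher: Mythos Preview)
Your proposal is correct and matches the paper's proof essentially step for step: the paper uses the same three-term decomposition of $\exploss(\algo)-\exploss(f)$, bounds the first and third terms via Hoeffding's inequality with a union bound over $\cF$, and drops the middle term by the definition of ERM. The only cosmetic difference is ordering—the paper writes the decomposition first and then the concentration step—so there is nothing to add.
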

\begin{proof}[\pfref{lem:erm_uniform_dev}]
For any $f\in\cF$, we can write
    $$\exploss(\algo)-\exploss(f) = \left[
      \exploss(\algo)-\emploss(\algo) \right] +  \left[
      \emploss(\algo) - \emploss(f) \right]+ \left[\emploss(f) -
      \exploss(f) \right].$$
    Observe that for all $f:\cX\to\cY$, we have
    \[
\abs*{\exploss(f)-\emploss(f)} =  \left|\En \loss(f(X),Y)-\frac{1}{T}\sum_{i=1}^T \loss(f(X_i),Y_i) \right|.
    \]

    By union bound and \cref{lem:hoeffding}, with probability at least $1-|\cF|\delta$,
    \begin{align}
		\forall f\in \cF,~~ 
      \left|\En \loss(f(X),Y)-\frac{1}{T}\sum_{i=1}^T \loss(f(X_i),Y_i) \right| \leq \sqrt{\frac{\log(2/\delta)}{2T}}
    \end{align}
      \end{proof}
	  To deduce the in-expectation bound of
          \cref{prop:iid_finite_class} from the high-probability tail
          bound of \cref{lem:erm_uniform_dev}, a standard technique of
          ``integrating out the tail'' is employed. More precisely,
          for a nonnegative random variable $U$, it
          holds that $\En\brk*{U}\leq \tau + \int_{\tau}^\infty
          \Prob{U\geq z}dz$ for all $\tau>0$; choosing $\tau\propto T^{-1/2}$
          concludes the proof.   
	  
      To prove the Part 2 (the fast rate) from
      \pref{prop:iid_finite_class}, we need a more refined concentration
      inequality (Bernstein's inequality), which gives tighter
      guarantees for random variables with small variance.
      \begin{lem}[Bernstein's inequality]
        \label{lem:bernstein}
    Let $Z_1,\ldots,Z_T,Z$ be i.i.d. with variance
    $\var(Z_i)=\sigma^2$, and range $|Z-\En Z|\leq B$ almost
    surely. Then with probability at least $1-\delta$,
    \begin{align}
      \label{eq:bernstein_cond_tail_sum_normalized_highprob} 
      \frac{1}{T}\sum_{i=1}^T Z_i - \En Z \leq \sigma \sqrt{\frac{2\log (1/\delta)}{T}} + \frac{B\log (1/\delta)}{3T}.
    \end{align}
  \end{lem}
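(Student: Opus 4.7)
The plan is to follow the Chernoff method, as in the proof of Hoeffding's inequality (\cref{lem:hoeffding}), but equipped with a sharper, variance-aware bound on the moment generating function. By independence, it suffices to bound the log-MGF of a single centered, bounded, variance-$\sigma^2$ random variable, and then tensorize across the $T$ samples. The whole proof reduces to two technical steps: (i) establishing the one-sample MGF bound, and (ii) inverting the resulting tail bound into the two-term form in \eqref{eq:bernstein_cond_tail_sum_normalized_highprob}.

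For step (i), let $W = Z - \En Z$, so that $|W| \leq B$ almost surely and $\En W^2 = \sigma^2$. I would use the Taylor expansion $\En e^{\eta W} = 1 + \sum_{k \geq 2} \eta^k \En[W^k]/k!$ (the $k=1$ term vanishes by centering), together with the bound $|W|^k = |W|^{k-2} W^2 \leq B^{k-2} W^2$ for $k \geq 2$. This gives
\[
\En e^{\eta W} \;\leq\; 1 + \sigma^2 \sum_{k \geq 2} \frac{\eta^k B^{k-2}}{k!}.
\]
Using $k! \geq 2 \cdot 3^{k-2}$ for $k \geq 2$ (easy induction), the series is dominated by the geometric sum $\frac{\sigma^2 \eta^2}{2} \cdot \frac{1}{1 - B\eta/3}$, valid for $\eta \in [0, 3/B)$. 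Applying $1 + x \leq e^x$ yields the standard one-sample Bernstein MGF bound
\[
\ln \En e^{\eta W} \;\leq\; \frac{\sigma^2 \eta^2 / 2}{1 - B\eta/3}.
\]

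For step (ii), by independence the sum $S_T = \sum_{i=1}^T (Z_i - \En Z)$ satisfies $\ln \En e^{\eta S_T} \leq \frac{T \sigma^2 \eta^2 / 2}{1 - B\eta/3}$. Chernoff's inequality then gives, for any $u > 0$,
\[
\Pr(S_T / T \geq u) \;\leq\; \exp\!\left( -\eta T u + \frac{T \sigma^2 \eta^2/2}{1 - B\eta/3} \right).
\]
Choosing $\eta = \frac{u}{\sigma^2 + Bu/3}$ (which lies in $[0, 3/B)$) and simplifying yields the classical one-sided Bernstein tail bound
\[
\Pr\!\left(\tfrac{1}{T}\sum_{i=1}^T Z_i - \En Z \geq u\right) \;\leq\; \exp\!\left(-\frac{T u^2}{2(\sigma^2 + Bu/3)}\right).
\]

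Finally, to recover \eqref{eq:bernstein_cond_tail_sum_normalized_highprob}, I would set the right-hand side equal to $\delta$ and solve for $u$. This produces a quadratic in $u$ whose solution can be upper bounded using $\sqrt{a+b} \leq \sqrt{a} + \sqrt{b}$, yielding the clean two-term form $u \leq \sigma\sqrt{2\log(1/\delta)/T} + B\log(1/\delta)/(3T)$. The main obstacle, and the only step requiring care, is the bookkeeping in step (i) — specifically the combinatorial inequality $k! \geq 2\cdot 3^{k-2}$ that makes the geometric series telescope into the $1/(1 - B\eta/3)$ factor; everything after that is an optimization in $\eta$ and algebraic inversion.
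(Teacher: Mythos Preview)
The paper does not actually prove \cref{lem:bernstein}; it is stated as a standard result and the reader is referred to \cref{app:probabilistic} for background on tail bounds, but no proof of Bernstein's inequality appears there either. Your proposal is the textbook Chernoff-method proof, and the overall structure (variance-aware MGF bound via Taylor expansion and $|W|^k \leq B^{k-2}W^2$, tensorization, Chernoff, then inversion) is correct.

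One small caveat on the final inversion: solving the quadratic $Tu^2 - \tfrac{2BL}{3}u - 2L\sigma^2 = 0$ (with $L = \log(1/\delta)$) gives
\[
u^* = \frac{BL}{3T} + \sqrt{\frac{B^2L^2}{9T^2} + \frac{2L\sigma^2}{T}},
\]
and applying $\sqrt{a+b} \leq \sqrt{a}+\sqrt{b}$ yields $u^* \leq \sigma\sqrt{2L/T} + \tfrac{2BL}{3T}$, i.e.\ a constant $2/3$ rather than the $1/3$ in the stated lemma. Getting the sharp $1/3$ requires a slightly more careful argument (e.g.\ verifying directly that the exponent at $u = \sigma\sqrt{2L/T} + BL/(3T)$ is at most $-L$ via a different rearrangement, or using the Bennett form of the MGF bound). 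This is a bookkeeping detail, not a gap in the approach, but as written your inversion step overshoots the claimed constant.
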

The proof for Part 2 is given as an exercise in
\cref{sec:exercise_intro}. We refer the reader to \cref{app:probabilistic} for further background
on tail bounds.

\subsection{Online Learning and Prediction}
\label{sec:ol}

We now move on to the problem of \emph{online learning}, or sequential
prediction. The online learning problem generalizes statistical
learning on two fronts:
\begin{itemize}
\item Rather than receiving a batch dataset of $T$ examples all at once, we receive
  the examples $(x\ind{t},y\ind{t})$ one by one, and must predict
  $y\ind{t}$ from $x\ind{t}$ only using the examples we have already observed.
\item Instead of assuming that examples are drawn from a fixed
  distribution, we allow examples to be generated in an arbitrary,
  potentially adversarial fashion.
\end{itemize}
\begin{whiteblueframe}
  \begin{algorithmic}
    \State \textsf{Online Learning Protocol}
\For{$t=1,\ldots,T$}
\State Compute predictor $\algo\sups{t}:\cX\to\cY$
\State Observe $(x\sups{t}, y\sups{t})\in\cX\times\cY$
\EndFor{}
\end{algorithmic}
\end{whiteblueframe}
In more detail, at each timestep $t$, given the examples
\begin{align}
    \cH\sups{t-1} = \{(x\sups{1}, y\sups{1}), \ldots, (x\sups{t-1}, y\sups{t-1}) \}
\end{align}
 observed so far, the algorithm produces a predictor
$$\algo\sups{t} = \algo\sups{t}(\cdot\mid{}\cH\sups{t-1}),$$
which aims to predict the outcome $y\ind{t}$ from the features
$x\ind{t}$. The algorithm's goal is to minimize the cumulative loss
over $T$ rounds, given by 
$$\sum_{t=1}^T \loss(\algo\sups{t}(x\sups{t}), y\sups{t})$$
for a known loss function $\ls:\cY'\times\cY\to\bbR$; the cumulative
loss can be thought of as a sum of ``out-of-sample'' prediction
errors. Since we will not be placing assumptions on the
data-generating process, it is not possible to make meaningful
statements about the cumulative loss itself. However, we can aim to
ensure that this cumulative loss is not much worse than the best
empirical explanation of the data by functions in a given class
$\cF$. That is, we measure the algorithm's performance via
\emph{regret} to $\cF$:
\begin{align}
  \label{eq:regret_deterministic}
    \Reg = \sum_{t=1}^T \loss(\algo\sups{t}(x\sups{t}), y\sups{t}) - \min_{f\in\cF} \sum_{t=1}^T \loss(f(x\sups{t}), y\sups{t}).
\end{align}
Our aim is to design prediction algorithms that keep regret small for
\emph{any} sequence of data. As in statistical learning, the class $\cF$ should be thought of as
capturing our prior knowledge about the problem, and might be a linear
model or neural network. At first glance, keeping the regret small
for arbitrary sequences might seem like an
impossible task, as it stands in stark contrast with statistical
learning, where data is generated i.i.d. from a fixed
distribution. Nonetheless, we will that algorithms with guarantees
similar to those for statistical learning are available.

Let us remark that it is often useful to apply online learning methods
in settings where data is not fully adversarial, but evolves according to processes too
difficult to directly model. For example, in the chapters that follow,
we will apply online methods as a subroutine with more sophisticated
algorithms for decision making. Here, the choice of past decisions, while in our
purview, does not look like i.i.d. or simple time-series data. 

\begin{rem}[Proper learning, improper learning, and randomization]
  The online learning protocol does not require that $\fhat\ind{t}$
  lies in $\cF$ ($\fhat\sups{t}\in\cF$). A method that chooses functions from $\cF$
  will be called \emph{proper}, and the one that selects predictors
  outside of $\cF$ will be called \emph{improper}. It will also be useful to allow
  for \emph{randomized} predictions of the form
$$\algo\sups{t} \sim q\sups{t}(\cdot |\cH\sups{t-1}),$$
where $q\sups{t}$ is a distribution on functions, typically on
elements of $\cF$. For randomized predictions, we slightly abuse
notation and write regret as
\begin{align}
  \label{eq:regret_randomized}
  \Reg = \sum_{i=1}^T \En_{\fhat\ind{t}\sim{}q\ind{t}}\brk[\big]{\loss(\algo\sups{t}(x\sups{t}), y\sups{t})} - \min_{f\in\cF} \sum_{i=1}^T \loss(f(x\sups{t}), y\sups{t}).
\end{align}
The algorithms we introduce in the sequel below ensure small regret
even if data are adversarially and adaptively chosen. More precisely,
for deterministic algorithms, $(x\sups{t},y\sups{t})$ may be chosen
based on $\fhat\sups{t}$ and all the past data, while for randomized
algorithms, Nature can only base this choice on
$q\sups{t}$. 
\end{rem}

In the context of \pref{fig:axes}, online learning generalizes
statistical learning by considering arbitrary sequences of data,
but still allows for  general-purpose function approximation and generalization
via the class $\cF$. While the setting involves making predictions in
an online fashion, we do not think of this as an \emph{interactive}
decision making problem, because the predictions made by the learning agent
do not directly influence what data the agent gets to observe.

\subsubsection{Connection to Statistical Learning}

Online learning can be thought of as a generalization of statistical
learning, and in fact, algorithms for online learning immediately
yield algorithms for statistical learning via a technique called
\emph{online-to-batch conversion}. This result, which is formalized by
the following proposition, rests on two observations: the cumulative
loss of the algorithm looks like a sum of out-of-sample errors, and
the minimum empirical fit to realized data (over $\cF$) is, on average, a harder (that is, smaller) benchmark than the minimum expected loss in $\cF$. 
\begin{prop}
  \label{prop:online_to_batch}
  Suppose the examples
  $(x\ind{1},y\ind{1}),\ldots,(x\ind{T},y\ind{T})$ are drawn \iid
  from a distribution $\sldist$, and suppose the loss function
  $a\mapsto\loss(a, b)$ is convex in the first argument for all $b$. Then for any online learning
  algorithm, if we define
  \[
    \fhat(x) = \frac{1}{T}\sum_{t=1}^{T}\fhat\ind{t}(x),
  \]
  we have
  \[
\En\brk[\big]{\cE(\fhat)} \leq{} \frac{1}{T}\cdot\En\brk*{\RegBasic}.
  \]
\end{prop}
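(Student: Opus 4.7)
The plan is a standard online-to-batch argument that combines Jensen's inequality with the observation that empirical minima over \iid data are stochastically dominated by the population minimum. The central identity to exploit is that for each $t$, since $\fhat\ind{t}$ is determined by $\cH\ind{t-1}$ and the sample $(x\ind{t}, y\ind{t})$ is independent of $\cH\ind{t-1}$, the conditional expectation gives
\[
\En\brk*{\loss(\fhat\ind{t}(x\ind{t}), y\ind{t}) \,\big|\, \cH\ind{t-1}} = \exploss(\fhat\ind{t}).
\]
This turns the in-sample cumulative loss of the online method into a sum of out-of-sample errors, which is the crux of online-to-batch.

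First, I would handle the algorithm's side of the excess risk. By convexity of $a\mapsto \loss(a,b)$, Jensen's inequality yields $\loss(\fhat(x), y) \leq \frac{1}{T}\sum_{t=1}^T \loss(\fhat\ind{t}(x), y)$ pointwise in $(x,y)$. Taking expectation over a fresh draw $(x,y)\sim\sldist$, independent of $\cH\ind{T}$, gives $\exploss(\fhat) \leq \frac{1}{T}\sum_t \exploss(\fhat\ind{t})$. Taking expectation over $\cH\ind{T}$ and applying the conditional identity above, one obtains
\[
\En\brk[\big]{\exploss(\fhat)} \leq \frac{1}{T}\sum_{t=1}^T \En\brk[\big]{\loss(\fhat\ind{t}(x\ind{t}), y\ind{t})}.
\]

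Next, I would control the benchmark. For any fixed $f\in\cF$, we have $\En\brk*{\sum_t \loss(f(x\ind{t}), y\ind{t})} = T\cdot\exploss(f)$, so swapping min and expectation (which goes in the direction we want by Jensen's inequality applied to the concave min)
\[
\En\brk*{\min_{f\in\cF}\sum_{t=1}^T \loss(f(x\ind{t}), y\ind{t})} \leq \min_{f\in\cF}\En\brk*{\sum_{t=1}^T \loss(f(x\ind{t}), y\ind{t})} = T \cdot \min_{f\in\cF}\exploss(f).
\]
Combining this with the previous display yields
\[
\En\brk[\big]{\cE(\fhat)} = \En\brk[\big]{\exploss(\fhat)} - \min_{f\in\cF}\exploss(f) \leq \frac{1}{T}\En\brk*{\sum_{t=1}^T \loss(\fhat\ind{t}(x\ind{t}), y\ind{t}) - \min_{f\in\cF}\sum_{t=1}^T \loss(f(x\ind{t}), y\ind{t})} = \frac{1}{T}\En\brk*{\Reg},
\]
which is the claimed bound. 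There is no real obstacle here; the only subtlety worth flagging is making sure the conditioning on $\cH\ind{t-1}$ is applied correctly (so that $\fhat\ind{t}$ is treated as deterministic under the conditional expectation), and the min-swap inequality is stated in the correct direction.
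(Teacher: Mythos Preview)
Your proof is correct and follows essentially the same approach as the paper: Jensen's inequality on the convex loss, the observation that $\En\brk*{\loss(\fhat\ind{t}(x\ind{t}),y\ind{t})\mid\cH\ind{t-1}}=\exploss(\fhat\ind{t})$ by independence, and the min--expectation swap for the benchmark. The paper phrases the middle step via a fresh i.i.d.\ sample rather than conditioning, but the argument is identical.
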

\begin{proof}[\pfref{prop:online_to_batch}]
	Let $(x,y)\sim\Mstar$ be a fresh sample which is independent
        of the history $\cH\sups{T}$. 
	First, by Jensen's inequality,
	\begin{align}
		&\En\brk*{\exploss(\algo)}= \En\brk*{\En_{(x,y)}\loss\left(\frac{1}{T}\sum_{t=1}^T \fhat\ind{t}(x), y \right)} 
		\leq \En\brk*{\frac{1}{T}\sum_{t=1}^T \En_{(x,y)}\loss\left(\fhat\ind{t}(x), y \right)} 
	\end{align}
	which is equal to
	\begin{align}
		\En\brk*{\frac{1}{T}\sum_{t=1}^T \En_{(x\sups{t},y\sups{t})}\loss\left(\fhat\ind{t}(x\sups{t}), y\sups{t} \right)}
	\end{align}
	since $\fhat\ind{t}$ is a function of $\cH\sups{t-1}$ and $(x,y)$ and $(x\sups{t},y\sups{t})$ are i.i.d. 
	Second,
	\begin{align}
		\min_{f\in\cF} \exploss(f) = \min_{f\in\cF} \En\brk*{\frac{1}{T}\sum_{t=1}^T\loss(f(x\sups{t}),y\sups{t}) } \geq  \En\brk*{\min_{f\in\cF}\frac{1}{T}\sum_{t=1}^T\loss(f(x\sups{t}),y\sups{t}) }
	\end{align}
      \end{proof}
      In light of \cref{prop:online_to_batch}, one can interpret
      regret as generalizing the notion of excess risk from \iid data
      to arbitrary sequences.

Similar to Lemma~\ref{lem:square_well_specified} in the setting of statistical learning, the regret for online learning has an
additional interpretation in terms of \emph{estimation} if the outcomes for
the problem are well-specified.
\begin{lem}
	\label{lem:well_specified_reg_est}
  Suppose that the features $x\ind{1},\ldots,x\ind{T}$ are generated
  in an arbitrary fashion, but that for all $t$, the variable $y\ind{t}$ is random with mean given by a fixed function $\fstar\in\cF$:
  \[
    \En[y\sups{t}\mid{}x\sups{t}=x] = \fstar(x).
  \] Then for the problem of prediction with square loss, 
    $$\En\brk*{\Reg} \geq \En\brk*{\sum_{t=1}^T (\fhat\sups{t}(x\sups{t})-\fstar(x\sups{t}))^2}.$$
  \end{lem}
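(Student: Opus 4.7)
The plan is to bound the benchmark $\min_{f \in \cF} \sum_t \ell(f(x\sups{t}), y\sups{t})$ from above by plugging in $f^\star$, which is allowed since we are told $f^\star \in \cF$. This gives
\[
\Reg \geq \sum_{t=1}^T \prn*{\fhat\sups{t}(x\sups{t}) - y\sups{t}}^2 - \sum_{t=1}^T \prn*{\fstar(x\sups{t}) - y\sups{t}}^2,
\]
and it suffices to show that the expectation of the right-hand side equals $\En\brk[\big]{\sum_t (\fhat\sups{t}(x\sups{t}) - \fstar(x\sups{t}))^2}$.

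Next I would apply the same algebraic identity used in the proof of \cref{lem:square_well_specified} term by term: for each $t$,
\[
\prn*{\fhat\sups{t}(x\sups{t}) - y\sups{t}}^2 - \prn*{\fstar(x\sups{t}) - y\sups{t}}^2 = \prn*{\fhat\sups{t}(x\sups{t}) - \fstar(x\sups{t})}^2 + 2\prn*{\fhat\sups{t}(x\sups{t}) - \fstar(x\sups{t})}\prn*{\fstar(x\sups{t}) - y\sups{t}}.
\]
Summing over $t$ and taking expectation, the target bound reduces to showing that each cross term has zero mean.

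The one subtle step is handling the cross term when the features $x\sups{t}$ may be chosen in an arbitrary, adaptive fashion. I would condition on the $\sigma$-algebra generated by $\cH\sups{t-1}$ together with $x\sups{t}$. Conditional on this information, $\fhat\sups{t}(x\sups{t}) - \fstar(x\sups{t})$ is deterministic (since $\fhat\sups{t}$ is measurable with respect to $\cH\sups{t-1}$), and the well-specification assumption gives $\En\brk[\big]{\fstar(x\sups{t}) - y\sups{t} \,\big|\, x\sups{t}, \cH\sups{t-1}} = 0$. Hence the cross term vanishes in conditional expectation, and therefore in expectation by the tower rule.

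Combining the two steps yields $\En\brk{\Reg} \geq \En\brk[\big]{\sum_{t=1}^T (\fhat\sups{t}(x\sups{t}) - \fstar(x\sups{t}))^2}$, as claimed. The main thing to be careful about is stating precisely the filtration with respect to which $\fhat\sups{t}$ is measurable and $y\sups{t}$ has conditional mean $\fstar(x\sups{t})$; beyond that, the argument is a direct sequential analogue of \cref{lem:square_well_specified}.
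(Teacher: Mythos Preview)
Your proof is correct. The paper states this lemma without proof, so there is nothing to compare against directly; that said, your argument is exactly the natural one---the sequential analogue of the proof of \cref{lem:square_well_specified}---and handles the measurability of $\fhat\sups{t}$ and the conditional-mean assumption on $y\sups{t}$ in the right way.
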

  Notably, this result holds even if the features
  $x\ind{1},\ldots,x\ind{T}$ are generated adversarially, with no
  prior knowledge of the sequence. This is a significant departure from classical estimation
  results in statistics, where estimation of an unknown function is
  typically done over a fixed, known sequence (``design'') $x\sups{1},\ldots,x\sups{T}$, or with respect to an i.i.d. dataset.

\subsubsection{The Exponential Weights Algorithm}

The main online learning algorithm is the
\emph{Exponential Weights} algorithm, which is applicable to finite
classes $\cF$. At each time $t$, the algorithm computes a distribution
$q\ind{t}\in\Delta(\cF)$ via
\begin{align}
    \label{eq:expweightsupdate}
    q\sups{t}(f) \propto \exp\left\{
    -\eta \sum_{i=1}^{t-1} \loss(f(x\sups{i}), y\sups{i})\right\},
\end{align}
where $\eta>0$ is a learning rate. Based on $q\ind{t}$, the algorithm forms the
prediction $\fhat\ind{t}$. We give two variants of the method here.

\noindent
\begin{minipage}{.499\textwidth}
\begin{whiteblueframe}
\begin{algorithmic}
\State \textsf{Exponential Weights} (averaged)
\For{$t=1,\ldots,T$}
\State Compute $q\sups{t}$ in \eqref{eq:expweightsupdate}.
\State Let $\algo\sups{t} = \En_{f\sim q\sups{t}}\brk{f}$.
\State Observe $(x\sups{t}, y\sups{t})$, incur $\loss(\algo\sups{t}(x\sups{t}),y\sups{t})$.
\EndFor{}
\end{algorithmic}
\end{whiteblueframe}
\end{minipage}%
\hfill
\begin{minipage}{.499\textwidth}
\begin{whiteblueframe}
\begin{algorithmic}
\textsf{Exponential Weights} (randomized)
\For{$t=1,\ldots,T$}
\State Compute $q\sups{t}$ in \eqref{eq:expweightsupdate}.
\State Sample $\algo\sups{t}\sim q\sups{t}$.
\State Observe $(x\sups{t}, y\sups{t})$, incur  $\loss(\algo\sups{t}(x\sups{t}),y\sups{t})$.
\EndFor{}
\end{algorithmic}
\end{whiteblueframe}
\end{minipage}
The only difference between these variants lies in whether we compute
the prediction $\fhat\ind{t}$ from $q\ind{t}$ via
\begin{align}
\label{eq:randomized_vs_averaged}
    \algo\sups{t} = \En_{f\sim q\sups{t}}\brk{f},~~~~~ \text{or}~~~~~ \algo\sups{t}\sim q\sups{t}.
\end{align}
The latter can be applied to any bounded loss functions, while the
former leads to faster rates for specific losses such as the square
loss and log loss, but is only applicable when $\cY'$ is convex. Note
that the averaged version is inherently improper, while the second is
proper, yet randomized. From the point of view of regret, the key
difference between these two versions is the placement of
``$\En_{f\sim q\sups{t}}$'': For the averaged version it is inside the
loss function, and for the randomized version it is outside (see
\eqref{eq:regret_randomized}). The averaged version can therefore take
advantage of the structure of the loss function, such as strong
convexity, leading to faster rates. The following result shows that
Exponential Weights leads to regret bounds for online learning, with
rates that parallel those in \pref{prop:iid_finite_class}.
\begin{prop}
\label{prop:online_bounds}
        For any finite class $\cF$, the Exponential Weights algorithm (with appropriate choice of $\eta$) satisfies
\begin{align}
    \frac{1}{T}\Reg \lesssim \compl(\cF, T) 
\end{align}
for any sequence, where:
\begin{enumerate}
  \item For arbitrary bounded losses (including classification),
    $\compl(\cF,T)=\sqrt{\frac{\log|\cF|}{T}}$. This is achieved by
    the randomized variant.
    \item For regression with the square loss and conditional density
      estimation with the log loss,
    $\compl(\cF,T)=\frac{\log|\cF|}{T}$. This is achieved by the
    averaged variant.
  \end{enumerate}
\end{prop}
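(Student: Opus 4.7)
The plan is to analyze both variants through the same potential-function lens, built around the log-partition function
\[
  \Phi_t \;=\; \log \sum_{f\in\cF} \exp\bigl(-\eta L_t(f)\bigr), \qquad L_t(f) \;=\; \sum_{i=1}^{t-1} \ell(f(x\sups{i}),y\sups{i}),
\]
so that $q\sups{t}(f) \propto \exp(-\eta L_t(f))$ is the Gibbs distribution in \eqref{eq:expweightsupdate}. The one-step increment is $\Phi_{t+1}-\Phi_t = \log \En_{f\sim q\sups{t}}\exp(-\eta \ell(f(x\sups{t}),y\sups{t}))$, while a trivial lower bound $\Phi_{T+1}\geq -\eta\min_{f\in\cF}L_{T+1}(f)$ together with $\Phi_1=\log\abs{\cF}$ yields, after telescoping,
\[
  \sum_{t=1}^T \bigl[-\tfrac{1}{\eta}\log \En_{f\sim q\sups{t}}\exp(-\eta \ell(f(x\sups{t}),y\sups{t}))\bigr] \;-\; \min_{f\in\cF} L_{T+1}(f) \;\leq\; \frac{\log\abs{\cF}}{\eta}.
\]
The two parts then differ only in how the log-MGF term on the left is related to the algorithm's actual incurred loss.

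For Part 1 (randomized variant, bounded losses), since $\ell(f(x\sups{t}),y\sups{t})\in[0,1]$ almost surely, Hoeffding's inequality (\cref{lem:hoeffding}) applied to the law of $f\sim q\sups{t}$ gives
\[
  \log \En_{f\sim q\sups{t}} \exp\bigl(-\eta \ell(f(x\sups{t}),y\sups{t})\bigr) \;\leq\; -\eta \En_{f\sim q\sups{t}}\bigl[\ell(f(x\sups{t}),y\sups{t})\bigr] + \frac{\eta^2}{8}.
\]
Combining with the telescoped bound and dividing by $\eta$, the randomized regret \eqref{eq:regret_randomized} is at most $\log\abs{\cF}/\eta + \eta T/8$. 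Tuning $\eta=\sqrt{8\log\abs{\cF}/T}$ gives the $\sqrt{T\log\abs{\cF}}$ slow rate.

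For Part 2 (averaged variant, square and log loss), the goal is to replace the Hoeffding bound by a sharper Jensen-style inequality that exploits curvature of the loss. The log-loss case is the cleanest: with $\eta=1$ and $\ell(p,y) = -\log p(y)$, one directly verifies $\En_{f\sim q\sups{t}}\exp(-\ell(f(x\sups{t}),y\sups{t})) = \bar f_t(y\sups{t}\mid x\sups{t})$ by definition of the averaged predictor, so the one-step increment equals $-\ell(\bar f_t(x\sups{t}),y\sups{t})$ and the regret is bounded by $\log\abs{\cF}$ exactly. For the square loss with outcomes and predictions in $[0,1]$, the map $a\mapsto \exp(-\eta(a-y)^2)$ is concave for $\eta\leq 1/2$ (a direct calculation of the second derivative), so Jensen's inequality gives
\[
  \log \En_{f\sim q\sups{t}} \exp\bigl(-\eta(f(x\sups{t})-y\sups{t})^2\bigr) \;\leq\; -\eta\bigl(\bar f_t(x\sups{t})-y\sups{t}\bigr)^2,
\]
which plugged into the telescoped bound with $\eta=1/2$ yields regret $\leq 2\log\abs{\cF}$, i.e.\ the $\log\abs{\cF}/T$ fast rate.

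The main obstacle is Part 2, specifically verifying the exp-concavity constant for the square loss and checking that the averaged predictor $\bar f_t = \En_{f\sim q\sups{t}}[f]$ indeed lies in the convex domain on which this inequality applies; this is also where the requirement that $\cY'$ be convex (so that $\bar f_t$ is a legal prediction) enters. A secondary subtlety is handling the log loss without a uniform boundedness assumption on $-\log f(y\mid x)$, which the telescoping identity for $\bar f_t(y\sups{t}\mid x\sups{t}) = W_{t+1}/W_t$ sidesteps by never invoking a per-step MGF bound. Everything else is mechanical algebra and a choice of $\eta$.
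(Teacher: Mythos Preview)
Your proposal is correct and follows essentially the same potential-function approach as the paper: the telescoping of the log-partition function, the Hoeffding step for Part~1, and the exact identity for the log loss in Part~2 all match. The one point of departure is the square-loss case. The paper does not prove it inline; it defers to the notion of \emph{mixability} (Remark~\ref{rem:square}), noting that square loss is mixable with $\eta=2$ via a suitable substitution function, and leaves the verification as a remark. You instead give a direct exp-concavity argument: $a\mapsto\exp(-\eta(a-y)^2)$ is concave on $[0,1]$ for $\eta\le 1/2$, so Jensen with the mean predictor $\bar f_t=\En_{q^t}[f]$ yields the mixability inequality immediately, giving regret $\le 2\log\abs{\cF}$. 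Your route is more elementary and self-contained (no need to exhibit a separate substitution function), at the cost of a slightly worse constant ($\eta=1/2$ versus the paper's claimed $\eta=2$); since the proposition only asks for $\lesssim$, this is immaterial.
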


We now turn to the proof of \cref{prop:online_bounds}. Since we are
not placing any assumptions on the data generating process, we cannot hope
to control the algorithm's loss at any particular time $t$, but only
cumulatively. It is then natural to employ amortized analysis with a
potential function.

In more detail, the proof of \pref{prop:online_bounds} relies on
several steps, common to standard analyses of online learning: $(i)$
define a potential function, $(ii)$ relate the increase in potential
at each time step, to the loss of the algorithm, $(iii)$ relate cumulative loss of any expert $f\in\cF$ to the final potential. For the Exponential Weights Algorithm, the proof relies on the following potential for time $t$, parameterized by $\eta>0$:
\begin{align}
    \Phi\sups{t}\subs{\eta} = -\log \sum_{f\in\cF} \exp\left\{ - \eta \sum_{i=1}^t \loss(f(x\sups{i}), y\sups{i})\right\}.
\end{align}
The choice of this potential is rather opaque, and a full explanation
of its origin is beyond the scope of the course, but we mention in
passing that there are principled ways of coming up with potentials in
general online learning problems.

\begin{proof}[\pfref{prop:online_bounds}]
    
  We first prove the second statement, focusing on conditional density
  with the logarithmic loss; for the square loss, see
  \cref{rem:square} below.

\noindent\emph{Proof for Part 2: Log loss.}
  Recall that for each $x$, $f(x)$ is a distribution over $\cY$, and $\logloss(f(x), y) = -\log f(y|x)$ where we abuse the notation and write $f(x)$ and $f(\cdot|x)$ interchangeably.
        With $\eta=1$, the averaged variant of exponential weights satisfies
    \begin{align}
		\label{eq:exp_weights_log_loss}
        \algo\sups{t}(y\sups{t}|x\sups{t}) = \sum_{f\in\cF} q\sups{t}(f) f(y\sups{t}|x\sups{t}) 
        = \sum_{f\in\cF} f(y\sups{t}|x\sups{t}) \frac{
         \exp
        \left\{-\sum_{i=1}^{t-1} \logloss(f(x\sups{i}), y\sups{i})
        \right\}
        }{
        \sum_{f\in\cF}\exp
        \left\{
        -\sum_{i=1}^{t-1} \logloss(f(x\sups{i}), y\sups{i})
        \right\}
        } ,
    \end{align}
    and thus
    \begin{align}
		\label{eq:change_in_potential}
      \logloss(\fhat(x\ind{t}),y\ind{t}) = -\log \algo\sups{t}(y\sups{t}|x\sups{t}) = \Phi\sups{t}\subs{1} - \Phi\sups{t-1}\subs{1}.
    \end{align}
    Hence, by telescoping
    $$\sum_{t=1}^T \logloss(\fhat(x\ind{t}),y\ind{t}) = \Phi\sups{T}\subs{1} - \Phi\sups{0}\subs{1}.$$
    Finally, observe that $\Phi\sups{0}\subs{1}=-\log|\cF|$ and, since
    $-\log$ is monotonically decreasing, we have 
	\begin{align}
		\label{eq:potential_to_benchmark}
		\Phi\sups{T}\subs{1} \leq -\log \exp \left\{-\sum_{i=1}^{T} \logloss(\fstar(x\sups{i}), y\sups{i})
        \right\} = \sum_{i=1}^{T} \logloss(\fstar(x\sups{i}), y\sups{i}),
	\end{align}
    for any $\fstar\in\cF$. This establishes the result for conditional density estimation with the log loss. As already discussed, the above proof follows the strategy: the loss on each round related to change in potential \eqref{eq:change_in_potential}, and the cumulative loss of any expert is related to the final potential \eqref{eq:potential_to_benchmark}. We now aim to replicate these steps for arbitrary bounded losses.

~\\
    \noindent\emph{Proof for Part 1: Generic loss.} To prove this
    result, we build on the log loss result above. First, observe that
    without loss of generality,
	we may assume that $\loss\circ f\in[0,1]$ for all $f\in\cF$ and $(x,y)$, as we can
	always re-scale the problem. The randomized variant of exponential weights \eqref{eq:randomized_vs_averaged} satisfies
	   \begin{align}
	     \En_{\fhat\ind{t}\sim{}q\ind{t}}\brk{\loss(\algo\sups{t}(x\sups{t}), y\sups{t})} = \sum_{f\in\cF} \loss(f(x\sups{t}), y\sups{t}) \frac{
	         \exp
	        \left\{-\eta\sum_{i=1}^{t-1} \loss(f(x\sups{i}), y\sups{i})
	        \right\}
	        }{
	        \sum_{f\in\cF}\exp
	        \left\{
	        -\eta\sum_{i=1}^{t-1} \loss(f(x\sups{i}), y\sups{i})
	        \right\}
	        }.
	\end{align}
	Hoeffding's inequality \eqref{eq:hoeffding_mgf} implies that
	    \begin{align}
			\label{eq:proof_of_exp_weights_hoeffding}
	      \eta \En_{\fhat\ind{t}\sim{}q\ind{t}}\brk{
	      \loss(\algo\sups{t}(x\sups{t}), y\sups{t})} \leq -\log \sum_{f\in\cF}  \frac{
	        \exp\{-\eta\loss(f(x\sups{t}), y\sups{t})\}\exp
	        \left\{-\eta\sum_{i=1}^{t-1} \loss(f(x\sups{i}), y\sups{i})
	        \right\}
	        }{
	        \sum_{f\in\cF}\exp
	        \left\{
	        -\eta\sum_{i=1}^{t-1} \loss(f(x\sups{i}), y\sups{i})
	        \right\}
	        } 
	        + \frac{\eta^2}{8}.
	    \end{align}
	    Note that the right-hand side of this inequality is simply
	$$\Phi\sups{t}\subs{\eta}-\Phi\sups{t-1}\subs{\eta}+\frac{\eta^2}{8},$$
	establishing the analogue of \eqref{eq:change_in_potential}. Summing over $t$, this gives
	\begin{align}
	       \eta\sum_{t=1}^T \En_{\fhat\ind{t}\sim{}q\ind{t}}\brk{\loss(\algo\sups{t}(x\sups{t}), y\sups{t})} \leq \Phi\sups{T}\subs{\eta}-\Phi\sups{0}\subs{\eta}+\frac{T\eta^2}{8}.
	\end{align}
	As in the first part, for any $\fstar\in\cF$, we can upper
	bound $$\Phi\sups{T}\subs{\eta}\leq \eta\sum_{t=1}^{T}
	\loss(\fstar(x\sups{t}), y\sups{t}),$$ while $\Phi\sups{0}\subs{\eta}
	= - \log|\cF|$. Hence, we have that for any $\fstar\in\cF$, 
	\[\sum_{t=1}^T
	  \En_{\fhat\ind{t}\sim{}q\ind{t}}\brk{\loss(\algo\sups{t}(x\sups{t}),
	    y\sups{t})} - \loss(\fstar(x\sups{t}),y\sups{t}) \leq
	  \frac{T\eta}{8} + \frac{\log\abs{\cF}}{\eta}.
	\]
	With $\eta=\sqrt{\frac{8\log |\cF|}{T}}$, we conclude that
	\begin{align}
	    \sum_{t=1}^T \En_{\fhat\ind{t}\sim{}q\ind{t}}\brk{\loss(\algo\sups{t}(x\sups{t}), y\sups{t})} - \loss(\fstar(x\sups{t}),y\sups{t}) \leq \sqrt{\frac{T\log|\cF|}{2}}.
	\end{align}
	
\end{proof}

Observe that Hoeffding's inequality was all that was needed for
        \cref{lem:erm_uniform_dev}. Curiously enough, it was also
        the only nontrivial step in the proof of
        \pref{prop:online_bounds}. In fact, the connection between
        probabilistic inequalities and online learning regret
        inequalities (that hold for arbitrary sequences) runs much
        deeper.

        \begin{rem}[Beyond finite classes]
          As in statistical learning, there are (sequential)
          complexity measures for $\cF$ that can be used to generalize
          the regret bounds in \pref{prop:online_bounds} to infinite
          classes. In general, the optimal regret for a class $\cF$
          will reflect the statistical capacity of the class
          \citep{StatNotes2012}.
        \end{rem}

\begin{rem}[Mixable losses]
\label{rem:square}
    We did not provide a proof of \pref{prop:online_bounds} for square
    loss. It is tempting to reduce square loss regression to density
    estimation by taking the conditional density to be a Gaussian distribution. Indeed, the log loss
    of a distribution with density proportional to
    $\exp\{-(\fhat\sups{t}(x\sups{t})-y\sups{t})^2\}$ is, up to
    constants, the desired square loss. However, the mixture in
    \eqref{eq:exp_weights_log_loss} does not immediately lead to a
    prediction strategy for the square loss, as the expectation
    appears in the wrong location. This issue
    is fixed by a notion known as \emph{mixability}.

    We say that a loss $\ell$ is \emph{mixable} with parameter $\eta$ if
    there exists a constant $c>0$ such that the following holds: for any $x$ and a distribution $q\in\Delta(\cF)$, there exists a prediction $\fhat(x)\in\cY'$ such that for all $y\in\cY$, 
	\begin{align}
		\label{eq:mixability}
		\loss(\fhat(x),y)\leq -\frac{c}{\eta}\log\left(\sum_{f\in\cF} q(f) \exp\{-\eta \loss(f(x),y)\}\right).
	\end{align}
If loss is mixable, then given the exponential weights
        distribution $q\ind{t}$, the best prediction $\widehat{y}\sups{t}=\fhat\sups{t}(x\sups{t})$ can be written (by bringing the right-hand side of \eqref{eq:mixability} to the left side) as an optimization problem
	\begin{align}
          \label{eq:mixability_minimax}
		\argmin_{\widehat{y}\sups{t}\in\cY'} \max_{y\sups{t}\in\cY} \brk*{ \loss(\widehat{y}\sups{t},y\sups{t})+\frac{c}{\eta}\log\left(\sum_{f\in\cF} q\sups{t}(f) \exp\{-\eta \loss(f(x\sups{t}),y\sups{t})\}\right)}
	\end{align}
	which is equivalent to
	\begin{align}
		\argmin_{\widehat{y}\sups{t}\in\cY'} \max_{y\sups{t}\in\cY} \brk*{ \loss(\widehat{y}\sups{t},y\sups{t})+\frac{c}{\eta}\log\left(\sum_{f\in\cF} \exp\{-\eta \sum_{i=1}^t \loss(f(x\sups{i}),y\sups{i})\}\right)}
	\end{align}
	once we remove the normalization factor. With this choice, mixability allows one to replicate the proof
        of \cref{prop:online_bounds} for the logarithmic loss, with
        the only difference being that \eqref{eq:exp_weights_log_loss}
        (after applying $-\log$ to both sides) becomes an
        inequality. It can be verified that square loss is mixable
        with parameter $\eta=2$ and $c=1$ when $\cY=\cY'=[0,1]$, leading to the desired fast rate for
square loss in \pref{prop:online_bounds}. The idea of translating the
English statement ``there exists a strategy such that for any
outcome...'' into a min-max inequality will come up again in the course. 
\end{rem}

\begin{rem}[Online linear optimization]
\label{rem:online_linear_opt}
  For the slow rate in \pref{prop:online_bounds}, the
  nature of the loss and the dependence on the function $f$ is
  immaterial for the proof. The guarantee can be stated in a more
  abstract form that depends only on the vector of losses for functions in
  $\cF$ as follows. Let $\abs{\cF}=N$. For timestep $t$, define $\vloss\sups{t}\subs{f} = \loss(f(x\sups{t}),y\sups{t})$ and $\vloss\sups{t} = (\vloss\sups{t}\subs{f_1},\ldots,\vloss\sups{t}\subs{f_N})\in\bbR^{N}$ for $\cF=\{f_1,\ldots,f_N\}$. For a randomized strategy $q\sups{t}\in\Delta(\brk{N})$, expected loss of the learner can be written as
    $$\En_{\algo\sups{t}\sim q\sups{t}} \brk{\loss(\algo\sups{t}(x\sups{t}),y\sups{t})} = \inner{q\sups{t}, \vloss\sups{t}},$$
and the expected regret can be written as
\begin{align}
  \label{eq:vector_regret}
        \Reg = \sum_{t=1}^T \inner{q\sups{t}, \vloss\sups{t}} - \min_{j\in\{1,\ldots,N\}} \sum_{t=1}^T \inner{\boldsymbol{e}_j, \vloss\sups{t}}
    \end{align}
    where $\boldsymbol{e}_j\in\bbR^{N}$ is the standard basis vector with $1$ in
    $j$th position. In its most general form, the exponential weights
    algorithm gives bounds on the regret in \pref{eq:vector_regret}
    for any sequence of vectors
    $\vloss\ind{1},\ldots,\vloss\ind{T}$, and the update takes the form
    \[
    q\sups{t}(k) \propto \exp\left\{
    -\eta \sum_{i=1}^{t-1} \vloss\ind{t}(k)\right\}.
    \]
    This formulation can be viewed as a special case of a problem
    known as \emph{online linear optimization}, and the exponential
    weights method can be viewed as an instance of an algorithm known
    as \emph{mirror
    descent}. 
  \end{rem}

\subsection{Exercises}
\label{sec:exercise_intro}

\begin{exe}[\pref{prop:iid_finite_class}, Part 2.]
  Consider the setting of \cref{prop:iid_finite_class}, where
  $(x\ind{1},y\ind{1}), \ldots, (x\ind{T}, y\ind{T})$ are i.i.d., $\cF=\{f:\cX\to[0,1]\}$
  is finite, the true regression function satisfies $\fstar\in\cF$, and $Y_i\in[0,1]$ almost surely.
  Prove that empirical risk minimizer $\fhat$ with respect to square loss satisfies the following bound on excess risk. With probability at least
  $1-\delta$,
  \begin{align}
	  \label{eq:excess_loss_bound_fast}
    \cE(\fhat) \approxleq \frac{\log(\abs{\cF}/\delta)}{T}.
  \end{align}
  Follow these steps:
  \begin{enumerate}[wide, labelwidth=!, labelindent=0pt]
    \item For a fixed function $f\in\cF$, consider the random variable
	    $$Z_i(f) = (f(x\ind{i})-y\ind{i})^2-(\fstar(x\ind{i})-y\ind{i})^2$$
      for $i=1,\ldots,T$. Show that 
      $$\En\brk{Z_i(f)} = \En(f(x\ind{i})-\fstar(x\ind{i}))^2 = \cE(f).$$
    \item
    Show that for any fixed $f\in\cF$, the variance $\var(Z_i(f))$ is
    bounded as
      $$\var(Z_i(f)) \leq 4\En(f(x\ind{i})-\fstar(x\ind{i}))^2.$$
    \item Apply Bernstein's inequality (\pref{lem:bernstein}) to show that with for any $f\in\cF$, with probability at least $1-\delta$,
	        \begin{align}
	          \cE(f) \leq 2(\emploss(f)-\emploss(\fstar)) + \frac{C\log (1/\delta)}{T},
	        \end{align}
    for an absolute constant $C$, where $\emploss(f) = \frac{1}{T}\sum_{t=1}^T (f(x\ind{t})-y\ind{t})^2$.
    \item Extend this probabilistic inequality to simultaneously hold
      for all $f\in\cF$ by taking the union bound over
      $f\in\cF$. Conclude as a consequence that the bound holds for $\fhat$, the empirical minimizer, implying \eqref{eq:excess_loss_bound_fast}. %
  \end{enumerate}
\end{exe}

\begin{exe}[ERM in Online Learning]

  Consider the problem of Online Supervised Learning with indicator loss $\loss(f(x),y)=\indic{f(x)\neq y}$, $\cY=\cY'=\{0,1\}$, and a finite class $\cF$. 
  
  \begin{enumerate}[wide, labelwidth=!, labelindent=0pt]
    \item Exhibit a class $\cF$ for which ERM cannot ensure sublinear growth of regret for all sequences, i.e. there exists a sequence $(x\ind{1},y\ind{1}),\ldots,(x\ind{T}, y\ind{T})$ such that
  $$\sum_{t=1}^T \loss(\fhat\ind{t}(x\ind{t}),y\ind{t}) - \min_{f\in\cF}
  \sum_{t=1}^T \loss(f(x\ind{t}),y\ind{t}) = \Omega(T),$$
  where $\fhat\ind{t}$ is the empirical minimizer for the indicator
  loss on $(x\ind{1},y\ind{1}),\ldots,(x\ind{t-1},y\ind{t-1})$.
  Note: The construction must have $\abs{\cF} \leq C$, where $C$ is an absolute constant that does not depend on $T$.
    \item Show that if data are i.i.d., then in expectation over the
      data, ERM attains a sublinear bound $O(\sqrt{T\log\abs{\cF}})$
      on regret for any finite class $\cF$.
  \end{enumerate}
\end{exe}

\begin{exe}[Low Noise]
  
    \begin{enumerate}[wide, labelwidth=!, labelindent=0pt]
      \item For a nonnegative random variable $X$, prove that for any $\eta\geq 0$, 
        \begin{align}
			\label{eq:second_order_moment_gen}
          \ln \En \exp \crl*{-\eta(X-\En\brk{X})} \leq \frac{\eta^2}{2} \En\brk{X^2}.
        \end{align}
        Hint: use the fact that $\ln x\leq x-1$ and $\exp(-x)\leq 1-x+x^2/2$ for $x\geq 0$.
      \item
        Consider the setting of \pref{prop:online_bounds}, Part 1 (Generic Loss). Prove that the randomized variant of the Exponential Weights Algorithm satisfies, for any $\fstar\in\cF$, 
          \begin{align}
          \label{eq:second_order_ewa}
          \sum_{t=1}^T
            \En_{\fhat\ind{t}\sim{}q\ind{t}}\brk{\loss(\algo\sups{t}(x\sups{t}),
              y\sups{t})} - \loss(\fstar(x\sups{t}),y\sups{t}) \leq
            \frac{\eta}{2}\sum_{t=1}^T \En_{\fhat\ind{t}\sim{}q\ind{t}}\brk{\loss(\algo\sups{t}(x\sups{t}),
              y\sups{t})^2} + \frac{\log\abs{\cF}}{\eta}.
          \end{align}
          for any sequence of data and nonnegative losses. Hint: replace Hoeffding's Lemma by
          \eqref{eq:second_order_moment_gen}. 
      \item Suppose $\loss(f(x),y)\in[0,1]$ for all $x\in\cX$,
        $y\in\cY$, and $f\in\cF$. Suppose that there is a ``perfect
        expert $\fstar\in\cF$ such that $\loss(\fstar(x\ind{t}),y\ind{t})=0$ for
        all $t\in[T]$. Conclude that the above algorithm, with an
        appropriate choice of $\eta$, enjoys a bound of
        $O(\log\abs{\cF})$ on the cumulative loss of the algorithm
        (equivalently, the fast rate $\frac{\log\abs{\cF}}{T}$ for the
        average regret). This setting is called ``zero-noise.'' 
  
      \item Consider the binary classification problem with indicator
        loss, and suppose $\cF$ contains a perfect expert, as
        above. The \emph{Halving Algorithm} maintains a version space
        $\cF\ind{t}=\{f\in\cF: f(x\ind{s})=y\ind{s}, s<t\}$ and, given $x\ind{t}$, follows
        the majority vote of remaining experts in $\cF\ind{t}$. Show that
        this algorithm incurs cumulative loss at most
        $O(\log\abs{\cF})$. Hence, the Exponential Weights Algorithm
        can be viewed as an extension of the Halving algorithm to
        settings where the optimal loss is non-zero.
    \end{enumerate}

\end{exe}  

\normalsize

\section{Multi-Armed Bandits}
\label{sec:mab}

This chapter introduces the \emph{multi-armed bandit} problem, which
is the simplest interactive decision making framework we will consider
in this course.
\begin{whiteblueframe}
  \begin{algorithmic}
\State \textsf{Multi-Armed Bandit Protocol}
\For{$t=1,\ldots,T$}
\State Select decision $\pi\sups{t}\in\Pi\ldef\{1,\ldots,A\}$.
\State Observe reward $r\sups{t}\in\bbR$.
\EndFor{}
\end{algorithmic}
\end{whiteblueframe}
The protocol (see above) proceeds in $T$ rounds. At each round
$t\in\brk{T}$, the learning agent selects a discrete
\emph{decision}\footnote{In the literature on bandits, decisions are
  often referred to as \emph{actions}. We will use these terms
  interchangeably throughout this section.} $\pi\ind{t}\in\Pi=\crl{1,\ldots,A}$ using the data
\[
  \cH\sups{t-1} = \left\{(\pi\sups{1}, r\sups{1}),\ldots,(\pi\sups{t-1},
    r\sups{t-1}) \right\}
\]
collected so far; we refer to $\Pi$ as the \emph{decision space}
or action space, with $A\in\bbN$ denoting the size of the space. We allow the learner to randomize the decision at step
$t$ according to a distribution
$p\ind{t}=p\ind{t}(\cdot\mid{}\hist\ind{t-1})$, sampling
$\pi\ind{t}\sim{}p\ind{t}$. Based
on the decision $\pi\ind{t}$, the learner receives a reward
$r\ind{t}$, and their goal is to maximize the cumulative reward across
all $T$ rounds. As an example, one might consider an application in which the learner is a doctor (or
  personalized medical assistant) who aims to select a treatment (the
  decision) in order to make a patient feel better (maximize reward); see \pref{fig:mab}.

The multi-armed bandit problem can be studied in a stochastic framework,
in which rewards are generated from a fixed (conditional)
distribution, or an non-stochastic/adversarial framework in the vein of online
learning (\pref{sec:ol}). We will focus on the stochastic framework, and
make the following assumption.
\begin{assumption}[Stochastic Rewards]
  \label{asm:stochastic_rewards_MAB}
  Rewards are generated independently via
    \begin{align}
        r\sups{t}\sim \Mstar(\cdot\mid\pi\sups{t}),
    \end{align}
    where $\Mstar(\cdot\mid\cdot)$ is the underlying \emph{model} (conditional distribution).
  \end{assumption}
  We define
  \begin{align}
    \label{eq:mab_mean_reward}
  \fstar(\pi) \ldef \En\left[r\mid{} \pi\right]    
  \end{align}
  as the mean reward function under $r\sim\Mstar(\cdot\mid\pi)$. We
  measure the learner's performance via regret to the action
  $\pistar\ldef\argmax_{\pi\in\Pi}\fstar(\pi)$ with highest reward:
  \begin{align}
    \label{eq:regret_mab}
    \Reg \ldef{} \sum_{t=1}^{T}\fstar(\pistar) - \sum_{t=1}^{T}\En_{\pi\ind{t}\sim{}p\ind{t}}\brk[\big]{\fstar(\pi\ind{t})}.
  \end{align}
  Regret is a natural notion of performance for the multi-armed bandit
  problem because
  it is \emph{cumulative}: it measures not just how well the learner can identify an action
  with good reward, but how well it can maximize reward as it
  goes. This notion is well-suited to settings like
  the personalized medicine example in \pref{fig:mab}, where regret
  captures the \emph{overall} quality of treatments, not just the quality
  of the final treatment. As in the online learning framework, we
  would like to develop algorithms that enjoy sublinear regret, i.e.
  \[
    \frac{\En\brk{\Reg}}{T}\to{}0\quad\text{as}\quad{}T\to\infty.
  \]

  The most important feature of the multi-armed
  bandit problem, and what makes the problem fundamentally
  \emph{interactive}, is that the learner only receives a reward
  signal for the single decision $\pi\ind{t}\in\Pi$ they select at
  each round. That is, the observed reward $r\ind{t}$ gives a noisy estimate
  for $\fstar(\pi\ind{t})$, but reveals no information about the
  rewards for other decisions $\pi\neq\pi\ind{t}$.
\begin{figure}[h]
    \centering
    \includegraphics[width=0.8\textwidth]{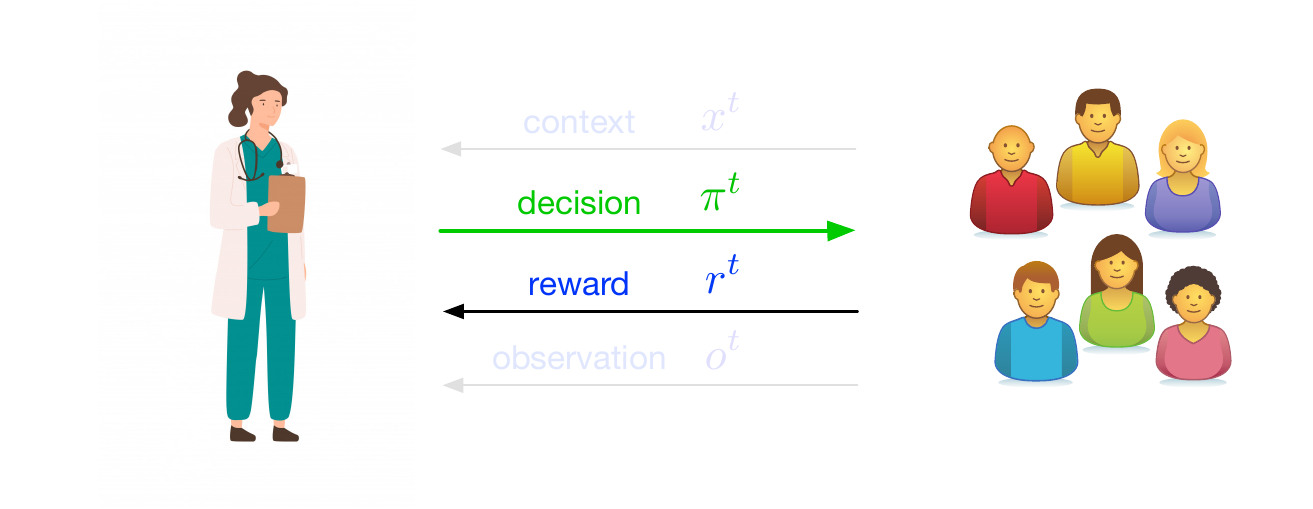}
    \caption{An illustration of the multi-armed bandit problem. A
      doctor (the learner) aims to select a treatment (the decision)
      to improve a patient's vital signs (the
      reward).}
    \label{fig:mab}
\end{figure}
For example in \pref{fig:mab}, if the doctor prescribes a particular treatment to
  the patient, they can observe whether the patient responds
  favorably, but they do not directly observe whether other possible
  treatments might have led to an even better outcome. This issue is
  often referred to as \emph{partial feedback} or \emph{bandit
    feedback}. Partial feedback introduces an element of \emph{active data
  collection}, as it means that the information contained
  in the dataset $\cH\ind{t}$ depends on the decisions made by the
  learner, which we will see necessitates \emph{exploring} different
  actions. This should be contrasted with statistical learning (where
  the dataset is generated independently from the learner) and online
  learning (where losses may be chosen by nature in response to
  the learner's behavior, but where the outcome $y\ind{t}$--- and hence the full loss function
  $\ls(\cdot,y\ind{t})$---is always revealed).
  
In the context of \pref{fig:axes}, the multi-armed bandit problem
constitutes our first step along the ``interactivity'' axis, but does
not incorporate any structure in the decision space (and does not involve
features/contexts/covariates). In particular, information about one
action does not reveal information about any other actions, so there
is no hope of using function approximation to generalize across
actions.\footnote{Another way to say this is that we take
  $\cF=\bbR^{A}$, so that $\fstar\in\cF$.} As a result, the algorithms we will cover in this section
will have regret that scales with $\bigom(\abs{\Pi})=\bigom(A)$. This
shortcoming is addressed by the \emph{structured bandit} framework we
will introduce in 
\pref{sec:structured}, which allows for the use of function
approximation to model structure in the decision space.\footnote{Throughout the lecture notes, we will
  exclusively use
  the term ``multi-armed bandit'' to refer to bandit problems with
  finite action spaces, and use the term ``structured bandit'' for
  problems with large action spaces.}

  \begin{rem}[Other  notions of regret]
    It is also reasonable to consider empirical regret, defined as
    \begin{align}
    \label{eq:regret_realized_rewards}
\max_{\pi\in\Pi}\sum_{t=1}^{T}r\ind{t}(\pi) - \sum_{t=1}^{T}r\ind{t}(\pi\ind{t}),
    \end{align}
    where, for $\pi\neq\pi\ind{t}$, $r\ind{t}(\pi)$ denotes the
    \emph{counterfactual reward} the learner would have received if
    they had played $\pi$ at round $t$. Using Hoeffding's inequality, one can show that this is
equivalent to the definition in \pref{eq:regret_mab} up to
$\bigoh(\sqrt{T})$ factors.
  \end{rem}

\subsection{The Need for Exploration}
\label{sec:need_for_exploration}

In statistical learning, we saw that the empirical risk minimization
algorithm, which greedily chooses the function that best fits the
data, leads to interesting bounds on excess risk. For multi-armed bandits,
since we assume the data generating process is stochastic, a natural
first attempt at designing an algorithm is to apply the greedy
principle here in the same fashion.
Concretely, at time $t$, we can compute an empirical estimate for the
reward function $\fstar$ via
\begin{equation}
  \label{eq:mab_mean}
\fhat\ind{t}(\pi)=\frac{1}{n\ind{t}(\pi)}\sum_{s<t}r\ind{s}\indic{\pi\ind{s}=\pi},
\end{equation}
where $n\ind{t}(\pi)$ is the number of times $\pi$ has been selected
up to time $t$.\footnote{If $n\ind{t}(\pi)=0$, we will set $\fhat\ind{t}(\pi)=0$.} Then, we can
choose the greedy action
\[
\pihat\ind{t}=\argmax_{\pi\in\Pi}\fhat\ind{t}(\pi).
\]
Unfortunately, due to the interactive nature of the bandit problem,
this strategy can fail, leading to linear regret
($\Reg=\bigom(T)$). Consider the following problem with
$\Pi=\crl{1,2}$ ($A=2$).
\begin{itemize}
\item Decision $1$ has reward $\frac{1}{2}$ almost surely.
\item Decision $2$ has reward $\Ber(3/4)$.
\end{itemize}
Suppose we initialize by playing each decision a single time to ensure
that $n\ind{t}(\pi)>0$, then follow the greedy strategy. One can see
that with probability $1/4$,
the greedy algorithm will get stuck on action $1$, leading to regret $\bigom(T)$.

The issue in this example is that the greedy algorithm immediately gives up on the optimal
action and never revisits it. To address this, we will
consider algorithms that deliberately \emph{explore} less visited actions to
ensure that their estimated rewards are not misleading.

\subsection{The $\veps$-Greedy Algorithm}
The greedy algorithm for bandits can fail because it can insufficiently explore
good decisions that initially seem bad, leading it to
get stuck playing suboptimal decisions. In light of this failure, a
reasonable solution is to manually force the algorithm to explore, so as to
ensure that this situation never occurs. This leads us to what is
known as the
\emph{\egreedy} algorithm (e.g., \citet{sutton2018reinforcement,auer2002finite}). 

Let $\veps\in\brk{0,1}$ be the \emph{exploration parameter}. At each
time $t\in\brk{T}$, the \egreedy algorithm computes the estimated
reward function $\fhat\ind{t}$ as in \pref{eq:mab_mean}. With
probability $1-\veps$, the algorithm chooses the greedy decision
\begin{align}
    \pihat\sups{t} = \argmax_{\pi} \fhat\sups{t}(\pi),
\end{align}
and with probability $\veps$ it samples a uniform random action
$\pi\ind{t}\sim\unif(\crl{1,\ldots,A})$. As the name suggests,
\egreedy usually plays the greedy action (\emph{exploiting} what it has
already learned), but the uniform sampling
ensures that the algorithm will also \emph{explore} unseen actions. We can
think of the parameter $\veps$ as modulating the tradeoff between
exploiting and exploring.
\begin{prop}
  \label{prop:eps_greedy}
  Assume that $\fstar(\pi)\in\brk{0,1}$ and $r\ind{t}$ is
  $1$-\subgaussian. Then for any $T$, by choosing $\veps$ appropriately, the \egreedy
  algorithm ensures that with probability at least $1-\delta$,
  \begin{align*}
    \En\brk*{\Reg} \approxleq A^{1/3}T^{2/3}\cdot\log^{1/3}(AT/\delta).
  \end{align*}

\end{prop}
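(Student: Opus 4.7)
The plan is to decompose the regret according to whether round $t$ is an exploration round (uniform sampling, probability $\veps$) or an exploitation round (greedy choice, probability $1-\veps$), bound each contribution, and then tune $\veps$ to balance the two. Concretely, let $E\ind{t}$ denote the indicator of an exploration round. Using $\fstar(\pistar)-\fstar(\pi)\le 1$, the exploration contribution is immediately bounded by $\veps T$ in expectation. The remaining work is to control the exploitation rounds, which amounts to showing that the greedy choice $\pihat\ind{t}=\argmax_\pi \fhat\ind{t}(\pi)$ is nearly optimal once enough samples have been collected for every arm.

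For the exploitation piece, I would first observe the standard ``two-sided'' bound: on any round where the estimate satisfies $\max_\pi |\fhat\ind{t}(\pi)-\fstar(\pi)|\le \Delta\ind{t}$, optimality of $\pihat\ind{t}$ under $\fhat\ind{t}$ gives
\[
\fstar(\pistar)-\fstar(\pihat\ind{t}) \le \bigl(\fstar(\pistar)-\fhat\ind{t}(\pistar)\bigr) + \bigl(\fhat\ind{t}(\pihat\ind{t})-\fstar(\pihat\ind{t})\bigr) \le 2\Delta\ind{t}.
\]
The core ingredient is then a uniform concentration statement for $\fhat\ind{t}(\pi)$. Since exploration rounds play $\pi$ with probability $\veps/A$ independently of history, $n\ind{t}(\pi)$ stochastically dominates a $\mathrm{Binomial}(t-1,\veps/A)$ random variable. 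A Chernoff bound then gives $n\ind{t}(\pi)\ge \veps(t-1)/(2A)$ for all $t$ and all $\pi$ with probability at least $1-\delta/2$ provided $t \gtrsim A\log(AT/\delta)/\veps$. Conditioned on this event, the rewards contributing to $\fhat\ind{t}(\pi)$ are i.i.d.\ $1$-sub-Gaussian draws from $\Mstar(\cdot\mid\pi)$, so Hoeffding combined with a union bound over $\pi\in\Pi$ and $t\in[T]$ yields, with probability at least $1-\delta$,
\[
\Delta\ind{t} \lesssim \sqrt{\frac{A\log(AT/\delta)}{\veps\, t}}\quad \text{for all } t\gtrsim A\log(AT/\delta)/\veps.
\]

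Combining the pieces, the total expected regret is bounded, up to constants and low-order terms, by
\[
\veps T \;+\; \sum_{t=1}^{T} 2\Delta\ind{t} \;\lesssim\; \veps T \;+\; \sqrt{\frac{AT\log(AT/\delta)}{\veps}},
\]
plus a burn-in contribution of order $A\log(AT/\delta)/\veps$ and a $\delta T$ failure-probability term that we absorb by taking $\delta\lesssim 1/T$. Optimizing by setting $\veps T \asymp \sqrt{AT\log(AT/\delta)/\veps}$ yields $\veps \asymp (A\log(AT/\delta)/T)^{1/3}$, which gives the claimed rate $A^{1/3}T^{2/3}\log^{1/3}(AT/\delta)$.

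The main technical subtlety, and the step I expect to be the ``main obstacle,'' is the concentration of $\fhat\ind{t}(\pi)$: because the number of samples $n\ind{t}(\pi)$ is itself a data-dependent random variable, one cannot directly apply Hoeffding to $\fhat\ind{t}(\pi)$. The cleanest route is the two-step argument above (lower-bound $n\ind{t}(\pi)$ by Chernoff using the fact that exploration rounds produce i.i.d.\ Bernoulli draws independent of the history, then apply Hoeffding conditionally on $n\ind{t}(\pi)$). Alternatively, one can bypass the coupling by using a self-normalized / peeling (time-uniform) tail bound that handles the random sample size directly; I would use whichever makes the bookkeeping cleanest when stating the final high-probability statement.
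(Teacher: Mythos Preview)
Your proposal is correct and matches the paper's proof essentially step for step: the same explore/exploit decomposition, the same two-sided bound reducing exploitation regret to $2\max_\pi|\fhat\ind{t}(\pi)-\fstar(\pi)|$, a concentration lower bound on $n\ind{t}(\pi)$ via the exploration indicators (the paper uses Bernstein where you suggest Chernoff), and a time-uniform Hoeffding bound to handle the random sample size---exactly your second alternative, which the paper invokes as \pref{lem:hoeffding_adaptive}. The final balancing of $\veps$ is identical.
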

This regret bound has $\frac{\En\brk{\Reg}}{T}\to{}0$ with
$T\to\infty$ as desired, though we
will see in the sequel that more sophisticated strategies can attain
improved regret bounds that scale with $\sqrt{AT}$.\footnote{Note that
  $\sqrt{AT}\leq{}A^{1/3}T^{2/3}$ whenever $A\leq{}T$, and when
  $A\geq{}T$ both guarantees are vacuous.}

\begin{proof}[\pfref{prop:eps_greedy}]
Recall that $\pihat\ind{t}\ldef\argmax_{\pi} \fhat\sups{t}(\pi)$ denotes the greedy
action at round $t$, and that $p\ind{t}$ denotes the distribution over
$\pi\ind{t}$. We can decompose the regret into two terms, representing the contribution from choosing the greedy action and the contribution
from exploring uniformly: 
\begin{align*}
  \Reg
  &= \sum_{t=1}^{T}\En_{\pi\ind{t}\sim{}p\ind{t}}\brk*{\fstar(\pistar)
    - \fstar(\pi\ind{t})} \\
  &= (1-\veps)\sum_{t=1}^{T}\fstar(\pistar)
      - \fstar(\pihat\ind{t})
      + \veps \sum_{t=1}^{T}\En_{\pi\ind{t}\sim\unif(\brk{A})}\brk*{\fstar(\pistar)
    - \fstar(\pihat\ind{t})} \\
  &\leq \sum_{t=1}^{T}\fstar(\pistar)
      - \fstar(\pihat\ind{t})
      + \veps{}T.
\end{align*}
In the last inequality, we have simply written off the contribution from exploring uniformly
by using that $\fstar(\pi)\in\brk{0,1}$. It remains to bound the
regret we incur from playing the greedy action. Here, we bound the per-step
regret in terms of estimation error using a similar decomposition to
\pref{lem:erm_uniform_dev} (note that we are now working with rewards
rather than losses): %
  \begin{align}
    \fstar(\pistar)-\fstar(\pihat\sups{t}) &= [\fstar(\pistar)-\fhat\sups{t}(\pistar)] + \underbracet{[\fhat\sups{t}(\pistar) - \fhat\sups{t}(\pihat\sups{t})]}{$\leq{}0$} + [\fhat\sups{t}(\pihat\sups{t}) - \fstar(\pihat\sups{t})] \\
                                        &\leq
                                          2\max_{\pi\in\{\pistar,\pihat\sups{t}\}}
                                          |\fstar(\pi)-\fhat\sups{t}(\pi)|
                                          \leq 2\max_{\pi} |\fstar(\pi)-\fhat\sups{t}(\pi)|.
  \end{align}
  Note that this regret decomposition can also be applied to the pure
greedy algorithm, which we have already shown can fail. The reason why
$\veps$-Greedy succeeds, which we use in the argument that follows, is
that because we explore, the ``effective'' number of times that each arm
will be pulled prior to round $t$ is of the order $\veps{}t/A$, which
will ensure that the sample mean converges to $\fstar$. In
particular, we will show that the event
\begin{align}\cE_t = \left\{\max_{\pi} |\fstar(\pi)-\fhat\sups{t}(\pi)|\approxleq
  \sqrt{\frac{A\log(AT/\delta)}{\veps{}t}} \right\}
  \label{eq:egreedy_event}
\end{align}
occurs for all $t$ with probability at least $1-\delta$.

To prove that \cref{eq:egreedy_event} holds, we first use Hoeffding's inequality for adaptive stopping times
(\pref{lem:hoeffding_adaptive}), which gives that for any fixed $\pi$, with probability
at least $1-\delta$ over the draw of rewards,
\begin{align}
  \label{eq:mab_egreedy_proof1}
  |\fstar(\pi)-\fhat\sups{t}(\pi)| \leq \sqrt{\frac{2\log(2T/\delta)}{\nt(\pi)}}.
\end{align}
From here, taking a union bound over all $t\in\brk{T}$ and $\pi\in\Pi$ ensures that
\begin{align}
  \label{eq:mab_egreedy_proof1}
  |\fstar(\pi)-\fhat\sups{t}(\pi)| \leq \sqrt{\frac{2\log(2AT^2/\delta)}{\nt(\pi)}}
\end{align}
for all $\pi$ and $t$ simultaneously. It remains to show that the number of
pulls $n\ind{t}(\pi)$ is sufficiently large.

Let $e\ind{t}\in\crl{0,1}$ be a random variable whose value indicates
whether the algorithm explored uniformly at step $t$, and let
$m\ind{t}(\pi)=\abs{\crl{i<t:\pi\ind{i}=\pi,e\ind{i}=1}}$, which has
$n\ind{t}(\pi)\geq{}m\ind{t}(\pi)$. Let
$Z\ind{t}=\indic{\pi\ind{t}=\pi,e\ind{t}=1}$. Observe that we can
write
\[
  m\ind{t}(\pi)=\sum_{i<t}Z\ind{i}.
\]
In addition, $Z\ind{t}\sim\Ber(\veps/A)$, so we have
$\En\brk*{m\ind{t}(\pi)}=\veps{}(t-1)/A$. Using Bernstein's inequality
(\pref{lem:bernstein}) with $Z\ind{1},\ldots,Z\ind{t-1}$, we have that
for any fixed $\pi$ and all $u>0$, with probability at least $1-2e^{-u}$,
$$\abs*{\mt(\pi) - \frac{\veps{}(t-1)}{A}} \leq \sqrt{2\Var\brk{Z}(t-1)u} + \frac{u}{3}\leq \sqrt{\frac{2\veps{}(t-1)u}{A}} + \frac{u}{3} \leq
\frac{\veps (t-1)}{2A} + \frac{4u}{3},$$
where we have used that
$\Var\brk{Z}=\veps/A\cdot(1-\veps/A)\leq\veps/A$,  and then applied
the arithmetic mean-geometric mean (AM-GM) inequality,
which states that
$\sqrt{xy}\leq\frac{x}{2}+\frac{y}{2}$ for $x,y\geq{}0$. Rearranging,
this gives
\begin{align}
  \label{eq:eps_greedy_N_at_least}
  \mt(\pi) \geq \frac{\veps(t-1)}{2A}  - \frac{4u}{3}.
\end{align}
Setting $u=\log(2AT/\delta)$ and taking a union bound, we are
guaranteed that with probability at least $1-\delta$, for all
$\pi\in\Pi$ and $t\in\brk{T}$
\begin{align}
  \label{eq:eps_greedy_N_at_leas2}
  \mt(\pi) \geq \frac{\veps(t-1)}{2A}  - \frac{4\log(2AT/\delta)}{3}.
\end{align}
As long as $\veps{}t\approxgeq{}A\log(AT/\delta)$ (we can write off
the rounds where this does not hold), this yields 
\[
  \nt(\pi) \geq \mt(\pi) \approxgeq \frac{\veps t}{A}.
\]
Taking a union bound and combining with \pref{eq:mab_egreedy_proof1},
this implies that with probability at least $1-\delta$, for all $t$,
\[\max_{\pi} |\fstar(\pi)-\fhat\sups{t}(\pi)|\approxleq
  \sqrt{\frac{A\log(AT/\delta)}{\veps{}t}}.
\]
which leads to the overall regret bound
\begin{align}
  \Reg \leq  \sum_{t=1}^{T}\max_{\pi} |\fstar(\pi)-\fhat\sups{t}(\pi)|
  + \veps{}T 
  &\approxleq  \sum_{t=1}^{T}\sqrt{\frac{A\log(AT/\delta)}{\veps{}t}}
  + \veps{}T\notag\\
  &  \leq \sqrt{\frac{AT\log(AT/\delta)}{\veps{}}}
  + \veps{}T.\label{eq:egreedy_final}
\end{align}
To balance the terms on the right-hand side, we set
\[
  \veps \propto \prn*{\frac{A\log(AT/\delta)}{T}}^{1/3},
\]
which gives the final result.%
\end{proof}

This proof shows that the \egreedy strategy allows the learner to
acquire information uniformly for all actions, but we pay for this in
terms of regret (specifically, through the $\veps{}T$ factor in the
final regret bound \cref{eq:egreedy_final}). This issue here is that the \egreedy strategy
continually explores all actions, even though we might expect to rule
out actions with very low reward after a relatively small amount of exploration.
To address this shortcoming, we will consider more adaptive strategies.

\begin{rem}[Explore-then-commit]
  A relative of \egreedy is the explore-then-commit (ETC) algorithm
  (e.g., \citet{robbins1952some,langford2008epoch}),
  which uniformly explores actions for the first $N$ rounds, then
  estimates rewards based on the data collected and commits to the
  greedy action for the remaining $T-N$ rounds. This strategy can 
  be shown to attain $\Reg\approxleq{}A^{1/3}T^{2/3}$ for an
  appropriate choice of $N$, matching \egreedy.
\end{rem}

\subsection{The Upper Confidence Bound (UCB) Algorithm}
\label{sec:ucb_bandits}

The next algorithm we will study for bandits is the \ucblong (\ucbtext)
algorithm \citep{lai85asymptotically,agrawal1995sample,auer2002finite}. The \ucbtext algorithm attains a regret bound of the order
$\bigoht(\sqrt{AT})$, which improves upon the regret bound for
\egreedy, and is optimal (in a worst-case sense) up to logarithmic
factors. In addition to optimality, the algorithm offers several secondary benefits,
including adaptivity to favorable structure in the underlying reward function.

The UCB algorithm is based on the notion of \emph{optimism in the face
  of uncertainty}, which is a general principle we will revisit
throughout this text in
increasingly rich settings. The idea behind the principle is that at each time $t$, we should
adopt the most optimistic perspective of the world possible given the
data collected so far, and then choose the decision $\pi\ind{t}$ based on
this perspective.

To apply the idea of optimism to the multi-armed bandit
problem, suppose that for each step $t$, we can construct ``confidence intervals''
\begin{align}
    \flcb\sups{t}, \fucb\sups{t}:\Pi\to\reals,
\end{align}
with the following property: with probability at least $1-\delta$,
\begin{align}
    \forall t\in[T], \pi\in\Pi,~~~ f^*(\pi) \in [\flcb\sups{t}(\pi), \fucb\sups{t}(\pi) ].
\end{align}
\begin{figure}[h]
    \centering
    \includegraphics[width=0.6\textwidth]{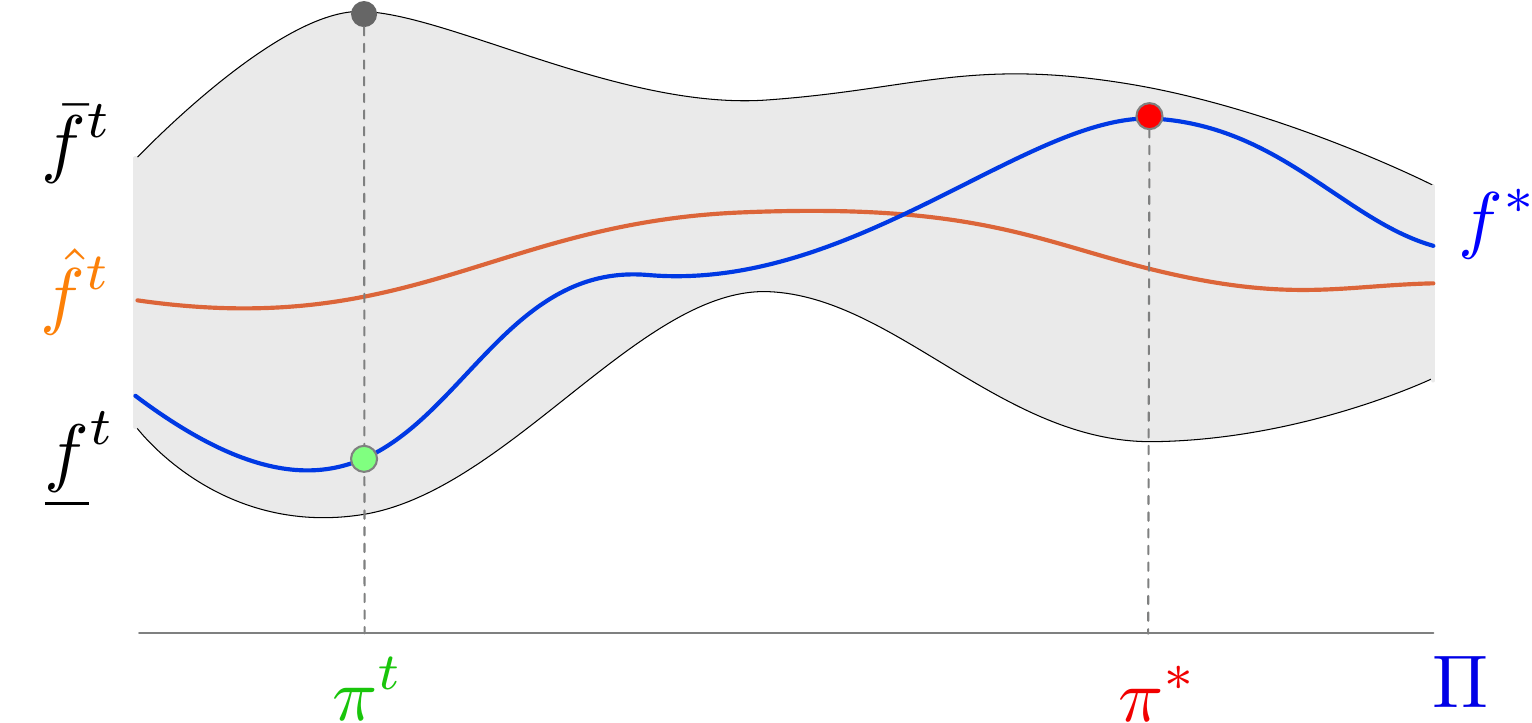}
    \caption{Illustration of the \ucbshort algorithm. Selecting the
      action $\pi\ind{t}$ optimistically ensures that the
      suboptimality never greater exceeds the confidence
      width. }
    \label{fig:confidence_width}
\end{figure}
We refer to $\flcb\ind{t}$ as a \emph{lower confidence bound} and
$\fucb\ind{t}$ as a \emph{upper confidence bound}, since we are
guaranteed that with high probability, they lower (resp. upper) bound
$\fstar$. Given confidence intervals, the \ucbtext algorithm simply chooses $\pi\ind{t}$ as
the ``optimistic'' action that maximizes the upper confidence bound:
\[
\pi\ind{t}=\argmax_{\pi\in\Pi}\fucb\ind{t}(\pi).
\]
The following lemma shows that the instantaneous regret for this
strategy is bounded by the width of the confidence interval; see
\pref{fig:confidence_width} for an illustration.
\begin{lem}
  \label{lem:regret_optimistic}
  Fix $t$, and suppose that
  $\fstar(\pi)\in\brk{\flcb\ind{t}(\pi),\fucb\ind{t}(\pi)}$ for all
  $\pi$. Then the optimistic action
  \[
    \pi\ind{t}=\argmax_{\pi\in\Pi}\fucb\ind{t}(\pi)
  \]
  has
  \begin{align}
	  \label{eq:regret_optimistic_conclusion}
	  \fstar(\pistar) - \fstar(\pi\sups{t}) \leq \fucb\sups{t}(\pi\sups{t}) - \fstar(\pi\sups{t}) \leq \fucb\sups{t}(\pi\sups{t}) - \flcb\sups{t}(\pi\sups{t}).
  \end{align}
\end{lem}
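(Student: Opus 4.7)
The statement decomposes cleanly into two inequalities, and the plan is to handle each by directly applying the assumed confidence-bound containment together with the defining property of the optimistic decision. No machinery beyond the displayed hypotheses is needed, so I will not need to invoke any of the earlier concentration tools.

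For the second inequality, $\fucb\sups{t}(\pi\sups{t}) - \fstar(\pi\sups{t}) \leq \fucb\sups{t}(\pi\sups{t}) - \flcb\sups{t}(\pi\sups{t})$, I would simply invoke the lower confidence bound property at the specific action $\pi\sups{t}$: the hypothesis gives $\flcb\sups{t}(\pi\sups{t}) \leq \fstar(\pi\sups{t})$, and negating and adding $\fucb\sups{t}(\pi\sups{t})$ to both sides yields the claim. This is a one-line check.

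For the first inequality, $\fstar(\pistar) - \fstar(\pi\sups{t}) \leq \fucb\sups{t}(\pi\sups{t}) - \fstar(\pi\sups{t})$, I would chain two facts. First, by the upper confidence bound property applied at $\pistar$, $\fstar(\pistar) \leq \fucb\sups{t}(\pistar)$. Second, by the definition $\pi\sups{t} = \argmax_{\pi \in \Pi} \fucb\sups{t}(\pi)$, we have $\fucb\sups{t}(\pistar) \leq \fucb\sups{t}(\pi\sups{t})$. Composing these gives $\fstar(\pistar) \leq \fucb\sups{t}(\pi\sups{t})$, and subtracting $\fstar(\pi\sups{t})$ from both sides produces the desired inequality.

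There is no substantive obstacle here — this lemma is the formal encoding of the ``optimism in the face of uncertainty'' heuristic, and the proof is essentially a two-line chase through the definitions. The real content of the UCB analysis will come later, when one must construct confidence bounds $\flcb\sups{t}, \fucb\sups{t}$ that simultaneously (i) contain $\fstar$ with high probability and (ii) have widths $\fucb\sups{t}(\pi\sups{t}) - \flcb\sups{t}(\pi\sups{t})$ whose cumulative sum over $t$ is of order $\sqrt{AT}$; this lemma just sets up the reduction from regret to confidence width.
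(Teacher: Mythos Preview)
Your proof is correct and matches the paper's approach exactly: both use the chain $\fstar(\pistar) \leq \fucb\sups{t}(\pistar) \leq \fucb\sups{t}(\pi\sups{t})$ for the first inequality and $-\fstar(\pi\sups{t}) \leq -\flcb\sups{t}(\pi\sups{t})$ for the second.
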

\begin{proof}[\pfref{lem:regret_optimistic}]
  The result follows immediate from the observation that for any $t\in[T]$ and any $\pistar\in\Pi$, we have
    $$\fstar(\pistar)\leq \fucb\sups{t}(\pistar) \leq \fucb\sups{t}(\pi\sups{t})~~~\text{and}~~~~ -\fstar(\pi\sups{t}) \leq -\flcb\sups{t}(\pi\sups{t}).$$
  \end{proof}
  \pref{lem:regret_optimistic} implies that as long as we can build
  confidence intervals for which the width  $\fucb\sups{t}(\pi\sups{t}) - \flcb\sups{t}(\pi\sups{t})$
  shrinks, the regret for the \ucbshort strategy will be small. To
  construct such intervals, here we appeal to Hoeffding's inequality for
  adaptive stopping times
  (\pref{lem:hoeffding_adaptive}).\footnote{While asymptotic
    confidence intervals in classical statistics arise from limit
    theorems, we are interested in valid \emph{non-asymptotic}
    intervals, and thus appeal to concentration inequalities.} As
  long as $r\ind{t}\in\brk{0,1}$, a union bound gives
  that with probability at least $1-\delta$, for all $t\in\brk{T}$ and $\pi\in\Pi$,
  \begin{equation}
    \label{eq:ucb_good_event}
\abs{\fhat\ind{t}(\pi)-\fstar(\pi)}\leq\sqrt{\frac{2\log
          (2T^2A/\delta)}{n\sups{t}(\pi)}},
    \end{equation}
    where we recall that $\fhat\ind{t}$ is the sample mean and $n\ind{t}(\pi)\ldef{}\sum_{i<t}\indic{\pi\ind{i}=\pi}$. This suggests that by choosing
    \begin{align}
      \label{eq:ucb_confidence_bound}
      \fucb\ind{t}(\pi) = \fhat\ind{t}(\pi) + \sqrt{\frac{2\log
      (2T^2A/\delta)}{n\sups{t}(\pi)}}
      ,\mathand
            \flcb\ind{t}(\pi) = \fhat\ind{t}(\pi) - \sqrt{\frac{2\log
      (2T^2A/\delta)}{n\sups{t}(\pi)}},
    \end{align}
    we obtain a valid confidence interval. With this choice---along
    with \pref{lem:regret_optimistic}---we are in
    a favorable position, because for a given round $t$, one of two
    things must happen:
    \begin{itemize}
    \item The optimistic action has high reward, so the instantaneous
      regret is small.
    \item The instantaneous regret is large, which by
      \pref{lem:regret_optimistic} implies that confidence width is
      large as well (and $n\ind{t}(\pi\ind{t})$ is small). This can only
      happen a small number of times, since $n\ind{t}(\pi\ind{t})$
      will increase as a result, causing the width to shrink.
    \end{itemize}
Using this idea, we can prove the following regret bound.
\begin{prop}
  \label{prop:ucb}
  Using the confidence bounds in \pref{eq:ucb_confidence_bound}, the
  \ucbshort algorithm ensures that with probability at least $1-\delta$,
  \begin{align*}
    \Reg \approxleq \sqrt{AT\log(AT/\delta)}.
  \end{align*}
\end{prop}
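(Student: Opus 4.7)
The plan is to combine the generic optimism lemma (\pref{lem:regret_optimistic}) with the Hoeffding-based confidence interval \eqref{eq:ucb_good_event}, then bound the sum of confidence widths via a standard pigeonhole/Cauchy--Schwarz argument.

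First I would condition on the good event $\cE$ that \eqref{eq:ucb_good_event} holds simultaneously for all $t\in[T]$ and $\pi\in\Pi$; by the union bound argument already sketched before \eqref{eq:ucb_confidence_bound}, this event has probability at least $1-\delta$. On $\cE$, the intervals $[\flcb\ind{t}(\pi),\fucb\ind{t}(\pi)]$ defined in \eqref{eq:ucb_confidence_bound} contain $\fstar(\pi)$ for every $\pi$ and $t$, so \pref{lem:regret_optimistic} applies and yields the per-round bound
\[
\fstar(\pistar)-\fstar(\pi\ind{t}) \leq \fucb\ind{t}(\pi\ind{t})-\flcb\ind{t}(\pi\ind{t}) = 2\sqrt{\frac{2\log(2T^2A/\delta)}{n\ind{t}(\pi\ind{t})}},
\]
with the convention that the bound is trivially $1$ (the reward range) whenever $n\ind{t}(\pi\ind{t})=0$.

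The main work, and the only real step beyond plugging the pieces together, is to control $\sum_{t=1}^{T} 1/\sqrt{n\ind{t}(\pi\ind{t})}$. I would separate out, for each arm $\pi$, the single round on which it is pulled for the first time (these rounds contribute at most $1$ each to regret, for a total of at most $A$), and then reindex the remaining rounds by the pull count. Writing $N(\pi)$ for the total number of pulls of $\pi$ over $T$ rounds, this gives
\[
\sum_{t=1}^{T}\frac{1}{\sqrt{n\ind{t}(\pi\ind{t})}} \leq A + \sum_{\pi\in\Pi}\sum_{k=1}^{N(\pi)-1}\frac{1}{\sqrt{k}} \leq A + 2\sum_{\pi\in\Pi}\sqrt{N(\pi)},
\]
using $\sum_{k=1}^{n}k^{-1/2}\leq 2\sqrt{n}$. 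By Cauchy--Schwarz and $\sum_{\pi}N(\pi)=T$, we get $\sum_{\pi}\sqrt{N(\pi)}\leq \sqrt{AT}$.

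Putting the pieces together, on $\cE$,
\[
\Reg \leq A + 2\sqrt{2\log(2T^2A/\delta)}\cdot 2\sqrt{AT} \approxleq \sqrt{AT\log(AT/\delta)},
\]
which is the claimed bound. The only real obstacle is bookkeeping around the $n\ind{t}(\pi)=0$ case; everything else is a direct application of \pref{lem:regret_optimistic} plus the standard $\sum 1/\sqrt{k}$ potential argument, which is the canonical way \ucbshort-style algorithms convert per-step confidence widths into $\sqrt{T}$-type cumulative regret.
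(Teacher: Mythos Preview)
Your proposal is correct and essentially identical to the paper's proof: condition on the good event, apply \pref{lem:regret_optimistic}, then bound $\sum_t 1/\sqrt{n\ind{t}(\pi\ind{t})}$ by grouping per arm, using $\sum_{k\leq n}k^{-1/2}\leq 2\sqrt{n}$, and then Cauchy--Schwarz/Jensen to get $\sum_\pi\sqrt{N(\pi)}\leq\sqrt{AT}$. The paper packages the potential argument as a separate lemma and writes the $n\ind{t}=0$ case via $\wedge 1$ rather than separating first pulls, but the content is the same.
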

This result is optimal up to the $\log(AT)$ factor, which can be
removed by using the same algorithm with a slightly more sophisticated
confidence interval construction \citep{audibert2009minimax}. Note that compared
to the statistical learning and online learning setting, where we were
able to attain regret bounds that scaled logarithmically with the size
of the benchmark class, here the
optimal regret scales \emph{linearly} with
$\abs{\Pi}=A$. This is the price we pay for partial/bandit feedback,
and reflects that fact that we must explore all actions to learn.
\begin{proof}[\pfref{prop:ucb}]
Let us condition on the event in \pref{eq:ucb_good_event}. Whenever
this occurs, we have that
$\fstar(\pi)\in\brk*{\flcb\ind{t}(\pi),\fucb\ind{t}(\pi)}$ for all
$t\in\brk{T}$ and $\pi\in\Pi$, so the confidence intervals are
valid. As a result, \pref{lem:regret_optimistic}  bounds regret in
terms of the confidence width:
\begin{align}
  \sum_{t=1}^{T}\fstar(\pistar)-\fstar(\pi\ind{t})
  \leq{} \sum_{t=1}^{T}\fucb\ind{t}(\pi\ind{t}) -
  \flcb\ind{t}(\pi\ind{t})
  = \sum_{t=1}^{T}2\sqrt{\frac{2\log
          (2T^2A/\delta)}{n\sups{t}(\pi\ind{t})}}\wedge{}1;\label{eq:ucb_intermediate}
\end{align}
here, the ``$\wedge{}1$'' term appears because we can write off the
regret for early rounds where $n\ind{t}(\pi\ind{t})=0$ as $1$.

To bound the right-hand side, we use a potential argument. The basic
idea is that at every round, $n\ind{t}(\pi)$ must increase for some
action $\pi$, and since there are only $A$ actions, this means that
$1/\sqrt{n\ind{t}(\pi\ind{t})}$ can only be large for a small number
of rounds. This can be thought of as a quantitative instance of the
pigeonhole principle.%
\begin{lem}[Confidence width potential lemma]
  \label{lem:confidence_width_potential}
  We have
  \[
    \sum_{t=1}^T \frac{1}{\sqrt{n\sups{t}(\pi\sups{t})}}\wedge{}1
    \approxleq \sqrt{AT}.
  \]
\end{lem}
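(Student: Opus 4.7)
The plan is to reorganize the sum by arm and apply a straightforward analytic bound followed by Cauchy--Schwarz. For each arm $\pi \in \Pi$, let $N(\pi) \ldef \sum_{t=1}^{T}\indic{\pi\ind{t}=\pi}$ denote the total number of times $\pi$ is pulled over all $T$ rounds, so that $\sum_{\pi \in \Pi} N(\pi) = T$. The key observation is that if $\pi\ind{t} = \pi$ and this is the $k$-th pull of $\pi$, then $n\ind{t}(\pi\ind{t}) = k-1$, so regrouping the sum by arm gives
\[
\sum_{t=1}^T \frac{1}{\sqrt{n\sups{t}(\pi\sups{t})}}\wedge 1
\;=\; \sum_{\pi \in \Pi} \sum_{k=1}^{N(\pi)} \frac{1}{\sqrt{k-1}} \wedge 1.
\]

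Next I would bound the inner sum. The $k=1$ term contributes $1$ (via the wedge, since $n\ind{t}(\pi)=0$ on the first pull), and for $k \geq 2$ the standard integral comparison $\sum_{k=2}^{N} \frac{1}{\sqrt{k-1}} \leq \int_{0}^{N} \frac{dx}{\sqrt{x}} = 2\sqrt{N}$ applies. Thus
\[
\sum_{k=1}^{N(\pi)} \frac{1}{\sqrt{k-1}} \wedge 1 \;\leq\; 1 + 2\sqrt{N(\pi)}.
\]
Summing over arms yields $A + 2\sum_{\pi \in \Pi} \sqrt{N(\pi)}$.

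Finally, I would apply Cauchy--Schwarz to the square-root sum:
\[
\sum_{\pi \in \Pi} \sqrt{N(\pi)} \;\leq\; \sqrt{A \cdot \sum_{\pi \in \Pi} N(\pi)} \;=\; \sqrt{AT},
\]
giving the bound $A + 2\sqrt{AT} \approxleq \sqrt{AT}$ (the $A$ term is absorbed since otherwise $A > T$ and the regret bound in \pref{prop:ucb} is already vacuous). There is no real obstacle here; the only minor subtlety is handling the $k=1$ term correctly via the $\wedge 1$ truncation, which is exactly why that truncation was carried through from \eqref{eq:ucb_intermediate}. This is a prototypical example of a ``pigeonhole'' potential argument: since the counts $n\ind{t}(\pi)$ must collectively grow at rate $T$ but are spread across only $A$ arms, their inverse square roots cannot be large too often.
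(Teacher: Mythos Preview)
Your proof is correct and essentially identical to the paper's: both regroup the sum by arm, bound the per-arm harmonic-type sum by $1 + 2\sqrt{N(\pi)}$, and then collapse $\sum_\pi \sqrt{N(\pi)} \leq \sqrt{AT}$ (the paper phrases this last step as Jensen's inequality for the concave square root, which is the same inequality as your Cauchy--Schwarz application). The handling of the $\wedge 1$ truncation and the lower-order $A$ term also matches.
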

\begin{proof}[\pfref{lem:confidence_width_potential}]
  We begin by writing.
  \begin{align}
    \sum_{t=1}^T \frac{1}{\sqrt{n\sups{t}(\pi\sups{t})}}\wedge{}1  =  \sum_{\pi} \sum_{t=1}^T\frac{\indic{\pi\sups{t}=\pi}}{\sqrt{n\sups{t}(\pi)}}\wedge{}1 =\sum_{\pi} \sum_{t=1}^{n\sups{T+1}(\pi)}\frac{1}{\sqrt{t-1}}\wedge{}1. %
  \end{align}
  For any $n\in\bbN$, we have
  $\sum_{t=1}^{n}\frac{1}{\sqrt{t-1}}\wedge{}1\leq{} 1 + 2\sqrt{n}$, which allows us
  to bound by
  \[
    A + 2 \sum_{\pi} \sqrt{n\sups{T}(\pi)}.
  \]
The factor of $A$ above is a lower-order term (recall that we have $A\leq\sqrt{AT}$ whenever $A\leq{}T$, and if
$A>T$ the regret bound we are proving is vacuous). To bound the second
term, using Jensen's inequality, we have
  \begin{align*}
    \sum_{\pi} \sqrt{n\sups{T}(\pi)}
    \leq{} A\sqrt{ \sum_{\pi}\frac{n\sups{T}(\pi)}{A}}
    = A\sqrt{T/A} = \sqrt{AT}.
  \end{align*}
\end{proof}
The main regret bound now follows from
\pref{lem:confidence_width_potential} and \cref{eq:ucb_intermediate}.

\end{proof}

To summarize, the key steps in the proof of \pref{prop:ucb} were to:
\begin{enumerate}
\item Use the optimistic property and validity of the confidence
  bounds to bound regret by the sum of confidence widths.
\item Use a potential argument to show that the sum of confidence
  widths is small.
\end{enumerate}
We will revisit and generalize both ideas in subsequent chapters for
more sophisticated settings, including contextual bandits, structured
bandits, and reinforcement learning.

\begin{rem}[Instance-dependent regret for UCB]
  The $\bigoht(\sqrt{AT})$ regret bound attained by \ucbshort holds
  uniformly for all models, and is (nearly) minimax-optimal, in the sense that for any algorithm, there
exists a model $\Mstar$ for which the regret must scale as
$\bigom(\sqrt{AT})$. Minimax optimality is a useful notion of
performance, but may be overly pessimistic. As an alternative, it is
possible to show that the UCB attains what is known as an \emph{instance-dependent} regret bound, which adapts to the
underlying reward function, and can be smaller for ``nice'' problem
instances.

Let $\Delta(\pi)\ldef\fstar(\pistar)-\fstar(\pi)$ be
the \emph{suboptimality gap} for decision $\pi$. Then, when
$\fstar(\pi)\in\brk{0,1}$, \ucbshort can be shown to achieve
\[
  \Reg
  \approxleq{} \sum_{\pi:\Delta(\pi)>0} \frac{\log(AT/\delta)}{\Delta(\pi)}.
\]
If we keep the underlying model fixed and take $T\to\infty$, this
regret bound scales only \emph{logarithmically} in $T$, which improves
upon the $\sqrt{T}$-scaling of the minimax regret bound.

\end{rem}

\subsection{Bayesian Bandits and the Posterior Sampling
  Algorithm\bonus}
\label{sec:posterior}
\newcommand{\RegBayes}{\Reg_{\mathsf{Bayes}}}

Up to this point, we have been designing and analyzing algorithms from
a \emph{frequentist} viewpoint, in which we aim to minimize regret for
a \emph{worst-case} choice of the underlying model $\Mstar$. An
alterative is to adopt a \emph{Bayesian} viewpoint, and assume that
the underlying model is drawn from a known \emph{prior}
$\mu\in\Delta(\cM)$.\footnote{It is important that $\mu$ is known,
  otherwise this is no different from the frequentist setting.} In this case, rather than worst-case performance,
we will be concerned with average regret under the prior, defined via
\[
  \RegBayes(\mu)
  \ldef{} \En_{\Mstar\sim\mu}\En\sups{\Mstar}\brk*{\RegBasic},
\]
where $\En\sups{\Mstar}\brk*{\cdot}$ denotes the algorithm's expected regret when
$\Mstar$ is the underlying reward distribution.

Working in the Bayesian setting opens up additional avenues for
designing algorithms, because we can take advantage of our knowledge
of the prior to compute quantities of interest that are not available
in the frequentist setting, such as posterior distribution over
$\pistar$ after observing the dataset $\cH\ind{t-1}$. The most basic and well-known strategy
here is \emph{posterior sampling} (also known as Thompson sampling or
probability matching) \citep{thompson1933likelihood,russo2014learning}. 
\begin{whiteblueframe}
\begin{algorithmic}
\State \textsf{Posterior Sampling}
\For{$t=1,\ldots,T$}
\State Set $p\ind{t}(\pi)=\bbP\prn*{\pistar=\pi\mid{}\hist\ind{t-1}}$,
where $\hist\ind{t-1}=(\pi\ind{1},r\ind{1}),\ldots,(\pi\ind{t-1},r\ind{t-1})$.
\State Sample $\pi\ind{t}\sim{}p\ind{t}$ and observe $r\ind{t}$.
\EndFor{}
\end{algorithmic}
\end{whiteblueframe}
The basic idea is as follows. At each time $t$, we can use our
knowledge of the prior to compute the distribution 
$\bbP\prn*{\pistar=\cdot\mid{}\hist\ind{t-1}}$, which represents the posterior
distribution over $\pistar$ given all of the data we have collected
from rounds $1,\ldots,t-1$. The posterior sampling algorithm simply
samples the learner's action $\pi\ind{t}$ from this distribution,
thereby ``matching'' the posterior distribution of $\pistar$.

\begin{prop}
  \label{prop:posterior_mab}
  For any prior $\mu$, the posterior sampling algorithm ensures that
  \begin{align}
\RegBayes(\mu) \leq \sqrt{AT\log(A)}.
  \end{align}
\end{prop}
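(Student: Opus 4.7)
The plan is to prove \pref{prop:posterior_mab} via the information-theoretic analysis of Russo and Van Roy, which is based on the notion of the \emph{information ratio}. Let $\En_t\brk{\cdot} = \En\brk{\cdot\mid\hist\ind{t-1}}$ and let $I_t(X;Y)$ denote the mutual information conditional on $\hist\ind{t-1}$. Define, for each round $t$,
\begin{align*}
\Delta_t \ldef \En_t\brk*{\fstar(\pistar) - \fstar(\pi\ind{t})}, \qquad g_t \ldef I_t\prn*{\pistar\,;\,(\pi\ind{t}, r\ind{t})},
\end{align*}
and the per-round information ratio $\Gamma_t \ldef \Delta_t^2/g_t$. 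The heart of the argument is the pair of inequalities
\begin{align*}
\Gamma_t \leq A/2 \text{ almost surely}, \qquad \En\textstyle\sum_{t=1}^T g_t \leq H(\pistar) \leq \log A,
\end{align*}
from which Cauchy--Schwarz gives
\begin{align*}
\RegBayes(\mu) = \En\sum_{t=1}^T \Delta_t \leq \sqrt{T\cdot \En\sum_{t=1}^T \Gamma_t g_t} \leq \sqrt{T\cdot (A/2) \cdot \log A}.
\end{align*}

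The first step I would carry out is to exploit the defining property of posterior sampling: conditional on $\hist\ind{t-1}$, the laws of $\pi\ind{t}$ and $\pistar$ coincide, both equal to $p\ind{t}$. This is what makes the whole analysis go through. In particular, if we write $\fstar_\pi \ldef \fstar(\pi)$ and let $\En_t\brk{\fstar_\pi\mid \pistar=\pi'}$ denote the posterior mean of $\fstar_\pi$ given $\pistar=\pi'$, then the ``matching'' property lets us rewrite
\begin{align*}
\Delta_t = \sum_{\pi\in\Pi} p\ind{t}(\pi)\prn*{\En_t\brk{\fstar_\pi \mid \pistar=\pi} - \En_t\brk{\fstar_\pi}}.
\end{align*}

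The main technical step is bounding $\Gamma_t \leq A/2$. Here I would apply Cauchy--Schwarz to the sum above to obtain $\Delta_t^2 \leq A \sum_\pi p\ind{t}(\pi)^2\prn{\En_t\brk{\fstar_\pi \mid \pistar=\pi} - \En_t\brk{\fstar_\pi}}^2$, then lower bound $g_t$ by relating it, via Pinsker's inequality, to the squared total variation distance between the posterior of $r\ind{t}$ given $\pistar=\pi$ and its unconditional posterior. Since rewards lie in $[0,1]$, the difference of means is bounded by total variation, and after a short calculation this yields $g_t \geq 2\sum_\pi p\ind{t}(\pi)^2 \prn{\En_t\brk{\fstar_\pi\mid \pistar=\pi} - \En_t\brk{\fstar_\pi}}^2$, giving the desired ratio bound. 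This information-ratio computation is the one step I expect to require the most care.

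The final step is to bound $\En\sum_t g_t$ using the chain rule for mutual information: $\sum_{t=1}^T g_t = I(\pistar\,;\,\hist\ind{T}) \leq H(\pistar) \leq \log A$, where the last inequality uses that $\pistar$ takes at most $A$ values. Combining this with the information-ratio bound through Cauchy--Schwarz, as above, yields the claimed $\sqrt{AT\log A}$ bound on Bayesian regret.
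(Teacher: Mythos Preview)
Your approach is correct and is essentially the same as the paper's: both implement the Russo--Van Roy information-theoretic analysis, bounding the per-round regret via Cauchy--Schwarz in terms of an information-gain quantity (the KL between the posterior of $r\ind{t}$ given $\pistar$ and its marginal), then telescoping to $H(\pistar)\leq\log A$. The paper does not use the ``information ratio'' language; instead it packages the Cauchy--Schwarz step as a \emph{decoupling lemma} of the form $\En_{\pi\sim p\ind{t}}[\fbar^t_\pi(\pi)-\fbar^t(\pi)]\leq\sqrt{A\cdot\En_{\pi,\pistar\sim p\ind{t}}[(\fbar^t_{\pistar}(\pi)-\fbar^t(\pi))^2]}$, then applies Pinsker to the full off-diagonal quantity rather than just the diagonal $\sum_\pi p\ind{t}(\pi)^2(\cdots)^2$ that you use. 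Your version recovers the slightly sharper constant $A/2$; the paper's yields $A$, which still suffices for the stated bound.
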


\newcommand{\Rho}{\mathrm{P}}%
\newcommand{\rhostar}{\rho^{\star}}%
  \newcommand{\fmbart}{f^{\sss{\Mbar\ind{t}}}}%
  \newcommand{\pt}{p\ind{t}}%
  \newcommand{\Mbartrho}{\Mbar\ind{t}_{\rho}}%
  \newcommand{\Mbartrhostar}{\Mbar\ind{t}_{\rhostar}}%

In what follows, we prove a simplified version of
\cref{prop:posterior_mab}; the full proof is given in \cref{sec:mab_deferred}. 
  
  \begin{proof}[\pfref{prop:posterior_mab} (simplified version)]%
    We will make the following simplified assumptions:
    \begin{itemize}
    \item We restrict to reward distributions where
      $\Mstar(\cdot\mid{}\pi)=\cN(\fstar(\pi),1)$. That is, $\fstar$
      is the only part of the reward distribution that is unknown.
    \item $\fstar$ belongs to a known class $\cF$, and rather than proving
      the regret bound in \pref{prop:posterior_mab}, we will prove a
      bound of the form
      \[
        \RegBayes(\mu) \approxleq \sqrt{AT\log\abs{\cF}},
      \]
      which
      replaces the $\log{}A$ factor in the proposition with
      $\log\abs{\cF}$.
    \end{itemize}
    Since the mean reward function $\fstar$ is the only part of the
    reward distribution $\Mstar$ that is unknown, we can simplify by
    considering an equivalent formulation where the prior has the form $\mu\in\Delta(\cF)$. That is, we
    have a prior over $\fstar$ rather than $\Mstar$.

    Before proceeding, let us introduce some notation. The process through which we sample $\fstar\sim\mu$ and the run the
bandit algorithm induces a joint law  over $(\fstar, \cH\ind{T})$,
which we call $\bbP$. Throughout the proof, we use $\En\brk*{\cdot}$
to denote the expectation under this law. We also define $\En_t\brk*{\cdot}=\En\brk*{\cdot\mid{}\hist\ind{t}}$ and
$\bbP_t\brk{\cdot}=\bbP\brk{\cdot\mid\hist\ind{t}}$.

We begin by using
the law of total expectation to express the expected regret as
\[
\RegBayes(\mu) = 
\En\brk*{\sum_{t=1}^{T}\En_{t-1}\brk*{\fstar(\pi_{\fstar}) - \fstar(\pi\ind{t})}}.
\]
Above, we have written $\pistar=\pi_{\fstar}$ to make explicit the fact that
this is a random variable whose value is a function of $\fstar$.

We first simplify the expected regret for each step $t$. Let
$\mu\ind{t}(f)\ldef{}\bbP(\fstar=f\mid{}\cH\ind{t-1})$ be the
posterior distribution at timestep $t$. The learner's decision
$\pi\ind{t}$ is \emph{conditionally independent} of $\fstar$ given
$\cH\ind{t-1}$, so we can write\ \begin{align*}
  \En_{t-1}\brk*{\fstar(\pi_{\fstar}) - \fstar(\pi\ind{t})}
  &=
    \En_{\fstar\sim\mu\ind{t},\pi\ind{t}\sim{}p\ind{t}}\brk*{\fstar(\pi_{\fstar})
    - \fstar(\pi\ind{t})}.
\end{align*}
If we define
$\fbar\ind{t}(\pi)=\En_{\fstar\sim\mu\ind{t}}\brk*{\fstar(\pi)}$ as
the expected reward function under the posterior, we can further write
this as
\begin{align*}
      \En_{\fstar\sim\mu\ind{t},\pi\ind{t}\sim{}p\ind{t}}\brk*{\fstar(\pi_{\fstar})
    - \fbar\ind{t}(\pi\ind{t})}.
\end{align*}
By the design of the posterior sampling algorithm,
$\pi\ind{t}\sim{}p\ind{t}$ is identical in distribution to
$\pi_{\fstar}$ under $\fstar\sim{}\mu\ind{t}$, so this is equal to 
\begin{align*}
      \En_{\fstar\sim\mu\ind{t}}\brk*{\fstar(\pi_{\fstar})
    - \fbar\ind{t}(\pi_{\fstar})}.
\end{align*}
This quantity captures---on average---how far a given realization of $\fstar$
deviates from the posterior mean $\fbar\ind{t}$, for a specific
decision $\pi_{\fstar}$ which is coupled to $\fstar$. The expression
above might appear to be unrelated to the learner's decision
distribution, but the next lemma
shows that it is possible to relate this quantity back to the learner's
decision distribution using a notion of
\emph{information gain} (or, estimation error).
  \begin{lem}[Decoupling]
    \label{lem:mab_decoupling_basic}
For any function $\fbar:\Pi\to\bbR$, it holds that 
    \begin{equation}
      \label{eq:mab_decoupling_basic}
      \En_{\fstar\sim\mu\ind{t}}\brk*{\fstar(\pi_{\fstar})-\fbar(\pi_{\fstar})}
      \leq \sqrt{A\cdot\En_{\fstar\sim\mu\ind{t}}\En_{\pi\ind{t}\sim{}p\ind{t}}\brk*{(\fstar(\pi\ind{t})-\fbar(\pi\ind{t}))^2}}.
    \end{equation}
  \end{lem}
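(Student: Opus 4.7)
The plan is to exploit the defining property of posterior sampling: given $\hist\ind{t-1}$, the decision $\pi\ind{t}$ is drawn from the law $p\ind{t}$ \emph{independently} of $\fstar\sim\mu\ind{t}$, yet the marginal of $\pi\ind{t}$ is identical to the marginal of $\pi_{\fstar}$ under $\fstar\sim\mu\ind{t}$. Writing $\rho(\pi)\ldef p\ind{t}(\pi)=\bbP(\pi_{\fstar}=\pi\mid\hist\ind{t-1})$, I would first decompose the left-hand side by conditioning on the value of $\pi_{\fstar}$:
\[
\En_{\fstar\sim\mu\ind{t}}\brk*{\fstar(\pi_{\fstar})-\fbar(\pi_{\fstar})} \;=\; \sum_{\pi\in\Pi}\rho(\pi)\,g(\pi),\qquad g(\pi)\ldef \En_{\fstar\sim\mu\ind{t}}\brk*{\fstar(\pi)-\fbar(\pi)\,\middle|\,\pi_{\fstar}=\pi}.
\]

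Next I would apply Cauchy--Schwarz across the sum of $A$ terms --- this is the ``pigeonhole'' step that produces the factor $\sqrt{A}$:
\[
\sum_{\pi}\rho(\pi)\,g(\pi) \;=\; \sum_{\pi} 1\cdot \rho(\pi)\,g(\pi) \;\leq\; \sqrt{A}\cdot\sqrt{\sum_{\pi}\rho(\pi)^2\,g(\pi)^2}.
\]
A conditional Jensen inequality bounds $g(\pi)^2\leq \En_{\fstar\sim\mu\ind{t}}\brk*{(\fstar(\pi)-\fbar(\pi))^2\mid \pi_{\fstar}=\pi}$. The key algebraic manipulation is then to absorb one factor of $\rho(\pi)$ into the conditioning event as an indicator: since $\rho(\pi)=\bbP(\pi_{\fstar}=\pi\mid\hist\ind{t-1})$,
\[
\rho(\pi)^2\,g(\pi)^2 \;\leq\; \rho(\pi)\cdot \En_{\fstar\sim\mu\ind{t}}\brk*{\indic{\pi_{\fstar}=\pi}(\fstar(\pi)-\fbar(\pi))^2} \;\leq\; \rho(\pi)\cdot \En_{\fstar\sim\mu\ind{t}}\brk*{(\fstar(\pi)-\fbar(\pi))^2},
\]
where the last step just drops the indicator (which is at most $1$ and the integrand is nonnegative). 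Summing over $\pi$ and using the \emph{independence} of $\pi\ind{t}\sim p\ind{t}$ from $\fstar\sim\mu\ind{t}$ given $\hist\ind{t-1}$, the resulting quantity is exactly $\En_{\fstar\sim\mu\ind{t}}\En_{\pi\ind{t}\sim p\ind{t}}\brk*{(\fstar(\pi\ind{t})-\fbar(\pi\ind{t}))^2}$, which combined with the Cauchy--Schwarz bound above yields the claim.

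The main obstacle is conceptual rather than computational: one has to recognize that the first Cauchy--Schwarz step leaves a factor of $\rho(\pi)^2$, and that the ``extra'' $\rho(\pi)$ is precisely what converts the conditional expectation given $\pi_{\fstar}=\pi$ into an \emph{unconditional} one through the identity $\rho(\pi)\En[X\mid\pi_{\fstar}=\pi]=\En[\indic{\pi_{\fstar}=\pi}X]$. This is the step that ``decouples'' $\fstar$ and $\pi\ind{t}$ --- the RHS of the lemma involves an $\fstar$ and a $\pi\ind{t}$ that are independent under posterior sampling, whereas the LHS couples them through $\pi_{\fstar}$ --- and is where the name of the lemma comes from.
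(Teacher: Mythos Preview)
Your proof is correct and essentially the same as the paper's: both arguments are a single Cauchy--Schwarz step producing the factor $A$, followed by the same ``drop the indicator $\indic{\pi_{\fstar}=\pi}$'' decoupling. The only cosmetic difference is the Cauchy--Schwarz factorization --- the paper multiplies and divides by $p\ind{t}(\pi_{\fstar})^{1/2}$ inside $\En_{\fstar\sim\mu\ind{t}}$ (so that the first factor is $\En_{\fstar}\brk*{1/p\ind{t}(\pi_{\fstar})}=A$), whereas you condition on $\pi_{\fstar}=\pi$ first and apply Cauchy--Schwarz over the $A$-term sum; the intermediate bounds coincide.
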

  \begin{proof}[\pfref{lem:mab_decoupling_basic}]
    We will show a more general result. Namely, for any
    $\nu\in\Delta(\cF)$ and $\fbar:\Pi\to\bbR$, if we define
    $p(\pi)=\bbP_{f\sim\nu}\prn*{\pi_f=\pi}$, then
    \begin{equation}
      \label{eq:decoupling_general}
      \En_{f\sim\nu}\brk*{f(\pi_{f})-\fbar(\pi_{f})}
      \leq \sqrt{A\cdot\En_{f\sim\nu}\En_{\pi\sim{}p}\brk*{(f(\pi)-\fbar(\pi))^2}}.
    \end{equation}
    This can be thought of as a ``decoupling'' lemma. On the \lhs, the
    random variables $f$ and $\pi_f$ are coupled, but on the \rhs,
    $\pi$ is drawn from the \emph{marginal distribution} over $\pi_f$,
    independent of the draw of $f$ itself.

    To prove the result, we use Cauchy-Schwarz as follows:
    \begin{align*}
      \En_{f\sim\nu}\brk*{f(\pi_{f})-\fbar(\pi_{f})}
      &=
        \En_{f\sim\nu}\brk*{\frac{p^{1/2}(\pi_f)}{p^{1/2}(\pi_f)}\prn*{f(\pi_{f})-\fbar(\pi_{f})}} \\
        &\leq{}
                \prn*{\En_{f\sim\nu}\brk*{\frac{1}{p(\pi_f)}}}^{1/2}
                \cdot\prn*{\En_{f\sim\nu}\brk*{p(\pi_f)\prn*{f(\pi_{f})-\fbar(\pi_{f})}^2}
                }^{1/2}.
    \end{align*}
    For the first term, we have
    \[
      \En_{f\sim\nu}\brk*{\frac{1}{p(\pi_f)}}
      = \sum_{f}\frac{\nu(f)}{p(\pi_f)}
      = \sum_{\pi}\sum_{f:\pi_f=\pi}\frac{\nu(f)}{p(\pi)}
            = \sum_{\pi}\frac{p(\pi)}{p(\pi)} = A.
          \]
          For the second term, we have
          \begin{align*}
            \En_{f\sim\nu}\brk*{p(\pi_f)\prn*{f(\pi_{f})-\fbar(\pi_{f})}^2}
            \leq{}\En_{f\sim\nu}\brk*{\sum_{\pi}p(\pi)\prn*{f(\pi)-\fbar(\pi)}^2}
            = \En_{f\sim\nu}\En_{\pi\sim{}p}\brk*{(f(\pi)-\fbar(\pi))^2}.
          \end{align*}
Putting these bounds together yields \pref{eq:decoupling_general}.    
\end{proof}
Using \pref{lem:mab_decoupling_basic}, we have that
\begin{align*}
  \En\brk*{\Reg}
  &\leq{} \En\brk*{
  \sum_{t=1}^{T}
  \sqrt{A\cdot\En_{\fstar\sim\mu\ind{t}}\En_{\pi\ind{t}\sim{}p\ind{t}}\brk*{(\fstar(\pi\ind{t})-\fbar\ind{t}(\pi\ind{t}))^2}}
    } \\
  &\leq{} 
    \sqrt{AT\cdot \En\brk*{\sum_{t=1}^{T}\En_{\fstar\sim\mu\ind{t}}\En_{\pi\ind{t}\sim{}p\ind{t}}\brk*{(\fstar(\pi\ind{t})-\fbar\ind{t}(\pi\ind{t}))^2}}
    }.
\end{align*}
To finish up we will show that
$\sum_{t=1}^{T}\En_{\fstar\sim\mu\ind{t}}\En_{\pi\ind{t}\sim{}p\ind{t}}\brk*{(\fstar(\pi\ind{t})-\fbar\ind{t}(\pi\ind{t}))^2}\leq\log\abs{\cF}$. To
do this, we need some additional information-theoretic tools.
\begin{itemize}
\item For a random variable $X$ with distribution $\bbP$, $\Ent(X)\equiv\Ent(\bbP)\ldef{}\sum_{x}p(x)\log(1/p(x))$.
\item For random variables $X$ and $Y$,
  $\Ent(X\mid{}Y=y)\ldef{}\Ent(\bbP_{X\mid{}Y=y})$ and $\Ent(X\mid{}Y)\ldef\En_{y\sim{}p_Y}\brk*{\Ent(X\mid{}Y=y)}$.
\item For distributions $\bbP$ and $\bbQ$, $\Dkl{\bbP}{\bbQ}=\sum_{x}p(x)\log(p(x)/q(x))$.
\end{itemize}
\newcommand{\bpi}{\mb{\pi}}
\newcommand{\bpistar}{\mb{\pistar}}
\newcommand{\bfstar}{\mb{\fstar}}
\newcommand{\bcH}{\mb{\cH}}
\newcommand{\br}{\mb{r}}
To keep notation as clear as possible going forward, let us use boldface script ($\bpi\ind{t}$,
$\bpistar$, $\bfstar$, $\bcH\ind{t}$) to refer to the abstract random
variables under consideration, and use non-boldface script
($\pi\ind{t}$, $\pistar$, $\fstar$, $\cH\ind{t}$)  to
refer to their realizations. Our aim will be to use the conditional
entropy $\Ent(\bfstar\mid{}\bcH\ind{t})$ as a potential function, and show that for each
$t$,
\begin{align}
  \label{eq:entropy_potential}
   \frac{1}{2}\En\brk*{\En_{\fstar\sim\mu\ind{t}}\En_{\pi\ind{t}\sim{}p\ind{t}}\brk*{(\fstar(\pi\ind{t})-\fbar\ind{t}(\pi\ind{t}))^2}}
  =\Ent(\bfstar\mid{}\bcH\ind{t-1}) - \Ent(\bfstar\mid{}\bcH\ind{t}).
\end{align}
From here the result will follow, because
\begin{align*}
  \frac{1}{2}\En\brk*{\sum_{t=1}^{T}\En_{\fstar\sim\mu\ind{t}}\En_{\pi\ind{t}\sim{}p\ind{t}}\brk*{(\fstar(\pi\ind{t})-\fbar\ind{t}(\pi\ind{t}))^2}}
  &= \sum_{t=1}^{T}\Ent(\bfstar\mid{}\bcH\ind{t-1}) -
  \Ent(\bfstar\mid{}\bcH\ind{t})\\
  &= \Ent(\bfstar\mid{}\bcH\ind{0}) - \Ent(\bfstar\mid{}\bcH\ind{T}) \\
  &\leq{}  \Ent(\bfstar\mid{}\bcH\ind{0}) \\
  &\leq \log\abs{\cF},
\end{align*}
where the last inequality follows because the entropy of a random
variable $X$ over a set $\cX$ is always bounded by $\log\abs{\cX}$.

We proceed to prove \pref{eq:entropy_potential}.
To begin, we use \pref{lem:pinsker_subgaussian}, which implies that
\[
\frac{1}{2}(\fstar(\pi\ind{t})-\fbar\ind{t}(\pi\ind{t}))^2 \leq{} \Dkl{\bbP_{\br\ind{t}\mid{}\fstar,\pi\ind{t},\cH\ind{t-1}}}{
  \bbP_{\br\ind{t}\mid{}\pi\ind{t},\cH\ind{t-1}}}.
\]
and
\[
  \frac{1}{2}\En_{\fstar\sim\mu\ind{t}}\En_{\pi\ind{t}\sim{}p\ind{t}}\brk*{(\fstar(\pi\ind{t})-\fbar\ind{t}(\pi\ind{t}))^2}
  = \En_{\fstar\sim\mu\ind{t}}\En_{\pi\ind{t}\sim{}p\ind{t}}\brk*{
\Dkl{\bbP_{\br\ind{t}\mid{}\fstar,\pi\ind{t},\cH\ind{t-1}}}{
  \bbP_{\br\ind{t}\mid{}\pi\ind{t},\cH\ind{t-1}}}
}
\]
  Since KL divergence satisfies
  $\En_{x\sim{}\bbP_X}\brk*{\Dkl{\bbP_{Y\mid{}X=x}}{\bbP_Y}}=\En_{y\sim{}\bbP_Y}\brk*{\Dkl{\bbP_{X\mid{}Y=y}}{\bbP_X}}$,
  this is equal to
  \begin{align}
    \En_{t-1}\brk*{
    \Dkl{\bbP_{\bfstar\mid{}\pi\ind{t},r\ind{t},\cH\ind{t-1}}}{\bbP_{\bfstar\mid{}\cH\ind{t-1}}}
    } =
    \En_{t-1}\brk*{
    \Dkl{\bbP_{\bfstar\mid{}\cH\ind{t}}}{\bbP_{\bfstar\mid{}\cH\ind{t-1}}}
    }.\label{eq:mab_info_gain_basic}
  \end{align}
  Taking the expectation over $\cH\ind{t-1}$, we can write this
    as
    \begin{align*}
     \En\brk*{ \En_{t-1}\brk*{
    \Dkl{\bbP_{\bfstar\mid{}\cH\ind{t}}}{\bbP_{\bfstar\mid{}\cH\ind{t-1}}}
      }}
      = 
      \En_{\cH\ind{t-1}}\En_{\cH\ind{t}\mid\cH\ind{t-1}}\brk*{
    \Dkl{\bbP_{\bfstar\mid{}\cH\ind{t}}}{\bbP_{\bfstar\mid{}\cH\ind{t-1}}}
    }.
    \end{align*}
    A simple exercise shows that for random variables $X,Y,Z$,
    \begin{align*}
      \En_{(x,y)\sim{}\bbP_{X,Y}}\brk*{
      \Dkl{\bbP_{Z\mid{}X=x,Y=y}}{\bbP_{Z\mid{}X=x}
      }}
      = \Ent(Z\mid{}X) - \Ent(Z\mid{}X,Y).
    \end{align*}
    Applying this result above (and using that
    $\cH\ind{t-1}\subset\cH\ind{t}$) gives
    \begin{align*}
      \En_{\cH\ind{t-1}}\En_{\cH\ind{t}\mid\cH\ind{t-1}}\brk*{
    \Dkl{\bbP_{\bfstar\mid{}\cH\ind{t}}}{\bbP_{\bfstar\mid{}\cH\ind{t-1}}}
    } = \Ent(\bfstar\mid\bcH\ind{t-1})-\Ent(\bfstar\mid{}\bcH\ind{t})
    \end{align*}
    as desired.
\end{proof}

The analysis above critically makes use of the fact that we are
concerned with Bayesian regret, and have access to the \emph{true
  prior}. One might hope that by choosing a sufficiently uninformative
prior, this approach might continue to work in the frequentist
setting. In fact, this indeed the case for bandits, though a different analysis
is required \cite{agrawal2012analysis,agrawal2013thompson}. However,
one can show (\pref{sec:structured,sec:general_dm}) that the Bayesian
analysis we have given here extends to significantly richer decision
making settings, while the frequentist counterpart is limited to
simple variants of the multi-armed bandit.

\begin{rem}[Equivalence of min-max frequentist regret and max-min
  Bayesian regret]
  Using the minimax theorem, it is possible to show that under
  appropriate technical conditions 
  \begin{align*}
    \min_{\mathsf{Alg}}\max_{\Mstar}\En^{\sss{\Mstar}, \mathsf{Alg}}\brk*{\Reg}
    =
    \max_{\mu\in\Delta(\cM)}\min_{\mathsf{Alg}}\En_{\Mstar\sim\mu}\En^{\sss{\Mstar},
    \mathsf{Alg}}\brk*{\Reg}.
  \end{align*}
  That is, if we take the worst-case value of the Bayesian regret over
  all possible choices of prior, this coincides with the minimax value
  of the frequentist regret.
\end{rem}

\subsection{Adversarial Bandits and the \expthree Algorithm\bonus}
\label{sec:adversarial_bandits}

We conclude this section with a brief introduction to the
multi-armed bandit problem with non-stochastic/adversarial rewards, which dispenses
with \pref{asm:stochastic_rewards_MAB}. In the context of
\pref{fig:axes}, the non-stochastic nature of rewards adds a new
``adversarial data'' dimension to the problem. As one might expect,
the solution we will present for non-stochastic bandits will leverage the
the online learning tools introduced in \cref{sec:ol}.

To simplify the presentation, suppose that the collection of rewards  $$\left\{ r\sups{t}(\pi)\in[0,1]: \pi\in [A], t\in[T]\right\}$$
for each action and time step is arbitrary and fixed ahead of the interaction by an oblivious adversary. Since we do not posit a stochastic model for rewards, we define regret as in \eqref{eq:regret_realized_rewards}.

The algorithm we present will build upon the exponential weights
algorithm studied in the context of online supervised learning in
\cref{sec:ol}. To make the connection as clear as possible, we make a temporary switch from rewards to losses, mapping $r\ind{t}$ to $1-r\ind{t}$, a transformation that does not change the problem itself.

Recall that $p\ind{t}$ denotes the randomization distribution for the learner at round $t$. As discussed in \pref{rem:online_linear_opt}, we can write expected regret as
\begin{align}
    \RegDM = \sum_{t=1}^T \inner{p\ind{t},\vloss\ind{t}} - \min_{\pi\in[A]} \sum_{t=1}^T \inner{\boldsymbol{e}_\pi,\vloss\ind{t}}
\end{align}
where $\vloss\ind{t}\in[0,1]^A$ is the vector of losses for each of the actions at time $t$.

Since only the loss (equivalently, reward) of the chosen action
$\pi\ind{t}\sim p\ind{t}$ is observed, we cannot directly appeal to
the exponential weights algorithm, which requires knowledge of the
full vector $\vloss\ind{t}$. To address this, we build an
\emph{unbiased estimate} of the vector  $\vloss\ind{t}$ from a single
real-valued observation $\vloss\ind{t}(\pi\ind{t})$. At first, this
might appear impossible, but it is straightforward to show that
\begin{align}
    \vlosstilde\sups{t}(\pi) = \frac{\vloss\sups{t}(\pi)}{p\sups{t}(\pi)}\times \indic{\pi\sups{t}=\pi} 
\end{align}
is an unbiased estimate for all $\pi\in[A]$, or 
in vector notation
\begin{align}
    \En_{\pi\sups{t}\sim p\sups{t}}\brk[\big]{\vlosstilde\sups{t}} = \vloss\sups{t}.
\end{align}
If we apply the exponential weights algorithm with the loss vectors
$\vlosstilde\sups{t}$, it can be shown to attain regret
\begin{align}
    \En\brk*{\RegDM} &= 
    \En\brk*{\sum_{t=1}^T \inner{p\sups{t}, \vloss\sups{t}}} - \min_{\pi} \sum_{t=1}^T \inner{e_\pi, \vloss\sups{t}} \\
    &= \En \brk*{\sum_{t=1}^T \inner{p\sups{t}, \vlosstilde\sups{t}}} - \min_{\pi} \En\brk*{\sum_{t=1}^T \inner{e_\pi, \vlosstilde\sups{t}}} \lesssim \sqrt{AT\log A}.
\end{align}
This algorithm is known as \emph{Exp3} (``Exponential Weights for
Exploration and Exploitation''). A full proof of this result is left as an exercise in \cref{sec:exercise_mab}.

\subsection{Deferred Proofs}
\label{sec:mab_deferred}

\begin{proof}[\pfref{prop:posterior_mab} (full version) ]%
Let $\En_t\brk*{\cdot}=\En\brk*{\cdot\mid{}\hist\ind{t}}$ and
$\bbP_t\brk{\cdot}=\bbP\brk{\cdot\mid\hist\ind{t}}$. We begin by using
the law of total expectation to express the expected regret as
\[
\RegBayes(\mu) = 
\En\brk*{\sum_{t=1}^{T}\En_{t-1}\brk*{\fstar(\pistar) - \fstar(\pi\ind{t})}}.
\]
Here and throughout the proof, $\En\brk*{\cdot}$ will denote the joint
expectation over both $\Mstar\sim\mu$ and over the sequence
$\hist\ind{T}=(\pi\ind{1},r\ind{1}),\ldots,(\pi\ind{T},r\ind{T})$ that
the algorithm
generates by interacting with $\Mstar$. 

We first simplify the (conditional) expected regret for each step $t$. Let
$\fbar\ind{t}(\pi)\ldef{}\En_{t-1}\brk*{\fstar(\pi)}$
denote the \emph{posterior mean reward} function at time $t$, which
should be thought of as the expected value of $\fstar$ given everything we have
learned so far. Next, let $\fbar\ind{t}_{\pi'}(\pi) =
\En_{t-1}\brk*{\fstar(\pi)\mid\pistar=\pi'}$, which is the expected
reward given everything we have learned so far, assuming that
$\pistar=\pi'$. We proceed to write the expression 
\[
\En_{t-1}\brk*{\fstar(\pistar) - \fstar(\pi\ind{t})}
\]
in terms of these quantities. For the learner's reward, we observe
that  $\fstar$ is conditionally independent of $\act\ind{t}$
given $\hist\ind{t-1}$, we have
\[
\En_{t-1}\brk*{\fstar(\act\ind{t})} = \En_{\pi\sim\pt}\brk{\fbar\ind{t}(\act)}.
\]
For the reward of the optimal action, we begin by writing
\begin{align*}
  \En_{t-1}\brk*{\fstar(\pistar)}
  &= \sum_{\pi\in\Pi}\bbP_{t-1}(\pistar=\pi)
    \En_{t-1}\brk*{\fstar(\pi)\mid\pistar=\pi} \\
  &= \sum_{\pi\in\Pi}\bbP_{t-1}(\pistar=\pi) \fbar\ind{t}_{\pi}(\pi)
  \\
    &= \En_{\pi\sim{}p\ind{t}}\brk*{\fbar\ind{t}_{\pi}(\pi)},
\end{align*}
where we have used that $p\ind{t}$ was chosen to match the posterior
distribution over $\pistar$. This establishes that
\[
  \En_{t-1}\brk*{\fstar(\pistar) - \fstar(\pi\ind{t})}
=   \En_{\pi\sim{}p\ind{t}}\brk*{\fbar\ind{t}_{\pi}(\pi) - \fbar\ind{t}(\pi)}.
\]
We now make use of the following decoupling-type inequality, which
follows from \cref{eq:decoupling_general}:
    \begin{equation}
      \label{eq:mab_decoupling}
      \En_{\pi\sim{}p\ind{t}}\brk*{\fbar\ind{t}_{\pi}(\pi) -
        \fbar\ind{t}(\pi)}
\leq \sqrt{A\cdot\En_{\pi,\pistar\sim{}p\ind{t}}\brk*{(\fbar\ind{t}_{\pistar}(\pi)-\fbar\ind{t}(\pi))^2}}.
    \end{equation}
\newcommand{\bpi}{\mb{\pi}}
\newcommand{\bpistar}{\mb{\pistar}}
\newcommand{\bfstar}{\mb{\fstar}}
\newcommand{\bcH}{\mb{\cH}}
\newcommand{\br}{\mb{r}}
To keep notation as clear as possible going forward, let us use boldface script ($\bpi\ind{t}$,
$\bpistar$, $\bfstar$, $\bcH\ind{t}$) to refer to the abstract random
variables under consideration, and use non-boldface script
($\pi\ind{t}$, $\pistar$, $\fstar$, $\cH\ind{t}$)  to
refer to their realizations. As in the simplified proof, we will show that the \rhs in \pref{eq:mab_decoupling} is related to a
  notion of \emph{information gain} (that is, information about
  $\pistar$ acquired at step $t$). Using Pinsker's inequality, we have
  \begin{align*}
    \En_{\pi\ind{t},\pistar\sim{}p\ind{t}}\brk*{(\fbar\ind{t}_{\pistar}(\pi\ind{t})-\fbar\ind{t}(\pi\ind{t}))^2}
    \leq{} \En_{t-1}\brk*{
    \Dkl{\bbP_{\br\ind{t}\mid{}\pistar,\pi\ind{t},\cH\ind{t-1}}}{\bbP_{\br\ind{t}\mid{}\pi\ind{t},\cH\ind{t-1}}}
    }.
  \end{align*}
  Since KL divergence satisfies
  $\En_{X}\brk*{\Dkl{\bbP_{Y\mid{}X}}{\bbP_Y}}=\En_{Y}\brk*{\Dkl{\bbP_{X\mid{}Y}}{\bbP_X}}$,
  this is equal to
  \begin{align}
    \En_{t-1}\brk*{
    \Dkl{\bbP_{\bpistar\mid{}\pi\ind{t},r\ind{t},\cH\ind{t-1}}}{\bbP_{\bpistar\mid{}\cH\ind{t-1}}}
    } =
    \En_{t-1}\brk*{
    \Dkl{\bbP_{\bpistar\mid{}\cH\ind{t}}}{\bbP_{\bpistar\mid{}\cH\ind{t-1}}}
    }.\label{eq:mab_info_gain}
  \end{align}
  This is quantifying how much information about $\pistar$ we gain by
  playing $\pi\ind{t}$ and observing $r\ind{t}$ at step $t$, relative to what we knew at step $t-1$.
Applying \pref{eq:mab_decoupling} and \pref{eq:mab_info_gain}, we
have %
\begin{align*}
\sum_{t=1}^{T}      \En_{\pi\sim{}p\ind{t}}\brk*{\fbar\ind{t}_{\pi}(\pi) -
        \fbar\ind{t}(\pi)} 
&\leq
                                \sum_{t=1}^{T}\sqrt{A\cdot\En_{\pi,\pistar\sim{}p\ind{t}}\brk*{(\fbar\ind{t}_{\pistar}(\pi)-\fbar\ind{t}(\pi))^2}}\\
  &\leq \sum_{t=1}^{T}\sqrt{A\cdot \En_{t-1}\brk*{
    \Dkl{\bbP_{\bpistar\mid{}\cH\ind{t}}}{\bbP_{\bpistar\mid{}\cH\ind{t-1}}}
    }} \\
    &\leq \sqrt{AT\cdot \sum_{t=1}^{T}\En\brk*{
      \Dkl{\bbP_{\bpistar\mid{}\cH\ind{t}}}{\bbP_{\bpistar\mid{}\cH\ind{t-1}}}
    }}.
\end{align*}
We can write%
\[ 
\En\brk*{
    \Dkl{\bbP_{\bpistar\mid{}\cH\ind{t}}}{\bbP_{\bpistar\mid{}\cH\ind{t-1}}}
  }
= \Ent(\bpistar\mid{}\bcH\ind{t-1}) - \Ent(\bpistar\mid\bcH\ind{t}),
\]
so telescoping gives
\[
  \sum_{t=1}^{T}\En\brk*{\Dkl{\bbP_{\bpistar\mid{}\cH\ind{t}}}{\bbP_{\bpistar\mid{}\cH\ind{t-1}}}}
  = \Ent(\bpistar\mid{}\bcH\ind{0}) - \Ent(\bpistar\mid{}\bcH\ind{T}) \leq \log(A).
\]
\end{proof}

\subsection{Exercises}
\label{sec:exercise_mab}

\begin{exe}[Adversarial Bandits]
  In this exercise, we will prove a regret bound for \emph{adversarial
    bandits} (\cref{sec:adversarial_bandits}), where the sequence of
  rewards (losses) is non-stochastic. To make a direct connection to
  the Exponential Weights Algorithm, we switch from rewards to losses,
  mapping $r\ind{t}$ to $1-r\ind{t}$, a transformation that does not
  change the problem itself.
  To simplify the presentation, suppose that a collection of losses  $$\left\{ \vloss\sups{t}(\pi)\in[0,1]: \pi\in [A], t\in[T]\right\}$$
  for each action $\pi$ and time step $t$ is arbitrary and chosen
  before round $t=1$; this is referred to as an \emph{oblivious
    adversary}. We denote by
  $\vloss\ind{t}=(\vloss\ind{t}(1),\ldots,\vloss\ind{t}(A))$ the
  vector of losses at time $t$. The protocol for the problem of adversarial multi-armed bandits (with losses) is as follows:
  \begin{whiteblueframe}
    \begin{algorithmic}
  \State \textsf{Multi-Armed Bandit Protocol}
  \For{$t=1,\ldots,T$}
  \State Select decision $\pi\sups{t}\in\Pi\ldef\{1,\ldots,A\}$ by sampling $\pi\sups{t}\sim p\sups{t}$
  \State Observe loss $\vloss\sups{t}(\pi\ind{t})$
  \EndFor{}
  \end{algorithmic}
  \end{whiteblueframe}
  
  Let $p\ind{t}$ be the randomization distribution of the decision-maker on round $t$. 
  Expected regret can be written as
  \begin{align}
      \En\brk*{\RegDM} = \En\brk*{\sum_{t=1}^T \tri[\big]{p\ind{t},\vloss\ind{t}}} - \min_{\pi\in[A]} \sum_{t=1}^T \tri[\big]{\boldsymbol{e}_\pi,\vloss\ind{t}}.
  \end{align}
  Since only the loss of the chosen action $\pi\ind{t}\sim p\ind{t}$ is observed, we cannot directly appeal to the Exponential Weights Algorithm. The solution is to build an unbiased estimate of the vector $\vloss\ind{t}$ from the single real-valued observation $\vloss\ind{t}(\pi\ind{t})$.

  \begin{enumerate}[wide, labelwidth=!, labelindent=0pt]
    \item Prove that the vector
      $\vlosstilde\ind{t}(\cdot\mid{}\pi\ind{t})$ defined by
      \begin{align}
          \vlosstilde\sups{t}(\pi \mid \pi\ind{t}) = \frac{\vloss\sups{t}(\pi)}{p\ind{t}(\pi)}\times \indic{\pi\sups{t}=\pi} 
      \end{align}
      is an \emph{unbiased estimate} for $\vloss\ind{t}(\pi)$ for all $\pi\in[A]$. In vector notation, this means 
      $$\En_{\pi\sups{t}\sim p\ind{t}}\brk{\vlosstilde\sups{t}(\cdot\mid\pi\ind{t})} = \vloss\sups{t}.$$ 
      Conclude that 
      \begin{align}
          \En\brk*{\RegDM} %
          &= \En \brk*{\sum_{t=1}^T \En_{\pi\ind{t}\sim p\ind{t}}\tri[\big]{p\ind{t}, \vlosstilde\sups{t}}} - \min_{\pi\in\brk{A}} \En\brk*{\sum_{t=1}^T \En_{\pi\ind{t}\sim p\ind{t}}\tri[\big]{e_\pi, \vlosstilde\sups{t}}} 
      \end{align}
      Above, we use the shorthand $\vlosstilde\ind{t}=\vlosstilde(\cdot\mid\pi\ind{t})$.
    \item 
      Show that given $\pi'$,
      \begin{align}
          \En_{\pi\sim p\ind{t}}
        \brk[\Big]{\vlosstilde\ind{t}(\pi\mid\pi')^2} =
        \frac{\vloss\ind{t}(\pi')^2}{p\ind{t}(\pi')},~~~\text{so that}~~~
          \En_{\pi\ind{t}\sim p\ind{t}} \En_{\pi\sim p\ind{t}} \brk[\Big]{\vlosstilde\ind{t}(\pi\mid\pi\ind{t})^2} \leq A.
      \end{align}
  
    \item
  Define 
  $$p\ind{t} (\pi) \propto \exp\crl*{ -\eta\sum_{s=1}^{t-1} \tri[\big]{e_\pi, \vlosstilde\ind{s}(\cdot\mid\pi\ind{s})}},$$
  which corresponds to the exponential weights algorithm on the
  estimated losses $\vlosstilde\ind{s}$. Apply
  \pref{eq:second_order_ewa} to the estimated losses to show that for any $\pi\in[A]$,
  $$\En \brk*{\sum_{t=1}^T \En_{\pi\ind{t}\sim p\ind{t}}\tri[\big]{p\ind{t}, \vlosstilde\sups{t}}} -  \En\brk*{\sum_{t=1}^T \En_{\pi\ind{t}\sim p\ind{t}}\tri[\big]{e_\pi, \vlosstilde\sups{t}}}\lesssim \sqrt{AT \log A} $$ 
  Hence, the price of bandit feedback in the adversarial model, as compared to full-information online learning, is only $\sqrt{A}$.
    \end{enumerate}
      
\end{exe}

\section{Contextual Bandits}
\label{sec:cb}

In the last section, we studied the multi-armed bandit problem, which
arguably the simplest framework for interactive decision making. This
simplicity comes at a cost: few real-world problems can be
modeled as a multi-armed bandit problem directly. For example, for the
problem of selecting medical treatments, the multi-armed bandit
formulation presupposes that one treatment rule (action/decision) is
good for all patients, which is clearly unreasonable. To address this,
we augment the problem formulation by allowing the decision-maker to
select the action $\pi\ind{t}$ after observing a \emph{context}
$x\ind{t}$; this is called the \emph{contextual bandit} problem.
\begin{figure}[h]
    \centering
    \includegraphics[width=0.8\textwidth]{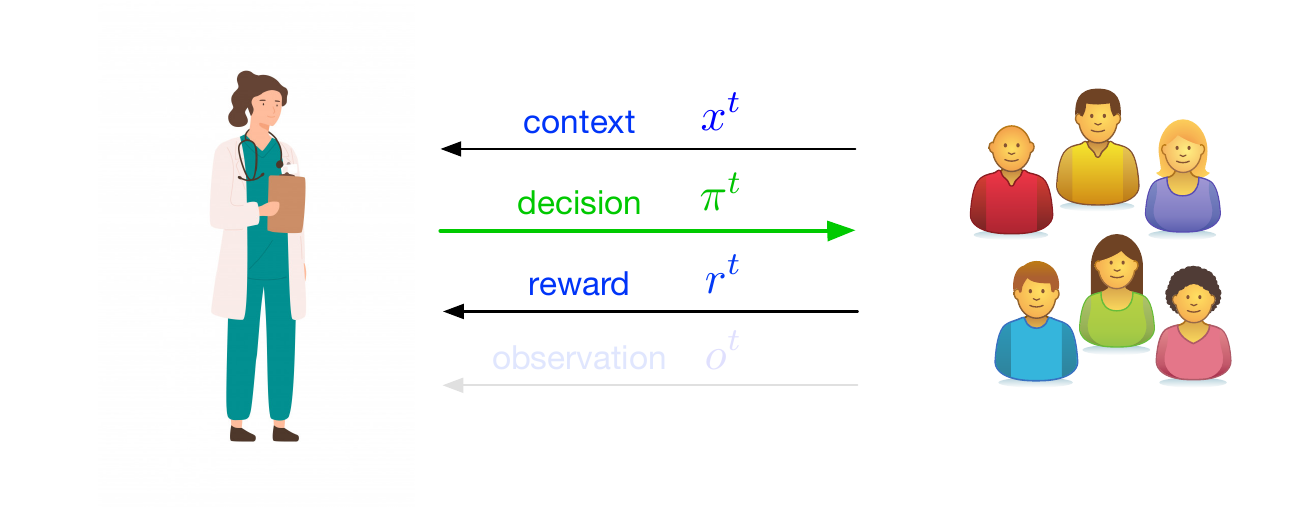}
    \caption{An illustration of the contextual multi-armed bandit problem. A
      doctor (the learner) aims to select a treatment based on the context (medical history, symptoms).}
    \label{fig:cb}
\end{figure}
The context $x\ind{t}$, which may also be thought of as a feature
vector or collection of covariates (e.g., a patient's medical history,
or the profile of a user arriving at a website), can be used by the
learner to better maximize rewards by tailoring decisions to the
specific patient or user under consideration.

\begin{whiteblueframe}
  \begin{algorithmic}
 \State \textsf{Contextual Bandit Protocol}
\For{$t=1,\ldots,T$}
\State Observe context $x\sups{t}\in\cX$.
\State Select decision $\pi\sups{t}\in\Pi=\{1,\ldots,A\}$.
\State Observe reward $r\sups{t}\in\bbR$.
\EndFor{}
\end{algorithmic}
\end{whiteblueframe}

As with multi-armed bandits, contextual bandits can be studied in a
stochastic framework or in an adversarial framework. In this course,
we will allow the contexts $x\ind{1},\ldots,x\ind{T}$ to be generated
in an arbitrary, potentially adversarially fashion, but assume that rewards are
generated from a fixed conditional distribution.
\begin{assumption}[Stochastic Rewards]
  \label{asm:stochastic_rewards_CB}
  Rewards are generated independently via
    \begin{align}
        r\sups{t}\sim \Mstar(\cdot\mid x\ind{t}, \pi\sups{t}),
    \end{align}
    where $\Mstar(\cdot\mid\cdot,\cdot)$ is the underlying \emph{model} (or
    conditional distribution).
  \end{assumption}
This generalizes the stochastic multi-armed bandit framework in
\pref{sec:mab}. We define
  \begin{align}
    \label{eq:cb_mean_reward}
  \fstar(x,\pi) \ldef \En\left[r\mid{} x, \pi\right]    
  \end{align}
  as the mean reward function under $r\sim\Mstar(\cdot\mid x,
  \pi)$, and define $\pistar(x) \ldef\argmax_{\pi\in\Pi}\fstar(x,
  \pi)$ as the optimal \emph{policy}, which maps each context $x$ to
  the optimal action for the context. We measure performance via regret relative
  to $\pistar$:
  \begin{align}
    \label{eq:regret_cb}
    \Reg \ldef{} \sum_{t=1}^{T}\fstar(x\ind{t}, \pistar(x\ind{t})) - \sum_{t=1}^{T}\En_{\pi\ind{t}\sim{}p\ind{t}}\brk{\fstar(x\ind{t}, \pi\ind{t})},
  \end{align}
where $p\ind{t}\in\Delta(\Pi)$ is the learner's action distribution at step $t$
(conditioned on the $\cH\ind{t-1}$ and $x\ind{t}$). This provides a (potentially) much stronger notion of performance than
what we considered for the multi-armed bandit: Rather than competing
with the reward of the single best action, we are competing with
the reward of the best sequence of decisions tailored to the context sequence
we observe.

\begin{rem}[Contextual bandits versus reinforcement learning]
  To readers already familiar with reinforcement learning, the
  contextual bandit setting may appear quite similar at first
  glance, with the term ``context'' replacing ``state''. The key
  difference is that in reinforcement learning, we aim to control the
  evolution of $x\ind{1},\ldots,x\ind{T}$ (which is why they are
  referred to as state), whereas in contextual bandits, we take the
  sequence as a given, and only aim to maximize our rewards conditioned
  on the sequence.
\end{rem}
  
\paragraph{Function approximation and desiderata}  
  
If $\cX$, the set of possible contexts, is finite, one might imagine
running a separate MAB algorithm for each context. In this case, the
regret bound would scale with $\abs{\cX}$,\footnote{One
  can show that running an independent instance of UCB for each context leads to regret $\bigoht(\sqrt{AT\cdot\abs{\cX}})$; see \pref{ex:unstructured_cb}.} an
undesirable property which reflects the fact that this approach does
not allow for generalization across contexts. Instead, we would like
to share information between different contexts. After all, a doctor
prescribing treatments might never observe exactly the same medical
history and symptoms twice, but they might see similar patients or
recognize underlying patterns. In the spirit of statistical learning
(\pref{sec:intro}) this means assuming access to a class $\cF$ that
can model the mean reward function, and aiming for regret bounds that
scale with $\log\abs{\cF}$ (reflecting the statistical capacity of $\cF$), with \emph{no dependence} on the cardinality of $\cX$. To
facilitate this, we will assume a well-specified/realizable model.
\begin{assumption}
    The decision-maker has access to a class $\cF\subset\{f:\cX\times\Pi\to\reals\}$ such that $\fstar\in\cF$.
\end{assumption} 
Using the class $\cF$, we would like to develop algorithms that can
model the underlying reward function for better decision making
performance. With this goal in mind, it is reasonable to try
leveraging the algorithms and respective guarantees we have already
seen for statistical and online supervised learning. At this point,
however, the decision-making problem---with its
exploration-exploitation dilemma---appears to be quite distinct from these supervised learning 
frameworks. Indeed, naively applying supervised learning methods,
which do not account for the interactive nature of the problem, can
lead to failure, as we saw with the greedy algorithm in \cref{sec:need_for_exploration}. In spite of these apparent difficulties, in the next few
lectures, we will show that it is possible to leverage supervised
learning methods to develop provable decision making methods, thereby
bridging the two methodologies.

\subsection{Optimism: Generic Template}
\label{sec:optimism_template}
What algorithmic principles should we employ to solve the
contextual bandit problem? One approach is to adapt
solutions from the multi-armed bandit setting. There, we saw
that the principle of \emph{optimism} (in particular, the UCB
algorithm) led to (nearly) optimal rates for bandits, so a natural
question is whether optimism can be adapted to give optimal guarantees in the presence of contexts. The answer to this last question is: \emph{it
depends}. We will first describe some positive results under
assumptions on $\cF$, then provide a negative example, and finally turn to an entirely different algorithmic principle.

\paragraph{Optimism via confidence sets}

Let us describe a general approach (or, template) for applying
the principle of optimism to contextual bandits \citep{chu2011contextual,abbasi2011improved,russo2013eluder,foster2018practical}. Suppose that at each time, we have a
way to construct
a \emph{confidence set}
\[
\cF\ind{t} \subseteq\cF
\]
based on the data observed so far, with the important property that
$\fstar\in\cF\ind{t}$. Given such a confidence set we can define upper
and lower \emph{confidence functions} $\flcb\sups{t}, \fucb\sups{t}:\cX\times\Pi\to\reals$
via
\begin{align}
	\label{eq:upper_lower_envelopes}
    \flcb\sups{t}(x,\pi) = \min_{f\in\cF\ind{t}} f(x,\pi),~~~ \fucb\sups{t}(x,\pi) = \max_{f\in\cF\ind{t}} f(x,\pi).
\end{align}
These functions generalize the upper and lower confidence bounds we
constructed in \cref{sec:mab}. Since $\fstar\in\cF\ind{t}$, they have the property that
\begin{align}
  \label{eq:fstar_in_CI}
\flcb\ind{t}(x,\pi) \leq \fstar(x,\pi) \leq \fucb\ind{t}(x,\pi)
\end{align}
for all $x\in\cX, \pi\in\Pi$. As such, if we consider a contextual
analogue of the UCB algorithm, given by
\begin{equation}
    \pi\ind{t}=\argmax_{\pi\in\Pi}\fucb\ind{t}(x\ind{t},\pi),
  \end{equation}
  then as in \pref{lem:regret_optimistic}, the optimistic action satisfies 
    $$\fstar(x\ind{t},\pistar) - \fstar(x\ind{t},\pi\sups{t}) \leq
    \fucb\sups{t}(x\ind{t},\pi\sups{t}) -
    \flcb\sups{t}(x\ind{t},\pi\sups{t}).$$
That is, the suboptimality is bounded by the width of the confidence
interval at $(x\ind{t},\pi\sups{t})$, and the total regret is bounded as
\begin{equation}
  \label{eq:width_cb}
\Reg \leq \sum_{t=1}^{T}\fucb\sups{t}(x\ind{t},\pi\sups{t}) -
    \flcb\sups{t}(x\ind{t},\pi\sups{t}).
\end{equation}
To make this approach concrete and derive sublinear bounds on the
regret, we need a way to construct the
confidence set $\cF\ind{t}$, ideally so that the width in
\pref{eq:width_cb} shrinks as fast as possible. 

\paragraph{Constructing confidence sets with least squares}
We construct confidence sets by appealing to a supervised learning method, empirical risk minimization with the
square loss (or, least squares). Assume that $f(x,a)\in\brk{0,1}$ for
all $f\in\cF$, and that $r\ind{t}\in[0,1]$ almost surely. Let
\begin{align}
\label{eq:erm_t}
	\fhat\ind{t} = \argmin_{f\in\cF} \sum_{i=1}^{t-1} (f(x\ind{i},\pi\ind{i})-r\ind{i})^2
\end{align}
be the empirical risk minimizer at round $t$, and with $\beta\ldef{}8\log(|\cF|/\delta)$ define $\cF\ind{1}=\cF$ and
\begin{align}
	\label{eq:def_Ft_confidence_ball}
	\cF\ind{t} = \crl*{f\in\cF: \sum_{i=1}^{t-1} (f(x\ind{i},\pi\ind{i}) - r\ind{i})^2 \leq \sum_{i=1}^{t-1} (\fhat\ind{t}(x\ind{i},\pi\ind{i}) - r\ind{i})^2 + \beta}
\end{align}
for $t>1$. That is, our confidence set $\cF\ind{t}$ is the collection of all
functions that have empirical squared error close to that of
$\fhat\ind{t}$. The idea behind this construction is to set $\beta$ ``just
large enough'', to ensure that we do not accidentally exclude
$\fstar$, with the precise value for $\beta$ informed by the concentration
inequalities we explored in \pref{sec:intro}. The only catch here is that
we need to use variants of these inequalities that handle dependent data,
since $x\ind{t}$ and $\pi\ind{t}$ are not \iid in \cref{eq:erm_t}. The following result
shows that $\cF\ind{t}$ is indeed valid and, moreover, that all
functions $f\in\cF\ind{t}$ have low estimation error on the history.

\begin{lem}
	\label{lem:valid_CI}
	Let $\pi\ind{1},\ldots,\pi\ind{T}$ be chosen by an arbitrary (and possibly randomized) decision-making algorithm. 
	With probability at least $1-\delta$, $\fstar\in\cF\ind{t}$ for all $t\in[T]$. Moreover, with 	probability at least $1-\delta$, for all $\tau\leq T$, all $f\in\cF\ind{\tau}$ satisfy
	\begin{align}
          \label{eq:confidence_set_estimation}
		\sum_{t=1}^{\tau-1} \En_{\pi\ind{t}\sim{}p\ind{t}}\brk*{(f(x\ind{t},\pi\ind{t})-\fstar(x\ind{t},\pi\ind{t}))^2} \leq 4\beta,
	\end{align}
  where $\beta=8\log(|\cF|/\delta)$.
\end{lem}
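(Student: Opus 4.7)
The plan is to reduce both parts of the lemma to a single martingale concentration inequality applied to the per-step excess square loss, using the well-known \emph{self-bounding} variance property of the square loss. For a fixed $f \in \cF$, define
\[
Y\ind{t}(f) \ldef (f(x\ind{t},\pi\ind{t})-r\ind{t})^2 - (\fstar(x\ind{t},\pi\ind{t})-r\ind{t})^2,
\]
and let $\En_{t-1}[\cdot]$ denote conditional expectation given $\cH\ind{t-1}$ and $x\ind{t}$ (so that $\pi\ind{t}\sim p\ind{t}$ and $r\ind{t}\sim\Mstar(\cdot\mid x\ind{t},\pi\ind{t})$ are still random). Since $\fstar(x,\pi)=\En[r\mid x,\pi]$, expanding the squares yields
\[
\En_{t-1}[Y\ind{t}(f)] \;=\; \En_{\pi\ind{t}\sim p\ind{t}}\brk*{(f(x\ind{t},\pi\ind{t})-\fstar(x\ind{t},\pi\ind{t}))^2}.
\]
Writing $Y\ind{t}(f) = (f-\fstar)(f+\fstar-2r\ind{t})$ (evaluated at $(x\ind{t},\pi\ind{t})$) and using $f,\fstar,r\ind{t}\in[0,1]$, I get $|Y\ind{t}(f)|\le 1$ and $\En_{t-1}[Y\ind{t}(f)^2] \le 4\En_{t-1}[Y\ind{t}(f)]$. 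This is the crucial self-bounding relationship: the conditional variance of each term is controlled by its conditional mean.

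Next I apply Freedman's inequality (a Bernstein-type bound for martingale difference sequences) to $M\ind{t}(f) \ldef Y\ind{t}(f) - \En_{t-1}[Y\ind{t}(f)]$. For a fixed $f$, with probability at least $1-\delta$, uniformly in $\tau\le T$,
\[
\Big|\sum_{t=1}^{\tau-1} M\ind{t}(f)\Big| \;\le\; \sqrt{8\, V\ind{\tau}(f)\log(1/\delta)} + \tfrac{2}{3}\log(1/\delta),
\]
where $V\ind{\tau}(f)\ldef \sum_{t<\tau}\En_{t-1}[Y\ind{t}(f)^2] \le 4\sum_{t<\tau}\En_{t-1}[Y\ind{t}(f)]$. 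Applying AM--GM ($\sqrt{ab}\le a/2+b/2$) absorbs the $\sqrt{V\ind{\tau}(f)}$ term into half of $\sum_{t<\tau}\En_{t-1}[Y\ind{t}(f)]$, yielding the two-sided bound
\[
\tfrac{1}{2}\sum_{t<\tau}\En_{t-1}[Y\ind{t}(f)] - c\log(1/\delta) \;\le\; \sum_{t<\tau} Y\ind{t}(f) \;\le\; \tfrac{3}{2}\sum_{t<\tau}\En_{t-1}[Y\ind{t}(f)] + c\log(1/\delta).
\]
A union bound over $f\in\cF$ (incurring the $\log|\cF|$ factor baked into $\beta$) makes this hold simultaneously for all $f\in\cF$ and all $\tau\le T$.

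With this uniform inequality in hand, both parts of the lemma fall out. For Part 1, plug in $f=\fhat\ind{\tau}$ into the lower bound and drop the nonnegative $\En_{t-1}[Y\ind{t}(\fhat\ind{\tau})]$ term to get $\sum_{t<\tau}(\fstar(x\ind{t},\pi\ind{t})-r\ind{t})^2 - \sum_{t<\tau}(\fhat\ind{\tau}(x\ind{t},\pi\ind{t})-r\ind{t})^2 \le c\log(|\cF|/\delta) \le \beta$ for the stated choice of $\beta=8\log(|\cF|/\delta)$, which is exactly the condition $\fstar\in\cF\ind{\tau}$. For Part 2, take any $f\in\cF\ind{\tau}$: by definition and the ERM property of $\fhat\ind{\tau}$,
\[
\sum_{t<\tau} Y\ind{t}(f) \;=\; \sum_{t<\tau}(f-r)^2 - \sum_{t<\tau}(\fstar-r)^2 \;\le\; \sum_{t<\tau}(\fhat\ind{\tau}-r)^2 + \beta - \sum_{t<\tau}(\fstar-r)^2 \;\le\; \beta,
\]
and combining with the lower bound $\sum_{t<\tau} Y\ind{t}(f) \ge \tfrac12 \sum_{t<\tau}\En_{t-1}[Y\ind{t}(f)] - c\log(|\cF|/\delta)$ gives $\sum_{t<\tau}\En_{t-1}[Y\ind{t}(f)]\le 2\beta + 2c\log(|\cF|/\delta) \le 4\beta$, which is the desired in-sample estimation guarantee.

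\textbf{Main obstacle.} The main technical work is the Freedman step: one must be careful that the bound is uniform both over $\tau\le T$ (handled via the running/maximal version of Freedman, or by a peeling/stopping-time argument) and over $f\in\cF$ (handled by union bound, which is why $\log|\cF|$ enters $\beta$). The rest is the self-bounding identity $\En_{t-1}[Y\ind{t}^2]\lesssim \En_{t-1}[Y\ind{t}]$, which is the familiar reason the square loss enjoys fast rates; once noted, the AM--GM absorption of the variance term is routine.
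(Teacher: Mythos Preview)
Your proof is correct and follows essentially the same route as the paper: define the excess-loss increments $Y\ind{t}(f)$ (the paper calls them $U\ind{t}(f)$), use the self-bounding variance identity $\En_{t-1}[Y\ind{t}^2]\le 4\En_{t-1}[Y\ind{t}]$, apply Freedman's inequality uniformly in $\tau$, and take a union bound over $\cF$. The only cosmetic difference is that the paper invokes the $\eta$-form of Freedman (the lecture-notes version with $\eta=1/8$) directly, which makes the constants land exactly on $4\beta$, whereas your $\sqrt{V}$-form plus AM--GM route is equivalent but would need a slightly more careful choice in the AM--GM step to hit the stated constant.
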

      \pref{lem:valid_CI} is valid for any algorithm, but it is particularly useful for UCB as it establishes the validity of the
      confidence bounds as per \eqref{eq:fstar_in_CI}; however, it is not yet enough to show
      that the algorithm attains low regret. Indeed, to bound the
      regret, we need to control the confidence widths in
      \pref{eq:width_cb}, but there is a mismatch: for step $\tau$, the
      regret bound in \pref{eq:width_cb} considers the width at
      $(x\ind{\tau},\pi\ind{\tau})$, but \pref{eq:confidence_set_estimation}
      only ensures closeness of functions in $\cF\ind{\tau}$ under
      $(x\ind{1},\pi\ind{1}),\ldots,(x\ind{\tau-1},\pi\ind{\tau-1})$. We
      will show in the sequel that for linear models, it is possible
      to control this mismatch, but that this is not possible in general.

\begin{proof}[\pfref{lem:valid_CI}]
For $f\in\cF$, define
\begin{align}
	U\ind{t}(f) = (f(x\ind{t},\pi\ind{t}) - r\ind{t})^2 - (\fstar(x\ind{t},\pi\ind{t}) - r\ind{t})^2. 
\end{align}
It is straightforward to check that\footnote{We leave $\En_{t-1}$ on the right-hand side to include the case of randomized decisions $\pi\ind{t}\sim p\ind{t}$.}
\begin{align}
	\label{eq:cond_exp_square_loss_difference}
	\En_{t-1} U\ind{t}(f) = \En_{t-1}(f(x\ind{t},\pi\ind{t})-\fstar(x\ind{t},\pi\ind{t}))^2,
\end{align}
where $\En_{t-1}\brk{\cdot}\ldef{}\En\brk{\cdot\mid\cH\ind{t-1}, x\ind{t}}$. Then $Z\ind{t}(f) = \En_{t-1} U\ind{t}(f) - U\ind{t}(f)$ is a martingale difference sequence and 
$\sum_{t=1}^\tau Z\ind{t}(f)$ is a martingale. Since increments
$Z\ind{t}(f)$ are bounded as $\abs{Z\ind{t}(f)}\leq 1$ (this holds whenever $f\in[0,1], r\ind{t}\in[0,1]$), according to \pref{lem:freedman}
with $\eta=\frac{1}{8}$, with probability at least $1-\delta$, for all
$\tau\leq T$, 
\begin{align}
	\label{eq:freedman_application}
      \sum_{t=1}^{\tau}Z\ind{t}(f) \leq{} \frac{1}{8}\sum_{t=1}^{\tau}\En_{t-1}\brk*{Z\ind{t}(f)^{2}} + 8\log(\delta^{-1}).
\end{align}
To control the right-hand side, we again use that
$f,r\ind{t}\in\brk{0,1}$ to bound
\begin{align}
	\En_{t-1}\brk*{Z\ind{t}(f)^{2}} &\leq \En_{t-1}\brk*{\prn*{(f(x\ind{t},\pi\ind{t}) - r\ind{t})^2 - (\fstar(x\ind{t},\pi\ind{t}) - r\ind{t})^2}^2} \\
	&\leq 4\En_{t-1}\brk*{ (f(x\ind{t},\pi\ind{t}) - \fstar(x\ind{t},\pi\ind{t}))^2 } = 4\En_{t-1} U\ind{t}(f)
\end{align}
Then, after rearranging, \eqref{eq:freedman_application} becomes
\begin{align}
	\label{eq:freedman_application2}
      \frac{1}{2}\sum_{t=1}^{\tau} \En_{t-1} U\ind{t}(f) \leq \sum_{t=1}^{\tau} U\ind{t}(f)  + 8\log(\delta^{-1}).
\end{align}
Since the left-hand side is nonnegative, we conclude that with probability at least $1-\delta$,
\begin{align}
	\sum_{t=1}^{\tau} (\fstar(x\ind{t},\pi\ind{t}) - r\ind{t})^2 \leq \sum_{t=1}^{\tau} (f(x\ind{t},\pi\ind{t}) - r\ind{t})^2 + 8\log(\delta^{-1}).
\end{align}
Taking a union bound over $f\in \cF$, gives that with probability at least $1-\delta$,
\begin{align}
	\forall f\in\cF,\; \forall \tau\in [T],~~~~ \sum_{t=1}^{\tau} (\fstar(x\ind{t},\pi\ind{t}) - r\ind{t})^2 \leq \sum_{t=1}^{\tau} (f(x\ind{t},\pi\ind{t}) - r\ind{t})^2 + 8\log(|\cF|/\delta),
\end{align}
and in particular
\begin{align}
	\forall \tau\in [T+1],~~~~ \sum_{t=1}^{\tau-1} (\fstar(x\ind{t},\pi\ind{t}) - r\ind{t})^2 \leq \sum_{t=1}^{\tau-1} (\fhat\ind{\tau}(x\ind{t},\pi\ind{t}) - r\ind{t})^2 + 8\log(|\cF|/\delta);
\end{align}
that is, we have $\fstar\in\cF\ind{\tau}$ for all $\tau\in\{1,\ldots,T+1\}$, proving the first claim. For the second part of the claim, observe that any $f\in\cF\ind{\tau}$ must satisfy
$$\sum_{t=1}^{\tau-1} U\ind{t}(f) \leq \beta$$
since the empirical risk of $\fstar$ is never better than the
empirical risk of the minimizer $\fhat\ind{t}$. Thus from \eqref{eq:freedman_application2}, with probability at least $1-\delta$, for all $\tau\leq T$,
\begin{align}
	\sum_{t=1}^{\tau-1} \En_{t-1} U\ind{t}(f) \leq 2\beta  + 16\log(\delta^{-1}).
\end{align}
The second claim follows by taking union bound over $f\in\cF\ind{\tau}\subseteq \cF$, and by \eqref{eq:cond_exp_square_loss_difference}. 
\end{proof}

\subsection{Optimism for Linear Models: The LinUCB Algorithm}

We now instantiate the general template for optimistic algorithms
developed in the
previous section for the special case where $\cF$ is a class of
\emph{linear functions}.

\paragraph{Linear models}
We fix a \emph{feature map} $\phi:\cX\times \Pi\to \sB_2^d(1)$, where
$\sB_2^d(1)$ is the unit-norm Euclidean ball in $\reals^d$. The
feature map is assumed to be known to the learning agent. For example,
in the case of medical treatments, $\phi$ transforms the medical
history and symptoms $x$ for the patient, along with a possible
treatment $\pi$, to a representation $\phi(x,\pi)\in\sB_2^d(1)$. We
take $\cF$ to be the set of linear functions given by
\begin{align}
\label{eq:lin_class}
    \cF=\crl*{(x,\pi)\mapsto\inner{\theta,\phi(x,\pi)}\mid{}\theta\in\Theta},
\end{align}
where $\Theta\subseteq \sB_2^d(1)$ is the parameter set. As before, we assume $\fstar\in\cF$; we let $\theta^*$ denote the
corresponding parameter vector, so that $\fstar(x,\pi)=\tri*{\thetastar,\phi(x,\pi)}$. With some abuse of notation, we associate the set of parameters $\Theta$ to the corresponding functions in $\cF$.

To apply the technical results in the previous section, we assume for simplicity
that $\abs{\Theta}=\abs{\cF}$ is finite. To extend our results to
potentially non-finite sets, one can work with an
$\veps$-discretization, or $\veps$-net, which is of size at most
$O(\veps^{-d})$ using standard arguments. Taking $\veps\sim 1/T$
ensures only a constant loss in cumulative regret relative to the
continuous set of parameters, while $\log|\cF| \lesssim d\log
T$. 
\paragraph{The LinUCB algorithm}
The following figure displays an algorithm we refer to as LinUCB \citep{auer2002using,chu2011contextual,abbasi2011improved},
which adapts the generic template for optimistic algorithms to
the case where $\cF$ is linear in the sense of \eqref{eq:lin_class}.
\begin{whiteblueframe}
  \begin{algorithmic}
\State \textsf{LinUCB}
\State Input: $\beta>0$
\For{$t=1,\ldots,T$}
\State Compute the least squares solution $\widehat{\theta}\ind{t}$
(over $\theta\in\Theta$) given by
\[
\thetahat\ind{t}=\argmin_{\theta\in\Theta}\sum_{i<t}\prn*{\tri{\theta,\phi(x\ind{i},\pi\ind{i})}-r\ind{i}}^2.
\]
\State Define
$$\widetilde{\Sigma}\ind{t}=\sum_{i=1}^{t-1}\phi(x\ind{i},\pi\ind{i})\phi(x\ind{i},\pi\ind{i})^\tr + I.$$
\State Given $x\ind{t}$, select action 
$$\pi\ind{t} \in \argmax_{\pi\in\Pi}\max_{\theta:  \norm{\theta-\widehat{\theta}\ind{t}}^2_{\widetilde{\Sigma}\ind{t}} \leq 16\beta+4} \inner{\theta, \phi(x\ind{t},\pi)}.$$
\State Observe reward $r\sups{t}$
\EndFor{}
\end{algorithmic}
\end{whiteblueframe}

The following result shows that LinUCB enjoys a regret bound that
scales with the complexity $\log\abs{\cF}$ of the model class and
the feature dimension $d$.

\begin{prop}
  \label{prop:linucb}
	Let $\Theta\subseteq \sB_2^d(1)$ and fix $\phi:\cX\times
        \Pi\to \sB_2^d(1)$. For a finite set $\cF$ of linear functions \eqref{eq:lin_class}, taking $\beta = 8\log(|\cF|/\delta)$, LinUCB satisfies, with probability at least $1-\delta$, 
	$$\Reg \lesssim \sqrt{\beta dT \log(1+T/d)}\lesssim
        \sqrt{dT\log(\abs{\cF}/\delta)\log(1+T/d)}$$
        for any sequence of contexts $x\ind{1},\ldots,x\ind{T}$. 
	More generally, for infinite $\cF$, we may take $\beta =
        O(d\log(T))$\footnote{This follows from a simple covering
          number argument.} and
	$$\Reg \lesssim d\sqrt{T} \log(T).$$
      \end{prop}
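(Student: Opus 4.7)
The plan is to adapt the optimistic template of Section 3.1 to the linear setting, in three steps: (i) show that the ellipsoidal confidence set used by LinUCB,
$$\Theta\ind{t} \ldef \{\theta \in \Theta : \|\theta - \widehat{\theta}\ind{t}\|^2_{\widetilde{\Sigma}\ind{t}} \leq 16\beta + 4\},$$
contains $\thetastar$ with probability at least $1 - \delta$; (ii) use optimism to bound the instantaneous regret at time $t$ by the confidence half-width at $(x\ind{t},\pi\ind{t})$; and (iii) bound the cumulative width by an elliptic potential argument that generalizes the pigeonhole-style step in the proof of \pref{prop:ucb}.

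For step (i), I would invoke \pref{lem:valid_CI}, which gives $\fstar \in \cF\ind{t}$ for all $t$ with the stated probability, where $\cF\ind{t}$ is defined via empirical square loss. To convert this into an ellipsoidal statement, let $L(\theta) = \sum_{i<t}(\tri*{\theta,\phi(x\ind{i},\pi\ind{i})} - r\ind{i})^2$ and $\Sigma\ind{t} = \sum_{i<t}\phi(x\ind{i},\pi\ind{i})\phi(x\ind{i},\pi\ind{i})^\top$, so that $\widetilde{\Sigma}\ind{t} = \Sigma\ind{t} + I$. Since $L$ is a convex quadratic, the first-order optimality condition for the constrained minimizer $\widehat{\theta}\ind{t}$ yields
$$L(\theta) - L(\widehat{\theta}\ind{t}) \geq \|\theta - \widehat{\theta}\ind{t}\|^2_{\Sigma\ind{t}} \quad \text{for all } \theta \in \Theta.$$
Taking $\theta = \thetastar$ and using $L(\thetastar) - L(\widehat{\theta}\ind{t}) \leq \beta$ (from $\fstar \in \cF\ind{t}$) gives $\|\thetastar - \widehat{\theta}\ind{t}\|^2_{\Sigma\ind{t}} \leq \beta$; adding $\|\thetastar - \widehat{\theta}\ind{t}\|^2_I \leq 4$ (since both vectors lie in $\sB_2^d(1)$) shows $\thetastar \in \Theta\ind{t}$, with the constant $16$ providing slack for the second part of \pref{lem:valid_CI} if a martingale conversion between expected and empirical covariance is needed.

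For step (ii), optimism implies that the instantaneous regret is bounded by $\tri*{\widetilde{\theta}\ind{t} - \thetastar, \phi(x\ind{t}, \pi\ind{t})}$, where $\widetilde{\theta}\ind{t} \in \Theta\ind{t}$ is the maximizer defining $\pi\ind{t}$. Cauchy--Schwarz in the $\widetilde{\Sigma}\ind{t}$ norm together with the triangle inequality (both $\widetilde{\theta}\ind{t},\thetastar \in \Theta\ind{t}$) bound this by $O(\sqrt{\beta}) \cdot \|\phi(x\ind{t}, \pi\ind{t})\|_{(\widetilde{\Sigma}\ind{t})^{-1}}$. Clipping at $1$ and applying Cauchy--Schwarz in $t$ gives
$$\Reg \approxleq \sqrt{\beta T \cdot \sum_{t=1}^T \min\{\|\phi(x\ind{t},\pi\ind{t})\|^2_{(\widetilde{\Sigma}\ind{t})^{-1}},\, 1\}}.$$
For step (iii), the inner sum is handled by the elliptic potential lemma: the matrix-determinant identity $\det(\widetilde{\Sigma}\ind{t+1}) = \det(\widetilde{\Sigma}\ind{t})(1 + \|\phi(x\ind{t},\pi\ind{t})\|^2_{(\widetilde{\Sigma}\ind{t})^{-1}})$ combined with $x \wedge 1 \leq 2\log(1+x)$ telescopes to $2\log\det(\widetilde{\Sigma}\ind{T+1}) \leq 2d\log(1+T/d)$, the last step being AM--GM applied to $\tr(\widetilde{\Sigma}\ind{T+1}) \leq d + T$. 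Substituting back yields the claimed bound. For the infinite-$\cF$ case, I would apply the finite-class result to a $1/T$-cover of $\Theta \subseteq \sB_2^d(1)$, which has cardinality $\exp(O(d\log T))$, giving $\beta = O(d\log T)$ and at most $O(1)$ additive discretization loss over the $T$ rounds.

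The main anticipated obstacle is step (i): reconciling the empirical-loss definition of $\cF\ind{t}$ in \pref{lem:valid_CI} with the ellipsoidal form required by the elliptic potential lemma. The quadratic structure of the square loss combined with the linearity of $\cF$ makes this translation essentially automatic here; for nonlinear classes no such reduction holds, which motivates the general machinery (e.g., the eluder dimension) introduced later in the text.
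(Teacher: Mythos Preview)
Your proposal mirrors the paper's three-step structure, and steps (ii) and (iii) are correct and essentially identical to the paper's argument. There is, however, a genuine gap in step (i): the first-order optimality inequality you invoke, $L(\theta) - L(\widehat{\theta}\ind{t}) \geq \|\theta - \widehat{\theta}\ind{t}\|^2_{\Sigma\ind{t}}$ for all $\theta \in \Theta$, requires $\Theta$ to be convex. The proposition is stated for a \emph{finite} parameter set $\Theta$, and over a discrete set the constrained minimizer need not satisfy any first-order condition; the inequality can genuinely fail.

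The paper handles this by using the \emph{second} part of \pref{lem:valid_CI} as the primary tool rather than as a fallback. Since LinUCB's decisions $\pi\ind{t}$ are deterministic, the expectation over $p\ind{t}$ in \eqref{eq:confidence_set_estimation} is vacuous, and the lemma directly yields $\|\theta - \thetastar\|^2_{\Sigma\ind{t}} \leq 4\beta$ for every $\theta \in \cF\ind{t}$ --- an ellipsoid centered at $\thetastar$, not at $\widehat{\theta}\ind{t}$. Since $\widehat{\theta}\ind{t}$ is trivially in $\cF\ind{t}$, the triangle inequality then recenters at $\widehat{\theta}\ind{t}$ with squared radius $16\beta$, and adding the $\ell_2$ bound $\|\theta-\widehat{\theta}\ind{t}\|_2^2\leq 4$ gives $\thetastar \in \Theta\ind{t}$ (indeed $\cF\ind{t}\subseteq\Theta\ind{t}$) as you wanted. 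Your parenthetical about ``martingale conversion between expected and empirical covariance'' mischaracterizes this step: no such conversion is needed because $p\ind{t}$ is already a point mass; the second part of the lemma is doing the real work, not providing slack. With step (i) patched this way, the rest of your argument goes through.
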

Notably, this regret bound has no explicit dependence on the context space size
$\abs{\cX}$. Interestingly, the bound is also independent of the
number of actions $\abs{\Pi}$, which is replaced by the dimension $d$; this reflects that the linear structure
of $\cF$ allows the learner to generalize not just across contexts, but across
decisions. We will expand upon the idea of generalizing across actions
in \cref{sec:structured}.

\begin{proof}[\pfref{prop:linucb}]
The confidence set \eqref{eq:def_Ft_confidence_ball} in the generic optimistic algorithm
template is 
\begin{align}
	\cF\ind{t} = \crl*{\theta\in\Theta: \sum_{i=1}^{t-1} (\inner{\theta,\phi(x\ind{i},\pi\ind{i})} - r\ind{i})^2 \leq \sum_{i=1}^{t-1} (\tri{\widehat{\theta}\ind{t},\phi(x\ind{i},\pi\ind{i})} - r\ind{i})^2 + \beta},
\end{align}  
where $\widehat{\theta}\ind{t}$ is the least squares solution computed
in LinUCB. According to  \pref{lem:valid_CI}, with probability at least $1-\delta$, for all $t\in[T]$, all $\theta\in\cF\ind{t}$ satisfy
\begin{align}
	\sum_{i=1}^{t-1} (\inner{\theta-\theta^*,\phi(x\ind{i},\pi\ind{i})})^2 \leq 4\beta,
\end{align}
which means that $\cF\ind{t}$ is a subset of\footnote{For a PSD matrix
  $\Sigma\psdgeq{}0$, we define $\nrm{x}_{\Sigma}=\sqrt{\tri*{x,\Sigma{}x}}$.}
\begin{align}
	\Theta' = \crl*{\theta\in\Theta: \norm{\theta-\theta^*}_{\Sigma\ind{t}}^2 \leq 4\beta},~~~~\text{where}~~~~ \Sigma\ind{t}=\sum_{i=1}^{t-1}\phi(x\ind{i},\pi\ind{i})\phi(x\ind{i},\pi\ind{i})^\tr.
\end{align}
Since $\widehat{\theta}\ind{t}\in\cF\ind{t}$, we have that for any $\theta\in\Theta'$, by triangle inequality, $\nrm[\big]{\theta-\widehat{\theta}\ind{t}}_{\Sigma\ind{t}}^2 \leq 16\beta$.
Furthermore, since $\widehat{\theta}\ind{t}\in \Theta\subseteq\sB_2^d(1)$,
$\nrm[\big]{\theta-\widehat{\theta}\ind{t}}_2 \leq 2$.
Combining the two constraints into one, we find that $\Theta'$ is a subset of
\begin{align}
	\Theta'' = \crl*{\theta\in\reals^d: \nrm[\big]{\theta-\widehat{\theta}\ind{t}}_{\widetilde{\Sigma}\ind{t}}^2 \leq 16\beta+4},~~~~\text{where}~~~~ \widetilde{\Sigma}\ind{t}=\sum_{i=1}^{t-1}\phi(x\ind{i},\pi\ind{i})\phi(x\ind{i},\pi\ind{i})^\tr + I.
\end{align}
The definition of $\fucb\sups{t}$ in \eqref{eq:upper_lower_envelopes}
and the inclusion $\Theta'\subseteq\Theta''$ implies that 
\begin{align}
	\label{eq:linucb_form_with_bonus}
	\fucb\sups{t}(x,\pi) \leq \max_{\theta:  \norm{\theta-\widehat{\theta}\ind{t}}_{\widetilde{\Sigma}\ind{t}} \leq \sqrt{16\beta+4}} \inner{\theta, \phi(x,\pi)} = \tri[\big]{\widehat{\theta}\ind{t}, \phi(x,\pi)}  + \sqrt{16\beta+4} \norm{ \phi(x,\pi) }_{(\widetilde{\Sigma}\ind{t})^{-1}},
\end{align}
and similarly $\flcb\sups{t}(x,\pi) \geq \tri[\big]{\widehat{\theta}\ind{t}, \phi(x,\pi)}  - \sqrt{16\beta+4} \nrm*{ \phi(x,\pi) }_{(\widetilde{\Sigma}\ind{t})^{-1}}$. 
We conclude that regret of the UCB algorithm, in view of \pref{lem:regret_optimistic}, is 
\begin{align}
	\label{eq:reg_bd_linear_potential}
	\Reg \leq 2\sqrt{16\beta+4} \sum_{t=1}^T \norm{ \phi(x\ind{t},\pi\ind{t}) }_{(\widetilde{\Sigma}\ind{t})^{-1}}\lesssim 	\sqrt{\beta T \sum_{t=1}^T \norm{ \phi(x\ind{t},\pi\ind{t}) }_{(\widetilde{\Sigma}\ind{t})^{-1}}^2}.
\end{align}
The above upper bound has the same flavor as the one in
\pref{lem:confidence_width_potential}: as we obtain more and more
information in some direction $v$, the matrix
$\widetilde{\Sigma}\ind{t}$ has a larger and larger component in that
direction, and for that direction $v$, the term $\norm{v
}_{(\widetilde{\Sigma}\ind{t})^{-1}}^2$ becomes smaller and
smaller. To conclude, we apply a potential argument,
\cref{lem:elliptic_potential} below, to bound
\begin{align*}
\sum_{t=1}^T \norm{ \phi(x\ind{t},\pi\ind{t}) }_{(\widetilde{\Sigma}\ind{t})^{-1}}^2\lesssim  d\log(1+T/d).
\end{align*}
\end{proof}

The following result is referred to as the elliptic potential lemma, and it can be thought of as a generalization of \pref{lem:confidence_width_potential}.
\begin{lem}[Elliptic potential lemma]
\label{lem:elliptic_potential}
	Let $a_1,\ldots,a_T\in \reals^d$ satisfy $\norm{a_t}\leq 1$
        for all $t\in[T]$, and let $V_t = I + \sum_{s\leq t} a_s a_s^\tr$. Then
	\begin{align}
	    \label{eq:elliptic_potential_bound}
          \sum_{t=1}^T \norm{a_t}^2_{V_{t-1}^{-1}} \leq 2d \log \prn*{1+T/d}.
	\end{align}
      \end{lem}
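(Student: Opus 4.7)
The plan is to reduce the sum to a telescoping ``log-determinant'' quantity via the matrix determinant lemma, then bound the final determinant using AM-GM on eigenvalues. This is the classical potential argument due to Lai--Wei/Auer.

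First, I would establish the key one-step identity: by the matrix determinant lemma applied to the rank-one update $V_t = V_{t-1} + a_t a_t^\trn$, we have
\[
\det(V_t) = \det(V_{t-1})\cdot\prn*{1+\nrm{a_t}^2_{V_{t-1}^{-1}}},
\]
so that $\nrm{a_t}^2_{V_{t-1}^{-1}} = \det(V_t)/\det(V_{t-1}) - 1$. This turns the sum we want to bound into something telescoping once we pass through a logarithm.

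Next I would use the elementary inequality $x \leq 2\log(1+x)$ valid for $x\in[0,1]$. To apply it, I need $\nrm{a_t}^2_{V_{t-1}^{-1}} \leq 1$, which follows from $V_{t-1} \psdgeq I$ (so $V_{t-1}^{-1} \psdleq I$) and $\nrm{a_t}\leq 1$. Hence
\[
\sum_{t=1}^T \nrm{a_t}^2_{V_{t-1}^{-1}} \leq 2\sum_{t=1}^T \log\prn*{1+\nrm{a_t}^2_{V_{t-1}^{-1}}} = 2\sum_{t=1}^T \log\frac{\det(V_t)}{\det(V_{t-1})} = 2\log\frac{\det(V_T)}{\det(V_0)} = 2\log\det(V_T),
\]
using $V_0 = I$.

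Finally I would bound $\det(V_T)$ via AM-GM on its eigenvalues $\lambda_1,\ldots,\lambda_d$: since $\mathrm{tr}(V_T) = d + \sum_{t=1}^T \nrm{a_t}^2 \leq d + T$, we get
\[
\det(V_T) = \prod_{i=1}^d \lambda_i \leq \prn*{\frac{1}{d}\sum_{i=1}^d \lambda_i}^d = \prn*{\frac{\mathrm{tr}(V_T)}{d}}^d \leq \prn*{1+\frac{T}{d}}^d,
\]
so $2\log\det(V_T) \leq 2d\log(1+T/d)$, giving \eqref{eq:elliptic_potential_bound}. The only mildly subtle step is the matrix determinant lemma (which should be quoted), and ensuring the prerequisites $\nrm{a_t}^2_{V_{t-1}^{-1}}\leq 1$ for the $x\leq 2\log(1+x)$ bound; everything else is routine.
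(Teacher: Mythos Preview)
Your proof is correct and matches the paper's approach essentially step for step: matrix determinant lemma for the one-step identity, the inequality $x\leq 2\log(1+x)$ on $[0,1]$ (the paper phrases it as $u\wedge 1\leq 2\log(1+u)$, which is equivalent here since you verify $\nrm{a_t}^2_{V_{t-1}^{-1}}\leq 1$), and AM-GM on the eigenvalues of $V_T$ to bound the determinant.
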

\begin{proof}[Proof \pref{lem:elliptic_potential} (sketch)]
    First, the determinant of $V_t$ evolves as
    $$\det(V_t) = \det(V_{t-1})\prn*{1+\norm{a_t}^2_{V_{t-1}^{-1}}}.$$
    Second, using the identity $u\wedge 1\leq 2\ln(1+u)$ for $u\geq
    0$, the left-hand side of \eqref{eq:elliptic_potential_bound} is
    at most $2\sum_{t=1}^T
    \log\prn*{1+\norm{a_t}^2_{V_{t-1}^{-1}}}$. The proof concludes by
    upper bounding the determinant of $V_n$ via the AM-GM
    inequality. We leave the details as an exercise; see also
    \citet{lattimore2020bandit}. 
\end{proof}

\subsection{Moving Beyond Linear Classes: Challenges}

We now present an example of a class $\cF$ for which optimistic
methods necessarily incur regret that scales linearly with either the
cardinality of $\cF$ or with cardinality of $\cX$, meaning that we do
not achieve the desired $\log\abs{\cF}$ scaling of regret that one
might expect in (offline or online) supervised learning.

\begin{example}[Failure of optimism for contextual bandits \citep{foster2018practical}]
  Let $A=2$, and let $N\in\bbN$ be given. Let $\pi\subs{g}$ and $\pi\subs{b}$ be two actions
  available in each context, so that
  $\Pi=\crl{\pi\subs{g},\pi\subs{b}}$. Let $\cX=\{x\subs{1},\ldots,x\subs{N}\}$
  be a set of distinct contexts, and define a class $\cF=\{\fstar,f\subs{1},\ldots,f\subs{N}\}$ of cardinality $N+1$ as follows. Fix $0<\veps<1$. Let $\fstar(x,\pi\subs{g})=1-\veps$ and $\fstar(x,\pi\subs{b})=0$ for any $x\in\cX$. For each $i\in[N]$, $f\subs{i}(x\subs{j},\pi\subs{g})=1-\veps$ and $f\subs{i}(x\subs{j},\pi\subs{b})=0$ for $j\neq i$, while
        $f\subs{i}(x\subs{i},\pi\subs{g})=0$ and $f\subs{i}(x\subs{i},\pi\subs{b})=1$.

        Now, consider a (well-specified) problem instances in which rewards are deterministic and given by
        \[r\ind{t}=\fstar(x\ind{t},\pi\ind{t}),\] which we note is a
        constant function with respect to the context. Since $\fstar$ is the true model, $\pi\subs{g}$ is always
        the best action, bringing a reward of $1-\veps$ per round. Any
        time $\pi\subs{b}$ is chosen, the decision-maker incurs
        instantaneous regret $1-\veps$. We will now argue that if we
        apply the generic optimistic algorithm from
        \cref{sec:optimism_template}, it will choose $\pi\subs{b}$ every time a new context
        is encountered, leading to $\bigom(N)$ regret.

        Let $S\ind{t}$ be the set of distinct contexts
        encountered before round $t$. Clearly, the exact minimizers of
        empirical square loss (see \eqref{eq:erm_t}) are $\fstar$,
        and all $f\subs{i}$ where $i$ is such that $x_i\notin
        S\ind{t}$. Hence, for any choice of $\beta\geq 0$, the
        confidence set in \eqref{eq:def_Ft_confidence_ball} contains
        all $f\subs{i}$ for which $x\subs{i}\notin S\ind{t}$. This
        implies that for each $t\in[T]$ where $x\ind{t}=x\subs{i}\notin S\ind{t}$, action
        $\pi\subs{b}$ has a higher upper confidence bound than $\pi\subs{g}$, since
        \[
          \fbar\ind{t}(x\ind{t},\pi\subs{b})
           = f_i(x_i,\pi\subs{b}) = 1 >
           \fbar\ind{t}(x\ind{t},\pi\subs{g}) = \fstar(x\ind{t},\pi\subs{g})=1-\veps.
        \]
        Hence, the
        cumulative regret grows by $1-\veps$ every time a new context
        is presented, and thus scales as $\bigom(N(1-\veps))$
        if the contexts are presented in order. That is, since
        $N=|\cX|=|\cF|-1$, the confidence-based algorithm fails to
        achieve logarithmic dependence on $\cF$ (note that we may take
        $\veps=1/2$ for concreteness).

        Let us remark that this failure continues even if contexts are
        stochastic. If the contexts are chosen via the uniform
        distribution on $\cX$, then for $T\geq{}N$, at least a
        constant proportion of the domain will be presented, which still leads to a lower bound of
        \[\En\brk{\Reg} = \Omega(N)=\Omega(\min\crl{\abs{\cF},\abs{\cX}}).\]
     \end{example}

What is behind the failure of optimism in this example? The
structure of $\cF$ forces optimistic methods to \emph{over-explore}, as the
algorithm puts too much hope into trying the arm $\pi\subs{b}$ for
each new context. As a result, the confidence widths in
\pref{eq:width_cb} do not shrink quickly enough. Below, we will see
that there are alternative methods which \emph{do} enjoy
logarithmic dependence on the size of $\cF$, with the best of these
methods achieving regret $O(\sqrt{AT\log\abs{\cF}})$. 

We mention in passing that even though optimism does not succeed in general, it is useful to
understand in what cases it works. We saw that the structure
of linear classes in $\reals^d$ only allowed for $d$ ``different''
directions, while in the example above, the optimistic algorithm gets
tricked by each new context, and is not able to shrink the confidence
band quickly enough over the domain. In a few lectures (\pref{sec:structured}), we will introduce the
eluder dimension, a structural property of the class $\cF$ which is sufficient for optimistic methods
to experience low regret, generalizing the positive result for the linear setting.

    \subsection{The $\varepsilon$-Greedy Algorithm for Contextual
      Bandits}

    Given that the principle of optimism only leads to low regret for
    classes $\cF$ with special structure, we are left wondering
    whether there are more general algorithmic principles for decision
    making that can succeed for \emph{any} class $\cF$. In this
    section and the following one, we will present two such principles. Both approaches will
    still make use of supervised learning with the class $\cF$, but
    will build upon \emph{online} supervised learning as opposed to
    offline/statistical learning. To make the use of supervised
    learning as modular as possible, we will abstract this away using
    the notion of an \emph{online regression oracle} \citep{foster2020beyond}.

\begin{definition}[Online Regression Oracle]
    \label{def:online_regression_oracle}
	At each time $t\in[T]$, an \emph{online regression oracle} returns, given  $$(x\ind{1},\pi\ind{1},r\ind{1}),\ldots,(x\ind{t-1},\pi\ind{t-1},r\ind{t-1})$$ with $\En\brk{r\ind{i}|x\ind{i},\pi\ind{i}}=\fstar(x\ind{i},\pi\ind{i})$ and $\pi\ind{i}\sim p\ind{i}$, a function $\fhat\ind{t}:\cX\times\Pi\to\reals$ such that
	$$\sum_{t=1}^T \En_{\pi\ind{t}\sim p\ind{t}}(\fhat\ind{t}(x\ind{t},\pi\ind{t})-\fstar(x\ind{t},\pi\ind{t}))^2 \leq \EstSq(\cF, T, \delta)$$
	with probability at least $1-\delta$. For the results that
        follow, $p\ind{i} = p\ind{i}(\cdot |
        x\ind{i}, \cH\ind{i-1})$ will represent the randomization distribution of
        a decision-maker.
      \end{definition}
For example, for finite classes, the (averaged) exponential weights method introduced in
\cref{sec:ol} is an online regression oracle with
$\EstSq(\cF,T,\delta)=\log(\abs{\cF}/\delta)$. More generally, in view
of \pref{lem:well_specified_reg_est}, any online learning algorithm
that attains low square loss regret for the problem of predicting of
$r\ind{t}$ based on $(x\ind{t},\pi\ind{t})$ leads to a valid
online regression oracle. 

Note that we make use of
\emph{online learning} oracles for the results that follow because we aim to
derive regret bounds that hold for arbitrary, potentially adversarial
sequences $x\ind{1},\ldots,x\ind{T}$. If we instead assume that contexts are \iid, it is
reasonable to make use of algorithms for \emph{offline estimation},
or statistical learning with $\cF$. See \cref{sec:offline_regression}
for further discussion.

The first general-purpose contextual bandit algorithm we will study,
illustrated below, is a contextual counterpart to the \egreedy method
introduced in \cref{sec:mab}. 

    \begin{whiteblueframe}
  \begin{algorithmic}
    \State \textsf{\egreedy for Contextual Bandits}
    \State Input: $\veps\in(0,1)$.
\For{$t=1,\ldots,T$}
\State Obtain $\fhat\ind{t}$ from online regression oracle for $(x\ind{1},\pi\ind{1},r\ind{1}),\ldots,(x\ind{t-1},\pi\ind{t-1},r\ind{t-1})$. 
\State Observe $x\ind{t}$.
\State With prob. $\veps$, select $\pi\sups{t}\sim \unif(\brk{A})$, and with prob. $1-\veps$, choose the greedy action
$$\pihat\ind{t} = \argmax_{\pi\in\brk{A}} \fhat\ind{t}(x\ind{t},\pi).$$
\State Observe reward $r\sups{t}$.
\EndFor{}
\end{algorithmic}
\end{whiteblueframe}
At each step $t$, the algorithm uses an
online regression oracle to compute a reward estimator
$\fhat\ind{t}(x,a)$ based on the data $\hist\ind{t-1}$ collected so
far. Given this estimator, the algorithm uses the same sampling
strategy as in the non-contextual case: with
probability $1-\veps$, the algorithm chooses the greedy decision
\begin{align}
    \pihat\sups{t} = \argmax_{\pi} \fhat\sups{t}(x\ind{t},\pi),
\end{align}
and with probability $\veps$ it samples a uniform random action
$\pi\ind{t}\sim\unif(\crl{1,\ldots,A})$. The following theorem shows
that whenever the online estimation oracle has low estimation error
$\EstSq(\cF,T,\delta)$, this method achieves low regret.
\begin{prop}
  \label{prop:eps_greedy_cb}
Assume $\fstar\in \cF$ and $\fstar(x,a)\in\brk{0,1}$. Suppose the decision-maker has access to an online regression oracle (\pref{def:online_regression_oracle}) with a guarantee $\EstSq(\cF, T, \delta)$. Then by choosing $\veps$ appropriately, the \egreedy
  algorithm ensures that with probability at least $1-\delta$,
  \begin{align*}
    \Reg \approxleq A^{1/3}T^{2/3}\cdot\EstSq(\cF,T, \delta)^{1/3}
  \end{align*}
  for any sequence $x\ind{1},\ldots,x\ind{T}$. As a special case, when
  $\cF$ is finite, if we use the (averaged) exponential
weights algorithm as an online regression oracle, the \egreedy
algorithm has
\begin{align*}
    \Reg \approxleq A^{1/3}T^{2/3}\cdot\log^{1/3}(\abs{\cF}/\delta).
\end{align*}
\end{prop}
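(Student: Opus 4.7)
My strategy is to mirror the analysis of $\varepsilon$-Greedy for the multi-armed bandit in \pref{prop:eps_greedy}, replacing the empirical means with the online regression oracle's estimator and replacing the hand-built concentration argument (Hoeffding) with the oracle's guarantee. At each round, let $\pihat\ind{t}=\argmax_{\pi}\fhat\ind{t}(x\ind{t},\pi)$ and let $p\ind{t}$ denote the $\varepsilon$-Greedy sampling distribution, so that $p\ind{t}(\pi)\geq \varepsilon/A$ for every $\pi\in\Pi$. Just as in the MAB proof, I would split the regret as
\[
\Reg \;=\; (1-\varepsilon)\sum_{t=1}^T\bigl(\fstar(x\ind{t},\pistar(x\ind{t}))-\fstar(x\ind{t},\pihat\ind{t})\bigr) + \varepsilon\sum_{t=1}^T\En_{\pi\sim\unif(\Pi)}\bigl[\fstar(x\ind{t},\pistar(x\ind{t}))-\fstar(x\ind{t},\pi)\bigr],
\]
and write off the second (exploration) term by $\varepsilon T$ using $\fstar\in[0,1]$. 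For the first (exploitation) term, I would apply the same add/subtract-$\fhat\ind{t}$ decomposition used in \pref{lem:erm_uniform_dev} and the proof of \pref{prop:eps_greedy}, yielding the per-step bound
\[
\fstar(x\ind{t},\pistar(x\ind{t}))-\fstar(x\ind{t},\pihat\ind{t})\leq 2\max_{\pi\in\Pi}\bigl|\fhat\ind{t}(x\ind{t},\pi)-\fstar(x\ind{t},\pi)\bigr|.
\]

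The key step, where the contextual case departs from the MAB analysis, is to translate the pointwise $\max_{\pi}$ error into the \emph{expected} squared error under $p\ind{t}$ that appears in the oracle's guarantee. Here I would use the uniform exploration floor $p\ind{t}(\pi)\geq \varepsilon/A$: for any fixed $\pi$,
\[
\bigl(\fhat\ind{t}(x\ind{t},\pi)-\fstar(x\ind{t},\pi)\bigr)^2 \leq \frac{A}{\varepsilon}\cdot \En_{\pi'\sim p\ind{t}}\bigl[(\fhat\ind{t}(x\ind{t},\pi')-\fstar(x\ind{t},\pi'))^2\bigr],
\]
so in particular the same bound holds with $\max_\pi$ on the left-hand side. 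Combined with the previous display, this gives
\[
\fstar(x\ind{t},\pistar(x\ind{t}))-\fstar(x\ind{t},\pihat\ind{t})\leq 2\sqrt{\tfrac{A}{\varepsilon}}\cdot\sqrt{\En_{\pi'\sim p\ind{t}}\bigl[(\fhat\ind{t}(x\ind{t},\pi')-\fstar(x\ind{t},\pi'))^2\bigr]}.
\]

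Summing over $t$ and applying Cauchy--Schwarz, the cumulative exploitation regret is at most
\[
2\sqrt{\tfrac{A}{\varepsilon}}\cdot\sqrt{T}\cdot\sqrt{\sum_{t=1}^T\En_{\pi'\sim p\ind{t}}\bigl[(\fhat\ind{t}(x\ind{t},\pi')-\fstar(x\ind{t},\pi'))^2\bigr]} \;\leq\; 2\sqrt{\tfrac{A T\cdot \EstSq(\cF,T,\delta)}{\varepsilon}},
\]
where the last inequality is exactly the online regression oracle guarantee of \pref{def:online_regression_oracle}, invoked with the actual sampling distributions $p\ind{t}$ used by $\varepsilon$-Greedy (which is legal because the oracle is fed the realized interaction data). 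Combining with the $\varepsilon T$ exploration term yields
\[
\Reg \;\approxleq\; \sqrt{\tfrac{AT\,\EstSq(\cF,T,\delta)}{\varepsilon}} + \varepsilon T.
\]
Balancing by choosing $\varepsilon \propto \bigl(A\,\EstSq(\cF,T,\delta)/T\bigr)^{1/3}$ gives the claimed $A^{1/3}T^{2/3}\EstSq(\cF,T,\delta)^{1/3}$ bound. The finite-class corollary follows by plugging in $\EstSq(\cF,T,\delta)\approxleq \log(|\cF|/\delta)$ from the averaged exponential weights oracle guarantee noted after \pref{def:online_regression_oracle}.

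\textbf{Main obstacle.} The only genuinely delicate step is the importance-weighting inequality that converts $\max_\pi$ of the pointwise estimation error into the on-policy expected squared error under $p\ind{t}$. It is crucial here that the $\varepsilon$-exploration provides a strictly positive lower bound $\varepsilon/A$ on every action's sampling probability, so that all inverse-probability weights are bounded by $A/\varepsilon$; without this floor, the greedy action's estimation error is not controlled by the oracle, exactly as in the pure-greedy failure of \pref{sec:need_for_exploration}. Everything else (the regret decomposition, the Cauchy--Schwarz step, and the final optimization in $\varepsilon$) is mechanical once this inequality is in place.
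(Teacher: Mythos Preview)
Your proposal is correct and follows essentially the same approach as the paper: decompose into exploitation plus $\varepsilon T$, bound the per-step exploitation error by the oracle's on-policy squared error via the $p\ind{t}(\pi)\geq\varepsilon/A$ floor, sum with Cauchy--Schwarz, and balance $\varepsilon$. The only cosmetic difference is that the paper bounds the exploitation error by $\sum_{\pi\in\{\pihat\ind{t},\pistar\}}|\fhat\ind{t}-\fstar|$ and applies Cauchy--Schwarz with $1/\sqrt{p\ind{t}(\pi)}$ weights on that two-term sum, whereas you pass through $2\max_\pi|\cdot|$ and use the floor directly on a single action; both routes lead to the same $\sqrt{A/\varepsilon}\cdot\sqrt{\En_{p\ind{t}}[(\fhat\ind{t}-\fstar)^2]}$ bound up to constants.
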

Notably, this result scales with $\log\abs{\cF}$ for \emph{any finite
  class}, analogous to regret bounds for offline/online supervised
learning. The $T^{2/3}$-dependence in the regret bound is suboptimal
(as seen for the special case of non-contextual bandits), which we
will address using more deliberate exploration methods in the sequel.

\begin{proof}[\pfref{prop:eps_greedy_cb}]
  Recall that $p\ind{t}$ denotes the randomization strategy on round
  $t$, computed after observing $x\ind{t}$. Following the same steps
  as the proof of \cref{prop:eps_greedy}, we can bound regret by
\begin{align*}
  \Reg
  = \sum_{t=1}^{T}\En_{\pi\ind{t}\sim{}p\ind{t}}\brk*{\fstar(x\ind{t},\pistar(x\ind{t}))
    - \fstar(x\ind{t}, \pi\ind{t})} 
  \leq \sum_{t=1}^{T}\fstar(x\ind{t},\pistar(x\ind{t}))
      - \fstar(x\ind{t},\pihat\ind{t})
      + \veps{}T,
\end{align*}
where the $\veps{}T$ term represents the bias incurred by exploring uniformly.

Fix $t$ and abbreviate $\pistar(x\ind{t})=\pistar$. We have
  \begin{align*}
	&\fstar(x\ind{t},\pistar)
	      - \fstar(x\ind{t},\pihat\ind{t})\\
     &= [\fstar(x\ind{t},\pistar)-\fhat\sups{t}(x\ind{t},\pistar)] + [\fhat\sups{t}(x\ind{t},\pistar) - \fhat\sups{t}(x\ind{t},\pihat\sups{t})] + [\fhat\sups{t}(x\ind{t},\pihat\sups{t}) - \fstar(x\ind{t},\pihat\sups{t})] \\
	 &\leq \sum_{\pi\in\{\pihat\ind{t}, \pistar\}} |\fstar(x\ind{t},\pi)-\fhat\sups{t}(x\ind{t},\pi)| \\
	 &=\sum_{\pi\in\{\pihat\ind{t}, \pistar\}} \frac{1}{\sqrt{p\ind{t}(\pi)}} \sqrt{p\ind{t}(\pi)}|\fstar(x\ind{t},\pi)-\fhat\sups{t}(x\ind{t},\pi)| .
  \end{align*}
By the Cauchy-Schwarz inequality, the last expression is at most
\begin{align}
  \label{eq:eps_greedy_min_probability}
	&\crl*{\sum_{\pi\in\{\pihat\ind{t}, \pistar\}} \frac{1}{p\ind{t}(\pi)}}^{1/2}\crl*{\sum_{\pi\in\{\pihat\ind{t}, \pistar\}}  p\ind{t}(\pi)\prn*{\fstar(x\ind{t},\pi)-\fhat\sups{t}(x\ind{t},\pi)}^2}^{1/2} \\
	&\leq \sqrt{\frac{2A}{\veps}} \crl*{\En_{\pi\ind{t}\sim p\ind{t}}\prn*{\fstar(x\ind{t},\pi\ind{t})-\fhat\sups{t}(x\ind{t},\pi\ind{t})}^2}^{1/2}.
\end{align}
Summing across $t$, this gives
\begin{align}
	\sum_{t=1}^{T}\fstar(x\ind{t},\pistar(x\ind{t}))
	      - \fstar(x\ind{t},\pihat\ind{t}) 
		  &\leq \sqrt{\frac{2A}{\veps}} \sum_{t=1}^{T} \crl*{\En_{\pi\ind{t}\sim p\ind{t}}\prn*{\fstar(x\ind{t},\pi\ind{t})-\fhat\sups{t}(x\ind{t},\pi\ind{t})}^2}^{1/2} \\
		  &\leq \sqrt{\frac{2AT}{\veps}} \crl*{\sum_{t=1}^{T} \En_{\pi\ind{t}\sim p\ind{t}}\prn*{\fstar(x\ind{t},\pi\ind{t})-\fhat\sups{t}(x\ind{t},\pi\ind{t})}^2}^{1/2}.
\end{align}
Now observe that the online regression oracle guarantees that with
probability $1-\delta$,
$$\sum_{t=1}^{T} \En_{\pi\ind{t}\sim
  p\ind{t}}\prn*{\fstar(x\ind{t},\pi\ind{t})-\fhat\sups{t}(x\ind{t},\pi\ind{t})}^2\leq
\EstSq(\cF,T, \delta).$$
Whenever this occurs, we have
\begin{align*}
\Reg \approxleq \sqrt{\frac{AT\EstSq(\cF,T,\delta)}{\veps}} + \veps T.
\end{align*}
Choosing $\veps$ to balance the two terms
leads to the claimed result.
\end{proof}

\subsection{Inverse Gap Weighting: An Optimal Algorithm for General
  Model Classes}
To conclude this section, we present a general, oracle-based algorithm for
contextual bandits which achieves
\begin{align*}
    \Reg \approxleq \sqrt{AT\log\abs{\cF}}
\end{align*}
for any finite class $\cF$. As with \egreedy, this approach has no
dependence on the cardinality $\abs{\cX}$ of the context space,
reflecting the ability to generalize across contexts. The dependence
on $T$ improves upon \egreedy, and is optimal.

To motivate the approach, recall that conceptually, the key step of the  proof of \pref{prop:eps_greedy_cb} involved relating the instantaneous regret 
\begin{align}
\label{eq:cb_inst_reg}
    \En_{\pi\ind{t}\sim{}p\ind{t}}\brk*{\fstar(x\ind{t},\pistar(x\ind{t}))
    - \fstar(x\ind{t}, \pi\ind{t})}
\end{align}
of the decision maker at time $t$ to the instantaneous estimation error
\begin{align}
\label{eq:cb_inst_sq_error}
\En_{\pi\ind{t}\sim p\ind{t}}\brk*{\prn[\big]{\fstar(x\ind{t},\pi\ind{t})-\fhat\sups{t}(x\ind{t},\pi\ind{t})}^2}
\end{align}
between $\fhat\ind{t}$ and $\fstar$ under the randomization
distribution $p\ind{t}$. The \egreedy exploration distribution gives a
way to relate these quantities, but the algorithm's regret is
suboptimal because the randomization distribution puts mass at least
$\veps/A$ on every action, even those that are clearly suboptimal and
should be discarded.  One can ask whether there exists a better
randomization strategy that still admits an upper bound on
\eqref{eq:cb_inst_reg} in terms of
\eqref{eq:cb_inst_sq_error}. \pref{prop:igw_mab} below establishes
exactly that. At first glance, this distribution might appear to be
somewhat arbitrary or ``magical'', but we will show in subsequent
chapters that it arises as a special case of more general---and in
some sense, universal---principle for
designing decision making algorithms, which extends well beyond
contextual bandits.
\begin{definition}[Inverse Gap Weighting \citep{abe1999associative,foster2020beyond}]
	Given a vector $\fhat=(\fhat(1), \ldots, \fhat(A))\in\bbR^{A}$, the Inverse Gap Weighting distribution $p=\igw_\gamma(\fhat(1), \ldots, \fhat(A))$ with parameter $\gamma\geq 0$ is defined as
\begin{equation}
      \label{eq:igw}
      p(\pi) = \frac{1}{\lambda + 2\gamma(\fhat(\pihat)-\fhat(\pi))},
    \end{equation}
where $\pihat=\argmax_\pi \fhat(\pi)$ is the greedy action, and where $\lambda\in\brk{1,A}$ is chosen such that
$\sum_{\pi}p(\pi)=1$.
\end{definition}
Above, the normalizing constant
  $\lambda\in\brk*{1,A}$ is always guaranteed to exist, because we have
  $\frac{1}{\lambda}\leq{}\sum_{\pi}p(\pi)\leq\frac{A}{\lambda}$,
  and because $\lambda\mapsto\sum_{\pi}p(\pi)$ is continuous over
  $\brk{1,A}$.

Let us give some intuition behind the distribution in \cref{eq:igw}.  
We can interpret the parameter $\gamma$ as trading off exploration and
exploitation. Indeed, $\gamma\to 0$ gives a uniform distribution,
while $\gamma\to\infty$ amplifies the gap between the greedy action
$\pihat$ and any action with $\fhat(\pi)<\fhat(\pihat)$, resulting in
a distribution supported only on actions that achieve the largest
estimated value $\fhat(\pihat)$.

The following fundamental technical result shows that playing the
Inverse Gap Weighting distribution always suffices to
link the instantaneous regret in \eqref{eq:cb_inst_reg} in to the
instantaneous estimation error in \eqref{eq:cb_inst_sq_error}.
  \begin{prop}
    \label{prop:igw_mab}
    Consider a finite decision space $\Pi=\crl{1,\ldots,A}$. For any
    vector $\fhat\in\reals^A$ and $\gamma>0$, define $p=\igw_\gamma(\fhat(1), \ldots, \fhat(A))$.
     This strategy guarantees that for all $\fstar\in\reals^A$,
     \begin{align}
       \label{eq:igw_mab}
     \En_{\pi\sim{}p}\brk*{\fstar(\pistar)-\fstar(\pi)} \leq 
 \frac{A}{\gamma} + \gamma\cdot\En_{\pi\sim{}p} \brk[\big]{(\fhat(\pi)-\fstar(\pi))^2}.
\end{align}
\end{prop}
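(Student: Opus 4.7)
The plan is to exploit the algebraic structure of the IGW weights together with carefully tuned AM--GM splits. Let $G(\pi) \ldef \fhat(\pi) - \fstar(\pi)$ and $Z(\pi) \ldef \fhat(\pihat) - \fhat(\pi) \geq 0$, so that by definition $1/p(\pi) = \lambda + 2\gamma Z(\pi)$. Using $\sum_\pi p(\pi) = 1$ and the trivial identity $\fstar(\pistar) = \fhat(\pistar) - G(\pistar)$, I will rewrite
\[
\En_{\pi\sim p}\brk*{\fstar(\pistar) - \fstar(\pi)}
= \underbrace{\En_{\pi\sim p}\brk*{\fhat(\pihat) - \fhat(\pi)}}_{(I)}
\;-\; \underbrace{Z(\pistar)}_{(II)}
\;+\; \underbrace{\prn*{\En_{\pi\sim p}\brk*{G(\pi)} - G(\pistar)}}_{(III)}.
\]
Term $(I)$ is a pure function of the weights, and plugging the IGW formula directly into $\sum_\pi p(\pi) Z(\pi)$ yields the exact closed-form identity $(I) = (A-\lambda)/(2\gamma) \leq A/(2\gamma)$. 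Term $(II)$ is a non-positive bonus that I will keep intact; its role is to absorb a problematic $Z(\pistar)$ contribution later on.

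For $(III)$, I bound $\En_\pi\brk*{G(\pi)} - G(\pistar) \leq \abs{\sum_\pi p(\pi) G(\pi)} + \abs{G(\pistar)}$. The first piece is handled by Cauchy--Schwarz ($\abs{\sum_\pi p(\pi) G(\pi)} \leq \sqrt{\En_\pi\brk*{G^2}}$) followed by the elementary inequality $\sqrt{x} \leq 1/(2\gamma) + \gamma x/2$. The second piece is the key one: writing $\abs{G(\pistar)} = \bigl(\sqrt{p(\pistar)}\,\abs{G(\pistar)}\bigr)\cdot\bigl(1/\sqrt{p(\pistar)}\bigr)$ and applying AM--GM with the split tuned so that the reciprocal side comes out as $1/(2\gamma p(\pistar))$, I obtain
\[
\abs{G(\pistar)} \leq \tfrac{1}{2\gamma p(\pistar)} + \tfrac{\gamma}{2}\, p(\pistar) G(\pistar)^2
\leq \tfrac{\lambda}{2\gamma} + Z(\pistar) + \tfrac{\gamma}{2}\,\En_\pi\brk*{G^2},
\]
where the second step uses $1/p(\pistar) = \lambda + 2\gamma Z(\pistar)$ and $p(\pistar) G(\pistar)^2 \leq \En_\pi\brk*{G^2}$.

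Adding everything, the $+Z(\pistar)$ produced by expanding $1/p(\pistar)$ is exactly cancelled by the $-Z(\pistar)$ from term $(II)$; this cancellation is the whole point of the decomposition. What remains is $(A-\lambda)/(2\gamma) + \lambda/(2\gamma) + 1/(2\gamma) + \gamma\En_\pi\brk*{G^2} \leq (A+1)/(2\gamma) + \gamma \En_\pi\brk*{G^2}$, which is bounded by $A/\gamma + \gamma \En_\pi\brk*{G^2}$ once we use $A\geq 1$ (the $A=1$ case is trivial since $\pi^\star$ is then unique).

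The main obstacle is really just bookkeeping: choosing the AM--GM weight in the bound on $\abs{G(\pistar)}$ so that the $Z(\pistar)$-coefficient comes out to exactly $1$ and is cancelled cleanly by $(II)$. A naive AM--GM split (e.g.\ the one that gives the classical weight $1/(4\gamma p(\pistar))$) would produce only a coefficient of $1/2$ on $Z(\pistar)$ and leave residual $Z(\pistar)$-dependent mass, yielding a looser bound such as $3A/(2\gamma) + 2\gamma \En_\pi\brk*{G^2}$. Matching the advertised constants requires the slightly more aggressive split described above, but no new ideas beyond it.
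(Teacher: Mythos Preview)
Your proof is correct and essentially the same as the paper's. The paper uses the decomposition
\[
\En_{\pi\sim p}\brk*{\fstar(\pistar)-\fstar(\pi)}
= \underbrace{\En_{\pi\sim p}\brk*{\fhat(\pihat)-\fhat(\pi)}}_{\text{(I)}}
+ \underbrace{\En_{\pi\sim p}\brk*{\fhat(\pi)-\fstar(\pi)}}_{\text{(II)}}
+ \underbrace{\fstar(\pistar)-\fhat(\pihat)}_{\text{(III)}},
\]
which is algebraically identical to yours: your $-Z(\pistar)$ together with your $|G(\pistar)|$ piece is exactly the paper's term (III), and your $\En_\pi[G(\pi)]$ piece is the paper's (II). The paper bounds (I) by $(A-1)/(2\gamma)$ (you sharpen this to the exact identity $(A-\lambda)/(2\gamma)$), handles (II) by Cauchy--Schwarz plus AM--GM just as you do, and for (III) writes $\fstar(\pistar)-\fhat(\pistar) - Z(\pistar)$, applies the same $\sqrt{p(\pistar)}$-weighted AM--GM to the first piece, and observes that $\tfrac{1}{2\gamma p(\pistar)} - Z(\pistar) = \tfrac{\lambda}{2\gamma}$---precisely the cancellation you highlight. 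The only cosmetic difference is that you pull $-Z(\pistar)$ out as a standalone term from the start, making the cancellation more visible, whereas the paper keeps it bundled inside (III); your intermediate constant $(A+1)/(2\gamma)$ is actually slightly sharper than the paper's $A/\gamma$ before the final loosening.
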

\begin{proof}[\pfref{prop:igw_mab}]
  We break the ``regret'' term on the \lhs of \cref{eq:igw_mab} into three terms:
	\begin{align*}
		\hspace{-8mm}\En_{\pi\sim{}p}\brk[\big]{ \fstar(\pistar)-\fstar(\pi)}
		= \underbrace{\En_{\pi\sim{}p}\brk[\big]{ \fhat(\pihat)-\fhat(\pi)} }_{\text{(I) exploration bias}} 
		+  \underbrace{\En_{\pi\sim{}p}\brk[\big]{
          \fhat(\pi)-\fstar(\pi) }  }_{\text{(II) est. error on policy}} 
		+ \underbrace{ \fstar(\pistar)-\fhat(\pihat) }_{\text{(III) est. error at opt}} .
	\end{align*}
	The first term asks ``how much would we lose by exploring, if
        $\fhat$ were the true reward function?'', and is equal to
	\begin{align*}
		\sum_{\pi} \frac{\fhat(\pihat) - \fhat(\pi)}{\lambda + 2\gamma\prn[\big]{\fhat(\pihat) - \fhat(\pi)}} \leq \frac{A-1}{2\gamma},
	\end{align*}
	while the second term is at most
	\begin{align*}
		&\sqrt{\En_{\pi\sim{}p}\brk[\big]{( \fhat(\pi)-\fstar(\pi))^2}} \leq \frac{1}{2\gamma} + \frac{\gamma}{2}\En_{\pi\sim{}p}( \fhat(\pi)-\fstar(\pi))^2.  
	\end{align*}
	The third term can be further written as
	\begin{align*}
    	\fstar(\pistar)-\fhat(\pistar) 
			- (\fhat(\pihat) -\fhat(\pistar)) 
			&\leq \frac{\gamma}{2} p(\pistar) \prn{\fstar(\pistar)-\fhat(\pistar)  }^2 + \frac{1}{2\gamma p(\pistar)} - (\fhat(\pihat) -\fhat(\pistar)) \\
			&\leq \frac{\gamma}{2} \En_{\pi\sim{}p}( \fstar(\pi) - \fhat(\pi))^2
			+ \brk*{\frac{1}{2\gamma p(\pistar)} - (\fhat(\pihat) -\fhat(\pistar))}.
	\end{align*}
	The term in brackets above is equal to
	\begin{align*}
		\frac{\lambda + 2\gamma(\fhat(\pihat) -\fhat(\pistar))}{2\gamma} - (\fhat(\pihat) -\fhat(\pistar)) = \frac{\lambda}{2\gamma}\leq \frac{A}{2\gamma}.
	\end{align*}
\end{proof}

The simple result we just proved is remarkable. The special \igw~ strategy guarantees a relation between regret and estimation error for \textit{any} estimator $\fhat$ and any $\fstar$, irrespective of the problem structure or the class $\cF$. \pref{prop:igw_mab} will be at the core of the development for the rest of the course, and will be greatly generalized to general decision making problems and reinforcement learning.

Below, we present a contextual bandit algorithm called \squarecb
\citep{foster2020beyond} which makes use of the
Inverse Gap Weighting distribution.
\begin{whiteblueframe}
  \begin{algorithmic}
    \State \textsf{\squarecb}
    \State \textsf{Input:} Exploration parameter $\gamma>0$.
\For{$t=1,\ldots,T$}
\State Obtain $\fhat\ind{t}$ from online regression oracle with $(x\ind{1},\pi\ind{1},r\ind{1}),\ldots,(x\ind{t-1},\pi\ind{t-1},r\ind{t-1})$. 
\State Observe $x\ind{t}$.
\State Compute $p\ind{t} = \igw_\gamma\prn*{\fhat\ind{t}(x\ind{t},1), \ldots, \fhat\ind{t}(x\ind{t},A)}$.
\State Select action $\pi\ind{t}\sim p\ind{t}$.
\State Observe reward $r\sups{t}$.
\EndFor{}
\end{algorithmic}
\end{whiteblueframe}
At each step $t$, the algorithm uses an
online regression oracle to compute a reward estimator
$\fhat\ind{t}(x,a)$ based on the data $\hist\ind{t-1}$ collected so
far. Given this estimator, the algorithm uses Inverse Gap Weighting to
compute
$p\ind{t}=\igw_\gamma(\fhat\ind{t}(x\ind{t},\cdot))$ as an exploratory
distribution, then samples $\pi\ind{t}\sim{}p\ind{t}$.

The following result, which is a near-immediate consequence of
\cref{prop:igw_mab}, gives a regret bound for this algorithm.
\begin{prop}
  \label{prop:squarecb}
    Given a class $\cF$ with $\fstar\in\cF$, assume the decision-maker
    has access to an online regression oracle
    (\pref{def:online_regression_oracle}) with estimation error $\EstSq(\cF, T, \delta)$. Then \textsf{\squarecb} with $\gamma = \sqrt{TA / \EstSq(\cF, T, \delta)}$ attains a regret bound of
    $$\Reg \lesssim \sqrt{A T \EstSq(\cF, T, \delta)}$$
    with probability at least $1-\delta$ for any sequence $x\ind{1},\ldots,x\ind{T}$.
    As a special case, when $\cF$ is finite, the averaged exponential
    weights algorithm achieves $\EstSq(\cF, T, \delta)\lesssim \log(\abs{\cF}/\delta)$, leading to
    $$\Reg \lesssim \sqrt{A T \log(\abs{\cF}/\delta}).$$
\end{prop}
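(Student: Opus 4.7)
The plan is to combine the per-step guarantee of the Inverse Gap Weighting distribution (\pref{prop:igw_mab}) with the cumulative guarantee of the online regression oracle (\pref{def:online_regression_oracle}), then optimize the exploration parameter $\gamma$. Since the algorithm applies $\igw_\gamma$ at each round to the vector $(\fhat\ind{t}(x\ind{t},1),\ldots,\fhat\ind{t}(x\ind{t},A))$, this is essentially a ``plug and sum'' argument, and I expect no real obstacle.

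First I would fix a round $t$ and invoke \pref{prop:igw_mab} with $\fhat=\fhat\ind{t}(x\ind{t},\cdot)$ and $\fstar=\fstar(x\ind{t},\cdot)$ viewed as vectors in $\reals^A$. This is valid because the \igw~ distribution $p\ind{t}$ used by \squarecb is exactly $\igw_\gamma(\fhat\ind{t}(x\ind{t},\cdot))$, and the optimal action $\pistar(x\ind{t})$ plays the role of $\pistar$. This yields the per-round bound
\[
\En_{\pi\ind{t}\sim p\ind{t}}\brk[\big]{\fstar(x\ind{t},\pistar(x\ind{t}))-\fstar(x\ind{t},\pi\ind{t})}
\leq \frac{A}{\gamma} + \gamma\cdot\En_{\pi\ind{t}\sim p\ind{t}}\brk[\big]{(\fhat\ind{t}(x\ind{t},\pi\ind{t})-\fstar(x\ind{t},\pi\ind{t}))^2}.
\]

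Next I would sum this inequality over $t=1,\ldots,T$. The left-hand side becomes the regret $\Reg$, and the first term on the right contributes $AT/\gamma$. For the second term, I invoke the online regression oracle guarantee: since the data $(x\ind{t},\pi\ind{t},r\ind{t})$ is collected under the decision-maker's randomization $p\ind{t}$ and satisfies $\En\brk{r\ind{t}\mid x\ind{t},\pi\ind{t}}=\fstar(x\ind{t},\pi\ind{t})$, with probability at least $1-\delta$ we have
\[
\sum_{t=1}^T \En_{\pi\ind{t}\sim p\ind{t}}\brk[\big]{(\fhat\ind{t}(x\ind{t},\pi\ind{t})-\fstar(x\ind{t},\pi\ind{t}))^2}\leq \EstSq(\cF,T,\delta).
\]
Combining these yields, on this high-probability event,
\[
\Reg\leq \frac{AT}{\gamma}+\gamma\cdot\EstSq(\cF,T,\delta).
\]

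Finally, choosing $\gamma=\sqrt{AT/\EstSq(\cF,T,\delta)}$ to balance the two terms gives $\Reg\lesssim \sqrt{AT\cdot\EstSq(\cF,T,\delta)}$, which is the stated bound. The special case for finite $\cF$ then follows by substituting the $\log(\abs{\cF}/\delta)$ guarantee of averaged exponential weights from \pref{prop:online_bounds}. The only mildly subtle point is confirming that \pref{prop:igw_mab}, which is phrased for a fixed (non-contextual) vector, applies pointwise in $x\ind{t}$ to yield the per-round bound above; this is immediate since the IGW property is a purely algebraic inequality over $\reals^A$ and does not care how $\fhat$ or $\fstar$ were produced.
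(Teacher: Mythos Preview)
Your proposal is correct and follows essentially the same approach as the paper: apply \pref{prop:igw_mab} pointwise at each context $x\ind{t}$, sum over rounds, invoke the oracle bound on the cumulative squared error, and balance $\gamma$. The only cosmetic difference is that the paper first adds and subtracts $\gamma$ times the squared error from the regret (so the oracle bound is used before the per-step IGW inequality), whereas you apply the IGW inequality first and then sum; the two orderings are equivalent.
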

\begin{proof}[\pfref{prop:squarecb}]
  We begin with regret, then add and subtract the squared estimation
  error as follows:
    \begin{align*}
    \Reg
    &= \sum_{t=1}^{T}\En_{\pi\ind{t}\sim{}p\ind{t}}\brk*{\fstar(x\ind{t}, \pistar) - \fstar(x\ind{t}, \pi\ind{t})} \\
    &= \sum_{t=1}^{T}\En_{\pi\ind{t}\sim{}p\ind{t}}\brk*{\fstar(x\ind{t}, \pistar)
    - \fstar(x\ind{t}, \pi\ind{t})
    - \gamma\cdot{}(\fstar(x\ind{t}, \pi\ind{t})-\fhat\ind{t}(x\ind{t}, \pi\ind{t}))^2} + \gamma\cdot\EstSq(\cF,T, \delta).
    \end{align*}
    By appealing to \pref{prop:igw_mab} with $\fhat(x\ind{t},\cdot)$
    and $\fstar(x\ind{t},\cdot)$, for each step $t$, we have
    \begin{align*}
        \En_{\pi\ind{t}\sim{}p\ind{t}}\brk*{\fstar(x\ind{t}, \pistar) - \fstar(x\ind{t}, \pi\ind{t}) - \gamma\cdot{}(\fstar(x\ind{t}, \pi\ind{t})-\fhat\ind{t}(x\ind{t}, \pi\ind{t}))^2} \leq \frac{A}{\gamma},
    \end{align*}
    and thus
    $$\Reg \leq \frac{TA}{\gamma} + \gamma \cdot \EstSq(\cF, T,
    \delta).$$
    Choosing $\gamma$ to balance these terms yields the result.
\end{proof}

If the online regression oracle is minimax optimal (that is,
$\EstSq(\cF, T, \delta)$ is the ``best possible'' for $\cF$) then
\textsf{\squarecb} is also minimax optimal for $\cF$. Thus, $\igw$ not only provides a connection between online supervised learning and decision making, but it does so in an optimal fashion. Establishing minimax optimality is beyond the scope of this course: it requires understanding of minimax optimality of online regression with arbitrary $\cF$, as well as lower bound on regret of contextual bandits with arbitrary sequences of contexts. We refer to \citet{foster2020beyond} for details.

\subsubsection{Extending to Offline Regression}
\label{sec:offline_regression}

When $x\ind{1},\ldots,x\ind{T}$ are i.i.d., it is natural to ask
whether an online regression method that works for arbitrary sequences
is necessary, or whether one can work with a weaker oracle tuned to
i.i.d. data. For \squarecb, it turns out that any oracle for
\emph{offline regression} (defined below) is sufficient. 

\begin{definition}[Offline Regression Oracle]
    \label{def:offline_regression_oracle}
		Given $$(x\ind{1},\pi\ind{1},r\ind{1}),\ldots,(x\ind{t-1},\pi\ind{t-1},r\ind{t-1})$$ where $x\ind{1},\ldots, x\ind{t-1}$ are i.i.d., $\pi\ind{i}\sim p(x\ind{i})$ for fixed  $p:\cX\to\Delta(\Pi)$ and $\En\brk{r\ind{i}|x\ind{i}, \pi\ind{i}} = \fstar(x\ind{i}, \pi\ind{i})$,
		an \emph{offline regression oracle} returns a function $\fhat:\cX\times\Pi\to\reals$ such that
	$$\En_{x, \pi\sim p(x)} (\fhat(x,\pi)-\fstar(x,\pi))^2 \leq t^{-1} \EstSqOff(\cF, t, \delta)$$
	with probability at least $1-\delta$.
\end{definition}
Note that the normalization $t^{-1}$ above is introduced to keep the
scaling consistent with our conventions for offline estimation.

Below, we state a variant of \squarecb which is adapted to offline
oracles \citep{simchi2020bypassing}. Compared to the \squarecb for online oracles, the main change
is that we update the estimation oracle and exploratory
distribution on an epoched schedule as opposed to updating at every
round. In addition, the parameter $\gamma$ for the Inverse Gap
Weighting distribution changes as a function of the epoch.
    \begin{whiteblueframe}
  \begin{algorithmic}
    \State \textsf{\squarecb with offline oracles}
    \State \textsf{Input:} Exploration parameters $\gamma_1,\gamma_2,\ldots$ and epoch sizes $\tau_1,\tau_2,\ldots$
    \For{$m=1,2,\ldots$}
    \State Obtain $\fhat\ind{m}$ from offline regression oracle with\\ \qquad\qquad$(x\ind{\tau_{m-2}+1},\pi\ind{\tau_{m-2}+1},r\ind{\tau_{m-2}+1}),\ldots,(x\ind{\tau_{m-1}},\pi\ind{\tau_{m-1}},r\ind{\tau_{m-1}})$. 
    \For{$t=\tau_{m-1}+1,\ldots,\tau_m$}
\State Observe $x\ind{t}$.
\State Compute $p\ind{t} = \igw_{\gamma_m}\prn*{\fhat\ind{m}(x\ind{t},1), \ldots, \fhat\ind{m}(x\ind{t},A)}$.
\State Select action $\pi\ind{t}\sim p\ind{t}$.
\State Observe reward $r\sups{t}$.
\EndFor{}
\EndFor{}
\end{algorithmic}
\end{whiteblueframe}
While this algorithm is quite intuitive, proving a regret bound for it
is quite non-trivial---much more so than the online oracle
variant. They key challenge is that, while the contexts
$x\ind{1},\ldots,x\ind{T}$ are \iid, the decisions
$\pi\ind{1},\ldots,\pi\ind{T}$ evolve in a time-dependent fashion,
which makes it unclear to invoke the guarantee in
\cref{def:offline_regression_oracle}. Nonetheless, the following
remarkable result shows that this algorithm attains a regret bound
similar to that of \cref{prop:squarecb}.
\begin{prop}[\citet{simchi2020bypassing}]
    Let $\tau_m = 2^{m}$ and $\gamma_m = \sqrt{AT/\EstSqOff(\cF,
      \tau_{m-1}, \delta)}$ for $m=1,2,\ldots$. Then with probability
    at least $1-\delta$, regret of \textsf{\squarecb} with an offline
    oracle is at most
    $$
        \Reg \lesssim  \sum_{m=1}^{\lceil \log T\rceil} \sqrt{ A \cdot \tau_m \cdot \EstSqOff(\cF, \tau_{m}, \delta/m^2)}.
    $$
\end{prop}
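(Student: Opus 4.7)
The strategy is an epoch-by-epoch analysis. The key structural observation that makes the offline oracle usable here is that within epoch $m$ the estimator $\fhat\ind{m}$ is \emph{frozen}, so the sampling policy $p\ind{m}(\cdot\mid x)=\igw_{\gamma_m}(\fhat\ind{m}(x,\cdot))$ is a deterministic map $x\mapsto\Delta(\Pi)$. Consequently, conditional on $\fhat\ind{m}$, the pairs $(x\ind{t},\pi\ind{t})$ collected during epoch $m$ are i.i.d.\ from the joint law $x\sim D$, $\pi\sim p\ind{m}(\cdot\mid x)$. This is precisely the setting required by \pref{def:offline_regression_oracle} when its data come from a single epoch.

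\textbf{Step 1 (per-epoch regret via IGW).} For each round $t$ in epoch $m$, I would apply \pref{prop:igw_mab} pointwise in $x\ind{t}$ to the vectors $\fhat\ind{m}(x\ind{t},\cdot)$ and $\fstar(x\ind{t},\cdot)$, sum over $t$, and use a Freedman-type inequality (justified by the conditional i.i.d.\ structure) to obtain, with probability $1-\delta/(4m^2)$,
$$\Reg_m\lesssim \frac{A\tau_m}{\gamma_m}+\gamma_m\,\tau_m\cdot\En_{x\sim D,\,\pi\sim p\ind{m}(x)}\brk*{(\fhat\ind{m}(x,\pi)-\fstar(x,\pi))^2}.$$
This mirrors the derivation of \pref{prop:squarecb}, except that the squared-error term now needs to be controlled against a policy different from the one used to collect the training data.

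\textbf{Step 2 (main obstacle: off-policy to on-policy transfer).} This is the crux. The offline oracle only bounds the error of $\fhat\ind{m}$ under the distribution induced by the \emph{previous} epoch's policy $p\ind{m-1}$:
$$\En_{x,\pi\sim p\ind{m-1}(x)}\brk*{(\fhat\ind{m}(x,\pi)-\fstar(x,\pi))^2}\leq \tau_{m-1}^{-1}\,\EstSqOff(\cF,\tau_{m-1},\delta/(4m^2)),$$
whereas Step 1 needs the analogous expectation under $p\ind{m}$. Bounding the likelihood ratio $p\ind{m}/p\ind{m-1}$ pointwise is hopeless: the two policies are defined by different estimators, their greedy actions may differ, and the ratio can blow up. Following \citet{simchi2020bypassing}, I would prove a one-sided \emph{transfer inequality} that upper bounds the expected squared error under $p\ind{m}$ by the squared error under $p\ind{m-1}$ plus a correction of order $A/\gamma_m$ per round, which is of the same magnitude as the exploration bias in Step 1 and can therefore be absorbed. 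The proof of this transfer inequality exploits two features specific to IGW: every action receives mass at least $1/(\lambda+2\gamma\Delta)$ (so the greedy action is always covered with mass $\geq 1/A$), and any action with large estimated gap receives mass at most $1/(2\gamma\Delta)$. These two bounds, together with the doubling schedule (which keeps $\gamma_m/\gamma_{m-1}$ bounded), let one exchange the two measures at a cost that matches the existing IGW exploration bias. This step is the sole nontrivial departure from the online-oracle proof and is where the entire argument lives or dies.

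\textbf{Step 3 (combine, tune, sum).} Plugging the transfer inequality and the oracle bound into Step 1 gives
$$\Reg_m\lesssim \frac{A\tau_m}{\gamma_m}+\gamma_m\cdot\frac{\tau_m}{\tau_{m-1}}\cdot\EstSqOff(\cF,\tau_{m-1},\delta/(4m^2)),$$
and with $\tau_m=2\tau_{m-1}$ the ratio is a constant. Substituting $\gamma_m=\sqrt{AT/\EstSqOff(\cF,\tau_{m-1},\delta)}$ and simplifying yields the per-epoch bound $\Reg_m\lesssim \sqrt{A\,\tau_m\,\EstSqOff(\cF,\tau_m,\delta/m^2)}$. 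A union bound over the $\lceil\log T\rceil$ epochs at level $\delta/m^2$ (using $\sum_m m^{-2}\leq \pi^2/6$) and summation give the stated bound. The main obstacle throughout is Step 2; Steps 1 and 3 are routine adaptations of the analysis of \pref{prop:squarecb} to the epoched schedule.
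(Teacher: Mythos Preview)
The paper does not prove this proposition; it is stated with attribution to \citet{simchi2020bypassing}, and the surrounding text explicitly remarks that the analysis is ``quite non-trivial---much more so than the online oracle variant.'' So there is nothing in-paper to compare against directly.

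On your proposal itself: Steps 1 and 3 are correct and routine, and you have correctly isolated the entire difficulty in Step 2. However, the mechanism you sketch there---a direct one-step transfer bounding the squared error under $p\ind{m}$ by the squared error under $p\ind{m-1}$ plus a small additive correction---does not match the argument in \citet{simchi2020bypassing}, and as stated it would not close. The obstruction is exactly the one you flag: the greedy action under $\fhat\ind{m}$ may differ from that under $\fhat\ind{m-1}$, and the pointwise ratio $p\ind{m}(a\mid x)/p\ind{m-1}(a\mid x)$ at the new greedy action can be of order $\gamma_{m-1}$, far too large to be absorbed by an additive correction of the size you propose. The IGW lower-bound and doubling-schedule facts you cite are needed but are not enough to control this ratio on their own.

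The actual argument is \emph{inductive and absolute} rather than relative: one proves by induction on $m$ that, with high probability, every $p\ind{m}(a\mid x)$ is lower bounded by a constant multiple of $\bigl(A+\gamma_m(\fstar(x,\pistar(x))-\fstar(x,a))\bigr)^{-1}$, i.e., the IGW distribution computed from the \emph{true} gaps. The inductive step uses this bound for $p\ind{m-1}$ together with the offline oracle guarantee to show that $\fhat\ind{m}$ approximates $\fstar$ well enough (in a gap-weighted sense) that its induced IGW distribution inherits the same absolute lower bound with parameter $\gamma_m$. Once this lower bound holds for all epochs, both $p\ind{m-1}$ and $p\ind{m}$ are anchored to a common reference, and the change of measure needed in Step 2 is bounded by a constant. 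The missing piece in your sketch is this inductive ``implicit exploration'' lemma; without it there is no common anchor and the direct transfer you describe cannot be carried out.
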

Under mild assumptions, above bound scales as
    $$
        \Reg \lesssim \sqrt{ A \cdot T \cdot \EstSqOff(\cF, \tau_{m}, \delta/\log T)}.
    $$
For a finite class $\cF$, we recall from \cref{sec:intro} that
empirical risk with the square loss (least squares) achieves
$\EstSqOff(\cF,T,\delta)\approxleq\log(\abs{\cF}/\delta)$, which gives
    $$
        \Reg \lesssim \sqrt{ AT \log(\abs{\cF}/\delta)}.
        $$

\subsection{Exercises}

\begin{exe}[Unstructured Contextual Bandits]
  \label{ex:unstructured_cb}
  Consider a contextual bandit problem with a finite set $\cX$ of
  possible contexts, and a finite set of actions $\cA$. Show that
  running UCB independently for each context yields a regret bound of
  the order $\wt{O}(\sqrt{|X|\abs{\cA}T})$ in expectation, ignoring
  logarithmic factors. In the setting where
  $\cF=\cX\times\cA\to\brk{0,1}$ is unstructured, and consists of all
  possible functions, this is essentially optimal.
\end{exe}

\begin{exe}[$\veps$-Greedy with Offline Oracles]
  \label{ex:offline_oracles_eps_greedy}
In \pref{prop:eps_greedy_cb}, we analyzed the $\varepsilon$-Greedy
contextual bandit algorithm assuming access to an online regression
oracle. Because we appeal to online learning, this algorithm was able
to handle adversarial contexts $x\ind{1},\ldots,x\ind{T}$. In the present problem, we will modify the $\veps$-greedy algorithm and proof to show that if contexts are stochastic (that is $x\ind{t}\sim{}\cD\;\;\forall{}t$, where $\cD$ is a fixed distribution), $\veps$-greedy works even if we use an \emph{offline oracle} (\pref{def:offline_regression_oracle}).

We consider the following variant of $\veps$-greedy. The algorithm  proceeds in epochs $m=0,1,\ldots$ of doubling size 
$$\crl{2}, \crl{3, 4}, \crl{5 \ldots 8}, \ldots,
\underbrace{\crl{2^{m}+1,2^{m+1}}}_{\text{epoch } m} , \ldots,
\crl{T/2+1, T};$$ we assume without loss of generality that $T$ is a power of 2, and that an arbitrary decision is made on round $t=1$. At the end of each epoch $m-1$, the offline oracle is invoked with the data from the epoch, producing an estimated model $\widehat{f}^{m}$. This model is used for the greedy step in the next epoch $m$. In other words, for any round $t\in[2^{m}+1,2^{m+1}]$ of epoch $m$, the algorithm observes $x\ind{t}\sim \cD$, chooses an action $\pi\ind{t}\sim\unif[A]$ with probability $\veps$ and chooses the greedy action 
$$\pi^t = \argmax_{\pi\in[A]} \fhat\ind{m}(x\ind{t}, \pi)$$
with probability $1-\veps$. Subsequently, the reward $r\ind{t}$ is
observed.

\begin{enumerate}[wide, labelwidth=!, labelindent=0pt]
\item Prove that for any $T\in\bbN$ and $\delta>0$, by setting $\veps$ appropriately, this method ensures that with probability at least $1-\delta$,
$$\Reg\lesssim A^{1/3} T^{1/3} \prn*{\sum_{m=1}^{\log_2 T}
  2^{m/2}\EstSqOff(\cF, 2^{m-1}, \delta/m^2)^{1/2}}^{2/3}$$
\item Recall that for a finite class, ERM achieves
  $\EstSqOff(\cF,T,\delta)\lesssim\log(\abs{\cF}/\delta)$. Show that
  with this choice, the above upper bound matches that in \pref{prop:eps_greedy_cb}, up to logarithmic in $T$ factors.
\end{enumerate}
\end{exe}

\begin{exe}[Model Misspecification in Contextual Bandits]
  \newcommand{\EstAlt}{\mathrm{\mathbf{Reg}}_{\mathsf{Sq}}}
  In \pref{prop:squarecb}, we showed that for contextual bandits with a general class $\cF$, SquareCB attains regret
  \begin{equation}
  \Reg \lesssim \sqrt{A T\cdot\EstSq(\cF, T, \delta)}.\label{eq:squarecb}
  \end{equation}
  To do so, we assumed that $\fstar\in\cF$, where
  $\fstar(x,a)\ldef\En_{r\sim{}\Mstar(\cdot\mid x,a)}\brk{r}$; that
  is, we have a well-specified model. In practice, it may be
  unreasonable to assume that we have $\fstar\in\cF$. Instead, a
  weaker assumption is that there exists some function $\fbar\in\cF$ such that
  \[
  \max_{x\in\cX,a\in\cA}\abs{\fbar(x,a)-\fstar(x,a)}\leq\veps
  \]
  for some $\veps>0$; that is, the model is \emph{$\veps$-misspecified}. In this problem, we will generalize the regret bound for SquareCB to handle misspecification. Recall that in the lecture notes, we assumed (\pref{def:online_regression_oracle}) that the regression oracle satisfies 
  \[
  \sum_{t=1}^T \En_{\pi\ind{t}\sim p\ind{t}}\brk[\big]{(\fhat\ind{t}(x\ind{t},\pi\ind{t})-\fstar(x\ind{t},\pi\ind{t}))^2} \leq \EstSq(\cF, T, \delta).
  \]
  In the misspecified setting, this is too much to ask for. Instead, we will assume that the oracle satisfies the following guarantee for \emph{every sequence}:
  \[
  \sum_{t=1}^T(\fhat\ind{t}(x\ind{t},\pi\ind{t})-r\ind{t})^2 - \min_{f\in\cF}\sum_{t=1}^T(f(x\ind{t},\pi\ind{t})-r\ind{t})^2 \leq \EstAlt(\cF,T).
  \]
  Whenever $\fstar\in\cF$, we have
  $\EstSq(\cF,T,\delta)\approxleq\EstAlt(\cF,T)+\log(1/\delta)$ with
  probability at least $1-\delta$. However, it is possible to keep
  $\EstAlt(\cF,T)$ small even when $\fstar\notin\cF$. For example, the averaged exponential weights algorithm satisfies this guarantee with $\EstAlt(\cF,T)\approxleq\log\abs{\cF}$, regardless of whether $\fstar\in\cF$. 
  
  We will show that for every $\delta>0$, with an appropriate choice of $\gamma$, SquareCB (that is, the algorithm that chooses $p\ind{t}=\igw_{\gamma}(\fhat\ind{t}(x\ind{t},\cdot))$) ensures that with probability at least $1-\delta$,
  \[
  \Reg \approxleq \sqrt{A T\cdot(\EstAlt(\cF, T)+\log(1/\delta))} + \veps\cdot{}A^{1/2}T.
  \]
  Assume that all functions in $\cF$ and rewards take values in $[0,1]$.
  
  \begin{enumerate}[wide, labelwidth=!, labelindent=0pt]
    \item
    Show that for any sequence of estimators $\fhat\ind{1},\ldots,\fhat\ind{t}$, by choosing $p\ind{t}=\igw_{\gamma}(\fhat\ind{t}(x\ind{t},\cdot))$, we have that
    \[
    \Reg = \sum_{t=1}^{T}\En_{\pi\ind{t}\sim{}p\ind{t}}\brk*{\fstar(x\ind{t},\pistar(x\ind{t})) - \fstar(x\ind{t},\pi\ind{t})}
    \approxleq \frac{AT}{\gamma} + \gamma\sum_{t=1}^{T}\En_{\pi\ind{t}\sim{}p\ind{t}}\brk*{(\fhat\ind{t}(x\ind{t},\pi\ind{t})-\fbar(x\ind{t},\pi\ind{t}))^2}
    + \veps{}T.
    \]
    If we had $\fstar=\fbar$, this would follow from \pref{prop:igw_mab}, but the difference is that in general ($\fbar\neq\fstar$),  the expression above measures estimation error with respect to the best-in-class model $\fbar$ rather than the true model $\fstar$ (at the cost of an extra $\veps{}T$ factor).
    
    \item  
    Show that the following inequality holds for every sequence
    \[
    \sum_{t=1}^{T}(\fhat\ind{t}(x\ind{t},\pi\ind{t})-\fbar(x\ind{t},\pi\ind{t}))^2
    \leq \EstAlt(\cF,T) + 2\sum_{t=1}^{T}(r\ind{t}-\fbar(x\ind{t},\pi\ind{t}))(\fhat\ind{t}(x\ind{t},\pi\ind{t}) - \fbar(x\ind{t},\pi\ind{t})).
    \]
    \item
    Using Freedman's inequality (\cref{lem:freedman}), show that with probability at least $1-\delta$,
    \[
    \sum_{t=1}^{T}\En_{\pi\ind{t}\sim{}p\ind{t}}\brk*{(\fhat\ind{t}(x\ind{t},\pi\ind{t})-\fbar(x\ind{t},\pi\ind{t}))^2}
    \leq 2\sum_{t=1}^{T}(\fhat\ind{t}(x\ind{t},\pi\ind{t})-\fbar(x\ind{t},\pi\ind{t}))^2 + \bigoh(\log(1/\delta)).
    \]
    \item
    Using Freedman's inequality once more, show that with probability at least $1-\delta$,
    \[
    2\sum_{t=1}^{T}(r\ind{t}-\fbar(x\ind{t},\pi\ind{t}))(\fhat\ind{t}(x\ind{t},\pi\ind{t}) - \fbar(x\ind{t},\pi\ind{t}))
    \leq \frac{1}{4}\sum_{t=1}^{T}\En_{\pi\ind{t}\sim{}p\ind{t}}\brk*{(\fhat\ind{t}(x\ind{t},\pi\ind{t})-\fbar(x\ind{t},\pi\ind{t}))^2} + \bigoh(\veps^2T + \log(1/\delta)).
    \]
    Conclude that with probability at least $1-\delta$,
    \[
    \sum_{t=1}^{T}\En_{\pi\ind{t}\sim{}p\ind{t}}\brk*{(\fhat\ind{t}(x\ind{t},\pi\ind{t})-\fbar(x\ind{t},\pi\ind{t}))^2}
    \approxleq \EstAlt(\cF,T) + \veps^2T + \log(1/\delta).
    \]
    \item Combining the previous results, show that for any $\delta>0$, by choosing $\gamma>0$ appropriately, we have that with probability at least $1-\delta$,
    \[
    \Reg \approxleq \sqrt{A T\cdot(\EstAlt(\cF, T)+\log(1/\delta))} + \veps\cdot{}A^{1/2}T.
    \]  
  \end{enumerate}
\end{exe}

\section{Structured Bandits}
\label{sec:structured}

Up to this point, we have focused our attention on bandit problems
(with or without contexts) in
which the decision space $\Pi$ is a small, finite set. This section introduces the \emph{structured bandit}
problem, which generalizes the basic (non-contextual) multi-armed bandit problem by
allowing for large, potentially infinite or continuous decision spaces. The
protocol for the setting is as follows.
\begin{whiteblueframe}
  \begin{algorithmic}
 \State \textsf{Structured Bandit Protocol}
\For{$t=1,\ldots,T$}
\State Select decision $\pi\sups{t}\in\Pi$.\hfill\algcomment{$\Pi$ is
  large and potentially continuous.}
\State Observe reward $r\sups{t}\in\bbR$.
\EndFor{}
\end{algorithmic}
\end{whiteblueframe}

This protocol is exactly the same as for multi-armed bandits
(\pref{sec:mab}), except that we have removed the restriction that
$\Pi=\crl{1,\ldots,A}$, and now allow it to be arbitrary. This added
generality is natural in many applications:
\begin{itemize}
\item In medicine, the treatment may be a continuous
  variable, such as a dosage. The treatment could even by a
  high-dimensional vector (such as dosages for many different
  medications). See \pref{fig:structured_bandit}.
\item In pricing applications, a seller might aim to select a continuous price or vector or
  prices in order to maximize their returns.
\item In routing applications, the decision space may be finite, but
  combinatorially large. For example, the decision might be a path or
  flow in a graph.
\end{itemize}
Both contextual bandits and structured bandits generalize the
basic multi-armed bandit problem, by incorporating function
approximation and generalization, but in different ways:
\begin{itemize}
\item The contextual bandit formulation in \cref{sec:cb} assumes structure
  in the context space. The aim here was to \emph{generalize across contexts}, but we
  restricted the decision space to be
  finite (unstructured).
\item In structured bandits, we will focus our attention on the
  case of \emph{no contexts}, but will assume the decision space is
  structured, and aim to \emph{generalize across decisions}.
\end{itemize}
Clearly, both ideas above can be combined, and we will touch on this
in \pref{sec:structured_contexts}.

\begin{figure}[h]
    \centering
    \includegraphics[width=0.8\textwidth]{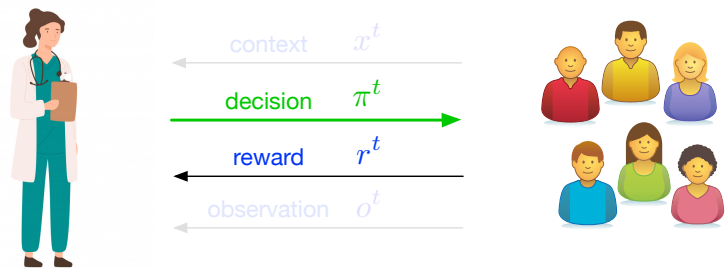}
    \caption{An illustration of the structured bandit problem. A
      doctor aims to select a continuous, high-dimensional treatment.}
    \label{fig:structured_bandit}
  \end{figure}

  \paragraph{Assumptions and regret}
To build intuition as to what it means to generalize across
decisions, and to give a sense for what sort of guarantees we might
hope to prove, let us
first give the formal setup for the
structured bandit problem. As in preceding sections, we will assume that rewards are
stochastic, and generated from a fixed model.
\begin{assumption}[Stochastic Rewards]
  \label{asm:stochastic_rewards_structured}
  Rewards are generated independently via
    \begin{align}
        r\sups{t}\sim \Mstar(\cdot\mid \pi\sups{t}),
    \end{align}
    where $\Mstar(\cdot\mid\cdot)$ is the underlying \emph{model}.
  \end{assumption}
We define
  \begin{align}
    \label{eq:mean_reward_structured}
  \fstar(\pi) \ldef \En\left[r\mid{} \pi\right]
  \end{align}
  as the mean reward function under $r\sim\Mstar(\cdot\mid 
  \pi)$, and measure regret via
  \begin{align}
    \label{eq:regret_structured}
    \Reg \ldef{} \sum_{t=1}^{T}\fstar(\pistar) - \sum_{t=1}^{T}\En_{\pi\ind{t}\sim{}p\ind{t}}\brk{\fstar(\pi\ind{t})}.
  \end{align}
  Here, $\pistar \ldef\argmax_{\pi\in\Pi}\fstar(\pi)$ as usual. We will define the history as $\cH\ind{t}=(\pi\ind{1},r\ind{1}),\ldots,(r\ind{t},\pi\ind{t})$.

\paragraph{Function approximation}

A first attempt to tackle the structured bandit problem might be to
apply algorithms for the multi-armed bandit setting, such as UCB. This
would give regret $\bigoht(\sqrt{\abs{\Pi}{}T})$, which could be
vacuous if $\Pi$ is large relative to $T$. However, with no
further assumptions on the underlying reward function $\fstar$, this is unavoidable. To
allow for better regret, we will make assumptions on the structure of
$\fstar$ that will allow us to share information across
decisions, and to generalize to decisions that we may not have played.
This is well-suited for the applications described above, where $\Pi$ is a
continuous set (e.g., $\Pi\subseteq\bbR^{d}$), but we expect $\fstar$
to be continuous, or perhaps even linear with respect some well-designed
set of features. To make this idea precise, we follow the same
approach as in statistical learning and contextual bandits, and assume
access to a \emph{well-specified} function class $\cF$ that aims to
capture our prior knowledge about $\fstar$.
\begin{assumption}
    The decision-maker has access to a class $\cF\subset\{f:\Pi\to\reals\}$ such that $\fstar\in\cF$.
\end{assumption} 
Given such a class, a reasonable goal---particularly in light of the
development in \cref{sec:intro} and \cref{sec:cb}---would be to achieve guarantees that
scale with the complexity of supervised learning or estimation with
$\cF$, e.g. $\log\abs{\cF}$ for finite classes; this is what we were able to achieve for
contextual bandits, after all. Unfortunately, this is too
good to be true, as the following example shows.
\begin{example}[Necessity of structural assumptions]
\label{ex:structured_realizability}
  Let $\Pi=\brk{A}$, and let $\cF=\crl{f_i}_{i\in\brk{A}}$, where
  \[
    f_i(\pi)\ldef{} \frac{1}{2} + \frac{1}{2}\indic{\pi=i}.
  \]
  It is clear that one needs $\Reg\approxgeq{}A$ for this setting, yet
  $\log\abs{\cF}=\log(A)$, so a regret bound of the form
  $\Reg\approxleq{}\sqrt{T\log\abs{\cF}}$ is not possible if $A$ is
  large relative to $T$.
\end{example}
What this example highlights is that generalizing across decisions is
fundamentally different (and, in some sense, more challenging) than
generalizing across contexts. In light of this, we will aim for guarantees that scale with
$\log\abs{\cF}$, but additionally scale with an appropriate notion of \emph{complexity
  of exploration} for the decision space $\Pi$. Such a notion of complexity should
reflect how much information is shared across decisions, which depends
on the interplay between $\Pi$ and $\cF$.

\subsection{Building Intuition: Optimism for Structured Bandits}
Our goal is to obtain regret bounds for structured bandits that reflect
the intrinsic difficulty of exploring the decision space $\Pi$, which
should reflect the structure of the function class $\cF$ under consideration. To build intuition as to what
such guarantees will look like, and how they can be obtained, we first
investigate the behavior of the optimism principle and the UCB
algorithm when applied to structured bandits. We will see that:
\begin{enumerate}
\item UCB attains guarantees that scale with $\log\abs{\cF}$, and
  additionally scale with a notion of complexity called the \emph{eluder dimension}, which
  is small for simple problems such as bandits with linear rewards.
\item In general, UCB is not optimal, and can have regret that is
  exponentially large compared to the optimal rate.
\end{enumerate}

\subsubsection{UCB for Structured Bandits}

We can adapt the UCB algorithm from
multi-armed bandits to
structured bandit by appealing to least squares and
confidence sets, similar to the approach we took for contextual
bandits \citep{russo2013eluder}. Assume $\cF=\crl*{f:\Pi\to [0,1]}$ and $r\ind{t}\in[0,1]$ almost surely. Let
\begin{align}
\label{eq:erm_structured}
	\fhat\ind{t} = \argmin_{f\in\cF} \sum_{i=1}^{t-1} (f(\pi\ind{i})-r\ind{i})^2
\end{align}
be the empirical minimizer on round $t$, and with
$\beta\ldef{}8\log(|\cF|/\delta)$, define confidence sets $\cF\ind{1}=\cF$ and
\begin{align}
  \label{eq:confidence_set_structured}
	\cF\ind{t} = \crl*{f\in\cF: \sum_{i=1}^{t-1} (f(\pi\ind{i}) - r\ind{i})^2 \leq \sum_{i=1}^{t-1} (\fhat\ind{t}(\pi\ind{i}) - r\ind{i})^2 + \beta}.
\end{align}  
Defining $\fbar\ind{t}(\pi)\ldef{}\max_{f\in\cF\ind{t}}f(\pi)$ as the
upper confidence bound, the generalized UCB algorithm is given by
\begin{equation}
\pi\ind{t}=\argmax_{\pi\in\Pi}\fbar\ind{t}(\pi).\label{eq:ucb_structured}
\end{equation}
\oldparagraph{When does the confidence width shrink?}
Using \pref{prop:linucb}, one can see the generalized UCB algorithm ensures
that $\fstar\in\cF\ind{t}$ for all $t$ with high probability. Whenever
this happens, regret is bounded by the upper confidence width:
\begin{align}
  \Reg \leq \sum_{t=1}^{T}\fbar\ind{t}(\pi\ind{t})-\fstar(\pi\ind{t}).\label{eq:width_structured}
\end{align}
This bound holds for all structured bandit problems, with no
assumption on the structure of $\Pi$ and $\cF$. Hence, to derive a
regret bound, the only question we need to
answer is \emph{when will the confidence widths shrink?}

For the unstructured multi-armed bandit, we need to shrink the width
for every arm separately, and the best bound on
\pref{eq:width_structured} we can hope for is
$\bigoh(\sqrt{\abs{\Pi}T})$. One might hope that if $\Pi$ and $\cF$
have nice structure, we can do better. In fact, we have already seen
one such case: For linear models, where
\begin{align}
    \cF=\crl*{\pi\mapsto\inner{\theta,\phi(\pi)}\mid{}\theta\in\Theta\subset
    \sB_2^d(1)},\label{eq:linear_structured}
\end{align}
\pref{prop:linucb} shows that we can bound \pref{eq:width_structured}
by $\sqrt{dT\log\abs{\cF}}$. Here, the number of decisions $\abs{\Pi}$
is replaced by the dimension $d$, which reflects the fact that there
are only $d$ truly unique directions to explore before we can start
\emph{extrapolating} to new actions. Is there a more general version
of this phenomenon when we move beyond linear models?

\subsubsection{The Eluder Dimension}
The eluder dimension \citep{russo2013eluder} is a complexity measure
that aims to capture the extent to which the function class $\cF$
facilitates extrapolation (i.e., generalization to unseen decisions),
and gives a generic way of bounding the confidence width in
\pref{eq:width_structured}. It is defined for a class $\cF$ as follows.
\begin{definition}[Eluder Dimension]
\label{def:eluder}
  Let $\cF\subset(\Act\to\bbR)$ and $\fstar:\Pi\to\bbR$ be given, and define $\ElCheck_{\fstar}(\cF,\veps)$ as the length of the longest sequence
  of decisions $\act\ind{1},\ldots,\act\ind{d}\in\Act$ such that for all $t\in\brk{d}$, there
  exists $f\ind{t}\in\cF$ such that
  \begin{equation}
    \label{eq:eluder}
\abs*{f\ind{t}(\act\ind{t})-\fstar(\act\ind{t})}>\veps,\quad\text{and}\quad\sum_{i<t}(f\ind{t}(\act\ind{i})-\fstar(\act\ind{i}))^2\leq\veps^{2}.
  \end{equation}
  The eluder dimension is defined as
  $\El_{\fstar}(\cF,\veps)=\sup_{\veps'\geq{}\veps}\ElCheck_{\fstar}(\cF,\veps')\vee{}1$. We abbreviate $\El(\cF,\veps) = \max_{\fstar\in\cF}\El_{\fstar}(\cF,\veps)$.
\end{definition}
The intuition behind the eluder dimension is simple: It asks, for a
worst-case sequence of decisions, how many
times we can be ``surprised'' by a new decision $\pi\ind{t}$ if we
 can estimate the underlying model $\fstar$ well on all of the
 preceding points. In particular, if we form confidence sets as in
 \pref{eq:confidence_set_structured} with $\beta=\veps^2$, then the
 number of times the upper confidence width in
 \cref{eq:width_structured} can be
 larger than $\veps$ is at most $\El_{\fstar}(\cF,\veps)$. We consider the definition
 $\El_{\fstar}(\cF,\veps)=\sup_{\veps'\geq{}\veps}\ElCheck_{\fstar}(\cF,\veps')\vee{}1$
 instead of directly working with $\ElCheck_{\fstar}(\cF,\veps)$ to ensure monotonicity with respect to $\veps$, which will be useful
 in the proofs that follow.

The following result gives a regret bound for UCB for generic
structured bandit problems. The regret bound has no dependence on the
size of the decision space, and scales only with 
 $\El(\cF,\veps)$ and $\log\abs{\cF}$.
\begin{prop}
  \label{prop:ucb_eluder}
For a finite set of functions $\cF\subset(\Pi\to\brk{0,1})$, using $\beta
= 8\log(|\cF|/\delta)$, the generalized UCB algorithm guarantees that
with probability at least $1-\delta$,
\begin{equation}
  \label{eq:ucb_structured}
  \Reg \lesssim \min_{\veps>0}\crl*{
    \sqrt{\El(\cF,\veps)\cdot{}T\log(\abs{\cF}/\delta)}
+ \veps{}T
} \approxleq
    \sqrt{\El(\cF,T^{-1/2})\cdot{}T\log(\abs{\cF}/\delta)}.
\end{equation}
\end{prop}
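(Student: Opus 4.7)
}
The plan is to condition on the high-probability event from \pref{lem:valid_CI} (which applies in the structured bandit case by setting the context space to be a singleton) and then bound $\Reg$ by the sum of upper confidence widths, exactly as in \cref{eq:width_structured}. Since $\fstar\in\cF\ind{t}$ on the good event, the quantity $w\ind{t}\ldef\fbar\ind{t}(\pi\ind{t})-\fstar(\pi\ind{t})\geq 0$ is a genuine upper bound on the instantaneous regret, and $w\ind{t}=|f\ind{t}(\pi\ind{t})-\fstar(\pi\ind{t})|$ for the ``witnessing'' function $f\ind{t}\in\cF\ind{t}$ that attains the maximum. Moreover, since $f\ind{t}\in\cF\ind{t}$, \pref{lem:valid_CI} guarantees that $\sum_{i<t}(f\ind{t}(\pi\ind{i})-\fstar(\pi\ind{i}))^2\leq 4\beta$ for every $t$ simultaneously.

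The heart of the argument is the following eluder-dimension counting lemma, which I would prove separately: for any $\veps'>0$, the number of rounds $t\in[T]$ for which $w\ind{t}>\veps'$ is at most $(4\beta/(\veps')^2+1)\cdot\El(\cF,\veps')$. The proof is by a pigeonhole/partitioning argument. Let $\cT=\{t:w\ind{t}>\veps'\}$ and process its elements in order. Partition $\cT$ greedily into ``blocks'' $B_1,B_2,\ldots,B_L$ with the property that, within each block $B_j$, the subsequence of decisions is an $\veps'$-eluder-independent sequence: each new $\pi\ind{t}\in B_j$ is $\veps'$-independent of its predecessors in $B_j$ (in the sense of \pref{def:eluder}, via the witness $f\ind{t}$). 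By definition of eluder dimension, each block has length at most $\El(\cF,\veps')$. To bound the number of blocks, I would argue that each time we close out a block and start a new one at round $t$, the witness $f\ind{t}$ must be $\veps'$-dependent on at least one prefix from every previous block — so each of the past blocks contributes at least $(\veps')^2$ to the cumulative squared error $\sum_{i<t}(f\ind{t}(\pi\ind{i})-\fstar(\pi\ind{i}))^2$, which is capped at $4\beta$. This forces $L\leq 4\beta/(\veps')^2+1$, and multiplying gives the claim.

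Given the counting lemma, the final step is a standard ``sort and sum'' calculation. Sort the widths in decreasing order $w\ind{(1)}\geq w\ind{(2)}\geq\cdots\geq w\ind{(T)}$. The counting lemma, applied with $\veps'=w\ind{(k)}$, implies $k\leq (4\beta/w\ind{(k)}^2+1)\El(\cF,w\ind{(k)})$, i.e.\ $w\ind{(k)}\lesssim\sqrt{\El(\cF,w\ind{(k)})\cdot\beta/k}$ whenever $w\ind{(k)}>\veps$. Using monotonicity of $\El(\cF,\cdot)$ (built into \pref{def:eluder}), we replace $\El(\cF,w\ind{(k)})$ by $\El(\cF,\veps)$, and then
\begin{align*}
\Reg \;\leq\; \sum_{t=1}^{T} w\ind{t}\wedge 1 \;\leq\; \veps T + \sum_{k:\,w\ind{(k)}>\veps}\sqrt{\El(\cF,\veps)\cdot\beta/k}
\;\lesssim\; \veps T + \sqrt{\El(\cF,\veps)\cdot\beta\cdot T}.
\end{align*}
Plugging in $\beta=8\log(|\cF|/\delta)$ and choosing $\veps=T^{-1/2}$ yields the stated bound.

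The main obstacle is the counting lemma: getting the partitioning/pigeonhole right so that each ``completed'' block contributes a full $(\veps')^2$ of squared error to the witness of the round that opens the next block. The other steps (Cauchy–Schwarz sort-and-sum, invoking \pref{lem:valid_CI}, and the reduction of regret to confidence widths via \pref{lem:regret_optimistic}) are routine.
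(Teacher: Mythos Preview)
Your proposal is correct and follows essentially the same approach as the paper: reduce regret to confidence widths via optimism and \pref{lem:valid_CI}, invoke the eluder-dimension counting lemma (which the paper states as \pref{lem:eluder_indicator_bound}, due to \citet{russo2013eluder}), and finish with the same sort-and-sum over level sets. Your first-fit bin-packing sketch for the counting lemma---each new block opens only when the current witness is $\veps'$-dependent on every existing block, forcing each past block to contribute $(\veps')^2$ to the $4\beta$ budget---is a minor rephrasing of the paper's argument.
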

For the case of linear models in \pref{eq:linear_structured}, 
it is possible to use the elliptic potential lemma
(\pref{lem:elliptic_potential}) to show that
\[
\El(\cF,\veps) \approxleq{} d\log(\veps^{-1}).
\]
For finite classes, this gives
$\Reg\approxleq{}\sqrt{dT\log(\abs{\cF}/\delta)\log(T)}$, which
recovers the guarantee in \pref{prop:linucb}. Another well-known
example is that of \emph{generalized linear models}. Here,
we fix \emph{link function} $\sigma:\brk{-1,+1}\to\bbR$ and define
\[
    \cF=\crl*{\pi\mapsto\sigma\prn[\big]{\inner{\theta,\phi(\pi)}}\mid{}\theta\in\Theta\subset
    \sB_2^d(1)}.
\]
This is a more flexible model than linear bandits. A well-known
special case is the logistic bandit problem, where
$\sigma(z)=1/(1+e^{-z})$. One can show \citep{russo2013eluder} that
for any choice of $\sigma$, if there exist
$\mu,L>0$ such that $\mu<\sigma'(z)<L$ for all $z\in\brk{-1,+1}$, then
\begin{equation}
  \label{eq:eluder_glm}
\El(\cF,\veps) \approxleq{} \frac{L^2}{\mu^2}\cdot{}d\log(\veps^{-1}).
\end{equation}
This leads to a regret bound that scales with
$\frac{L}{\mu}\sqrt{dT\log\abs{\cF}}$, generalizing the regret bound
for linear bandits.

In general, the eluder dimension can be quite large. Consider the generalized linear model setup above with
$\sigma(z)=+\relu(z)$ or $\sigma(z)=-\relu(z)$ (either choice of sign
works), where $\relu(z)\ldef{}\max\crl{z,0}$ is the ReLU function;
this can be interpreted as a neural network with a single neuron. Here, we can
have $\sigma'(z)=0$, so \pref{eq:eluder_glm} does not apply, and it
turns out \citep{li2021eluder} that 
\begin{align}
  \label{eq:eluder_relu}
  \El(\cF,\veps) \approxgeq e^{d}
\end{align}
for constant $\veps$. That is, even for a single ReLU neuron, the
eluder dimension is already  exponential, which is a bit
disappointing. Fortunately, we will show in the sequel that
the eluder dimension can be overly pessimistic, and it is possible to do
better, but this will require changing the algorithm.

\begin{proof}[\pfref{prop:ucb_eluder}]
  \newcommand{\ValueEluderCL}[1]{\ElCheck(\cF,#1)}
  \newcommand{\ValueEluderL}[1]{\El(\cF,#1)}

Define
\[
\cFbar\ind{t} =
\crl*{f\in\cF\mid{}\sum_{i<t}(f(\pi\ind{i})-\fstar(\pi\ind{i}))^2 \leq 4\beta}.
\]
By \pref{lem:valid_CI}, we have that with probability at least
$1-\delta$, for all $t$:
\begin{enumerate}
  \item $\fstar\in\cF\ind{t}$.
  \item $\cF\ind{t}\subseteq\cFbar\ind{t}$.
  \end{enumerate}
  Let us condition on this event. As in
  \pref{lem:regret_optimistic}
  , since $\fstar\in\cF\ind{t}$,
we can upper bound
\begin{align*}
  \Reg \leq \sum_{t=1}^{T}\fbar\ind{t}(\pi\ind{t})-\fstar(\pi\ind{t}).
\end{align*}
Now,
define
\[
w\ind{t}(\pi) = \sup_{f\in\cFbar\ind{t}}\brk*{f(\pi)-\fstar(\pi)},
\]
which is a useful upper bound on the upper confidence width at time
$t$. Since $\cF\ind{t}\subseteq\cFbar\ind{t}$, we have
\[
  \Reg \leq \sum_{t=1}^{T}w\ind{t}(\pi\ind{t}).
\]

We now appeal to the following technical lemma concerning the eluder dimension.
\begin{lem}[\citet{russo2013eluder}, Lemma 3]
  \label{lem:eluder_indicator_bound}
  Fix a function class $\cF$, function $\fstar\in\cF$, and parameter
  $\beta>0$. For any sequence $\pi\ind{1},\ldots,\pi\ind{T}$, if we
  define
  \[
    w\ind{t}(\pi) = \sup_{f\in\cF}\crl*{f(\pi)-\fstar(\pi) : \sum_{i<t}(f(\pi\ind{i})-\fstar(\pi\ind{i}))^2\leq\beta},
  \]
then for all $\alpha>0$,
  \[
    \sum_{t=1}^{T}\indic{w\ind{t}(\pi\ind{t})>\alpha} \leq{} \prn*{\frac{\beta}{\alpha^2}+1}\cdot\ElCheck_{\fstar}(\cF,\alpha).
  \]
\end{lem}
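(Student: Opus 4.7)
My plan is to prove the bound by a greedy grouping argument that partitions the ``bad'' rounds into a small number of $\alpha$-independent subsequences, each of which has length at most $\ElCheck_{\fstar}(\cF,\alpha)$ by definition of the eluder dimension.

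First, I would let $\tau_1 < \tau_2 < \cdots < \tau_N$ enumerate the rounds $t$ on which $w\ind{t}(\pi\ind{t}) > \alpha$, so the left-hand side of the bound is exactly $N$. For each $k \in \{1,\ldots,N\}$, by definition of $w\ind{\tau_k}$, there exists $f\ind{k} \in \cF$ satisfying $f\ind{k}(\pi\ind{\tau_k}) - \fstar(\pi\ind{\tau_k}) > \alpha$ and $\sum_{i < \tau_k}(f\ind{k}(\pi\ind{i}) - \fstar(\pi\ind{i}))^2 \leq \beta$. Restricting the latter sum to the earlier bad indices only makes it smaller, so $\sum_{j<k}(f\ind{k}(\pi\ind{\tau_j}) - \fstar(\pi\ind{\tau_j}))^2 \leq \beta$.

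Next, I would greedily assign each $\pi\ind{\tau_k}$ to a group $B_m$ by the following rule: place $\pi\ind{\tau_k}$ into the smallest-indexed group $B_m$ (creating a new one if necessary) such that $\sum_{j : \tau_j \in B_m,\, j < k}(f\ind{k}(\pi\ind{\tau_j}) - \fstar(\pi\ind{\tau_j}))^2 \leq \alpha^2$. The key bookkeeping observation is that by the placement rule, within each resulting group $B_m$, the sequence of decisions in order of insertion, equipped with their associated witnesses $f\ind{k}$, satisfies the two conditions in \pref{def:eluder} with tolerance $\alpha$; hence $|B_m| \leq \ElCheck_{\fstar}(\cF, \alpha)$.

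It remains to bound the number of groups $M$. If $\pi\ind{\tau_k}$ is placed in group $B_m$, then by the greedy rule each earlier group $B_{m'}$, $m' < m$, already had $\sum_{j : \tau_j \in B_{m'},\, j<k}(f\ind{k}(\pi\ind{\tau_j}) - \fstar(\pi\ind{\tau_j}))^2 > \alpha^2$. Since the earlier groups are disjoint subsets of $\{\tau_1,\ldots,\tau_{k-1}\}$, summing yields $(m-1)\alpha^2 < \sum_{j<k}(f\ind{k}(\pi\ind{\tau_j}) - \fstar(\pi\ind{\tau_j}))^2 \leq \beta$, so $m \leq \beta/\alpha^2 + 1$. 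Consequently $M \leq \beta/\alpha^2 + 1$, which gives $N \leq M \cdot \ElCheck_{\fstar}(\cF,\alpha) \leq (\beta/\alpha^2 + 1)\ElCheck_{\fstar}(\cF,\alpha)$. The main subtlety, and really the only one, is the bookkeeping step: choosing the placement rule so that the ``large budget'' witness $f\ind{k}$ (with $\ell_2$-mass up to $\beta$ on all prior bad points) automatically serves as the ``small budget'' witness (with $\ell_2$-mass at most $\alpha^2$) needed inside its assigned group. Once this is set up, the pigeonhole bound on the number of groups is immediate.
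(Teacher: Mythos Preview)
Your proof is correct and rests on the same two combinatorial facts as the paper's: a single witness with $\ell_2$-budget $\beta$ can exceed $\alpha^2$ on at most $\beta/\alpha^2$ disjoint blocks, and any block on which every element has a witness with in-block budget $\leq\alpha^2$ has length at most $\ElCheck_{\fstar}(\cF,\alpha)$. The paper separates these into two standalone claims---first that each bad round is $\alpha$-dependent on at most $\beta/\alpha^2$ disjoint prior subsequences, second that any length-$\tau$ sequence contains an element $\alpha$-dependent on at least $\lfloor\tau/d\rfloor$ disjoint subsequences---and then combines them; your greedy partition carries out both steps simultaneously, which is a slightly more direct packaging of the same argument.
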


Note that for the special case where $\beta=\alpha^2$, the bound in
\pref{lem:eluder_indicator_bound} immediately follows from the
definition of the eluder dimension. The point of this lemma is to show
that a similar bound holds for all scales $\alpha$ simultaneously, but
with a pre-factor $\frac{\beta}{\alpha^2}$ that grows large when $\alpha^2\ll\beta$.

To apply this result, fix $\veps>0$, and bound
\begin{equation}
\sum_{t=1}^{T}w\ind{t}(\pi\ind{t})
  \leq \sum_{t=1}^{T}w\ind{t}(\pi\ind{t})\indic{w\ind{t}(\pi\ind{t})>\veps} + \veps{}T.\label{eq:eluder_width_decomp}
\end{equation}
Let us order the indices $\crl{1,\ldots,T}$ as $\crl{i_1,\ldots,i_T}$,
so that $w\ind{i_1}(\pi\ind{i_1})\geq{}w\ind{i_2}(\pi\ind{i_2})\geq\ldots\geq{}w\ind{i_{\tau}}(\pi\ind{i_{\tau}})$. Consider any index $\tau$ 
for which $w\ind{i_\tau}(\pi\ind{i_\tau})>\veps$. For any $\alpha>\veps$, if we have $w\ind{i_\tau}(\pi\ind{i_\tau})>\alpha$, then \pref{lem:eluder_indicator_bound} (since $\alpha\leq{}1\leq\beta$) implies that
\begin{equation}
  \label{eq:wt_eluder_bound}
\tau \leq{} \sum_{t=1}^{T}\indic{w\ind{t}(\pi\ind{t})>\alpha} \leq{} \prn*{\frac{4\beta}{\alpha^2}+1}\ElCheck_{\fstar}(\cF,\alpha)\leq \frac{5\beta}{\alpha^2}\ElCheck_{\fstar}(\cF,\alpha).
\end{equation}
Since we have restricted to $\alpha\geq\veps$ and
$\alpha\mapsto\El_{\fstar}(\cF,\alpha)$ is decreasing, rearranging yields
\[
  w\ind{i_\tau}(\pi\ind{i_\tau}) \leq{} \sqrt{\frac{5\beta\ValueEluderL{\veps}}{\tau}}.
\]
With this, we can bound the main term in \pref{eq:eluder_width_decomp} by
\begin{align*}
  \sum_{t=1}^{T}w\ind{t}(\pi\ind{t})\indic{w\ind{t}(\pi\ind{t})>\veps}
  \approxleq{}  \sum_{t=1}^{T}\sqrt{\frac{\beta\ValueEluderL{\veps}}{t}}
  \approxleq{}  \sqrt{\beta\ValueEluderL{\veps}T}.
\end{align*}
Combining this with \pref{eq:eluder_width_decomp} gives
$\Reg\approxleq{}\sqrt{\beta\ValueEluderL{\veps}T}+\veps{}T$. Since
$\veps>0$ was arbitrary, we are free to minimize over it.
  
\end{proof}

\begin{proof}[\pfref{lem:eluder_indicator_bound}]
Let us adopt the shorthand $d=\ElCheck_{\fstar}(\cF,\alpha)$. We begin with a definition. We say $\pi$ is $\alpha$-independent of $\pi\ind{1},\ldots,\pi\ind{t}$
  if there exists $f\in\cF$ such that
  $\abs*{f(\pi)-\fstar(\pi)}>\alpha$ and
  $\sum_{i=1}^{t}\prn*{f(\pi\ind{i})-\fstar(\pi\ind{i})}^{2}\leq\alpha^{2}$. We say
  $\pi$ is $\alpha$-dependent on $\pi\ind{1},\ldots,\pi\ind{t}$ if for all $f\in\cF$ with
  $\sum_{i=1}^{t}\prn*{f(\pi\ind{i})-\fstar(\pi\ind{i})}^{2}\leq\alpha^{2}$, $\abs*{f(\pi)-\fstar(\pi)}\leq{}\alpha$.

  We first claim that for any $t$, if
  $w\ind{t}(\pi\ind{t})>\alpha$, then $\pi_t$ is $\alpha$-dependent on
  at most $\beta/\alpha^2$ disjoint subsequences of
  $\pi\ind{1},\ldots,\pi\ind{t-1}$. Indeed, let $f$ be such that
  $\abs*{f(\pi\ind{t})-\fstar(\pi\ind{t})}>\alpha$. If $\pi\ind{t}$ is
  $\alpha$-dependent on a particular subsequence
  $\pi\ind{i_1},\ldots,\pi\ind{i_k}$ but $w\ind{t}(\pi\ind{t})>\alpha$, we must have
  \[
    \sum_{j=1}^{k}(f(\pi\ind{i_j})-\fstar(\pi\ind{i_j}))^{2}\geq{}\alpha^{2}.
  \]
  If there are $M$ such disjoint sequences, we have
  \[
    M\alpha^{2}\leq{}\sum_{i<t}(f(\pi\ind{i})-\fstar(\pi\ind{i}))^2\leq{}\beta,
  \]
  so $M\leq{} \frac{\beta}{\alpha^{2}}$.

Next, we claim that for $\tau$ and any sequence $(\pi\ind{1},\ldots,\pi\ind{\tau})$, there is
some $j$ such that $\pi\ind{j}$ is $\alpha$-dependent on at least
$\floor{\tau/d}$ disjoint subsequences of $\pi\ind{1},\ldots,\pi\ind{j-1}$. Let
$N=\floor{\tau/d}$, and let $B_1,\ldots,B_N$ be subsequences of
$\pi\ind{1},\ldots,\pi\ind{\tau}$. We initialize with $B_i = (\pi\ind{i})$. If $\pi\ind{N+1}$ is
$\alpha$-dependent on $B_i=\prn*{\pi\ind{i}}$ for all $1\leq{}i\leq{}N$ we are done. Otherwise,
choose $i$ such that $\pi\ind{N+1}$ is $\alpha$-independent of
$B_i$, and add it to $B_i$. Repeat this process until we reach $j$
such that either $\pi\ind{j}$ is $\alpha$-dependent on all $B_i$ or 
$j=\tau$. In the first case we are done, while in the second case, we
have $\sum_{i=1}^{N}\abs*{B_i}\geq{}\tau\geq{}dN$. Moreover,
$\abs*{B_i}\leq{}d$, since each $\pi\ind{j}\in{}B_i$ is
$\alpha$-independent of its prefix (this follows from the definition
of eluder dimension). We conclude that
$\abs*{B_i}=d$ for all $i$, so in this case $\pi\ind{\tau}$ is $\alpha$-dependent
on all $B_i$.

Finally, let $(\pi\ind{t_1},\ldots,\pi\ind{t_{\tau}})$ be the subsequence $\pi\ind{1},\ldots,\pi\ind{T}$ consisting of all elements for
which $w\ind{i_i}(\pi\ind{t_i})>\alpha$. Each element of the sequence is
dependent on at most $\beta/\alpha^{2}$ disjoint subsequences of
$(\pi\ind{t_1},\ldots,\pi\ind{t_{\tau}})$, and by the argument above, one
element is dependent on at least $\floor{\tau/d}$ disjoint
subsequences, so we must have $\floor{\tau/d}\leq{}
\beta/\alpha^{2}$, and which implies that $\tau\leq{} (\beta/\alpha^{2}+1)d$.
\end{proof}

\subsubsection{Suboptimality of Optimism}
The following example shows a function class $\cF$ for which the regret
experienced by UCB is exponentially large compared to the regret
obtained by a simple alternative algorithm. This shows that while the
algorithm is useful for some special cases, it does not provide a
general principle that attains optimal regret for any structured
bandit problem.
\begin{example}[Cheating Code \citep{amin2011bandits,jun2020crush}]
  \label{ex:cheating_code}
  Let $A\in\bbN$ be a
  power of $2$ and consider the following function class $\cF$. 
  \begin{itemize}
    \item The decision space is $\Pi=\brk{A}\cup\cC$, where
      $\cC=\crl{c_1,\ldots,c_{\log_2(A)}}$ is a set of ``cheating''
      actions.
    \item For all actions $\pi\in\brk{A}$, $f(\pi)\in\brk*{0,1}$ for
      all $f\in\cF$, but we otherwise make no assumption on the
      reward.
    \item For each $f\in\cF$, rewards for actions in $\cC$ take the
      following form. Let $\pi_f\in\brk{A}$ denote the action in
      $\brk{A}$ with highest reward. Let $b(f)=(b_1(f),\ldots,b_{\log_2(A)}(f))\in\crl{0,1}^{\log_2(A)}$ be
      a binary encoding for the index of $\pi_f\in\brk{A}$ (e.g., if
      $\pi_f=1$, $b(f)=(0,0,\ldots,0)$, if $\pi_f=2$,
      $b(f)=(0,0,\ldots,0,1)$, and so on). For each action $c_i\in\cC$, we set
      \[
        f(c_i) = -b_i(f).
      \]
    \end{itemize}
    The idea here is that if we ignore the actions $\cC$, this looks like a
    standard multi-armed bandit problem, and the optimal regret is
    $\Theta(\sqrt{AT})$. However, we can use the actions in $\cC$
    to ``cheat'' and get an exponential improvement in sample
    complexity. The argument is as follows.

    Suppose for simplicity that rewards are Gaussian with
    $r\sim{}\cN(\fstar(\pi),1)$ under $\pi$. For each cheating action
    $c_i\in\cC$, since $\fstar(c_i)=-b_i(\fstar)\in\crl{0,-1}$, we can determine
    whether the value is $b_i(\fstar)=0$ or $b_i(\fstar)=1$ with high probability using
    $\bigoht(1)$ action pulls. If we do this for each $c_i\in\cC$, which
    will incur $\bigoht(\log(A))$ regret (there are $\log(A)$ such
    actions and each one leads to constant regret), we can infer the
    binary encoding $b(\fstar)=b_1(\fstar),\ldots,b_{\log_2(A)}(\fstar)$ for the optimal action
    $\pifstar$ with high probability. At this point, we can simply
    stop exploring, and commit to playing $\pifstar$ for the remaining rounds, which
    will incur no more regret. If one is careful with the details,
    this gives that with probability at least $1-\delta$,
    \[
      \Reg \approxleq \log^2(A/\delta).
    \]
    In other words, by exploiting the cheating actions, our regret has
    gone from linear to \emph{logarithmic} in $A$ (we have also
    improved the dependence on $T$, which is a secondary bonus).

    Now, let us consider the behavior of the generalized UCB
    algorithm. Unfortunately, since all actions $c_i\in\cC$ have
    $f(c_i)\leq{}0$ for all $f\in\cF$, we have
    $\fbar\ind{t}(c_i)\leq{}0$. As a result, the generalized UCB algorithm
    will only ever pull actions in $\brk{A}$, ignoring the cheating actions
    and effectively turning this
    into a vanilla multi-armed bandit problem, which means that
    \[
      \Reg \approxgeq \sqrt{AT}.
    \]
  \end{example}
  
  This example shows that UCB can behave suboptimally in the presence of decisions
  that reveal useful information but do not necessarily lead to high
  reward. Since the ``cheating'' actions are guaranteed to have
  low reward, UCB avoids them even though they are very
  informative. We conclude that:
  \begin{enumerate}
  \item Obtaining optimal sample complexity for structured bandits
    requires algorithms that more deliberately balance the tradeoff
    between optimizing reward and acquiring information.
  \item In general, the optimal strategy for picking decisions
    can be very different depending on the choice of the class
    $\cF$. This contrasts the contextual bandit setting, where
    we saw that the Inverse Gap Weighting algorithm attained optimal
    sample complexity for any choice of class $\cF$, and all that
    needed to change was how to perform estimation.
  \end{enumerate}

  \begin{rem}[Suboptimality of posterior sampling]
    Recall the Bayesian bandit setting in \cref{sec:posterior}, where
    we showed that the posterior sampling algorithm attains regret
    $\bigoht(\sqrt{AT})$ when $\Pi=\crl{1,\ldots,A}$. Posterior
    sampling is a general-purpose algorithm, and can be applied to
    directly to arbitrary structured bandit problems (as long as a prior is available). However, similar
    to UCB, the cheating
    code construction in \cref{ex:cheating_code} implies that posterior
    sampling is not optimal in general. Indeed, posterior sampling
    will never select the cheating arms in $\cC$, as these have
    sub-optimal reward for all models in $\cF$. As a result, the
    Bayesian regret of the algorithm will scale with $\Reg \approxgeq
    \sqrt{AT}$ for a worst-case prior.
  \end{rem}

  \subsection{The \CompText}

  The discussion in the prequel highlights two challenges in designing algorithms
  and understanding sample complexity for structured bandits: 1) the
  optimal regret (in a sense, the complexity of
  exploration) can depend on the class $\cF$ in a subtle, sometimes surprising fashion, and
  2) the algorithms required to achieve optimal regret can
  heavily depend on the choice of $\cF$. In light of these challenges,
  it is natural to ask whether it is possible to have any sort of
  unified understanding of the optimal regret. We will now
  show that the answer is \emph{yes}, and this will be achieved by a
  single, general-purpose principle for algorithm design.

  The algorithm we will present in this section reduces the problem of decision making to that of supervised
  online learning/estimation, in a similar fashion to the \squarecb method
  for contextual bandits in \pref{sec:cb}. To apply this method, we
  require the following oracle for supervised estimation.
\begin{definition}[Online Regression Oracle]
    \label{def:online_regression_oracle_structured}
	At each time $t\in[T]$, an \emph{online regression oracle} returns, given  $$(\pi\ind{1},r\ind{1}),\ldots,(\pi\ind{t-1},r\ind{t-1})$$ with $\En\brk{r\ind{i}|\pi\ind{i}}=\fstar(\pi\ind{i})$ and $\pi\ind{i}\sim p\ind{i}$, a function $\fhat\ind{t}:\Pi\to\reals$ such that
	$$\sum_{t=1}^T \En_{\pi\ind{t}\sim p\ind{t}}(\fhat\ind{t}(\pi\ind{t})-\fstar(\pi\ind{t}))^2 \leq \EstSq(\cF, T, \delta)$$
	with probability at least $1-\delta$. Here, $p\ind{i}(\cdot | \cH\ind{i-1})$ is the randomization distribution for the decision-maker.
      \end{definition}
Recall, following the discussion in \cref{sec:cb}, that the averaged
exponential weights algorithm achieves is an online regression oracle
with $\EstSq(\cF,t,\delta)\approxleq \log(\abs{\cF}/\delta)$.
      
The following algorithm, which we call \etdtext or \etd \citep{foster2021statistical,foster2023tight}, is a
general-purpose meta-algorithm for structured bandits.
    \begin{whiteblueframe}
  \begin{algorithmic}
    \State \textsf{Estimation-to-Decisions (\etd) for Structured Bandits}
    \State \textsf{Input:} Exploration parameter $\gamma>0$.
    \For{$t=1,\ldots,T$}
    \State Obtain $\fhat\ind{t}$ from online regression oracle with $(\pi\ind{1},r\ind{1}),\ldots,(\pi\ind{t-1},r\ind{t-1})$. 
    \State Compute
    \[
      p\ind{t}=\argmin_{p\in\Delta(\Pi)}\max_{f\in\cF}\En_{\act\sim{}p}\biggl[f(\pif)-f(\pi)
    -\gamma\cdot(f(\pi)-\fhat\ind{t}(\pi))^2
    \biggr].
      \]
\State Select action $\pi\ind{t}\sim p\ind{t}$.
\EndFor{}
\end{algorithmic}
\end{whiteblueframe}
At each timestep $t$, the algorithm calls invokes an online regression oracle to
obtain an estimator $\fhat\ind{t}$ using the data
$\hist\ind{t-1}=(\pi\ind{1},r\ind{1},\ldots,\pi\ind{t-1},r\ind{t-1})$ observed so
far. The algorithm then finds a distribution $p\ind{t}$ by solving a
min-max optimization problem involving the estimator $\fhat\ind{t}$
and the class $\cF$, then samples the decision $\pi\ind{t}$ from this distribution.

The minimax problem in \etd is derived from a complexity measure (or,
structural parameter) for $\cF$ called
the \emph{\CompText} \citep{foster2021statistical,foster2023tight}, whose value is given by
      \begin{equation}
  \label{eq:dec_structured}
  \comp(\cF,\fhat) =
  \min_{p\in\Delta(\Act)}\max_{f\in\cF}\En_{\act\sim{}p}\biggl[\hspace{1pt}\underbrace{f(\pif)-f(\pi)}_{\text{regret
    of decision}}
    -\gamma\cdot\hspace{-12pt}\underbrace{(f(\pi)-\fhat(\pi))^2}_{\text{information
        gain for obs.}}\hspace{-8pt}
    \biggr].
  \end{equation}
  The \CompText can be thought of as the value of a
game in which the learner (represented by the min player) aims to find
a distribution over decisions such that for a worst-case problem
instance (represented by the max player), the \emph{regret} of their
decision is controlled by a notion of \emph{information gain} (or, estimation error) relative to a reference
model $\fhat$. Conceptually, $\fhat$ should be thought of as a guess for the
true model, and the learner (the min player) aims to---in the face of an unknown
environment (the max player)---optimally balance the regret of their
decision with the amount information they acquire. With enough
information, the learner can confirm or rule out their guess $\fhat$, and
scale parameter $\gamma$ controls how much regret they are willing to
incur to do this. In general, the larger the value of
$\comp(\cF,\fhat)$, the more difficult it is to explore.

To state a regret bound for \etd, we define
  \begin{equation}
    \label{eq:dec_max_structured}
        \comp(\cF) = \sup_{\fhat\in\conv(\cF)}\comp(\cF,\fhat).
      \end{equation}
Here, $\conv(\cF)$ denotes the set of all convex combinations of
elements in $\cF$. The reason we consider the set $\conv(\cF)$ is that in
general, online estimation algorithms such as exponential weights will
produce improper predictions with $\fhat\in\conv(\cF)$. In fact, it
turns out (see \pref{prop:dec_unconstrained}) that even if we allow
$\fhat$ to be unconstrained above, the maximizer always lies in
$\conv(\cF)$ without loss of generality.

The main result for this section shows that the regret for \etd is controlled by the value of the \CompShort and the estimation error
$\EstSq(\cF,T,\delta)$ for the
online regression oracle.
    \begin{prop}[\citet{foster2021statistical}]
      \label{prop:dec_bandit}
      The \etd algorithm with exploration parameter $\gamma>0$
      guarantees that with probability at least $1-\delta$,
\begin{equation}
  \label{eq:dec_bandit}
\Reg \leq{}
  \comp(\cF)\cdot{}T + \gamma\cdot\EstSq(\cF,T,\delta).
\end{equation}
\end{prop}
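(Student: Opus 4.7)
The plan is to establish \eqref{eq:dec_bandit} by a single-round regret decomposition that ties the instantaneous regret of \etd directly to the \CompShort and to the squared estimation error of the oracle estimator $\fhat\ind{t}$, then sum over $t$ and invoke the oracle guarantee from \pref{def:online_regression_oracle_structured}.

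First, I would fix a round $t$ and bound the conditional regret $\En_{\pi\ind{t}\sim p\ind{t}}[\fstar(\pistar)-\fstar(\pi\ind{t})]$. The key observation is that $p\ind{t}$ is, by construction, the minimizer of the min-max problem defining $\comp(\cF,\fhat\ind{t})$. Therefore, plugging the specific choice $f=\fstar\in\cF$ into the inner maximum yields
\[
\En_{\pi\ind{t}\sim p\ind{t}}\brk[\big]{\fstar(\pistar)-\fstar(\pi\ind{t})}
-\gamma\cdot\En_{\pi\ind{t}\sim p\ind{t}}\brk[\big]{(\fstar(\pi\ind{t})-\fhat\ind{t}(\pi\ind{t}))^2}
\;\leq\;\comp(\cF,\fhat\ind{t}).
\]
Rearranging gives a per-round regret bound in terms of $\comp(\cF,\fhat\ind{t})$ plus $\gamma$ times the conditional squared estimation error of $\fhat\ind{t}$ at the sampled action.

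Second, I would bound $\comp(\cF,\fhat\ind{t})\leq\comp(\cF)$ using \eqref{eq:dec_max_structured}. This step requires that $\fhat\ind{t}\in\conv(\cF)$; the averaged exponential weights oracle mentioned after \pref{def:online_regression_oracle_structured} outputs such a mixture, and the remark after \eqref{eq:dec_max_structured} indicates that the supremum is unchanged if $\fhat$ is allowed to be unconstrained (this would appeal to \pref{prop:dec_unconstrained}). Either way, for the algorithmic guarantee we just need the oracle's output to lie in $\conv(\cF)$.

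Third, I would sum the per-round inequality over $t=1,\ldots,T$, giving
\[
\Reg \;\leq\; \comp(\cF)\cdot T + \gamma\sum_{t=1}^{T}\En_{\pi\ind{t}\sim p\ind{t}}\brk[\big]{(\fhat\ind{t}(\pi\ind{t})-\fstar(\pi\ind{t}))^2}.
\]
Finally, the online regression oracle guarantee (\pref{def:online_regression_oracle_structured}) bounds the remaining sum by $\EstSq(\cF,T,\delta)$ with probability at least $1-\delta$, yielding \eqref{eq:dec_bandit}.

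The argument is essentially mechanical once one sees the min-max certificate structure: the minimax value $\comp(\cF,\fhat\ind{t})$ is itself designed so that choosing $f=\fstar$ converts the adversary's penalty into precisely the estimation error that the oracle controls. The only mild subtlety is handling the random sampling of $\pi\ind{t}\sim p\ind{t}$ together with the high-probability event in the oracle guarantee; since the per-round bound holds for the conditional expectation over $\pi\ind{t}$, the sampled regret $\sum_t[\fstar(\pistar)-\En_{\pi\ind{t}\sim p\ind{t}}\fstar(\pi\ind{t})]$ in \eqref{eq:regret_structured} is exactly what appears, so no additional concentration step is required beyond the oracle's own high-probability statement.
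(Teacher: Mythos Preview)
Your proposal is correct and follows essentially the same argument as the paper: add and subtract $\gamma$ times the squared estimation error, use $\fstar\in\cF$ to pass to the supremum over $f\in\cF$, invoke that $p\ind{t}$ is the minimizer defining $\comp(\cF,\fhat\ind{t})$, bound by $\comp(\cF)$, and apply the oracle guarantee. The paper's presentation differs only cosmetically (it writes the add-and-subtract step globally before analyzing each round), and your remarks about $\fhat\ind{t}\in\conv(\cF)$ and \pref{prop:dec_unconstrained} match the paper's handling.
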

We can optimize over the parameter $\gamma$ in the result
above, which yields
\begin{align*}
\Reg &\leq{}
\inf_{\gamma>0}\crl[\bigg]{\comp(\cF)\cdot{}T
  + \gamma\cdot\EstSq(\cF,T,\delta)} \\
       &\leq 
2\cdot\inf_{\gamma>0}\max\crl[\bigg]{\comp(\cF)\cdot{}T,
  \gamma\cdot\EstSq(\cF,T,\delta)}.
\end{align*}
For finite classes, we can use the exponential weights method to
obtain $\EstSq(\cF,T,\delta)\approxleq{}\log(\abs{\cF}/\delta)$, and
this bound specializes to
\begin{align}
\Reg &\approxleq{}
       \inf_{\gamma>0}\max\crl[\bigg]{\comp(\cF)\cdot{}T,
  \gamma\cdot\log(\abs{\cF}/\delta)}.\label{eq:dec_finite}
\end{align}
As desired, this gives a bound on regret that scales only with:
\begin{enumerate}
\item the complexity $\log\abs{\cF}$ for estimation.
\item the
  complexity of exploration in the decision space, which is captured
  by $\comp(\cF)$.
\end{enumerate}
Before interpreting the result further, we give the
proof, which is a nearly immediate consequence of the definition of
the \CompShort, and bears strong similarity to the proof of
the regret bound for SquareCB (\cref{prop:squarecb}), minus contexts. 
    \begin{proof}[\pfref{prop:dec_bandit}]
  We write
\begin{align*}
  \Reg &=
           \sum_{t=1}^{T}\En_{\act\ind{t}\sim{}p\ind{t}}\brk*{\fstar(\pistar)-\fstar(\act\ind{t})}\\
         &=
           \sum_{t=1}^{T}\En_{\act\ind{t}\sim{}p\ind{t}}\brk*{\fstar(\pistar)-\fstar(\act\ind{t})}
           - \gamma{}\cdot
           \En_{\act\ind{t}\sim{}p\ind{t}}\brk*{(\fstar(\pi\ind{t})-\fhat\ind{t}(\pi\ind{t}))^2}
           + \gamma\cdot{}\EstSq(\cF,T,\delta).
\end{align*}
For each $t$, since $\fstar\in\cF$, we have
\begin{align}
  &\En_{\act\ind{t}\sim{}p\ind{t}}\brk*{\fstar(\pistar)-\fstar(\act\ind{t})}
           - \gamma{}\cdot
  \En_{\act\ind{t}\sim{}p\ind{t}}\brk*{(\fstar(\pi\ind{t})-\fhat\ind{t}(\pi\ind{t}))^2}\notag\\
&  \leq  \sup_{f\in\cF}\crl*{\En_{\act\ind{t}\sim{}p\ind{t}}\brk*{f(\pif)-f(\act\ind{t})}
           - \gamma{}\cdot
                                                                                                  \En_{\act\ind{t}\sim{}p\ind{t}}\brk*{
                                                                                                  (f(\pi\ind{t})-\fhat\ind{t}(\pi\ind{t}))^2
                                                                                                  }}\notag\\
  &  = \inf_{p\in\Delta(\Act)}\sup_{f\in\cF}\En_{\act\sim{}p}\brk*{f(\pif)-f(\act)
    - \gamma{}\cdot
    (f(\pi\ind{t})-\fhat\ind{t}(\pi\ind{t}))^2
    }\notag\\
  &  = \comp(\cF,\fhat\ind{t}),\label{eq:minimax_regret_structured}
\end{align}
where the first equality above uses that $p\ind{t}$ is chosen as the
minimizer for $\comp(\cF,\fhat\ind{t})$. Summing across rounds, we conclude that
\[
  \Reg \leq{} \sup_{\fhat}\comp(\cF,\fhat)\cdot{}T + \gamma\cdot\EstSq(\cF,T,\delta).
\]      
\end{proof}

When designing algorithms for structured bandits, a common
challenge is that the connection between decision making (where the
learner's decisions influence what feedback is collected) and
estimation (where data is collected passively) may not seem apparent a-priori. The
power of the \CompText is that it---\emph{by definition}---provides a
bridge, which the proof of \pref{prop:dec_bandit} highlights. One can select decisions
by building an estimate for the model using all of the observations
collected so far, then sampling from the distribution $p$ that
solves \pref{eq:dec_structured} with the estimated reward function
$\fhat$ plugged in. Boundedness of the \CompShort implies that at every round, any learner
using this strategy either enjoys small regret or acquires information, with their
total regret controlled by the cumulative online estimation error.

\paragraph{Example: Multi-Armed Bandit}
Of course, the perspective above is only useful if the \CompShort is
indeed bounded, which itself is not immediately apparent. In
\cref{sec:general_dm}, we will show that boundedness of the \CompShort is not just
sufficient, but in fact \emph{necessary} for low regret in a fairly strong
quantitative sense.
For now, we will build intuition about the \CompShort
through examples. We begin with the multi-armed bandit, where
$\Pi=\brk{A}$ and $\cF=\bbR^{A}$.
Our first result shows that $\comp(\cF)\leq\frac{A}{\gamma}$, and that
this is achieved with the Inverse Gap Weighting method introduced in \pref{sec:cb}.
    \begin{prop}[IGW minimizes the DEC]
      \label{prop:igw_exact}
      For the Multi-Armed Bandit setting, where $\Pi=\brk{A}$ and
      $\cF=\bbR^{A}$, the Inverse Gap Weighting distribution $p=\igw_{4\gamma}(\fhat)$
      in \pref{eq:igw} is the \emph{exact} minimizer for
      $\comp(\cF,\fhat)$, and certifies that $\comp(\cF,\fhat)=\frac{A-1}{4\gamma}$.
    \end{prop}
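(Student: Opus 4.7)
The plan is to explicitly solve the min--max problem defining $\comp(\cF,\fhat)$, exploiting that $\cF = \bbR^A$ imposes no structure on $f$ beyond the choice of its argmax. Throughout I write $V(p,f) \ldef \En_{\pi \sim p}\brk*{f(\pif) - f(\pi) - \gamma(f(\pi)-\fhat(\pi))^2}$.

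First I would solve the inner maximization in closed form. Fix $p \in \Delta(\Pi)$ and, for each candidate $a^{\star} \in \Pi$, restrict to $f$ with $\pif = a^{\star}$, i.e.\ $f(a^{\star}) \geq f(\pi)$ for all $\pi$. The objective becomes $f(a^{\star}) - \sum_\pi p(\pi)\brk*{f(\pi) + \gamma(f(\pi)-\fhat(\pi))^2}$, and the unconstrained first-order conditions give $f(\pi) = \fhat(\pi) - \tfrac{1}{2\gamma}$ for $\pi \neq a^{\star}$ and $f(a^{\star}) = \fhat(a^{\star}) + \tfrac{1-p(a^{\star})}{2\gamma p(a^{\star})}$. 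Substituting and simplifying (the $\En_p[\fhat]$ terms telescope nicely) yields
\[
U(a^{\star};p) \ldef \max_{f : \pif = a^{\star}} V(p,f) = \fhat(a^{\star}) - \En_p[\fhat] + \frac{1-p(a^{\star})}{4\gamma p(a^{\star})},
\]
and $\max_{f \in \cF} V(p,f) = \max_{a^{\star} \in \Pi} U(a^{\star};p)$.

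The lower bound $\comp(\cF,\fhat) \geq \tfrac{A-1}{4\gamma}$ then follows from an averaging identity: for any $p$,
\[
\sum_{a^{\star}} p(a^{\star})\, U(a^{\star};p) \;=\; \En_p[\fhat] - \En_p[\fhat] + \frac{1}{4\gamma}\sum_{a^{\star}}(1-p(a^{\star})) \;=\; \frac{A-1}{4\gamma}.
\]
Since $\max \geq$ average, $\max_{a^{\star}} U(a^{\star};p) \geq \tfrac{A-1}{4\gamma}$, and constructing the corresponding $f$ for each $a^{\star}$ witnesses this as a lower bound on $\max_f V(p,f)$. For the matching upper bound, I would seek an equalizer $p^{\star}$ making $U(a^{\star};p^{\star})$ constant in $a^{\star}$. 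Setting this common value equal to $D-\En_{p^{\star}}[\fhat]$ and rearranging yields
\[
p^{\star}(a^{\star}) = \frac{1}{\lambda + 4\gamma\bigl(\fhat(\pihat) - \fhat(a^{\star})\bigr)}, \qquad \lambda \ldef 1 + 4\gamma(D - \fhat(\pihat)),
\]
with $\lambda \geq 1$ determined by $\sum_{a^{\star}} p^{\star}(a^{\star}) = 1$. This is the Inverse Gap Weighting distribution of the proposition. Summing the identity $\tfrac{1}{p^{\star}(\pi)} - \lambda = 4\gamma(\fhat(\pihat)-\fhat(\pi))$ weighted by $p^{\star}$ gives $\fhat(\pihat) - \En_{p^{\star}}[\fhat] = \tfrac{A-\lambda}{4\gamma}$, which plugs back into $U(\pihat;p^{\star})$ to yield the common value $\tfrac{A-\lambda}{4\gamma} + \tfrac{\lambda-1}{4\gamma} = \tfrac{A-1}{4\gamma}$, closing the loop.

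The main obstacle is handling the ordering constraint $f(\pi) \leq f(a^{\star})$ in the inner max, since the unconstrained optimum may in principle violate it. At the equalizer $p^{\star}$, however, the required feasibility condition $\fhat(\pi) - \fhat(a^{\star}) \leq \tfrac{1}{2\gamma p^{\star}(a^{\star})}$ reduces (using the explicit form of $p^{\star}$, with the worst case $\pi = \pihat$) to $1 - \lambda p^{\star}(a^{\star}) \leq 2$, which holds trivially; so the clean formula $U(a^{\star};p^{\star}) = \tfrac{A-1}{4\gamma}$ is exact and $p^{\star}$ is the exact minimizer. For the lower bound on a general $p$, one must check that the $f$'s used in the averaging argument are feasible (or can be adjusted by an arbitrarily small shift of $f(a^{\star})$ upward at negligible cost to $V$), which is straightforward in this unconstrained setting.
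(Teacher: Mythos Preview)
Your proof is correct and follows essentially the same approach as the paper's ``ad-hoc'' argument: solve the inner maximization over $f$ in closed form to reduce to a function of $(a^\star,p)$, obtain the lower bound by averaging $a^\star\sim p$, and obtain the matching upper bound by showing that $\igw_{4\gamma}(\fhat)$ is an equalizer. The only notable difference is that the paper sidesteps your ordering-constraint concern from the outset by writing $f(\pif)=\max_{\pistar}f(\pistar)$ and swapping the two maxima, which makes the inner maximization over $f$ genuinely unconstrained for each fixed $\pistar$; this yields exactly your formula for $U(a^\star;p)$ without any feasibility check, so your final paragraph, while not wrong, is unnecessary.
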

By rewriting \pref{prop:igw_mab}, it is straightforward to
deduce that the \CompShort is bounded by $\frac{A}{\gamma}$, but
\pref{prop:igw_exact} shows that \igw{} is actually the \emph{best
  possible distribution} for this minimax problem. In this sense, the
\squarecb algorithm can be seen as a (contextual) special case of the \etdtext principle. Note that to
attain the exact optimal value (instead of a bound that is
optimal up to constants), we use $\igw_{4\gamma}$ as opposed 
$\igw_{\gamma}$ as in \pref{prop:igw_mab}; the reason why this
choice for $\gamma$ is optimal is related to the fact that the inequality
$xy\leq{}x^2+\frac{1}{4}y^2$ is tight in general.

    \begin{proof}[\pfref{prop:igw_exact}]
      \newcommand{\istar}{i^{\star}}
      We rewrite the minimax problem as
      \begin{align*}
        &\min_{p\in\Delta(\brk{A})}\max_{f\in\bbR^{A}}\En_{\pi\sim{}p}\brk*{f(\pi_f)-f(\pi)-\gamma(f(\pi)-\fhat(\pi))^2}\\
        &=\min_{p\in\Delta(\brk{A})}\max_{f\in\bbR^{A}}\max_{\pistar\in\brk{A}}\En_{\pi\sim{}p}\brk*{f(\pistar)-f(\pi)-\gamma(f(\pi)-\fhat(\pi))^2}\\
        &=\min_{p\in\Delta(\brk{A})}\max_{\pistar\in\brk{A}}\max_{f\in\bbR^{A}}\En_{\pi\sim{}p}\brk*{f(\pistar)-f(\pi)-\gamma(f(\pi)-\fhat(\pi))^2}.
      \end{align*}
      For any fixed $p$ and $\pistar$, first-order
      conditions for optimality imply that the choice for $f$ that maximizes this
      expression is
      \begin{align*}
        f(\pi) = \fhat(\pi) - \frac{1}{2\gamma} + \frac{1}{2\gamma{}p(\pistar)}\indic{\pi=\pistar}.
      \end{align*}
      This choice gives
      \[
        \En_{\pi\sim{}p}\brk*{f(\pistar) - f(\pi)}
        = \En_{\pi\sim{}p}\brk*{\fhat(\pistar) - \fhat(\pi)} + \frac{1-p(\pistar)}{2\gamma{}p(\pistar)}
      \]
      and
      \[
        \gamma\En_{\pi\sim{}p}\brk*{(f(\pi)-\fhat(\pi))^2}
        = \frac{1-p(\pistar)}{4\gamma} +
        \frac{(1-p(\pistar))^2}{4\gamma{}p(\pistar)} =
        \frac{1}{4\gamma{}p(\pistar)} - \frac{1}{4\gamma}.
      \]
      Plugging in and simplifying, we compute that the original
      minimax game is
      equivalent to
      \begin{align}
        \label{eq:igw_simplified}
\min_{p\in\Delta(\brk{A})}\max_{\pistar\in\brk{A}}\crl*{\En_{\pi\sim{}p}\brk*{\fhat(\pistar)-\fhat(\pi)}
        + \frac{1}{4\gamma{}p(\pistar)}} - \frac{1}{4\gamma}.
      \end{align}
      \noindent\emph{Finishing the proof: Ad-hoc approach.} Observe that for any $p\in\Delta(\Pi)$, we have
      \begin{align*}
        \max_{\pistar\in\brk{A}}\crl*{\En_{\pi\sim{}p}\brk*{\fhat(\pistar)-\fhat(\pi)}
        + \frac{1}{4\gamma{}p(\pistar)}}
        \geq{} \En_{\pistar\sim{}p}\brk*{\En_{\pi\sim{}p}\brk*{\fhat(\pistar)-\fhat(\pi)}
        + \frac{1}{4\gamma{}p(\pistar)}} = \frac{A}{4\gamma},
      \end{align*}
      so no $p$ can attain value better than $\frac{A}{4\gamma}$. If we can show that IGW achieves this
      value, we are done.

Observe that by setting $p=\igw_{4\gamma}(\fhat)$, we have that for
all $\pistar$,
\begin{align}
  \label{eq:igw_equalizing}
  \En_{\pi\sim{}p}\brk*{\fhat(\pistar)-\fhat(\pi)}
  + \frac{1}{4\gamma{}p(\pistar)}
  &=   \En_{\pi\sim{}p}\brk*{\fhat(\pistar)-\fhat(\pi)}
  + \frac{\lambda}{4\gamma{}} + \fhat(\pihat) - \fhat(\pistar) \\
    &=   \En_{\pi\sim{}p}\brk*{\fhat(\pihat)-\fhat(\pi)}
        + \frac{\lambda}{4\gamma{}}.\notag
\end{align}
Note that the value on the right-hand side is independent of
$\pistar$. That is, the inverse gap weighting distribution is an equalizing
strategy. This means that for this choice of $p$, we have
\begin{align*}
  \max_{\pistar\in\brk{A}}\crl*{\En_{\pi\sim{}p}\brk*{\fhat(\pistar)-\fhat(\pi)}
    + \frac{1}{4\gamma{}p(\pistar)}} 
    &=\min_{\pistar\in\brk{A}}\crl*{\En_{\pi\sim{}p}\brk*{\fhat(\pistar)-\fhat(\pi)}
      + \frac{1}{4\gamma{}p(\pistar)}} \\
  &=\En_{\pistar\sim{}p}\crl*{\En_{\pi\sim{}p}\brk*{\fhat(\pistar)-\fhat(\pi)}
  + \frac{1}{4\gamma{}p(\pistar)}}  = \frac{A}{4\gamma}.
\end{align*}
Hence, $p=\igw_{4\gamma}(\fhat)$ achieves the optimal value.

\noindent\emph{Finishing the proof: Principled approach.}
We begin by relaxing to $p\in\bbR^{A}_{+}$. Define
\[
g_{\pistar}(p) = \fhat(\pistar)
        + \frac{1}{4\gamma{}p(\pistar)}.
\]
Let $\nu\in\bbR$ be a Lagrange multiplier and $p\in\bbR_{+}^{A}$, and consider the Lagrangian
\begin{align*}
  \cL(p,\nu) = g_{\pistar}(p) - \sum_{\pi}p(\pi)\fhat(\pi) + \nu\prn*{\sum_{\pi}p(\pi)-1}.
\end{align*}
By the KKT conditions, if we wish to show that $p\in\Delta(\Pi)$ is
optimal for the objective in \pref{eq:igw_simplified}, it suffices to find $\nu$ such that\footnote{If
  $p\in\Delta(\Pi)$, the KKT condition that $\frac{d}{d\nu}\cL(p,\nu)=0$
  is already satisfied.}
\[
  \mathbf{0}\in\partial_{p}\cL(p,\nu),
\]
where $\partial_{p}$ denotes the subgradient with respect to
$p$. Recall that for a convex function $h(x)=\max_{y}g(x,y)$, we have
$\partial{}_xh(x)=\conv(\crl*{\grad{}g(x,y)\mid{}g(x,y)=\max_{y'}g(x,y')})$. As
a result, 
\[
  \partial_{p}\cL(p,\nu)
  = \nu\mathbf{1} - \fhat + \conv(\crl{\grad_{p}g_{\pistar}(p)\mid{}g_{\pistar}(p)=\max_{\pi'}g_{\pi'}(p)}).
\]
Now, let $p=\igw_{4\gamma}(\fhat)$. We will argue
that $\mathbf{0}\in\partial_p\cL(p,\nu)$ for an appropriate choice of $\nu$. By
\pref{eq:igw_equalizing}, we know that $g_{\pi}(p)=g_{\pi'}(p)$ for
all $\pi,\pi'$ ($p$ is equalizing), so the expression above simplifies to
\begin{equation}
  \label{eq:igw_subgradient}
  \partial_{p}\cL(p,\nu)
  = \nu\mathbf{1} - \fhat + \conv(\crl{\grad_{p}g_{\pistar}(p)}_{\pistar\in\Pi}).
\end{equation}
Noting that
$\grad_pg_{\pistar}(p)=-\frac{1}{4\gamma{}p^2(\pistar)}e_{\pistar}$,
we 
compute
\[
  \mb{\delta}\ldef{}\sum_{\pi}p(\pi)g_{\pi}(p) =
  \crl*{-\frac{1}{4\gamma{}p(\pi)}}_{\pi\in\Pi} =
  \crl*{-\frac{\lambda}{4\gamma} - \fhat(\pihat) + \fhat(\pi)}_{\pi\in\Pi},
\]
which has $\mb{\delta} \in
\conv(\crl{\grad_{p}g_{\pistar}(p)}_{\pistar\in\Pi})$. By choosing
$\nu=\frac{\lambda}{4\gamma}+\fhat(\pihat)$, we have
\[
  \nu\mathbf{1} - \fhat + \mb{\delta} = \mathbf{0},
\]
so \pref{eq:igw_subgradient} is satisfied.

    \end{proof}

    \subsection{\CompText: Examples}

    We now show how to bound the \CompText for a number
    of examples beyond finite-armed bandits---some familiar and others new---and show how this
    leads to bounds on regret via \etd.

    \paragraph{Approximately solving the \CompShort}
    Before proceeding, let us mention that to apply \etd, it is not
    necessary to exactly solve the minimax problem
    \pref{eq:dec_structured}. Instead, let us say that a distribution
    $p=p(\fhat,\gamma)$ \emph{certifies an upper bound} on the \CompShort if, given
    $\fhat$ and $\gamma>0$, it ensures that
    \begin{align*}
      \sup_{f\in\cF}\En_{\pi\sim{}p}\brk*{
      f(\pif) - f(\pi)
      - \gamma\cdot(f(\pi)-\fhat(\pi))^2
      } \leq \compbar(\cF,\fhat)
    \end{align*}
    for some known upper bound
    $\compbar(\cF,\fhat)\geq\comp(\cF,\fhat)$. In this case, letting
    $\compbar(\cF)\ldef\sup_{\fhat}\compbar(\cF,\fhat)$, it is
    simple to see that if we use this distribution $p\ind{t}=p(\fhat\ind{t},\gamma)$
    within \etd, we have
    \begin{align*}
      \Reg \leq{}
  \compbar(\cF)\cdot{}T + \gamma\cdot\EstSq(\cF,T,\delta).
    \end{align*}

\subsubsection{Cheating Code}

For a first example, we show that the \CompShort leads to regret bounds
that scale with $\log(A)$ for the cheating code example in
\cref{ex:cheating_code}; that is, unlike UCB and posterior sampling,
the \CompShort correctly adapts to the structured of this problem.

\begin{prop}[\CompShort for Cheating Code]
  \label{prop:cheating}
  Consider the cheating code in \pref{ex:cheating_code}. For this
  class $\cF$, we have
  \begin{align*}
    \comp(\cF) \approxleq{} \frac{\log_2(A)}{\gamma}.
  \end{align*}
\end{prop}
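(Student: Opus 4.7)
The plan is to exploit the very structure that foiled UCB: the cheating actions in $\cC$ have uniformly poor reward, so any minimizer of the regret alone will avoid them, but their squared prediction error is extremely informative about $\pi_f$. The idea is to construct, given a reference $\fhat\in\conv(\cF)$, a distribution $p$ that spends most of its mass exploiting a \emph{decoded} greedy action while spending a small amount of mass on each $c_i$, using the quadratic information gain term $\gamma(f(\pi)-\fhat(\pi))^2$ to pay for the resulting exploration cost.

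More concretely, note that for any $\fhat\in\conv(\cF)$ we have $\fhat(c_i)\in[-1,0]$ for each $i\in[\log_2 A]$, since each $f\in\cF$ assigns $f(c_i)\in\{0,-1\}$. Define a guessed encoding $\hat b_i \ldef \mathbb{I}\{\fhat(c_i)\le -1/2\}\in\{0,1\}$ and let $\hat\pi\in[A]$ be the unique action whose binary index is $(\hat b_1,\dots,\hat b_{\log_2 A})$. I will use the distribution
\[
p(\hat\pi) = 1 - q\log_2 A, \qquad p(c_i) = q \ \text{for each }i\in[\log_2 A],
\]
with $q = \min\{4/\gamma,\ 1/\log_2 A\}$, and all other actions assigned zero mass. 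The rounding choice guarantees the key robustness property: for any $f\in\cF$, if $\hat b_i\neq b_i(f)$ then $|\fhat(c_i)-f(c_i)|\ge 1/2$, so $(f(c_i)-\fhat(c_i))^2\ge 1/4$.

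Having fixed $p$, I will bound $\En_{\pi\sim p}[f(\pi_f)-f(\pi)-\gamma(f(\pi)-\fhat(\pi))^2]$ uniformly in $f\in\cF$ by splitting into two cases. If $\hat\pi=\pi_f$, then the $\hat\pi$-term contributes zero regret, so the entire expression is at most the regret from the cheating mass, which is bounded by $2q\log_2 A = 8\log_2 A/\gamma$ (since $f(\pi_f)-f(c_i)\le 2$). If $\hat\pi\neq\pi_f$, pick any coordinate $i^\star$ where $\hat b_{i^\star}\neq b_{i^\star}(f)$; the information-gain term alone on that $c_{i^\star}$ is at least $q\gamma/4$, while the total regret is at most $1+2q\log_2 A$. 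Substituting $q=4/\gamma$ makes the gain cancel the leading $1$, leaving $\le 8\log_2 A/\gamma$.

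The only remaining subtlety is the regime $\gamma<4\log_2 A$, where the choice $q=4/\gamma$ would violate $q\log_2 A\le 1$. Here the target bound $\log_2 A/\gamma\gtrsim 1$ is already larger than the trivial bound $\comp(\cF,\fhat)\le 2$ coming from the fact that all rewards lie in $[-1,1]$, so any fixed distribution (e.g.\ $p=\delta_{\hat\pi}$) certifies the inequality. I expect the only real obstacle to be bookkeeping the rounding argument cleanly—in particular, verifying the $|\fhat(c_i)-f(c_i)|\ge 1/2$ property for $\fhat\in\conv(\cF)$ rather than $\fhat\in\cF$—since everything else (choice of $q$, two-case split, concluding bound) is a direct substitution.
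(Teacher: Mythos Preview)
Your proposal is correct and follows essentially the same strategy as the paper: mix most of the mass on a single ``greedy'' action with a small uniform spread over the cheating actions $\cC$, then do a two-case split on whether the greedy action equals $\pi_f$. The paper's version uses $p=(1-\veps)\delta_{\pifhat}+\veps\cdot\unif(\cC)$ with $\veps=2\log_2(A)/\gamma$, which is your construction up to constants.

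There is one genuine difference worth noting. The paper only carries out the argument for $\fhat\in\cF$, where $\pifhat$ automatically coincides with the action encoded by $b(\fhat)$, and remarks that the convex-hull case requires ``a slightly different choice for $p$'' without giving details. Your decoding step---rounding $\fhat(c_i)$ at threshold $-1/2$ to produce $\hat b_i$ and then taking $\hat\pi$ as the action with that binary index---is exactly the missing ingredient for $\fhat\in\conv(\cF)$. The key property $|\fhat(c_i)-f(c_i)|\ge 1/2$ whenever $\hat b_i\ne b_i(f)$ holds because $\fhat(c_i)\in[-1,0]$ for convex combinations, and your case analysis checks out. So your argument is slightly more complete than what the paper presents, at the cost of a factor-of-$4$ loss in the information-gain term (you get $(f(c_i)-\fhat(c_i))^2\ge 1/4$ rather than $=1$), which is absorbed into the $\approxleq$.
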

Note that while the strategy $p$ in \pref{prop:cheating} certifies a
bound on the \CompShort, it is not necessarily the exact minimizer,
and hence the distributions $p\ind{1},\ldots,p\ind{T}$ played by \etd
may be different. Nonetheless, since the regret of \etd is bounded by
the \CompShort, this result (via \pref{prop:dec_bandit}) implies that
its regret is bounded by
$\Reg\approxleq{}\sqrt{\log_2(A)T\log\abs{\cF}}$. Using a slightly
more refined version of the \etd algorithm \citep{foster2023tight}, one
can improve this to match the $\log(T)$ regret bound given in
\pref{ex:cheating_code}.

\begin{proof}[\pfref{prop:cheating}]
  To simplify exposition, we present a bound on $\comp(\cF,\fhat)$ for this
    example only for
    $\fhat\in\cF$, not for $\fhat\in\conv(\cF)$. A similar approach
    (albeit with a slightly different choice for $p$) leads to the
    same bound on $\comp(\cF)$.  Let
  $\fhat\in\cF$ and $\gamma>0$ be given, and define 
  \[
    p = (1-\veps)\pifhat + \veps\cdot\unif(\cC).
  \]
  We will show that if we choose $\veps=2\frac{\log_2(A)}{\gamma}$, this strategy certifies that
  \begin{align*}
    \comp(\cF,\fhat) \approxleq{} \frac{\log_2(A)}{\gamma}.
  \end{align*}
  Let $f\in\cF$ be fixed, and consider the value
  \[
  \En_{\act\sim{}p}\brk*{f(\pif)-f(\pi)
    -\gamma\cdot(f(\pi)-\fhat(\pi))^2
}.
  \]
  We
  consider two cases. First the first, if $\pi_f=\pifhat$, then we can upper bound
  \begin{align*}
    \En_{\act\sim{}p}\brk*{f(\pif)-f(\pi)
    -\gamma\cdot(f(\pi)-\fhat(\pi))^2
    }\leq{}  \En_{\act\sim{}p}\brk*{f(\pif)-f(\pi)}
    =  \En_{\act\sim{}p}\brk*{f(\pifhat)-f(\pi)}\leq{}2\veps,
  \end{align*}
  since $f\in\brk{-1,1}$. 

  For the second case, suppose that $\pi_f\neq{}\pi_{\fhat}$. We begin
  by bounding 
  \begin{align*}
    \En_{\act\sim{}p}\brk*{f(\pif)-f(\pi)
    -\gamma\cdot(f(\pi)-\fhat(\pi))^2
    }\leq{} 2
    -\gamma\cdot\En_{\act\sim{}p}\brk*{(f(\pi)-\fhat(\pi))^2},
  \end{align*}
  using that $f\in\brk{-1,1}$. To proceed, we want to argue that the
  negative offset term above is sufficiently large; informally, this means that
  we are exploring ``enough''. Observe that since $\pi_f \neq
  \pi_{\fhat}$, if we let
  $b_1,\ldots,b_{\log_2(A)}$ and $b'_1,\ldots,b'_{\log_2(A)}$ denote
  the binary representations for $\pi_f$ and $\pi_{\fhat}$, there
  exists $i$ such that $b_i\neq{}b'_{i}$. As a result, we have
  \begin{align*}
    \En_{\act\sim{}p}\brk*{(f(\pi)-\fhat(\pi))^2}
    \geq{} \frac{\veps}{\log_2(A)}(f(c_i)-\fhat(c_i))^2
    = \frac{\veps}{\log_2(A)}(b_i-b'_i)^2 = \frac{\veps}{\log_2(A)}.
  \end{align*}
  We conclude that in the second case,
  \[
\En_{\act\sim{}p}\brk*{f(\pif)-f(\pi)
    -\gamma\cdot(f(\pi)-\fhat(\pi))^2
    } \leq{} 2- \gamma\frac{\veps}{\log_2(A)}.
  \]

  Putting the cases together, we have
  \begin{align*}
    \En_{\act\sim{}p}\brk*{f(\pif)-f(\pi)
    -\gamma\cdot(f(\pi)-\fhat(\pi))^2
    }
    \leq{} \max\crl*{2\veps, 2- \gamma\frac{\veps}{\log_2(A)}}.
  \end{align*}
  To balance these terms, we set
  \[
    \veps = 2\frac{\log_2(A)}{\gamma},
  \]
  which leads to the result.
\end{proof}

\subsubsection{Linear Bandits}
\label{sec:linear}
We next consider the problem of linear bandits \emph{linear bandit}
\citep{abe1999associative,auer2002finite,dani2008stochastic,chu2011contextual,abbasi2011improved},
which is a special case of the linear contextual bandit problem we saw
in \pref{sec:cb}. We let $\Act$ be arbitrary, and define
$\cF=\crl*{\act\mapsto\tri{\theta,\phi(\act)}\mid{}\theta\in\Theta}$,
where $\Theta\subseteq\sB_2^{d}(1)$ is a parameter set and
$\phi:\Pi\to\sB_2^{d}(1)$ is a fixed feature map that is known to the learner.

To prove bounds on the \CompShort for this setting, we make use of a
primitive from convex analysis and experimental design known as the \emph{G-optimal design}.
\begin{prop}[G-optimal design \citep{kiefer1960equivalence}]
    For any compact set $\cZ\subseteq\bbR^{d}$ with $\mathrm{dim}\,\mathrm{span}(\cZ)=d$, there exists a
    distribution $p\in\Delta(\cZ)$, called the \emph{G-optimal
      design}, which has
      \begin{equation}
    \label{eq:optimal_design}
    \sup_{z\in\cZ}\tri[\big]{\Sigma_p^{-1}z,z}\leq{}d,
  \end{equation}
  where $\Sigma_p\ldef\En_{z\sim{}p}\brk*{zz^{\trn}}$. 
\end{prop}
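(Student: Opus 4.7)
The plan is to prove this via the classical Kiefer--Wolfowitz equivalence between $G$-optimal and $D$-optimal designs. Rather than attacking the $\sup_{z}\langle\Sigma_p^{-1}z,z\rangle$ objective directly (which is non-smooth and awkward to optimize), I will exhibit a $p$ that solves an auxiliary smooth optimization and then show that first-order optimality conditions for that auxiliary problem yield exactly the bound we want.

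Concretely, consider the \emph{D-optimal design} problem
\[
p^{\star} \in \argmax_{p\in\Delta(\cZ)} \log\det\bigl(\Sigma_p\bigr), \qquad \Sigma_p = \En_{z\sim p}[zz^{\trn}].
\]
First I would verify that this argmax exists. Since $\mathrm{dim}\,\mathrm{span}(\cZ)=d$, we can pick $d$ affinely independent points $z_1,\dots,z_d\in\cZ$ so that the uniform distribution on them yields $\Sigma_p\succ 0$; hence the objective is not identically $-\infty$. Then since $\cZ$ is compact, $\Delta(\cZ)$ is compact in the weak-$\star$ topology, the map $p\mapsto\Sigma_p$ is continuous, and $\log\det$ is upper semicontinuous on the PSD cone (with the convention $\log\det(\cdot)=-\infty$ on singular matrices), so a maximizer $p^{\star}$ exists and automatically has $\Sigma_{p^{\star}}\succ 0$.

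The core step is the first-order optimality argument. For any $z\in\cZ$, consider the perturbation $p_\veps = (1-\veps)p^{\star} + \veps\,\delta_z$ for $\veps\in[0,1]$, which lies in $\Delta(\cZ)$. Then $\Sigma_{p_\veps} = \Sigma_{p^{\star}} + \veps(zz^{\trn}-\Sigma_{p^{\star}})$, and using the standard identity $\tfrac{d}{d\veps}\log\det(A+\veps B)\big|_{\veps=0} = \tr(A^{-1}B)$, I compute
\[
\left.\tfrac{d}{d\veps}\log\det(\Sigma_{p_\veps})\right|_{\veps=0} = \tr\bigl(\Sigma_{p^{\star}}^{-1}(zz^{\trn}-\Sigma_{p^{\star}})\bigr) = \bigl\langle \Sigma_{p^{\star}}^{-1}z,\,z\bigr\rangle - d.
\]
Since $p^{\star}$ is a maximizer and $p_\veps$ is a feasible one-sided perturbation, this derivative must be $\leq 0$. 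Hence $\langle\Sigma_{p^{\star}}^{-1}z,z\rangle \leq d$ for every $z\in\cZ$, which is the claim.

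The main (minor) obstacle is making sure the derivative computation and the upper semicontinuity argument are justified---in particular, confirming that $\Sigma_{p^{\star}}$ is invertible so that $\Sigma_{p^{\star}}^{-1}$ exists and the log-det derivative formula applies. This is handled by the observation above that some $p$ achieves $\Sigma_p\succ 0$ (so the maximum value is finite), forcing $\Sigma_{p^{\star}}\succ 0$ as well. Everything else is routine calculus.
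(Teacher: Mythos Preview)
The paper does not give its own proof of this proposition; it is stated as a cited primitive from \citet{kiefer1960equivalence} and used as a black box in bounding the DEC for linear bandits. Your argument is precisely the standard Kiefer--Wolfowitz equivalence proof and is correct: the key step---that the $D$-optimal design (maximizer of $\log\det\Sigma_p$) automatically satisfies the $G$-optimality bound $\sup_z\langle\Sigma_p^{-1}z,z\rangle\leq d$ via the first-order condition along the perturbation $p_\veps=(1-\veps)p^\star+\veps\delta_z$---is exactly the content of the cited result.

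One minor terminology slip: to guarantee $\Sigma_p\succ 0$ you want $d$ \emph{linearly} independent points in $\cZ$, not affinely independent (e.g.\ $(1,0)$ and $(2,0)$ are affinely independent in $\bbR^2$ but yield a rank-one second-moment matrix). The existence of $d$ linearly independent points follows directly from the hypothesis $\mathrm{dim}\,\mathrm{span}(\cZ)=d$, so the substance of your existence argument is unaffected.
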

The G-optimal design ensures
coverage in every direction of the decision space, generalizing
the notion of uniform exploration for finite action spaces. In this
sense, it can be thought of as a ``universal'' exploratory
distribution for linearly structured action spaces. Special cases include:
\begin{itemize}
\item When $\cZ=\Delta(\brk{A})$, we can take
  $p=\unif(e_1,\ldots,e_A)$ as an optimal design
\item When $\cZ=\sB_2^d(1)$, we can again take
  $p=\unif(e_1,\ldots,e_A)$ as an optimal design.
\item For any positive definite matrix $A\psdgt0$, the set
  $\cZ=\crl*{z\in\bbR^{d}\mid{}\tri{Az,z}\leq{}1}$ is an ellipsoid. Letting
  $\lambda_1,\ldots,\lambda_d$ and $v_1,\ldots,v_d$ denote the
  eigenvalues and eigenvectors for $A$, respectively, the distribution
  $p=\unif(\lambda_1^{-1/2}v_1,\ldots,\lambda_d^{-1/2}v_d)$ is an
  optimal design.
\end{itemize}

To see how the G-optimal design can be used for exploration, consider
the following generalization of the $\veps$-greedy algorithm.
\begin{itemize}
\item Let $q\in\Delta(\Pi)$ be the G-optimal design for the set $\crl{\phi(\pi)}_{\pi\in\Pi}$.
  \item At each step $t$, obtain $\fhat\ind{t}$ from a supervised
    estimation oracle. Play $\pihat\ind{t}=\pi_{\fhat\ind{t}}$ with
    probability $1-\veps$, and sample $\pi\ind{t}\sim{}q$ otherwise.
  \end{itemize}
  It is straightforward
  to show that this strategy gives
  $\Reg\approxleq{}d^{1/3}T^{2/3}\log\abs{\cF}$ for linear
  bandits. The basic idea is to replace
  \pref{eq:eps_greedy_min_probability} in the proof of
  \pref{prop:eps_greedy_cb} with the optimal design property \pref{eq:optimal_design}, using
  that the reward functions under consideration are linear. The
  intuition is that even though we are no longer guaranteed to explore
  every single action with some minimum probability, by exploring with the optimal design,
  we ensure that some fraction of the data we collect covers every
  possible direction in action space to the greatest extent possible.

  The following result shows that by combining optimal design
  inverse gap weighting, we can obtain a $d/\gamma$ bound on the
  \CompShort, which leads to an improved $\sqrt{dT}$ regret bound.

  \begin{prop}[\CompShort for Linear Bandits]
    \label{prop:dec_linear}
    Consider the linear bandit setting. Let a linear function $\fhat$ and
    $\gamma>0$ be given, consider the following distribution $p$:
    \begin{itemize}
    \item Define $\phibar(\pi) =
      \phi(\pi)/\sqrt{1+\frac{\gamma}{d}\prn[\big]{\fhat(\pifhat)-\fhat(\pi)}}$,
      where $\pifhat=\argmax_{\pi\in\Pi}\fhat(\pi)$.
    \item Let $\qbar\in\Delta(\Pi)$ be the G-optimal design for
      the set $\crl{\phibar(\pi)}_{\pi\in\Pi}$, and define $q=\frac{1}{2}\qbar + \frac{1}{2}\ones_{\pifhat}$.
    \item For each $\pi\in\Pi$, set
      \[
        p(\pi) = \frac{q(\pi)}{\lambda+\frac{\gamma}{d}(\fhat(\pifhat)-\fhat(\pi))},
      \]
      where $\lambda\in\brk{1/2,1}$ is chosen such that
      $\sum_{\pi}p(\pi)=1$.\footnote{The normalizing constant $\lambda$ always
        exists because we have $\frac{1}{2\lambda}\leq\sum_{\pi}p(\pi)\leq\frac{1}{\lambda}$.}
    \end{itemize}
  This strategy certifies that
  \[
    \comp(\cF) \approxleq{} \frac{d}{\gamma}.
  \]
\end{prop}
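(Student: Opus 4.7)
\textbf{Proof proposal for \pref{prop:dec_linear}.}
Since every element of $\conv(\cF)$ is again a linear function with parameter in $\conv(\Theta)\subseteq\unitball$, I may assume $\fhat(\pi)=\tri{\hat\theta,\phi(\pi)}$. Fix $f(\pi)=\tri{\theta,\phi(\pi)}\in\cF$, and set $\Delta=\theta-\hat\theta$, $\pihat=\pi_{\fhat}$, and $\delta(\pi)=\fhat(\pihat)-\fhat(\pi)\geq 0$. Using $\fhat(\pif)-\fhat(\pi)=\delta(\pi)-\delta(\pif)$, I decompose
\begin{align*}
\En_{\pi\sim p}\brk*{f(\pif)-f(\pi)}
  =\underbrace{\En_{\pi\sim p}\brk*{\delta(\pi)}}_{(I)}
  +\underbrace{\tri{\Delta,\phi(\pif)}-\delta(\pif)}_{(II)}
  -\underbrace{\En_{\pi\sim p}\brk*{\tri{\Delta,\phi(\pi)}}}_{(III)},
\end{align*}
and will bound each piece by $\bigoh(d/\gamma)+\tfrac{\gamma}{2}\En_{\pi\sim p}\brk*{(f(\pi)-\fhat(\pi))^2}$, so that all estimation-error contributions are absorbed by the $-\gamma\En_{\pi\sim p}\brk*{(f-\fhat)^2}$ term in the DEC.

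For $(I)$, I use $\frac{\delta}{\lambda+(\gamma/d)\delta}\leq d/\gamma$ termwise to get $(I)\leq d/\gamma$, mirroring the IGW exploration-bias bound. For $(II)$, Cauchy--Schwarz in the $\Sigma_{\bar q}$ geometry, combined with the key identity $\phi(\pif)^{\trn}\Sigma_{\bar q}^{-1}\phi(\pif)=(1+(\gamma/d)\delta(\pif))\cdot\bar\phi(\pif)^{\trn}\Sigma_{\bar q}^{-1}\bar\phi(\pif)\leq d+\gamma\delta(\pif)$ (which uses the G-optimal design property on $\{\bar\phi(\pi)\}$), and AM--GM ($ab\leq a^2/\gamma+\gamma b^2/4$) give
\[
\tri{\Delta,\phi(\pif)}\leq\frac{d+\gamma\delta(\pif)}{\gamma}+\frac{\gamma}{4}\nrm{\Delta}_{\Sigma_{\bar q}}^{2},
\]
so that subtracting $\delta(\pif)$ cancels the offending linear term and yields $(II)\leq d/\gamma+\tfrac{\gamma}{4}\nrm{\Delta}_{\Sigma_{\bar q}}^2$. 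For $(III)$, Jensen plus $\sqrt{x}\leq 1/(2\gamma)+\gamma x/2$ gives $-(III)\leq 1/(2\gamma)+\tfrac{\gamma}{2}\En_{\pi\sim p}\brk*{\tri{\Delta,\phi(\pi)}^2}$. It remains to control $\nrm{\Delta}_{\Sigma_{\bar q}}^2$ by the estimation error under $p$: from $q\geq\tfrac{1}{2}\bar q$ and $\lambda\leq 1$,
\[
\En_{\pi\sim p}\brk*{(f(\pi)-\fhat(\pi))^2}
=\sum_\pi\frac{q(\pi)\tri{\Delta,\phi(\pi)}^2}{\lambda+(\gamma/d)\delta(\pi)}
\geq\frac{1}{2}\sum_\pi\bar q(\pi)\frac{\tri{\Delta,\phi(\pi)}^2}{1+(\gamma/d)\delta(\pi)}
=\frac{1}{2}\nrm{\Delta}_{\Sigma_{\bar q}}^2.
\]
Substituting this into the bounds for $(II)$ and $(III)$ and summing gives $\En_{\pi\sim p}\brk*{f(\pif)-f(\pi)}-\gamma\En_{\pi\sim p}\brk*{(f-\fhat)^2}\leq 2d/\gamma+1/(2\gamma)\lesssim d/\gamma$, which yields $\comp(\cF,\fhat)\lesssim d/\gamma$ after taking the supremum over $f\in\cF$.

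The main obstacle, in my view, is the bound on $(II)$: a naive G-optimal design on the raw features $\{\phi(\pi)\}$ would give $\tri{\Delta,\phi(\pif)}\lesssim d/\gamma+\tfrac{\gamma}{4}\nrm{\Delta}^2_{\Sigma}$, but this cannot absorb the $-\delta(\pif)$ term in the decomposition, and $\delta(\pif)$ can be as large as $\Theta(1)$ when $\fhat$'s greedy action is very different from $\pif$. The rescaling $\bar\phi(\pi)=\phi(\pi)/\sqrt{1+(\gamma/d)\delta(\pi)}$ is engineered precisely so that the G-optimal design guarantee on $\{\bar\phi(\pi)\}$ inflates to $\phi(\pif)^{\trn}\Sigma_{\bar q}^{-1}\phi(\pif)\leq d+\gamma\delta(\pif)$---and the extra $\gamma\delta(\pif)/\gamma=\delta(\pif)$ exactly cancels against $-\delta(\pif)$ in $(II)$. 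The mixing $q=\tfrac{1}{2}\bar q+\tfrac{1}{2}\ones_{\pihat}$ plays a complementary role: it forces $q(\pihat)\geq 1/2$, which together with $p(\pihat)\leq 1$ guarantees $\lambda\in[1/2,1]$, and in turn guarantees $q\geq\tfrac{1}{2}\bar q$ so that the estimation-error-to-$\Sigma_{\bar q}$ comparison in the last display survives the IGW-style denominator $\lambda+(\gamma/d)\delta(\pi)$.
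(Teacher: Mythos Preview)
Your proof is correct and follows essentially the same route as the paper: the three-term decomposition is identical (your (I), (II), $-(III)$ are the paper's exploration bias, estimation error at optimum, and estimation error on policy, respectively), and the key step—using the G-optimal design on the rescaled features $\phibar$ so that $\phi(\pif)^{\trn}\Sigma_{\qbar}^{-1}\phi(\pif)\leq d+\gamma\delta(\pif)$ exactly cancels the $-\delta(\pif)$—is the same mechanism. The only cosmetic difference is that the paper applies Cauchy--Schwarz in the $\Sigma_p$ geometry and then uses $\Sigma_p\succeq\tfrac12\wb{\Sigma}_{\qbar}$ to bound $\nrm{\phi(\pif)}_{\Sigma_p^{-1}}$, whereas you apply Cauchy--Schwarz directly in $\wb{\Sigma}_{\qbar}$ and afterward compare $\nrm{\Delta}_{\wb{\Sigma}_{\qbar}}^2\leq 2\En_{\pi\sim p}[(f-\fhat)^2]$; these are dual rearrangements of the same inequality.
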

One can show that $\comp(\cF)\approxgeq\frac{d}{\gamma}$ for this
setting as well, so this is the best bound we can hope for. Combining
this result with \pref{prop:dec_bandit} and using the averaged exponential
weights algorithm for estimation as in \cref{eq:dec_finite} gives $\Reg\approxleq{}
\sqrt{dT\log(\abs{\cF}/\delta)}$.

\begin{proof}[\pfref{prop:dec_linear}]%
  \newcommand{\termthree}{\text{(III)}}%
  \newcommand{\termfour}{\text{(IV)}}%
  \newcommand{\Sigmabar}{\wb{\Sigma}}%
  Fix $f\in\cF$. Let us abbreviate $\eta=\frac{\gamma}{d}$. As in \pref{prop:igw_mab}, we
  break regret into three terms:
	\begin{align*}
		\hspace{-8mm}\En_{\pi\sim{}p}\bigl[ f(\pif)-f(\pi)\bigr] 
		= \underbrace{\En_{\pi\sim{}p}\bigl[ \fhat(\pifhat)-\fhat(\pi)\bigr] }_{\text{(I) exploration bias}} 
		+  \underbrace{\En_{\pi\sim{}p}\bigl[ \fhat(\pi)-f(\pi)\bigr] }_{\text{(II) est error on policy}} 
		+ \underbrace{ f(\pif)-\fhat(\pifhat) }_{\text{(III) est error at opt}} .
	\end{align*}
	The first term captures the loss in exploration that we would
        incur if $\fhat$ we true the reward function, and is equal to:
	\begin{align*}
          \sum_{\pi} \frac{q(\pi)(\fhat(\pifhat) -
          \fhat(\pi))}{\lambda + \eta\left(\fhat(\pifhat) -
          \fhat(\pi)\right)} \leq
          \sum_{\pi}\frac{q(\pi)}{\eta} \leq \frac{1}{\eta},
	\end{align*}
	and the second term, as before, is at most
	\begin{align*}
		&\sqrt{\En_{\pi\sim{}p}\brk[\big]{( \fhat(\pi)-f(\pi))^2}} \leq \frac{1}{2\gamma} + \frac{\gamma}{2}\En_{\pi\sim{}p}( \fhat(\pi)-f(\pi))^2.  
	\end{align*}
	The third term can be written as
	\begin{align*}
    \termthree = 	f(\pif)-\fhat(\pif) 
			- (\fhat(\pifhat) -\fhat(\pif)) 
			&= \tri[\big]{\theta-\thetahat,\phi(\pif)} - (\fhat(\pifhat) -\fhat(\pif)),
        \end{align*}
        where $\theta,\thetahat\in\Theta$ are parameters such that
        $f(\pi)=\tri{\theta,\phi(\pi)}$ and
        $\fhat(\pi)=\tri{\thetahat,\phi(\pi)}$. Defining
        $\Sigma_p=\En_{\pi\sim{}p}\brk*{\phi(\pi)\phi(\pi)^{\trn}}$,
        we can bound
        \begin{align*}
          \tri[\big]{\theta-\thetahat,\phi(\pif)}
          &=
          \tri[\big]{\Sigma_p^{1/2}(\theta-\thetahat),\Sigma_p^{-1/2}\phi(\pif)} \\
            &\leq{}
              \nrm{\Sigma_p^{1/2}(\theta-\thetahat)}_2\nrm{\Sigma_p^{-1/2}\phi(\pif)}_2
              &\leq{}
                \frac{\gamma}{2}\nrm{\Sigma_p^{1/2}(\theta-\thetahat)}_2^2
                + \frac{1}{2\gamma}\nrm{\Sigma_p^{-1/2}\phi(\pif)}_2^2.
        \end{align*}
        Note that
        $\nrm{\Sigma_p^{1/2}(\theta-\thetahat)}_2^2=\En_{\pi\sim{}p}\brk{(\fhat(\pi)-f(\pi))^2}$
        and
        $\nrm{\Sigma_p^{-1/2}\phi(\pif)}_2^2=\tri{\phi(\pif),\Sigma_p^{-1}\phi(\pif)}$, 
        so we have
        \begin{align*}
          \termthree
          \leq{}
          \frac{\gamma}{2}\En_{\pi\sim{}p}\brk{(\fhat(\pi)-f(\pi))^2}
          + \underbrace{\frac{1}{2\gamma}\tri{\phi(\pif),\Sigma_p^{-1}\phi(\pif)} - (\fhat(\pifhat) -\fhat(\pif))}_{\termfour}.
        \end{align*}
        To proceed, observe that
        \begin{align*}
          \Sigma_p &\psdgeq \frac{1}{2}\sum_{\pi}\frac{\qbar(\pi)}{\lambda +
          \eta(\fhat(\pifhat)-\fhat(\pi))}\phi(\pi)\phi(\pi)^{\trn}\\
          &\psdgeq \frac{1}{2}\sum_{\pi}\frac{\qbar(\pi)}{1 +
          \eta(\fhat(\pifhat)-\fhat(\pi))}\phi(\pi)\phi(\pi)^{\trn} 
          \psdgeq \frac{1}{2}\sum_{\pi}\qbar(\pi)\phibar(\pi)\phibar(\pi)^{\trn}\rdef\frac{1}{2}\wb{\Sigma}_{\qbar}
        \end{align*}
        This means that we can bound
        \begin{align*}
          \tri{\phi(\pif),\Sigma_p^{-1}\phi(\pif)}
          &\leq{} 2 \tri{\phi(\pif),\Sigmabar_{\qbar}^{-1}\phi(\pif)} \\
          &=
            2(1+\eta(\fhat(\pifhat)-\fhat(\pif))\tri{\phibar(\pif),\Sigmabar_{\qbar}^{-1}\phibar(\pif)}
          \\
          &\leq{} 2d(1+\eta(\fhat(\pifhat)-\fhat(\pif)),
        \end{align*}
        where the last line uses that $\qbar$ is the G-optimal design for
        $\crl{\phibar(\pi)}_{\pi\in\Pi}$. We conclude that
        \begin{align*}
          \termfour
          \leq{} \frac{2d}{2\gamma}
          + \frac{2d\eta}{2\gamma}(\fhat(\pifhat)-\fhat(\pif)) - (\fhat(\pifhat)-\fhat(\pif))\leq\frac{d}{\gamma}.
        \end{align*}
\end{proof}

  \begin{rem}
    In fact, it can be shown \citep{foster2020adapting} that when
    $\Theta=\bbR^{d}$, the \emph{exact} minimizer of the \CompShort for
     linear bandits is given by
      \[
        p=\argmax_{p\in\Delta(\Act)}\crl*{\En_{\act\sim{}p}\brk[\big]{
            \fhat(\act)} +
          \frac{1}{4\gamma}\log\det(\En_{\act\sim{}p}\brk{\phi(\act)\phi(\act)^{\trn}})
        }.
      \]
    \end{rem}

\subsubsection{Nonparametric Bandits}
\label{sec:nonparametric}
\newcommand{\met}{\rho}%
\newcommand{\Mcov}{\cN_{\met}}%

For all of the examples so far, we have shown that
\[
\comp(\cF) \approxleq\frac{\textsf{eff-dim}(\cF,\Pi)}{\gamma},
\]
where $\textsf{eff-dim}(\cF,\Pi)$ is some quantity that 
(informally) reflects the amount of exploration required for the class
$\cF$ under consideration ($A$ for bandits, $\log_2(A)$ for the
cheating code, and $d$ for linear bandits). In general though, the 
\CompText does not always shrink at a $\gamma^{-1}$ rate, and can have
slower decay for problems where the optimal rate is worse than
$\sqrt{T}$. We now consider such a setting: a standard \emph{nonparametric} bandit
problem called \emph{Lipschitz
bandits in metric spaces}
\citep{auer2007improved,kleinberg2019bandits}.

We
take $\Act$ to be a metric space equipped with metric $\met$, and define
\[
\cF = \crl*{f:\Act\to\brk{0,1} \mid{} \text{$f$ is $1$-Lipschitz w.r.t $\met$}}.
\]
We give a bound on the \CompText which depends on the \emph{covering
  number} for the space $\Pi$ (with respect to
the metric $\rho$). Let us say that $\Act'\subseteq\Act$ is an $\veps$-cover with
respect to $\met$ if
\[
    \forall{}\act\in\Act\quad\exists{}\act'\in\Act'\quad\text{s.t.}\quad
    \met(\act,\act')\leq\veps,
  \]
  and let $\Mcov(\Act,\veps)$ denote the size of the smallest such cover.
\begin{prop}[\CompShort for Lipschitz Bandits]
    \label{prop:dec_lipschitz}
    Consider the Lipschitz bandit setting, and suppose
    that there exists $d>0$ such that
    $\Mcov(\Act,\veps)\leq\veps^{-d}$ for all $\veps>0$.
    Let $\fhat:\Pi\to\brk{0,1}$ and $\gamma\geq{}1$ be given and consider the following distribution:
    \begin{enumerate}
    \item Let $\Act'\subseteq\Act$ witness the covering number
      $\Mcov(\Act,\veps)$ for a parameter $\veps>0$.
    \item Let $p$ be the result of applying the inverse gap weighting strategy in \pref{eq:igw}
      to $\fhat$, restricted to the (finite) decision space $\Act'$.
    \end{enumerate}
    By setting $\veps\propto\gamma^{-\frac{1}{d+1}}$, this strategy certifies that
    \[
            \comp(\cF,\fhat) \approxleq\gamma^{-\frac{1}{d+1}}.
          \]
        \end{prop}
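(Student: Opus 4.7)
The plan is to use the $\veps$-cover $\Pi'$ to discretize the decision space and reduce to the finite-action IGW bound from \pref{prop:igw_mab}. First, fix $\fhat$, $\gamma\geq 1$, and a parameter $\veps>0$ to be tuned; let $\Pi'\subseteq \Pi$ be a minimal $\veps$-cover, so that $|\Pi'|\leq \Mcov(\Pi,\veps)\leq \veps^{-d}$, and let $p\in\Delta(\Pi)$ be the IGW distribution from \cref{eq:igw} with parameter $\gamma$ applied to the restricted vector $(\fhat(\pi'))_{\pi'\in\Pi'}\in\bbR^{|\Pi'|}$; by construction $p$ is supported on $\Pi'$, so $p\in\Delta(\Pi)$ is a valid candidate for the minimization defining $\comp(\cF,\fhat)$.

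Next, fix an arbitrary $f\in\cF$ and handle the continuous maximum $f(\pif)$ by passing to the cover: by definition of the cover there exists $\pi'\in\Pi'$ with $\met(\pif,\pi')\leq \veps$, so by $1$-Lipschitz continuity of $f$,
$$f(\pif)\leq f(\pi')+\veps\leq \max_{\pi''\in\Pi'}f(\pi'')+\veps.$$
Let $\pi_f^{\star}\ldef\argmax_{\pi''\in\Pi'}f(\pi'')$. Applying \pref{prop:igw_mab} to the finite-action instance on $\Pi'$ with true rewards $f|_{\Pi'}$ and reference $\fhat|_{\Pi'}$ gives
$$\En_{\pi\sim p}\brk*{f(\pi_f^{\star})-f(\pi)}\leq \frac{|\Pi'|}{\gamma}+\gamma\En_{\pi\sim p}\brk*{(f(\pi)-\fhat(\pi))^2}.$$
Combining the two displays yields
$$\En_{\pi\sim p}\brk*{f(\pif)-f(\pi)-\gamma(f(\pi)-\fhat(\pi))^2}\leq \veps+\frac{|\Pi'|}{\gamma}\leq \veps+\frac{\veps^{-d}}{\gamma},$$
uniformly in $f\in\cF$.

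Finally, optimize the tradeoff between the discretization bias $\veps$ and the exploration cost $\veps^{-d}/\gamma$ by setting $\veps\propto \gamma^{-1/(d+1)}$; this balances the two terms and certifies $\comp(\cF,\fhat)\lesssim \gamma^{-1/(d+1)}$, as claimed. The only delicate point is verifying that \pref{prop:igw_mab} applies cleanly to the restricted vectors $f|_{\Pi'}$ and $\fhat|_{\Pi'}$, which it does because that proposition holds for \emph{any} $\fstar,\fhat\in\bbR^{A}$ with no further structural requirements; the nonparametric structure (Lipschitz continuity plus the metric cover) enters only in transferring the supremum on $\Pi$ to the discrete maximum on $\Pi'$ at additive cost $\veps$. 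The slow rate $\gamma^{-1/(d+1)}$ (rather than $\gamma^{-1}$ as for parametric classes) is then a direct reflection of the polynomial growth of the covering number, and cannot be improved by a different choice of $p$ within this reduction.
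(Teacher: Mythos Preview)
Your proof is correct and follows essentially the same route as the paper's: discretize via the $\veps$-cover, absorb the $\veps$ approximation error from Lipschitzness, apply \pref{prop:igw_mab} on the finite set $\Pi'$, and balance $\veps$ against $\veps^{-d}/\gamma$. The only cosmetic difference is that the paper passes to the covering element $\iota(\pi_f)\in\Pi'$ nearest to $\pi_f$, whereas you pass to the argmax $\pi_f^{\star}$ of $f$ over $\Pi'$; both choices lie in $\Pi'$ and yield the same bound.
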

        Ignoring dependence on $\EstSq(\cF,T,\delta)$, this result
        leads to regret bounds that scale as $T^{\frac{d+1}{d+2}}$
        (after tuning $\gamma$ in \cref{prop:dec_bandit}),
        which cannot be improved.

  \begin{proof}[\pfref{prop:dec_lipschitz}]
      Let $f\in\cF$ be fixed. Let $\Act'$ be the $\veps$-cover for $\Pi$. Since $f$
  is $1$-Lipschitz, for all $\pi\in\Pi$ there exists a corresponding
  covering element $\cov(\act)\in\Act'$ such that
  $\rho(\pi,\cov(\act))\leq\veps$, and consequently for any distribution $p$,
  \begin{align*}
    \En_{\act\sim{}p}\brk*{f(\pif) -f(\act)}
    &\leq \En_{\act\sim{}p}\brk*{f(\cov(\pif)) -f(\act)} + \abs{f(\pif) -
    f(\cov(\pif))}\\
    &\leq{} \En_{\act\sim{}p}\brk*{f(\cov(\pif)) -f(\act)} +
    \rho(\pif,\cov(\pif))\\
    &\leq{} \En_{\act\sim{}p}\brk*{f(\cov(\pif)) -f(\act)} + \veps.
  \end{align*}
  At this point, since $\cov(\pif)\in\Act'$, \pref{prop:igw_mab} ensures
  that if we choose $p$ using inverse gap weighting over $\Act'$, we have
  \[
    \En_{\act\sim{}p}\brk*{f(\cov(\pif))-f(\act)}
    \leq{} \frac{\abs{\Act'}}{\gamma} + \gamma\cdot\En_{\act\sim{}p}\brk*{(f(\act)-\fhat(\act))^2}.
  \]
  From our assumption on the growth of $\Mcov(\Act,\veps)$,
  $\abs{\Act'}\leq\veps^{-d}$, so the value is at most
  \[
    \veps +  \frac{\veps^{-d}}{\gamma}.
  \]
  We choose $\veps\propto\gamma^{-\frac{1}{d+1}}$ to balance the terms,
  leading to the result.
    
  \end{proof}

    \subsubsection{Further Examples}
    We state the following additional upper bounds
    on the \CompShort without proof; details can be found in \cite{foster2021statistical}.

    \begin{example}[\CompText subsumes Eluder Dimension]
        Consider any class $\cF$ with values in $\brk{0,1}$. For all 
  $\gamma\geq{}e$, we have
  \begin{equation}
    \label{eq:comp_eluder}
    \comp(\cF) \approxleq\inf_{\veps>0}\crl*{\veps
      + \frac{\El(\cF-\cF,\veps)\log^{2}(\gamma)}{\gamma}} + \gamma^{-1}.
  \end{equation}
\end{example}
As a special case, this example implies that \etd enjoys a regret
bound for generalized linear bandits similar to that of UCB.

\begin{example}[Bandits with Concave Rewards]
  The concave (or convex, if one considers losses rather than rewards) bandit problem
\citep{kleinberg2004nearly,flaxman2005online,agarwal2013stochastic,bubeck2016multi,bubeck2017kernel,lattimore2020improved}
is a generalization of the linear bandit. We take
$\Act\subseteq\sB_2^d(1)$ and define
\[
  \cF=\crl*{f:\Act\to\brk{0,1}\mid{}\text{$f$ is concave and
      $1$-Lipschitz w.r.t $\ls_2$}}.
\]
For this setting, whenever $\cF\subseteq(\Pi\to\brk{0,1})$, results of
\citet{lattimore2020improved} imply that
\begin{equation}
  \label{eq:dec_convex}
    \comp(\cF) \approxleq
      \frac{d^{4}}{\gamma}\cdot\polylog(d,\gamma)
  \end{equation}
  for all $\gamma>0$.
\end{example}
For the function class
\[
\cF=\crl*{f(\pi)=-\relu(\tri{\phi(\pi),\theta})\mid{}\theta\in\Theta\subset\sB_2^{d}(1)},
\]
\eqref{eq:dec_convex} leads to a $\sqrt{\poly(d)T}$ regret bound for \etd. This highlights a
case where the Eluder dimension is overly pessimistic, since we saw
that it grows exponentially for this class.

\subsection{Relationship to Optimism and Posterior Sampling}
\label{sec:dec_posterior}
We close this section by highlighting some connections between the
\CompText and \etd and other techniques we have covered so far:
Optimism (UCB) and Posterior Sampling. Additional connections to
optimism can be found in \cref{sec:optimistic}.

\subsubsection{Connection to Optimism}
\newcommand{\red}[1]{#1}
\newcommand{\green}[1]{#1}
\newcommand{\pit}{\pi\ind{t}}
\newcommand{\fhatt}{\fhat}
\newcommand{\pistarr}{\pistar}

The \etd meta-algorithm and the \CompText can be combined with
the idea of \emph{confidence sets} that we used in the UCB
algorithm. Consider the following variant of \etd.
    \begin{whiteblueframe}
  \begin{algorithmic}
    \State \textsf{Estimation-to-Decisions (\etd) with Confidence Sets}
    \State \textsf{Input:} Exploration parameter $\gamma>0$,
    confidence radius $\beta>0$.
    \For{$t=1,\ldots,T$}
    \State Obtain $\fhat\ind{t}$ from online regression oracle with
    $(\pi\ind{1},r\ind{1}),\ldots,(\pi\ind{t-1},r\ind{t-1})$.
    \State Set
    \[
      \cF\ind{t} = \crl*{f\in\cF\mid{} \sum_{i<t}\En_{\pi\ind{i}\sim{}p\ind{i}}\brk*{(\fhat\ind{i}(\pi\ind{i})-\fstar(\pi\ind{i}))^2}\leq\beta}.
    \]
    \State Compute
    \[
      p\ind{t}=\argmin_{p\in\Delta(\Pi)}\max_{f\in\cF\ind{t}}\En_{\act\sim{}p}\biggl[f(\pif)-f(\pi)
    -\gamma\cdot(f(\pi)-\fhat\ind{t}(\pi))^2
    \biggr].
      \]
\State Select action $\pi\ind{t}\sim p\ind{t}$.
\EndFor{}
\end{algorithmic}
\end{whiteblueframe}
This strategy is the same as the basic \etd algorithm, except that at
each step, we compute a confidence set $\cF\ind{t}$ and modify
the minimax problem so that the max player is restricted to choose
$f\in\cF\ind{t}$.\footnote{Note that compared to the confidence
sets used in UCB, a slight difference
is that we compute
$\cF\ind{t}$ using the estimates $\fhat\ind{1},\ldots,\fhat\ind{T}$
produced by the online regression oracle (this is sometimes referred
to as ``online-to-confidence set conversion'') as opposed to using ERM; this
difference is unimportant, and the later would work as well.} With this change, the distribution $p\ind{t}$ can be
interpreted as the minimizer for
$\comp(\cF\ind{t},\fhat\ind{t})$. 

To analyze this algorithm, we show that as long as
$\fstar\in\cF\ind{t}$ for all $t$, the same per-step analysis as in
\pref{prop:dec_bandit} goes through, with $\cF$ replaced by
$\cF\ind{t}$. This allows us to prove the following result. 
\begin{prop}
  \label{prop:dec_structured_conf}
  For any $\delta\in(0,1)$ and $\gamma>0$, if we set $\beta=\EstSq(\cF,T,\delta)$,
  then \etd with confidence sets ensures that with probability at least
  $1-\delta$,
  \begin{equation}
    \label{eq:dec_structured_conf}
    \Reg
    \leq \sum_{t=1}^{T}\comp(\cF\ind{t}) + \gamma\cdot{}\EstSq(\cF,T,\delta).
  \end{equation}

\end{prop}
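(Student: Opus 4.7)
The plan is to closely mirror the proof of Proposition 4.3, with the extra bookkeeping that we must first establish validity of the confidence sets $\cF\ind{t}$ so that the per-step min-max bound takes supremum over a smaller set $\cF\ind{t}$ rather than all of $\cF$.

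First, I would invoke the online regression oracle guarantee from Definition 4.3: with probability at least $1-\delta$,
\[
\sum_{t=1}^{T}\En_{\pi\ind{t}\sim p\ind{t}}\brk*{(\fhat\ind{t}(\pi\ind{t})-\fstar(\pi\ind{t}))^2} \leq \EstSq(\cF,T,\delta) = \beta.
\]
Since the partial sums are monotone in $t$, the same inequality (with $T$ replaced by $t-1$) holds for every $t\leq T$, and in particular this implies $\fstar\in\cF\ind{t}$ for all $t$. Condition on this event for the remainder of the argument.

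Next, I would replay the regret decomposition from the proof of Proposition 4.3, adding and subtracting the estimation-error offset:
\[
\Reg = \sum_{t=1}^{T}\En_{\pi\ind{t}\sim p\ind{t}}\brk*{\fstar(\pistar)-\fstar(\pi\ind{t})-\gamma(\fstar(\pi\ind{t})-\fhat\ind{t}(\pi\ind{t}))^2} + \gamma\sum_{t=1}^{T}\En_{\pi\ind{t}\sim p\ind{t}}\brk*{(\fstar(\pi\ind{t})-\fhat\ind{t}(\pi\ind{t}))^2}.
\]
The second sum is at most $\gamma\cdot\EstSq(\cF,T,\delta)$ by the oracle bound. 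For each term in the first sum, since $\fstar\in\cF\ind{t}$, I would upper bound it by taking the supremum over $f\in\cF\ind{t}$, which by the choice of $p\ind{t}$ as the minimizer of the restricted min-max problem equals $\comp(\cF\ind{t},\fhat\ind{t})$. Finally, because $\fhat\ind{t}\in\conv(\cF)$ (this is the standard property of the averaged exponential weights oracle, and in any case follows from Proposition 4.4's observation that the supremum in the definition of $\comp(\cF)$ is always attained in $\conv(\cF)$), we have $\comp(\cF\ind{t},\fhat\ind{t})\leq\comp(\cF\ind{t})$. Summing over $t$ yields the claimed bound.

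The main thing to be careful about is the first step: writing the confidence set in a form where its validity is an immediate consequence of the oracle guarantee, so that the event $\crl{\fstar\in\cF\ind{t}\text{ for all }t}$ holds with probability at least $1-\delta$ without additional union bounds or concentration arguments. Everything else is a per-step repetition of the argument already given for Proposition 4.3, with $\cF$ replaced by the smaller set $\cF\ind{t}$; no new technical ideas are required, which is precisely the benefit of shrinking the max-player's feasible set via confidence sets.
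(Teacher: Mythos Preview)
Your proposal is correct and follows exactly the approach the paper sketches (the paper leaves the full proof as an exercise but explicitly says ``as long as $\fstar\in\cF\ind{t}$ for all $t$, the same per-step analysis as in Proposition~\ref{prop:dec_bandit} goes through, with $\cF$ replaced by $\cF\ind{t}$''). One small cleanup: the step $\comp(\cF\ind{t},\fhat\ind{t})\leq\comp(\cF\ind{t})$ does not require $\fhat\ind{t}\in\conv(\cF\ind{t})$---it follows directly from Proposition~\ref{prop:dec_unconstrained}, which says the supremum over unconstrained $\fhat$ already equals the supremum over $\conv(\cF\ind{t})$; your parenthetical correctly invokes this, so the mention of $\fhat\ind{t}\in\conv(\cF)$ is unnecessary.
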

This bound is never worse than the one in \pref{prop:dec_bandit}, but
it can be smaller if the confidence sets
$\cF\ind{1},\ldots,\cF\ind{T}$ shrink quickly. For a proof, see \pref{ex:dec_structured_conf}.

\begin{rem}
  In fact, the regret bound in \eqref{eq:dec_structured_conf} can be
  shown to hold for \emph{any} sequence of confidence sets
  $\cF\ind{1},\ldots,\cF\ind{T}$, as long as
  $\fstar\in\cF\ind{t}\;\;\forall{}t$ with probability at least
  $1-\delta$; the specific construction we use within the \etd variant
  above is chosen only for concreteness.
\end{rem}

\paragraph{Relation to confidence width and UCB}
It turns out that the usual UCB algorithm, which selects
$\pi\ind{t}=\argmax_{\pi\in\Pi}\fbar\ind{t}(\pi)$ for
$\fbar\ind{t}(\pi)=\max_{f\in\cF\ind{t}}f\ind{t}(\pi)$, certifies a
bound on $\comp(\cF\ind{t})$ which is never worse than usual
confidence width we use in the UCB analysis. 
\begin{prop}
  \label{prop:dec_width0}
  The UCB strategy $\pi\ind{t}=\argmax_{\pi\in\Pi}\fbar\ind{t}(\pi)$ certifies that
  \begin{equation}
    \label{eq:dec_width0}
    \comp[0](\cF\ind{t}) \leq \fucb\ind{t}(\pi\ind{t}) - \flcb(\pi\ind{t}).
  \end{equation}
\end{prop}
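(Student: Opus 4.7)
The plan is to unfold the definition of the DEC at $\gamma=0$ and then apply two elementary bounds coming from the validity of the upper and lower confidence functions on $\cF\ind{t}$. Concretely, taking the point mass $p=\ones_{\pi\ind{t}}$ as the choice for the min-player in the definition of $\comp[0](\cF\ind{t})$, it suffices to establish that
\[
\max_{f\in\cF\ind{t}}\brk*{f(\pi_f)-f(\pi\ind{t})}\leq \fucb\ind{t}(\pi\ind{t})-\flcb\ind{t}(\pi\ind{t}),
\]
since the min of any objective is upper bounded by its value at any particular feasible point.

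First, I would fix an arbitrary $f\in\cF\ind{t}$ and bound the two summands of $f(\pi_f)-f(\pi\ind{t})$ separately. For the positive term, since $\fucb\ind{t}(\pi)=\max_{g\in\cF\ind{t}}g(\pi)\geq f(\pi)$ for every $\pi$, we have $f(\pi_f)\leq \fucb\ind{t}(\pi_f)$. Next, because the UCB rule selects $\pi\ind{t}=\argmax_{\pi\in\Pi}\fucb\ind{t}(\pi)$, it follows that $\fucb\ind{t}(\pi_f)\leq \fucb\ind{t}(\pi\ind{t})$; this is exactly the optimism step familiar from \pref{lem:regret_optimistic}. For the negative term, the analogous lower envelope inequality $\flcb\ind{t}(\pi\ind{t})\leq f(\pi\ind{t})$, i.e.\ $-f(\pi\ind{t})\leq -\flcb\ind{t}(\pi\ind{t})$, follows from the definition of $\flcb\ind{t}$ as a minimum over $\cF\ind{t}$.

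Combining the two inequalities and taking the maximum over $f\in\cF\ind{t}$ yields the claimed bound, and plugging the point-mass strategy $\ones_{\pi\ind{t}}$ into the definition of $\comp[0](\cF\ind{t})$ completes the proof. There is no real obstacle here; the statement is essentially a restatement of \pref{lem:regret_optimistic} in the language of the \CompShort, made possible by the fact that at $\gamma=0$ the information-gain penalty drops out and the minimax value reduces to a worst-case instantaneous regret that is controlled directly by the confidence width at the optimistic action.
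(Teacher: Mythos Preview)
Your proposal is correct and follows essentially the same approach as the paper: plug in the point mass at $\pi\ind{t}$, use the upper-envelope inequality together with the UCB choice of $\pi\ind{t}$ to bound $f(\pi_f)\leq\fucb\ind{t}(\pi\ind{t})$, and use the lower-envelope inequality to bound $-f(\pi\ind{t})\leq-\flcb\ind{t}(\pi\ind{t})$. The paper's write-up is nearly identical, additionally noting explicitly that the bound holds for any reference $\fhat$ (which is automatic at $\gamma=0$, as you observe).
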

\begin{proof}[\pfref{prop:dec_width0}]
  By choosing $\pi\ind{t}=\argmax_{\pi\in\Pi}\fbar\ind{t}(\pi)$, we
  have that for any $\fhat$,
  \begin{align*}
    \comp[0](\cF\ind{t},\fhat)
    &=
    \inf_{p\in\Delta(\Pi)}\sup_{f\in\cF\ind{t}}\En_{\pi\sim{}p}\brk*{\max_{\pistar}f(\pistar)
      - f(\pi)} \\
        &\leq
          \sup_{f\in\cF\ind{t}}\brk*{\max_{\pistar}f(\pistar)
          - f(\pi\ind{t})} \\
            &\leq
          \sup_{f\in\cF\ind{t}}\brk*{\max_{\pistar}\fbar\ind{t}(\pistar)
              - f(\pi\ind{t})}\\
                &=
          \sup_{f\in\cF\ind{t}}\brk*{\fbar\ind{t}(\pi\ind{t})
                  - f(\pi\ind{t})}
                  = \fucb\ind{t}(\pi\ind{t}) - \flcb\ind{t}(\pi\ind{t}).
  \end{align*}
  
\end{proof}
As we saw in the analysis of UCB for multi-armed bandits with
$\Pi=\crl{1,\ldots,A}$ (\pref{sec:ucb_bandits}), the
confidence width in \pref{eq:dec_width0} might be large for a given
round $t$, but by the pigeonhole argument (\pref{lem:confidence_width_potential}), when we sum over all rounds
we have
\[
  \sum_{t=1}^{T}    \comp[0](\cF\ind{t}) \leq
  \sum_{t=1}^{T}\fucb\ind{t}(\pi\ind{t}) - \flcb\ind{t}(\pi\ind{t})
  \leq\bigoht(\sqrt{AT}).
\]
Hence, even though UCB is not the optimal strategy to minimize the DEC,
it can still lead to upper bounds on regret when the confidence width
shrinks sufficiently quickly. Of course, as examples like the cheating
code show, we should not expect this to happen in general.

Interestingly, the bound on the \CompShort in \cref{prop:dec_width0} holds 
for $\gamma=0$, which only leads to meaningful bounds on regret because
$\cF\ind{1},\ldots,\cF\ind{T}$ are shrinking. Indeed,
\pref{prop:igw_exact} shows that with $\cF=\bbR^{A}$, we have
\[
\comp(\cF) \approxgeq \frac{A}{\gamma},
\]
so the unrestricted class $\cF$ has $\comp(\cF)\to\infty$ as
$\gamma\to{}0$. By allowing for $\gamma>0$, we can prove the following
 slightly
stronger result, which replaces $\flcb\ind{t}$ by $\fhat\ind{t}$.
        \begin{prop}
          \label{prop:ucb_conf_width}
          For any $\gamma>0$, the UCB strategy
          $\pi\ind{t}=\argmax_{\pi\in\Pi}\fbar\ind{t}(\pi)$ certifies
          that
          \[
            \comp(\cF\ind{t},\fhat\ind{t})
            \leq{} \fucb\ind{t}(\pi\ind{t}) - \fhat\ind{t}(\pi\ind{t})
            + \frac{1}{4\gamma}.
          \]
        \end{prop}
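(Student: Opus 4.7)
The plan is to upper bound the infimum in the definition of $\comp(\cF\ind{t},\fhat\ind{t})$ by plugging in the point mass $p = \delta_{\pi\ind{t}}$ corresponding to the UCB choice, and then explicitly maximize the resulting expression over $f\in\cF\ind{t}$. Concretely, starting from
\[
\comp(\cF\ind{t},\fhat\ind{t}) = \inf_{p\in\Delta(\Pi)}\sup_{f\in\cF\ind{t}}\En_{\pi\sim p}\bigl[f(\pi_f)-f(\pi)-\gamma(f(\pi)-\fhat\ind{t}(\pi))^2\bigr],
\]
I would substitute $p=\delta_{\pi\ind{t}}$ to obtain the upper bound
\[
\sup_{f\in\cF\ind{t}}\bigl[f(\pi_f)-f(\pi\ind{t})-\gamma(f(\pi\ind{t})-\fhat\ind{t}(\pi\ind{t}))^2\bigr].
\]

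Next, I would handle the $f(\pi_f)$ term exactly as in \pref{prop:dec_width0}: for any $f\in\cF\ind{t}$ we have $f(\pi_f)\leq \fbar\ind{t}(\pi_f)\leq \max_{\pi}\fbar\ind{t}(\pi)=\fbar\ind{t}(\pi\ind{t})=\fucb\ind{t}(\pi\ind{t})$, where the last equality uses the UCB action choice. This yields the further upper bound
\[
\sup_{f\in\cF\ind{t}}\bigl[\fucb\ind{t}(\pi\ind{t})-f(\pi\ind{t})-\gamma(f(\pi\ind{t})-\fhat\ind{t}(\pi\ind{t}))^2\bigr].
\]

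The final step is a one-variable calculus exercise: writing $y=f(\pi\ind{t})$ and relaxing the supremum from $f\in\cF\ind{t}$ to $y\in\bbR$, I maximize $y\mapsto \fucb\ind{t}(\pi\ind{t})-y-\gamma(y-\fhat\ind{t}(\pi\ind{t}))^2$. First-order conditions give $y^{\star}=\fhat\ind{t}(\pi\ind{t})-\tfrac{1}{2\gamma}$, and substituting back produces the value $\fucb\ind{t}(\pi\ind{t})-\fhat\ind{t}(\pi\ind{t})+\tfrac{1}{4\gamma}$, exactly the claimed bound. There is no real obstacle here—the extra slack $\tfrac{1}{4\gamma}$ over \pref{prop:dec_width0} comes precisely from completing the square against the $-\gamma(\cdot)^2$ penalty, which is why this variant is meaningful even when the confidence set is not shrinking and $\flcb\ind{t}$ would be a weak lower envelope.
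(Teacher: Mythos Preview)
Your proof is correct and follows essentially the same approach as the paper: plug in the point mass at the UCB action, bound $f(\pi_f)\leq\fucb\ind{t}(\pi\ind{t})$ using optimism, and then optimize the resulting quadratic. The only cosmetic difference is that the paper first separates out the constant $\fucb\ind{t}(\pi\ind{t})-\fhat\ind{t}(\pi\ind{t})$ and then applies the elementary inequality $x-\gamma x^2\leq\tfrac{1}{4\gamma}$ to $x=\fhat\ind{t}(\pi\ind{t})-f(\pi\ind{t})$, whereas you optimize the full expression in one step; these are the same computation.
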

        \begin{proof}[\pfref{prop:ucb_conf_width}] This is a slight
          generalization of the proof of \cref{prop:dec_width0}. By choosing
          $\pi\ind{t}=\argmax_{\pi\in\Pi}\fbar\ind{t}(\pi)$, we have
	\begin{align*}
	    \comp(\red{\cF},\fhatt\ind{t}) &= \min_{p\in\Delta(\Pi)} \max_{f\in\red{\cF_t}}
		~\En_{\green{\pi}\sim{}p}\biggl[\hspace{1pt}\max_{\pistarr} f(\pistarr)-f(\green{\pi})
	      -\gamma\cdot\hspace{-2pt}(\fhatt\ind{t}(\green{\pi}) - f(\green{\pi}))^2 \hspace{-2pt}
	      \biggr] \\
		  	&\leq\max_{f\in\red{\cF_t}} 
                     \biggl[\hspace{1pt}\max_{\pistar}f(\pistar)-f(\pi\ind{t})
	      -\gamma\cdot\hspace{-2pt}(\fhatt\ind{t}(\pi\ind{t}) - f(\pi\ind{t}))^2
	      \biggr] \\
  		  	&\leq \max_{f\in\red{\cF_t}} 
  		\biggl[\hspace{1pt}\fucb\ind{t}(\green{\pi\ind{t}})-f(\green{\pi\ind{t}})
  	      -\gamma\cdot\hspace{-2pt}(\fhatt\ind{t}(\green{\pi\ind{t}}) - f(\green{\pi\ind{t}}))^2 
  	      \biggr] \\  
				&= \max_{f\in\red{\cF_t}} \underbrace{\biggl[\hspace{1pt}\fhatt\ind{t}(\green{\pi\ind{t}})-f(\green{\pi\ind{t}})
			  	   	      -\gamma\cdot\hspace{-2pt}(\fhatt\ind{t}(\green{\pi\ind{t}}) - f(\green{\pi\ind{t}}))^2
			  	   	      \biggr]}_{\leq \frac{1}{4\gamma}} + \fbar\ind{t}(\pi\ind{t})-\fhat\ind{t}(\pi\ind{t}).
	 \end{align*}          
        \end{proof}

        \subsubsection{Connection to Posterior Sampling}
        \label{sec:dec_posterior}

The \CompText \pref{eq:dec_structured} is a min-max optimization
problem, which we have mentioned can be interpreted as a game in which the learner (the ``min'' player) aims to find a
decision distribution $p$ that optimally trades off regret and information
acquisition in the face of an adversary (the ``max'' player) that
selects a worst-case model in $\cM$. We can define a natural
\emph{dual} (or, max-min) analogue of the \CompShort via
\begin{equation}
  \label{eq:comp_dual}
  \compb(\cF,\fhat) =
  \sup_{\mu\in\Delta(\cF)}\inf_{p\in\Delta(\Act)}\En_{f\sim\mu}\En_{\act\sim{}p}\brk*{f(\pif)-f(\pi)
    -\gamma\cdot(f(\pi)-\fhat(\pi))^2
    }.
  \end{equation}
The dual \CompText has the following Bayesian interpretation. The
adversary selects a \emph{prior} distribution $\mu$ over models in
$\cM$, and the learner (with knowledge of the prior) finds a
decision distribution $p$ that balances the average tradeoff between regret
and information acquisition when the underlying model is drawn from
$\mu$.

Using the minimax theorem (\pref{lem:sion}), one can show that the
\CompText and its Bayesian counterpart coincide.
\begin{prop}[Equivalence of primal and dual DEC]
  Under mild regularity conditions, we have
  \begin{equation}
    \label{eq:minimax_swap_dec}
    \comp(\cF,\fhat) = \compb(\cF,\fhat).
  \end{equation}
\end{prop}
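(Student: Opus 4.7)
The plan is to reduce the equality to a direct application of Sion's minimax theorem (\pref{lem:sion}). Define the bilinear functional
$$\Psi(p,\mu) \ldef \En_{f\sim\mu}\En_{\pi\sim p}\brk*{f(\pi_f)-f(\pi)-\gamma(f(\pi)-\fhat(\pi))^2}$$
for $p\in\Delta(\Pi)$ and $\mu\in\Delta(\cF)$. For fixed $p$, the map $\mu\mapsto\Psi(p,\mu)$ is linear (hence concave) in $\mu$, and for fixed $\mu$, the map $p\mapsto\Psi(p,\mu)$ is linear (hence convex) in $p$. Since for any bounded function $g:\cF\to\bbR$ we have $\sup_{f\in\cF}g(f)=\sup_{\mu\in\Delta(\cF)}\En_{f\sim\mu}\brk*{g(f)}$, the primal DEC can be rewritten as $\comp(\cF,\fhat)=\inf_{p}\sup_{\mu}\Psi(p,\mu)$, while the dual DEC is already in the form $\compb(\cF,\fhat)=\sup_{\mu}\inf_{p}\Psi(p,\mu)$. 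So the claim reduces to an equality of min-max and max-min values of a bilinear functional.

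The inequality $\compb(\cF,\fhat)\leq\comp(\cF,\fhat)$ is immediate from the generic weak-duality bound $\sup_\mu\inf_p\Psi(p,\mu)\leq\inf_p\sup_\mu\Psi(p,\mu)$, which holds with no assumptions on $\Psi$. For the reverse inequality, I would invoke Sion's minimax theorem, whose hypotheses are: (a) convexity in one argument, (b) concavity in the other, and (c) compactness of one of the two domains together with appropriate semicontinuity of $\Psi$ in that argument. Bilinearity of $\Psi$ takes care of (a) and (b) automatically; the ``mild regularity conditions'' of the statement amount to ensuring (c). Concretely, one can equip $\Delta(\Pi)$ with the weak-$\star$ topology: if $\Pi$ is a compact Polish space then $\Delta(\Pi)$ is weak-$\star$ compact by Prokhorov's theorem, and $p\mapsto\Psi(p,\mu)$ is weak-$\star$ continuous as soon as the integrand $f(\pi_f)-f(\pi)-\gamma(f(\pi)-\fhat(\pi))^2$ is a bounded continuous function of $\pi$ for each $f$. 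With these conditions in place, Sion's theorem yields $\inf_p\sup_\mu\Psi(p,\mu)=\sup_\mu\inf_p\Psi(p,\mu)$, which is exactly the claimed equality.

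The main obstacle is not conceptual but technical: specifying and verifying the topological regularity needed for (c) in generality. In the finite-class case, where $\Pi$ and $\cF$ are both finite, $\Delta(\Pi)$ and $\Delta(\cF)$ are compact simplices in Euclidean space and $\Psi$ is jointly continuous, so Sion's theorem applies without any additional work and the proof reduces to one line. For infinite classes the required conditions --- compactness of $\Pi$, boundedness and continuity of $f$ and $\fhat$ on $\Pi$, and measurability of $\pi_f$ as a function of $f$ --- are exactly the ``mild regularity conditions'' tacitly assumed in the statement, and are standard in all examples considered in the course.
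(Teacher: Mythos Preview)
Your proposal is correct and matches the paper's intended approach: the paper does not give a detailed proof, stating only that the result follows ``using the minimax theorem (\pref{lem:sion}),'' and your argument---linearizing the max over $\cF$ to a sup over $\Delta(\cF)$, then applying Sion to the resulting bilinear objective---is exactly the standard way to carry this out. Your identification of the ``mild regularity conditions'' with the compactness and continuity hypotheses needed for Sion's theorem is also on point.
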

Thus, any bound on the dual \CompShort immediately yields
a bound on the primal \CompShort. This perspective is useful because
it allows us to bring existing tools for Bayesian bandits and
reinforcement learning to bear on the primal \CompText. As an example,
we can adapt the posterior sampling/probability matching strategy
introduced in \pref{sec:mab}. When applied to the Bayesian
DEC---this approach selects $p$ to be the action distribution induced by sampling $f\sim\mu$ and selecting
$\pif$. Using \pref{lem:mab_decoupling_basic}, one can show that this
strategy certifies that
\[
\compb(\cF) \approxleq{} \frac{\abs{\Act}}{\gamma}
\]
for the multi-armed bandit. In fact, existing analysis techniques for the Bayesian setting can be viewed as
implicitly providing bounds on the dual \CompText
\citep{russo2014learning,bubeck2015bandit,bubeck2016multi,russo2018learning,lattimore2019information,lattimore2020improved}. Notably,
the dual \CompShort is always bounded by a Bayesian
complexity measure known as the \emph{information ratio}, which is
used throughout the literature on Bayesian bandits and reinforcement
learning \citep{foster2021statistical}.

Beyond the primal and dual
\CompText, there are deeper connections between the \CompShort and
Bayesian algorithms, including a Bayesian counterpart to the \etd
algorithm itself \citep{foster2021statistical}.

\subsection{Incorporating Contexts\bonus}
\label{sec:structured_contexts}
The \CompText and \etd algorithm trivially extend to handle
\emph{contextual} structured bandits. This approach generalizes the \squarecb method introduced in
\pref{sec:cb} from finite action spaces to general action spaces. Consider the following protocol.
\begin{whiteblueframe}
  \begin{algorithmic}
 \State \textsf{Contextual Structured Bandit Protocol}
\For{$t=1,\ldots,T$}
\State Observe context $x\sups{t}\in\cX$.
\State Select decision $\pi\sups{t}\in\Pi$. \hfill\algcomment{$\Pi$ is
  large and potentially continuous.}
\State Observe reward $r\sups{t}\in\bbR$.
\EndFor{}
\end{algorithmic}
\end{whiteblueframe}
This is the same as the contextual bandit protocol in \pref{sec:cb},
except that we allow $\Pi$ to be large and potentially continuous. As
in that section, we allow the contexts $x\ind{1},\ldots,x\ind{T}$ to
be generated in an arbitrary, potentially adversarial fashion, but
assume that
\[
r\ind{t}\sim\Mstar(\cdot\mid{}x\ind{t},\pi\ind{t}),
\]
and define $\fstar(x,\pi)=\En_{r\sim\Mstar(\cdot\mid{}x,\pi)}$. We
assume access to a function class $\cF$ such that $\fstar\in\cF$, and
assume access to an estimation oracle for $\cF$ that ensures that with
probability at least $1-\delta$,
	$$\sum_{t=1}^T \En_{\pi\ind{t}\sim
          p\ind{t}}(\fhat\ind{t}(x\ind{t},\pi\ind{t})-\fstar(x\ind{t},\pi\ind{t}))^2
        \leq \EstSq(\cF, T, \delta).$$
        For $f\in\cF$, we define $\pif(x)=\argmax_{\pi\in\Pi}f(x,\pi)$.

        To extend the \etd algorithm to this setting, at each time $t$
        we solve
        the minimax problem corresponding to the \CompShort, but
        condition on the context $x\ind{t}$.
    \begin{whiteblueframe}
  \begin{algorithmic}
    \State \textsf{Estimation-to-Decisions (\etd) for Contextual Structured Bandits}
    \State \textsf{Input:} Exploration parameter $\gamma>0$.
    \For{$t=1,\ldots,T$}
    \State Observe $x\ind{t}\in\cX$.
    \State Obtain $\fhat\ind{t}$ from online regression oracle with $(x\ind{1},\pi\ind{1},r\ind{1}),\ldots,(x\ind{t-1},\pi\ind{t-1},r\ind{t-1})$. 
    \State Compute
    \[
      p\ind{t}=\argmin_{p\in\Delta(\Pi)}\max_{f\in\cF}\En_{\act\sim{}p}\biggl[f(x\ind{t},\pif(x\ind{t}))-f(x\ind{t},\pi)
    -\gamma\cdot(f(x\ind{t},\pi)-\fhat\ind{t}(x\ind{t},\pi))^2
    \biggr].
      \]
\State Select action $\pi\ind{t}\sim p\ind{t}$.
\EndFor{}
\end{algorithmic}
\end{whiteblueframe}
For $x\in\cX$, define
\[
\cF(x,\cdot)=\crl*{f(x,\cdot)\mid{}f\in\cF}
\]
as the projection of $\cF$ onto $x\in\cX$. The following result shows
that whenever the \CompShort is bounded conditionally---that is,
whenever it is bounded for $\cF(x,\cdot)$ for all $x$---this strategy
has low regret.
    \begin{prop}
      \label{prop:dec_contextual_structured}
The \etd algorithm with exploration parameter $\gamma>0$ guarantees that
\begin{equation}
\Reg \leq{}
\sup_{x\in\cX}\comp(\cF(x,\cdot))\cdot{}T + \gamma\cdot\EstSq(\cF,T,\delta),
\end{equation}
\end{prop}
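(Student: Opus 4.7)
The plan is to mirror the proof of \pref{prop:dec_bandit}, with the only modification being that the minimax argument is applied \emph{conditionally on the context} $x\ind{t}$ at each round. First, I would write the regret as
\[
\Reg = \sum_{t=1}^{T}\En_{\pi\ind{t}\sim p\ind{t}}\brk*{\fstar(x\ind{t},\pistar(x\ind{t})) - \fstar(x\ind{t},\pi\ind{t})},
\]
and, as in the proof of \pref{prop:dec_bandit}, add and subtract the scaled squared estimation error term $\gamma\cdot\En_{\pi\ind{t}\sim p\ind{t}}\brk*{(\fstar(x\ind{t},\pi\ind{t})-\fhat\ind{t}(x\ind{t},\pi\ind{t}))^2}$. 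Summing the subtracted pieces gives a total of at most $\gamma\cdot\EstSq(\cF,T,\delta)$ with probability at least $1-\delta$ by the online regression oracle guarantee, so it remains to control the per-step residual.

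Next, I would fix a round $t$ and condition on $x\ind{t}$. Since $\fstar\in\cF$, the restriction $\fstar(x\ind{t},\cdot)$ lies in the projected class $\cF(x\ind{t},\cdot)$, and therefore
\begin{align*}
&\En_{\pi\ind{t}\sim p\ind{t}}\brk*{\fstar(x\ind{t},\pistar(x\ind{t}))-\fstar(x\ind{t},\pi\ind{t})-\gamma(\fstar(x\ind{t},\pi\ind{t})-\fhat\ind{t}(x\ind{t},\pi\ind{t}))^2}\\
&\quad\leq \sup_{f\in\cF}\En_{\pi\sim p\ind{t}}\brk*{f(x\ind{t},\pi_f(x\ind{t}))-f(x\ind{t},\pi)-\gamma(f(x\ind{t},\pi)-\fhat\ind{t}(x\ind{t},\pi))^2}.
\end{align*}
The key point is that $p\ind{t}$ was chosen precisely as the minimizer of this inner min-max problem (with $x\ind{t}$ plugged in), so by the definition of $\comp(\cF(x\ind{t},\cdot),\fhat\ind{t}(x\ind{t},\cdot))$ this expression is exactly the value of the conditional DEC, and thus at most $\sup_{x\in\cX}\comp(\cF(x,\cdot))$.

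Summing this bound over $t=1,\ldots,T$ and combining with the oracle bound on the estimation-error terms yields the claimed inequality. The proof is essentially routine given \pref{prop:dec_bandit}, so I do not expect any serious obstacle; the only subtlety worth checking is that the suprema over $\fhat$ inside the definition of $\comp(\cF(x,\cdot))$ are taken over $\conv(\cF(x,\cdot))$, which is sufficient because the oracle's prediction $\fhat\ind{t}(x\ind{t},\cdot)$ lies in this set (as in the non-contextual case), so the sup$_{x}$ bound applies.
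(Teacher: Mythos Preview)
Your proposal is correct and matches the paper's approach exactly: the paper in fact omits the proof, stating only that it is ``nearly identical to that of \pref{prop:dec_bandit}'' with the basic idea being to condition on the context $x\ind{t}$ at each round and apply the DEC argument to the projected class $\cF(x\ind{t},\cdot)$. Your write-up fills in precisely these details, including the subtlety about $\fhat\ind{t}(x\ind{t},\cdot)\in\conv(\cF(x\ind{t},\cdot))$, so there is nothing to add.
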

We omit the proof of this result, which is nearly identical to
that of \pref{prop:dec_bandit}. The basic idea is that for each round,
once we condition on the context $x\ind{t}$, the \CompShort allows us
to link regret to estimation error in the same fashion
non-contextual setting.

We showed in \pref{prop:igw_exact} that the \igw{} distribution exactly
solves the \CompShort minimax problem when $\cF=\bbR^{A}$. Hence, the
\squarecb algorithm in \pref{sec:cb} is precisely the special case of
Contextual \etd in which $\cF=\bbR^{A}$.

Going beyond the finite-action setting, it is simplest to interpret
\pref{prop:dec_contextual_structured} when $\cF(x,\cdot)$ has the same
structure for all contexts. One example is \emph{contextual bandits with
  linearly structured action spaces}. Here, we take
\[
  \cF = \crl*{f(x,a)=\tri*{\phi(x,a),g(x)}\mid{}g\in\cG},
\]
where $\phi(x,a)\in\bbR^{d}$ is a fixed feature map and
$\cG\subset(\cX\to{} \sB_2^d(1))$ is an arbitrary function class. This
setting generalizes the linear contextual bandit problem from
\pref{sec:cb}, which corresponds to the case where $\cG$ is a set of
constant functions. We can apply \pref{prop:dec_linear} to conclude that
$\sup_{x\in\cX}\comp(\cF(x,\cdot))\approxleq{}\frac{d}{\gamma}$, so that
\pref{prop:dec_contextual_structured} gives $\Reg\approxleq\sqrt{dT\cdot\EstSq(\cF,T,\delta)}$.

\subsection{Additional Properties of the \CompText{}\bonus}
The following proposition indicates that the value of the Decision-Estimation Coefficient $\comp(\cF,\fhat)$ cannot be increased by taking references models $\fhat$ outside the convex hull of $\cF$:
\begin{prop}
  \label{prop:dec_unconstrained}
      For any $\gamma>0$,
      \begin{align*}
      \sup_{\fhat:\Pi\to\bbR}\comp(\cF,\fhat)
        = \sup_{\fhat\in\conv(\cF)}\comp[\gamma](\cF,\fhat).
      \end{align*}
    \end{prop}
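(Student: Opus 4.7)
The plan is to prove the two inequalities separately. The direction $\sup_{\fhat\in\conv(\cF)}\comp(\cF,\fhat)\leq\sup_{\fhat:\Pi\to\bbR}\comp(\cF,\fhat)$ is immediate, since $\conv(\cF)$ is a subset of all functions $\Pi\to\bbR$. The content is in the reverse inequality, i.e., showing that for any $\fhat:\Pi\to\bbR$, there exists $\fhat'\in\conv(\cF)$ with $\comp(\cF,\fhat')\geq\comp(\cF,\fhat)$. The intuition is that any reference model $\fhat$ outside $\conv(\cF)$ has a ``bias component'' that only helps the min player via a larger penalty $\gamma(f-\fhat)^2$, and we can project this bias away by moving $\fhat$ to $\conv(\cF)$.

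To execute this, I would first invoke the minimax equivalence \pref{eq:minimax_swap_dec} from \cref{sec:dec_posterior} to rewrite
\[
\comp(\cF,\fhat)
= \sup_{\mu\in\Delta(\cF)}\inf_{p\in\Delta(\Pi)}\En_{f\sim\mu}\En_{\pi\sim p}\brk*{f(\pi_f)-f(\pi)-\gamma(f(\pi)-\fhat(\pi))^2}.
\]
Then, for each distribution $\mu\in\Delta(\cF)$, I would define the \emph{posterior mean model} $\fhat_\mu(\pi)\ldef\En_{f\sim\mu}[f(\pi)]$, which lies in $\conv(\cF)$ by construction. The key calculation is the bias-variance decomposition: for every $\pi\in\Pi$,
\[
\En_{f\sim\mu}\brk*{(f(\pi)-\fhat(\pi))^2}
= \En_{f\sim\mu}\brk*{(f(\pi)-\fhat_\mu(\pi))^2} + (\fhat_\mu(\pi)-\fhat(\pi))^2
\geq \En_{f\sim\mu}\brk*{(f(\pi)-\fhat_\mu(\pi))^2}.
\]
Multiplying by $-\gamma$, integrating over $\pi\sim p$, and adding the $\mu$-averaged regret terms (which do not depend on $\fhat$), this shows that for every $p\in\Delta(\Pi)$ and every $\mu\in\Delta(\cF)$,
\[
\En_{f\sim\mu}\En_{\pi\sim p}\brk*{f(\pi_f)-f(\pi)-\gamma(f(\pi)-\fhat(\pi))^2}
\leq \En_{f\sim\mu}\En_{\pi\sim p}\brk*{f(\pi_f)-f(\pi)-\gamma(f(\pi)-\fhat_\mu(\pi))^2}.
\]

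Taking the infimum over $p\in\Delta(\Pi)$ on both sides, and then bounding the right-hand side by $\sup_{\mu'\in\Delta(\cF)}\inf_p\En_{f\sim\mu'}\En_{\pi\sim p}[\cdots\text{ with }\fhat_\mu]$, another application of \pref{eq:minimax_swap_dec} (now with reference model $\fhat_\mu\in\conv(\cF)$) identifies this quantity with $\comp(\cF,\fhat_\mu)$. Taking the supremum over $\mu$ on the left-hand side then yields
\[
\comp(\cF,\fhat) \leq \sup_{\mu\in\Delta(\cF)}\comp(\cF,\fhat_\mu) \leq \sup_{\fhat'\in\conv(\cF)}\comp(\cF,\fhat'),
\]
which is the desired inequality. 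Taking a final supremum over $\fhat$ on the left concludes.

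The main (mild) obstacle is the appeal to the minimax swap in \pref{eq:minimax_swap_dec}, which the paper already states holds under regularity conditions; once minimax is in hand, the proof reduces to the one-line bias-variance observation. Everything else is bookkeeping.
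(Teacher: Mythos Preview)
Your proposal is correct and follows essentially the same approach the paper outlines in \pref{exe:dec_unconstrained}: the trivial direction, then the minimax swap \pref{eq:minimax_swap_dec} to pass to the dual form, followed by the bias--variance observation that for each prior $\mu$ the optimal reference model is the posterior mean $\fhat_\mu\in\conv(\cF)$. The paper phrases the key step as the infimum identity $\inf_{\fhat\in\conv(\cF)}\En_{f\sim\mu}\En_{\pi\sim p}(f(\pi)-\fhat(\pi))^2 \leq \inf_{\fhat:\Pi\to\bbR}\En_{f\sim\mu}\En_{\pi\sim p}(f(\pi)-\fhat(\pi))^2$, which is exactly your bias--variance decomposition in disguise.
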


    \subsection{Exercises}    
    \begin{exe}[Posterior Sampling for Multi-Armed Bandits]
      Prove that for the standard multi-armed bandit, 
      \[
      \compb(\cF) \approxleq{} \frac{\abs{\Act}}{\gamma},
      \]
      by using the Posterior Sampling strategy (select $p$ to be the action distribution induced by sampling $f\sim\mu$ and selecting
      $\pif$), and applying the decoupling lemma
      (\pref{lem:mab_decoupling_basic}). Recall that here, $\compb(\cF)$ is the ``maxmin'' version of the DEC \eqref{eq:comp_dual}.  
    \end{exe}

    \begin{exe}
      \label{ex:dec_structured_conf}
    Prove \pref{prop:dec_structured_conf}.
    \end{exe}
	
	\begin{exe}
    \label{exe:dec_unconstrained}
		In this exercise, we will prove \pref{prop:dec_unconstrained} as follows. First, show that the left-hand side is an upper bound on the right-hand side. For the other direction:
    \begin{enumerate}[wide, labelwidth=!, labelindent=0pt]
      \item Prove that
        \begin{align}
          \inf_{\fhat\in\conv(\cF)} \En_{f\sim \mu}\En_{\pi\sim p} (f(\pi)-\fhat(\pi))^2 \leq \inf_{\fhat:\Pi\to\bbR} \En_{f\sim \mu}\En_{\pi\sim p} (f(\pi)-\fhat(\pi))^2. 
        \end{align}
      \item Use the Minimax Theorem (\pref{lem:sion} in \pref{sec:minimax_appendix}) to conclude \pref{prop:dec_unconstrained}.
    \end{enumerate}
	\end{exe}

\section{Reinforcement Learning: Basics}
\label{sec:mdp}

We now introduce the framework of \emph{reinforcement
  learning}, which encompasses a rich set of dynamic, stateful decision
making problems. Consider the task of repeated medical treatment assignment,
depicted in \pref{fig:mab}. To make the setting more realistic, it is
natural to allow the decision-maker to apply \emph{multi-stage
  strategies} rather simple one-shot decisions such as ``prescribe a
painkiller.'' In principle, in the language of structured bandits,
nothing is preventing us from having each decision $\pi^t$ be a
complex multi-stage treatment strategy that, at each stage, acts on
the patient's dynamic state, which evolves as a function of the
treatments at previous stages. As an example, intermediate actions of
the type ``if patient's blood pressure is above X then do Y'' can form
a decision tree that defines the complex strategy $\pi^t$. Methods
from the previous lectures provide guarantees for such a setting, as
long as we have a succinct model of expected rewards. What sets RL
apart from structured bandits is the \emph{additional information}
about the intermediate state transitions and intermediate
rewards. This information facilitates \emph{credit assignment}, the
mechanism for recognizing which of the actions led to the overall
(composite) decision to be good or bad. This extra information can
reduce what would otherwise be exponential sample complexity in terms
of the number of stages, states, and actions in multi-stage decision
making.

This section is structured as follows. We first present the formal
reinforcement learning framework and present basic principles
including Bellman optimality and dynamic programming, which facilitate
efficiently computing optimal decisions when the environment is
known. We then consider the case in which the environment is
unknown, and give algorithms for perhaps the simplest reinforcement
learning setting, \emph{tabular reinforcement learning}, where the
state and action spaces are finite. Algorithms for more complex
reinforcement learning settings are given in \cref{sec:mdp}.

\subsection{Finite-Horizon Episodic MDP Formulation}

  We consider an episodic finite-horizon reinforcement
  learning framework. With $H$ denoting the \emph{horizon}, a
  \emph{Markov Decision Process} (MDP) $M$ takes the form
  $$M=\crl*{\cS, \cA, \crl{\Pm_h}_{h=1}^{H}, \crl{\Rm_h}_{h=1}^{H},
    d_1},$$ where $\cS$ is the state space, $\cA$ is the action space,
  $$\Pm_h:\cS\times\cA\to\Delta(\cS)$$ is the probability transition
  kernel at step $h$, $$\Rm_h:\cS\times\cA\to\Delta(\bbR)$$ is
  the reward distribution, and $d_1\in\Delta(\cS)$ is the initial
  state distribution. We allow
 the reward distribution and transition
  kernel to vary across MDPs, but assume for simplicity that the initial
  state distribution is fixed and known.

  For a fixed MDP $M$, an \emph{episode} proceeds under the
  following protocol.
  At the beginning of the episode, the learner selects a
  randomized, non-stationary \emph{policy}
  $$\pi=(\pi_1,\ldots,\pi_H),$$ where $\pi_h:\cS\to\Delta(\cA)$; we
  let $\PiRNS$ for ``randomized, non-stationary'' denote the set of all such policies. The episode
  then evolves through the following process, beginning from
  $s_1\sim{}d_1$. For $h=1,\ldots,H$:
  \begin{itemize}
  \item $a_h\sim\pi_h(s_h)$.
  \item $r_h\sim\Rm_h(s_h,a_h)$ and $s_{h+1}\sim{}P\sups{M}_h(s_h,a_h)$.
  \end{itemize}
    For notational convenience, we take $s_{H+1}$ to be a deterministic terminal
  state.  The \emph{Markov} property refers to the fact that under this evolution, 
  $$\mathbb{P}^{\sss{M}}(s_{h+1}=s'\mid{} s_h, a_h) = \mathbb{P}^{\sss{M}}(s_{h+1}=s'\mid{} s_h, a_h, s_{h-1}, a_{h-1},\ldots, s_1, a_1).$$

  The value for a policy $\pi$ under $M$ is given by
  \begin{align}
	  \label{eq:value_function_def}
	  \fm(\pi)\ldef\Ens{M}{\pi}\brk*{\sum_{h=1}^{H}r_h},
  \end{align} where
  $\Ens{M}{\pi}\brk{\cdot}$ denotes expectation under the process above. We define an optimal policy for model $M$ as
\begin{align}
	\label{eq:def_opt_pim}
	\pim \in \argmax_{\pi\in\PiRNS}\fm(\pi).
\end{align}

\paragraph{Value functions}  
Maximization in \eqref{eq:def_opt_pim} is a daunting task, since each
policy $\pi$ is a complex multi-stage object. It is useful to define intermediate ``reward-to-go'' functions to start breaking this complex task into smaller sub-tasks. Specifically, for a given model $M$ and policy $\pi$, we define the \emph{state-action
value function} and \emph{state value function} via
\[
Q_h^{\sss{M},\pi}(s,a)=\En^{\sss{M},\pi}\brk*{\sum_{h'=h}^{H}r_{h'}\mid{}s_h=s,
  a_h=a},
\mathand V_h^{\sss{M},\pi}(s)=\En^{\sss{M},\pi}\brk*{\sum_{h'=h}^{H}r_{h'}\mid{}s_h=s}.
\]
Hence, the definition in \eqref{eq:value_function_def} reads
$$\fm(\pi) = \En_{s\sim{}d_1, a\sim{}\pi_1(s)}\brk*{Q_1^{\sss{M},\pi}(s,a)} = \En_{s\sim{}d_1}\brk*{V_1^{\sss{M},\pi}(s)}$$

\paragraph{Online RL}
For reinforcement learning, our main focus will be on what is called
the \emph{online reinforcement learning problem}, in which we interact with an
unknown MDP $\Mstar$ for $T$ episodes. For each episode
$t=1,\ldots,T$, the learner selects a policy
$\pi\ind{t}\in\PiRNS$. The policy is executed in the MDP $\Mstar$, and
the learner observes the resulting trajectory
$$\tau\ind{t}=(s_{1}\ind{t},a_{1}\ind{t},r_{1}\ind{t}),\ldots,(s_{H}\ind{t},a_{H}\ind{t},r_H\ind{t}).$$
The goal is to minimize the total regret
\begin{align}
	\label{eq:mdp_regret}
	\sum_{t=1}^{T}\En_{\act\ind{t}\sim{}p\ind{t}}\brk*{\fmstar(\pi\subs{\Mstar})
  - \fmstar(\pi\ind{t})}
\end{align} 
 against the optimal policy $\pi\subs{\Mstar}$ for
$\Mstar$. 

The online RL framework is a strict generalization of (structured) bandits and
contextual bandits (with i.i.d. contexts). Indeed, if $\cS=\{s_0\}$
and $H=1$, each episode amounts to choosing an action
$a\sups{t}\in\cA$ and observing a reward $r\sups{t}$ with mean
$f\sups{M}(a\sups{t})$, which is precisely a bandit problem. On the
other hand, taking $\cS=\cX$ and $H=1$ puts us in the setting of
contextual bandits, with $d_1$ being the distribution of contexts. In
both cases, the notion of regret \eqref{eq:mdp_regret} coincides with
the notion of regret in the respective setting.

We mention in passing that many alternative formulations for Markov
decision processes and for the reinforcement learning problem appear
throughout the literature. For example, MDPs can be studied with infinite horizon
(with or without discounting), and an alternative to minimizing regret
is to consider \emph{PAC-RL} which aims to minimize the sub-optimality
of a final output policy produced after exploring for $T$ rounds.

\subsection{Planning via Dynamic Programming}

In some reinforcement learning problems, it is natural to assume that
the true MDP $\Mstar$ is known. This may be the case with games, such
as chess or backgammon, where transition probabilities are postulated
by the game itself. In other settings, such as robotics or medical
treatment, the agent interacts with an unknown $\Mstar$ and needs to
learn at least some aspects of this environment. The online
reinforcement learning problem described above falls in the latter
category. Before attacking the learning problem, we need to understand the
structure of solutions to \eqref{eq:def_opt_pim} in the case where
$\Mstar$ is known to the decision-maker. In this section, we show that
the problem of maximizing $\fm(\pi)$ over $\pi\in\Pi$ in a known
model $M$ (known as \emph{planning}) can be solved efficiently via the
principle of \emph{dynamic programming}. Dynamic programming can be
viewed as solving the problem of credit assignment by breaking down a
complex multi-stage decision (policy) into a sequence of small
decisions. %

We start by observing that the optimal policy $\pim$ in \eqref{eq:def_opt_pim} may not be uniquely
defined. For instance, if $d_1$ assigns zero probability to some state
$s_1$, the behavior of $\pim$ on this state is immaterial. In what
follows, we introduce a fundamental result, \pref{prop:bellman}, which
guarantees existence of an optimal policy $\pim=(\pi\subs{M,1},\ldots,\pi\subs{M,H})$ that 
maximizes $\Vmpi[\pi]_1(s)$ over $\pi\in\PiRNS$ for all states
$s\in\cS$ \emph{simultaneously} (rather than just on average, as in
\eqref{eq:def_opt_pim}). The fact that such a policy exists may seem magical at first, but it is rather straightforward. Indeed, if $\pi\subs{M,h}(s)$ is defined for all $s\in\cS$ and $h=2,\ldots,H$, then defining the optimal $\pi\subs{M,1}(s)$ at any $s$ is a matter of greedily choosing an action that maximizes the sum of the expected immediate reward and the remaining expected reward under the optimal policy. Indeed, this observation is Bellman's principle of optimality, stated more generally as follows \citep{bellman1954theory}:
\begin{figure}[H]
  \centering
    \includegraphics[width=.8\textwidth]{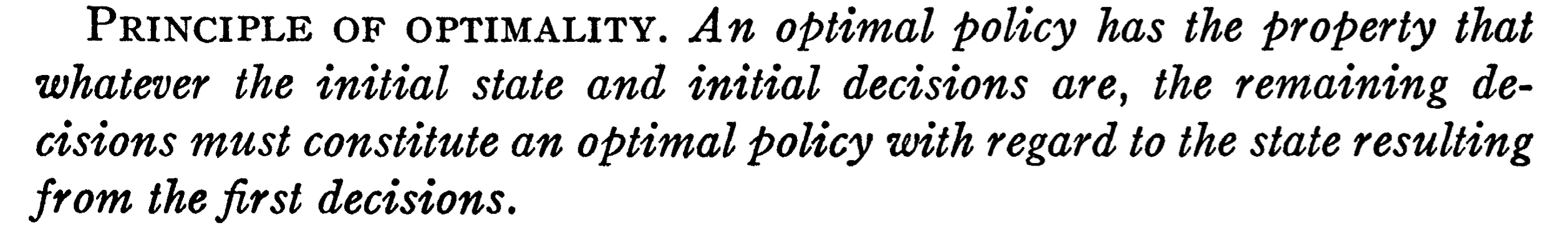}
\end{figure}
To state the result formally, we introduce the \emph{optimal value functions} as 
\begin{align}
	\label{eq:def_opt_value_fn}
	\Qmstar_h(s,a)=\max_{\pi\in\PiRNS}  \En^{\sss{M}, \pi}\brk*{\sum_{h'=h}^H r_{h'} \mid{} s_h=s, a_h=a }
\mathand \Vmstar_h(s)= \max_{a} \Qmstar_h(s,a)
\end{align}
for all $s\in\cS$, $a\in\cA$, and $h\in[H]$; we adopt the convention
that $\Vmstar_{H+1}(s)=\Qmstar_{H+1}(s,a)=0$. Since these optimal
values are separate maximizations for each $s,a,h$, it is reasonable
to ask whether there exists a single policy that maximizes all these
value functions simultaneously. Indeed, the following lemma shows that there exists $\pim$ such that for all $s,a,h$,
\begin{align}
	\label{eq:def_opt_value_fn2}
	\Qmstar_h(s,a) = \Qmpi[\pim]_h(s,a),
\mathand \Vmstar_h(s)=\Vmpi[\pim]_h(s).
\end{align}

\begin{prop}[Bellman Optimality]
	\label{prop:bellman}
	The optimal value function \eqref{eq:def_opt_value_fn} for
        MDP $M$ can be computed via $\Vmpi[\pim]_{H+1}(s)\ldef 0$, and for each $s\in\cS$,
	\begin{align}
		\Vmpi[\pim]_{h}(s) = \max_{a\in\cA} \En^{\sss{M}}\brk*{r_h + \Vmpi[\pim]_{h+1}(s_{h+1}) \mid{} s_h=s, a_h=a}.
	\end{align}
        The optimal policy is given by
	\begin{align}
		\pi\subs{M,h}(s) \in \argmax_{a\in\cA} \En^{\sss{M}}\brk*{r_h + \Vmpi[\pim]_{h+1}(s_{h+1}) \mid{} s_h=s, a_h=a}.
	\end{align}
	Equivalently, for all $s\in\cS$, $a\in\cA$, 
	\begin{align}
		\label{eq:bellman_backup}
		\Qmpi[\pim]_{h}(s, a) = \En^{\sss{M}}\brk*{r_h + \max_{a'\in\cA} \Qmpi[\pim]_{h+1}(s_{h+1}, a') \mid{} s_h=s, a_h=a},
	\end{align}
	and an the optimal policy is given by
	\begin{align}
		\pi\subs{M,h}(s) \in \argmax_{a\in\cA} \Qmpi[\pim]_{h}(s, a).
	\end{align}
\end{prop}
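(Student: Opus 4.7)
The plan is to proceed by backward induction on the horizon index $h$, starting from $h = H+1$ and working down to $h = 1$. At step $h$, the inductive claim is that the greedy policy $\pim = (\pi\subs{M,1}, \ldots, \pi\subs{M,H})$ defined by the rule stated in the proposition satisfies $\Vmpi[\pim]_{h'}(s) = \Vmstar_{h'}(s)$ and $\Qmpi[\pim]_{h'}(s,a) = \Qmstar_{h'}(s,a)$ for all $s \in \cS$, $a \in \cA$, and all $h' \geq h$. The base case $h = H+1$ holds by the convention $\Vmstar_{H+1} \equiv \Vmpi[\pi]_{H+1} \equiv 0$ for every $\pi$.

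The key tool for the inductive step is the \emph{Bellman consistency equation} for an arbitrary policy $\pi \in \PiRNS$:
\begin{align*}
\Vmpi[\pi]_h(s) = \En_{a \sim \pi_h(s)}\brk*{\En\sups{M}\brk*{r_h + \Vmpi[\pi]_{h+1}(s_{h+1}) \mid s_h = s, a_h = a}}.
\end{align*}
I would first establish this identity for any $\pi$ by unrolling the definition $\Vmpi[\pi]_h(s) = \En^{\sss{M},\pi}\brk{\sum_{h' \geq h} r_{h'} \mid s_h = s}$ and invoking the Markov property: conditioned on $(s_h, a_h)$, the distributions of $r_h$ and $s_{h+1}$ depend only on the reward and transition kernels of $M$, and the expected reward-to-go from $s_{h+1}$ under $\pi$ is by definition $\Vmpi[\pi]_{h+1}(s_{h+1})$.

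Assuming the inductive hypothesis at level $h+1$, the argument at level $h$ has two halves. For the upper bound, any $\pi \in \PiRNS$ satisfies $\Vmpi[\pi]_{h+1}(s') \leq \Vmstar_{h+1}(s') = \Vmpi[\pim]_{h+1}(s')$ pointwise in $s'$. Substituting this into the Bellman consistency equation for $\pi$ and then passing from an average over $a \sim \pi_h(s)$ to the maximum over $a \in \cA$ yields $\Vmpi[\pi]_h(s) \leq \max_{a} \En\sups{M}\brk*{r_h + \Vmpi[\pim]_{h+1}(s_{h+1}) \mid s_h = s, a_h = a}$. For the matching lower bound, plug $\pi = \pim$ into the same Bellman consistency equation and use the fact that $\pi\subs{M,h}(s)$ is defined to be a maximizer of the quantity in brackets; this collapses the outer expectation to the maximum and shows $\Vmpi[\pim]_h(s)$ equals the same right-hand side. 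The $Q$-function form \eqref{eq:bellman_backup} follows by taking $\Qmpi[\pim]_h(s,a) = \En\sups{M}\brk*{r_h + \Vmpi[\pim]_{h+1}(s_{h+1}) \mid s_h = s, a_h = a}$ and substituting $\Vmpi[\pim]_{h+1}(s') = \max_{a'} \Qmpi[\pim]_{h+1}(s',a')$, which is part of the inductive conclusion.

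The main subtlety I anticipate is the conceptual point that a single policy $\pim$ maximizes $\Vmpi[\pi]_h(s)$ \emph{simultaneously} for all $s$ — that is, strengthening the definition \eqref{eq:def_opt_pim} (optimality on average under $d_1$) to pointwise optimality. The induction handles this automatically, because at each level $h$ the greedy rule depends only on $s_h$ and not on any upstream randomness, so the action chosen at $(h,s)$ does not couple with the value at any other $(h,s')$. A secondary technical point is existence of the $\argmax$: for the tabular setting of this section $\cA$ is finite so this is immediate, but more generally one would either assume compactness/continuity of the one-step Bellman operator or work with $\veps$-maximizers, absorbing an additive $H\veps$ slack that can be sent to zero.
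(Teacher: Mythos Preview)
Your proposal is correct. The paper does not give a formal proof of this proposition; it only sketches the idea informally in the preceding paragraph (greedy backward induction via Bellman's principle of optimality), and your backward-induction argument with the Bellman consistency equation is exactly the standard way to make that sketch rigorous.
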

The update in \eqref{eq:bellman_backup} is referred to as \emph{value iteration} (VI). It is useful to introduce a more succinct notation for this update. For an MDP $M$, define the \emph{Bellman Operators}
$\cT\sups{M}_1,\ldots,\cT\sups{M}_H$ via
\begin{align}
	[\cT_h\ind{\sss{M}} Q](s, a) = \En_{s_{h+1}\sim{}P_h\sups{M}(s,a), r_h\sim R\sups{M}_h(s,a)}\brk*{r_h(s,a) + \max_{a'\in\cA} Q(s_{h+1}, a')}
\end{align}
for any $Q:\cS\times\cA\to\reals$. Going forward, we will write the
expectation above more succinctly as
\begin{align}
 	[\cT_h^{\sss{M}} Q](s, a) = \En^{\sss{M}}\brk*{r_h(s_h, a_h) + \max_{a'\in\cA} Q(s_{h+1}, a') \mid{} s_h=s, a_h=a}
\end{align}
In the language of Bellman operators, \eqref{eq:bellman_backup} can be written as
\begin{align}
	\label{eq:bellman_backup_operator}
	\Qmpi[\pim]_h = \cT\sups{M}_h \Qmpi[\pim]_{h+1}.
\end{align}

\subsection{Failure of Uniform Exploration}

The task of planning using dynamic programming---which requires
knowledge of the MDP---is fairly straightforward, at least if we
disregard the computational concerns. In this course, however, we are
interested in the problem of learning to make decisions in the face of
an unknown environment. Minimizing regret in an unknown MDP requires exploration. As the next example shows, exploration in MDPs is a more delicate issue than in bandits. 

Recall that $\veps$-Greedy, a simple algorithm, is a reasonable
solution for bandits and contextual bandits, albeit with a suboptimal
rate ($T^{2/3}$ as opposed to $\sqrt{T}$). The next (classical)
example, a so-called ``combination lock,'' shows that such a strategy
can be disastrous in reinforcement learning, as it leads to
exponential (in the horizon $H$) regret. 

Consider the MDP depicted in \pref{fig:graphics_combination_lock},
with $H+2$ states, and two actions $a_g$ and $a_b$, and a starting
state $1$. The ``good'' action $a_g$ deterministically leads to the
next state in the chain, while the ``bad'' action deterministically
leads to a terminal state. The only place where a non-zero reward can
be received is the last state $H$, if the good action is chosen. The
starting state is $1$, and so the only way to receive non-zero reward
is to select $a_g$ for \emph{all} the $H$ time steps within the
episode. Since the length of the episode is also $H$, selecting
actions uniformly brings no information about the optimal sequence of
actions, unless by chance all of the actions sampled
happen to be good; the probability that this occurs is exponentially
small in $H$. This means that $T$
needs to be at least $O(2^H)$ to achieve nontrivial regret, and
highlights the need for more strategic exploration. 
\begin{figure}[h]
  \centering
    \includegraphics[width=.7\textwidth]{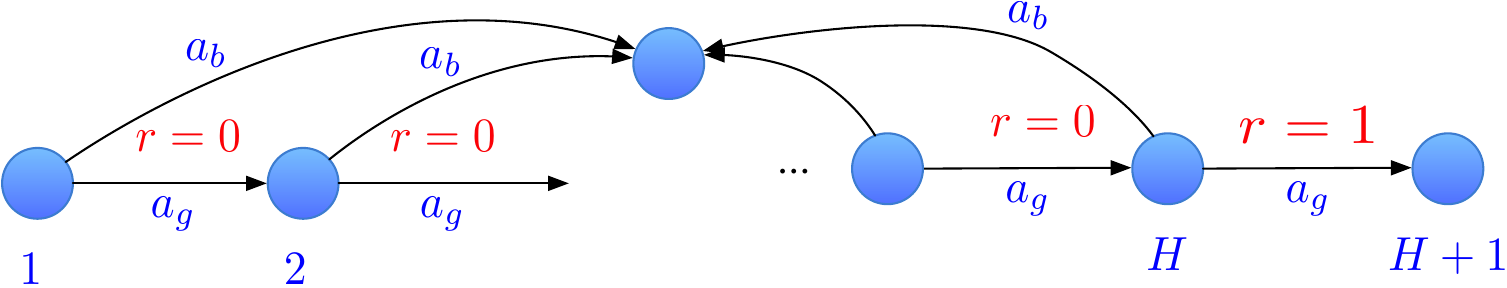}
  \caption{Combination Lock MDP.}
  \label{fig:graphics_combination_lock}
\end{figure}

Given the failure of \egreedy for this example, one can ask whether other algorithmic
principles also fail. As we will show now, the principle of optimism
succeeds, and an analogue of the UCB method yields a regret bound that is
\emph{polynomial} in the parameters $\abs{\cS}$, $\abs{\cA}$, and $H$. Before diving into the details, we present a collection of standard tools for
analysis in MDPs, which will find use throughout the remainder of the
lecture notes.

\subsection{Analysis Tools}

One of the most basic tools employed in the analysis of reinforcement learning algorithms is the \emph{performance difference lemma}, which expresses the difference in values for two policies in terms of differences in \emph{single-step decisions} made by the two policies. The simple proof, stated below, proceeds by successively changing one policy into another and keep track of the ensuing differences in expected rewards. One may also interpret this lemma as a version of the credit assignment mechanism.

Henceforth, we adopt the following simplified notation. When a policy
$\pi$ is applied to the random variable $s_{h}$, we drop the
subscript $h$ and write $\pi(s_{h})$ instead of
$\pi_{h}(s_{h})$, whenever this does not cause confusion.

\begin{lem}[Performance Difference Lemma]
	\label{lem:perf_diff_lemma}
	For any $s\in\cS$, and $\pi,\pi'\in\PiRNS$,
	\begin{align}
		V_1^{\sss{M},\pi'}(s) - V_1^{\sss{M},\pi}(s)  = \sum_{h=1}^H \En^{\sss{M},\pi}\brk*{  Q_h^{\sss{M},\pi'}(s_h,\pi'(s_h)) - Q_h^{\sss{M},\pi'}(s_h,a_h) \mid{} s_1=s}
	\end{align}	
\end{lem}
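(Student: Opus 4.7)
The plan is to prove the identity by a telescoping argument that compares, at each stage $h$, the action $\pi'(s_h)$ prescribed by $\pi'$ with the action $a_h$ actually taken by $\pi$, while holding the future behavior fixed at $\pi'$. The only subtlety is that the expectations on the right-hand side are taken under the trajectory distribution of $\pi$, while the $Q$-functions appearing inside are those of $\pi'$; the key insight that makes this compatible is that $Q_h^{\sss{M},\pi'}(s,a)$ depends on $(s,a)$ only through the $M$-dynamics and the future rollout of $\pi'$, and is therefore a well-defined function of $(s_h,a_h)$ regardless of how they were generated.

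I would start from the tautology
\[
V_1^{\sss{M},\pi'}(s_1) \;=\; \sum_{h=1}^{H}\bigl(V_h^{\sss{M},\pi'}(s_h) - V_{h+1}^{\sss{M},\pi'}(s_{h+1})\bigr),
\]
which holds pointwise along any trajectory with $s_1=s$, using the convention $V_{H+1}^{\sss{M},\pi'}\equiv 0$. Taking expectation under $\pi$ and subtracting $V_1^{\sss{M},\pi}(s) = \En^{\sss{M},\pi}[\sum_h r_h\mid s_1=s]$ yields
\[
V_1^{\sss{M},\pi'}(s) - V_1^{\sss{M},\pi}(s) \;=\; \sum_{h=1}^{H}\En^{\sss{M},\pi}\!\bigl[\,V_h^{\sss{M},\pi'}(s_h) - r_h - V_{h+1}^{\sss{M},\pi'}(s_{h+1})\,\bigm|\, s_1=s\bigr].
\]

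Next I would rewrite each summand using two elementary identities for $\pi'$. First, by definition of the value function under a policy, $V_h^{\sss{M},\pi'}(s_h) = \En_{a\sim\pi'(s_h)}[Q_h^{\sss{M},\pi'}(s_h,a)]$, which I will write as $Q_h^{\sss{M},\pi'}(s_h,\pi'(s_h))$ to match the statement. Second, because the transition kernel $P_h^{\sss{M}}$ and reward distribution $R_h^{\sss{M}}$ depend only on $(s_h,a_h)$ and not on the behavioral policy,
\[
\En^{\sss{M},\pi}\!\bigl[\,r_h + V_{h+1}^{\sss{M},\pi'}(s_{h+1}) \bigm|\, s_h,a_h\bigr] \;=\; Q_h^{\sss{M},\pi'}(s_h,a_h).
\]
This is the step to be careful about: it requires the Markov property together with the observation that $V_{h+1}^{\sss{M},\pi'}$ does not depend on past actions, so conditioning on $(s_h,a_h)$ suffices. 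Applying the tower property with the $\sigma$-field generated by $(s_h,a_h)$ then converts the summand into $\En^{\sss{M},\pi}[\,Q_h^{\sss{M},\pi'}(s_h,\pi'(s_h)) - Q_h^{\sss{M},\pi'}(s_h,a_h)\mid s_1=s]$.

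Summing over $h$ yields the claimed identity. The main obstacle is purely notational/conceptual: keeping track of which quantities depend on $\pi$ versus $\pi'$, and justifying the tower-property step in the second identity above, which is where the Markov structure of the MDP is used essentially.
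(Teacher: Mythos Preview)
Your proof is correct, but it takes a different route from the paper's. The paper defines hybrid policies $\pi^h = (\pi_1,\ldots,\pi_{h-1},\pi'_h,\ldots,\pi'_H)$ and telescopes $V_1^{\sss{M},\pi'}(s) - V_1^{\sss{M},\pi}(s) = \sum_h \bigl(V_1^{\sss{M},\pi^h}(s) - V_1^{\sss{M},\pi^{h+1}}(s)\bigr)$, then argues that each pair $\pi^h,\pi^{h+1}$ differs only in the action at step $h$, so the $h$th difference collapses to $\En^{\sss{M},\pi}[Q_h^{\sss{M},\pi'}(s_h,\pi'(s_h)) - Q_h^{\sss{M},\pi'}(s_h,a_h)\mid s_1=s]$. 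You instead telescope $V_h^{\sss{M},\pi'}(s_h)$ pointwise along the $\pi$-trajectory and invoke the one-step Bellman identity $\En^{\sss{M},\pi}[r_h + V_{h+1}^{\sss{M},\pi'}(s_{h+1})\mid s_h,a_h] = Q_h^{\sss{M},\pi'}(s_h,a_h)$ together with the tower property. Your argument is arguably more elementary since it never leaves the single rollout distribution of $\pi$, and it is in fact the same mechanism the paper uses in the very next result (\pref{lem:bellman_residual}, the Bellman residual decomposition). The paper's hybrid-policy interpolation has the pedagogical advantage of making ``successively change one policy into another'' literal, which matches the credit-assignment intuition stated before the lemma.
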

\begin{proof}[Proof of \pref{lem:perf_diff_lemma}]
  Fix a pair of policies $\pi,\pi'$ and define
	$$\pi^h =  (\pi_1,\ldots,\pi_{h-1}, \pi'_h,\ldots, \pi'_H),$$
	with $\pi^1 = \pi'$ and $\pi^H = \pi$. By telescoping, we can write
	\begin{align}
		V_1^{\sss{M},\pi'}(s)- V_1^{\sss{M},\pi}(s)  = \sum_{h=1}^{H}   V_1^{\sss{M},\pi^h}(s)- V_1^{\sss{M},\pi^{h+1}}(s) .
	\end{align}
        Observe that for each $h$, we have
	\begin{align}
		\label{eq:perf_diff_proof_1}
		 V_1^{\sss{M},\pi^h}(s)-V_1^{\sss{M},\pi^{h+1}}(s) =    \En^{\sss{M},\pi^h}\brk*{\sum_{t=1}^{H}r_{t}\mid{}s_1=s} - \En^{\sss{M},\pi^{h+1} }\brk*{\sum_{t=1}^{H}r_{t}\mid{}s_1=s}.
	\end{align}
        Here, one process evolves according to $(M,\pi^{h})$ and the
        one evolves according to $(M,\pi^{h+1})$. The processes only
        differ in the action taken once the state $s_h$ is reached. In the former, the action $\pi'(s_h)$ is taken, whereas in the latter it is $\pi(s_h)$. Hence, \eqref{eq:perf_diff_proof_1} is equal to 
	\begin{align}
		\En_{s_h\sim (M,\pi)}\En^{\sss{M},\pi}\brk*{ Q_h^{\sss{M},\pi'}(s_h,\pi'(s_h))- Q_h^{\sss{M},\pi'}(s_h,\pi(s_h))\mid{}s_1=s}
	\end{align}
	which can be written as
	\begin{align}
		\En_{(s_h,a_h)\sim (M,\pi)}\En^{\sss{M},\pi}\brk*{ Q_h^{\sss{M},\pi'}(s_h,\pi'(s_h)) -Q_h^{\sss{M},\pi'}(s_h,a_h) \mid{}s_1=s}.
	\end{align}
\end{proof}
In contrast to the performance difference lemma, which relates the
values of two policies under the same MDP, the next result relates the
performance of the same policy under two different MDPs. Specifically,
the difference in initial value for two MDPs is decomposed into a sum
of errors between layer-wise value functions.
\begin{lem}[Bellman residual decomposition]
  \label{lem:bellman_residual}
  For any pair of MDPs $M=(\Pm,\Rm)$ and $\Mhat=(\Pmhat,\Rmhat)$, for any $s\in\cS$, and policies $\pi\in\PiGen$,
    \begin{equation}
      \label{eq:bellman_residual0}
      \Vmpi_1(s)- \Vmhatpi(s) =  \sum_{h=1}^{H}\Enm{\Mhat}{\pi}\brk*{\Qmpi_h(s_h,a_h) - r_h -
        \Vmpi_{h+1}(s_{h+1}) \mid s_1=s}
  \end{equation}
  Hence, for $M,\Mhat$ with the same initial state distribution,
  \begin{equation}
    \label{eq:bellman_residual1}
    \fm(\pi)- \fmhat(\pi) =  \sum_{h=1}^{H}\Enm{\Mhat}{\pi}\brk*{\Qmpi_h(s_h,a_h) - r_h -
      \Vmpi_{h+1}(s_{h+1})}.
\end{equation}
In addition, for any MDP $M$ and function $Q=(Q_1,\ldots,Q_H,Q_{H+1})$
with $Q_{H+1}\equiv 0$, letting $\piqh(s)=\argmax_{a\in\cA}Q_h(s,a)$, we have
    \begin{equation}
      \label{eq:bellman_residual3}
      \max_{a\in\cA}Q_1(s,a)- \Vmpi[\piq]_1(s) =  \sum_{h=1}^{H}\Enm{M}{\piq}\brk[\big]{Q_h(s_h,a_h) - \brk*{\cTm_h Q_{h+1}}(s_h,a_h) \mid s_1=s}.
  \end{equation}
  and, hence, %
  \begin{equation}
    \label{eq:bellman_residual2}
    \En_{s_1\sim{}d_1}\brk[\big]{\max_{a\in\cA}Q_1(s_1,a)}- \fm(\piq) =  \sum_{h=1}^{H}\Enm{M}{\piq}\brk[\big]{Q_h(s_h,a_h) - \brk*{\cTm_h Q_{h+1}}(s_h,a_h)}.
\end{equation}
\end{lem}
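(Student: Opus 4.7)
The proof will be a telescoping argument essentially identical in spirit to the performance difference lemma just proved, with the key twist being \emph{which MDP's dynamics we roll out under versus which value function we evaluate}.

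For \eqref{eq:bellman_residual0}, the starting observation is that since $a_h \sim \pi_h(s_h)$ is determined by the policy alone (not by the MDP), we have
\[
\Vmpi_1(s) = \Enm{\Mhat}{\pi}\brk*{\Qmpi_1(s_1,a_1) \mid s_1=s},
\]
even though $V^{M,\pi}$ is a value function of $M$ and the rollout is under $\Mhat$. Then I would add and subtract $r_1 + \Vmpi_2(s_2)$ inside the expectation to split off the first Bellman residual $\Qmpi_1(s_1,a_1) - r_1 - \Vmpi_2(s_2)$, and invoke the same identity at the next layer: $\Vmpi_2(s_2) = \Enm{\Mhat}{\pi}\brk{\Qmpi_2(s_2,a_2) \mid s_2}$. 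Iterating this $H$ times telescopes the $r_h$ terms into $\Enm{\Mhat}{\pi}\brk{\sum_h r_h \mid s_1=s} = \Vmhatpi(s)$ and leaves exactly the sum of Bellman residuals on the other side, giving \eqref{eq:bellman_residual0}. Equation \eqref{eq:bellman_residual1} follows immediately by taking expectation over $s_1\sim d_1$ and using that $d_1$ is shared.

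For \eqref{eq:bellman_residual3}, the plan is the same telescoping, but now the roles are reversed: $Q$ is an arbitrary sequence of functions (not the $Q$-function of any MDP) and the rollout is in $M$ under the greedy policy $\piqh(s)=\argmax_a Q_h(s,a)$. The key identity now is that under $\piq$, the action satisfies $Q_h(s_h,a_h) = \max_{a'}Q_h(s_h,a')$, so in particular $\max_a Q_1(s,a) = \Enm{M}{\piq}\brk{Q_1(s_1,a_1)\mid s_1=s}$, and at deeper layers $\max_{a'} Q_{h+1}(s_{h+1},a') = Q_{h+1}(s_{h+1},a_{h+1})$. Adding and subtracting $[\cTm_h Q_{h+1}](s_h,a_h)$ and using the definition of the Bellman operator to absorb one $r_h$ plus the next layer's max-$Q$ term produces the residual $Q_h(s_h,a_h) - [\cTm_h Q_{h+1}](s_h,a_h)$, and iterating gives \eqref{eq:bellman_residual3}. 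Equation \eqref{eq:bellman_residual2} again follows by integrating over $s_1 \sim d_1$.

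The calculation itself is routine once the two starting identities are in place. The only mild subtlety that I expect to require care in writing up cleanly is bookkeeping the conditional expectations at each layer: namely, in part one one must use the tower property to push $\Enm{\Mhat}{\pi}\brk{\cdot \mid s_1=s}$ through $\Vmpi_{h+1}(s_{h+1})$, and then rewrite $\Vmpi_{h+1}(s_{h+1}) = \Enm{\Mhat}{\pi}\brk{\Qmpi_{h+1}(s_{h+1},a_{h+1}) \mid s_{h+1}}$ before combining. To keep the telescoping transparent, I would present the argument as an induction on $h$ (or equivalently, a single clean telescoping display), stating the invariant
\[
\Vmpi_1(s) - \Enm{\Mhat}{\pi}\brk*{\sum_{h'=1}^{h-1} r_{h'} + \Vmpi_h(s_h) \mid s_1=s} = \sum_{h'=1}^{h-1} \Enm{\Mhat}{\pi}\brk*{\Qmpi_{h'}(s_{h'},a_{h'}) - r_{h'} - \Vmpi_{h'+1}(s_{h'+1}) \mid s_1=s}
\]
and setting $h = H+1$ (with $\Vmpi_{H+1}\equiv 0$) to conclude. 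An entirely analogous invariant, with $\Mhat\leftrightarrow M$ swapped, $\piq$ in place of $\pi$, and $[\cTm_h Q_{h+1}]$ in place of $r_h + \Vmpi_{h+1}(s_{h+1})$, handles part two.
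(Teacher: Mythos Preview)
Your proposal is correct and takes essentially the same approach as the paper: both arguments are telescoping, using that $\Vmpi_h(s_h)=\Enm{\Mhat}{\pi}\brk{\Qmpi_h(s_h,a_h)\mid s_h}$ (since $a_h\sim\pi_h(s_h)$) for part one and that $a_{h+1}=\argmax_a Q_{h+1}(s_{h+1},a)$ under $\piq$ for part two. The only cosmetic difference is that the paper starts from the right-hand side, separates off the reward sum $\sum_h r_h$ (which yields $\fmhat(\pi)$ or $\fm(\piq)$), and then telescopes the remaining $Q$--$V$ differences, whereas you unfold from the left-hand side layer by layer; the computations are equivalent.
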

Note that for the second part of \pref{lem:bellman_residual} $Q=(Q_1,\ldots,Q_H)$ can be
any sequence of functions, and need not be a value function
corresponding to a particular policy or MDP. It is worth noting that $Q$ gives rise to the greedy policy $\piq$, which, in turn, gives rise to $\Qmpi[\piq]$ (the value of $\piq$ in model $M$), but it may well be the case that $\Qmpi[\piq]\neq Q$.
\begin{proof}[\pfref{lem:bellman_residual}]
We will prove \pref{eq:bellman_residual1}, and omit the proof
for \cref{eq:bellman_residual0}, which is similar but more verbose. We have
  \begin{align*}
    \sum_{h=1}^{H}\Enm{\Mhat}{\pi}\brk*{\Qmpi_h(s_h,a_h) - r_h -
    \Vmpi_{h+1}(s_{h+1})}
    &= \sum_{h=1}^{H}\Enm{\Mhat}{\pi}\brk*{\Qmpi_h(s_h,a_h)  -
    \Vmpi_{h+1}(s_{h+1})}
      - \Enm{\Mhat}{\pi}\brk*{\sum_{h=1}^{H}r_h}\\
        &= \sum_{h=1}^{H}\Enm{\Mhat}{\pi}\brk*{\Qmpi_h(s_h,a_h)  -
    \Vmpi_{h+1}(s_{h+1})}
          - \fmhat(\pi).
  \end{align*}
  On the other hand, since
  $\Vmpi_h(s)=\En_{a\sim\pi_h(s)}\brk{\Qmpi_h(s,a)}$, a telescoping
  argument yields
  \begin{align*}
    \sum_{h=1}^{H}\Enm{\Mhat}{\pi}\brk*{\Qmpi_h(s_h,a_h)  -
      \Vmpi_{h+1}(s_{h+1})}
    &=\sum_{h=1}^{H}\Enm{\Mhat}{\pi}\brk*{\Vmpi_h(s_h)  -
      \Vmpi_{h+1}(s_{h+1})}\\
        &=\Enm{\Mhat}{\pi}\brk*{\Vmpi_1(s_1)}  -
          \Enm{\Mhat}{\pi}\brk*{\Vmpi_{H+1}(s_{H+1})}\\
    &=\fm(\pi),
  \end{align*}
  where we have used that $\Vmpi_{H+1}=0$, and that both MDPs have the
  same initial state distribution. We prove
  \pref{eq:bellman_residual2} (omitting the proof of
  \cref{eq:bellman_residual3}) using a similar argument. We have
  \begin{align*}
    &\sum_{h=1}^{H}\Enm{M}{\piq}\brk[\big]{Q_h(s_h,a_h) - r_h -
    \max_{a\in\cA}Q_{h+1}(s_{h+1},a)} \\
    &= \sum_{h=1}^{H}\Enm{M}{\piq}\brk[\big]{Q_h(s_h,a_h)  -
      \max_{a\in\cA}Q_{h+1}(s_{h+1},a)} -
      \Enm{M}{\piq}\brk*{\sum_{h=1}^{H}r_h} \\
    &= \sum_{h=1}^{H}\Enm{M}{\piq}\brk[\big]{Q_h(s_h,a_h)  -
      \max_{a\in\cA}Q_{h+1}(s_{h+1},a)} - \fm(\piq).
  \end{align*}
  Since $a_{h+1}=\piqh(s_{h+1})=\argmax_{a\in\cA}Q_{h+1}(s_{h+1},a)$,
we have 
$$\Enm{M}{\piq}\brk[\big]{Q_h(s_h,a_h)  -
      \max_{a\in\cA}Q_{h+1}(s_{h+1},a)}=\Enm{M}{\piq}\brk[\big]{Q_h(s_h,a_h)  -
        Q_{h+1}(s_{h+1},a_{h+1})},$$ and the result follows by telescoping.
  
\end{proof}
Another similar analysis tool for MDPs, the \emph{simulation lemma},
is deferred to \cref{sec:general_dm} (\cref{lem:simulation_basic}). This
result can be proven as a consequence of \cref{lem:bellman_residual}.

\subsection{Optimism}

To develop algorithms for regret minimization in unknown MDPs, we turn
to the principle of optimism, which we have seen is successful in
tackling multi-armed bandits and linear bandits (in small
dimension). Recall that for bandits, \cref{lem:regret_optimistic} gave
a way to decompose the regret of optimistic algorithms into width of
confidence intervals. What is the analogue of
\pref{lem:regret_optimistic} for MDPs? Thinking of optimistic
estimates at the level of expected rewards for policies $\pi$ is
unwieldy, and we need to dig into the structure of these multi-stage
decisions. In particular, the approach we employ is to construct a
sequence of optimistic \emph{value functions}
$\Qbar_1,\ldots,\Qbar_H$ which are guaranteed to over-estimate the
optimal value function $\Qmstar$. For multi-armed bandits,
implementing optimism amounted to adding ``bonuses,'' constructed from past
data, to estimates for the reward function. We will construct
optimistic value functions in a similar fashion. Before giving the
construction, we introduce a technical lemma, which quantifies the
error in using such optimistic estimates in terms of \emph{Bellman
  residuals}; Bellman residuals measure self-consistency of the
optimistic estimates under the application of the Bellman
operator.
\begin{lem}[Error decomposition for optimistic policies]
	\label{lem:reg_decomp_optimistic}
	Let $\{\Qbar_1,\ldots,\Qbar_H\}$ be a sequence of functions
        $\Qbar_h:\cS\times\cA\to\reals$ with the property that  for all $(s,a)$,
	\begin{align}
		\label{eq:Qbar_optimistic}
		\Qmstar_h(s,a)\leq \Qbar_h(s,a)
	\end{align}
	and set $\Qbar_{H+1}\equiv 0$. Let
        $\pihat=(\pihat_1,\ldots,\pihat_H)$ be such that $\pihat_h(s)
        = \argmax_{a} \Qbar_h(s,a)$. Then for all $s\in\cS$,
	\begin{align}
		\label{eq:reg_decomp_optimistic}
		\Vmstar_1(s) - \Vmpi[\pihat]_1(s) \leq \sum_{h=1}^H \En^{\sss{M}, \pihat}\brk*{(\Qbar_h- \cT_h^{\sss{M}} \Qbar_{h+1})(s_h, \pihat(s_h)) \mid{} s_1=s}.
	\end{align}
	\end{lem}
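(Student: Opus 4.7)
The plan is to combine the optimism assumption \eqref{eq:Qbar_optimistic} with the Bellman residual decomposition from \pref{lem:bellman_residual} (specifically equation \eqref{eq:bellman_residual3}), which is tailor-made for this purpose. The key observation is that $\pihat$ is precisely the greedy policy $\pi_{\Qbar}$ with respect to the sequence $\Qbar = (\Qbar_1,\ldots,\Qbar_H,\Qbar_{H+1})$, so the hypothesis of \eqref{eq:bellman_residual3} is met.

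First, I would use optimism to replace $\Vmstar_1(s)$ with $\max_{a\in\cA}\Qbar_1(s,a)$. Since $\Qmstar_1(s,a) \leq \Qbar_1(s,a)$ pointwise by \eqref{eq:Qbar_optimistic}, taking $\max_{a\in\cA}$ on both sides gives
\[
\Vmstar_1(s) \;=\; \max_{a\in\cA}\Qmstar_1(s,a) \;\leq\; \max_{a\in\cA}\Qbar_1(s,a).
\]
Subtracting $\Vmpi[\pihat]_1(s)$ from both sides yields
\[
\Vmstar_1(s) - \Vmpi[\pihat]_1(s) \;\leq\; \max_{a\in\cA}\Qbar_1(s,a) - \Vmpi[\pihat]_1(s).
\]

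Next, I would apply \eqref{eq:bellman_residual3} with the choice $Q = \Qbar$. Because $\pihat_h(s) = \argmax_{a\in\cA}\Qbar_h(s,a)$ by construction, the greedy policy $\pi_{Q,h}$ associated with $Q=\Qbar$ is exactly $\pihat$. Therefore \eqref{eq:bellman_residual3} gives
\[
\max_{a\in\cA}\Qbar_1(s,a) - \Vmpi[\pihat]_1(s) \;=\; \sum_{h=1}^{H}\En^{\sss{M},\pihat}\!\left[\Qbar_h(s_h,a_h) - (\cT_h^{\sss{M}}\Qbar_{h+1})(s_h,a_h)\,\middle|\,s_1=s\right].
\]
Under $\pihat$, we have $a_h = \pihat(s_h)$ almost surely, so the summand on the right coincides with $(\Qbar_h - \cT_h^{\sss{M}}\Qbar_{h+1})(s_h,\pihat(s_h))$. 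Chaining the two displays gives \eqref{eq:reg_decomp_optimistic}.

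There isn't really a hard step here, since the heavy lifting is done by \pref{lem:bellman_residual}; the only thing to be careful about is verifying that the telescoping identity in \eqref{eq:bellman_residual3} applies cleanly to $\Qbar$ (which requires $\Qbar_{H+1}\equiv 0$, as assumed in the statement), and that we obtain an inequality rather than an equality only at the first step, where optimism is used. If \eqref{eq:bellman_residual3} were not available, I would instead prove the identity directly by telescoping $\Qbar_h(s_h,\pihat(s_h)) - \Vmpi[\pihat]_{h}(s_h)$ across $h$ under the law of $\pihat$ in $M$, using $\Vmpi[\pihat]_h(s) = \Qmpi[\pihat]_h(s,\pihat(s))$ and $\Qmpi[\pihat]_h = \cTm_h \Qmpi[\pihat]_{h+1}$ (restricted to the action $\pihat(s_h)$); but the cleaner path is the direct invocation of the Bellman residual lemma.
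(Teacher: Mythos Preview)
Your proposal is correct and follows essentially the same approach as the paper: use optimism at $h=1$ to bound $\Vmstar_1(s)\leq \max_a \Qbar_1(s,a)$, then invoke \eqref{eq:bellman_residual3} from \pref{lem:bellman_residual} with $Q=\Qbar$ and $\piq=\pihat$ to obtain the Bellman residual decomposition. The paper's proof is identical in structure, just phrasing the optimism step as $\Qmstar_1(s,\pim(s))\leq\Qbar_1(s,\pim(s))\leq\Qbar_1(s,\pihat(s))$ before applying the same lemma.
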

\cref{lem:reg_decomp_optimistic} tells us that closeness of $\Qbar_h$
to the Bellman backup $\cT_h^{\sss{M}} \Qbar_{h+1}$ implies closeness
of $\pihat$ to $\pim$ in terms of the value. As a sanity check, if
$\Qbar_h=\Qmstar_h$, the right-hand side of
\eqref{eq:reg_decomp_optimistic} is zero, since
$\Qmstar_h=\cT_h^{\sss{M}} \Qmstar_{h+1}$. Crucially, errors do not
accumulate too fast as a function of the horizon. This fact should not
be taken for granted: in general, if $\Qbar$ is not optimistic, it could have been the case that small changes in $\Qbar_h$ exponentially degrade the quality of the policy $\pihat$.

Another important aspect of the decomposition
\eqref{eq:reg_decomp_optimistic} is the \emph{on-policy} nature of the
terms in the sum. Observe that the law of $s_h$ for each of the terms
is given by executing $\pihat$ in model $M$. The distribution of $s_h$
is often referred to as the \emph{roll-in distribution}; when this
distribution is induced by the policy executed by the algorithm, we
may have a better control of the error than in the \emph{off-policy}
case when the roll-in distribution is given by $\pim$ or another
unknown policy. 
\begin{proof}[Proof of \pref{lem:reg_decomp_optimistic}]
	Let $\Vbar_h(s) \ldef \max_{a\in\cA} \Qbar_h(s,a)$. Just as in
        the proof of \pref{lem:regret_optimistic}, the assumption that
        $\Qbar_h$ is ``optimistic'' implies that
	$$\Qmstar_h(s_h,\pi_{\sss{M}}(s_h)) \leq \Qbar_h(s_h,\pi_{\sss{M}}(s_h)) \leq \Qbar_h(s_h,\pihat(s_h))$$
	and, hence, $\Vmstar_1(s)\leq \Vbar_1(s)$.
	Then, \eqref{eq:bellman_residual3} applied to $Q=\Qbar$ and $\piq=\pihat$ states that
      \begin{equation}
        \Vbar_1(s)- \Vmpi[\pihat]_1(s) =  \sum_{h=1}^{H}\Enm{M}{\pihat}\brk[\big]{\Qbar_h(s_h,a_h) - \brk*{\cTm_h \Qbar_{h+1}}(s_h,a_h) \mid s_1=s}.
    \end{equation}
\end{proof}

\begin{rem}
  In fact, the proof of \cref{lem:reg_decomp_optimistic} only uses
  that the initial value $\Qbar_1$ is optimistic. However, to
  construct a value function with this property, the algorithms we
  consider will proceed by backwards induction, producing 
  optimistic estimates $\Qbar_1,\ldots,\Qbar_H$ in the process.
\end{rem}

\subsection{The \ucbvi Algorithm for Tabular MDPs}
We now instantiate the principle of optimism to give regret bounds for
online reinforcement learning in \emph{tabular MDPs}. Tabular RL may
be thought of as an analogue of finite-armed bandits: we assume no
structure across states and actions, but require that the state and
action spaces are small. The regret bounds we present will
depend polynomially on $S=|\cS|$ and $A=|\cA|$, as well as the horizon $H$.

\paragraph{Preliminaries}

For simplicity, we assume that the reward function is known to the
learner, so that only the transition probabilities are unknown. This
does not change the difficulty of the problem in a meaningful way, but
allows us to keep notation light.
\begin{assumption}
	\label{assm:ucbvi}
	Rewards are deterministic, bounded, and known to the learner: $R^{\sss{M}}_h (s,a) = \delta_{r_{h}(s,a)}$ for known $r_{h}:\cS\times\cA\to [0,1]$, for all $M$. In addition, assume for simplicity that 
$\Vmstar_{1}(s)\in[0,1]$ for any $s\in\cS$.
\end{assumption}

Define, with a slight abuse of notation, 
\begin{align*}
	n\sups{t}_{h}(s,a) = \sum_{i=1}^{t-1} \indic{(s_{h}^i, a_{h}^i)=(s,a)}, \mathand  n\sups{t}_{h}(s,a,s') = \sum_{i=1}^{t-1} \indic{(s_{h}^i, a_{h}^i, s_{h+1}^i)=(s,a,s')},
\end{align*}
as the empirical state-action and state-action-next state frequencies.
We can estimate the transition probabilities via
\begin{align}
	\label{eq:def_Pt_ucb-vi}
	\widehat{P}\sups{t}_{h} (s'\mid{} s,a) = \frac{n\sups{t}_{h}(s,a,s')}{n\sups{t}_{h}(s,a)}.
\end{align}

\paragraph{The \ucbvi algorithm}

The following algorithm, \textsf{UCB-VI} (``Upper Confidence Bound
Value Iteration'') \citep{azar2017minimax}, combines the notion of optimism with dynamic
programming. 
\begin{whiteblueframe}
	\begin{algorithmic}
	\State \textsf{UCB-VI}
	\For{$t=1,\ldots,T$}
	\State Let $\Vbar\ind{t}_{H+1}\equiv 1$.
		\For{$h=H,\ldots,1$}
		\State Update $n\sups{t}_{h}(s,a)$,
                $n\sups{t}_{h}(s,a,s')$, and $b\ind{t}_{h,\delta}(s,a)$, for
                all $(s,a)\in\cS\times \cA$.
                \Statex[2]\hfill\algcomment{$b\ind{t}_{h,\delta}(s,a)$
                  is a bonus computed in \pref{eq:ucbvi_bonus}.}
		\State %
                Compute:
		\begin{align}\Qbar\sups{t}_{h}(s,a) = \crl*{ r_{h}(s,a)+ \En_{s'\sim{} \widehat{P}\sups{t}_{h}(\cdot \mid{} s,a)}  \Vbar\sups{t}_{h+1}(s') + b\sups{t}_{h,\delta}(s,a) } \wedge  1. \label{eq:ucbvi_update}\end{align}
		\State Set $\Vbar\sups{t}_{h}(s) = \max_{a\in\cA} \Qbar\sups{t}_{h}(s,a)$ and $\pihat\sups{t}_{h}(s)=\argmax_{a\in\cA} \Qbar\sups{t}_{h}(s,a).$
		\EndFor{}
	\State Collect trajectory $(s_{1}\ind{t},a_{1}\ind{t},r_{1}\ind{t}),\ldots,(s_{H}\ind{t},a_{H}\ind{t},r_{H}\ind{t})$ according to $\pihat\sups{t}$.
	\EndFor{}
	\end{algorithmic}
      \end{whiteblueframe}
      The \ucbvi algorithm will be analyzed using
      \pref{lem:reg_decomp_optimistic}. In constructing functions
      $\Qbar_h$, we will need to satisfy two goals: (1) ensure that
      with high probability \eqref{eq:Qbar_optimistic} is satisfied,
      i.e. $\Qbar_h$s are optimistic; and (2) that $\Qbar_h$s are
      ``self-consistent,'' in the sense that the Bellman residuals in
      \eqref{eq:reg_decomp_optimistic} are small. The second
      requirement already suggests that we should define $\Qbar_h$
      approximately as a Bellman backup $\cT_h^{\sss{M}} \Qbar_{h+1}$,
      going backwards for $h=H+1,\ldots,1$ as in dynamic programming,
      while ensuring the first requirement. In addition to these
      considerations, we will have to use a surrogate for the Bellman
      operator $\cT_h^{\sss{M}}$, since the model $M$ is not
      known. This is achieved by estimating $M$ using empirical
      transition frequencies. Putting these ideas together gives the
      update in \cref{eq:ucbvi_update}. We apply the principle of
      value iteration, except that
      \begin{enumerate}
      \item For each episode $t$, we augment the rewards $r_h(s,a)$ with a ``bonus''
        $b\ind{t}_{h,\delta}(s,a)$ designed to ensure optimism. 
      \item The Bellman operator is approximated using the estimated
        transition probabilities in \cref{eq:def_Pt_ucb-vi}.
      \end{enumerate}
The bonus functions play precisely the same role as the width of the
confidence interval in \eqref{eq:ucb_confidence_bound}: these bonuses
ensure that \eqref{eq:Qbar_optimistic} holds with high probability, as
we will show below in \pref{lem:ucb_vi_optimism}.
      
The following theorem shows that with an appropriate choice of bonus,
this algorithm achieves a polynomial regret bound.
\begin{theorem}
  \label{thm:ucb-vi}
  For any $\delta>0$, \ucbvi with 
        \begin{align}
          b\ind{t}_{h,\delta}(s,a) =
          2\sqrt{\frac{\log(2SAHT/\delta)}{n_{h}\sups{t}(s,a)}}\label{eq:ucbvi_bonus}
        \end{align}
	guarantees that with probability at least $1-\delta$, 
	$$\Reg \lesssim HS\sqrt{AT} \cdot \sqrt{\log(SAHT/\delta)}$$
      \end{theorem}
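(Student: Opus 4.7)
}
The plan is to carry out a UCB-style analysis in two phases: first establish \emph{optimism} of the iterates $\Qbar\ind{t}_h$ with respect to $\Qmstar_h$, and then apply the optimistic error decomposition of \pref{lem:reg_decomp_optimistic} to turn regret into a sum of on-policy Bellman residuals, which are in turn controlled by confidence widths and a pigeonhole argument.

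\textbf{Step 1 (Optimism).} I will show that with probability at least $1-\delta/2$, $\Qbar\ind{t}_h(s,a)\geq\Qmstar_h(s,a)$ for all $(s,a,h,t)$, by backward induction on $h$. The base case $h=H+1$ holds since $\Vbar\ind{t}_{H+1}\equiv 1\geq 0=\Vmstar_{H+1}$. For the inductive step, write
\[
\Qbar\ind{t}_h(s,a)-\Qmstar_h(s,a)
=\En_{s'\sim\Phat\ind{t}_h(\cdot\mid s,a)}\brk*{\Vbar\ind{t}_{h+1}(s')}
-\En_{s'\sim\Pmstar_h(\cdot\mid s,a)}\brk*{\Vmstar_{h+1}(s')}
+b\ind{t}_{h,\delta}(s,a),
\]
add and subtract $\En_{s'\sim\Phat\ind{t}_h}\brk{\Vmstar_{h+1}(s')}$, and use the induction hypothesis $\Vbar\ind{t}_{h+1}\geq\Vmstar_{h+1}$ to dispose of one term. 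The remaining quantity is $(\Phat\ind{t}_h-\Pmstar_h)^{\top}\Vmstar_{h+1}$, which, crucially, involves the \emph{fixed} (data-independent) function $\Vmstar_{h+1}\in[0,1]$. A Hoeffding-type bound for adaptive sample sizes (\pref{lem:hoeffding_adaptive}) plus a union bound over the $SAHT$ tuples $(s,a,h,t)$ shows that this error is at most $\sqrt{\log(2SAHT/\delta)/n\ind{t}_h(s,a)}$ with high probability, which is dominated by the bonus $b\ind{t}_{h,\delta}$. The clipping at $1$ preserves optimism since $\Qmstar_h\leq 1$ by \pref{assm:ucbvi}.

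\textbf{Step 2 (Bellman-residual bound).} Conditioning on the optimism event and applying \pref{lem:reg_decomp_optimistic} to $\Qbar\ind{t}$ and $\pihat\ind{t}$,
\[
\Vmstar_1(s\ind{t}_1)-\Vmpi[\pihat\ind{t}]_1(s\ind{t}_1)
\leq \sum_{h=1}^{H}\Ens{\Mstar}{\pihat\ind{t}}\brk*{(\Qbar\ind{t}_h-\cTm[\Mstar]_h\Qbar\ind{t}_{h+1})(s_h,a_h)\mid s_1=s\ind{t}_1}.
\]
Ignoring the clipping (which only helps), the per-$(s,a)$ residual equals $b\ind{t}_{h,\delta}(s,a)+(\Phat\ind{t}_h-\Pmstar_h)^{\top}\Vbar\ind{t}_{h+1}$. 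Now $\Vbar\ind{t}_{h+1}$ \emph{does} depend on the data, so I will control the second term by $\nrm{\Phat\ind{t}_h(\cdot\mid s,a)-\Pmstar_h(\cdot\mid s,a)}_1\cdot\nrm{\Vbar\ind{t}_{h+1}}_\infty$ and invoke the standard $L_1$-concentration inequality for multinomials (uniform over all bounded functions), which yields a $\bigoh(\sqrt{S\log(SAHT/\delta)/n\ind{t}_h(s,a)})$ bound, again with probability at least $1-\delta/4$ after a union bound. This is the step that injects the $\sqrt{S}$ factor responsible for the $HS\sqrt{AT}$ scaling. Adding the bonus, each Bellman residual is at most $C\sqrt{S\log(SAHT/\delta)/n\ind{t}_h(s,a)}$.

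\textbf{Step 3 (Summation and pigeonhole).} Summing over $t$, applying Azuma--Hoeffding to pass from $\sum_{t,h}\Ens{\Mstar}{\pihat\ind{t}}[\cdot]$ to the realized trajectory sum $\sum_{t,h}[\cdot](s\ind{t}_h,a\ind{t}_h)$ (at a cost of $\bigoh(H\sqrt{T\log(1/\delta)})$, which is lower order), I obtain
\[
\Reg \approxleq \sqrt{S\log(SAHT/\delta)}\cdot\sum_{h=1}^{H}\sum_{t=1}^{T}\frac{1}{\sqrt{n\ind{t}_h(s\ind{t}_h,a\ind{t}_h)\vee 1}}.
\]
A pigeonhole argument in the spirit of \pref{lem:confidence_width_potential}, applied independently at each layer $h$, gives $\sum_{t}1/\sqrt{n\ind{t}_h(s\ind{t}_h,a\ind{t}_h)\vee 1}\leq 2\sqrt{SAT}$ via Cauchy--Schwarz on the $SA$ cells. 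Combining, $\Reg\approxleq HS\sqrt{AT\log(SAHT/\delta)}$, as claimed.

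\textbf{Expected obstacle.} The cleanest part is the pigeonhole argument; the main subtlety is the interplay between \emph{adaptivity} and \emph{uniform deviations}: the empirical kernel $\Phat\ind{t}_h$ is built from a non-i.i.d., adaptively collected sample, and the function $\Vbar\ind{t}_{h+1}$ against which it is averaged in the Bellman residual is itself data-dependent. Using Hoeffding against a fixed $\Vmstar_{h+1}$ (Step 1) is easy, but the Bellman-residual step requires a \emph{uniform} control, hence the appeal to the $L_1$-multinomial concentration bound and the associated $\sqrt{S}$ loss; this is the quantitatively delicate step, and where the matching union-bound accounting must be done carefully to keep the logarithmic factor at $\log(SAHT/\delta)$.
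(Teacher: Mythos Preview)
Your proposal is correct and follows essentially the same approach as the paper: the paper likewise proves optimism via backward induction using a Hoeffding bound against the fixed $\Vmstar_{h+1}$ (\pref{lem:ucb_vi_optimism}), bounds the Bellman residual by a uniform-over-$V\in\{0,1\}^S$ deviation which injects the extra $\sqrt{S}$ (\pref{lem:ucb_vi_bellman_error}), and then applies \pref{lem:reg_decomp_optimistic}, Azuma--Hoeffding, and the layer-wise pigeonhole argument exactly as you describe. Your $L_1$-multinomial concentration in Step~2 is the same device as the paper's $\max_{V\in\{0,1\}^S}$ bound, since the maximum is attained at a vertex and equals the total-variation distance up to constants.
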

	  
	  We mention that a slight variation on \pref{lem:ucbvi_conc}
          below (using the Freedman inequality instead of the Azuma-Hoeffding
          inequality) yields an improved rate of
          $O(H\sqrt{SAT}+\poly(H,S,A)\log T)$, and the optimal rate can be
          shown to be $\Theta(\sqrt{HSAT})$; this is achieved through a more careful choice
          for the bonus $b\ind{t}_{h,\delta}$ and a more refined
          analysis. We remark that care should be taken in comparing
          results in the literature, as scaling conventions for the
          individual and cumulative rewards (as in \pref{assm:ucbvi})
          can vary.

          \subsubsection{Analysis for a Single Episode}
          Our aim is to bound the regret
          \[
            \Reg = \sum_{t=1}^{T}\fm(\pimstar) - \fm(\pi\ind{t})
          \]
          for \ucbvi. To do so, we first prove several helper lemmas
          concerning the performance within each episode $t$. In what
          follows, we fix $t$ and drop the superscript $t$.

Given the estimated transitions
$\crl[\big]{\widehat{P}_h(\cdot\mid{}s,a)}_{h,s,a}$, define the
estimated MDP $\Mhat=\crl*{\cS, \cA, \crl{\Phat_h}_{h=1}^{H}, \crl{\Rm_h}_{h=1}^{H},
    d_1}$. The associated Bellman operator is
\begin{align}
	\label{eq:empirical_bellman_op}
	\cT_h^{\sss{\Mhat}} Q(s,a) =  r_{h}(s,a) + \En_{s'\sim{} \widehat{P}_h(\cdot\mid{} s,a)} \max_a Q(s',a)
\end{align}
for $Q:\cS\times\cA\to\reals$. 
Consider the sequence of functions $\Qbar_h:\cS\times\cA\to[0,1], \Vbar_h:\cS\to[0,1]$, for $h=1,\ldots,H+1$, with $\Qbar_{H+1}\equiv 0$ and 
\begin{align}
	\label{eq:planning_in_estimated_model_optimism}
	\Qbar_{h}(s,a)=\left\{ \brk{\cT_h^{\sss{\Mhat}} \Qbar_{h+1}}(s,a) + b_{h,\delta}(s,a) \right\}  \wedge 1, \mathand \Vbar_h(s) = \max_a \Qbar_h(s,a)
\end{align}
for bonus functions $b_{h,\delta}:\cS\times\cA\to \reals$ to be
chosen later.  Henceforth, we follow the usual notation that for functions $f,g$ over the same domain, $f\leq g$ indicates pointwise inequality over the domain.

The first lemma we present shows that as long as the bonuses
$b_{h,\delta}$ are large enough to bound the error between the
estimated transition probabilities and true transition probabilities,
the functions $\Qbar_1,\ldots,\Qbar_H$ constructed above will be optimistic.
\begin{lem}
	\label{lem:ucb_vi_optimism}
	Suppose we have estimates $\crl[\big]{\widehat{P}_h(\cdot\mid{}s,a)}_{h,s,a}$ and a function $b_{h,\delta}:\cS\times\cA\to\reals$ with the property that for all $s\in\S,a\in\cA$,
	\begin{align}
		\abs*{\sum_{s'}\widehat{P}_h(s'\mid{}s,a) \Vmstar_h(s')-\sum_{s'} P\sups{M}_h(s'\mid{}s,a) \Vmstar_h(s')} \leq b_{h,\delta}(s,a).
	\end{align}
Then for all $h\in\brk{H}$, we have
	\begin{align}
		\Qbar_{h} \geq \Qmstar_h, \mathand \Vbar_h \geq \Vmstar_h
	\end{align}
	for $\Qbar_h, \Vbar_h$ defined in \eqref{eq:planning_in_estimated_model_optimism}.
\end{lem}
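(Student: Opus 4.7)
The plan is to prove both inequalities simultaneously by backward induction on $h$, running from $h = H+1$ down to $h = 1$. The base case $h = H+1$ is immediate from the convention $\bar Q_{H+1} \equiv 0 \equiv Q^{\sss{M},\star}_{H+1}$, which also gives $\bar V_{H+1} \equiv V^{\sss{M},\star}_{H+1}$.

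For the inductive step, assume $\bar V_{h+1} \geq V^{\sss{M},\star}_{h+1}$ pointwise. By the Bellman optimality equation (\pref{prop:bellman}),
\[
Q^{\sss{M},\star}_h(s,a) \;=\; r_h(s,a) + \sum_{s'} P^{\sss{M}}_h(s'\mid s,a)\, V^{\sss{M},\star}_{h+1}(s').
\]
Since $V^{\sss{M},\star}_1 \in [0,1]$ and rewards are in $[0,1]$, one can verify that $Q^{\sss{M},\star}_h(s,a) \in [0,1]$, so the clipping at $1$ in the definition of $\bar Q_h$ is harmless: it suffices to show the pre-clipped quantity dominates $Q^{\sss{M},\star}_h$. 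I would then write the key difference as
\[
\bigl[\cT_h^{\sss{\Mhat}} \bar Q_{h+1}\bigr](s,a) + b_{h,\delta}(s,a) - Q^{\sss{M},\star}_h(s,a) \;=\; \mathrm{(A)} + \mathrm{(B)} + b_{h,\delta}(s,a),
\]
where
\[
\mathrm{(A)} \;=\; \sum_{s'} \widehat P_h(s'\mid s,a)\bigl[\bar V_{h+1}(s') - V^{\sss{M},\star}_{h+1}(s')\bigr], \qquad \mathrm{(B)} \;=\; \sum_{s'} \bigl[\widehat P_h(s'\mid s,a) - P^{\sss{M}}_h(s'\mid s,a)\bigr] V^{\sss{M},\star}_{h+1}(s').
\]
Term (A) is nonnegative by the inductive hypothesis (since $\widehat P_h(\cdot\mid s,a)$ is a probability distribution). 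For (B), the hypothesis on $b_{h,\delta}$ gives $\mathrm{(B)} \geq -b_{h,\delta}(s,a)$, so $\mathrm{(B)} + b_{h,\delta}(s,a) \geq 0$. Combining, the pre-clipped expression dominates $Q^{\sss{M},\star}_h(s,a)$, and since $Q^{\sss{M},\star}_h \leq 1$, we get $\bar Q_h \geq Q^{\sss{M},\star}_h$.

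The inequality $\bar V_h \geq V^{\sss{M},\star}_h$ then follows by taking the maximum over $a$ on both sides. There is no real obstacle here; the only delicate bookkeeping is handling the clipping at $1$ (used to preserve the invariant $\bar V_h \in [0,1]$ for the next round of induction), and making sure term (A) is multiplied by a genuine probability distribution rather than a signed measure so that the inductive hypothesis can be applied coordinate-wise.
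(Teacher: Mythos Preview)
Your proof is correct and follows essentially the same backward-induction approach as the paper: both use the inductive hypothesis $\bar V_{h+1} \geq V^{\sss{M},\star}_{h+1}$ to handle the term you call (A), and the bonus assumption to handle (B). The only cosmetic difference is that the paper handles the clipping at $1$ by a two-case split (either $\bar Q_h(s,a)=1$, in which case the inequality is trivial, or not), whereas you argue that the pre-clipped quantity already dominates and then use $Q^{\sss{M},\star}_h \le 1$; these are equivalent.
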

\begin{proof}[\pfref{lem:ucb_vi_optimism}]
	The proof proceeds by backward induction on the statement 
	$$\Vbar_h \geq \Vmstar_h$$ with $h=H+1$ down to $h=1$. We
        start with the base case $h=H+1$, which is trivial because
        $\Vbar_{H+1} = \Vmstar_{H+1} \equiv 0$. Now, assume
        $\Vbar_{h+1}\geq \Vmstar_{h+1}$, and let us prove the induction step. Fix $(s,a)\in \cS\times\cA$. If $\Qbar_{h}(s,a)=1$, then, trivially, $\Qbar_{h}(s,a)\geq \Qmstar_{h}(s,a)$. Otherwise, $\Qbar_{h}(s,a) = \cT_h^{\sss{\Mhat}} \Qbar_{h+1}(s,a) + b_{h,\delta}(s,a)$, and thus
	\begin{align*}
		\Qbar_{h}(s,a) - \Qmstar_h(s,a) &= b_{h,\delta}(s,a) + \En_{s'\sim{} \widehat{P}_h(\cdot\mid{} s,a)} \Vbar_{h+1}(s')  - \En_{s'\sim{} P^{\sss{M}}_h(\cdot\mid{} s,a)} \Vmstar_{h+1}(s') \\
		&\geq b_{h,\delta}(s,a) + \En_{s'\sim{} \widehat{P}_h(\cdot\mid{} s,a)} \Vmstar_{h+1}(s')  - \En_{s'\sim{} P^{\sss{M}}_h(\cdot\mid{} s,a)} \Vmstar_{h+1}(s') \geq 0.
	\end{align*}
	This, in turn, implies that $\Vbar_{h}(s) = \max_{a} \Qbar_h(s,a) \geq \max_{a} \Qmstar_h(s,a) = \Vmstar_h(s)$, concluding the induction step.
\end{proof}

We now analyze the effect of using an estimated model $\Mhat$ for the Bellman operator rather than the true unknown $\cT_h^M$.
\begin{lem}
	\label{lem:ucb_vi_bellman_error}
	Suppose we have estimates $\crl[\big]{\widehat{P}_h(\cdot\mid{}s,a)}_{h,s,a}$ and $b_{h,\delta}'(s,a):\cS\times\cA\to\reals$ with the property that 
	\begin{align}
		\max_{V\in \{0,1\}^{S}}\abs*{\sum_{s'}\widehat{P}_h(s'\mid{}s,a) V(s')-\sum_{s'} P\sups{M}_h(s'\mid{}s,a) V(s')} \leq b_{h,\delta}'(s,a)
	\end{align}
Then the Bellman residual satisfies
	\begin{align}
          \Qbar_{h}-\cT_h^{\sss{M}} \Qbar_{h+1} \leq (b_{h,\delta} + b_{h,\delta}')\wedge{}1.
	\end{align}
	for $\Qbar_h, \Vbar_h$ defined in \eqref{eq:planning_in_estimated_model_optimism}.
\end{lem}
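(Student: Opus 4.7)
The plan is to bound the Bellman residual $\Qbar_h(s,a) - [\cT_h^{\sss{M}}\Qbar_{h+1}](s,a)$ pointwise, in two steps: first rewrite the residual in terms of the model-mismatch $[\cT_h^{\sss{\Mhat}}\Qbar_{h+1} - \cT_h^{\sss{M}}\Qbar_{h+1}]$ plus the bonus, and then use the assumption on $b'_{h,\delta}$ to control the model-mismatch term. The $\wedge 1$ on the right-hand side is essentially for free, since by construction $\Qbar_h\leq 1$ and $\cT_h^{\sss{M}}\Qbar_{h+1}\geq 0$ (rewards are nonnegative), so the residual is bounded by $1$ unconditionally.

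For the main bound, fix $(s,a)$ and consider two cases depending on whether the clipping in \eqref{eq:planning_in_estimated_model_optimism} is active. If $\Qbar_h(s,a) = [\cT_h^{\sss{\Mhat}}\Qbar_{h+1}](s,a) + b_{h,\delta}(s,a)$ (no clipping), then
\[
\Qbar_h(s,a) - [\cT_h^{\sss{M}}\Qbar_{h+1}](s,a) = \En_{s'\sim\Phat_h(\cdot\mid s,a)}\Vbar_{h+1}(s') - \En_{s'\sim P^{\sss{M}}_h(\cdot\mid s,a)}\Vbar_{h+1}(s') + b_{h,\delta}(s,a).
\]
If the clipping is active, i.e.\ $\Qbar_h(s,a) = 1 \leq [\cT_h^{\sss{\Mhat}}\Qbar_{h+1}](s,a) + b_{h,\delta}(s,a)$, exactly the same upper bound on the residual holds, since we can replace $1$ by the larger quantity. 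So in both cases the residual is at most $b_{h,\delta}(s,a)$ plus the ``simulation'' term involving $\Vbar_{h+1}$.

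The main (but still easy) obstacle is the gap between the hypothesis, which only bounds the transition error against $V\in\{0,1\}^S$, and the quantity we need to bound, which involves $\Vbar_{h+1}\in[0,1]^S$ (since $\Qbar_{h+1}$ is clipped to $[0,1]$ and $\Vbar_{h+1} = \max_a \Qbar_{h+1}$). To bridge this, I will use a layer-cake representation: for any $V\in[0,1]^S$,
\[
V(s') = \int_0^1 \indic{V(s')\geq t}\,dt,
\]
which lets us write
\[
\abs*{\sum_{s'}(\Phat_h-P^{\sss{M}}_h)(s'\mid s,a)\,V(s')}
\leq \int_0^1 \abs*{\sum_{s'}(\Phat_h-P^{\sss{M}}_h)(s'\mid s,a)\indic{V(s')\geq t}}\,dt \leq b'_{h,\delta}(s,a),
\]
applying the assumption to the indicator functions $\indic{V\geq t}\in\{0,1\}^S$ for each $t$. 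Applying this with $V=\Vbar_{h+1}$ and combining with the residual bound from the previous paragraph yields
\[
\Qbar_h(s,a) - [\cT_h^{\sss{M}}\Qbar_{h+1}](s,a) \leq b_{h,\delta}(s,a) + b'_{h,\delta}(s,a),
\]
and combining with the trivial bound $\Qbar_h - \cT_h^{\sss{M}}\Qbar_{h+1}\leq 1$ gives the claim.
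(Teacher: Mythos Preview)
Your proof is correct and follows the same overall structure as the paper's: bound the residual by $b_{h,\delta}$ plus the model-mismatch term $(\cT_h^{\sss{\Mhat}}-\cT_h^{\sss{M}})\Qbar_{h+1}$, then control the latter. The only difference is in how you pass from the hypothesis on $V\in\{0,1\}^S$ to the needed bound for $\Vbar_{h+1}\in[0,1]^S$. You use a layer-cake decomposition $V(s')=\int_0^1\indic{V(s')\geq t}\,dt$ and the triangle inequality for integrals; the paper instead observes that $V\mapsto\sum_{s'}(\Phat_h-P^{\sss{M}}_h)(s'\mid s,a)V(s')$ is linear in $V$, so its maximum absolute value over the cube $[0,1]^S$ is attained at a vertex, i.e.\ in $\{0,1\}^S$. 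Both arguments are standard and equally short; the vertex argument is perhaps more direct here, while your layer-cake approach generalizes more readily to settings where the target function is not linear in $V$.
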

\begin{proof}[\pfref{lem:ucb_vi_bellman_error}]
  That $\Qbar_{h}-\cT_h^{\sss{M}} \Qbar_{h+1}\leq{}1$ is immediate. To
  prove the main result, observe that
\begin{align}
  \Qbar_{h}-\cT_h^{\sss{M}} \Qbar_{h+1}=\left\{ \cT_h^{\sss{\Mhat}} \Qbar_{h+1} + b_{h,\delta} \right\}  \wedge 1 - \cT_h^{\sss{M}} \Qbar_{h+1} \leq (\cT_h^{\sss{\Mhat}}-\cT_h^{\sss{M}}) \Qbar_{h+1} + b_{{h,\delta}}
\end{align}
For any $Q\in \cS\times \cA\to [0,1]$, 
\begin{align}
	(\cT_h^{\sss{\Mhat}}-\cT_h^{\sss{M}}) Q (s,a) &= \En_{s'\sim{} \widehat{P}_h(\cdot\mid{} s,a)} \max_a Q(s',a) - \En_{s'\sim{} P^{\sss{M}}_h(\cdot\mid{} s,a)} \max_a Q(s',a) \\
	&\leq \max_{V\in [0,1]^{S}} \abs{ \En_{s'\sim{} \widehat{P}_h(\cdot\mid{} s,a)} V(s') - \En_{s'\sim{} P^{\sss{M}}_h(\cdot\mid{} s,a)} V(s')}.
\end{align}
Since the maximum is achieved at a vertex of $[0,1]^S$, the statement follows.
\end{proof}

\subsubsection{Regret Analysis}
We now bring back the time index $t$ and show that the estimated
transition probabilities in \ucbvi satisfy conditions of
\pref{lem:ucb_vi_optimism} and \pref{lem:ucb_vi_bellman_error},
ensuring that the functions $\Qbar\ind{t}_1,\ldots,\Qbar\ind{t}_H$ are optimistic.
\begin{lem}
  \label{lem:ucbvi_conc}
	Let
        $\crl[\big]{\widehat{P}\sups{t}_{h}}_{h\in[H],t\in[T]}$ be
        defined as in \eqref{eq:def_Pt_ucb-vi}. Then with probability
        at least $1-\delta$, the functions
	$$b\sups{t}_{h,\delta}(s,a) = 2\sqrt{\frac{\log(2SAHT/\delta)}{n_{h}\sups{t}(s,a)}},\mathand
b\sups{'t}_{h,\delta}(s,a) = 8\sqrt{\frac{S\log(2SAHT/\delta)}{n_{h}\sups{t}(s,a)}}$$
	satisfy the assumptions of \pref{lem:ucb_vi_optimism} and
        \pref{lem:ucb_vi_bellman_error}, respectively, for all
        $s\in\cS$, $a\in\cA$, $h\in[H]$, and $t\in[T]$ simultaneously.
\end{lem}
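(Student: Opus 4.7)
My plan is to derive both bounds from the same martingale concentration inequality—applied first to the fixed function $V^{\star,M}_h$ and then uniformly over all $V\in\{0,1\}^S$—followed in each case by a union bound over $(s,a,h,t)$.

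Fix $(s,a,h,t)$ and let $\tau_1<\cdots<\tau_n$ be the episodes among $\{1,\ldots,t-1\}$ in which the trajectory passes through $(s,a)$ at step $h$, where $n:=n^t_h(s,a)$. By the Markov property, conditional on the history $\filt_i$ available at the moment the successor is sampled in episode $\tau_i$, the state $s^{\tau_i}_{h+1}$ is drawn from $P^{\sss{M}}_h(\cdot\mid s,a)$ independently of $\filt_i$. Hence for any deterministic $V:\cS\to[0,1]$, the increments
\[
Y_i^V := V(s^{\tau_i}_{h+1}) - \sum_{s'}P^{\sss{M}}_h(s'\mid s,a)\,V(s')
\]
form a martingale difference sequence with $|Y_i^V|\leq 1$. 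Moreover $\tfrac{1}{n}\sum_{i=1}^n Y_i^V$ is exactly the quantity we need to control, namely $\sum_{s'}(\widehat{P}^t_h-P^{\sss{M}}_h)(s'\mid s,a)\,V(s')$.

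For the first bound, take $V=V^{\star,M}_h$, which is a deterministic function of the fixed (unknown) model $M$ and is therefore not coupled to the algorithm's randomness. Because $n^t_h(s,a)$ is measurable with respect to the history at the start of episode $t$, conditioning on this history lets me invoke Azuma--Hoeffding (equivalently, the adaptive-stopping variant \pref{lem:hoeffding_adaptive}), yielding $\bigl|\tfrac{1}{n}\sum_i Y_i^{V^{\star,M}_h}\bigr|\leq\sqrt{2\log(2/\delta')/n}$ with probability at least $1-\delta'$. Choosing $\delta'=\delta/(SAHT)$ and union bounding over $(s,a,h,t)$ yields a bound comfortably below $2\sqrt{\log(2SAHT/\delta)/n^t_h(s,a)}$, which is exactly $b^t_{h,\delta}$.

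For the second bound, I would repeat the argument for each fixed $V\in\{0,1\}^S$ (the $[0,1]$ range again gives $|Y_i^V|\leq 1$) and additionally union-bound over the $2^S$ indicator vectors $V$. Setting the per-event failure probability to $\delta/(2^S SAHT)$ produces, uniformly in $V$ and in $(s,a,h,t)$,
\[
\left|\sum_{s'}(\widehat{P}^t_h-P^{\sss{M}}_h)(s'\mid s,a)\,V(s')\right|\leq\sqrt{\frac{2\bigl(\log(2SAHT/\delta)+S\log 2\bigr)}{n^t_h(s,a)}}.
\]
Absorbing the additive $S\log 2$ into $S\log(2SAHT/\delta)$ (valid once $\log(2SAHT/\delta)\geq 1$ and $S\geq 1$) delivers the advertised constant $8$ in $b'^t_{h,\delta}$. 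The only subtle step is the conditional-independence claim for successor states indexed by the stopping times $\tau_i$: even though the algorithm's policies depend on past data, the one-step Markov property still yields that $s^{\tau_i}_{h+1}\sim P^{\sss{M}}_h(\cdot\mid s,a)$ independently of the past given $(s^{\tau_i}_h,a^{\tau_i}_h)=(s,a)$. Once this is in hand, the remainder is standard concentration plus bookkeeping; the exponential factor $2^S$ in the union bound for the second statement is precisely the source of the extra $\sqrt{S}$ that distinguishes $b'$ from $b$.
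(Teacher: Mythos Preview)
The paper leaves this proof as an exercise, so there is no reference argument to compare against. Your approach---reindexing by the visits to $(s,a)$ at layer $h$, using the Markov property to argue that the successor states at those visits are i.i.d.\ draws from $P^{\sss{M}}_h(\cdot\mid s,a)$, applying Hoeffding uniformly over the random sample size via \pref{lem:hoeffding_adaptive}, and then adding a union bound over $\{0,1\}^S$ for the second claim---is correct and is the intended argument; the constants work out with room to spare.

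One small point of exposition: the sentence ``conditioning on this history lets me invoke Azuma--Hoeffding'' is not quite right as written. If you condition on the full history at the start of episode $t$, then $n_h^{t}(s,a)$ is indeed fixed, but so are all of the successors $s^{\tau_1}_{h+1},\ldots,s^{\tau_n}_{h+1}$, so there is nothing left to concentrate. The clean fix is exactly what you parenthetically invoke: the uniform-over-$T'$ statement in \pref{lem:hoeffding_adaptive} (equivalently, a union bound over $n\in\{1,\ldots,T\}$) applied to the i.i.d.\ sequence of successors, which then specializes to the random value $T'=n_h^{t}(s,a)$. With that adjustment the argument is complete.
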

\begin{proof}[\pfref{lem:ucbvi_conc}]
	We leave the proof as an exercise.
\end{proof}

\begin{proof}[\pfref{thm:ucb-vi}]
  Putting everything together, we can now prove
  \pref{thm:ucb-vi}. Under the event in \pref{lem:ucbvi_conc}, the
  functions $\Qbar\ind{t}_1,\ldots,\Qbar\ind{t}_H$ are optimistic,
  which means that the conditions of \pref{lem:reg_decomp_optimistic}
  hold, and the instantaneous regret on round $t$ (conditionally on
  $s_1\sim d_1$) is at most
\[\sum_{h=1}^H \En^{\sss{M},
  \pihat\sups{t}}\brk*{(\Qbar\sups{t}_{h}- \cT_{h}^{\sss{M}}
  \Qbar\sups{t}_{h+1})(s\sups{t}_{h},
  \pihat\sups{t}_{h}(s\sups{t}_{h})) \mid{} s_1=s} \leq
\sum_{h=1}^H \En^{\sss{M},
  \pihat\sups{t}}\brk*{(b_{h, \delta}(s\sups{t}_{h},
\pihat\sups{t}_{h}(s\sups{t}_{h}))+
b_{h,\delta}'(s\sups{t}_{h},
\pihat\sups{t}_{h}(s\sups{t}_{h})))\wedge{}1},
\]
where the second
inequality invokes \cref{lem:ucb_vi_bellman_error}. Summing over
$t=1,\ldots,T$, and applying the Azuma-Hoeffding inequality, we have
that with probability at least $1-\delta$, the regret of \ucbvi is bounded by
\begin{align*}
  &\sum_{t=1}^{T}\sum_{h=1}^H \En^{\sss{M},
  \pihat\sups{t}}\brk*{(b_{h, \delta}(s\sups{t}_{h},
\pihat\sups{t}_{h}(s\sups{t}_{h}))+
b_{h,\delta}'(s\sups{t}_{h},
  \pihat\sups{t}_{h}(s\sups{t}_{h})))\wedge{}1}\\
  &\approxleq
  \sum_{t=1}^{T}\sum_{h=1}^H(b_{h, \delta}(s\sups{t}_{h},
\pihat\sups{t}_{h}(s\sups{t}_{h}))+
b_{h,\delta}'(s\sups{t}_{h},
\pihat\sups{t}_{h}(s\sups{t}_{h})))\wedge{}1 + \sqrt{HT\log(1/\delta)}.
\end{align*}
Using the bonus definition in
\cref{eq:ucbvi_bonus}, the bonus term above is bounded by
\begin{align}
  \sum_{t=1}^T\sum_{h=1}^H  \sqrt{\frac{S\log(2SAHT/\delta)}{n_{h}\sups{t}(s\sups{t}_{h},\pihat\sups{t}_{h}(s\sups{t}_{h}))}}\wedge{}1 \leq \sqrt{S\log(2SAHT/\delta)}\sum_{t=1}^T\sum_{h=1}^H  \frac{1}{\sqrt{n_{h}\sups{t}(s\sups{t}_{h},\pihat\sups{t}_{h}(s\sups{t}_{h}))}}\wedge{}1
\end{align}
The double summation can be handled in the same fashion as
\pref{lem:confidence_width_potential}:
\begin{align*}
  \sum_{t=1}^T\sum_{h=1}^H  \frac{1}{\sqrt{n_{h}\sups{t}(s\sups{t}_{h},\pihat\sups{t}_{h}(s\sups{t}_{h}))}}\wedge{}1 &= \sum_{h=1}^H  \sum_{(s,a)} \sum_{t=1}^T \frac{\indic{(s\sups{t}_{h}, \pihat\sups{t}_{h}(s\sups{t}_{h}))=(s,a)}}{\sqrt{n_{h}\sups{t}(s,a)}}\wedge{}1\\ &\approxleq \sum_{h=1}^H  \sum_{(s,a)} \sqrt{n_{h}\sups{T}(s,a)} \leq H\sqrt{SAT}.
\end{align*}
\end{proof}

\section{General Decision Making} 
\label{sec:general_dm}
\newcommand{\cMgb}{\cM_{\textsf{MAB-G}}}
\newcommand{\cMsn}{\cM_{\textsf{MAB-SN}}}
\renewcommand{\pm}[1][M]{p_{\sss{#1}}}

So far, we have covered three general frameworks for interaction decision making: The
contextual bandit problem, the structured bandit problem, and the episodic reinforcement learning problem; all of
these frameworks generalize the classical multi-armed bandit problem in different
directions. In the context of structured bandits, we introduced a
complexity measure called the \CompText (\CompShort), which gave a generic approach
to algorithm design, and allowed us
to reduce the problem of interactive decision making to that of supervised
online estimation. In this section, we will build on this development on two
fronts: First, we will introduce a unified framework for decision
making, which subsumes all of the frameworks we have covered so far. Then, we will show
that i) the \CompText and its associated meta-algorithm (\etd) extend
to the general
decision making framework, and ii) boundedness of the \CompShort is not just
sufficient, but actually \emph{necessary} for low regret, and
thus constitutes a fundamental limit. As an application of the general
tools we introduce, we will show
how to use the (generalized) \CompText to solve the problem of tabular
reinforcement learning (\cref{sec:dec_tabular}), offering an alternative to the \ucbvi method
we introduced in \cref{sec:mdp}.

\subsection{Setting}

For the remainder of the course, we will focus on a framework called
Decision Making with Structured Observations (\FrameworkShort), which
subsumes all of the decision making frameworks we have encountered so far. The protocol proceeds in $T$
rounds, where for each round $t=1,\ldots,T$:
\begin{enumerate}
\item The \learner selects a \emph{decision} $\act\ind{t}\in\Act$,
  where $\Act$ is the \emph{decision space}.
  \item Nature selects a \emph{reward} $r\ind{t}\in\RewardSpace$ and
  \emph{observation} $\obs\ind{t}\in\ObsSpace$ based on the decision,
  where $\RSpace\subseteq\bbR$ is the \emph{reward space} and $\ObsSpace$ is the \emph{observation space}. The reward and
  observation are then observed by the learner. \looseness=-1
\end{enumerate}
We focus on a stochastic variant of the \FrameworkShort framework.
\begin{assumption}[Stochastic Rewards and Observations]
  \label{asm:stochastic_dmso}
  Rewards and observations are generated independently via
    \begin{align}
        (r\sups{t},o\ind{t}) \sim \Mstar(\cdot\mid\pi\sups{t}),
    \end{align}
    where $\Mstar:\Pi\to\Delta(\cR\times\cO)$ is the underlying \emph{model}.
  \end{assumption}
To facilitate the use of learning and function approximation,
we assume the \learner has access to a \emph{model class} $\cM$ that
contains the model
$\Mstar$.
Depending on the
problem domain, $\cM$ might consist of linear models, neural networks,
random forests, or
other complex function approximators; this generalizes the role of the
reward function class $\cF$ used in contextual/structured bandits. We make the following standard realizability
assumption, which asserts that $\cM$ is flexible enough to express the true model.%
\begin{assumption}[Realizability]
  \label{ass:realizability}
  The model class $\cM$ contains the true model $\Mstar$.
\end{assumption}
For a model $M\in\cM$, let $\Enm{M}{\pi}\brk*{\cdot}$ denote the expectation under
$(r,\obs)\sim{}M(\pi)$.
Further, following the notation in \pref{sec:mdp}, let $$\fm(\pi)\ldef{}\Enm{M}{\pi}\brk*{r}$$ denote the
mean reward function, and let 
$$\pim\ldef{}\argmax_{\act\in\Act}\fm(\act)$$ 
denote the optimal decision with
maximal expected reward. Finally, define
\begin{align}
	\label{eq:def_F_from_M}
	\cFm\ldef{}\crl*{\fm\mid{}M\in\cM}
\end{align} 
as the induced class of mean reward functions. We evaluate the \learner's performance in terms of \emph{regret} to the optimal
decision for $\Mstar$:
\begin{equation}
  \label{eq:regret}
  \RegDM
  \ldef \sum_{t=1}^{T}\En_{\act\ind{t}\sim{}p\ind{t}}\brk*{\fmstar(\pimstar) - \fmstar(\act\ind{t})},
\end{equation}
where $p\ind{t}\in\Delta(\Act)$ is the
learner's distribution over decisions at round $t$. Going forward, we abbreviate $\fstar=\fmstar$ and
$\pistar=\pimstar$,.

The \FrameworkShort framework is general enough to capture most online
decision making problems. Let us first see how it subsumes the
structured bandit and contextual bandit problems.

\begin{example}[Structured bandits]
  \label{ex:bandit}
When there are no observations
(i.e., $\ObsSpace=\crl*{\emptyset}$), the \FrameworkShort framework is equivalent to 
\emph{structured bandits} studied earlier in \pref{sec:structured}. %
Therein, we defined a structured bandit instance by specifying a class
$\cF$ of mean reward functions and a general class of reward
distributions, such as sub-Gaussian or bounded. In the \FrameworkShort
framework, we may equivalently start with a set of models $\cM$ and
let $\cFm$ be the induced class \eqref{eq:def_F_from_M}. By changing
the class $\cF$, this encompasses all of the concrete examples of
structured bandit problems we studied in \cref{sec:structured},
including linear bandits, nonparametric bandits, and concave/convex bandits.
\end{example}

\begin{example}[Contextual bandits]
	The \FrameworkShort framework readily captures contextual
        bandits (\pref{sec:cb}) with stochastic contexts (see
        \pref{asm:stochastic_rewards_CB}). To make this precise, we
        will slightly abuse the notation and think of $\pi\sups{t}$ as
        \emph{functions} mapping the context $x\sups{t}$ to an action in $\Pi=[A]$. To this end, on round $t$, the decision-maker selects a mapping $\pi\sups{t}:\cX\to[A]$ from contexts to actions, and the context $o\sups{t}=x\sups{t}$ is observed at the end of the round. This is equivalent to first observing $x\sups{t}$ and selecting $\pi\sups{t}(x\sups{t})\in[A]$.
	
	Formally, let $\ObsSpace=\cX$ be the space of contexts,
        $\Pi=[A]$ be the set of actions, and $\Pi:\cX\to[A]$ be the
        space of decisions. The distribution $(r,x)\sim M(\pi)$ then
        has the following structure: $x\sim\cD\sups{M}$ and $r\sim
        R\sups{M}(\cdot|x, \pi(x))$ for some context distribution
        $\cD\sups{M}$ and reward distribution $R\sups{M}$. In other
        words, the distribution $\cD\sups{M}$ for the context $x$
        (treated as an observation) is part of the model $M$.

We mention in passing that the \FrameworkShort framework also
naturally extends to the case when contexts are adversarial rather
than i.i.d., as in \pref{sec:structured_contexts}; see \citet{foster2021statistical}.
\end{example}

  \begin{example}[Online reinforcement learning]
  \label{ex:rl}

The online reinforcement learning framework we introduced in
\cref{sec:mdp} immediately falls into the \FrameworkShort framework
by taking $\Act=\PiRNS$, $r\ind{t}=\sum_{h=1}^{H}r_h\ind{t}$, and
$\obs\ind{t}=\tau\ind{t}$. 
While we have only covered tabular reinforcement learning so far, the
literature on online reinforcement learning contains algorithms and sample complexity bounds
for a rich and extensive collection of different MDP structures (e.g.,
\citet{dean2020sample,yang2019sample,jin2020provably,modi2020sample,ayoub2020model,krishnamurthy2016pac,du2019latent,li2009unifying,dong2019provably}).
All
of these settings correspond to specific choices for the model class
$\cM$ in the \FrameworkShort framework, and we
will cover this topic in detail in \pref{sec:rl}.
\end{example}
We adopt the
\FrameworkShort framework because it gives simple, yet unified
approach to describing and understanding what is---at first glance---a
very general and seemingly complicated problem. Other examples that are covered by the \FrameworkShort framework include:
\begin{itemize}
\item Partially Observed Markov Decision Processes (POMDPs)
\item Bandits with graph-structured feedback
\item Partial monitoring$^{\star}$ 
\end{itemize}

\subsection{Refresher: Information-Theoretic Divergences}
To develop algorithms and complexity measures for \generaldm, we need
a way to measure the distance between distributions over abstract
observations (this was not a concern for the structured and
contextual bandit settings, where we only needed to consider the mean
reward function). To do this, we will introduce the notion of the
\emph{Csiszar $f$-divergence}, which generalizes a number of familiar divergences
including the Kullback-Leibler (KL) divergence, total variation distance,
and Hellinger distance. 

Let $\bbP$ and $\bbQ$ be probability distributions over a measurable space
$(\Omega,\filt)$. We say that $\bbP$ is \emph{absolutely continuous}
with respect to $\bbQ$ if for all events $A\in\filt$,
$\bbQ(A)=0\implies\bbP(A)=0$; we denote this by $\bbP\ll\bbQ$. For a convex function $f:(0,\infty)\to\bbR$ with $f(1)=0$, the associated
$f$-divergence for $\bbP$ and $\bbQ$ is
given by
\begin{align}
  \label{eq:fdiv1}
  \Df{\bbP}{\bbQ} \ldef \En_{\bbQ}\brk*{f\prn*{\frac{d\bbP}{d\bbQ}}}
\end{align}
whenever $\bbP\ll\bbQ$. More generally, defining
$p=\frac{d\bbP}{d\nu}$ and $q=\frac{d\bbQ}{d\nu}$ for a common
dominating measure $\nu$, we have
\begin{align}
  \label{eq:fdiv2}
  \Df{\bbP}{\bbQ} \ldef \int_{q>0}qf\prn*{\frac{p}{q}}d\nu +
  \bbP(q=0)\cdot{}f'(\infty), 
\end{align}
where $f'(\infty)\ldef{}\lim_{x\to{}0^{+}}xf(1/x)$.

We will make use of the following $f$-divergences, all of which have
unique properties that make them useful in different contexts.
\begin{itemize}
\item  Choosing $f(t)=\frac{1}{2}\abs{t-1}$ gives the \emph{total
    variation (TV) distance}
  \[
    \Dtv{\bbP}{\bbQ} =
    \frac{1}{2}\int\abs*{\frac{d\bbP}{d\nu}-\frac{d\bbQ}{d\nu}}d\nu,
  \]
  which can also be written as
  \begin{equation*}
  \Dtv{\bbP}{\bbQ}=\sup_{A\in\filt}\abs{\bbP(A)-\bbQ(A)}.
\end{equation*}
\item Choosing $f(t) = (1-\sqrt{t})^2$ gives \emph{squared Hellinger distance}
  \[
    \Dhels{\bbP}{\bbQ}=\int\prn*{\sqrt{\frac{d\bbP}{d\nu}}-\sqrt{\frac{d\bbQ}{d\nu}}}^{2}d\nu.
  \]
\item Choosing $f(t)=t\log(t)$ gives the \emph{Kullback-Leibler divergence}:
  \[
    \Dkl{\bbP}{\bbQ} =\left\{
      \begin{array}{ll}
        \int\log\prn[\big]{
        \frac{d\bbP}{d\bbQ}
        }d\bbP,\quad{}&\bbP\ll\bbQ,\\
        +\infty,\quad&\text{otherwise.}
      \end{array}\right.
  \]
\end{itemize}
Note that for TV distance and Hellinger distance, we use the notation
$D(\cdot,\cdot)$ rather than $D(\cdot\dmid\cdot)$ to emphasize that
the divergence is symmetric. Other standard examples include the Neyman-Pearson
$\chi^2$-divergence. 
\begin{lem}
  \label{lem:divergence_inequality}
  For all distributions $\bbP$ and $\bbQ$,
  \begin{equation}
    \label{eq:divergence_inequality}
    \Dtvs{\bbP}{\bbQ} \leq{} \Dhels{\bbP}{\bbQ}\leq\Dkl{\bbP}{\bbQ}.
  \end{equation}
\end{lem}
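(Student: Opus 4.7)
My plan is to prove the two inequalities separately, each by reducing to an elementary pointwise inequality on the densities with respect to a common dominating measure $\nu$ (with $p = d\bbP/d\nu$ and $q = d\bbQ/d\nu$).

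For the first inequality $\Dtvs{\bbP}{\bbQ} \leq \Dhels{\bbP}{\bbQ}$, I would start from the factorization $|p-q| = |\sqrt{p}-\sqrt{q}|\cdot(\sqrt{p}+\sqrt{q})$ (valid since $p,q\geq 0$), and apply the Cauchy--Schwarz inequality to bound
\[
2\Dtv{\bbP}{\bbQ} = \int |\sqrt{p}-\sqrt{q}|\cdot(\sqrt{p}+\sqrt{q})\, d\nu \leq \sqrt{\int (\sqrt{p}-\sqrt{q})^2 d\nu}\cdot\sqrt{\int(\sqrt{p}+\sqrt{q})^2 d\nu}.
\]
The first factor is $\sqrt{\Dhels{\bbP}{\bbQ}}$ by definition, and the second expands to $\int p\, d\nu + 2\int\sqrt{pq}\,d\nu + \int q\, d\nu \leq 4$ since $\int p\, d\nu = \int q\, d\nu = 1$ and $\int \sqrt{pq}\, d\nu \leq 1$ (by Cauchy--Schwarz applied to $\sqrt{p},\sqrt{q}$). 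Squaring gives the desired bound.

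For the second inequality $\Dhels{\bbP}{\bbQ} \leq \Dkl{\bbP}{\bbQ}$, I would first dispose of the case $\bbP \not\ll \bbQ$, where $\Dkl{\bbP}{\bbQ} = +\infty$ makes the bound trivial. When $\bbP \ll \bbQ$, I would rewrite both divergences as expectations under $\bbP$: setting $r = q/p$ on $\{p > 0\}$, one checks that $\Dhels{\bbP}{\bbQ} = 2 - 2\En_{\bbP}[\sqrt{r}]$ (because $\int\sqrt{pq}\,d\nu = \En_{\bbP}[\sqrt{q/p}]$) and $\Dkl{\bbP}{\bbQ} = \En_{\bbP}[-\log r]$. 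Thus it suffices to show $\En_{\bbP}[-\log r + 2\sqrt{r} - 2] \geq 0$, which follows pointwise from the elementary inequality $-\log r + 2\sqrt{r} - 2 \geq 0$ for $r > 0$. Substituting $u = \sqrt{r}$, this reduces to $u - \log u \geq 1$, which holds for all $u > 0$ (minimum at $u=1$) by standard calculus.

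The main obstacle is cosmetic rather than substantive: being careful about densities with respect to a common dominating measure, and handling the possibility that $p$ or $q$ vanishes (so that ratios and logarithms are undefined). Once one restricts attention to $\{p > 0\} \cap \{q > 0\}$ for the Hellinger computation and to $\{p > 0\}$ for the KL computation (using that $\Dkl{\bbP}{\bbQ} = +\infty$ when $\bbP(\{q = 0\}) > 0$), the rest of the argument is just plugging in the elementary inequalities described above. No new machinery beyond Cauchy--Schwarz and the convexity-based bound $u - \log u \geq 1$ is required.
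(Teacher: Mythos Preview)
The paper does not actually provide a proof of this lemma; it is listed as an exercise in Section~6.8. Your proposal is correct and follows the standard approach: Cauchy--Schwarz on the factorization $|p-q|=|\sqrt{p}-\sqrt{q}|(\sqrt{p}+\sqrt{q})$ for the first inequality, and the pointwise bound $u-\log u\geq 1$ (equivalently $-\log r+2\sqrt r-2\geq 0$) for the second.
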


It is known that $\Dtv{\bbP}{\bbQ} = 1$ if and only if
$\Dhels{\bbP}{\bbQ}=2$, and $\Dtv{\bbP}{\bbQ} = 0$ if and only if
$\Dhels{\bbP}{\bbQ}=0$ (more generally, $\Dhels{\bbP}{\bbQ}\leq{}2\Dtv{\bbP}{\bbQ}$). Moreover, they induce same topology, i.e. a
sequence converges in one distance if and only if it converges the
other. KL divergence cannot be bounded by TV distance or Hellinger
distance in general, but the following lemma shows that it is possible
to relate these quantities if the density ratios under consideration
are bounded.
  \begin{lem}
    \label{lem:kl_hellinger}
    Let $\bbP$ and $\bbQ$ be probability distributions over a measurable space
    $(\Omega,\filt)$. If
    $\sup_{F\in\filt}\frac{\bbP(F)}{\bbQ(F)}\leq{}V$, then
    \begin{equation}
      \label{eq:kl_hellinger}
      \Dkl{\bbP}{\bbQ}\leq{}(2+\log(V))\Dhels{\bbP}{\bbQ}.
    \end{equation}
  \end{lem}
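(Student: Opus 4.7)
The plan is to reduce the claim to a pointwise inequality between two convex functions of the density ratio. First, I observe that the hypothesis $\sup_{F \in \filt} \bbP(F)/\bbQ(F) \leq V$ forces $\bbP \ll \bbQ$ and $d\bbP/d\bbQ \leq V$ $\bbQ$-almost surely (otherwise there would exist an event where the ratio exceeds $V$). Fixing a common dominating measure $\nu$ with densities $p$ and $q$, the pointwise ratio $x = p/q$ then lies in $[0,V]$ $\bbQ$-a.s. Using $\int (p - q)\,d\nu = 0$, I would rewrite both divergences in the ``centered'' form
\[
\Dkl{\bbP}{\bbQ} = \int q\,\phi(p/q)\,d\nu, \qquad \Dhels{\bbP}{\bbQ} = \int q\,\psi(p/q)\,d\nu,
\]
where $\phi(x) \ldef x\log x - x + 1$ and $\psi(x) \ldef (\sqrt{x} - 1)^2$. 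The lemma then reduces to establishing the pointwise bound
\[
\phi(x) \leq (2 + \log V)\,\psi(x) \qquad \text{for all } x \in [0, V],
\]
and integrating against $q$.

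To establish the pointwise bound, I would substitute $y = \sqrt{x}$, $y \in [0, \sqrt{V}]$, and consider the gap
\[
g(y) \ldef (2 + \log V)(y - 1)^2 - \bigl(2 y^2 \log y - y^2 + 1\bigr).
\]
A direct computation gives $g(1) = 0$, $g'(1) = 0$, and
\[
g''(y) = 2\log V - 4\log y,
\]
which is nonnegative exactly on $[0, \sqrt{V}]$. Hence $g$ is convex on this interval with a stationary point at $y = 1$, so $g \geq 0$ throughout, yielding the required pointwise inequality. One should also verify the boundary $y = 0$, where $g(0) = (2 + \log V) - 1 \geq 0$ since $V \geq 1$ (which follows from $\bbP(\Omega)/\bbQ(\Omega) = 1$), and this is consistent.

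The only nontrivial step is the pointwise inequality; everything else is bookkeeping with $f$-divergences. The constant $2 + \log V$ is essentially dictated by two scales: the curvature ratio $\phi''(1)/\psi''(1) = 2$ at $x = 1$ (accounting for the ``$2$''), and the asymptotic behavior $\phi(x)/\psi(x) \sim \log x$ as $x \to \infty$ (accounting for the ``$\log V$''). The convexity calculation shows that this natural guess is already tight enough to close the argument uniformly on $[0, V]$, so there is no need for a finer case split between the regions $x \leq 1$ and $x \geq 1$.
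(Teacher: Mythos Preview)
Your argument is correct. The paper states this lemma without proof, so there is no paper proof to compare against. Your reduction to the pointwise inequality $\phi(x)\leq (2+\log V)\,\psi(x)$ on $[0,V]$, with $\phi(x)=x\log x-x+1$ and $\psi(x)=(\sqrt{x}-1)^2$, is the natural approach, and the convexity computation after the substitution $y=\sqrt{x}$ checks out: $g(1)=g'(1)=0$ and $g''(y)=2\log V-4\log y\geq 0$ on $(0,\sqrt{V}]$, so $y=1$ is the global minimum of $g$ on that interval. One small correction: you wrote $g(0)=(2+\log V)-1$, which is $1+\log V$, not just $\geq 0$ because $V\geq 1$; but the conclusion is the same.
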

  Other properties we will use include: 
\begin{itemize}
    \item Boundedness of TV (by $1$) and Hellinger (by $2$).
    \item Triangle inequality for TV and Hellinger distance. 
    \item The \emph{data-processing inequality}, which is satisfied by all $f$-divergences.
    \item \emph{Chain rule} and subadditivity properties for KL and
      Hellinger divergence (see \pref{lem:kl_chain_rule}). 
    \item A variational representation for TV distance: 
	\begin{align}
		\label{eq:var_repr_tv}
		\Dtv{\bbP}{\bbQ}=\sup_{g:\Omega\to\brk{0,1}}\abs*{\En_{\bbP}\brk{g} - \En_{\bbQ}\brk{g}}
	\end{align}
      \end{itemize}
      See \citet{polyanskiy2020lecture} for further
background.

\subsection{The \CompText for General Decision Making}
\label{sec:dec_general}
Developing algorithms for the general decision making framework poses
a number of additional challenges compared to
the basic bandit frameworks we have studied so far. The problem of
understanding how
to optimally explore and make decisions for a given model class $\cM$
is deeply connected to the problem of understanding the optimal
statistical complexity (i.e., minimax regret) for $\cM$. Any notion of
problem complexity needs to capture both i) simple problems like
the multi-armed bandit, where the mean rewards serve as a sufficient
statistic, and ii) problems with rich, structured feedback (e.g., reinforcement learning), where
observations, or even structure in the noise itself, can provide non-trivial information about the underlying problem
instance. In spite of these apparent difficulties, we will show that by incorporating an
appropriate information-theoretic divergence, we can use the \CompText
to address these challenges, in a similar fashion to \cref{sec:structured}.

For a model class $\cM$, reference model $\Mhat\in\cM$, and scale
parameter $\gamma>0$, the \CompText for \generaldm
\citep{foster2021statistical,foster2023tight} is
defined via
    \begin{equation}
  \label{eq:dec}
  \comp(\cM,\Mhat) =
  \inf_{p\in\Delta(\Act)}\sup_{M\in\cM}\En_{\act\sim{}p}\biggl[\hspace{1pt}\underbrace{\fm(\pim)-\fm(\pi)}_{\text{regret
    of decision}}
    -\gamma\cdot\hspace{-2pt}\underbrace{\DhelsX{\big}{M(\act)}{\Mhat(\act)}}_{\text{information
        gain for obs.}}\hspace{-2pt}
    \biggr].
  \end{equation}
We further define
\begin{align}
\comp(\cM)=\sup_{\Mhat\in\conv(\cM)}\comp(\cM,\Mhat).\label{eq:dec_max}
\end{align}
The \CompShort in \pref{eq:dec} should look familiar to the definition
we used for structured bandits in \pref{sec:structured} (Eq. \eqref{eq:dec_structured}). The main difference is that instead of
being defined over a class $\cF$ of reward functions,
the general \CompShort is defined over the class of \emph{models} $\cM$, and the
notion of estimation error/information gain has changed to account for
this. In particular, rather than measuring information gain via the
distance between mean reward functions, we now consider the
information gain
\[
\En_{\pi\sim{}p}\brk*{\Dhels{M(\pi)}{\Mhat(\pi)}},
\]
which measures the distance between the \emph{distributions} over rewards
and observations under the models $M$ and $\Mhat$ (for the learner's
decision $\pi$). This is a stronger notion of distance since i) it
incorporates observations (e.g., trajectories for reinforcement
learning), and ii) even for bandit problems, we consider distance
between distributions as opposed to distance between means; the latter
feature means that this notion of information gain can capture fine-grained
properties of the models under consideration, such as noise in the
reward distribution.

\subsubsection{Basic Examples}
To build intuition as to how the general \CompText
adapts to the structure of the model class $\cM$, let us review a few examples---some familiar, and some new.

\begin{example}[Multi-armed bandit with Gaussian rewards]
  \label{ex:bandit_gaussian}
  Let $\Pi=[A]$, $\RewardSpace=\reals$, $\ObsSpace=\{\emptyset\}$. We define
  \[
    \cMgb = \crl*{M: M(\pi) = \cN(f(\pi), 1), f:\Pi\to[0,1]}.
  \]
  We claim that 
  \begin{equation}
    \label{eq:dec_mab_g}
    \comp(\cMgb) \propto \frac{A}{\gamma}.
  \end{equation}
  To prove this, consider the case where $\Mhat\in\cM$ for
  simplicity. Recall that we have previously shown that this
  behavior holds for the squared error version of the \CompShort
  defined in \eqref{eq:dec_structured}. Thus, it is sufficient to argue that the squared Hellinger divergence for Gaussian distributions reduces to square difference between the means:
  $$\Dhels{M(\act)}{\Mhat(\act)} \propto (f\sups{M}(\pi) - f\sups{\Mhat}(\pi))^2.$$
  The claim will then follow from \pref{prop:igw_exact}. To prove
  this, first note that
  \begin{align}
  \Dhels{M(\act)}{\Mhat(\act)} \leq \Dkl{M(\act)}{\Mhat(\act)} =
    \frac{1}{2}(f\sups{M}(\pi) - f\sups{\Mhat}(\pi))^2.\label{eq:hellinger_gaussian_ub}
  \end{align}
  In the other direction, one can directly compute
  $$\Dhels{M(\act)}{\Mhat(\act)} = 1 - \exp\crl*{-\frac{1}{8}(f\sups{M}(\pi) - f\sups{\Mhat}(\pi))^2}$$
  and using that $1-\exp\{-x\}\geq (1-e^{-1})x$ for $x\in[0,1]$, we establish
  $$\Dhels{M(\act)}{\Mhat(\act)} \geq c\cdot(f\sups{M}(\pi) - f\sups{\Mhat}(\pi))^2$$
  for $c=\frac{1-1/e}{8}$.
\end{example}
In fact, one can show that the general DEC in \cref{eq:dec} coincides
with the basic squared error version from 
\pref{sec:structured} for general structured bandit problems, not just
multi-armed bandits; see \cref{prop:dec_square}.

Let us next consider a twist on the bandit problem that is more
information-theoretic in nature, and highlights the need to work with
information-theoretic divergences if we want to handle general decision making problems.
\begin{example}[Bandits with structured noise]
  \label{ex:bandit_structured_noise}
   Let $\Pi=[A]$, $\RewardSpace=\reals$, $\ObsSpace=\{\emptyset\}$. We define
  \[
    \cMsn = \crl*{M_1,\ldots,M_A}\cup\crl*{\Mhat}
  \]
  where $M_i(\pi)\ldef{}\cN(\nicefrac{1}{2}, 1)$ for $\pi\neq i$ and
  $M_i(\pi)\ldef{}\Ber(\nicefrac{3}{4})$ for $\pi=i$; we further
  define $\Mhat(\pi)\ldef{}\cN(\nicefrac{1}{2}, 1)$ for all
  $\pi\in\Pi$. Before proceeding with the calculations, observe that
  we can solve the general decision making problem when the underlying model
  is $\Mstar\in\cM$ with a simple algorithm. It is sufficient to select every action in $[A]$ only once:
  all suboptimal actions have Bernoulli rewards and give
  $r\in\crl{0,1}$ almost surely, while the optimal action has Gaussian
  rewards, and gives $r\notin\crl{0,1}$ almost surely. Thus, if we
  select an action and observe a reward $r\notin\crl{0,1}$, we know
  that we have identified the optimal action.
  
  The valuable information contained in the reward distribution is reflected in
  the Hellinger divergence, which attains its maximum value when
  comparing a continuous distribution to a discrete one:
  $$\Dhels{M_i(\act)}{\Mhat(\act)} = 2\indic{\act=i}.$$ To use this
  property to
  derive the upper bound on $\comp(\cMsn,\Mhat)$, first note that the maximum over $M$ in the definition of $\comp(\cMsn,\Mhat)$ is not attained at $M=\Mhat$, since in that case both the divergence and regret terms are zero, irrespective of $p$. Now, take $p=\unif[A]$. Then for any $M\in\{M_1,\ldots,M_A\}$, 
  $$\En_{\pi\sim{}p} \brk*{\fm(\pim)-\fm(\pi)} = (1-1/A)(\nicefrac{3}{4}-\nicefrac{1}{2}),$$
  and 
  $$\comp(\cM,\Mhat)\lesssim (1-1/A)(\nicefrac{3}{4}-\nicefrac{1}{2}) - \gamma \frac{2}{A} \lesssim \indic{\gamma\leq{}A/4}.$$  This leads to an upper bound
  \begin{equation}
    \label{eq:dec_mab_sn}
    \comp(\cMsn,\Mhat) \lesssim \indic{\gamma\leq{}A/4}
  \end{equation}
  which can also be shown to be tight.
\end{example}
  
\begin{example}[Bandits with Full Information]
  \label{ex:full_info_lower}
  Consider a ``full-information'' learning setting. We have $\Act=\brk{A}$ and $\cR=\brk{0,1}$, and for a given decision $\act$ we
  observe a reward $r$ as in the standard multi-armed bandit, but also receive an observation
  $o = (r(\pi'))_{\pi'\in\brk{A}}$ consisting of (counterfactual) rewards for \emph{every} action.

  For a given model $M$, let $\Mr(\pi)$ denote the distribution over
  the reward $r$ for $\pi$, and let $\Mo(\pi)$ denote the distribution of $o$. Then for any decision $\pi$, since all rewards are observed,
the data processing inequality implies that for all $M,\Mhat\in\cM$
and $\pi'\in\Pi$,
  \begin{align}
    \label{eq:hellinger_full_info}
    \Dhels{M(\pi)}{\Mhat(\pi)}&\geq{}\Dhels{\Mo(\pi)}{\Mhato(\pi)}\\
    &= \Dhels{\Mo(\pi')}{\Mhato(\pi')} \geq \Dhels{\Mr(\pi')}{\Mhatr(\pi')}.
  \end{align}
Using this property, we will show that for any $\Mhat\in\cM$,

\begin{align}
  \comp(\cM,\Mhat)\leq\frac{1}{\gamma}.\label{eq:dec_full_info}
\end{align}
Comparing to
  the finite-armed bandit, we see that the \CompShort for this example is
  independent of $A$, which reflects the extra information contained
  in the observation $o$.

  To prove \cref{eq:dec_full_info}, for a given model $\Mhat\in\cM$ we
  choose $p=\mathbb{I}_{\pimhat}$ (i.e. the decision maker selects $\pimhat$ deterministically), and bound
  $\En_{\act\sim{}p}\brk*{\fm(\pim)-\fm(\act)}$ by
  \begin{align*}
    \fm(\pim)-\fm(\pimhat)
    & \leq{}     \fm(\pim)-\fm(\pimhat) +     \fmhat(\pimhat)-\fmhat(\pim)\\
    &\leq{}
    2\cdot{}\max_{\act\in\crl{\pim,\pimhat}}\abs{\fm(\act)-\fmhat(\act)} \\
    &\leq{}  2\cdot{}\max_{\act\in\crl{\pim,\pimhat}}\Dhel{\Mr(\act)}{\Mhatr(\act)}.
  \end{align*}
  We then use the AM-GM inequality, which implies that for any $\gamma>0$,
  \begin{align*}
    \max_{\act\in\crl{\pim,\pimhat}}\Dhels{\Mr(\act)}{\Mhatr(\act)}
    &\approxleq{}
      \gamma\cdot{}\max_{\act\in\crl{\pim,\pimhat}}\Dhels{\Mr(\act)}{\Mhatr(\act)}
    + \frac{1}{\gamma} \\
    &\leq{}
    \gamma\cdot{}\Dhels{M(\pimhat)}{\Mhat(\pimhat)}
    + \frac{1}{\gamma},
  \end{align*}
  where the final inequality uses \pref{eq:hellinger_full_info}. This
  certifies that for all $M\in\cM$, the choice for $p$ above
  satisfies \[\En_{\act\sim{}p}\brk*{\fm(\pim)-\fm(\pi)-\gamma\cdot\Dhels{M(\act)}{\Mhat(\act)}}\approxleq{}\frac{1}{\gamma},\]
  so we have $\comp(\cM,\Mhat)\approxleq{}\frac{1}{\gamma}$.
\end{example}  
In what follows, we will show that the different behavior for the
\CompShort for these examples reflects the fact that the optimal
regret is fundamentally different.

\subsection{\etd Algorithm for General Decision Making}
\label{sec:algo_gen_dm}

\begin{whiteblueframe}
	\begin{algorithmic}
    \label{alg:etd}
    \State \textsf{Estimation to Decision-Making (\etd) for General Decision Making}
       \State \textbf{parameters}: Exploration parameter $\gamma>0$.
  \For{$t=1, 2, \cdots, T$}
  \State Obtain $\Mhat\ind{t}$ from online estimation oracle
  with $(\pi\ind{1},r\ind{1},o\ind{1}),\ldots,(\pi\ind{t-1},r\ind{t-1},o\ind{t-1})$.
  \State Compute \hfill\algcommentlight{Minimizer
  for $\comp(\cM,\Mhat\ind{t})$.}
  \[
    p\ind{t}=\argmin_{p\in\Delta(\Act)}\sup_{M\in\cM}
\En_{\act\sim{}p}\biggl[\fm(\pim)-\fm(\pi)
    -\gamma\cdot\DhelsX{\big}{M(\act)}{\Mhat\ind{t}(\act)}
    \biggr].
\]
\State{}Sample decision $\act\ind{t}\sim{}p\ind{t}$ and update estimation
algorithm with $(\act\ind{t},r\ind{t}, \obs\ind{t})$.
\EndFor
\end{algorithmic}
\end{whiteblueframe}

\emph{\etdtext} (\etd), the meta-algorithm based on the \CompShort that we gave for structured
bandits in \pref{sec:structured}, readily extends to \generaldm \citep{foster2021statistical,foster2023tight}. The general version
of the meta-algorithm is given above. Compared to structured bandits, the main difference is
that rather than trying to estimate the reward function $\fstar$, we
now estimate the underlying model $\Mstar$. To do so, we appeal once
again to the notion of an \emph{online estimation oracle}, but this
time for \emph{model estimation}.

At each timestep $t$, the algorithm calls invokes an online estimation oracle to
obtain an estimate $\Mhat\ind{t}$ for $\Mstar$ using the data
$\hist\ind{t-1}=(\pi\ind{1},r\ind{1},o\ind{1}),\ldots,(\pi\ind{t-1},r\ind{t-1},o\ind{t-1})$ observed so
far.
Using this estimate, \etd proceeds by computing
the distribution $p\ind{t}$ that achieves the value
$\comp(\cM,\Mhat\ind{t})$ for the \CompText. That is, we set
\begin{equation}
  \label{eq:algorithm_argmin}
      p\ind{t}=\argmin_{p\in\Delta(\Act)}\sup_{M\in\cM}
\En_{\act\sim{}p}\biggl[\fm(\pim)-\fm(\pi)
    -\gamma\cdot\DhelsX{\big}{M(\act)}{\Mhat\ind{t}(\act)}
    \biggr].
  \end{equation}
  \etd then samples the decision $\act\ind{t}$ from this
  distribution and moves on to the next round. 

Like structured bandits, one can show that by running \etdtext in the \generaldm setting, the regret for
  decision making is bounded in terms of the DEC and a notion of
  estimation error for the estimation oracle. The main difference is
  that for \generaldm, the notion of estimation error we need to
  control is the sum of \emph{Hellinger distances} between the
  estimates from the supervised estimation oracle $\Mstar$, which we
  define via
  \begin{equation}
    \label{eq:hellinger_error}
    \EstHel \ldef{} \sum_{t=1}^{T}\En_{\act\ind{t}\sim{}p\ind{t}}\brk*{\DhelsX{\Big}{\Mstar(\act\ind{t})}{\Mhat\ind{t}(\act\ind{t})}}.
  \end{equation}
  With this definition, we can show that \etd enjoys the following bound on regret, analogous to \pref{prop:dec_bandit}.
\begin{prop}[\citet{foster2021statistical}]
\label{prop:upper_main}
\etd with exploration parameter $\gamma>0$ guarantees
that
\begin{equation}
  \label{eq:upper_main}
\RegDM \leq{}
  \sup_{\Mhat\in\cMhat}\comp(\cM,\Mhat)\cdot{}T + \gamma\cdot\EstHel,
\end{equation}
almost surely, where $\cMhat$ is any set such that
$\Mhat\ind{t}\in\cMhat$ for all $t\in[T]$.
\end{prop}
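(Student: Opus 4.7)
The proof will mirror the argument for the structured bandit analogue (Proposition \ref{prop:dec_bandit}) almost line-for-line, with the main change being that information gain is now measured via squared Hellinger distance between distributions over $(r,o)$ rather than squared error between mean reward functions. The plan is to introduce the Hellinger information gain into the regret decomposition as an additive correction, control each per-round term by the DEC via the minimax design of $p\ind{t}$, and then telescope.

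Concretely, I would start by writing the regret as
\[
\RegDM = \sum_{t=1}^T \En_{\pi\ind{t}\sim p\ind{t}}\brk*{\fstar(\pistar) - \fstar(\pi\ind{t})},
\]
and then add and subtract $\gamma\cdot\En_{\pi\ind{t}\sim p\ind{t}}\brk[\big]{\Dhels{\Mstar(\pi\ind{t})}{\Mhat\ind{t}(\pi\ind{t})}}$ inside each summand, so that the sum of the added negative copies aggregates exactly to $\gamma\cdot\EstHel$ by the definition in \eqref{eq:hellinger_error}. This reduces the task to bounding, for each $t$, the per-round quantity
\[
\En_{\pi\ind{t}\sim p\ind{t}}\brk*{\fstar(\pistar) - \fstar(\pi\ind{t}) - \gamma\cdot\Dhels{\Mstar(\pi\ind{t})}{\Mhat\ind{t}(\pi\ind{t})}}.
\]

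The key step is then to observe that, by \pref{ass:realizability}, $\Mstar\in\cM$, so this expression is upper bounded by the supremum of the same functional over all $M\in\cM$. Since $p\ind{t}$ is chosen in \eqref{eq:algorithm_argmin} as the exact minimizer for the minimax problem defining $\comp(\cM,\Mhat\ind{t})$, the resulting $\sup_M$ equals $\comp(\cM,\Mhat\ind{t})$ by definition. Summing across $t=1,\ldots,T$ and using $\Mhat\ind{t}\in\cMhat$ gives $\sum_{t=1}^T \comp(\cM,\Mhat\ind{t}) \leq T\cdot\sup_{\Mhat\in\cMhat}\comp(\cM,\Mhat)$, which combined with the $\gamma\cdot\EstHel$ term yields exactly \eqref{eq:upper_main}.

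I do not anticipate any real obstacle: the argument is purely algebraic and uses only $(i)$ realizability, $(ii)$ the definition of $p\ind{t}$ as the minimizer in \eqref{eq:algorithm_argmin}, and $(iii)$ the definition of $\EstHel$. In particular, the bound holds almost surely (not just in expectation), because the decomposition is pathwise: $p\ind{t}$ is chosen as a function of $\hist\ind{t-1}$ and $\Mhat\ind{t}$, and the DEC bound on each summand is deterministic given $\Mhat\ind{t}$; no concentration, no union bound, and no restriction on how $\Mhat\ind{t}$ is produced are needed. The only mild care required is notational, namely to keep track of the conditioning when writing $\En_{\pi\ind{t}\sim p\ind{t}}$ so that the realizability step is clearly valid before passing to the supremum.
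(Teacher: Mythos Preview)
Your proposal is correct and matches the paper's proof essentially line-for-line: add and subtract $\gamma\cdot\En_{\pi\ind{t}\sim p\ind{t}}\brk[\big]{\Dhels{\Mstar(\pi\ind{t})}{\Mhat\ind{t}(\pi\ind{t})}}$, invoke realizability to pass to the $\sup_{M\in\cM}$, use the minimax definition of $p\ind{t}$ to identify the per-round term as $\comp(\cM,\Mhat\ind{t})$, and sum over $t$. Your remarks about the argument being pathwise (hence almost-sure) are also on point.
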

Note that we can optimize over the parameter $\gamma$ in the result
above, which yields
\[
\RegDM \leq{}
\inf_{\gamma>0}\crl[\bigg]{\sup_{\Mhat\in\cMhat}\comp(\cM,\Mhat)\cdot{}T
  + \gamma\cdot\EstHel}
\leq 
2\cdot\inf_{\gamma>0}\max\crl[\bigg]{\sup_{\Mhat\in\cMhat}\comp(\cM,\Mhat)\cdot{}T,
  \gamma\cdot\EstHel}.
\]
We will show in the sequel that for any finite class $\cM$, the
averaged exponential weights algorithm with the logarithmic loss
achieves $\EstHel\approxleq\log(\abs{\cM}/\delta)$ with probability at
least $1-\delta$. For this algorithm, and most others we will
consider, one can take $\cMhat=\conv(\cM)$. In fact, one can show (via an analogue of 
\pref{prop:dec_unconstrained}) that for any $\Mhat$, even if
$\Mhat\notin\conv(\cM)$, we have
$\comp(\cM,\Mhat)\leq\sup_{\Mhat\in\conv(\cM)}\comp[c\gamma](\cM,\Mhat)\leq\comp[c\gamma](\cM)$
for any absolute constant $c>0$. This means we can restrict our
attention to the convex hull without loss of generality. Putting these
facts together, we see that for any finite class, it is possible to
achieve
\begin{align}
  \label{eq:etd_finite}
\RegDM \leq{}
  \comp(\cM)\cdot{}T + \gamma\cdot\log(\abs{\cM}/\delta)
\end{align}
with probability at least $1-\delta$.

\begin{proof}[\pfref{prop:upper_main}]
  We write
\begin{align*}
  \RegDM &=
           \sum_{t=1}^{T}\En_{\act\ind{t}\sim{}p\ind{t}}\brk*{\fmstar(\pimstar)-\fmstar(\act\ind{t})}\\
         &=
           \sum_{t=1}^{T}\En_{\act\ind{t}\sim{}p\ind{t}}\brk*{\fmstar(\pimstar)-\fmstar(\act\ind{t})}
           - \gamma{}\cdot
           \En_{\act\ind{t}\sim{}p\ind{t}}\brk*{\DhelsX{\big}{\Mstar(\act\ind{t})}{\Mhat\ind{t}(\act\ind{t})}}
           + \gamma\cdot{}\EstHel.
\end{align*}
For each $t$, since $\Mstar\in\cM$, we have
\begin{align}
  &\En_{\act\ind{t}\sim{}p\ind{t}}\brk*{\fmstar(\pimstar)-\fmstar(\act\ind{t})}
           - \gamma{}\cdot
  \En_{\act\ind{t}\sim{}p\ind{t}}\brk*{\DhelsX{\big}{\Mstar(\act\ind{t})}{\Mhat\ind{t}(\act\ind{t})}}\notag\\
&  \leq  \sup_{M\in\cM}\En_{\act\ind{t}\sim{}p\ind{t}}\brk*{\fm(\pim)-\fm(\act\ind{t})}
           - \gamma{}\cdot
                                                                                                   \En_{\act\ind{t}\sim{}p\ind{t}}\brk*{\DhelsX{\big}{M(\act\ind{t})}{\Mhat\ind{t}(\act\ind{t})}}\notag\\
  &  = \inf_{p\in\Delta(\Act)}\sup_{M\in\cM}\En_{\act\sim{}p}\brk*{\fm(\pim)-\fm(\act)
           - \gamma{}\cdot
    \DhelsX{\big}{M(\act)}{\Mhat\ind{t}(\act)}}\notag\\
  &  = \comp(\cM,\Mhat\ind{t}).\label{eq:minimax_regret_basic}
\end{align}
Summing over all rounds $t$, we conclude that
\[
  \RegDM \leq{} \sup_{\Mhat\in\cMhat}\comp(\cM,\Mhat)\cdot{}T + \gamma\cdot\EstHel.
\]
\end{proof}

\paragraph{Examples for the upper bound}
We now revisit the examples from \cref{sec:dec_general} and use \etd
and \cref{prop:upper_main} to derive regret bounds for them.
\begin{examplecont}{ex:bandit_gaussian}
  For the Gaussian bandit problem from \pref{ex:bandit_gaussian}, plugging the bound
  $\comp(\cMgb)\approxleq{}A/\gamma$ into \pref{prop:upper_main} yields
  \[
    \RegDM \approxleq{}
    \frac{AT}{\gamma} + \gamma\cdot{}\EstHel,
  \]
  Choosing $\gamma=\sqrt{AT/\EstHel}$ balances the terms above and
  gives
    \[
    \RegDM \approxleq{}
    \sqrt{AT\cdot\EstHel}.
  \]
\end{examplecont}

\begin{examplecont}{ex:bandit_structured_noise}
  For the bandit-type problem with structured noise from
  \pref{ex:bandit_structured_noise}, the bound
  $\comp(\cMsn)\approxleq{}\indic{\gamma\leq{}A/4}$ yields
    \[
    \RegDM \approxleq{}
    \indic{\gamma\leq{}A/4}\cdot{}T + \gamma\cdot{}\EstHel.
  \]
  We can choose $\gamma=A$, which gives
  \[
    \RegDM \approxleq{}
    A\cdot{}\EstHel.
  \]
\end{examplecont}
\subsubsection{Online Estimation with Hellinger Distance}
\label{sec:online_estimation_hellinger}

Let us now give some more detail as to how to perform the online model
estimation required by \cref{prop:upper_main}. Model estimation is a
more challenging problem than regression, since we are estimating the
underlying \emph{conditional distribution} rather than just the
conditional mean. In spite of this difficulty, estimating the model $\Mstar$ with respect to Hellinger distance is a
classical problem that we can solve using the online learning tools
introduced in \pref{sec:ol}; in particular, online \emph{conditional density estimation} with the log loss. This generalizes the method of online regression employed in \pref{sec:cb,sec:structured}.

Instead of directly performing estimation with respect to Hellinger distance,
the simplest way to develop conditional density estimation algorithms is to work with the
logarithmic loss. Given a tuple $(\act\ind{t},
r\ind{t}, \obs\ind{t})$, define the
logarithmic loss for a model $M$ as\looseness=-1
\begin{equation}
  \label{eq:log_loss}
  \logloss\ind{t}(M) = \log\prn*{
    \frac{1}{\densm(r\ind{t}, \obs\ind{t}\mid{}\act\ind{t})}
  },
\end{equation}
where we define $\densm(\cdot,\cdot\mid{}\act)$ as the conditional
density for $(r,\obs)$ under $M$. We define regret under the
logarithmic loss as: 
\begin{equation}
  \label{eq:log_loss_regret}
  \RegLog
  = \sum_{t=1}^{T}\logloss\ind{t}(\Mhat\ind{t}) - \inf_{M\in\cM}\sum_{t=1}^{T}\logloss\ind{t}(M).
\end{equation}
The following result shows that a bound on the log-loss regret immediately yields a bound on the Hellinger estimation error. \begin{lem}
  \label{lem:log_loss_hellinger}
  For any online estimation algorithm,
  whenever \pref{ass:realizability} holds, we have
  \begin{equation}
    \En\brk*{\RegLog} \geq \En\brk*{
      \sum_{t=1}^{T}\Dkl{\Mstar(\pi\ind{t})}{\Mhat\ind{t}(\pi\ind{t})}
    },
  \end{equation}
  so that
  \begin{equation}
	  \label{eq:exp_hel_kl}
    \En\brk*{\EstHel} \leq \En\brk*{\RegLog}.
  \end{equation}
Furthermore, for any $\delta\in(0,1)$, with
probability at least $1-\delta$,
\begin{equation}
	\label{eq:high_prob_hel_kl}
    \EstHel \leq{} \RegLog + 2\log(\delta^{-1}).
\end{equation}
\end{lem}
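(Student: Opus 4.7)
The plan is to split the argument into two parts: a direct calculation giving the in-expectation bound via KL divergence, and a refined moment-generating function/supermartingale argument for the high-probability bound in Hellinger distance.

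For the in-expectation bound, I would condition on $\cH\ind{t-1}$ (which determines $p\ind{t}$ and $\Mhat\ind{t}$) and compute the conditional expected single-step log-loss gap against the realizable model $\Mstar$. Writing $\logloss\ind{t}(M)=-\log \densm(r\ind{t},\obs\ind{t}\mid{}\pi\ind{t})$, the point is that
\[
\En\brk*{\logloss\ind{t}(\Mhat\ind{t}) - \logloss\ind{t}(\Mstar)\,\big|\,\cH\ind{t-1}}
= \En_{\pi\ind{t}\sim{}p\ind{t}}\brk*{\Dkl{\Mstar(\pi\ind{t})}{\Mhat\ind{t}(\pi\ind{t})}},
\]
simply by rewriting the difference of log-densities as a density ratio and taking expectation under $\Mstar(\pi\ind{t})$. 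Summing over $t$, taking total expectation, and using that $\sum_t\logloss\ind{t}(\Mstar)\geq \inf_{M\in\cM}\sum_t\logloss\ind{t}(M)$ (by realizability) gives the first inequality. The bound on $\En\brk{\EstHel}$ then follows immediately from $\Dhelshort\leq\Dklshort$ (\pref{lem:divergence_inequality}).

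For the high-probability bound, I would pass through a standard exponential supermartingale tied to the square-root likelihood ratio. Define
\[
Z_t \ldef -\tfrac{1}{2}\log\prn*{\frac{\densmhat\ind{t}(r\ind{t},\obs\ind{t}\mid{}\pi\ind{t})}{\densmstar(r\ind{t},\obs\ind{t}\mid{}\pi\ind{t})}}
= \tfrac{1}{2}(\logloss\ind{t}(\Mhat\ind{t})-\logloss\ind{t}(\Mstar)),
\]
and $Y_t \ldef \En_{\pi\ind{t}\sim{}p\ind{t}}\brk*{\Dhels{\Mstar(\pi\ind{t})}{\Mhat\ind{t}(\pi\ind{t})}}$. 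The key computation is that for any distributions $\bbP,\bbQ$ one has $\En_{\bbP}\brk{\sqrt{d\bbQ/d\bbP}} = 1-\tfrac{1}{2}\Dhels{\bbP}{\bbQ}$, so after taking conditional expectation over the draws $(\pi\ind{t},r\ind{t},\obs\ind{t})$,
\[
\En\brk*{\exp(-Z_t)\,\big|\,\cH\ind{t-1}} = 1-\tfrac{1}{2}Y_t \leq \exp(-Y_t/2).
\]
This identifies $\prod_{s\leq{}t}\exp(Y_s/2 - Z_s)$ as a nonnegative supermartingale with initial value $1$, so Markov's inequality yields, with probability at least $1-\delta$, $\sum_{t=1}^{T}(Y_t/2 - Z_t)\leq \log(1/\delta)$. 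Rearranging, substituting the definition of $Z_t$, and replacing $\sum_t\logloss\ind{t}(\Mstar)$ by $\inf_{M\in\cM}\sum_t\logloss\ind{t}(M)$ (upper bound) gives $\EstHel \leq \RegLog + 2\log(1/\delta)$.

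The conceptual heart of the argument, and the only nontrivial step, is the Hellinger affinity identity that turns the half-log-likelihood ratio into an exponential supermartingale; everything else is bookkeeping. I would present the in-expectation result first since it makes the realizability step transparent, and then highlight that the high-probability version is obtained by upgrading ``expected log-loss gap $=$ KL'' to the pointwise inequality ``MGF of half-log-loss gap $\leq$ exp($-\frac{1}{2}$Hellinger)''. No additional concentration machinery (e.g.\ Freedman or Azuma) is needed, which is convenient because the summands $Z_t$ are unbounded in general.
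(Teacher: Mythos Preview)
Your proposal is correct and matches the paper's proof essentially step for step: the in-expectation bound via the conditional KL identity plus realizability, and the high-probability bound via the Hellinger affinity computation $\En_{t-1}[e^{-Z_t}]=1-\tfrac{1}{2}Y_t$ combined with a nonnegative-supermartingale/Ville argument. The only cosmetic difference is that the paper invokes a packaged martingale-Chernoff lemma (\pref{lem:martingale_chernoff}) and then applies $-\log(1-x)\geq x$, whereas you build the supermartingale directly using $1-x\leq e^{-x}$; these are equivalent.
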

This result is desirable because regret minimization with the
logarithmic loss is a well-studied problem in
online learning. Efficient algorithms are known for model classes of interest
\citep{cover1991universal,vovk1995game,kalai2002efficient,hazan2015online,orseau2017soft,rakhlin2015sequential,foster2018logistic,luo2018efficient},
and this is complemented by theory which provides minimax rates for generic
model classes
\citep{shtarkov1987universal,opper99logloss,cesabianchi99logloss,bilodeau2020tight}.
One example we have already seen (\pref{sec:intro}) is the averaged exponential
weights method, which 
guarantees
\[
  \RegLog\leq\log\abs{\cM}
\]for finite
classes $\cM$. Another example is that for linear models, where (i.e., $\densm(r,o\mid{}\act)=\tri*{\phi(r,o,\act),\theta}$ for a
fixed feature map in $\phi\in\bbR^{d}$), algorithms with
$\RegLog=\bigoh(d\log(T))$ are known
\citep{rissanen1986complexity,shtarkov1987universal}. All of these
algorithms satisfy $\cMhat=\conv(\cM)$. We refer the
reader to Chapter 9 of \cite{PLG} for further examples and
discussion.

While \eqref{eq:exp_hel_kl} is straightforward,
\eqref{eq:high_prob_hel_kl} is rather remarkable, as the remainder
term does not scale with $T$. Indeed, a naive attempt at applying
concentration inequalities to control the deviations of the random
quantities $\EstHel$ and $\RegLog$ would require boundedness of the
loss function, which is problematic because the logarithmic loss can
be unbounded. The proof exploits unique properties of the moment
generating function for the log loss.

\subsection{\CompText: Lower Bound on Regret}

Up to this point, we have been focused on developing algorithms that
lead to upper bounds on regret for specific model classes. We now turn
our focus to lower bounds, and the question of optimality: That is,
for a given class of models $\cM$, what is the best regret that can be
achieved by any algorithm? We will show that in addition to upper
bounds, the \CompText actually leads to \emph{lower bounds} on the optimal regret.

\paragraph{Background: Minimax regret}
What does it mean to say that an algorithm is \emph{optimal} for a
model class $\cM$? There are many notions of optimality, but in this
course we will focus on \emph{minimax optimality}, which is one of the
most basic and well-studied notions.

For a model class $\cM$, we define the minimax regret
via\footnote{Here, for any algorithm $p=p\ind{1},\ldots,p\ind{T}$,
  $\Enm{\Mstar}{p}$ denotes the expectation with respect to the observation process
  $(r\ind{t},o\ind{t})\sim{}\Mstar(\pi\ind{t})$ and any randomization
  used by the algorithm, when $\Mstar$ is the true model.}
\begin{equation}
  \label{eq:minimax_regret}
  \MinimaxReg = \inf_{p\ind{1},\ldots,p\ind{T}}\sup_{\Mstar\in\cM}\Enm{\Mstar}{p}\brk*{\Reg(T)},
\end{equation}
where $p\ind{t}=p\ind{t}(\cdot\mid\hist\ind{t-1})$ is the algorithm's
strategy for step $t$ (a function of the history $\hist\ind{t-1}$),
and where we write regret as $\Reg(T)$ to make the
dependence on $T$ explicit. Intuitively, minimax regret asks what is
the best any algorithm can perform on a worst-case model (in $\cM$)
possibly chosen with the algorithm in mind. Another way to say this
is: For any algorithm, there exists a model in $\cM$ for which
$\En\brk*{\Reg(T)}\geq\MinimaxReg$. We will say that an algorithm is
\emph{minimax optimal} if it achieves \pref{eq:minimax_regret} up to
absolute constants that do not depend on $\cM$ or $T$.

\subsubsection{The Constrained \CompText}
\newcommand{\alphalowerabs}{\alpha(\veps,\gamma)}

We now show how to lower bound the minimax regret
for any model class $\cM$ in terms of the \CompShort for
$\cM$. Instead of
working with the quantity $\comp(\cM)$ appearing in
\pref{prop:upper_main} directly, it will be more convenient to work
with a related quantity called the \emph{constrained \CompText}, which we define for
a parameter $\veps>0$ as\footnote{We adopt the convention that the
  value of $\compc(\cM,\Mhat)$ is zero is there exists $p$ such that
  the set of $M\in\cM$ with
  $\En_{\pi\sim{}p}\brk*{\Dhels{M(\pi)}{\Mhat(\pi)}}\leq\veps^2$ is empty.}
\begin{align*}
  \compc(\cM,\Mhat)=
  \inf_{p\in\Delta(\Pi)}\sup_{M\in\cM}\crl*{\En_{\pi\sim{}p}\brk*{
  \fm(\pim) - \fm(\pi)}
  \mid\En_{\pi\sim{}p}\brk*{\Dhels{M(\pi)}{\Mhat(\pi)}}\leq\veps^2
  },
\end{align*}
with
\begin{align*}
  \compc(\cM)\ldef{}\sup_{\Mhat\in\conv(\cM)}  \compc(\cM\cup\crl{\Mhat},\Mhat).
\end{align*}
This is similar to the definition for the \CompShort we have been
working with so far---
which we will call the \emph{offset \CompShort} going forward----except that it places a hard constraint on the information gain as
opposed to subtracting the information gain. Both quantities have a
similar interpretation, since subtracting the information gain implicitly biases
the max player towards model where the gain is small. Indeed, the
offset \CompShort can be thought of as a
Lagrangian relaxation of the constrained \CompShort, and always upper
bounds it via
\begin{align*}
  \deccreg(\cM,\Mhat) &= \inf_{p\in\Delta(\Pi)}\sup_{M\in\cM}\crl*{
  \En_{\pi\sim{}p}\brk*{\fm(\pim) -\fm(\pi)}\mid{}\Enp\brk*{\Dhels{M(\pi)}{\Mhat(\pi)}}\leq\veps^2
                        } \notag\\
  &= \inf_{p\in\Delta(\Pi)}\sup_{M\in\cM}\inf_{\gamma\geq{}0}\crl*{
  \En_{\pi\sim{}p}\brk*{\fm(\pim) -\fm(\pi)} - \gamma\prn*{\Enp\brk*{\Dhels{M(\pi)}{\Mhat(\pi)}}-\veps^2}
    }\vee{}0 \notag\\
      &\leq \inf_{\gamma\geq{}0}\inf_{p\in\Delta(\Pi)}\sup_{M\in\cM}\crl*{
  \En_{\pi\sim{}p}\brk*{\fm(\pim) -\fm(\pi)} - \gamma\prn*{\Enp\brk*{\Dhels{M(\pi)}{\Mhat(\pi)}}-\veps^2}
        }\vee{}0\notag\\
        &= \inf_{\gamma\geq{}0}\crl*{
          \decoreg(\cM,\Mhat) + \gamma\veps^2
  }\vee{}0. \label{eq:constrained_to_offset}
\end{align*}
For the opposite direction, it is straightforward to show that
\begin{align*}
  \comp(\cM)
  \approxleq{} \deccreg[\gamma^{-1/2}](\cM).
\end{align*}
This inequality is lossy, but cannot be improved in general. That is,
there some classes for which the constrained
DEC is meaningfully smaller than the offset \CompShort. However, it is possible to
relate the two quantities if we restrict to a ``localized'' sub-class of models that are not
``too far'' from the reference model $\Mhat$.
\begin{prop}
  \label{prop:constrained_to_offset}
  Given a model $\Mhat$ and parameter $\alpha$, define the localized subclass around $\Mhat$ via
  \begin{equation}
    \label{eq:localized}
    \cMloc[\alpha](\Mhat) = \crl*{
      M\in\cM: \fmhat(\pimhat) \geq{} \fm(\pim) - \alpha
    }.
  \end{equation}
  For all $\veps>0$ and $\gamma\geq{}c_1\cdot\veps^{-1}$, we have
    \label{eq:constrained_to_offset_reverse}
      \begin{align}
    \label{eq:improvement_lower}
        \deccreg(\cM)  \leq c_3\cdot{}\sup_{\gamma\geq{}c_1\veps^{-1}}\sup_{\Mhat\in\conv(\cM)}\decoreg(\cMloc[\alphalowerabs](\Mhat),\Mhat) ,
  \end{align}
 where $\alphalowerabs \ldef c_2\cdot\gamma\veps^2$, and $c_1,c_2,c_3>0$ are
 absolute constants.
\end{prop}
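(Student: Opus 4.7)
Since $\deccreg(\cM) = \sup_{\Mhat \in \conv(\cM)} \deccreg(\cM \cup \{\Mhat\}, \Mhat)$, I would fix $\Mhat \in \conv(\cM)$ and aim to bound $\deccreg(\cM \cup \{\Mhat\}, \Mhat) \leq c_3 \cdot \decoreg(\cMloc[\alphalowerabs](\Mhat), \Mhat)$ for a suitable $\gamma \geq c_1/\veps$; the stated inequality then follows by taking suprema over $\gamma$ and $\Mhat$. The argument uses two ingredients: a Lagrangian-style relaxation that trades the hard Hellinger constraint of the constrained DEC for a linear penalty at rate $\gamma$, and a localization argument that restricts attention to the sub-class $\cMloc[\alphalowerabs](\Mhat)$ by exhibiting a strategy $p$ under which models outside this sub-class violate the Hellinger constraint outright.

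\textbf{Step 1: Lagrangian relaxation.} I would first observe that for every $\gamma \geq 0$,
\[
\deccreg(\cM \cup \{\Mhat\}, \Mhat) \leq \decoreg(\cM \cup \{\Mhat\}, \Mhat) + \gamma \veps^2,
\]
by the pointwise inequality that adding $\gamma(\veps^2 - \Enp\brk*{\Dhels{M(\pi)}{\Mhat(\pi)}})$ to $\Enp\brk*{\fm(\pim) - \fm(\pi)}$ is non-negative whenever the Hellinger constraint holds, so an unconstrained sup over $M$ with a $\gamma \veps^2$ slack dominates.

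\textbf{Step 2: Localization.} The crux is to replace the offset DEC on $\cM$ by the offset DEC on $\cMloc[\alphalowerabs](\Mhat)$ at only a constant-factor loss. For any $M \notin \cMloc[\alphalowerabs](\Mhat)$ we have $\fm(\pim) > \fmhat(\pimhat) + \alphalowerabs \geq \fmhat(\pim) + \alphalowerabs$, and the Hellinger--TV inequality $|\fm(\pi) - \fmhat(\pi)| \leq \Dhel{M(\pi)}{\Mhat(\pi)}$ (valid for rewards bounded in $[0,1]$) then forces $\Dhel{M(\pim)}{\Mhat(\pim)} > \alphalowerabs$. I would construct the candidate $p$ as a mixture $(1-\eta) q + \eta q_0$ of the minimizer $q$ for $\decoreg(\cMloc[\alphalowerabs](\Mhat), \Mhat)$ with a carefully chosen exploratory $q_0$ whose support captures the ``direction'' needed to detect the gap at $\pim$ (e.g.\ via $\ones_{\pimhat}$ combined with the structure of $q$'s support). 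With the coupling $\alphalowerabs = c_2 \gamma \veps^2$ and $\gamma \geq c_1/\veps$, so that $\eta \alphalowerabs^2 = \eta c_2^2 \gamma^2 \veps^4 > \veps^2$ for appropriately chosen constants, one obtains $\Enp\brk*{\Dhels{M(\pi)}{\Mhat(\pi)}} > \veps^2$, which rules out the non-localized models as feasible participants in the sup defining $\deccreg(\cM \cup \{\Mhat\}, \Mhat)$ at scale $\veps$. On the localized sub-class, the $q$-component inherits the offset bound $\decoreg(\cMloc[\alphalowerabs](\Mhat), \Mhat)$ up to the constant factor $1/(1-\eta)$.

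\textbf{Main obstacle.} The hardest part will be designing the exploratory component $q_0$ and calibrating $\eta, c_1, c_2, c_3$ so that (a) the localization in Step 2 purely violates the Hellinger constraint for any $M \notin \cMloc[\alphalowerabs](\Mhat)$ (without action-space-dependent exploration that would break generality), and (b) the $\gamma \veps^2$ slack from Step 1 gets absorbed into $c_3 \cdot \decoreg(\cMloc[\alphalowerabs](\Mhat), \Mhat)$ rather than appearing as an additive term in the final bound. The specific scaling $\alphalowerabs = c_2 \gamma \veps^2$ is the exact threshold at which $\eta \alphalowerabs^2$ can just barely exceed $\veps^2$ under $\gamma \geq c_1/\veps$ with constant $\eta$; shrinking $\alpha$ collapses the localization, while enlarging it makes $\cMloc[\alpha](\Mhat)$ too rich for the offset DEC on the sub-class to be meaningfully smaller than on $\cM$.
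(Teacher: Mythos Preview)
The paper does not actually prove this proposition; it states the result and defers to \citet{foster2023tight} for details. So I will evaluate your proposal on its own merits.

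Your Step 1 is correct and is in fact the Lagrangian calculation the paper displays immediately before stating the proposition. The difficulty is entirely in Step 2, and the gap you flag as the ``main obstacle'' is real and unresolved. You correctly observe that $M\notin\cMloc[\alphalowerabs](\Mhat)$ forces $\Dhel{M(\pim)}{\Mhat(\pim)}>\alphalowerabs$, but $\pim$ is \emph{$M$-dependent}: the adversary picks $M$ (and hence $\pim$) after seeing $p$. Your proposed remedy is to mix in an exploratory component $q_0$ that places mass on $\pim$, yet no single $q_0$ can cover $\pim$ for every non-localized $M$ without depending on $\abs{\Pi}$ or some other structural quantity you have not assumed. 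Putting mass on $\pimhat$ does not help either: that only controls $\abs{\fm(\pimhat)-\fmhat(\pimhat)}$, which yields a \emph{lower} bound on $\fm(\pim)$ rather than the upper bound needed for localization. In short, the mechanism you propose for ruling out non-localized models as feasible in the constrained DEC does not go through in the general setting.

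There is a second issue you allude to but do not address: absorbing the additive $\gamma\veps^2$ from Step~1 into a multiplicative constant times $\decoreg(\cMloc[\alphalowerabs](\Mhat),\Mhat)$ requires the latter to be at least of order $\gamma\veps^2$, and nothing in your argument establishes this. The actual proof in \citet{foster2023tight} proceeds differently: rather than trying to make non-localized models infeasible, it exploits the freedom to choose $\gamma$ adaptively (note the $\sup_{\gamma}$ on the right-hand side) so that the localization radius $\alpha(\veps,\gamma)=c_2\gamma\veps^2$ is calibrated to the value $\deccreg(\cM)$ itself, and then argues that the worst-case feasible model in the constrained problem already witnesses a comparable value for the offset DEC on the localized class. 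This sidesteps the need to explore $\pim$ directly.
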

For many ``well-behaved'' classes one can consider (e.g., multi-armed
bandits and linear bandits), one has
$\decoreg(\cMloc[\alphalowerabs](\Mhat),\Mhat)\approx
\decoreg(\cM,\Mhat)$ whenever
$\decoreg(\cM,\Mhat)\approx\gamma\veps^2$ (that is, localization does not change the complexity), so that lower bounds in terms of the constrained
\CompShort immediately imply lower bounds in terms of the offset
\CompShort. 
In general, this is not the case, and it turns out that it
is possible to obtain tighter \emph{upper bounds} that depend on the
constrained \CompShort by using a refined version of the \etd
algorithm. We refer to \citet{foster2023tight} for details and further
background on the constrained \CompShort.

\subsubsection{Lower Bound}
\newcommand{\vepslowerT}{\underline{\veps}_T}
\newcommand{\vepsupperT}{\wb{\veps}_T}

The main lower bound based on the constrained \CompShort is as follows.
  \begin{prop}[DEC Lower Bound \citep{foster2023tight}]
    \label{prop:lower_main_expectation}
    Let $\vepslowerT\ldef{}c\cdot{}\frac{1}{\sqrt{T}}$, where $c>0$ is
    a sufficiently small numerical constant. For all $T$ such that the
    condition\footnote{The numerical constant here is not important.}
    \begin{align}
      \compc[\vepslowerT](\cM) \geq{} 10\vepslowerT\label{eq:lower_regularity}
    \end{align}
    is satisfied, it holds that for any algorithm, there exists a model
    $M\in\cM$ for which
    \begin{align}
      \En\brk*{\Reg(T)} \approxgeq \compc[\vepslowerT](\cM)\cdot{}T.
      \label{eq:lower_main}
    \end{align}
  \end{prop}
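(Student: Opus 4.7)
The plan is to prove this via a Le Cam–style two-point argument adapted to the sequential decision-making protocol, with the constrained DEC supplying the two competing hypotheses. Fix any algorithm, and let $\Mhat\in\conv(\cM)$ nearly attain the supremum in the definition $\compc[\vepslowerT](\cM)=\sup_{\Mhat\in\conv(\cM)}\compc[\vepslowerT](\cM\cup\{\Mhat\},\Mhat)$; write $\Phi\ldef\compc[\vepslowerT](\cM)$ for brevity. I will show that either running the algorithm against $\Mhat$ yields $\En^{\Mhat}[\Reg(T,\Mhat)]\gtrsim\Phi\cdot T$, or we can identify an explicit alternative $M\in\cM$ on which the algorithm's expected regret is $\gtrsim\Phi\cdot T$. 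Since $\Mhat$ may lie in $\conv(\cM)\setminus\cM$, the first case will be handled by a standard convexity/measurability extension of the interaction (the algorithm's law on histories extends linearly to mixtures in $\conv(\cM)$).

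The core construction is as follows. Run the algorithm against $\Mhat$ and form the expected empirical action distribution $\bar p\ldef\frac{1}{T}\En^{\Mhat}\brk[\big]{\sum_{t=1}^{T}p\ind{t}}$. Applying the definition of $\compc[\vepslowerT](\cM\cup\{\Mhat\},\Mhat)$ to the specific distribution $\bar p$, there exists a witness $M\in\cM\cup\{\Mhat\}$ with
\[
\En_{\pi\sim\bar p}[\fm(\pim)-\fm(\pi)]\geq \Phi
\mathand
\En_{\pi\sim\bar p}\brk[\big]{\Dhels{M(\pi)}{\Mhat(\pi)}}\leq \vepslowerT^2.
\]
If $M=\Mhat$, then by linearity $\En^{\Mhat}[\Reg(T,\Mhat)]=T\cdot\En_{\pi\sim\bar p}[\fmhat(\pimhat)-\fmhat(\pi)]\geq T\Phi$, and we are done. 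Otherwise, $M\in\cM$ and the task is to transfer the DEC bound $\En_{\pi\sim\bar p}[\fm(\pim)-\fm(\pi)]\geq\Phi$ (which is expressed under the $\Mhat$-law of actions) into a regret bound under the $M$-law of actions.

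For the transfer, I would use a chain rule for squared Hellinger distance between trajectory laws induced by $M$ and $\Mhat$ under the algorithm, giving $\Dhels{\bbP^{M}_{\hist^T}}{\bbP^{\Mhat}_{\hist^T}}\lesssim \En^{\Mhat}\brk[\big]{\sum_{t=1}^{T}\Dhels{M(\pi\ind{t})}{\Mhat(\pi\ind{t})}}=T\cdot\En_{\bar p}\brk[\big]{\Dhels{M}{\Mhat}}\leq T\vepslowerT^2=c^2$. Choosing the absolute constant $c$ in $\vepslowerT=c/\sqrt T$ sufficiently small makes $\Dtv(\bbP^{M}_{\hist^T},\bbP^{\Mhat}_{\hist^T})$ bounded by a constant strictly less than $1$; consequently the action marginals, and hence their averages $\bar p_M$ and $\bar p$, differ by a constant $\leq 1/2$ in total variation. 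Since the gap functional $p\mapsto\En_{\pi\sim p}[\fm(\pim)-\fm(\pi)]$ is $1$-Lipschitz in TV (rewards are in $[0,1]$), this yields $\En_{\pi\sim\bar p_M}[\fm(\pim)-\fm(\pi)]\geq\Phi-\tfrac{1}{2}$, from which $\En^{M}[\Reg(T,M)]=T\cdot\En_{\pi\sim\bar p_M}[\fm(\pim)-\fm(\pi)]\gtrsim\Phi\cdot T$ follows via the regularity condition $\Phi\geq 10\vepslowerT$.

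The main obstacle is the tightness of the change of measure: a naive TV bound would produce an additive error of order $T\cdot\Dtv$, which is $\Omega(T)$ and swamps the target lower bound $\Phi\cdot T=\Omega(\sqrt T)$. This is precisely why the argument must be carried out at the level of the action-marginal distributions $\bar p,\bar p_M$ (where the Hellinger chain rule controls the TV by a constant of order $c$), rather than at the level of full trajectory functionals. The interplay between the choice $\vepslowerT\propto 1/\sqrt T$, the regularity condition $\Phi\geq 10\vepslowerT$, and the absolute constant $c$ is what makes the transfer multiplicative rather than additive and delivers the claimed bound $\En[\Reg(T)]\gtrsim\Phi\cdot T$.
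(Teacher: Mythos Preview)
Your high-level setup matches the paper's exactly: fix $\Mhat$, form the averaged action distribution $\bar p$ under $\Mhat$, use it as the (suboptimal) min-player in the constrained DEC to extract a witness $M$ with $\En_{\pi\sim\bar p}[\fm(\pim)-\fm(\pi)]\geq\Phi$ and $\En_{\pi\sim\bar p}\brk{\Dhels{M(\pi)}{\Mhat(\pi)}}\leq\vepslowerT^2$, then change measure to bound regret under $M$. The gap is in the last step.

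You argue that the gap functional $p\mapsto\En_{\pi\sim p}[\fm(\pim)-\fm(\pi)]$ is $1$-Lipschitz in TV, so $\En_{\pi\sim\bar p_M}[\fm(\pim)-\fm(\pi)]\geq\Phi-\Dtv{\bar p_M}{\bar p}$. You then bound $\Dtv{\bar p_M}{\bar p}$ by a constant (say $1/2$, or any small constant via the choice of $c$). But $\Phi=\compc[\vepslowerT](\cM)$ is typically of order $\vepslowerT\propto 1/\sqrt T$ (this is exactly the regime of the regularity condition $\Phi\geq 10\vepslowerT$), so the additive constant swamps $\Phi$ and the bound $\Phi-\text{const}$ is negative. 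You correctly identify that a naive $T\cdot\Dtv$ loss is fatal, but moving to action marginals only reduces the additive error from $O(T)$ to $O(1)$; you need it to be $O(\Phi)$, and the regularity condition does not buy you that.

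The paper avoids this by never applying the TV change-of-measure directly to the expected gap. Instead it thresholds: with $G_M=\{\pi:\fm(\pim)-\fm(\pi)\leq\Phi/10\}$, one has $\En_{\pi\sim\bar p_M}[\fm(\pim)-\fm(\pi)]\geq\tfrac{\Phi}{10}\cdot\bar p_M(\pi\notin G_M)\geq\tfrac{\Phi}{10}\bigl(\bar p(\pi\notin G_M)-\Dtv{\bar p_M}{\bar p}\bigr)$. Here TV enters as a shift on a \emph{probability}, and the whole expression carries a prefactor of $\Phi$, so the structure is multiplicative as desired. The remaining work (Steps~2--3 in the paper) is to show $\bar p(\pi\notin G_M)$ is bounded below by an absolute constant: assuming $\Mhat$-regret is small (else done), one combines the DEC lower bound with the Hellinger constraint (via $\En_{\bar p}|\fm-\fmhat|\leq\vepslowerT\leq\Phi/10$) to show $\fm(\pim)-\fmhat(\pimhat)\gtrsim\Phi$, whence any $\pi\in G_M$ has $\fm(\pi)-\fmhat(\pi)\gtrsim\Phi$, and the Hellinger constraint then forces $\bar p(G_M)\leq\vepslowerT/\Phi\leq 1/10$. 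This is the missing ingredient in your argument.
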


\pref{prop:lower_main_expectation} shows that for any algorithm and
model class $\cM$, the optimal regret must scale with the constrained
\CompShort in the worst-case. As a concrete example, we will show in
the sequel that for the multi-armed bandit with $A$ actions,
$\compc(\cM)\propto\veps\sqrt{A}$, which leads to
\[
\En\brk*{\Reg} \approxgeq \sqrt{AT}.
\]

We mention in passing that by combining \cref{prop:lower_main_expectation} with \pref{prop:constrained_to_offset}, we obtain
  the following lower bound based on the (localized) offset \CompShort.
  \begin{cor}
    Fix $T\in\bbN$. Then for any algorithm, there exists a model
    $M\in\cM$ for which
    \begin{align}
      \En\brk*{\Reg(T)} \approxgeq{} \sup_{\gamma\approxgeq\sqrt{T}}\sup_{\Mhat\in\conv(\cM)}\comp(\cMloc[\alpha(T,\gamma)](\Mhat),\Mhat),
      \label{eq:lower_main}
    \end{align}
   where $\alpha(T,\gamma)\ldef{}c\cdot{}\gamma/T$ for an absolute
   constant $c>0$
 \end{cor}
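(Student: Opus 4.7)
The plan is to derive the corollary as a direct consequence of \pref{prop:lower_main_expectation} and \pref{prop:constrained_to_offset}. First I would instantiate \pref{prop:lower_main_expectation} at the scale $\vepslowerT = c/\sqrt{T}$ to obtain, for any algorithm, a model $M\in\cM$ for which
\[
\En\brk*{\Reg(T)} \approxgeq \compc[\vepslowerT](\cM)\cdot T,
\]
valid whenever the regularity condition $\compc[\vepslowerT](\cM)\geq 10\vepslowerT$ holds. When this condition fails, the bound in the corollary is itself of order at most $\vepslowerT\cdot T = O(\sqrt{T})$, which can be absorbed into the $\approxgeq$ as a lower-order term.

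Next I would invoke \pref{prop:constrained_to_offset} with $\veps = \vepslowerT$. The parameters match up cleanly by design: the range $\gamma \geq c_1 \veps^{-1}$ becomes exactly $\gamma \approxgeq \sqrt{T}$, reproducing the range appearing in the statement of the corollary, and the localization radius $\alpha(\veps,\gamma) = c_2 \gamma \veps^2$ becomes $c_2 c^2 \gamma / T$, which matches $\alpha(T,\gamma) = c\gamma/T$ up to an absolute constant. Chaining the two bounds then produces the claimed inequality after absorbing all numerical constants into the $\approxgeq$ notation and into the constant $c$ defining $\alpha(T,\gamma)$.

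The main subtlety will be chaining the inequalities in the correct direction: \pref{prop:lower_main_expectation} lower-bounds regret by the constrained DEC, while \pref{prop:constrained_to_offset} upper-bounds the constrained DEC by the sup of localized offset DECs. To recover the corollary one needs the converse comparison, i.e.\ that the sup of localized offset DECs is not only an upper bound on but is in fact comparable to the constrained DEC at the scale $\veps \asymp 1/\sqrt{T}$. The hard part will be justifying this equivalence: the cleanest route is to revisit the proof of \pref{prop:constrained_to_offset}, which proceeds via a Lagrangian relaxation that naturally exhibits $\compc[\veps](\cM)$ and $\inf_{\gamma\geq c_1\veps^{-1}}\sup_{\Mhat}\comp(\cMloc[\alpha(\veps,\gamma)](\Mhat),\Mhat) + \gamma\veps^2$ as equivalent up to constants. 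Once this equivalence is made explicit, the sup over $\gamma\approxgeq\sqrt{T}$ on the right-hand side of the corollary is controlled by $\compc[\vepslowerT](\cM)$ (attained, essentially, at the boundary $\gamma \asymp \sqrt{T}$), closing the chain and yielding the stated lower bound.
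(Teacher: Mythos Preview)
Your first two paragraphs match the paper's approach exactly: the corollary is obtained by instantiating \pref{prop:lower_main_expectation} at scale $\vepslowerT \propto 1/\sqrt{T}$ and then passing from the constrained DEC to the localized offset DEC via \pref{prop:constrained_to_offset}, with the parameter identifications $\gamma \approxgeq \sqrt{T}$ and $\alpha(T,\gamma) \propto \gamma/T$ falling out precisely as you describe.

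The concern in your third paragraph, however, is a red herring arising from a sign slip in the displayed inequality of \pref{prop:constrained_to_offset}. Read in context---the proposition is introduced as a refinement of the lossy ``opposite direction'' $\comp(\cM) \approxleq \compc[\gamma^{-1/2}](\cM)$, and its label is \texttt{eq:improvement\_lower}---the intended assertion is that the constrained DEC \emph{controls the localized offset DEC from above}, i.e.,
\[
\compc[\veps](\cM) \;\approxgeq\; \sup_{\gamma \geq c_1\veps^{-1}}\sup_{\Mhat\in\conv(\cM)}\comp\bigl(\cMloc[\alpha(\veps,\gamma)](\Mhat),\Mhat\bigr).
\]
With that reading, the two inequalities chain directly to give
\[
\En\brk*{\Reg(T)} \;\approxgeq\; \compc[\vepslowerT](\cM)\cdot T \;\approxgeq\; T\cdot\sup_{\gamma \approxgeq \sqrt{T}}\sup_{\Mhat}\comp\bigl(\cMloc[\alpha(T,\gamma)](\Mhat),\Mhat\bigr),
\]
and there is no need to establish a two-sided equivalence or revisit any Lagrangian argument. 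Your proposed detour would ultimately reproduce exactly this one-sided bound, so it is not wrong, merely unnecessary.
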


\paragraph{The \CompShort is necessary and sufficient}
To understand the significance of \cref{prop:lower_main_expectation}
more broadly, we state but do not prove the following \emph{upper}
bound on regret based on the constrained \CompShort, which is based on
a refined variant of \etd.
  \begin{prop}[Upper bound for constrained \CompShort \citep{foster2023tight}]
    \label{prop:upper_constrained}
    Let $\cM$ be a finite class, and set $\vepsupperT\ldef{}c\cdot{}\sqrt{\frac{\log(\abs{\cM}/\delta)}{T}}$, where $c>0$ is
    a sufficiently large numerical constant. Under appropriate
    technical conditions, there exists an algorithm that achieves
    \begin{align}
      \En\brk*{\Reg(T)} \approxleq \compc[\vepsupperT](\cM)\cdot{}T
      \label{eq:upper_constrained}
    \end{align}
    with probability at least $1-\delta$.
  \end{prop}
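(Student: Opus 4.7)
Plan. To prove \pref{prop:upper_constrained}, the strategy is to run a variant of \etd whose per-round objective directly targets the constrained rather than the offset \CompText. At each round $t$, the algorithm obtains $\Mhat\ind{t}\in\conv(\cM)$ from an online conditional density estimation oracle and then selects
\[
p\ind{t} \in \argmin_{p\in\Delta(\Pi)}\sup_{M\in\cM}\crl*{\En_{\pi\sim p}\brk*{\fm(\pim) - \fm(\pi)} \;:\; \En_{\pi\sim p}\brk*{\Dhels{M(\pi)}{\Mhat\ind{t}(\pi)}} \leq \veps_t^2}
\]
for a suitably chosen sequence of radii $\veps_t$. By \pref{lem:log_loss_hellinger}, the averaged exponential weights method over $\cM$ with the logarithmic loss ensures that with probability at least $1-\delta$, the realized Hellinger error satisfies $\EstHel \leq \log(\abs{\cM}/\delta)\rdef B$. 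On any round on which $\Mstar$ is feasible, i.e., $\En_{\pi\sim p\ind{t}}\brk*{\Dhels{\Mstar(\pi)}{\Mhat\ind{t}(\pi)}} \leq \veps_t^2$, the definition of $p\ind{t}$ together with $\Mstar\in\cM$ immediately gives
\[
\En_{\pi\ind{t}\sim p\ind{t}}\brk*{\fmstar(\pimstar) - \fmstar(\pi\ind{t})} \leq \compc[\veps_t](\cM,\Mhat\ind{t}) \leq \compc[\veps_t](\cM).
\]
If feasibility held on every round with $\veps_t\equiv\vepsupperT\propto\sqrt{B/T}$, summation would yield the claim directly.

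The crux is that feasibility is only guaranteed \emph{cumulatively}, through $\sum_t\En_{\pi\sim p\ind{t}}\brk*{\Dhels{\Mstar(\pi)}{\Mhat\ind{t}(\pi)}}\leq B$, while individual rounds may violate the constraint by an arbitrary amount. The plan is to handle this with an adaptive, data-dependent choice of $\veps_t$: choose $\veps_t$ just large enough to absorb the realized Hellinger error at step $t$, either by a dyadic doubling schedule or by a potential argument that budgets the cumulative Hellinger error against $\sum_t\veps_t^2$. On rounds where $\veps_t$ is inflated above $\vepsupperT$, the per-round bound pays a corresponding price, but monotonicity of $\veps\mapsto\compc[\veps](\cM)$, together with a concavity/rescaling argument (morally the same Lagrangian relation between $\compc[\veps](\cM)$ and $\comp(\cM)$ that appeared in \pref{sec:dec_general}), should yield $\sum_t \compc[\veps_t](\cM)\approxleq \compc[\vepsupperT](\cM)\cdot T$ as soon as $\sum_t\veps_t^2\approxleq B$.

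The main technical obstacle is exactly this reconciliation between the per-round feasibility demanded by the constrained \CompShort and the merely cumulative control provided by Hellinger online estimation. I expect the cleanest resolution (in the spirit of \citet{foster2023tight}) to go via a Bayesian/dual reformulation: replace the point estimate $\Mhat\ind{t}$ by the exponential-weights posterior $\mu\ind{t}\in\Delta(\cM)$, and solve a min-max problem in which the Hellinger constraint is averaged against $\Mhat\sim\mu\ind{t}$. The standard identity linking log-loss regret to the posterior-averaged Hellinger distance to $\Mstar$ then makes this averaged constraint hold at \emph{every} round automatically, with the log-loss regret (bounded by $B$) absorbing all the slack. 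The conclusion then follows by combining this per-round control with the primal--dual equivalence of the \CompShort discussed in \pref{sec:dec_posterior}, adapted to the constrained version, and finally tuning $\veps=\vepsupperT$ so that $\gamma\veps^2$ and $\compc[\veps](\cM)T$ are of comparable order.
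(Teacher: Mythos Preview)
The paper does not prove this proposition; it is explicitly introduced with ``we state but do not prove the following upper bound,'' and the result is simply cited from \citet{foster2023tight}. There is therefore no in-paper proof to compare your proposal against.

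That said, your plan is in the right spirit: you correctly identify the algorithmic template (a refined \etd that targets the constrained objective, fed by exponential-weights density estimation) and the central obstacle (the constrained \CompShort demands per-round feasibility of $\Mstar$, while the Hellinger estimation guarantee from \pref{lem:log_loss_hellinger} is only cumulative). One point of caution: your claim that, after passing to the posterior $\mu\ind{t}$, ``the averaged constraint holds at \emph{every} round automatically'' via the log-loss identity is too strong as stated. The log-loss regret bound $\RegLog\leq\log\abs{\cM}$ is still a \emph{sum} over rounds, and averaging over $\Mhat\sim\mu\ind{t}$ does not by itself convert it into a uniform per-round bound of size $\vepsupperT^2$. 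The actual argument in \citet{foster2023tight} requires additional structure beyond what your sketch provides---in particular, a more careful coupling between the radius schedule and the realized estimation error, together with a regularity assumption on how $\compc[\veps](\cM)$ scales in $\veps$ (this is part of what the proposition's ``appropriate technical conditions'' caveat is hiding). So while the shape of your outline is right, the feasibility-at-every-round step is where the real work lies, and it is not resolved by the posterior-averaging trick alone.
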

This matches the lower boud in \cref{prop:lower_main_expectation}
upper to a difference in the radius: we have
$\vepslowerT\propto\sqrt{\frac{1}{T}}$ for the lower bound, and
$\vepsupperT\propto\sqrt{\frac{\log(\abs{\cM}/\delta)}{T}}$ for the
upper bound. This implies that for any class where
$\log\abs{\cM}<\infty$, the constrained \CompShort is
\emph{necessary and sufficient} for low regret. By the discussion in
the prequel, a similar conclusion holds for the offset DEC (albeit,
with a polynomial loss in rate).  The
interpretation of the $\log\abs{\cM}$ gap between the upper and lower
bounds is that the \CompShort is capturing the complexity
of exploring the decision space, but the statistical capacity required to
estimate the underlying model is a separate issue which is not captured.

\subsubsection{Proof of Proposition \ref*{prop:lower_main_expectation}}
Before proving \pref{prop:lower_main_expectation}, let us give some
background on a typical approach to proving lower bounds on the minimax
regret for a decision making problem.

\paragraph{Anatomy of a lower bound} How should one go about proving a
lower bound on the minimax regret in \pref{eq:minimax_regret}? We will
follow a general recipe which can be found throughout statistics,
information theory, and decision making \citep{donoho1987geometrizing,yu1997assouad,tsybakov2008introduction}.
The approach will be to find a pair of models
$M$ and $\Mhat$ that satisfy the following properties:
\begin{enumerate}
\item Any
    algorithm with regret much smaller
    than the DEC must query
    substantially different decisions in $\Pi$ depending on whether
    the underlying model is $M$ or $\Mhat$. Intuitively, this means that any algorithm
  that achieves low regret must be able to distinguish between the two models.
\item  $M$ and $\Mhat$ are
  ``close'' in a statistical sense (typically via total variation
  distance or another $f$-divergence), which implies via
  change-of-measure arguments that the decisions played by any
  algorithm which interacts with the models only via observations (in
  our case, $(\pi\ind{t},r\ind{t},o\ind{t})$) will be similar for both
  models. In other words, the models are difficult to distinguish.
\end{enumerate}
One then concludes that the algorithm must have large regret on either $M$
or $\Mhat$.

To make this approach concrete, classical results in statistical
estimation and supervised learning choose
the models $M$ and $\Mhat$ in a way that is \emph{oblivious} to the
algorithm under consideration
\citep{donoho1987geometrizing,yu1997assouad,tsybakov2008introduction}. However,
due to the interactive nature of the decision making problem, the
lower bound proof we present now will choose the models in an
\emph{adaptive} fashion.

\paragraph{Simplifications}
Rather than proving the full
    result in \cref{prop:lower_main_expectation}, we will make the following
    simplifying assumptions:
    \begin{itemize}
    \item There exists a constant $C$ such that
      \begin{equation}
        \label{eq:simplifying1}
        \Dkl{M(\pi)}{M'(\pi)}
        \leq{}C\cdot         \Dhels{M(\pi)}{M'(\pi)}
      \end{equation}
      for all $M, M'\in\cM$ and $\pi\in\Pi$.
    \item Rather than proving a lower bound that scales with
      $\compc(\cM)=\sup_{\Mhat\in\conv(\cM)}\compc(\cM\cup\crl{\Mhat},\Mhat)$, we
      will prove a weaker lower bound that scales with $\sup_{\Mhat\in\cM}\compc(\cM,\Mhat)$.
    \end{itemize}
    We refer to \citet{foster2023tight} for a full proof that removes
    these restrictions.

\paragraph{Preliminaries} We use the following technical lemma for the proof of \pref{prop:lower_main_expectation}.

\begin{lem}[Chain Rule for KL Divergence]
  \label{lem:kl_chain_rule}
  Let $(\cX_1,\filt_1),\ldots,(\cX_n,\filt_n)$ be a sequence of
  measurable spaces, and let $\cX\ind{i}=\prod_{i=t}^{i}\cX_t$ and
  $\filt\ind{i}=\bigotimes_{t=1}^{i}\filt_t$. For each $i$, let
  $\bbP\ind{i}(\cdot\mid{}\cdot)$ and $\bbQ\ind{i}(\cdot\mid{}\cdot)$ be probability kernels from
  $(\cX\ind{i-1},\filt\ind{i-1})$ to $(\cX_i,\filt_i)$. Let $\bbP$ and
  $\bbQ$ be
  the laws of $X_1,\ldots,X_n$ under
  $X_i\sim{}\bbP\ind{i}(\cdot\mid{}X_{1:i-1})$ and
  $X_i\sim{}\bbQ\ind{i}(\cdot\mid{}X_{1:i-1})$ respectively. Then it
  holds that
\begin{align}
  \label{eq:hellinger_chain_rule}
  \Dkl{\bbP}{\bbQ}
  &=
\En_{\bbP}\brk*{\sum_{i=1}^{n}\Dkl{\bbP\ind{i}(\cdot\mid{}X_{1:i-1})}{\bbQ\ind{i}(\cdot\mid{}X_{1:i-1})}}.
\end{align}
\end{lem}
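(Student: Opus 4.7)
The plan is to prove this by directly manipulating the log-likelihood ratio using the tower property of conditional expectation. First, I would dispose of the case where $\bbP \not\ll \bbQ$: by a standard argument, one can show that if the joint measures fail to be absolutely continuous, then there must exist some index $i$ and some event in $\cX\ind{i-1}$ of positive $\bbP$-measure on which $\bbP\ind{i}(\cdot\mid{}X_{1:i-1})\not\ll\bbQ\ind{i}(\cdot\mid{}X_{1:i-1})$, so both sides of \pref{eq:hellinger_chain_rule} equal $+\infty$ and the identity holds trivially.

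Assuming then that $\bbP \ll \bbQ$, I would fix, for each $i$, a common dominating measure $\nu_i$ on $(\cX_i,\filt_i)$ for the conditional kernels and write $p_i(\cdot\mid{}x_{1:i-1})$ and $q_i(\cdot\mid{}x_{1:i-1})$ for the densities of $\bbP\ind{i}(\cdot\mid{}x_{1:i-1})$ and $\bbQ\ind{i}(\cdot\mid{}x_{1:i-1})$ with respect to $\nu_i$. Then $\nu = \bigotimes_{i=1}^n \nu_i$ dominates both $\bbP$ and $\bbQ$, and by the Ionescu--Tulcea construction the joint densities factorize as
\[
p(x_{1:n}) = \prod_{i=1}^{n} p_i(x_i \mid x_{1:i-1}), \qquad q(x_{1:n}) = \prod_{i=1}^{n} q_i(x_i \mid x_{1:i-1}).
\]
Taking the logarithm of the ratio gives $\log \frac{p(x_{1:n})}{q(x_{1:n})} = \sum_{i=1}^{n} \log \frac{p_i(x_i \mid x_{1:i-1})}{q_i(x_i \mid x_{1:i-1})}$.

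The last step is to take $\En_{\bbP}$ of both sides and invoke the tower property term-by-term:
\[
\En_{\bbP}\!\left[\log \frac{p_i(X_i \mid X_{1:i-1})}{q_i(X_i \mid X_{1:i-1})}\right]
= \En_{\bbP}\!\left[\En_{\bbP}\!\left[\log \frac{p_i(X_i \mid X_{1:i-1})}{q_i(X_i \mid X_{1:i-1})} \,\Big|\, X_{1:i-1}\right]\right]
= \En_{\bbP}\!\left[\Dkl{\bbP\ind{i}(\cdot\mid{}X_{1:i-1})}{\bbQ\ind{i}(\cdot\mid{}X_{1:i-1})}\right],
\]
where the inner expectation collapses to the conditional KL divergence because, conditional on $X_{1:i-1}$, the variable $X_i$ is distributed according to $\bbP\ind{i}(\cdot\mid{}X_{1:i-1})$. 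Summing over $i$ yields \pref{eq:hellinger_chain_rule}.

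The only genuine obstacle is the measure-theoretic bookkeeping around absolute continuity of the conditional kernels, and ensuring that the integrability needed to apply Fubini/tower is in place; these are standard but require care, and can be handled cleanly by first assuming $\Dkl{\bbP}{\bbQ} < \infty$ (the infinite case being an exercise in tracing definitions). Everything else is just algebra on log-densities.
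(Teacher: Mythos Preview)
Your argument is correct and is the standard proof of the KL chain rule: factorize the joint density via Ionescu--Tulcea, decompose the log-likelihood ratio as a telescoping sum, and apply the tower property term-by-term, with the $\bbP\not\ll\bbQ$ case handled separately. The paper in fact states this lemma without proof (it is quoted as a standard information-theoretic tool and used, for instance, in the lower-bound argument for \pref{prop:lower_main_expectation}), so there is no paper proof to compare against; your write-up would serve perfectly well as the omitted proof.
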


\begin{proof}[\pfref{prop:lower_main_expectation}]

    \renewcommand{\pm}{p\subs{M}}%
    \newcommand{\Epim}{\En_{\pi\sim\pm}}%
    \newcommand{\Epimbar}{\En_{\pi\sim\pmhat}}%
    Fix $T\in\bbN$ and consider any fixed algorithm, which we recall is defined
    by a sequence of mappings $p\ind{1},\ldots,p\ind{T}$, where
    $p\ind{t}=p\ind{t}(\cdot\mid\hist\ind{t-1})$. Let $\bbP\sups{M}$
    denote the distribution over $\hist\ind{T}$ for this algorithm when $M$ is the true
    model, and let $\En\sups{M}$ denote the corresponding expectation.

    Viewed as a function of the history $\hist\ind{t-1}$, each $p\ind{t}$ is a random variable, and we can consider its expected value under the model $M$. To this end, for any model $M\in\cM$, let
    $$\pm\ldef\En\sups{M}\brk*{\frac{1}{T}\sum_{t=1}^{T}p\ind{t}}\in\Delta(\Pi)$$ be
    the algorithm's average action distribution when $M$ is the true
    model. Our aim is to show that we can find a model in $\cM$ for
    which the algorithm's regret is at least as large as the lower
    bound in \pref{eq:lower_main}. 

	\begin{figure}[tp]
	  \centering
	    \includegraphics[width=.7\textwidth]{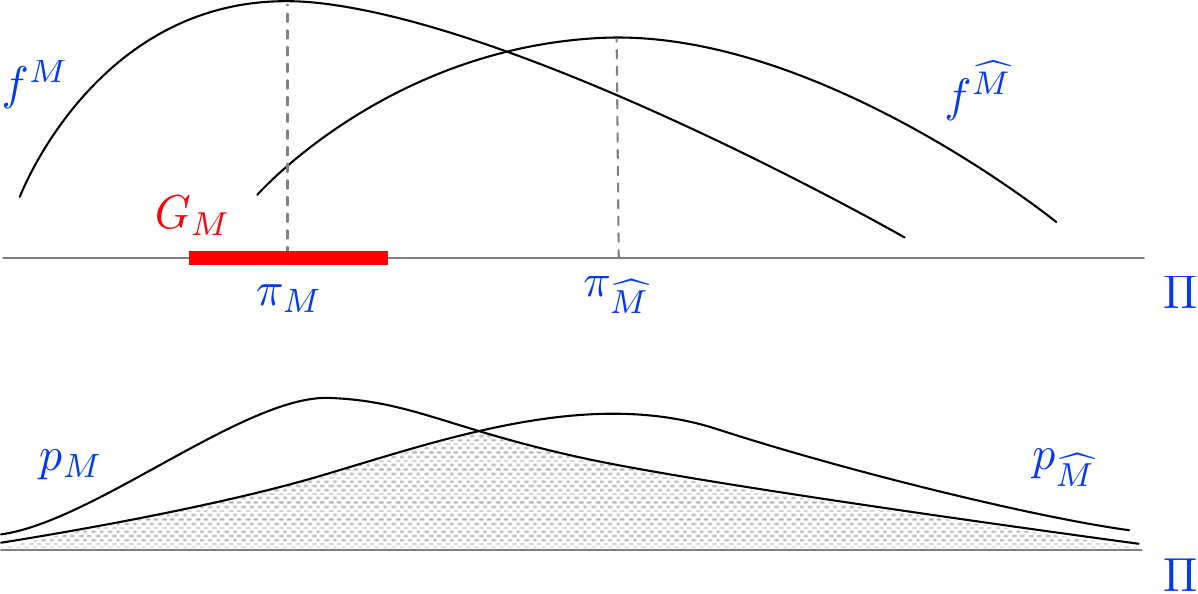}
	  \caption{Models $M$ and $\Mhat$ with corresponding mean rewards and average action distributions. The overlap between the action distributions is at least $0.9$, while near-optimal choices for one model incur large regret for the other.}
	  \label{fig:graphics_lower_bound}
	\end{figure}

Let $T\in\bbN$, and fix a value $\veps>0$ to be chosen momentarily. Fix an arbitrary model $\Mhat\in\cM$ and set
    \begin{align}
		\label{eq:worst_alternative_M}
      M = \argmax_{M\in\cM}\crl*{
      \En_{\pi\sim{}\pmhat}\brk*{\fm(\pim) - \fm(\pi)} \mid{} \En_{\pi\sim{}\pmhat}\brk*{\Dhels{M(\pi)}{\Mhat(\pi)}}\leq\veps^2
    },
  \end{align}
The model $M$ should be thought of as a ``worst-case alternative''
g  to $\Mhat$, but only for \emph{the specific algorithm under consideration}. We will show that
  the algorithm needs to have large regret on either $M$ or
  $\Mhat$. To this end, we establish some basic properties; let us
  abbreviate $\gm(\pi)=\fm(\pim)-\fm(\pi)$ going forward:
  \begin{itemize}
  \item For all models $M$, we have
    \begin{equation}
      \frac{1}{T}\En\sups{M}\brk*{\Reg(T)}=\En_{\pi\sim{}\pm}\brk*{\gm(\pi)}.\label{eq:lb_prelim0}
  \end{equation}
    So, to prove the desired lower bound, we need to show that either
    $\En_{\pi\sim{}\pm}\brk*{\gm(\pi)}$ or
    $\En_{\pi\sim{}\pmhat}\brk*{\gmhat(\pi)}$ is large.
  \item By the definition of the constrained \CompShort, we have
    \begin{equation}
      \label{eq:lb_prelim1}
      \En_{\pi\sim\pmhat}\brk*{\gm(\pi)} \geq{} \compc(\cM,\Mhat)
      \rdef \Delta,
    \end{equation}
    since by \eqref{eq:worst_alternative_M}, the model $M$ is the best
    response to a potentially suboptimal choice $\pmhat$. This is
    almost what we want, but there is a mismatch in models, since
    $\gm$ considers the model $M$
    while $\pmhat$ considers the model $\Mhat$.
  \item Using the chain rule for KL divergence, we have
      \begin{align*}
    \Dkl{\bbPmhat}{\bbPm}
    &=
      \En\sups{\Mhat}\brk*{\sum_{t=1}^{T}\En_{\pi\ind{t}\sim{}p\ind{t}}\Dkl{\Mhat(\pi\ind{t})}{M(\pi\ind{t})}}\\
    &\leq C\cdot \En\sups{\Mhat}\brk*{\sum_{t=1}^{T}\En_{\pi\ind{t}\sim{}p\ind{t}}\Dhels{\Mhat(\pi\ind{t})}{M(\pi\ind{t})}}
    &= CT\cdot{}\En_{\pi\sim\pmhat}\brk*{\Dhels{\Mhat(\pi)}{M(\pi)}}.
  \end{align*}
  To see why the first equality holds, we apply the chain rule to the
  sequence $\pi\ind{1}, z\ind{1}, \ldots, \pi\ind{T}, z\ind{T}$ with
  $z\ind{t}=(r\ind{t}, o\ind{t})$. Let us use the bold
  notation $\bz\ind{t}$ to refer to a random variable under
  consideration, and let $z\ind{t}$ refer to its realization. Then we have
    \begin{align*}
  &\Dkl{\bbPmhat}{\bbPm}\\
  &=
    \En\sups{\Mhat}\brk*{ \sum_{t=1}^{T}
		\Dkl{\bbPmhat(\bz\ind{t}|\cH\ind{t-1}, \pi\ind{t})}{\bbPm(\bz\ind{t}|\cH\ind{t-1}, \pi\ind{t})} 
	+ 
		\Dkl{\bbPmhat(\mb{\pi}\ind{t}|\cH\ind{t-1}}{\bbPm(\mb{\pi}\ind{t}|\cH\ind{t-1})}
	}  \\
 &= \En\sups{\Mhat}\brk*{ \sum_{t=1}^{T}\Dkl{\Mhat(\pi\ind{t})}{M(\pi\ind{t})}}
  \end{align*}
  since conditionally on $\cH\ind{t-1}$, the law of $\pi\ind{t}$ does not depend on the model.
  
  We can now choose
  $\veps = c_1\cdot\frac{1}{\sqrt{CT}}$, where $c_1>0$
  is a sufficiently small numerical constant, to ensure that
  \begin{align}
  \Dtvs{\bbPmhat}{\bbPm}\leq\Dkl{\bbPmhat}{\bbPm}\leq1/100.\label{eq:lb_prelim2}
  \end{align}
  In other
  words, with constant probability, the algorithm can fail to
  distinguish $M$ and $\Mhat$. 
\end{itemize}
Finally, we will make use of the fact that since rewards are in
$\brk*{0,1}$, we have
\begin{align}
  \label{eq:lb_prelim3}
  \En_{\pi\sim\pmhat}\brk*{\fm(\pi)-\fmhat(\pi)}
  \leq{}  \En_{\pi\sim\pmhat}\brk*{\Dtv{M(\pi)}{\Mhat(\pi)}}
  \leq{}  \sqrt{\En_{\pi\sim\pmhat}\brk*{\Dhels{M(\pi)}{\Mhat(\pi)}}}\leq\veps.
\end{align}
  \paragraph{Step 1}
Define
  $\Gm=\crl*{\pi\in\Pi\mid{}\gm(\pi)\leq{}\Delta/10}$.
  Observe that
\begin{align}
  \En_{\pi\sim\pm}\brk*{\gm(\pi)} \geq
  \frac{\Delta}{10}\cdot{}\pm(\pi\notin\Gm)
  &\geq{} \frac{\Delta}{10}\cdot{}(\pmhat(\pi\notin\Gm)-\Dtv{\pm}{\pmhat}) \\
  &\geq{} \frac{\Delta}{10}\cdot{}(\pmhat(\pi\notin\Gm)-1/10),
    \label{eq:stepone}
\end{align}
since $\Dtv{\pm}{\pmhat}\leq \Dtv{\bbPm}{\bbPmhat}\leq1/10$ by the
data-processing inequality and \pref{eq:lb_prelim2}. Going forward,
let us assume that
\begin{align}
\En_{\pi\sim\pmhat}\brk[\big]{\gmhat(\pi)}\leq{}\Delta/10,\label{eq:lb_assn}
\end{align}
or
else we are done, by \pref{eq:lb_prelim0}. Our aim is to show that
under this assumption, $\pmhat(\pi\notin\Gm)\geq{}1/2$, which will
imply that $\En_{\pi\sim\pm}\brk*{\gm(\pi)} \approxgeq \Delta$ via \pref{eq:stepone}.
\paragraph{Step 2}
By adding the inequalities \pref{eq:lb_assn} and \pref{eq:lb_prelim1},
we have that
\begin{align*}
  \fm(\pim) - \fmhat(\pimhat)
  &\geq{} \En_{\pi\sim\pmhat}\brk*{\gm(\pi) -
  \gmhat(\pi)} -
    \En_{\pi\sim\pmhat}\brk*{\abs{\fm(\pi)-\fmhat(\pi)}}\\
  &\geq\frac{9}{10}\Delta
  - \En_{\pi\sim\pmhat}\brk*{\abs{\fm(\pi)-\fmhat(\pi)}}.
\end{align*}
In addition, by \pref{eq:lb_prelim3}, we have
$\En_{\pi\sim\pmhat}\brk*{\abs{\fm(\pi)-\fmhat(\pi)}}\leq\veps$, so
that
\begin{align}
  \fm(\pim) - \fmhat(\pimhat) \geq \frac{9}{10}\Delta - \veps.
\end{align}
Hence, as long as $\veps\leq\frac{1}{10}\Delta$, which is implied by
\pref{eq:lower_regularity}, we have
\begin{align}
  \fm(\pim) - \fmhat(\pimhat) \geq{} \frac{4}{5}\Delta.
  \label{eq:steptwo}
\end{align}
\paragraph{Step 3}
Observe that if $\pi\in\Gm$, then
\[
  \abs{\fm(\pi)-\fmhat(\pi)}_{+}
  \geq{} \abs{\fm(\pim)-\fmhat(\pi)-\Delta/10}_{+}
  \geq{} \abs{\fm(\pim)-\fmhat(\pimhat)-\Delta/10}_{+}
  \geq{} \frac{7}{10}\Delta,
\]
where we have used \pref{eq:steptwo}. As a result, using
\pref{eq:lb_prelim3}, 
\begin{align*}
  \veps \geq{}
  \En_{\pi\sim\pmhat}\brk*{\abs{\fm(\pi)-\fmhat(\pi)}_{+}}
  \geq{} \frac{7}{10}\Delta\cdot\pmhat(\pi\in\Gm).
\end{align*}
Hence, since $\veps\leq{}\Delta/10$ by \pref{eq:lower_regularity}, we have
\[
\frac{\Delta}{10} \geq{}  \frac{7}{10}\Delta\cdot\pmhat(\pi\in\Gm),
\]
or $\pmhat(\pi\in\Gm)\leq{}1/7$. Combining this with \pref{eq:stepone}
gives
\begin{align*}
\frac{1}{T}\En\sups{M}\brk*{\Reg(T)} = \En_{\pi\sim\pm}\brk*{\gm(\pi)}
  \geq{} \frac{\Delta}{10}\cdot{}(1 - 1/7 -1/10)\geq\frac{\Delta}{20}.
\end{align*}

\paragraph{Finishing up}
Note that since the choice of $\Mhat\in\cM$ for this lower bound was
arbitrary, we are free to choose $\Mhat$ to maximize $\compc(\cM,\Mhat)$.
\end{proof}

\subsubsection{Examples for the Lower Bound}

We now instantiate the lower bound in
\pref{prop:lower_main_expectation} for concrete model classes of
interest. We begin by revisiting the examples at the beginning of the section.
\begin{examplecont}{ex:bandit_gaussian}
  Let us lower bound the constrained \CompShort for the Gaussian
  bandit problem from \pref{ex:bandit_gaussian}. Set
  $\Mhat(\pi)=\cN(1/2,1)$, and let $\crl{M_1,\ldots,M_A}\subseteq\cM$
  be a sub-family of models with $M_i(\pi)=\cN(\fmi(\pi),1)$, where
  $\fmi(\pi)\ldef{}\frac{1}{2}+\Delta\indic{\pi=i}$ for a
  parameter $\Delta$ whose value will be chosen in a moment. Observe that for all
  $i$,
  $\En_{\pi\sim{}p}\brk*{\Dhels{M_i(\pi)}{\Mhat(\pi)}}\leq\frac{1}{2}\Delta^2p(i)$
  by \pref{eq:hellinger_gaussian_ub}, and
  $\En_{\pi\sim{}p}\brk*{\fmi(\pimi)-\fmi(\pi)}=(1-p(i))\Delta$, so we
  can lower bound
  \begin{align*}
    \compc(\cM,\Mhat) &= \inf_{p\in\Delta(\Pi)}\sup_{M\in\cM}\crl*{\En_{\pi\sim{}p}\brk*{
  \fm(\pim) - \fm(\pi)}
  \mid\En_{\pi\sim{}p}\brk*{\Dhels{M(\pi)}{\Mhat(\pi)}}\leq\veps^2
                        } \\
    &\geq \inf_{p\in\Delta(\Pi)}\max_{i}\crl*{ (1-p(i))\Delta
  \mid p(i)\frac{\Delta^2}{2}\leq\veps^2
  }
  \end{align*}
For any $p$, there exists $i$ such that
$p(i)\leq{}1/A$. If we choose $\Delta=\veps\cdot{}\sqrt{2A}$, this
choice for $i$ will satisfy the constraint
$p(i)\frac{\Delta^2}{2}\leq\veps^2$, and we will be left with
\begin{align*}
  \compc(\cM,\Mhat) \geq{} (1-p(i))\Delta \geq{}\veps\sqrt{A/2},
\end{align*}
since $1-p(i)\geq{}1/2$.

Plugging this lower bound on the constrained \CompText into \pref{prop:lower_main_expectation} yields
    \[
      \En\brk*{\RegDM} \geq{}
      \bigomt(\sqrt{AT}).
  \]
\end{examplecont}

Generalizing the argument above, we can prove a lower bound on the \CompText for any model
class $\cM$ that ``embeds'' the multi-armed bandit problem in a
certain sense.
\begin{prop}
  \label{prop:hard_family}
Let a reference model $\Mhat$ be given, and suppose that a class $\cM$
contains a sub-class $\crl{M_1,\ldots,M_N}$ and collection of decisions
  $\pi_1,\ldots,\pi_N$ with the property that for
  all $i$:
  \begin{enumerate}
  \item
    $\Dhels{M_i(\pi)}{\Mhat(\pi)}\leq\beta^2\cdot\indic{\pi=\pi_i}$.
  \item $\fmi(\pimi)-\fmi(\pi)\geq{}\alpha\cdot\indic{\pi\neq\pi_i}$.
  \end{enumerate}
  Then
  \[
    \compc(\cM,\Mhat)\approxgeq{}\alpha\cdot\indic{\veps\geq{}\beta/\sqrt{N}}.
  \]
\end{prop}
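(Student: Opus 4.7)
\textbf{Proof plan for Proposition~\ref{prop:hard_family}.} The strategy is a direct pigeonhole argument of the same flavor as the lower bound on $\compc(\cM,\Mhat)$ in the Gaussian bandit example (continuation of \pref{ex:bandit_gaussian}): for any candidate learner distribution $p$, at least one of the $N$ hard alternatives $M_i$ must be nearly ``invisible'' under $p$ (small Hellinger gain) yet force large regret, and this $M_i$ witnesses the lower bound on the constrained DEC.

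Concretely, I would fix an arbitrary $p \in \Delta(\Pi)$ and invoke pigeonhole on the collection $\{\pi_1,\ldots,\pi_N\}$: since $\sum_{i=1}^{N} p(\pi_i) \leq 1$, there exists an index $i^\star \in [N]$ with $p(\pi_{i^\star}) \leq 1/N$. For this index, Hypothesis~1 of the proposition yields
\[
\En_{\pi \sim p}\brk*{\Dhels{M_{i^\star}(\pi)}{\Mhat(\pi)}} \leq \beta^2 \cdot p(\pi_{i^\star}) \leq \frac{\beta^2}{N},
\]
and Hypothesis~2 gives
\[
\En_{\pi \sim p}\brk*{f^{\sss{M_{i^\star}}}(\pi_{M_{i^\star}}) - f^{\sss{M_{i^\star}}}(\pi)} \geq \alpha \cdot (1 - p(\pi_{i^\star})) \geq \alpha\prn*{1 - \tfrac{1}{N}}.
\]

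Next I would verify that $M_{i^\star}$ is admissible in the sup defining $\compc(\cM,\Mhat)$. The Hellinger constraint requires the left-hand side of the first display to be at most $\veps^2$, which holds precisely when $\beta^2/N \leq \veps^2$, i.e.\ when $\veps \geq \beta/\sqrt{N}$. Under this regime, $M_{i^\star}$ is feasible for $p$, so the regret bound above certifies
\[
\sup_{M \in \cM}\crl*{\En_{\pi \sim p}\brk{\fm(\pim)-\fm(\pi)} \,\Big|\, \En_{\pi \sim p}\brk{\Dhels{M(\pi)}{\Mhat(\pi)}} \leq \veps^2} \geq \alpha\prn*{1-\tfrac{1}{N}}.
\]
Since $p$ was arbitrary, taking the infimum yields $\compc(\cM,\Mhat) \geq \alpha(1 - 1/N)$. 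The proposition is vacuous when $N = 1$ (otherwise $1-1/N \geq 1/2$), so absorbing the constant into $\approxgeq$ and multiplying by the indicator $\indic{\veps \geq \beta/\sqrt{N}}$ for the case $\veps < \beta/\sqrt{N}$ (where the statement is trivial) gives the claim.

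There is no real obstacle here: the proof is essentially identical to the multi-armed-bandit calculation already carried out for \pref{ex:bandit_gaussian}, with Hypothesis~1 playing the role of the Hellinger bound $\Dhels{M_i(\pi)}{\Mhat(\pi)}\lesssim \Delta^2 \indic{\pi=i}$ and Hypothesis~2 playing the role of the gap $(1-p(i))\Delta$. The only point worth being careful about is to note that the collection $\{\pi_1,\ldots,\pi_N\}$ need not exhaust $\Pi$, but pigeonhole on their total $p$-mass still yields an index with $p(\pi_{i^\star})\leq 1/N$, which is all that is needed.
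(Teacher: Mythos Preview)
Your proposal is correct and follows exactly the approach the paper indicates: the paper does not write out a separate proof for this proposition, but introduces it by saying ``Generalizing the argument above, we can prove a lower bound\ldots'' in reference to the Gaussian bandit calculation in the continuation of \pref{ex:bandit_gaussian}, and your pigeonhole argument is precisely that generalization.
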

The examples that follow can be obtained by applying this result with
an appropriate sub-family.

\begin{examplecont}{ex:bandit_structured_noise}
  Recall the bandit-type problem with structured noise from
  \pref{ex:bandit_structured_noise}, where we have
  $\cM=\crl{M_1,\ldots,M_A}$, with
  $M_i(\pi)=\cN(1/2,1)\indic{\pi\neq{}i}+\Ber(3/4)\indic{\pi=i}$. If
  we set $\Mhat(\pi)=\cN(1/2,1)$, then this family satisfies the
  conditions of \pref{prop:hard_family} with $\alpha=1/4$ and
  $\beta^2=2$. As a result, we have
  $\compc(\cMsn)\approxgeq{}\indic{\veps\geq{}\sqrt{2/A}}$, which yields
    \[
      \En\brk*{\RegDM} \approxgeq{} \bigoht(A)
    \]
    if we apply \cref{prop:lower_main_expectation}.

\end{examplecont}

\begin{examplecont}{ex:full_info_lower}
  Consider the full-information variant of the bandit
  setting in \pref{ex:full_info_lower}. By adapting the argument in
  \pref{ex:bandit_gaussian}, one can show that
  \begin{align*}
    \compc(\cM)\approxgeq \veps,
  \end{align*}
  which leads to a lower bound of the form
  \[
    \En\brk*{\RegDM} \approxgeq \sqrt{T}.
  \]
\end{examplecont}

Next, we revisit some of the structured bandit classes considered in \pref{sec:structured}.

\begin{example}
  \label{ex:linear_lower}
  Consider the linear bandit setting in \pref{sec:linear}, with $\cF=\crl*{\act\mapsto\tri{\theta,\phi(\act)}\mid{}\theta\in\Theta}$,
where $\Theta\subseteq\sB_2^{d}(1)$ is a parameter set and 
$\phi:\Pi\to\bbR^{d}$ is a fixed feature map that is known to the
learner. Let $\cM$ be the set of all reward distributions with
$\fm\in\cF$ and $1$-\subgaussian noise. Then
\[
  \compc(\cM)\approxgeq \veps\sqrt{d},
\]
which gives
\[
\En\brk*{\Reg}\approxgeq\sqrt{dT}.
\]
\end{example}

\begin{example}
  \label{ex:nonparametric_lower}
  Consider the Lipschitz bandit setting in \pref{sec:nonparametric},
  where $\Act$ is a metric space with metric $\met$, and
\[
\cF = \crl*{f:\Act\to\brk{0,1} \mid{} \text{$f$ is $1$-Lipschitz w.r.t $\met$}}.
\]
 Let $\cM$ be the set of all reward distributions with
 $\fm\in\cF$ and $1$-\subgaussian noise. Let $d>0$ be such that the
 covering number for $\Pi$ satisfies
 \[
   \Mcov(\Act,\veps) \geq \veps^{-d}.
   \]
 Then
\[
  \compc(\cM)\approxgeq \veps^{\frac{2}{d+2}},
\]
which leads to $\En\brk*{\Reg}\approxgeq T^{\frac{d+1}{d+2}}$.
\end{example}

See \citet{foster2021statistical,foster2023tight} for further details.

\subsection{\CompText and \etd: Application to Tabular RL}
\label{sec:dec_tabular}

In this section, we use the \CompText and \etd meta-algorithm to provide regret bounds for
the tabular reinforcement learning. This will be the most complex
example we consider in this section, and showcases the full power of
\CompShort for general decision making. In particular, the example
will show how the \CompShort can take advantage of the observations
$o\ind{t}$, in the form of trajectories. This will provide an alternative
to the optimistic algorithm (\ucbvi) we introduced in \pref{sec:mdp},
and we will build on this approach to give guarantees for
reinforcement learning with function approximation in \pref{sec:rl}.

\paragraph{Tabular reinforcement learning}
When we view tabular reinforcement learning as a special case of the
general decision making framework, $\cM$ is the collection
of all non-stationary MDPs $M=\crl*{\cS, \cA, \crl{\Pm_h}_{h=1}^{H}, \crl{\Rm_h}_{h=1}^{H},
    d_1}$ (cf. \cref{sec:mdp}), with state space $\cS=\brk{S}$, action space
$\cA=\brk{A}$, and horizon $H$. The decision space $\Act=\PiRNS$ is the
collection of all randomized, non-stationary Markov policies (cf. \pref{ex:rl}). We assume that
rewards are normalized such that $\sum_{h=1}^{H}r_h\in\brk*{0,1}$ almost
surely (so that $\cR=\brk*{0,1}$). Recall that for each
$M\in\cM$, $\crl{\Pm_h}_{h=1}^{H}$ and $\crl{\Rm_h}_{h=1}^{H}$
denote the associated transition kernels and reward distributions, and
$d_1$ is the initial state distribution.

\paragraph{Occupancy measures}
The results we present make use of the notion of \emph{occupancy
  measures} for an MDP $M$. Let $\bbP^{\sss{M},\pi}\prn{\cdot}$ denote the law of a trajectory evolving
under MDP $M$ and policy $\pi$. We define state occupancy
measures via
$$d^{\sss{M},\pi}_h(s)=\bbP^{\sss{M},\pi}(s_h=s)$$ and state-action occupancy
measures via 
$$d^{\sss{M},\pi}_h(s,a)=\bbP^{\sss{M},\pi}(s_h=s,a_h=a).$$ Note that we
have $d^{\sss{M},\pi}_1(s)=d_1(s)$ for all $M$ and $\pi$.

\paragraph{Bounding the \CompShort for tabular RL}
Recall, that to certify a bound on the \CompShort, we need to---given
any parameter $\gamma>0$ and estimator $\Mhat$, exhibit a distribution
(or, ``strategy'') $p$
such that
\[
\sup_{M\in\cM}\En_{\act\sim{}p}\brk*{\fm(\pim)-\fm(\pi)
  -\gamma\cdot\Dhels{M(\act)}{\Mhat(\act)}} \leq \compbar(\cM,\Mhat)
\]
for some upper bound $\compbar(\cM,\Mhat)$. For tabular RL, we will
choose $p$ using an algorithm called \emph{Policy Cover Inverse Gap
  Weighting}. As the name suggests, the approach
combines the inverse gap weighting technique introduced in the
multi-armed bandit setting with the notion of a \emph{policy cover}---that is, a collection of
policies that ensures good coverage on every state \citep{du2019latent,misra2020kinematic,jin2020reward}.

\begin{whiteblueframe}
	\begin{algorithmic}
    \label{alg:policy_cover_igw}
    \setstretch{1.3}
        \State \textsf{Policy Cover Inverse Gap Weighting (\pcigw)} 
        \State \textbf{parameters}: Estimated model $\Mhat$, Exploration parameter $\eta>0$.
       \State Define \emph{inverse gap weighted policy cover} $\PiCov=\crl{\pi_{h,s,a}}_{h\in\brk{H},s\in\brk{S},a\in\brk{A}}$ via
       \begin{equation}
         \pi_{h,s,a} =
         \argmax_{\pi\in\PiRNS}\frac{\dm{\Mhat}{\pi}_h(s,a)}{2HSA +
           \eta(\fmhat(\pimhat)-\fmhat(\pi))}.
         \label{eq:pc_igw1}
       \end{equation}
       \State For each policy
       $\pi\in\PiCov\cup\crl{\pimhat}$, define
       \begin{equation}
         p(\pi) = \frac{1}{\lambda + \eta(\fmhat(\pimhat)-\fmhat(\pi))},\label{eq:pc_igw2}
       \end{equation}
       where $\lambda\in\brk{1,2HSA}$ is chosen such that $\sum_{\pi}p(\pi)=1$.
       \State \textbf{return} $p$.
\end{algorithmic}
\end{whiteblueframe}

The algorithm consists of two steps. First, in \pref{eq:pc_igw1}, we
compute the collection of policies
$\PiCov=\crl{\pi_{h,s,a}}_{h\in\brk{H},s\in\brk{S},a\in\brk{A}}$ that
constitutes the policy cover. The basic idea here is that each policies
in the policy cover should balance (i) regret and (ii) \emph{coverage}---that
is---ensure that all the states are sufficiently reached, which means
we are exploring. We accomplish this by using policies of the form
\[
  \pi_{h,s,a} \ldef
  \argmax_{\pi\in\PiRNS}\frac{\dm{\Mhat}{\pi}_h(s,a)}{2HSA +
    \eta(\fmhat(\pimhat)-\fmhat(\pi))}
\]
which---for each $(s,a,h)$ tuple---maximize the ratio of the occupancy measure for $(s,a)$ at layer
$h$ to the
regret gap under $\Mhat$. This \emph{inverse gap weighted policy
  cover} balances exploration and exploration by trading off coverage
with suboptimality.
With the policy cover in hand, the second step of
the algorithm computes the exploratory distribution $p$
by simply applying inverse gap weighting to the elements of the cover
and the greedy policy $\pimhat$.

The bound on the \CompText for the \pcigw algorithm is as follows.
\begin{prop}
  \label{prop:igw_tabular}
  Consider the tabular reinforcement learning setting with
  $\sum_{h=1}^{H}r_h\in\cR\ldef{}\brk{0,1}$. For any $\gamma>0$ and
  $\Mhat\in\cM$, the \pcigw strategy with $\eta=\frac{\gamma}{21H^2}$, ensures that
  \begin{align*}
\sup_{M\in\cM}\En_{\act\sim{}p}\brk*{\fm(\pim)-\fm(\pi)
  -\gamma\cdot\DhelsX{\big}{M(\act)}{\Mhat(\act)}}
\approxleq{} \frac{H^3SA}{\gamma},
  \end{align*}
  and consequently certifies that $\comp(\cM,\Mhat)\approxleq\frac{H^3SA}{\gamma}$.
\end{prop}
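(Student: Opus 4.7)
The plan is to mimic the three-term decomposition from the proof of \pref{prop:igw_mab}, but at the level of policies rather than actions, and handle the ``off-policy'' term via the policy cover. Fix $M \in \cM$. I would write
\begin{align*}
\En_{\pi \sim p}\brk*{\fm(\pim) - \fm(\pi)}
= \underbrace{\En_{\pi \sim p}\brk*{\fmhat(\pimhat) - \fmhat(\pi)}}_{(\mathrm{I})}
+ \underbrace{\En_{\pi \sim p}\brk*{\fmhat(\pi) - \fm(\pi)}}_{(\mathrm{II})}
+ \underbrace{\fm(\pim) - \fmhat(\pimhat)}_{(\mathrm{III})}.
\end{align*}
Term $(\mathrm{I})$ is an ``exploration bias'' computed entirely under $\Mhat$: by the IGW form in \eqref{eq:pc_igw2} and the fact that $\lvert \PiCov \cup \{\pimhat\}\rvert \le HSA + 1$, each summand is bounded by $1/\eta$, yielding $(\mathrm{I}) \lesssim HSA/\eta$. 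Term $(\mathrm{II})$ is the familiar ``mean-error on the played policy''; since rewards are in $[0,1]$, $\lvert \fmhat(\pi)-\fm(\pi)\rvert \le \Dhel{M(\pi)}{\Mhat(\pi)}$, so AM-GM gives $(\mathrm{II}) \le \tfrac{1}{2\gamma} + \tfrac{\gamma}{2}\En_{\pi\sim p}\Dhels{M(\pi)}{\Mhat(\pi)}$.

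The main obstacle is term $(\mathrm{III})$, which is \emph{off-policy}: we need to bound $\fm(\pim) - \fmhat(\pim)$ (using $\fmhat(\pim) \le \fmhat(\pimhat)$) by an information gain measured under $p$, even though $\pim$ itself may be assigned negligible mass by $p$. Here I would invoke the Bellman residual decomposition (\pref{lem:bellman_residual}, applied with roles of $M$ and $\Mhat$ swapped) to write
\begin{align*}
\fm(\pim) - \fmhat(\pim) = \sum_{h=1}^{H} \Enm{\Mhat}{\pim}\brk*{r_h + \Vm{M}{\pim}_{h+1}(s_{h+1}) - \Qm{M}{\pim}_h(s_h,a_h)}.
\end{align*}
Each summand is an expectation under $\dm{\Mhat}{\pim}_h$ of a per-step residual that, by subadditivity of squared Hellinger and the data-processing inequality, is controlled by the Hellinger distance between the one-step transition/reward kernels of $M$ and $\Mhat$ at $(s_h,a_h)$. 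The defining property of the inverse-gap-weighted cover \eqref{eq:pc_igw1} is exactly that for every $(h,s,a)$,
\begin{align*}
\frac{\dm{\Mhat}{\pim}_h(s,a)}{2HSA + \eta(\fmhat(\pimhat)-\fmhat(\pim))}
\le \frac{\dm{\Mhat}{\pihsa}_h(s,a)}{2HSA + \eta(\fmhat(\pimhat)-\fmhat(\pihsa))},
\end{align*}
which lets me change measure from $\dm{\Mhat}{\pim}$ to $\dm{\Mhat}{\pihsa}$ at a multiplicative cost comparable to $p$'s regret gap on $\pim$ versus $\pihsa$. Summing over $(h,s,a)$ and recognizing $p(\pihsa) \cdot \dm{\Mhat}{\pihsa}_h$ as the contribution of the cover element to $\En_{\pi \sim p}\En_h^{\Mhat,\pi}[\cdot]$, I can trade the ensuing on-policy Bellman-residual sum (per layer) for $H \cdot \En_{\pi\sim p}\Dhels{M(\pi)}{\Mhat(\pi)}$ via the chain rule for Hellinger distance (cost $H$ from layers, an extra $H$ from the $r_h,V_{h+1}$ ranges), plus an $O(H \cdot HSA/\eta)$ additive cost from the normalizer.

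Combining $(\mathrm{I})$--$(\mathrm{III})$ and applying AM-GM once more on the coverage-ratio term leaves an expression of the shape
\begin{align*}
\En_{\pi\sim p}\brk*{\fm(\pim)-\fm(\pi)} \;\lesssim\; \frac{H^2SA}{\eta} + \prn*{\frac{\gamma}{2} + c\,\eta H^2}\En_{\pi\sim p}\Dhels{M(\pi)}{\Mhat(\pi)},
\end{align*}
and setting $\eta = \gamma/(21H^2)$ ensures the Hellinger coefficient is at most $\gamma$, so the offset version absorbs the second term and the first term is $\lesssim H^4 SA/\gamma$. A slightly tighter accounting (folding one power of $H$ into the $(\mathrm{III})$ decomposition rather than pulling it outside) gives the claimed $H^3 SA/\gamma$. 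The delicate part will be the bookkeeping in the $(\mathrm{III})$ step: verifying that the change-of-measure via the cover, combined with Hellinger's chain-rule/subadditivity across the $H$ layers, produces an on-policy quantity that matches $\En_{\pi\sim p}\Dhels{M(\pi)}{\Mhat(\pi)}$ rather than a strictly larger quantity.
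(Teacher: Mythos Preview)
Your approach is essentially the paper's: the three-term split $(\mathrm{I})+(\mathrm{II})+(\mathrm{III})$ is equivalent to the paper's two-term decomposition of $\En_{\pi\sim p}[\fm(\pim)-\fmhat(\pi)]$ combined with the change-of-measure inequality \pref{eq:com0} that converts $\fmhat(\pi)$ back to $\fm(\pi)$, and term $(\mathrm{III})$ is handled via the simulation lemma and the cover property exactly as you describe. The one concrete step you are missing---and which explains your $H^4\to H^3$ hand-wave---is this: do \emph{not} drop the negative piece via $\fmhat(\pim)\le\fmhat(\pimhat)$. Instead keep $(\mathrm{III}) = [\fm(\pim)-\fmhat(\pim)] - [\fmhat(\pimhat)-\fmhat(\pim)]$. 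When you apply AM-GM with a free parameter $\eta'$ to the occupancy-ratio sum, the cover bound \pref{eq:pcigw_multiplicative} produces a term $\frac{\eta H}{2\eta'}\,[\fmhat(\pimhat)-\fmhat(\pim)]$; setting $\eta'=\eta H/2$ makes this exactly cancel the retained negative piece, and simultaneously converts the normalizer cost $H^2SA/\eta'$ into $2HSA/\eta$. With this cancellation in place, the stated choice $\eta=\gamma/(21H^2)$ yields $H^3SA/\gamma$ directly, without any tightening after the fact.
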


We remark that it is also possible to prove this bound
non-constructively, by 
moving to the Bayesian \CompShort and adapting the posterior sampling
approach described in \pref{sec:dec_posterior}.

\begin{rem}[Computational efficiency]
  The \pcigw strategy can be implemented in a computationally efficient fashion. Briefly, the idea is
to solve \pref{eq:pc_igw1} by taking a dual approach and optimizing
over occupancy measures rather
than policies. With this parameterization, \pref{eq:pc_igw1} becomes a
linear-fractional program, which can then be transformed into a standard
linear program using classical techniques.
\end{rem}

\paragraph{How to estimate the model}
The bound on the \CompShort we proved using the \pcigw algorithm
assumes that $\Mhat\in\cM$, but in general, estimators from online
learning algorithm such as exponential weights will produce
$\Mhat\ind{t}\in\conv(\cM)$. While it is possible to show that the
same bound on the \CompShort holds for $\Mhat\in\conv(\cM)$, a
slightly more complex version of the algorithm is required to certify such a bound. To
run the \pcigw algorithm as-is, we can use a simple approach to obtain
a proper estimator $\Mhat\in\cM$.

Assume for simplicity that rewards are known,
i.e. $\Rm_h(s,a)=R_h(s,a)$ for all $M\in\cM$. Instead of directly working with an estimator for the entire model
$M$, we work with layer-wise estimators
$\AlgEsth[1],\ldots,\AlgEsth[H]$. At each round $t$, given the history $\crl*{(\act\ind{i},
r\ind{i},\obs\ind{i})}_{i=1}^{t-1}$, the layer-$h$ estimator $\AlgEsth[h]$ produces an estimate
$\Phat\ind{t}_h$ for the true transition kernel $\Pmstar_h$. We
measure performance of the estimator via layer-wise Hellinger error:
\begin{equation}
  \label{eq:layerwise_hellinger}
  \EstHelh \ldef{}\sum_{t=1}^{T}\En_{\act\ind{t}\sim{}p\ind{t}}\Enm{\Mstar}{\pi\ind{t}}\brk*{\Dhels{\Pmstar_h(s_h,a_h)}{\Phat\ind{t}_h(s_h,a_h)}}.
\end{equation}
We obtain an estimation algorithm for the full model $\Mstar$ by
taking $\Mhat\ind{t}$ as the MDP that has $\Phat_h\ind{t}$ as the
transition kernel for each layer $h$. This
algorithm has the following guarantee.
\begin{prop}
  \label{prop:layerwise_estimator}
  The estimator described above has
  \[
    \EstHel \leq \bigoh(\log(H))\cdot{}\sum_{h=1}^{H}\EstHelh[h].
  \]
In addition, $\Mhat\ind{t}\in\cM$.  
\end{prop}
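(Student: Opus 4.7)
The plan is to handle the two claims in turn. First, $\Mhat\ind{t}\in\cM$ follows immediately from the construction: $\Mhat\ind{t}$ is defined as the tabular MDP on $(\cS,\cA,H)$ that shares the known reward distributions with $\Mstar$ and has $\Phat\ind{t}_h$ as its transition kernel at each layer $h$; since each $\Phat\ind{t}_h$ is a valid probability kernel $\cS\times\cA\to\Delta(\cS)$, the resulting object lies in $\cM$ by definition. The real work is the Hellinger bound.

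The key observation is that under any policy $\pi$, the trajectory laws $\bbP\sups{\Mstar,\pi}$ and $\bbP\sups{\Mhat\ind{t},\pi}$ differ only through the transition kernels (since rewards and the policy are shared). Accordingly, I plan to reduce the main claim to the following ``subadditivity'' inequality for Hellinger distance on Markov processes: for any two MDPs $M,M'\in\cM$ sharing rewards and any policy $\pi$,
\begin{equation}
\Dhels{\bbP\sups{M,\pi}}{\bbP\sups{M',\pi}} \;\leq\; C\log(H)\cdot\sum_{h=1}^{H}\En\sups{M,\pi}\brk*{\Dhels{P\sups{M}_h(s_h,a_h)}{P\sups{M'}_h(s_h,a_h)}}.
\label{eq:hellinger_chain_rule_plan}
\end{equation}
Once \pref{eq:hellinger_chain_rule_plan} is established, the conclusion follows by applying it with $M=\Mstar$ and $M'=\Mhat\ind{t}$, taking expectation over $\pi\ind{t}\sim p\ind{t}$, and summing over $t=1,\ldots,T$, matching the definitions of $\EstHel$ in \pref{eq:hellinger_error} and $\EstHelh[h]$ in \pref{eq:layerwise_hellinger}.

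To prove \pref{eq:hellinger_chain_rule_plan}, the first instinct is to pass through KL. By \pref{lem:kl_chain_rule}, since only transitions differ,
\[
\Dkl{\bbP\sups{M,\pi}}{\bbP\sups{M',\pi}} = \sum_{h=1}^{H}\En\sups{M,\pi}\brk*{\Dkl{P\sups{M}_h(s_h,a_h)}{P\sups{M'}_h(s_h,a_h)}},
\]
which would give each layer-wise term in Hellinger via \pref{lem:kl_hellinger}, if only the density ratios were controlled. A global application of \pref{lem:kl_hellinger} at the \emph{trajectory} level is too lossy: the trajectory density ratio can be as large as $\exp(H\log S)$, which would produce an undesirable $H\log S$ factor rather than $\log H$. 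The main obstacle is therefore to localize the KL-to-Hellinger conversion to a single layer at a time.

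My intended workaround is a hybrid/telescoping argument. Define the hybrid MDPs $M^{(k)}$ that use $P\sups{M}_h$ for $h\leq k$ and $P\sups{M'}_h$ for $h>k$, and apply the triangle inequality for Hellinger distance along $M=M^{(H)}\to M^{(H-1)}\to\cdots\to M^{(0)}=M'$. Consecutive pairs $M^{(k)},M^{(k-1)}$ differ only at layer $k$, so by the data-processing inequality their trajectory Hellinger distance reduces to the layer-$k$ Hellinger term, evaluated under the roll-in induced by $\pi$ on $M$. Combining this telescoping with a truncation of the layer-wise density ratios at a threshold of order $H$ (and absorbing the truncated tail using $\Dhelshort\leq 2$) is where the $\log H$ factor arises, in the same spirit as \pref{lem:kl_hellinger} but applied per layer rather than globally. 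The careful execution of this truncation---ensuring the tail contribution is absorbed without polynomial loss in $S$, $A$, or $H$---is the step I expect to be the main technical obstacle.
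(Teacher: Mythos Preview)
Your reduction to the layer-wise inequality \pref{eq:hellinger_chain_rule_plan} is exactly right, and the observation that a global KL-to-Hellinger conversion loses a factor polynomial in $S$ is correct. However, the telescoping argument you propose does not recover the $\log H$ factor. Your hybrid computation is actually \emph{exact}: since $M^{(k)}$ and $M^{(k-1)}$ share all conditionals except the layer-$k$ transition, and the roll-out after layer $k$ is identical, the trajectory Hellinger distance collapses precisely to
\[
\Dhels{\bbP^{\sss{M^{(k)}},\pi}}{\bbP^{\sss{M^{(k-1)}},\pi}} \;=\; \En^{\sss{M},\pi}\brk*{\Dhels{P^{\sss{M}}_k(s_k,a_k)}{P^{\sss{M'}}_k(s_k,a_k)}}.
\]
There are no density ratios left to truncate. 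The triangle inequality for (unsquared) Hellinger distance along the $H$ hybrids, followed by Cauchy--Schwarz to pass to squared Hellinger, then gives an unavoidable factor of $H$, not $\log H$. The truncation idea you describe is the mechanism behind \pref{lem:kl_hellinger}, but it operates on a single density ratio, not on a sum of already-decoupled terms; grafting it onto the telescoping chain does not improve the $H$ prefactor.

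The paper's route is simply to invoke \pref{lem:hellinger_chain_rule} (subadditivity of squared Hellinger distance), which is stated as a black-box technical tool in the appendix and delivers exactly the $O(\log H)$ factor. Applied to the sequence $(s_1,a_1,r_1,\ldots,s_H,a_H,r_H)$ with $n=O(H)$, and noting that the conditional laws for $a_h$, $r_h$ agree under $\Mstar$ and $\Mhat\ind{t}$ so only the $s_{h+1}$ conditionals contribute, one obtains \pref{eq:hellinger_chain_rule_plan} directly. The proof of \pref{lem:hellinger_chain_rule} itself (which the paper does not give) proceeds by a multiplicative decomposition of the Hellinger affinity $\En_{\bbP}\brk[\big]{\prod_i\sqrt{q_i/p_i}}$ together with a truncation \emph{inside the product}, which is where the $\log n$ arises; this is genuinely different from per-layer telescoping.
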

For each layer, we can obtain $\EstHelh[h]\leq\bigoht(S^2A)$ using the
averaged exponential weights algorithm, by applying the approach described in
\pref{sec:online_estimation_hellinger} to each layer. That is, for
each layer, we obtain $\Phat_h\ind{t}$ by running averaged exponential
weights with the loss
$\logloss\ind{t}(P_h)=-\log(P_h(s_{h+1}\mid{}s_h,a_h))$. We obtain
$\EstHelh[h]\leq\bigoht(S^2A)$ with this approach because there are $S^2A$
parameters for the transition distribution at each layer.

\paragraph{A lower bound on the \CompShort} We state, but do not prove a
complementary lower bound on the \CompShort for tabular RL.
\begin{prop}
  \label{prop:lower_tabular}
  Let $\cM$ be the class of tabular MDPs with $S\geq{}2$ states, $A\geq{}2$ actions,
  and $\sum_{h=1}^{H}r_h\in\cR\ldef\brk*{0,1}$. If
  $H\geq{}2\log_2(S/2)$, then
\[
  \compc(\cM) \approxgeq \veps\sqrt{HSA}.
\]
\end{prop}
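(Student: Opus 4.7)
The plan is to exhibit, for any reference model $\Mhat\in\cM$, a hard sub-family $\{M_{s^\star,a^\star}\}\subset\cM$ that "embeds" $\Theta(HSA)$ bandit problems into the MDP, then invoke a pigeonhole/change-of-measure argument in the spirit of Proposition 6.10 (rather than Proposition 6.10 directly, since the visit indicator is random here).

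\emph{Construction.} Let $H_1=\lceil\log_2(S/2)\rceil$, so that the assumption $H\ge 2\log_2(S/2)$ guarantees $H_2\ldef H-H_1\ge H/2$. Designate one root state, $S/2-1$ internal tree states, and $S/2$ "leaf" states. In layers $1,\ldots,H_1$ the dynamics form a deterministic binary tree: two of the $A$ actions send the agent to the two children, and after $H_1$ steps the agent lands at one of $S/2$ leaves as a deterministic function of its tree actions. In layers $H_1+1,\ldots,H$ the agent self-loops at its leaf, and at each step chooses one of $A$ actions whose only effect is on the reward. Let $\Mhat$ assign to every $(s,a,h)$ a fixed Bernoulli reward (rescaled to fit the $\sum_h r_h\in[0,1]$ normalization); for each $(s^\star,a^\star)\in[S/2]\times[A]$ define $M_{s^\star,a^\star}$ identically to $\Mhat$ except with a per-step bonus of size $\Delta$ whenever the agent plays $a^\star$ at leaf $s^\star$ in the bandit layers.

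\emph{Key calculations.} For any policy $\pi$, let $V(\pi;s^\star,a^\star)=\sum_{h=H_1+1}^H\indic{(s_h,a_h)=(s^\star,a^\star)}$ be the number of visits. The chain rule for KL (followed by $\Dhels\le \Dkl$), together with the fact that $M_{s^\star,a^\star}$ and $\Mhat$ share transitions and differ only in reward at one $(s,a)$ pair, gives $\Dhels{M_{s^\star,a^\star}(\pi)}{\Mhat(\pi)}\lesssim \Delta^2\cdot\En^{\Mhat,\pi}[V(\pi;s^\star,a^\star)]$. Meanwhile the value gap under $M_{s^\star,a^\star}$ is proportional to $\Delta\cdot(H_2-\En^{M,\pi}[V(\pi;s^\star,a^\star)])$, up to the reward scaling.

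\emph{Pigeonhole selection of the target.} For every $\pi$, exactly one action is played per bandit layer at the reached leaf, so $\sum_{(s^\star,a^\star)}V(\pi;s^\star,a^\star)=H_2$ identically. Averaging over the $(S/2)\cdot A=SA/2$ candidate targets, for any distribution $p$ there exists $(s^\star,a^\star)$ with
\[
\En_{\pi\sim p}[V(\pi;s^\star,a^\star)]\;\le\;\frac{2H_2}{SA}.
\]
For this choice, $\En_p\bigl[\Dhels{M_{s^\star,a^\star}(\pi)}{\Mhat(\pi)}\bigr]\lesssim \Delta^2 H_2/(SA)$, while the regret stays of order $\Delta H_2$ (the pigeonhole level $2/(SA)$ is a vanishing fraction of $H_2$, so the gap term $H_2-\En_p[V]\gtrsim H_2$). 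Enforcing the Hellinger constraint $\En_p[\Dhels]\le\veps^2$ yields $\Delta\asymp\veps\sqrt{SA/H_2}$, and substituting back gives regret
\[
\gtrsim \Delta\cdot H_2\;\asymp\;\veps\sqrt{SA\cdot H_2}\;\gtrsim\;\veps\sqrt{HSA},
\]
using $H_2\ge H/2$. Since $p$ was arbitrary and $\Mhat$ was arbitrary, this lower-bounds $\compc(\cM)$ as claimed.

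\emph{Main obstacles.} The chief subtlety is managing the reward normalization $\sum_h r_h\in[0,1]$: the per-step bonus must be scaled by $1/H$, which rescales both the regret and (crucially not) the Hellinger distance, since Hellinger between two scaled Bernoullis depends only on their success probabilities. One has to verify carefully that the bookkeeping absorbs the $1/H$ factors into $\Delta$ rather than the regret lower bound, or, equivalently, work in the per-step $r_h\in[0,1]$ normalization and rescale at the end. A secondary obstacle is that Proposition 6.10 requires the Hellinger to be supported exactly on $\pi=\pi_i$; here the visit count $V(\pi;s^\star,a^\star)$ is a random variable, so rather than invoking the proposition verbatim, one argues directly with the definition of $\compc$: fix $p$, apply pigeonhole to pick the worst-case $(s^\star,a^\star)$ for this particular $p$, and read off both the Hellinger and the regret bounds. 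This is exactly the structure of the proof of Proposition 6.15 (tabular DEC lower bound), which can serve as a template.
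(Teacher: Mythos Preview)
The paper explicitly states this proposition without proof, so there is no in-paper argument to compare against. That said, your construction contains a genuine gap in the reward-normalization step, and as written it does not deliver $\veps\sqrt{HSA}$.

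The two key estimates you use, $\Dhels{M_{s^\star,a^\star}(\pi)}{\Mhat(\pi)}\lesssim \Delta^2\,\En[V]$ and $\mathrm{regret}\gtrsim\Delta\cdot(H_2-\En[V])$, cannot both hold for the same $\Delta$ once $\sum_h r_h\in[0,1]$ is enforced. With $H_2$ reward-bearing steps, each reward must be of the form $r_h=c\cdot\mathrm{Ber}(q)$ with $c\le 1/H_2$. If $\Delta$ is the per-step \emph{mean} bonus (so the regret formula is right), the Bernoulli parameter shift is $\delta=\Delta/c\ge \Delta H_2$, and the per-step Hellinger is $\approx\delta^2\ge(\Delta H_2)^2$, not $\Delta^2$. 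If instead $\Delta$ is the parameter shift (so Hellinger $\approx\Delta^2 V$ is right), the mean bonus is $c\Delta\le\Delta/H_2$ and the regret collapses to $\approx\Delta$, not $\Delta H_2$. Your own observation that ``Hellinger between scaled Bernoullis depends only on their success probabilities'' is precisely the problem: the rescaling leaves Hellinger fixed while shrinking the reward gap, so the ratio changes. Equivalently, working with $r_h\in[0,1]$ and ``rescaling at the end'' divides regret by $H$ while the Hellinger constraint is untouched, turning the unnormalized $\veps\sqrt{HSA}$ into $\veps\sqrt{SA/H}$.

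Carrying your pigeonhole through with consistent bookkeeping, the per-step Bernoulli-reward family yields only $\compc(\cM)\gtrsim\veps\sqrt{SA/H}$---strictly weaker than the one-step variant (tree phase, then a single bandit step with immediate absorption into good/bad terminal states), which already gives $\veps\sqrt{SA}$ and is exactly what the paper's companion exercise asks for. The overall architecture you propose (tree to fan out over $S$ leaves, then pigeonhole over $(s,a)$ pairs) is the right template for that $\veps\sqrt{SA}$ bound. But the additional $\sqrt{H}$ factor claimed in the proposition cannot come from spreading independent Bernoulli rewards over the horizon: any such spreading inflates the per-visit Hellinger by exactly the factor needed to cancel the horizon gain in regret. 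Obtaining the full $\veps\sqrt{HSA}$ requires a different mechanism than the one in your proposal.
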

Using \pref{prop:lower_main_expectation}, this gives $\En\brk{\Reg}\approxgeq\sqrt{HSAT}$.

\subsubsection{Proof of \pref{prop:igw_tabular}}
Toward proving \pref{prop:igw_tabular}, we provide some
general-purpose technical lemmas which will find further use in
\pref{sec:rl}. First, we provide a \emph{simulation
  lemma}, which allow us to decompose
the difference in value functions for two MDPs into errors between
their per-layer reward functions and transition probabilities.
  \begin{lem}[Simulation lemma]
    \label{lem:simulation_basic}
      For any pair of MDPs $M=(\Pm,\Rm)$ and $\Mhat=(\Pmhat,\Rmhat)$ with
  the same initial state distribution and
  $\sum_{h=1}^{H}r_h\in\brk*{0,1}$, we have
  \begin{align}
  \label{eq:simulation}
    \abs*{\fm(
    \pi)-\fmhat(\pi)}
  &\leq{} \Dtv{M(\act)}{\Mhat(\act)} \\
  &\leq \Dhel{M(\act)}{\Mhat(\act)}
  \leq{} \frac{1}{2\eta} + \frac{\eta}{2}\Dhels{M(\act)}{\Mhat(\act)}\quad\forall{}\eta>0,
  \end{align}
  and
    \begin{align}
      \label{eq:simulation_basic1}
      &\fm(\pi)- \fmhat(\pi) \notag\\
      &=         \sum_{h=1}^{H}\Enm{\Mhat}{\pi}\brk*{\brk*{(\Pm_h-\Pmhat_h)\Vmpi_{h+1}}(s_h,a_h)
        }
        + \sum_{h=1}^{H}\Enm{\Mhat}{\pi}\brk*{
        \En_{r_h\sim\Rm_h(s_h,a_h)}\brk{r_h}
              - \En_{r_h\sim\Rmhat_h(s_h,a_h)}\brk{r_h}
        }\\
      &\leq{}
              \sum_{h=1}^{H}\Enm{\Mhat}{\pi}\brk*{\Dtv{\Pm_h(s_h,a_h)}{\Pmhat_h(s_h,a_h)}
              + \Dtv{\Rm_h(s_h,a_h)}{\Rmhat_h(s_h,a_h)}
        }.
        \label{eq:simulation_basic2}
    \end{align}
  \end{lem}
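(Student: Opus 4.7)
\textbf{Proof proposal for \pref{lem:simulation_basic}.}

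The plan is to prove the three claims in sequence, with the first and third being essentially bookkeeping around the variational characterization of TV distance, and the second being a direct consequence of the Bellman residual decomposition already established in \pref{lem:bellman_residual}.

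For the first chain of inequalities, I would begin by writing $\fm(\pi) - \fmhat(\pi) = \En_{(r,o)\sim M(\pi)}[r] - \En_{(r,o)\sim \Mhat(\pi)}[r]$, where $r = \sum_{h=1}^{H} r_h \in [0,1]$ almost surely. The variational representation \eqref{eq:var_repr_tv} of TV distance then immediately yields $|\fm(\pi) - \fmhat(\pi)| \leq \Dtv{M(\pi)}{\Mhat(\pi)}$. The bound $\Dtv{M(\pi)}{\Mhat(\pi)} \leq \Dhel{M(\pi)}{\Mhat(\pi)}$ is \pref{lem:divergence_inequality}. The final inequality is an application of the AM-GM inequality $ab \leq \tfrac{a^2}{2} + \tfrac{b^2}{2}$ with $a = 1/\sqrt{\eta}$ and $b = \sqrt{\eta}\cdot\Dhel{M(\pi)}{\Mhat(\pi)}$.

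For \pref{eq:simulation_basic1}, the starting point is \eqref{eq:bellman_residual1}, which gives
\[
\fm(\pi) - \fmhat(\pi) = \sum_{h=1}^{H} \Enm{\Mhat}{\pi}\brk*{Q^{\sss{M},\pi}_h(s_h,a_h) - r_h - V^{\sss{M},\pi}_{h+1}(s_{h+1})}.
\]
Next I would expand $Q^{\sss{M},\pi}_h$ via its own Bellman equation under $M$: $Q^{\sss{M},\pi}_h(s,a) = \En_{r_h \sim \Rm_h(s,a)}[r_h] + \En_{s' \sim \Pm_h(s,a)}[V^{\sss{M},\pi}_{h+1}(s')]$. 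Since the expectation in each summand is under the $\Mhat$-trajectory, the terms involving $r_h$ and $s_{h+1}$ become expectations under $\Rmhat_h$ and $\Pmhat_h$ respectively, once we condition on $(s_h, a_h)$. Subtracting yields precisely the two difference terms $[(\Pm_h - \Pmhat_h) V^{\sss{M},\pi}_{h+1}](s_h,a_h)$ and $\En_{\Rm_h} r_h - \En_{\Rmhat_h} r_h$, giving \pref{eq:simulation_basic1}.

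Finally, to pass from \pref{eq:simulation_basic1} to \pref{eq:simulation_basic2}, I would use the variational representation of TV once more: for any function $g: \cS \to [a,b]$ and distributions $P, Q$ on $\cS$, $|\En_P[g] - \En_Q[g]| \leq (b-a)\,\Dtv{P}{Q}$. Applying this to $g = V^{\sss{M},\pi}_{h+1}$, which lies in $[0,1]$ because the cumulative reward lies in $[0,1]$ almost surely, bounds the transition term by $\Dtv{\Pm_h(s_h,a_h)}{\Pmhat_h(s_h,a_h)}$; applying it to $g(r) = r$ on the reward distributions (which take values in $[0,1]$ since $r_h \leq \sum_h r_h \leq 1$ under nonnegativity) bounds the reward term by $\Dtv{\Rm_h(s_h,a_h)}{\Rmhat_h(s_h,a_h)}$. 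Summing over $h$ and taking absolute values inside the expectation yields \pref{eq:simulation_basic2}. The only subtle point — and the closest thing to an obstacle — is ensuring the range conditions on $V^{\sss{M},\pi}_{h+1}$ and on $r_h$ are genuinely justified by the assumption $\sum_{h} r_h \in [0,1]$, which requires implicit nonnegativity of per-step rewards; this is the standard convention in this text and can be stated explicitly at the start of the proof.
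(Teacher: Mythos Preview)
Your proposal is correct and follows essentially the same route as the paper's proof: the first chain via the variational representation of TV plus \pref{lem:divergence_inequality} and AM-GM, and \pref{eq:simulation_basic1}--\pref{eq:simulation_basic2} by starting from \eqref{eq:bellman_residual1}, expanding $Q^{\sss{M},\pi}_h$ via its Bellman equation under $M$, and bounding each difference by TV distance using $V^{\sss{M},\pi}_{h+1}\in[0,1]$. The only addition is your explicit remark about needing per-step nonnegativity for the range conditions, which the paper leaves implicit.
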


Next, we provide a ``change-of-measure'' lemma, which allows one to
move from between quantities involving an estimator $\Mhat$ and those
involving another model $M$.

\begin{lem}[Change of measure for RL]
  \label{lem:change_of_measure}
  Consider any MDP $M$ and reference MDP $\Mhat$ which satisfy $\sum_{h=1}^{H}r_h\in\brk*{0,1}$. For all
  $p\in\Delta(\Pi)$ and $\eta>0$ we have
  \begin{align}
  &\En_{\pi\sim{}p}\brk*{\fm(\pim) - \fm(\pi)}\notag\\
  &\leq{}
    \En_{\pi\sim{}p}\brk*{\fm(\pim) - \fmhat(\pi)}
    + \eta\En_{\pi\sim{}p}\brk*{
    \Dhels{M(\act)}{\Mhat(\act)}
    } + \frac{1}{4\eta}.     \label{eq:com0}
  \end{align}
and
    \begin{align}
\notag
    &\En_{\pi\sim{}p}\Enm{\Mhat}{\pi}\brk*{\sum_{h=1}^{H}    \Dtvs{\Pm(s_h,a_h)}{\Pmhat(s_h,a_h)}
    + \Dtvs{\Rm(s_h,a_h)}{\Rmhat(s_h,a_h)}}\\
    &\leq 8H\En_{\pi\sim{}p}\brk*{\Dhels{M(\act)}{\Mhat(\act)}}.       \label{eq:com1}
  \end{align}
\end{lem}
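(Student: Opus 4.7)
My plan is to handle the two inequalities separately, as the first follows essentially from the simulation lemma plus AM-GM, while the second requires a genuine information-theoretic chain rule along the trajectory.

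For \eqref{eq:com0}, I would add and subtract $\fmhat(\pi)$ inside the regret term, writing
\[
\fm(\pim) - \fm(\pi) = \bigl[\fm(\pim) - \fmhat(\pi)\bigr] + \bigl[\fmhat(\pi) - \fm(\pi)\bigr].
\]
Taking expectation over $\pi \sim p$, the first bracket reproduces the first term on the right-hand side of \eqref{eq:com0}. For the second bracket, the simulation lemma \eqref{eq:simulation} gives $|\fm(\pi) - \fmhat(\pi)| \leq \Dhel{M(\pi)}{\Mhat(\pi)}$. Applying AM-GM in the form $\sqrt{x} \leq \eta x + \tfrac{1}{4\eta}$ (valid for $x \geq 0$, obtained from $\sqrt{\alpha x \cdot \alpha^{-1}} \leq \tfrac{1}{2}(\alpha x + \alpha^{-1})$ with $\alpha = 2\eta$) to $x = \Dhels{M(\pi)}{\Mhat(\pi)}$ pointwise and averaging over $p$ produces the desired bound.

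For \eqref{eq:com1}, the first step is the pointwise comparison $\Dtvs{\bbP}{\bbQ} \leq \Dhels{\bbP}{\bbQ}$ from \cref{lem:divergence_inequality}, which reduces the task to showing
\[
\sum_{h=1}^{H}\En^{\sss{\Mhat},\pi}\brk*{\Dhels{\Pm_h(s_h,a_h)}{\Pmhat_h(s_h,a_h)} + \Dhels{\Rm_h(s_h,a_h)}{\Rmhat_h(s_h,a_h)}} \leq 8H \cdot \Dhels{M(\pi)}{\Mhat(\pi)}
\]
for every fixed $\pi$; averaging over $\pi \sim p$ then delivers \eqref{eq:com1}. The core tool is a chain-rule-style inequality for Hellinger distance along the trajectory. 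I would exploit the Bhattacharyya factorization $1 - \tfrac{1}{2}\Dhels{\bbP}{\bbQ} = \int \sqrt{d\bbP\, d\bbQ}$ together with the fact that the trajectory density $\bbP^{\sss{M},\pi}$ factorizes as $d_1(s_1)\prod_h \pi(a_h|s_h)\Pm_h(s_{h+1}|s_h,a_h)\Rm_h(r_h|s_h,a_h)$, with the same initial distribution and policy shared between $M$ and $\Mhat$. Equivalently, passing through the R\'enyi-$\tfrac{1}{2}$ divergence $D_{1/2} = -2\log(1 - D_H^2/2)$ and using the elementary inequalities $x \leq -\log(1-x) \leq 2x$ on $x \in [0, 1/2]$ converts the multiplicative decomposition of Bhattacharyya coefficients into an additive chain rule, at the cost of constants. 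The per-step R\'enyi divergences are then related to per-step Hellinger squared in both directions, producing the sum of per-step terms on the left controlled by a constant times $H$ times the joint Hellinger distance on the right.

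The main obstacle is the $H$ factor: subadditivity gives the reverse inequality $\Dhels{M(\pi)}{\Mhat(\pi)} \leq \sum_h \En^{\sss{\Mhat},\pi}[\Dhels{\cdot}{\cdot}]$ for free, but the direction needed here fails without a horizon-dependent multiplier, as one sees by setting every per-step squared Hellinger to its maximum $2$: the joint squared Hellinger saturates at $2$ while the per-step sum grows linearly in $H$. The R\'enyi-$\tfrac{1}{2}$ passage above pinpoints exactly where the $H$ enters — from the saturation of the $-\log(1-x) \leq 2x$ bound when individual per-step Hellingers are constant — and once one tracks the constants through the two conversions (Hellinger $\to$ R\'enyi and back), the factor $8H$ drops out. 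The contributions from transitions and rewards are handled identically since both appear as independent conditional factors in the trajectory density at each step.
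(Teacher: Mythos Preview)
For \eqref{eq:com0} your argument is correct and matches the paper: add and subtract $\fmhat(\pi)$, then bound $\abs{\fmhat(\pi)-\fm(\pi)}\leq\Dhel{M(\pi)}{\Mhat(\pi)}$ via the simulation lemma and apply AM-GM.

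For \eqref{eq:com1} there is a genuine gap. The ``multiplicative decomposition of Bhattacharyya coefficients'' you invoke holds for \emph{product measures}, not for \emph{chains}. The trajectory density does factor as a product of conditionals, but the Bhattacharyya coefficient $\int\sqrt{p^{\sss{M},\pi}(\tau)\,p^{\sss{\Mhat},\pi}(\tau)}\,d\tau$ is an integral of a product, not a product of integrals: after integrating out $r_h$ and $s_{h+1}$ you obtain per-step Bhattacharyya coefficients that still depend on the random pair $(s_h,a_h)$, which remains coupled through the dynamics. There is no deterministic product to take a logarithm of, so your R\'enyi-$\tfrac12$ passage does not yield an additive chain rule in the direction you need. (R\'enyi divergences of order $\alpha\neq1$ do not admit a chain rule for conditional distributions; the subadditivity that does hold goes the wrong way.)

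The paper's route is more direct and sidesteps this entirely. For each fixed $h$, view the trajectory as a pair $(X,Y)$ with $X$ the history through $(s_h,a_h)$ and $Y=s_{h+1}$ (respectively $Y=r_h$). By the Markov property the conditional of $Y$ given $X$ is $\Pmhat_h(\cdot\mid s_h,a_h)$ under $\Mhat$ and $\Pm_h(\cdot\mid s_h,a_h)$ under $M$, so \cref{lem:hellinger_pair} together with data processing gives
\[
\Enm{\Mhat}{\pi}\brk*{\Dhels{\Pm_h(s_h,a_h)}{\Pmhat_h(s_h,a_h)}}\leq 4\,\Dhels{M(\pi)}{\Mhat(\pi)}.
\]
Adding the analogous reward bound gives a factor $8$ per layer, and summing over $h$ gives $8H$; the inequality $\Dtvs{\cdot}{\cdot}\leq\Dhels{\cdot}{\cdot}$ finishes. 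The $H$ here simply counts how many times the pair lemma is applied against the \emph{same} joint Hellinger distance; it is not a saturation artifact of a log-linearization.
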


\begin{proof}[\pfref{prop:igw_tabular}]%
  \newcommand{\allpolicies}{\PiCov\cup\crl{\pimhat}}%
  \newcommand{\etaalt}{\eta'}%
Let $M\in\cM$ be fixed. The main effort in the proof will be to bound
the quantity
  \begin{align*}
    \En_{\pi\sim{}p}\brk*{\fm(\pim) - \fmhat(\act)}
  \end{align*}
  in terms of the quantity on the right-hand side of \pref{eq:com1},
  then apply change of measure (\pref{lem:change_of_measure}). We
  begin with the decomposition
  \begin{align}
    \En_{\pi\sim{}p}\brk*{\fm(\pim) - \fmhat(\pi)}
    =     \underbrace{\En_{\pi\sim{}p}\brk*{\fmhat(\pimhat) -
    \fmhat(\pi)}}_{(\mathrm{I})}
    + \underbrace{\fm(\pim) - \fmhat(\pimhat)}_{(\mathrm{II})}.
    \label{eq:igw_tabular0}
  \end{align}
  For the first term $(\mathrm{I})$, which may be thought of as exploration bias, we have
  \begin{align}
    \En_{\pi\sim{}p}\brk*{\fmhat(\pimhat) - \fmhat(\pi)}
    =\sum_{\act\in\PiCov\cup\crl{\pimhat}}\frac{\fmhat(\pimhat) - \fmhat(\act)}{\lambda
    + \eta(\fmhat(\pimhat) - \fmhat(\act))}
    \leq{} \frac{2HSA}{\eta},
    \label{eq:igw_tabular0.5}
  \end{align}
  where we have used that $\lambda\geq{}0$. We next bound the second
  term $(\mathrm{II})$, which entails showing that the \pcigw
  distribution \emph{explores enough}. We have
  \begin{equation}
    \label{eq:igw_tabular1}
    \fm(\pim) - \fmhat(\pimhat)
    = \fm(\pim) - \fmhat(\pim) - (\fmhat(\pimhat) - \fmhat(\pim)).
  \end{equation}
We use the simulation lemma
  to bound
    \begin{align*}
      \fm(\pim)- \fmhat(\pim)
  &\leq{}
    \sum_{h=1}^{H}\Enm{\Mhat}{\pim}\brk*{\Dtv{\Pm_h(s_h,a_h)}{\Pmhat_h(s_h,a_h)}+
    \Dtv{\Rm_h(s_h,a_h)}{\Rmhat_h(s_h,a_h)}}\\
      &=\sum_{h=1}^{H}\sum_{s,a}\dm{\Mhat}{\pim}_h(s,a)\errm_h(s,a),
    \end{align*}
where $\errm_h(s,a) \ldef \Dtv{\Pm(s,a)}{\Pmhat(s,a)} +
\Dtv{\Rm(s,a)}{\Rmhat(s,a)}$. Define $\dbar_h(s,a) =
    \En_{\act\sim{}p}\brk[\big]{\dm{\Mhat}{\pi}_h(s,a)}$. Then, using the
    AM-GM inequality, we have that for any $\etaalt>0$,
    \begin{align*}
      \sum_{h=1}^{H}\sum_{s,a}\dm{\Mhat}{\pim}_h(s,a)\brk*{\errm_h(s,a)}
      &=
        \sum_{h=1}^{H}\sum_{s,a}\dm{\Mhat}{\pim}_h(s,a)\prn*{\frac{\dbar_h(s,a)}{\dbar_h(s,a)}}^{1/2}(\errm_h(s,a))^2\\
      &\leq{} \frac{1}{2\etaalt}\sum_{h=1}^{H}\sum_{s,a}\frac{(\dm{\Mhat}{\pim}_h(s,a))^{2}}{\dbar_h(s,a)}
+
        \frac{\eta'}{2}\sum_{h=1}^{H}\sum_{s,a}\dbar_h(s,a) (\errm_h(s,a))^2\\
      &= \frac{1}{2\etaalt}\sum_{h=1}^{H}\sum_{s,a}\frac{(\dm{\Mhat}{\pim}_h(s,a))^2}{\dbar_h(s,a)}
              +        \frac{\eta'}{2}\sum_{h=1}^{H}\En_{\pi\sim{}p}\Enm{\Mhat}{\pi}\brk*{(\errm_h(s_h,a_h))^2}.
    \end{align*}
    The second term is exactly the upper bound we want, so it remains to bound
    the ratio of occupancy measures in the first term. Observe that
    for each $(h,s,a)$, we have
    \begin{align*}
      \frac{\dm{\Mhat}{\pim}_h(s,a)}{\dbar_h(s,a)}
      \leq{}
      \frac{\dm{\Mhat}{\pim}_h(s,a)}{\dm{\Mhat}{\pihsa}_h(s,a)}\cdot\frac{1}{p(\pihsa)}
      \leq{}
      \frac{\dm{\Mhat}{\pim}_h(s,a)}{\dm{\Mhat}{\pihsa}_h(s,a)}\prn*{2HSA +
      \eta(\fmhat(\pimhat) -\fmhat(\pihsa)},
    \end{align*}
    where the second inequality follows from the definition of $p$ and
    the fact that $\lambda\leq{}2HSA$. Furthermore, since
    \[
      \pihsa = \argmax_{\pi\in\PiRNS}\frac{\dm{\Mhat}{\pi}_h(s,a)}{2HSA +
           \eta(\fmhat(\pimhat)-\fmhat(\pi))},
       \]
       and since $\pim\in\PiRNS$, we can upper bound by
       \begin{equation}
       \label{eq:pcigw_multiplicative}
               \frac{\dm{\Mhat}{\pim}_h(s,a)}{\dm{\Mhat}{\pim}_h(s,a)}\prn*{2HSA +
        \eta(\fmhat(\pimhat) -\fmhat(\pim)}
      = 2HSA + \eta(\fmhat(\pimhat) -\fmhat(\pim).
    \end{equation}
    As a result, we have
    \begin{align*}
      \sum_{h=1}^{H}\sum_{s,a}\frac{(\dm{\Mhat}{\pim}_h(s,a))^2}{\dbar_h(s,a)}
      &\leq{}
      \sum_{h=1}^{H}\sum_{s,a}\dm{\Mhat}{\pim}_h(s,a)(2HSA +
      \eta(\fmhat(\pimhat) -\fmhat(\pim)) \\
      &= 2H^{2}SA +\eta{}H(\fmhat(\pimhat) -\fmhat(\pim)).
    \end{align*}
    Putting everything together and returning to \pref{eq:igw_tabular1}, this establishes that
    \begin{align*}
      &\fm(\pim) - \fmhat(\pimhat) \\
      &\leq{} \frac{H^{2}SA}{\eta'}
      +\frac{\eta'}{2}\sum_{h=1}^{H}\En_{\pi\sim{}p}\Enm{\Mhat}{\pi}\brk*{(\errm_h(s_h,a_h))^2}
      +\frac{\eta{}H}{2\eta'}(\fmhat(\pimhat) -\fmhat(\pim))
                 -(\fmhat(\pimhat) - \fmhat(\pim)).
    \end{align*}
    We set $\eta' = \frac{\eta{}H}{2}$ so that the latter terms cancel
    and we are left with
    \[
      \fm(\pim) - \fmhat(\pimhat)
      \leq{} \frac{2HSA}{\eta} + \frac{\eta{}H}{4}\sum_{h=1}^{H}\En_{\pi\sim{}p}\Enm{\Mhat}{\pi}\brk*{(\errm_h(s_h,a_h))^2}.
    \]
    Combining this with \pref{eq:igw_tabular0} and
    \pref{eq:igw_tabular0.5} gives
    \begin{align*}
      &\En_{\pi\sim{}p}\brk*{\fm(\pim) - \fmhat(\pi)}\\
      &\leq{} \frac{4HSA}{\eta} +
        \frac{\eta{}H}{4}\sum_{h=1}^{H}\En_{\pi\sim{}p}\Enm{\Mhat}{\pi}\brk*{(\errm_h(s_h,a_h))^2}\\
      &\leq{} \frac{4HSA}{\eta} + \frac{\eta{}H}{2}\sum_{h=1}^{H}\En_{\pi\sim{}p}\Enm{\Mhat}{\pi}\brk*{\Dtvs{\Pm(s_h,a_h)}{\Pmhat(s_h,a_h)}+\Dtvs{\Rm(s_h,a_h)}{\Rmhat(s_h,a_h)}}.
    \end{align*}
We conclude by applying the change-of-measure lemma (\pref{lem:change_of_measure}), which
implies that for any $\eta'>0$,
\begin{align*}
\En_{\pi\sim{}p}\brk*{\fm(\pim) - \fm(\act)}
    \leq{} \frac{4HSA}{\eta} + (4\eta')^{-1} + (4H^2\eta+\eta')\cdot\En_{\pi\sim{}p}\brk*{
    \Dhels{M(\act)}{\Mhat(\act)}
    }.
\end{align*}
The result follows by choosing $\eta=\eta'=\frac{\gamma}{21H^2}$ (we
have made no effort to optimize the constants here).
\end{proof}

\subsection{Tighter Regret Bounds for the \CompText}

To close this section, we provide a number of refined regret bounds
based on the \CompText, which improve upon \cref{prop:upper_main}
in various situations.

\subsubsection{Guarantees Based on Decision Space Complexity}
In general, low estimation complexity (i.e., a small bound on
$\EstHel$ or $\log\abs{\cM}$) is not required
to achieve low regret for decision making. This is because our end goal is to make good
\emph{decisions}, so we can give up on accurately
estimating the model in regions of the decision space that do not help
to distinguish the relative quality of decisions. The following result
provides a tighter bound that scales only with
$\log\abs{\Pi}$, at the cost of depending on the \CompShort for a
larger model class: $\conv(\cM)$ rather than $\cM$.

\begin{prop}
  \label{prop:log_pi}
  There exists an algorithm that for any $\delta>0$, ensures that with probability at least $1-\delta$,
    \begin{align}
            \RegDM
      \approxleq{} \inf_{\gamma>0}\crl*{\comp[\gamma](\conv(\cM))\cdot{}T + \gamma\cdot\log(\abs{\Pi}/\delta)}.       \label{eq:log_pi}
    \end{align}%
  \end{prop}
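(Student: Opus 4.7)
The plan is to modify \etd so that the online estimator effectively concentrates on identifying the optimal decision $\pistar \in \Pi$ rather than on recovering the full model $\Mstar \in \cM$. Since only $|\Pi|$ distinct decisions are available, one expects statistical complexity $\log|\Pi|$ rather than $\log|\cM|$, and this is precisely the gain captured by the bound. The trade-off is that the resulting estimator $\Mhat\ind{t}$ will in general only live in $\conv(\cM)$ (it is a posterior mean), which is why the DEC term on the \rhs of \eqref{eq:log_pi} is inflated from $\comp(\cM)$ to $\comp(\conv(\cM))$.

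To implement this, I would adopt the Bayesian viewpoint from \pref{sec:dec_posterior}, using the primal--dual equivalence $\comp = \compb$ in \pref{eq:minimax_swap_dec}. At each round $t$, maintain a posterior $\mu\ind{t}\in\Delta(\Pi)$ over candidate optimal decisions (initialized as uniform on $\Pi$ and updated via an exponential-weights / log-loss rule on the observed data, after associating each $\pi\in\Pi$ with a witness model $M_\pi\in\cM$ satisfying $\pi_{M_\pi}=\pi$). Set $\Mhat\ind{t} \ldef \En_{\pi\sim\mu\ind{t}}[M_\pi] \in \conv(\cM)$, and sample $\act\ind{t}\sim p\ind{t}$ where $p\ind{t}$ is the minimizer of the DEC game $\comp(\conv(\cM),\Mhat\ind{t})$. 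The per-round analysis from \pref{prop:upper_main} then goes through unchanged, yielding
\[
\RegDM \;\leq\; \comp(\conv(\cM))\cdot T + \gamma\cdot\EstHel,
\]
with $\EstHel$ now measuring the Hellinger error of $\Mhat\ind{t}\in\conv(\cM)$ relative to $\Mstar$.

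The new technical step is the bound $\EstHel \leq \log(|\Pi|/\delta)$, which I would obtain by an entropy-as-potential argument closely paralleling \pref{eq:entropy_potential} in the proof of \pref{prop:posterior_mab}. The cumulative Hellinger information gain under the posterior mean is controlled by the drop in conditional entropy of $\bpistar$ from round to round; since $\bpistar$ takes values in $\Pi$, its total entropy is at most $\log|\Pi|$. Realizability ensures $\mu\ind{0}(\pistar)\geq 1/|\Pi|>0$, so the log-loss regret of exponential weights over the $|\Pi|$ witnesses yields the desired Hellinger bound via \pref{lem:log_loss_hellinger}. Balancing $\gamma$ gives \pref{eq:log_pi}.

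The main obstacle is the high-probability conversion. The Bayesian potential argument most naturally produces an in-expectation bound, whereas the statement demands a $1-\delta$ guarantee for an arbitrary (non-random) $\Mstar\in\cM$. I would address this by combining a frequentist Hellinger concentration inequality (the high-probability part of \pref{lem:log_loss_hellinger}, which introduces the $\log(1/\delta)$) with a Freedman-type martingale bound for the per-round information gain, or equivalently by invoking the minimax theorem to transfer the Bayesian analysis to the worst-case $\Mstar$. A secondary subtlety is ensuring that the witness models $\{M_\pi\}_{\pi\in\Pi}$ can be chosen so that $\Mstar$ is captured (up to the inflation to $\conv(\cM)$) by the exponential-weights mixture with nonzero initial mass; using realizability and the fact that $\Mstar$ corresponds to some optimal $\pistar$ makes this a routine check.
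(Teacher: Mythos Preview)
Your proposal has a genuine gap at the step where you claim $\EstHel \leq \log(|\Pi|/\delta)$. The witness-model construction fixes, for each $\pi\in\Pi$, a single $M_\pi\in\cM$ with $\pi_{M_\pi}=\pi$, and takes $\Mhat\ind{t}=\En_{\pi\sim\mu\ind{t}}[M_\pi]$. But \pref{lem:log_loss_hellinger} controls $\sum_t \En_{\pi\ind{t}\sim p\ind{t}}\brk[\big]{\Dhels{\Mstar(\pi\ind{t})}{\Mhat\ind{t}(\pi\ind{t})}}$ via the log-loss regret of $\Mhat\ind{t}$ \emph{relative to $\Mstar$}. Exponential weights over $\{M_\pi\}_{\pi\in\Pi}$ gives log-loss regret $\log|\Pi|$ only against models inside that finite set, and realizability places $\Mstar$ in $\cM$, not in your witness set: there may be many models in $\cM$ sharing the optimal decision $\pistar$, and your chosen $M_{\pistar}$ need not be $\Mstar$. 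Concretely, even if $\mu\ind{t}$ concentrates perfectly on $\pistar$, you get $\Mhat\ind{t}\to M_{\pistar}$, while $\Dhels{\Mstar(\cdot)}{M_{\pistar}(\cdot)}$ can remain bounded away from zero indefinitely. The ``routine check'' you flag is exactly the step that fails. The entropy-potential argument from \pref{prop:posterior_mab} does not rescue this either: it bounds information gained about the random variable $\bpistar$ in a Bayesian model, not Hellinger distance between a fixed $\Mstar$ and your estimator, and the minimax conversion does not bridge the two.

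The paper does not give a self-contained proof; it states explicitly that the result requires a \emph{different algorithm from \etd} and defers to \citet{foster2021statistical,foster2022complexity}. The approach there bypasses model estimation altogether: one runs an exponential-weights-type aggregation directly over the finite policy set $\Pi$ using constructed reward estimates, and $\comp(\conv(\cM))$ enters through bounding the bias/variance of those estimates rather than through a reference model $\Mhat\ind{t}\in\conv(\cM)$ plugged into the \etd minimax problem. So both the algorithm and the decomposition differ from what you outline.
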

Compared to \cref{eq:etd_finite}, this replaces the estimation term
$\log\abs{\cM}$ with the smaller quantity $\log\abs{\Pi}$, replaces $\comp(\cM)$ with
the potentially larger quantity $\comp(\conv(\cM))$. Whether or not
this leads to an improvement depends on the class $\cM$. For
multi-armed bandits, linear bandits, and convex bandits, $\cM$ is already convex, so
this offers strict improvement. For MDPs though, $\cM$ is not convex: Even
for the simple tabular MDP setting where $\abs{\cS}=S$ and $\abs{\cA}=A$,
grows exponentially $\comp(\conv(\cM))$ in either $H$ or $S$, whereas
$\comp(\cM)$ is polynomial in all parameters.

We mention in passing that this result is proven using a different algorithm from
\etd; see \citet{foster2021statistical,foster2022complexity} for more background.

\subsubsection{General Divergences and Randomized Estimators}
\label{sec:general_divergence}

\begin{whiteblueframe}
    \setstretch{1.3}
     \begin{algorithmic}
      \label{alg:main_generalized}
      \State \textsf{\etd for General Divergences
      and Randomized Estimators}
       \State \textbf{parameters}: Exploration parameter $\gamma>0$,
       divergence $\Dgen{\cdot}{\cdot}$.
  \For{$t=1, 2, \cdots, T$}
  \State Obtain randomized estimate $\nu\ind{t}\in\Delta(\cM)$ from estimation oracle
  with $\crl*{(\pi\ind{i},r\ind{i},o\ind{i})}_{i<t}$.
  \State Compute \hfill \algcommentlight{Eq. \pref{eq:comp_general_randomized}}.%
  \[
    p\ind{t}=\argmin_{p\in\Delta(\Act)}\sup_{M\in\cM}
\En_{\act\sim{}p}\biggl[\fm(\pim)-\fm(\pi)
    -\gamma\cdot\En_{\Mhat\sim\nu\ind{t}}\brk*{\Dgenpi{\Mhat}{M}}
    \biggr].
\]
\State{}Sample decision $\act\ind{t}\sim{}p\ind{t}$ and update estimation
algorithm with $(\act\ind{t},r\ind{t}, \obs\ind{t})$.
\EndFor
\end{algorithmic}
\end{whiteblueframe}

In this section we give a generalization of the \etd algorithm
that incorporates two extra features: \emph{general divergences} and
\emph{randomized estimators}. 

\paragraph{General divergences}
  The \CompText measures estimation error via the Hellinger distance
  $\DhelsX{\big}{M(\act)}{\Mhat(\act)}$, which is fundamental in the
  sense that it leads to lower bounds on the optimal regret
  (\cref{prop:lower_main_expectation}). Nonetheless, for specific applications and model classes,
it can be useful to work with alternative distance measures and divergences. For a
non-negative function (``divergence'') $\Dgenpi{\cdot}{\cdot}$, we define
\begin{equation}
  \label{eq:comp_general}
  \compgen(\cM,\Mhat) =
    \inf_{p\in\Delta(\Act)}\sup_{M\in\cM}\En_{\act\sim{}p}\biggl[\fm(\pim)-\fm(\pi)
    -\gamma\cdot\Dgenpi{\Mhat}{M}
    \biggr].
  \end{equation}
  This variant of the \CompShort naturally leads to regret bounds in
  terms of estimation error under $\Dgenpi{\cdot}{\cdot}$. Note that
  we use notation $\Dgenpi{\Mhat}{M}$ instead of say,
  $\DgenX{\big}{\Mhat(\pi)}{M(\pi)}$, to reflect that fact that the divergence
  may depend on $M$ (resp. $\Mhat$) and $\pi$ through properties other
  than $M(\pi)$ (resp. $\Mhat(\pi)$).

\paragraph{Randomized estimators}
The basic version of \etd assumes that at each
round, the online estimation oracle provides a point estimate
$\Mhat\ind{t}$. In some settings, it useful to consider
\emph{randomized estimators} that, at each round, produce a
distribution $\nu\ind{t}\in\Delta(\cM)$ over models. For this setting, we further
generalize the \CompShort by defining
\begin{equation}
  \label{eq:comp_general_randomized}
  \compgenrand(\cM,\nu) =
    \inf_{p\in\Delta(\Act)}\sup_{M\in\cM}\En_{\act\sim{}p}\biggl[\fm(\pim)-\fm(\pi)
    -\gamma\cdot\En_{\Mhat\sim\nu}\brk*{\Dgenpi{\Mhat}{M}}
    \biggr]
  \end{equation}
  for distributions $\nu\in\Delta(\cM)$. We additionally define $\compgenrand(\cM)=\sup_{\nu\in\Delta(\cM)}\compgenrand(\cM,\nu)$.
\paragraph{Algorithm}  
  A generalization of \etd that incorporates general divergences and
  randomized estimators is given above on page \pageref{alg:main_generalized}. The
  algorithm is identical to \etd with \optionone, with the only
  differences being that i) we play the distribution that solves the minimax
  problem \pref{eq:comp_general_randomized} with the user-specified divergence
  $\Dgenpi{\cdot}{\cdot}$ rather than squared Hellinger distance, and ii)
  we use the randomized estimate $\nu\ind{t}$ rather than a point estimate. Our
  performance guarantee for this algorithm depends on the estimation
  performance of the oracle's randomized estimates $\nu\ind{1},\ldots,\nu\ind{T}\in\Delta(\cM)$ with respect to the given
  divergence $\Dgenpi{\cdot}{\cdot}$, which we define as
    \begin{equation}
    \label{eq:general_error}
        \EstD \ldef{} \sum_{t=1}^{T}\En_{\act\ind{t}\sim{}p\ind{t}}\En_{\Mhat\ind{t}\sim\nu\ind{t}}\brk*{\Dgenpi[\pi\ind{t}]{\Mhat\ind{t}}{\Mstar}}.
  \end{equation}
We have the following guarantee.
\begin{prop}
  \label{thm:upper_general_distance}
The algorithm \etd for General Divergences
and Randomized Estimators with exploration parameter $\gamma>0$ guarantees
that
\begin{equation}
  \label{eq:upper_general_distance}
\Reg \leq{}\compgenrand(\cM)\cdot{}T + \gamma\cdot\EstD
\end{equation}
almost surely.
\end{prop}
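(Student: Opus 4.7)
The plan is to mimic the per-round decomposition used to prove Proposition~\ref{prop:upper_main}, with two mechanical modifications: replace the squared Hellinger divergence by the abstract divergence $\Dgenpi{\cdot}{\cdot}$, and carry an extra expectation over $\Mhat\ind{t}\sim\nu\ind{t}$ throughout. Nothing in the original argument relied on specific analytic properties of $\Dhelsshort$ beyond nonnegativity, so this should go through cleanly.

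Concretely, I would first write
\[
\Reg = \sum_{t=1}^T \En_{\pi\ind{t}\sim p\ind{t}}\brk*{\fmstar(\pimstar)-\fmstar(\pi\ind{t})}
\]
and then add and subtract the divergence term with the randomized estimator:
\[
\Reg = \sum_{t=1}^T \En_{\pi\ind{t}\sim p\ind{t}}\brk*{\fmstar(\pimstar)-\fmstar(\pi\ind{t}) - \gamma\cdot\En_{\Mhat\ind{t}\sim\nu\ind{t}}\brk*{\Dgenpi[\pi\ind{t}]{\Mhat\ind{t}}{\Mstar}}} \;+\; \gamma\cdot\EstD.
\]

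Next, for each fixed $t$, since $\Mstar\in\cM$ (Assumption~\ref{ass:realizability}), the per-round bracket is upper bounded by the supremum over $M\in\cM$ of the same expression. That is,
\[
\En_{\pi\ind{t}\sim p\ind{t}}\brk*{\fmstar(\pimstar)-\fmstar(\pi\ind{t}) - \gamma\cdot\En_{\Mhat\ind{t}\sim\nu\ind{t}}\brk*{\Dgenpi[\pi\ind{t}]{\Mhat\ind{t}}{\Mstar}}}
\leq \sup_{M\in\cM} \En_{\pi\sim p\ind{t}}\brk*{\fm(\pim)-\fm(\pi) - \gamma\cdot\En_{\Mhat\sim\nu\ind{t}}\brk*{\Dgenpi{\Mhat}{M}}}.
\]
By the algorithm's definition of $p\ind{t}$ as the exact minimizer of the right-hand side over $p\in\Delta(\Pi)$, this supremum equals $\compgenrand(\cM,\nu\ind{t})$, which is in turn bounded by $\compgenrand(\cM)=\sup_{\nu\in\Delta(\cM)}\compgenrand(\cM,\nu)$. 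Summing over $t=1,\ldots,T$ gives
\[
\Reg \leq \compgenrand(\cM)\cdot T + \gamma\cdot\EstD,
\]
as claimed. The bound holds almost surely because each per-round inequality is pathwise: $p\ind{t}$ is determined by $\hist\ind{t-1}$ and the argument never invokes expectations over the data.

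There is essentially no hard step here; the only subtlety is notational care in keeping $\Mhat\ind{t}\sim\nu\ind{t}$ inside the divergence while still being able to pull the $\En_{\Mhat\sim\nu\ind{t}}$ outside to match the definition of $\compgenrand$, which works because the randomized \CompShort is defined with this averaging built in. Unlike Proposition~\ref{prop:upper_main} there is no need to pass to $\conv(\cM)$ since the randomization lives on the reference-model side and is handled directly by $\compgenrand$.
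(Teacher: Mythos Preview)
Your proposal is correct and follows exactly the approach the paper intends: the argument is a direct adaptation of the proof of Proposition~\ref{prop:upper_main}, replacing $\Dhels{\cdot}{\cdot}$ by the generic divergence and carrying the expectation over $\Mhat\ind{t}\sim\nu\ind{t}$, with the per-round bound then reducing to $\compgenrand(\cM,\nu\ind{t})\leq\compgenrand(\cM)$. The paper does not spell out a separate proof for this proposition precisely because it is this mechanical modification.
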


\paragraph{Sufficient statistics and benefits of general divergences}
Many divergences of interest have
the useful property that they depend on the estimated model $\Mhat$
only through a ``sufficient statistic'' for the model class under
consideration. Formally, there exists a
\emph{sufficient statistic space} $\Suff$ and \emph{sufficient
  statistic} $\suffmap:\cM\to\Psi$ with the property that we can write
(overloading notation)
\[
\Dgenpi{M}{M'} = \Dgenpi{\suffmap(M)}{M'},\quad\fm(\pi)=f^{\suffmap(M)}(\pi),\mathand\pim=\pi_{\suffmap(M)}
\]
for all models $M,M'$. In this case, it suffices for the online
estimation oracle to directly estimate the sufficient statistic by
producing a randomized estimator $\nu\ind{t}\in\Delta(\Suff)$, and we
can write the estimation error as
\begin{align}
  \EstD \ldef{}
\sum_{t=1}^{T}\En_{\act\ind{t}\sim{}p\ind{t}}\En_{\suffhat\ind{t}\sim\nu\ind{t}}\brk*{\Dgenpi[\pi\ind{t}]{\suffhat\ind{t}}{\Mstar}}.
\end{align}
The benefit of this perspective is that for many examples of interest,
since the divergence depends on the estimate only through $\psi$, we
can derive bounds on $\Est$ that scale with $\log\abs{\Psi}$ instead
of $\log\abs{\cM}$.

For example, in structured bandit problems, one can work with the
divergence
$$\Dsq{\Mhat(\act)}{M(\act)}\ldef{}(\fm(\act)-\fmhat(\act))^2$$ which
uses the mean reward function as a sufficient statistic, i.e. $\suff(M)=\fm$. Here, it is
clear that one can achieve $\EstD\approxleq\log\abs{\cF}$, which 
improves upon the rate $\EstHel\approxleq\log\abs{\cM}$ for Hellinger
distance, and recovers the specialized version of the \etd algorithm we considered in
\pref{sec:structured}. Analogously, for reinforcement learning, one
can consider value functions as a sufficient statistic, and use an appropriate divergence based on Bellman residuals to derive
estimation guarantees that scale with the complexity $\log\abs{\cQ}$ of a given value
function class $\cQ$; see \cref{sec:rl} for
details.

\oldparagraph{Does randomized estimation help?}
Note that whenever $D$
is convex in the first argument, we have
$\compgenrand(\cM)\leq\sup_{\Mhat\in\conv(\cM)}\compgen(\cM,\Mhat)=\compgen(\cM)$
(that is, the randomized DEC is never larger than the vanilla DEC), but it is not immediately
apparent whether the opposite direction of this inequality holds, and
one might hope that working with the randomized \CompShort in
\pref{eq:comp_general_randomized} would lead to improvements over the
non-randomized counterpart. The
next result shows that this is not the case: Under mild assumptions on the divergence $D$,
randomization offers no improvement.%
\begin{prop}
  \label{prop:randomized_equivalence}
Let $\Dgenshort$ be any bounded divergence with the property that
for all models $M,M',\Mhat$ and $\pi\in\Pi$,
\begin{equation}
  \label{eq:triangle}
  \Dgenpi{M}{M'}
  \leq{} C\prn*{
    \Dgenpi{\Mhat}{M}
    + \Dgenpi{\Mhat}{M'}
    }.
\end{equation}
Then for all $\gamma>0$,
\begin{align}
  \sup_{\Mhat}\compgen(\cM,\Mhat)
  \leq{} \compgenrandbasic_{\gamma/(2C)}(\cM).
\end{align}
\end{prop}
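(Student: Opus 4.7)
The plan is to construct, for each reference model $\Mhat$, a prior $\nu \in \Delta(\cM)$ whose value under the randomized DEC dominates $\compgen(\cM,\Mhat)$. The prior will be the optimal distribution extracted from the minimax dual of $\compgen(\cM,\Mhat)$, and the comparison uses the generalized triangle inequality together with a symmetry argument that exploits the identical distribution of the estimator and the worst-case model.

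First, I would invoke the minimax theorem (\cref{lem:sion}) to rewrite
\[
\compgen(\cM,\Mhat) = \sup_{\mu \in \Delta(\cM)}\inf_{p \in \Delta(\Pi)} \En_{M \sim \mu}\En_{\pi\sim p}\brk*{\fm(\pim) - \fm(\pi) - \gamma\, \Dgenpi{\Mhat}{M}}.
\]
Since $D$ is bounded and the objective is bilinear in $(p,\mu)$, the standard regularity hypotheses hold. Let $\mu^*\in\Delta(\cM)$ denote a distribution attaining this supremum (up to a negligible $\veps$ that is sent to zero at the end). Setting $\nu = \mu^*$ in the randomized DEC and applying the minimax theorem once more, then lower bounding the outer sup by $\mu = \mu^*$, gives
\[
\wb{\mathsf{dec}}^D_{\gamma'}(\cM,\mu^*) \geq \inf_{p}\En_{M\sim\mu^*}\En_{\pi\sim p}\brk*{\fm(\pim) - \fm(\pi) - \gamma'\En_{M'\sim\mu^*}\brk*{\Dgenpi{M'}{M}}}
\]
for any $\gamma'>0$.

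The crux of the argument is a pointwise comparison of the two inner objectives when $\gamma' = \gamma/(2C)$. By the generalized triangle inequality $\Dgenpi{M'}{M} \leq C\prn*{\Dgenpi{\Mhat}{M'} + \Dgenpi{\Mhat}{M}}$, averaging over $M'\sim\mu^*$ gives
\[
\gamma\,\Dgenpi{\Mhat}{M} - \gamma' \En_{M'\sim\mu^*}\brk*{\Dgenpi{M'}{M}} \geq (\gamma - \gamma'C)\,\Dgenpi{\Mhat}{M} - \gamma'C \En_{M'\sim\mu^*}\brk*{\Dgenpi{\Mhat}{M'}}.
\]
With $\gamma' = \gamma/(2C)$ both coefficients on the right equal $\gamma/2$, and integrating over $M \sim \mu^*$ annihilates the right-hand side because $M$ and $M'$ are then identically distributed. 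Consequently, for every $p$,
\[
\En_{M\sim\mu^*}\En_{\pi\sim p}\brk*{\fm(\pim) - \fm(\pi) - \gamma'\En_{M'\sim\mu^*}\brk*{\Dgenpi{M'}{M}}} \geq \En_{M\sim\mu^*}\En_{\pi\sim p}\brk*{\fm(\pim) - \fm(\pi) - \gamma\,\Dgenpi{\Mhat}{M}}.
\]
Taking $\inf_p$ on both sides and chaining with the two minimax identities yields $\wb{\mathsf{dec}}^D_{\gamma/(2C)}(\cM,\mu^*) \geq \compgen(\cM,\Mhat)$; since $\mu^*\in\Delta(\cM)$ and $\Mhat$ was arbitrary, taking the sup over $\Mhat$ while observing $\compgenrandbasic_{\gamma/(2C)}(\cM) \geq \wb{\mathsf{dec}}^D_{\gamma/(2C)}(\cM,\mu^*)$ completes the proof.

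The main obstacle is justifying the two applications of the minimax theorem; this reduces to standard checks (compactness of $\Delta(\cM)$ and $\Delta(\Pi)$ under a suitable topology, and continuity of the bilinear integrand) that the boundedness of $D$ makes routine. A secondary technical point is that the supremum defining $\mu^*$ may not be attained, so the argument should formally be carried out with an $\veps$-optimal $\mu^*$, with the slack vanishing in the final inequality.
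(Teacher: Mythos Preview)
Your proof is correct and follows essentially the same approach the paper intends (as outlined in the exercise for \cref{prop:hellinger_randomized}): pass to the dual via Sion's minimax theorem, use the triangle-type inequality \eqref{eq:triangle} to replace $\Dgenpi{\Mhat}{M}$ by $\frac{1}{2C}\En_{M'\sim\mu}\brk*{\Dgenpi{M'}{M}}$ (the symmetry of $M,M'\sim\mu$ being what makes the residual vanish), and then identify the resulting quantity with the randomized DEC at scale $\gamma/(2C)$. The only cosmetic difference is that you fix an explicit maximizer $\mu^*$ and apply minimax to both sides separately, whereas the paper's outline carries $\sup_{\mu}$ throughout and decouples $\nu$ from $\mu$ at the end; the two presentations are equivalent.
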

Squared Hellinger distance is symmetric and satisfies Condition \pref{eq:triangle} with
$C=2$. Hence, writing $\compHrand(\cM)$ as shorthand for $\compgenrand(\cM)$ with
$D=\Dhels{\cdot}{\cdot}$, we obtain the following corollary.
\begin{prop}
  \label{prop:hellinger_randomized}
  Suppose that $\cR\subseteq\brk*{0,1}$. Then for all
  $\gamma>0$,
  \begin{align*}
    \compHrand(\cM)
    \leq{} \sup_{\Mhat\in\conv(\cM)}\compH(\cM,\Mhat)
    \leq \sup_{\Mhat}\compH(\cM,\Mhat)
    \leq{} \compHrand[\gamma/4](\cM).
          \end{align*}
\end{prop}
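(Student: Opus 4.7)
The proposition bundles three inequalities, and my plan is to dispatch each by appealing to a different basic feature of squared Hellinger distance: (i) joint convexity for the leftmost inequality, (ii) the trivial set inclusion $\conv(\cM) \subseteq \{\text{all models}\}$ for the middle one, and (iii) the triangle inequality combined with AM-GM for the rightmost one, which lets me invoke \pref{prop:randomized_equivalence} as a black box.

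For the first inequality, $\compHrand(\cM)\leq \sup_{\Mhat\in\conv(\cM)}\compH(\cM,\Mhat)$, the plan is as follows. Fix any randomized estimate $\nu\in\Delta(\cM)$ and define the mixture model $\Mbarnu \ldef \En_{\Mhat\sim\nu}\brk*{\Mhat}\in\conv(\cM)$ (pointwise in $\pi$, $\Mbarnu(\pi)$ is the mixture distribution over $\cR\times\cO$). By joint convexity of $\Dhels{\cdot}{\cdot}$ (which implies convexity in the first argument), Jensen's inequality gives, for every $M\in\cM$ and $\pi\in\Pi$,
\[
\Dhels{\Mbarnu(\pi)}{M(\pi)} \;\leq\; \En_{\Mhat\sim\nu}\brk*{\Dhels{\Mhat(\pi)}{M(\pi)}}.
\]
Hence for any $p\in\Delta(\Pi)$ and $M\in\cM$, the objective defining $\compHrand(\cM,\nu)$ is pointwise (in $M$) upper bounded by the objective defining $\compH(\cM,\Mbarnu)$, so taking $\sup_{M}$ and then $\inf_p$ preserves the direction: $\compHrand(\cM,\nu)\leq \compH(\cM,\Mbarnu)\leq \sup_{\Mhat\in\conv(\cM)}\compH(\cM,\Mhat)$. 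Taking $\sup_\nu$ yields the claim.

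The middle inequality is immediate: the right-hand sup is over a strictly larger set of reference models than the left-hand sup, so the bound is monotone.

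For the last inequality, $\sup_{\Mhat}\compH(\cM,\Mhat)\leq \compHrand[\gamma/4](\cM)$, the plan is to verify the triangle-type hypothesis \eqref{eq:triangle} for $D = \Dhels{\cdot}{\cdot}$ with constant $C=2$ and apply \pref{prop:randomized_equivalence}. The Hellinger distance itself is a metric, so for any $M,M',\Mhat$ and $\pi$,
\[
\Dhel{M(\pi)}{M'(\pi)} \;\leq\; \Dhel{\Mhat(\pi)}{M(\pi)} + \Dhel{\Mhat(\pi)}{M'(\pi)}.
\]
Squaring and applying $(a+b)^{2}\leq 2a^{2}+2b^{2}$ gives $\Dhels{M(\pi)}{M'(\pi)}\leq 2\bigl(\Dhels{\Mhat(\pi)}{M(\pi)}+\Dhels{\Mhat(\pi)}{M'(\pi)}\bigr)$, which is \eqref{eq:triangle} with $C=2$. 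Boundedness holds since $\Dhels{\cdot}{\cdot}\leq 2$. Plugging these into \pref{prop:randomized_equivalence} yields $\sup_{\Mhat}\compH(\cM,\Mhat)\leq \compHrand[\gamma/(2C)](\cM) = \compHrand[\gamma/4](\cM)$, completing the chain. I do not anticipate any real obstacles here — the only point requiring a moment of care is the convexity direction in step (i), where one must verify that Jensen's inequality points the right way so that the DEC with a mixture reference is \emph{larger} than the randomized DEC.
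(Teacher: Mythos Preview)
Your proposal is correct and matches the paper's approach exactly: the paper states the result as an immediate corollary of \pref{prop:randomized_equivalence} after observing that squared Hellinger distance satisfies condition~\eqref{eq:triangle} with $C=2$ (by the metric triangle inequality plus $(a+b)^2\leq 2a^2+2b^2$), and handles the first inequality via the same joint-convexity argument you give (the paper remarks this just before \pref{prop:randomized_equivalence}). Your write-up is in fact more explicit than the paper's on the first two inequalities, but the underlying ideas are identical.
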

This shows that for Hellinger distance---at least from a statistical perspective---there is
no benefit to using the randomized
\CompShort compared to the original version. In some cases, however,
strategies $p$ that minimize $\compHrand(\cM,\nu)$ can be simpler to
compute than strategies that minimize $\compH(\cM,\Mhat)$ for $\Mhat\in\conv(\cM)$.

\subsubsection{Optimistic Estimation}
\label{sec:optimistic}

To derive stronger regret bounds that allow for estimation with
general divergences, we can combine \etdtext with a
specialized estimation approach introduced by \citet{zhang2022feel}
(see also \citet{dann2021provably,agarwal2022model,zhong2022posterior}),
which we refer to as \emph{optimistic estimation}. The results we
present here are based on \citet{foster2022note}.

Let a divergence $\Dgenpi{\cdot}{\cdot}$ be fixed. An \emph{optimistic estimation oracle}
$\AlgEst$ is an algorithm which, at each step $t$, given
$\hist\ind{t-1}=(\pi\ind{1},r\ind{1},o\ind{1}),\ldots,(\pi\ind{t-1},r\ind{t-1},o\ind{t-1})$,
produces a randomized estimator $\nu\ind{t}\in\Delta(\cM)$.
Compared to the previous section, the only change is that for a parameter $\gamma>0$, we
will measure the performance of the oracle via \emph{optimistic
  estimation error}, defined as
\begin{align}
  \label{eq:est_error_basic}
  \EstOptD \ldef{}
\sum_{t=1}^{T}\En_{\act\ind{t}\sim{}p\ind{t}}\En_{\Mhat\ind{t}\sim\nu\ind{t}}\brk*{\Dgenpi{\Mhat\ind{t}}{\Mstar}
  + \gamma^{-1}(\fmstar(\pimstar)-\fmhatt(\pi\subs{\Mhat\ind{t}})}.
\end{align}
This quantity is similar to \pref{eq:general_error}, but incorporates a bonus
term
\[
  \gamma^{-1}(\fmstar(\pimstar)-\fmhatt(\pi\subs{\Mhat\ind{t}})),
\]
which encourages the estimation algorithm to \emph{over-estimate} the
optimal value $\fmstar(\pimstar)$ for the underlying model, leading to
a form of optimism.

\begin{example}[Structured bandits]
  Consider any structured bandit problem with decision space $\Pi$,
  function class $\cF\subseteq(\Pi\to\brk{0,1})$, and
  $\cO=\NullObs$. Let $\cMf$ be the class
  \[
\cMf=\crl*{M\mid{}\fm\in\cF, \text{$M(\pi)$ is $1$-\subgaussian}\;\forall{}\pi}.
\]
To derive bounds on the optimistic estimation error, we can appeal to
an augmented version of the (randomized) exponential weights algorithm which, for
  a learning rate parameter $\eta>0$, sets 
  \[
    \nu\ind{t}(\fm) \propto\exp\prn*{ -\eta\prn*{\sum_{i<t}(\fm(\pi\ind{i})-r\ind{i})^2 -
        \gamma^{-1}\fm(\pim)} }.
  \]
  For an appropriate choice of $\eta$, this method achieves
  $\En\brk[\big]{\EstOptD}\approxleq{} \log\abs{\cF} +
  \sqrt{T\log\abs{\cF}}/\gamma$ for $D=\Dsq{\cdot}{\cdot}$ \citep{zhang2022feel}.
\end{example}

\paragraph{Optimistic \etd}

\begin{whiteblueframe}
    \setstretch{1.3}
     \begin{algorithmic}
      \State \textsf{Optimistic \etd (\etdopt)}
       \State \textbf{parameters}: Exploration parameter $\gamma>0$,
       divergence $\Dgen{\cdot}{\cdot}$.
  \For{$t=1, 2, \cdots, T$}
  \State \multiline{Obtain randomized estimate $\nu\ind{t}\in\Delta(\cM)$ from
  optimistic estimation oracle
  with $\crl*{(\pi\ind{i},r\ind{i},o\ind{i})}_{i<t}$.}
  \State Compute \hfill \algcommentlight{Eq. \pref{eq:optimistic_dec}}.%
  \[
    p\ind{t}=\argmin_{p\in\Delta(\Act)}\sup_{M\in\cM}
\En_{\act\sim{}p}\En_{\Mhat\sim\nu\ind{t}}\biggl[\fmhat(\pimhat)-\fm(\pi)
    -\gamma\cdot\Dgenpi{\Mhat}{M}
    \biggr].
\]
\State{}Sample decision $\act\ind{t}\sim{}p\ind{t}$ and update estimation
algorithm with $(\act\ind{t},r\ind{t}, \obs\ind{t})$.
\EndFor
\end{algorithmic}
\end{whiteblueframe}

\etdopt is an \emph{optimistic} variant of \etd, which we
refer to as \etdopt. At each timestep $t$, the algorithm calls the estimation oracle to
obtain a randomized estimator $\nu\ind{t}$ using the data
$(\pi\ind{1},r\ind{1},o\ind{1}),\ldots,(\pi\ind{t-1},r\ind{t-1},o\ind{t-1})$ collected so
far. The algorithm then uses the estimator to compute a distribution
$p\ind{t}\in\Delta(\Pi)$ and samples $\pi\ind{t}$ from this
distribution. The main change relative to the version of \etd on page
\pageref{alg:main_generalized} is that the minimax problem in \etdopt is
derived from an ``optimistic'' variant of the \CompShort tailored to
the optimistic estimation error in \cref{eq:est_error_basic}. This
quantity, which
we refer to as the \emph{Optimistic \CompText}, is defined for
$\nu\in\Delta(\cM)$ as
\begin{align}
  \ocompD(\cM,\nu)
  =
  \inf_{p\in\Delta(\Pi)}\sup_{M\in\cM}\En_{\pi\sim{}p}\En_{\Mhat\sim\nu}\brk*{
  \fmbar(\pimbar) - \fm(\pi)  - \gamma\cdot{}\Dgenpi{\Mhat}{M}
  }.
  \label{eq:optimistic_dec}
\end{align}
and
\begin{align}
  \label{eq:optimistic_max}
  \ocompD(\cM) = \sup_{\nu\in\Delta(\cM)}\ocompD(\cM,\nu).
\end{align}
The Optimistic \CompShort the same as the generalized \CompShort in
\cref{eq:comp_general_randomized}, except that
the optimal value $\fm(\pim)$ in
\cref{eq:comp_general_randomized} is replaced by the optimal value $\fmhat(\pimhat)$ for the
(randomized) reference model $\Mhat\sim\nu$. This seemingly small change is the main
advantage of incorporating optimistic
estimation, and makes it possible to bound the Optimistic
\CompShort for certain divergences $D$ for which the value of the
generalized \CompShort in \cref{eq:comp_general_randomized} would otherwise be unbounded.

\begin{rem}
  When the divergence $D$ admits a sufficient statistic
  $\suffmap:\cM\to\Psi$, for any distribution $\nu\in\Delta(\cM)$, if
  we define $\nu\in\Delta(\Psi)$ via
  $\nu(\psi) = \nu(\crl{M\in\cM: \suffmap(M)=\psi})$, we have
  \[
    \ocompD(\cM,\nu) =
    \inf_{p\in\Delta(\Pi)}\sup_{M\in\cM}\En_{\pi\sim{}p}\En_{\suff\sim\nu}\brk*{
      \fsuff(\pisuff) - \fm(\pi) - \gamma\cdot{}\Dgenpi{\suff}{M} }.
  \]
  In this case, by overloading notation slightly, we may simplify the
  definition in \pref{eq:optimistic_max} to
  \[
    \ocompD(\cM) = \sup_{\nu\in\Delta(\Psi)}\ocompD(\cM,\nu).
    \]
\end{rem}

\paragraph{Regret bound for optimistic \etd}
The following result shows that the regret of \etdopttext is controlled by
the Optimistic \CompShort and the optimistic estimation error for the
oracle.
\begin{prop}
  \label{thm:optimistic}
  \etdopt ensures that
  \begin{align}
    \label{eq:main_single_sample}
    \RegDM \leq \ocompD(\cM)\cdot{}T + \gamma\cdot{}\EstOptD
  \end{align}
  almost surely.
\end{prop}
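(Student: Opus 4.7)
The plan is to mimic the per-step decomposition used in the proof of \pref{prop:upper_main}, but introduce an algebraic split that matches the \emph{optimistic} form of both the DEC and the estimation error. Concretely, for each round $t$, I would start from the per-step regret
\[
\En_{\pi\ind{t}\sim p\ind{t}}\brk*{\fmstar(\pimstar) - \fmstar(\pi\ind{t})},
\]
and add-and-subtract $\En_{\Mhat\ind{t}\sim\nu\ind{t}}[\fmhatt(\pi_{\Mhat\ind{t}})] - \gamma\En_{\Mhat\ind{t}\sim\nu\ind{t}}[\Dgenpi[\pi\ind{t}]{\Mhat\ind{t}}{\Mstar}]$ to split it into two pieces:
\begin{align*}
&\En_{\pi\ind{t}\sim p\ind{t}}\En_{\Mhat\ind{t}\sim\nu\ind{t}}\brk*{\fmhatt(\pi_{\Mhat\ind{t}}) - \fmstar(\pi\ind{t}) - \gamma\cdot \Dgenpi[\pi\ind{t}]{\Mhat\ind{t}}{\Mstar}} \\
&\quad + \En_{\pi\ind{t}\sim p\ind{t}}\En_{\Mhat\ind{t}\sim\nu\ind{t}}\brk*{\fmstar(\pimstar) - \fmhatt(\pi_{\Mhat\ind{t}}) + \gamma\cdot \Dgenpi[\pi\ind{t}]{\Mhat\ind{t}}{\Mstar}}.
\end{align*}

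The first term is handled exactly as in \pref{eq:minimax_regret_basic}: since $\Mstar\in\cM$, replacing $\Mstar$ by a supremum over $M\in\cM$ can only increase the expression, and then using that $p\ind{t}$ was defined as the minimizer of the inner minimax problem in \pref{eq:optimistic_dec} shows that this first term is at most
\[
\inf_{p\in\Delta(\Pi)}\sup_{M\in\cM}\En_{\pi\sim p}\En_{\Mhat\sim\nu\ind{t}}\brk*{\fmbar(\pimbar) - \fm(\pi) - \gamma\cdot \Dgenpi{\Mhat}{M}} = \ocompD(\cM,\nu\ind{t}) \leq \ocompD(\cM).
\]
For the second term, I would factor out $\gamma$ to rewrite it as
\[
\gamma\cdot \En_{\pi\ind{t}\sim p\ind{t}}\En_{\Mhat\ind{t}\sim\nu\ind{t}}\brk*{\Dgenpi[\pi\ind{t}]{\Mhat\ind{t}}{\Mstar} + \gamma^{-1}\prn*{\fmstar(\pimstar) - \fmhatt(\pi_{\Mhat\ind{t}})}},
\]
which is exactly $\gamma$ times the per-round summand in the definition \pref{eq:est_error_basic} of $\EstOptD$.

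Summing both contributions across $t=1,\dots,T$ yields
\[
\Reg \leq \sum_{t=1}^{T}\ocompD(\cM,\nu\ind{t}) + \gamma\cdot \EstOptD \leq \ocompD(\cM)\cdot T + \gamma\cdot\EstOptD,
\]
which is the claimed bound. There is no real obstacle here: the key conceptual point is simply to recognize that the optimism term $\fmhat(\pimhat)$ appearing in $\ocompD$ is what gets cancelled when one splits regret, leaving behind precisely the bonus term $\gamma^{-1}(\fmstar(\pimstar) - \fmhatt(\pi_{\Mhat\ind{t}}))$ that appears (with the correct sign) inside the optimistic estimation error. No probabilistic or concentration argument is needed, so the inequality holds almost surely, matching the ``almost surely'' in the statement.
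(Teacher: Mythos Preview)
Your proof is correct. The paper does not spell out a proof for this proposition, but your argument is exactly the intended one: it mirrors the proof of \pref{prop:upper_main} verbatim, with the single modification that you add and subtract $\En_{\Mhat\sim\nu\ind{t}}[\fmhat(\pimhat)]$ (rather than $\fmstar(\pimstar)$) so that the ``DEC'' piece matches the optimistic form \pref{eq:optimistic_dec} and the leftover piece matches the bonus term in $\EstOptD$ from \pref{eq:est_error_basic}.
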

This regret bound has the same structure as that of \pref{thm:upper_general_distance}, but the
\CompShort and estimation error are replaced by their optimistic
counterparts.

\paragraph{When does optimistic estimation help?}
When does the regret bound in \cref{thm:optimistic} improve upon its
non-optimistic counterpart in \cref{thm:upper_general_distance}? It
turns out that for \emph{asymmetric} divergences
such as those found in the context of reinforcement learning, the
regret bound in \cref{eq:main_single_sample} can be much smaller than
the corresponding bound in \cref{eq:upper_general_distance}; see
\cref{sec:dec_bellman} for an example. However, for symmetric divergences such as Hellinger distance, we will
show now that the result never
improves upon \cref{thm:upper_general_distance}.

Given a divergence $D$, we define the \emph{flipped divergence}, which
swaps the first and second arguments, by
\[
  \Dflippi{\Mhat}{M}\ldef\Dgenpi{M}{\Mhat}.
\]
\begin{prop}[Equivalence of optimistic DEC and randomized DEC]
  \label{prop:equiv_symmetry}
  Assume that For all pairs of models $M,\Mhat\in\conv(\cM)$, we have
  $(\fmhat(\pi) - \fm(\pi))^2 \leq{}
  \Lcont^2\cdot\Dgenpi{\Mhat}{M}$ for a constant $\Lcont>0$. Then for all $\gamma>0$,
  \begin{align}
\compgenrandbasic[\Dflipshort]_{3\gamma/2}(\cM)
    - \frac{\Lcont^2}{2\gamma} \leq    \ocompD(\cM) \leq{} \compgenrandbasic[\Dflipshort]_{\gamma/2}(\cM)
    + \frac{\Lcont^2}{2\gamma}.
  \end{align}
\end{prop}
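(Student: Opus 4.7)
The plan is to prove both inequalities via pointwise comparison of the two minimax objectives, passing to the maxmin (dual) representations that follow from Sion's theorem (\pref{lem:sion}). The two DECs differ only in (i) the ``best-alternative value''---$\fmhat(\pimhat)$ (optimistic DEC) versus $\fm(\pim)$ (randomized DEC with flipped divergence)---and (ii) the coefficient and argument order of the divergence penalty. The hypothesis $(\fmhat(\pi) - \fm(\pi))^2 \leq \Lcont^2\Dgenpi{\Mhat}{M}$ is precisely what is needed to trade the value-gap $\fmhat(\pimhat) - \fm(\pim)$ against $\sqrt{D}$, so Young's inequality converts this into the additive $\Lcont^2/(2\gamma)$ correction appearing in both directions of the statement.

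For the upper bound $\ocompD(\cM) \leq \compgenrandbasic[\Dflipshort]_{\gamma/2}(\cM) + \Lcont^2/(2\gamma)$, I would fix $\nu\in\Delta(\cM)$ and pass to the maxmin dual representation
\[
\ocompD(\cM,\nu) = \sup_{\mu\in\Delta(\cM)}\inf_{\pi\in\Pi}\Bigl\{\En_{\Mhat\sim\nu}\brk*{\fmhat(\pimhat)} - \En_{M\sim\mu}\brk*{\fm(\pi)} - \gamma\En_{M\sim\mu,\,\Mhat\sim\nu}\brk*{\Dgenpi{\Mhat}{M}}\Bigr\},
\]
with the analogous expression for $\compgenrandbasic[\Dflipshort]_{\gamma/2}(\cM,\nu)$. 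For any $\mu$, optimality of $\pim$ for $\fm$ gives $\fmhat(\pimhat) \leq \fm(\pim) + |\fmhat(\pimhat) - \fm(\pimhat)|$, and applying the hypothesis at $\pi = \pimhat$ combined with Young's inequality $\Lcont\sqrt{x} \leq \Lcont^2/(2\gamma) + (\gamma/2)x$ yields
\[
\En_\nu\brk*{\fmhat(\pimhat)} \leq \En_\mu\brk*{\fm(\pim)} + \tfrac{\Lcont^2}{2\gamma} + \tfrac{\gamma}{2}\En_{\mu,\nu}\brk*{\Dgenpi[\pimhat]{\Mhat}{M}}.
\]
Substituting this back leaves a residual $+\tfrac{\gamma}{2}\En\brk*{\Dgenpi[\pimhat]{\Mhat}{M}}$ that must be absorbed---together with the $-\gamma$-weighted penalty already present---into the $-(\gamma/2)\Dflipshort$ penalty of $\compgenrandbasic[\Dflipshort]_{\gamma/2}$.

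The lower bound direction $\compgenrandbasic[\Dflipshort]_{3\gamma/2}(\cM) - \Lcont^2/(2\gamma) \leq \ocompD(\cM)$ is symmetric: take the maximizing $\mu^{\star}$ for the LHS, plug it as a feasible choice into the maxmin form of $\ocompD$, and now use optimality of $\pimhat$ for $\fmhat$---i.e., $\fm(\pim) \leq \fmhat(\pimhat) + |\fm(\pim) - \fmhat(\pim)|$---applying the hypothesis at $\pi = \pim$. The asymmetric factor $3\gamma/2$ (rather than $\gamma/2$) arises because in this direction the divergence penalty must simultaneously absorb the $\gamma$-weighted penalty already in $\ocompD$ and an additional $\gamma/2$ from Young's inequality for the value-gap, totaling $\gamma + \gamma/2 = 3\gamma/2$.

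The principal obstacle is the mismatch between where the divergences are evaluated: applying the hypothesis at $\pi = \pimhat$ (resp.\ $\pi = \pim$) produces a divergence $\Dgenpi[\pimhat]{\Mhat}{M}$ \emph{coupled} to $\Mhat$, whereas the DEC penalties involve $\Dgenpi{\cdot}{\cdot}$ evaluated at a decision drawn independently of $\Mhat$. The cleanest resolution I see exploits the fact that in the maxmin form the inner infimum is attained at a single deterministic decision $\pi^{\star}$: at this worst-case $\pi^{\star}$, I would argue that the averaged divergence $\En_{\mu,\nu}\brk*{\Dgenpi[\pi^{\star}]{\Mhat}{M}}$ dominates---up to constants absorbed in the $\Lcont^2/(2\gamma)$ slack---the coupled quantity $\En_{\mu,\nu}\brk*{\Dgenpi[\pimhat]{\Mhat}{M}}$, with the argument-reversed $D$ term handled by a second application of the hypothesis in the other direction. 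Failing such a direct domination, one can instead work in the primal form and modify the strategy $p$ by convex-combining with the distribution $\En_{\Mhat\sim\nu}\brk*{\delta_{\pimhat}}$ that samples $\Mhat$ then plays $\pimhat$, so that the expectation of $D$ under $(p,\nu)$ picks up the needed coupled contribution at the cost of a constant factor in the final slack.
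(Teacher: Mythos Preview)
Your approach has a genuine gap, and the obstacle you identify---the divergence appearing at $\pimhat$ (coupled to $\Mhat$) rather than at the sampled $\pi$---is real and is not resolved by either of your proposed fixes. In your Resolution 2, mixing $p$ with the distribution that samples $\Mhat'\sim\nu$ and plays $\pi_{\Mhat'}$ produces a penalty of the form $\En_{\Mhat',\Mhat,M}\brk*{D^{\pi_{\Mhat'}}(\Mhat\mathrel{\|}M)}$ with $\Mhat'$ drawn \emph{independently} of $\Mhat$, not the coupled term $\En_{\Mhat,M}\brk*{D^{\pimhat}(\Mhat\mathrel{\|}M)}$ you need. More fundamentally, your decomposition leaves you needing to bound $\tfrac{\gamma}{2}\En\brk*{D^{\pimhat}(\Mhat\mathrel{\|}M)}+\tfrac{\gamma}{2}\En\brk*{D^{\pi}(M\mathrel{\|}\Mhat)}$ by $\gamma\En\brk*{D^{\pi}(\Mhat\mathrel{\|}M)}$, which mixes argument orders and action points and simply fails for asymmetric $D$.

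The clean route avoids this entirely by a relabeling. After passing to the dual (sup over $\mu\in\Delta(\cM)$) and then taking the outer sup over $\nu$, you have
\[
\ocompD(\cM)=\sup_{\nu,\mu}\inf_{p}\,\En_{\pi\sim p,\,\Mhat\sim\nu,\,M\sim\mu}\brk*{\fmhat(\pimhat)-\fm(\pi)-\gamma\,D^{\pi}(\Mhat\mathrel{\|}M)}.
\]
Since both $\nu$ and $\mu$ are supremized over the same set $\Delta(\cM)$, swap their roles and simultaneously rename $M\leftrightarrow\Mhat$ inside the expectation. This yields
\[
\ocompD(\cM)=\sup_{\nu,\mu}\inf_{p}\,\En_{\pi\sim p,\,M\sim\mu,\,\Mhat\sim\nu}\brk*{\fm(\pim)-\fmhat(\pi)-\gamma\,D^{\pi}(M\mathrel{\|}\Mhat)}.
\]
Now the flipped divergence $D^{\pi}(M\mathrel{\|}\Mhat)$ appears \emph{for free}, and the only difference from the dual form of $\compgenrandbasic[\Dflipshort]_{\gamma}(\cM)$ is $\fmhat(\pi)$ versus $\fm(\pi)$---both evaluated at the \emph{sampled} $\pi$, with no coupling. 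The hypothesis (applied with the pair $(M,\Mhat)$ so that the bound reads $(\fm(\pi)-\fmhat(\pi))^2\le \Lcont^2 D^{\pi}(M\mathrel{\|}\Mhat)$) plus AM-GM gives $\abs{\fm(\pi)-\fmhat(\pi)}\le \tfrac{\Lcont^2}{2\gamma}+\tfrac{\gamma}{2}D^{\pi}(M\mathrel{\|}\Mhat)$, and both inequalities in the statement follow by absorbing the $\pm\tfrac{\gamma}{2}D^{\pi}(M\mathrel{\|}\Mhat)$ into the penalty (giving coefficients $\gamma/2$ and $3\gamma/2$).
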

This result shows that the optimistic DEC with divergence $D$ is
equivalent to the generalized DEC in
\cref{eq:comp_general_randomized}, but with the arguments to the
divergence flipped. Thus, for symmetric divergences, the quantities
are equivalent. In particular, we can combine \cref{prop:equiv_symmetry} with \cref{prop:randomized_equivalence} to derive the
following corollary for Hellinger distance.
\begin{prop}
  Suppose that rewards are bounded in $\brk*{0,1}$. Then for all
  $\gamma>0$,
  \begin{align*}
    \ocompH[2\gamma](\cM) - \frac{1}{\gamma}
    \leq{} \sup_{\Mbar}\compH(\cM,\Mbar)
    \leq{} \ocompH[\gamma/6](\cM) + \frac{3}{\gamma}.
  \end{align*}
\end{prop}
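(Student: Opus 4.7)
The plan is to derive this proposition as a direct corollary of the two preceding results: Proposition \ref{prop:equiv_symmetry} (equivalence of the optimistic DEC with the flipped randomized DEC) and Proposition \ref{prop:hellinger_randomized} (equivalence of the randomized Hellinger DEC with the point-estimate version). The key observation is that squared Hellinger distance is symmetric, so $\Dflipshort = \Dhelshort$ and the ``flipped'' randomized DEC appearing in Proposition \ref{prop:equiv_symmetry} coincides with $\compHrand$.

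First, I would verify the hypothesis of Proposition \ref{prop:equiv_symmetry} with $D = \Dhelshort$: we need $(\fmhat(\pi)-\fm(\pi))^2 \leq \Lcont^2\cdot\Dhels{\Mhat(\pi)}{M(\pi)}$ for some constant $\Lcont$, for all $M,\Mhat\in\conv(\cM)$. Since rewards lie in $[0,1]$, the variational representation of total variation gives $|\fmhat(\pi)-\fm(\pi)| \leq \Dtv{\Mhat(\pi)}{M(\pi)}$, and combined with $\Dtvs{\cdot}{\cdot}\leq\Dhels{\cdot}{\cdot}$ from \pref{lem:divergence_inequality}, this yields $\Lcont^2 = 1$. (The bound propagates to $\conv(\cM)$ since convex combinations preserve the reward range.) Proposition \ref{prop:equiv_symmetry} then reads, for all $\gamma'>0$,
\[
\compHrand_{3\gamma'/2}(\cM) - \tfrac{1}{2\gamma'} \;\leq\; \ocompH[\gamma'](\cM) \;\leq\; \compHrand_{\gamma'/2}(\cM) + \tfrac{1}{2\gamma'}.
\]

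For the upper bound on $\sup_{\Mbar}\compH(\cM,\Mbar)$, I would chain: Proposition \ref{prop:hellinger_randomized} gives $\sup_{\Mbar}\compH(\cM,\Mbar) \leq \compHrand[\gamma/4](\cM)$, and then applying the display above with $\gamma' = \gamma/6$ (so that $3\gamma'/2 = \gamma/4$) and rearranging yields $\compHrand[\gamma/4](\cM) \leq \ocompH[\gamma/6](\cM) + 3/\gamma$. For the lower bound, I would apply the display with $\gamma' = 2\gamma$ to obtain $\ocompH[2\gamma](\cM) \leq \compHrand[\gamma](\cM) + 1/(4\gamma)$, then use Proposition \ref{prop:hellinger_randomized} to dominate $\compHrand[\gamma](\cM) \leq \sup_{\Mbar}\compH(\cM,\Mbar)$. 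This gives $\ocompH[2\gamma](\cM) - 1/(4\gamma) \leq \sup_{\Mbar}\compH(\cM,\Mbar)$, which is (slightly) stronger than the claimed bound with $-1/\gamma$.

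The proof is essentially a bookkeeping exercise once the two cited propositions are in hand; there is no real obstacle beyond carefully choosing the rescalings of $\gamma$ to line up constants. The one subtlety worth attention is ensuring the Lipschitz-type hypothesis of Proposition \ref{prop:equiv_symmetry} indeed holds on $\conv(\cM)$ (not just $\cM$), since the estimator distributions $\nu$ range over mixtures; this is where I would be careful, but it follows immediately from the reward boundedness assumption.
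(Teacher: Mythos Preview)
Your proposal is correct and follows exactly the approach the paper intends: the proposition is stated as a corollary obtained by combining \pref{prop:equiv_symmetry} with \pref{prop:hellinger_randomized} (the Hellinger specialization of \pref{prop:randomized_equivalence}), and you have correctly tracked the constants and verified the Lipschitz hypothesis with $\Lcont=1$. Your observation that the lower bound actually yields $-1/(4\gamma)$ rather than $-1/\gamma$ is also right; the stated constant is just a slight relaxation.
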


For asymmetric
divergences, in settings where there exists an estimation oracle for which the flipped
estimation error
\[\Est^{\Dflipshort}=\sum_{t=1}^{T}\En_{\pi\sim{}p\ind{t}}\En_{\Mhat\ind{t}\sim{}\nu\ind{t}}\brk*{\Dgenpi[\pi\ind{t}]{\Mstar}{\Mhat\ind{t}}}\]
is controlled, \cref{prop:equiv_symmetry} shows that to match the guarantee in
\pref{thm:optimistic}, optimism is not required, and it suffices to
run the non-optimistic algorithm on page 
\pageref{alg:main_generalized}. However, we show in
\cref{sec:dec_bellman} that for certain divergences found in the
context of reinforcement learning, estimation with respect to the
flipped divergence is not feasible, yet working with the optimistic
\CompShort \etdopt leads to meaningful guarantees.

\begin{center}
  \todontextversion
\end{center}

\subsection{\CompText: Structural Properties\bonus}

In what follows, we state some structural properties of the
\CompText, which are useful for calculating the value for specific
model classes of interest.

\begin{prop}[Square loss is sufficient for structured bandit problems]
  \label{prop:dec_square}
  Consider any structured bandit problem with decision space $\Pi$,
  function class $\cF\subseteq(\Pi\to\brk{0,1})$, and
  $\cO=\NullObs$. Let $\cMf$ be the class
  \[
\cMf=\crl*{M\mid{}\fm\in\cF, \text{$M(\pi)$ is $1$-\subgaussian}\;\forall{}\pi}.
\]
Then, letting
\[
  \compSq(\cF,\fhat)
  =\inf_{p\in\Delta(\Pi)}\sup_{f\in\cF}\En_{\pi\sim{}p}\brk*{f(\pif)-f(\pi)
    - \gamma(f(\pi)-\fhat(\pi))^2},
\]
we have
\begin{align*}
  \compSq [c_1\gamma](\cF)
  \leq   \comp(\cMf) \leq   \compSq[c_2\gamma](\cF),
\end{align*}
where $c_1,c_2\geq{}0$ are numerical constants.
\end{prop}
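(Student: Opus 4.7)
The plan is to establish the two inequalities $\compSq[c_1\gamma](\cF) \le \comp(\cMf)$ and $\comp(\cMf) \le \compSq[c_2\gamma](\cF)$ separately. The common thread is a bi-Lipschitz-like equivalence between squared Hellinger distance and squared mean difference for $1$-sub-Gaussian reward distributions with means in $[0,1]$.

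For the lower bound, I would fix $\fhat\in\conv(\cF)$ and consider the (possibly improper) Gaussian reference $\Mhat=\cN(\fhat(\cdot),1)$. Even though $\Mhat$ may not lie in $\conv(\cMf)$, the remark following \pref{prop:upper_main} (an analogue of \pref{prop:dec_unconstrained}) gives $\comp(\cMf,\Mhat) \le \comp[c\gamma](\cMf)$ for an absolute constant $c$, so using this reference costs only a constant rescaling of $\gamma$. Now I would witness the supremum in $\comp(\cMf,\Mhat)$ using Gaussian alternatives $M_f=\cN(f(\cdot),1)\in\cMf$ for $f\in\cF$, and invoke the identity $\Dhels{\cN(a,1)}{\cN(b,1)} = 2(1-e^{-(a-b)^2/8}) \le \tfrac14(a-b)^2$ already used in \pref{ex:bandit_gaussian}. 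This dominates the Hellinger penalty by a multiple of the squared-loss penalty and yields $\compSq[\gamma/4](\cF,\fhat) \le \comp(\cMf,\Mhat)$; taking the sup over $\fhat$ and rescaling $\gamma$ then gives the claim.

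For the upper bound, I would fix $\Mhat\in\conv(\cMf)$, set $\fhat=f^{\Mhat}\in\conv(\cF)$, and let $p^{\star}$ attain $\compSq[c_2\gamma](\cF,\fhat)$. The central (and most delicate) ingredient is a reverse Hellinger bound of the form
\[
\Dhels{M(\pi)}{\Mhat(\pi)} \;\ge\; c\bigl(f^{M}(\pi)-f^{\Mhat}(\pi)\bigr)^2
\]
with a universal constant $c>0$, holding for every $M\in\cMf$ and every $\Mhat\in\conv(\cMf)$. Granted this, the Hellinger penalty in $\comp(\cMf,\Mhat)$ dominates the squared-loss penalty term-by-term with $c_2=c$, so $p^{\star}$ certifies the desired bound against every alternative $M\in\cMf$ and the min-max values inherit the inequality.

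The hard part is proving the reverse Hellinger bound when $\Mhat$ is an arbitrary convex combination of $1$-sub-Gaussians, which need not itself be sub-Gaussian. My plan is to pass to total variation via $\Dhels\ge\Dtvs$ and apply the variational formula \eqref{eq:var_repr_tv} with the clipped test function $g_K(r)=\mathrm{clip}(r,-K,K)/K$ for a large universal constant $K$. Because every component of $\Mhat$ is $1$-sub-Gaussian with mean in $[0,1]$, a per-component tail computation yields $\lvert\En_{M}[g_K]-\En_{\Mhat}[g_K]\rvert \ge \lvert f^{M}(\pi)-f^{\Mhat}(\pi)\rvert/K - \bigoh(e^{-(K-1)^2/4})$, where the correction transfers to $\Mhat$ by linearity of expectation under the mixing measure. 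For macroscopic gaps the correction is dominated by the mean difference, giving $\Dhels \gtrsim (f^{M}-f^{\Mhat})^2$. Microscopic gaps (smaller than any fixed polynomial in $1/\gamma$) contribute negligibly to the regret $\fm(\pim)-\fm(\pi)\le 1$ in the first place, and can be absorbed into the constants $c_1,c_2$ via a case split—this is where I expect the main bookkeeping effort to go.
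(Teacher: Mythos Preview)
Your lower bound is fine: restricting to Gaussian alternatives and invoking the unconstrained analogue of \pref{prop:dec_unconstrained} gives $\compSq[\gamma/4](\cF,\fhat)\le\comp(\cMf,\Mhat_{\fhat})$ with $\Mhat_{\fhat}=\cN(\fhat,1)$, and the paper explicitly records (just after \pref{prop:upper_main}) that taking $\Mhat$ outside $\conv(\cMf)$ costs only a constant in $\gamma$.

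The upper bound has a genuine gap. Your clipping argument with a \emph{fixed} $K$ gives
\[
\Dhels{M(\pi)}{\Mhat(\pi)}\;\ge\;\tfrac{1}{4K^2}\bigl(|f^{M}(\pi)-\fhat(\pi)|-\epsilon_K\bigr)_+^{2},
\]
which controls $\Dhels$ only when $|f^{M}-\fhat|\gtrsim\epsilon_K$. For $|f^{M}-\fhat|<2\epsilon_K$ you are left with an additive error of order $\gamma\epsilon_K^{2}$ in the DEC comparison, and this blows up as $\gamma\to\infty$; it cannot be absorbed into a \emph{numerical} constant $c_2$ independent of $\gamma$. Choosing $K$ adaptively in the gap only yields $\Dhels\gtrsim\delta^{2}/\log(1/\delta)$, which again fails to give a universal constant. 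Your sentence about microscopic gaps ``contributing negligibly to the regret'' conflates the estimation gap $|f^{M}(\pi)-\fhat(\pi)|$ with the instantaneous regret $f^{M}(\pim)-f^{M}(\pi)$; these are unrelated, and the proposition demands a pointwise comparison for every $(M,\pi)$.

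The clean fix avoids clipping entirely. Write $p=\tfrac{dM(\pi)}{d\nu}$, $q=\tfrac{d\Mhat(\pi)}{d\nu}$ and apply Cauchy--Schwarz:
\[
\bigl|f^{M}(\pi)-\fhat(\pi)\bigr|
=\Bigl|\!\int r(\sqrt{p}-\sqrt{q})(\sqrt{p}+\sqrt{q})\,d\nu\Bigr|
\le \Dhel{M(\pi)}{\Mhat(\pi)}\cdot\sqrt{2\bigl(\En_{M(\pi)}[r^2]+\En_{\Mhat(\pi)}[r^2]\bigr)}.
\]
Every $M\in\cMf$ has $\En_{M(\pi)}[r^2]\le 1+1=2$ (variance $\le 1$ from $1$-sub-Gaussianity, mean in $[0,1]$), and this passes to $\Mhat\in\conv(\cMf)$ by linearity of $r\mapsto r^2$ under the mixing measure. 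Hence $\Dhels{M(\pi)}{\Mhat(\pi)}\ge\tfrac{1}{8}(f^{M}(\pi)-\fhat(\pi))^2$ uniformly, giving $c_2=1/8$ with no case split.
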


\begin{prop}[Filtering irrelevant information]
  Adding observations that are unrelated to the model under
  consideration never changes the value of the \CompText. In more
  detail, consider a model class $\cM$ with observation space $\cO_1$,
  and consider a class of conditional distributions $\cD$ over a
  secondary observation space $\cO_2$, where each $D\in\cD$ has the
  form $D(\pi)\in\Delta(\cO_2)$. For $M\in\cM$ and $D\in\cD$, let
  $(M\otimes{}D)(\pi)$ be the model that, given $\act\in\Act$, samples
  $(r,o_1)\sim{}M(\pi)$ and $o_2\sim{}D(\pi)$, then emits
  $(r,(o_1,o_2))$. Set
  \[
    \cM\otimes\cD=\crl*{M\otimes{}D\mid{}M\in\cM,D\in\cD}.
  \]
  Then for all $\Mhat\in\cM$ and $\wh{D}\in\cD$,
  \[
    \comp(\cM\otimes\cD,\Mhat\otimes\wh{D}) = \comp(\cM,\Mhat).
  \]%
  This can be seen to hold by restricting the supremum in
  \pref{eq:dec} to range over models of the form $M\otimes\wh{D}$.
\end{prop}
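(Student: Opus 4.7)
The plan is to establish both inequalities $\comp(\cM\otimes\cD,\Mhat\otimes\wh{D}) \leq \comp(\cM,\Mhat)$ and $\comp(\cM\otimes\cD,\Mhat\otimes\wh{D}) \geq \comp(\cM,\Mhat)$ by exploiting two basic properties: (i) rewards for $M\otimes D$ depend only on $M$, so the regret terms are insensitive to the auxiliary coordinate, and (ii) Hellinger distance respects product structure in a predictable way.

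First I would record the two structural facts that do all the work. Since $(M\otimes D)(\pi)$ samples $(r,o_1)\sim M(\pi)$ and $o_2\sim D(\pi)$ independently, the marginal over $r$ coincides with that of $M(\pi)$, so
\[
f^{M\otimes D}(\pi) = f^{M}(\pi), \qquad \pi_{M\otimes D} = \pi_{M}.
\]
For the Hellinger term, a direct integration using the product density gives
\[
\DhelsX{\big}{(M\otimes \wh{D})(\pi)}{(\Mhat\otimes \wh{D})(\pi)} \;=\; \DhelsX{\big}{M(\pi)}{\Mhat(\pi)},
\]
since the common factor $\wh{D}(\pi)$ integrates to one on each branch. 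Meanwhile, by the data-processing inequality for squared Hellinger (marginalizing out $o_2$),
\[
\DhelsX{\big}{(M\otimes D)(\pi)}{(\Mhat\otimes \wh{D})(\pi)} \;\geq\; \DhelsX{\big}{M(\pi)}{\Mhat(\pi)}
\]
for \emph{every} $D\in\cD$, since $M(\pi)$ and $\Mhat(\pi)$ are the marginals of $(M\otimes D)(\pi)$ and $(\Mhat\otimes\wh{D})(\pi)$ on $(r,o_1)$.

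Next I would combine these facts with the definition of the \CompShort. For the direction $\geq$, given any $p\in\Delta(\Pi)$, restrict the supremum defining $\comp(\cM\otimes\cD,\Mhat\otimes\wh{D})$ to the sub-family $\{M\otimes\wh{D}:M\in\cM\}$; the equalities in the previous paragraph show the restricted supremum equals $\sup_{M\in\cM}\En_{\pi\sim p}[f^M(\pi_M)-f^M(\pi)-\gamma\DhelsX{\big}{M(\pi)}{\Mhat(\pi)}]$. Taking the infimum over $p$ on both sides yields $\comp(\cM\otimes\cD,\Mhat\otimes\wh{D}) \geq \comp(\cM,\Mhat)$. For the direction $\leq$, let $p^\star$ (nearly) achieve $\comp(\cM,\Mhat)$; for any $(M,D)\in\cM\times\cD$ the reward equality and the data-processing bound give
\[
\En_{\pi\sim p^\star}\!\brk*{f^{M\otimes D}(\pi_{M\otimes D})-f^{M\otimes D}(\pi)-\gamma\DhelsX{\big}{(M\otimes D)(\pi)}{(\Mhat\otimes \wh{D})(\pi)}} \leq \En_{\pi\sim p^\star}\!\brk*{f^{M}(\pi_{M})-f^{M}(\pi)-\gamma\DhelsX{\big}{M(\pi)}{\Mhat(\pi)}},
\]
and taking sup over $(M,D)$ on the left and $M$ on the right produces $\comp(\cM\otimes\cD,\Mhat\otimes\wh{D}) \leq \comp(\cM,\Mhat)$.

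There is no real obstacle here: both directions are one-line consequences of the data-processing inequality and the tensorization identity for Hellinger distance. The only place one must be slightly careful is the $\leq$ direction, where the supremum is over a strictly larger class (it ranges over all $D\in\cD$, not just $D=\wh{D}$); this is precisely where data processing is needed rather than the equality used for the $\geq$ direction.
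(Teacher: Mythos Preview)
Your proposal is correct and essentially matches the paper's approach. The paper's only explicit argument is the hint in the statement itself (restrict the supremum to $\{M\otimes\wh{D}:M\in\cM\}$), which gives the $\geq$ direction exactly as you do; the $\leq$ direction via data processing is not spelled out here but is stated as a separate proposition immediately following this one, so your argument is the intended one.
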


\begin{prop}[Data processing]
  Passing observations through a channel never decreases the
  \CompText. Consider a class of models $\cM$ with observation space $\cO$. Let
$\rho:\cO\to\cO'$ be given, and define $\rho\circ{}M$ to be the model
that, given decision $\pi$, samples $(r,o)\sim{}M(\pi)$, then emits
$(r,\rho(o))$. Let
$\rho\circ\cM\ldef{}\crl*{\rho\circ{}M\mid{}M\in\cM}$. Then for all $\Mhat\in\cM$, we have
\[
  \comp(\cM,\Mhat) \leq{} \comp(\rho\circ{}\cM,\rho\circ\Mhat).
\]
This is an immediate consequence of the data processing
inequality for Hellinger distance, which implies that $\Dhels{\prn[\big]{\rho\circ{}M}(\act)}{\prn[\big]{\rho\circ\Mhat}(\act)}\leq{}\Dhels{M(\act)}{\Mhat(\act)}$.
\end{prop}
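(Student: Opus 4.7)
The plan is to verify that every ingredient of the DEC minimax problem either stays constant or moves in the favorable direction when we pass observations through the channel $\rho$, so that $\rho \circ \cM$ is ``at least as hard'' as $\cM$ from the learner's perspective.

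First, I would observe that the reward component of each model is untouched by $\rho$: since $\rho \circ M$ is defined by drawing $(r,o) \sim M(\pi)$ and then emitting $(r, \rho(o))$, the marginal distribution of $r$ given $\pi$ is identical under $M$ and $\rho \circ M$. In particular, $f^{\rho \circ M}(\pi) = f^M(\pi)$ for every $\pi \in \Pi$, and hence $\pi_{\rho \circ M} = \pi_M$ and $f^{\rho \circ M}(\pi_{\rho \circ M}) = f^M(\pi_M)$. So the ``regret of decision'' term in the DEC is invariant under $\rho$.

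Next, I would apply the data processing inequality for squared Hellinger distance (a standard property of $f$-divergences): pushing $M(\pi)$ and $\Mhat(\pi)$ forward through the map $(r,o) \mapsto (r, \rho(o))$ can only decrease the divergence, giving
\[
\DhelsX{\big}{(\rho \circ M)(\pi)}{(\rho \circ \Mhat)(\pi)} \;\leq\; \DhelsX{\big}{M(\pi)}{\Mhat(\pi)}.
\]
Because the divergence appears with a \emph{negative} coefficient $-\gamma$ inside the sup over $M \in \cM$, this inequality flips to say that the integrand in the DEC for $\rho \circ \cM$ is pointwise at least that for $\cM$. Concretely, for any fixed $p \in \Delta(\Pi)$ and any $M \in \cM$,
\[
\En_{\pi \sim p}\!\brk[\big]{f^M(\pi_M) - f^M(\pi) - \gamma \DhelsX{\big}{M(\pi)}{\Mhat(\pi)}}
\;\leq\;
\En_{\pi \sim p}\!\brk[\big]{f^{\rho\circ M}(\pi_{\rho \circ M}) - f^{\rho \circ M}(\pi) - \gamma \DhelsX{\big}{(\rho\circ M)(\pi)}{(\rho \circ \Mhat)(\pi)}}.
\]

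Finally, I would take the supremum over $M \in \cM$ on both sides (noting that the map $M \mapsto \rho \circ M$ sends $\cM$ into $\rho \circ \cM$, so the right-hand side is bounded by the sup over $\rho \circ \cM$), and then take the infimum over $p \in \Delta(\Pi)$, which yields $\comp(\cM,\Mhat) \leq \comp(\rho \circ \cM, \rho \circ \Mhat)$. There is no real obstacle here: the only conceptual content is the sign bookkeeping that a smaller information-gain term enlarges the DEC, plus the observation that rewards are not affected by the observation channel.
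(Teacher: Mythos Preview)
Your proposal is correct and follows exactly the approach the paper indicates: the paper's ``proof'' is simply the sentence that this is an immediate consequence of the data processing inequality for Hellinger distance (giving $\Dhels{(\rho\circ M)(\pi)}{(\rho\circ\Mhat)(\pi)}\leq\Dhels{M(\pi)}{\Mhat(\pi)}$), and you have correctly spelled out the remaining bookkeeping---that rewards (hence the regret term) are unchanged, that the negative sign on the divergence flips the inequality in the right direction, and that taking sup over $M$ and inf over $p$ preserves the ordering.
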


\subsection{Deferred Proofs}

\begin{proof}[\pfref{lem:log_loss_hellinger}]
  \newcommand{\gstar}{g_{\star}}
\newcommand{\istar}{i^{\star}}
\newcommand{\indt}{\cI\ind{t}}
    We first prove the in-expectation bound. By assumption, we have that
  \[
    \sum_{t=1}^{T}\logloss\ind{t}(\Mhat\ind{t})
    -
    \sum_{t=1}^{T}\logloss\ind{t}(\Mstar)\leq{}\RegLog.
  \]
  Taking expectations, \pref{ass:realizability} implies that
  \[
    \sum_{t=1}^{T}\En\brk*{\Dkl{\Mstar(\pi\ind{t})}{\Mhat\ind{t}(\pi\ind{t})}}
    \leq{}\En\brk*{\RegLog}.
  \]
  The bound now follows from \pref{lem:divergence_inequality}. %

  We now prove the high-probability bound using Lemma~\ref{lem:martingale_chernoff}.
  Define
$Z_t=\frac{1}{2}(\logloss\ind{t}(\Mhat\ind{t}) -
\logloss\ind{t}(\Mstar))$. Applying
\pref{lem:martingale_chernoff} with the sequence $(-Z_t)_{t\leq{}T}$,
we are guaranteed that with probability at least $1-\delta$,
\[
  \sum_{t=1}^{T}-\log\prn*{\En_{t-1}\brk*{e^{-Z_t}}}
  \leq{} \sum_{t=1}^{T}Z_t + \log(\delta^{-1})
  = \frac{1}{2}\sum_{t=1}^{T}\prn*{\logloss\ind{t}(\Mhat\ind{t}) -
      \logloss\ind{t}(\Mstar)} + \log(\delta^{-1}).
\]
Let $t$ be fixed, and define abbreviate
$z\ind{t}=(r\ind{t},o\ind{t})$. Let $\nu(\cdot\mid{}\pi)$ be any
(conditional) dominating measure for $\densm[\Mhat\ind{t}]$ and
$\densm[\Mstar]$, and observe that
\begin{align*}
  \En_{t-1}\brk*{e^{-Z_t}\mid{}\pi\ind{t}}
  &= \En_{t-1}\brk*{
  \sqrt{\frac{\densm[\Mhat\ind{t}](z\ind{t}\mid{}\pi\ind{t})}{\densm[\Mstar](z\ind{t}\mid{}\pi\ind{t})}}\mid{}\pi\ind{t}
  }\\
    &=
\int{}\densm[\Mstar](z\mid{}\pi\ind{t})\sqrt{\frac{\densm[\Mhat\ind{t}](z\mid{}\pi\ind{t})}{\densm[\Mstar](z\mid{}\pi\ind{t})}}\dom(dz\mid{}\pi\ind{t})\\
  &=
    \int{}\sqrt{\densm[\Mstar](z\mid{}\pi\ind{t})\densm[\Mhat\ind{t}](z\mid{}\pi\ind{t})}\dom(dz\mid{}\pi\ind{t})
    = 1 - \frac{1}{2}\Dhels{\Mstar(\pi\ind{t})}{\Mhat\ind{t}(\pi\ind{t})}.
\end{align*}
Hence,
\[
  \En_{t-1}\brk*{e^{-Z_t}} = 1 - \frac{1}{2}\En_{t-1}\brk*{\Dhels{\Mstar(\pi\ind{t})}{\Mhat\ind{t}(\pi\ind{t})}}
\]
and, since $-\log(1-x)\geq{}x$ for $x\in\brk*{0,1}$, we conclude
that
\[
  \frac{1}{2}\sum_{t=1}^{T}\En_{t-1}\brk*{\Dhels{\Mstar(\pi\ind{t})}{\Mhat\ind{t}(\pi\ind{t})}}
  \leq{} \frac{1}{2}\sum_{t=1}^{T}\prn*{\logloss\ind{t}(\Mhat\ind{t}) -
      \logloss\ind{t}(\Mstar)} + \log(\delta^{-1}).
\]
\end{proof}

  \begin{proof}[\pfref{lem:simulation_basic}]
We first prove \pref{eq:simulation}.   Let $X=\sum_{h=1}^{H}r_h$. Since $X\in\brk*{0,1}$ almost surely, we have
  \begin{align*}
    \abs*{\fm(\pi)-\fmhat(\pi)}
    =   \abs*{\Enm{M}{\act}\brk{X}-\Enm{\Mhat}{\act}\brk{X}}
    \leq{} \Dtv{M(\act)}{\Mhat(\act)} \leq \Dhel{M(\act)}{\Mhat(\act)}.
  \end{align*}
   The final result now follows from the AM-GM inequality.

  We now prove \pref{eq:simulation_basic1}. From  \pref{lem:bellman_residual}, we have
    \begin{align*}
    \fm(\pi)- \fmhat(\pi) &=  \sum_{h=1}^{H}\Enm{\Mhat}{\pi}\brk*{\Qmpi_h(s_h,a_h) - r_h -
                            \Vmpi_{h+1}(s_{h+1})} \\
            &=
              \sum_{h=1}^{H}\Enm{\Mhat}{\pi}\brk*{\brk*{\Pm_h\Vmpi_{h+1}}(s_h,a_h)-
              \Vmpi_{h+1}(s_{h+1})
              +\En_{r_h\sim\Rm_h(s_h,a_h)}\brk{r_h}
              - \En_{r_h\sim\Rmhat_h(s_h,a_h)}\brk{r_h}
              }
      \\
      &=
        \sum_{h=1}^{H}\Enm{\Mhat}{\pi}\brk*{\brk*{(\Pm_h-\Pmhat_h)\Vmpi_{h+1}}(s_h,a_h)
        }
        + \sum_{h=1}^{H}\Enm{\Mhat}{\pi}\brk*{
        \En_{r_h\sim\Rm_h(s_h,a_h)}\brk{r_h}
              - \En_{r_h\sim\Rmhat_h(s_h,a_h)}\brk{r_h}
        }\\
            &\leq{}
              \sum_{h=1}^{H}\Enm{\Mhat}{\pi}\brk*{\Dtv{\Pm_h(s_h,a_h)}{\Pmhat_h(s_h,a_h)}
              + \Dtv{\Rm_h(s_h,a_h)}{\Rmhat_h(s_h,a_h)}
        },
    \end{align*}
where we have used that $\Vmpi_{h+1}(s)\in\brk*{0,1}$.

\end{proof}

  \begin{proof}[\pfref{lem:change_of_measure}]
  We first prove \pref{eq:com0}. For all $\eta>0$, we have
\begin{align*}
  &\En_{M\sim\mu}\En_{\pi\sim{}p}\brk*{\fm(\pim) - \fm(\pi)}\\
  &\leq{}
    \En_{M\sim\mu}\En_{\pi\sim{}p}\brk*{\fm(\pim) - \fmhat(\pi)}
    + \eta\En_{M\sim\mu}\En_{\pi\sim{}p}\brk*{
    \Dhels{M(\act)}{\Mhat(\act)}
    } + \frac{1}{4\eta}.
  \end{align*}
  We now prove \pref{eq:com1}.
Using \pref{lem:hellinger_pair}, we have that for all $h$,
\[
  \Enm{\Mhat}{\act}\brk*{\Dhels{\Pm(s_h,a_h)}{\Pmhat(s_h,a_h)}}
  +   \Enm{\Mhat}{\act}\brk*{\Dhels{\Rm(s_h,a_h)}{\Rmhat(s_h,a_h)}}
  \leq{} 8\Dhels{M(\act)}{\Mhat(\act)}.
  \]
As a result,
  \begin{align*}
    \Enm{\Mhat}{\pi}\brk*{\sum_{h=1}^{H}          \Dhels{\Pm(s_h,a_h)}{\Pmhat(s_h,a_h)}
    + \Dhels{\Rm(s_h,a_h)}{\Rmhat(s_h,a_h)}
    }\
    &\leq 8H\Dhels{M(\act)}{\Mhat(\act)}.
  \end{align*}
  Since this holds uniformly for all $\pi$, we conclude that
  \begin{align*}
    &\En_{\pi\sim{}p}\Enm{\Mhat}{\pi}\brk*{\sum_{h=1}^{H}    \Dtvs{\Pm(s_h,a_h)}{\Pmhat(s_h,a_h)}
    + \Dtvs{\Rm(s_h,a_h)}{\Rmhat(s_h,a_h)}}\\
    &\leq 8H\En_{\pi\sim{}p}\brk*{\Dhels{M(\act)}{\Mhat(\act)}}.
  \end{align*}
  
\end{proof}

\subsection{Exercises}
\begin{exe}
  Prove \pref{lem:divergence_inequality}.
\end{exe}

\begin{exe}
  In this exercise, we will prove \pref{prop:hellinger_randomized} as follows:

\begin{enumerate}[wide, labelwidth=!, labelindent=0pt]
\item Prove the first two inequalities.

\item

Use properties of the Hellinger distance to show that for any
$\pi\in\Pi$, $\mu\in\Delta(\cM)$, and $\Mhat$,
$$\En_{M\sim \mu}\Dhels{M(\act)}{\Mhat(\act)}\geq \frac{1}{4} \En_{M, M'\sim \mu}\Dhels{M(\act)}{M'(\act)}.$$
\emph{Hint: start with the right-hand side and use symmetry and
  triangle inequality for Hellinger distance.}

\item With the help of Part 2, show that for any $\Mhat$,
$$\comp(\cM,\Mhat) \leq \sup_{\mu\in\Delta(\cM)} \inf_{p\in\Delta(\Act)} \En_{\act\sim{}p, M\sim \mu}\biggl[\fm(\pim)-\fm(\pi)
	    -\frac{\gamma}{4}\hspace{-2pt} \En_{M'\sim \mu}\Dhels{M(\act)}{M'(\act)}
	    \biggr].$$

\item Argue that
$$\comp(\cM,\Mhat) \leq \sup_{\nu\in\Delta(\cM)}\sup_{\mu\in\Delta(\cM)} \inf_{p\in\Delta(\Act)} \En_{\act\sim{}p, M\sim \mu}\biggl[\fm(\pim)-\fm(\pi)
	    -\frac{\gamma}{4}\hspace{-2pt} \En_{M'\sim \nu}\Dhels{M(\act)}{M'(\act)}
	    \biggr].$$
		and conclude the third inequality in \pref{prop:hellinger_randomized}.
\item Show that 
	\begin{align}
	\sup_{\Mhat}\comp(\cM,\Mhat) \leq \sup_{\Mhat\in\conv(\cM)}\comp[\gamma/4](\cM,\Mhat).
  \end{align}
  In other words, the estimation
  oracle cannot significantly increase the value of the DEC by selecting models outside $\conv(\cM)$.
\end{enumerate}
		
\end{exe}

\begin{exe}[Lower Bound on DEC for Tabular RL]
We showed that for Gaussian bandits, 
\begin{align*}
  \compc(\cM,\Mhat) \geq{} \veps\sqrt{A/2},
\end{align*}
for all $\veps\approxleq{}1/\sqrt{A}$ by considering a small
sub-family models and explicitly computing the DEC for this
sub-family. Show that
if $\cM$ is the set of all tabular MDPs with $\abs{\cS}=S$,
$\abs{\cA}=A$, and $\sum_{h=1}^{H}r_h\in\brk{0,1}$,
\begin{align*}
  \compc(\cM,\Mhat) \gtrsim \veps\sqrt{SA}
\end{align*}
for all $\veps\approxleq{}1/\sqrt{SA}$, as long as $H\approxgeq{}\log_A(S)$.
\end{exe}

\begin{exe}[Structured Bandits with ReLU Rewards]

We will show that structured bandits with ReLU rewards suffer from the
curse of dimensionality. Let $\relu(x)=\max\crl{x,0}$ and take
$\Act=\sB_2^d(1) = \crl*{\act\in\bbR^{d}\mid\nrm{\act}_2\leq{}1}$. 
Consider the class of value functions of the form
\begin{equation}
  \label{eq:relu}
  f_\theta(\act) = \relu(\tri{\theta,\act}-b),
\end{equation}
where $\theta\in\Theta = \bbS^{d-1}$, is an unknown parameter vector and
$b\in[0,1]$ is a known bias parameter. Here $\bbS^{d-1} \ldef{}\crl*{v\in\bbR^{d}\mid{}\nrm*{v}=1}$ denotes the unit sphere. Let
$\cM=\{M_\theta\}_{\theta\in\Theta}$, where for all $\pi$, $M_\theta(\pi) \ldef \cN(f_\theta(\pi), 1)$.

We will prove that for all $d\geq{}16$, there exists $\Mbar\in\cM$ such that for all $\gamma>0$,
  \begin{equation}
    \label{eq:relu_lb}
    \comp(\cM,\Mbar) \gtrsim \frac{e^{d/8}}{\gamma}\wedge 1,
  \end{equation}
  for an appropriate choice of bias $b$. By slightly strengthening
  this result and appealing to \pref{eq:lower_main}, it is possible to show that any algorithm must have
  $\En\brk*{\Reg}\approxgeq{} e^{d/8}$.

  To prove \pref{eq:relu_lb}, we will use the fact that for large $d$, a  random vector $v$ chosen
  uniformly from the unit sphere is nearly orthogonal to any direction
  $\pi$. This fact is quantified as follows (see Ball '97):
\begin{align}
	\label{eq:orthog}
        \bbP_{v\sim\mathrm{unif}(\bbS^{d-1})}(\tri{\pi,v}>\alpha)\leq{} \exp\prn*{-\frac{\alpha^2}{2}d}.
\end{align}
for any $\pi$ with $\norm{\pi}=1$. 

\begin{enumerate}[wide, labelwidth=!, labelindent=0pt]
  \item Prove that for all $\pi\in\Pi$, $v\in \Theta$, and any choice of $b$,
$$\max_{\pi'\in\Act}f_v(\pi') - f_v(\pi)\geq{} (1-b)\indic{\tri{v,\pi}\leq{}b}$$
In other words, instantaneous regret is at least $(1-b)$ whenever the decision $\pi$ does not align well with $v$.
  \item Let $\Mbar(\pi)=\cN(0,1)$. Show that for all
$\act\in\Act$, $v\in\Theta$, and for any choice of $b$, $$\Dhels{M_v(\act)}{\Mbar(\act)}\leq\frac{1}{2}f_v^{2}(\act)\leq{}\frac{(1-b)^{2}}{2}\indic{\tri{v,\act}>b},$$
i.e. information is obtained by the decision-maker only if the decision $\pi$ aligns well with $v$ in the model $M_v$. 
  \item Show that
    \begin{align*}
      \comp(\cM,\Mbar)
      &\geq{} \inf_{p\in\Delta(\Act)}\En_{v\sim\mathrm{unif}(\bbS^{d-1})}\En_{\act\sim{}p}\brk*{
      (1-b) - (1-b)\indic{\tri{v,\act}>b}
      - \gamma\frac{(1-b)^{2}}{2}\indic{\tri{v,\act}>b}
      }. 
	\end{align*}  
  \item Set $\veps \ldef{} 1-b$. Use \eqref{eq:orthog} and Part 3 above to argue that  
    \begin{align*}
      \comp(\cM',\Mbar)
      &\geq{} \veps - \veps\exp(-d/8) - \gamma \frac{\veps^2}{2}\exp(-d/8).
	\end{align*}  
	Conclude that for $d\geq 8$,
    \begin{align*}
      \comp(\cM',\Mbar)
      &\geq{} \frac{\veps}{2} - \gamma \frac{\veps^2}{2}\exp(-d/8)
	\end{align*}
  \item Show that by choosing $\veps =
\frac{e^{d/8}}{6\gamma}\wedge \frac{1}{2}$ and recalling that $b=1-\veps$, we get \eqref{eq:relu_lb}.
\end{enumerate}

\end{exe}

\section{Reinforcement Learning: Function Approximation and Large
  State Spaces}
\label{sec:rl}

In this section, we consider the problem of online reinforcement learning
with function approximation. The framework is the same as that of
\cref{sec:mdp} but, in developing algorithms, we no longer assume that
the state and action spaces are finite/tabular, and in particular we
will aim for regret bounds that are independent of the number of
states. To do this, we will make use of function
approximation---either directly modeling the transition probabilities
for the underlying MDP, or modeling quantities such as value
functions---and our goal will be to design algorithms that are capable
of generalizing across the state space as they explore. This will
pose challenges similar to that of the structured and contextual
bandit settings, but we now face the additional challenge of credit
assignment. Note that the online reinforcement learning framework
is a special case of the general decision making setting in
\cref{sec:general_dm}, but the algorithms we develop in this section
will be tailored to the MDP structure.

Recall (\cref{sec:mdp}) that for reinforcement learning, each MDP $M$ takes the form
\[M=\crl*{\cS, \cA, \crl{\Pm_h}_{h=1}^{H}, \crl{\Rm_h}_{h=1}^{H},
    d_1},\] where $\cS$ is the state space, $\cA$ is the action space,
  $\Pm_h:\cS\times\cA\to\Delta(\cS)$ is the probability transition
  kernel at step $h$, $\Rm_h:\cS\times\cA\to\Delta(\bbR)$ is
  the reward distribution, and $d_1\in\Delta(\cS_1)$ is the initial
  state distribution. All of the results in this section will take
  $\Act = \PiGen$, and we will assume
  that $\sum_{h=1}^{H}r_h\in\brk{0,1}$ unless otherwise specified.

\subsection{Is Realizability Sufficient?}

For the frameworks we have considered so far (contextual and
structured bandits, general decision making), all of the algorithms we
analyzed leveraged the assumption of \emph{realizability}, which
asserts that we have a function class that is capable of modeling the
underlying environment well. For reinforcement learning, there are
various realizability assumptions one can consider:
\begin{itemize}
\item \emph{Model realizability}: We have a model class $\cM$ of MDPs
  that contains the true MDP $\Mstar$.
\item \emph{Value function realizability}: We have a class $\cQ$ of
  state-action value functions ($Q$-functions) that contains the optimal function $\Qmstarstar$
  for the underlying MDP.
\item \emph{Policy realizability}: We have a class $\Pi$ of policies
  that contains the optimal policy $\pimstar$.
\end{itemize}
Note that model realizability implies value function realizability,
which in turn implies policy realizability. Ideally, we would like to
be able to say that whenever one of these assumptions holds, we can
obtain regret bounds that scale with the complexity of the function
class (e.g., $\log\abs{\cM}$ for model realizability, or
$\log\abs{\cQ}$ for value function realizability), but do not
depend on the number of states $\abs{\cS}$ or other
properties of the underlying MDP, analogous to the situation for
statistical learning. Unfortunately, the following result shows that
this is too much to ask for.
\begin{prop}[e.g., \citet{krishnamurthy2016pac}]
  \label{prop:realizability_insufficient}
For any $S\in\bbN$ and $H\in\bbN$, there exists a class of horizon-$H$
MDPs $\cM$
with $\abs{\cS}=S$, $\abs{\cA}=2$, and $\log\abs{\cM}=\log(S)$, yet any algorithm must
have 
\[
  \En\brk*{\Reg} \approxgeq \sqrt{\min\crl{S,2^{H}}\cdot{}T}.
\]
\end{prop}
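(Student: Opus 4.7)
My plan is to construct a family of MDPs based on a binary tree structure and reduce to a multi-armed bandit lower bound with $N = \min\{S, 2^H\}$ arms. The key idea is that realizability of $\cM$ constrains the learner's prior knowledge about \emph{which} MDP they face, but not the combinatorial size of the reachable state/action sequences, which is what really drives exploration cost in RL.

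\textbf{Construction.} Let $N \ldef \min\{S, 2^H\}$. I will build an MDP with deterministic, tree-structured transitions so that the set of reachable leaves at step $H$ has size exactly $N$. If $S \geq 2^H$, use a full binary tree of depth $H$: each action sequence $(a_1,\ldots,a_H)\in\{0,1\}^H$ deterministically leads to a distinct leaf, using $\leq 2^{H+1} \leq 2S$ states total (a constant factor can be absorbed, or rescale $S$). If $S < 2^H$, use a binary tree of depth $\lceil \log_2 N\rceil \leq H$ with $N$ leaves, and attach a trivial deterministic self-loop at each leaf for the remaining steps, so the leaf identity reached at step $H$ is one of $N$ values. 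In either case the MDP uses at most $S$ states. For each $i\in[N]$ define model $M_i$ to have zero reward at every step except $h=H$, where $R^{M_i}_H$ at leaf $j$ is $\mathrm{Ber}(1/2 + \Delta\cdot \indic{j=i})$ for a small $\Delta$ to be chosen. All $M_i$ share the same transitions, so $|\cM| = N \leq S$, and we can pad with duplicate (or trivially distinguishable) models if an exact equality $|\cM|=S$ is required.

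\textbf{Reduction to bandits.} The crucial structural observation is that any policy $\pi \in \PiGen$, executed in any $M_i$, induces the \emph{same} distribution $\mu_\pi \in \Delta([N])$ over leaves reached at step $H$, because the transitions do not depend on $i$. Thus $f^{M_i}(\pi) = \tfrac{1}{2} + \Delta\,\mu_\pi(i)$, the optimal policy $\pi_{M_i}$ puts all mass on leaf $i$ with value $\tfrac{1}{2}+\Delta$, and the instantaneous regret on episode $t$ equals $\Delta\,(1-\mu_{\pi^t}(i))$. Moreover, the trajectory observed at episode $t$ carries information about $i$ only through the Bernoulli reward at the terminal leaf; the intermediate transitions are independent of $i$. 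So the decision-making problem is information-theoretically equivalent to an $N$-armed stochastic bandit where each ``arm pull'' corresponds to sampling a leaf from $\mu_{\pi^t}$ and observing a Bernoulli reward.

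\textbf{Applying the bandit lower bound.} Now I would invoke a standard $\sqrt{NT}$-type lower bound for $N$-armed Bernoulli bandits, which can be obtained by the same techniques used in \pref{prop:lower_main_expectation} (or directly via Le Cam / KL arguments). Concretely, let $\bar\mu_i \ldef \En^{M_i}[\tfrac{1}{T}\sum_{t=1}^T \mu_{\pi^t}] \in \Delta([N])$ be the algorithm's average leaf distribution under $M_i$. The chain rule for KL divergence (\pref{lem:kl_chain_rule}) combined with the fact that the models differ only at leaves $i$ and $j$ gives
\[
\Dkl{\bbP^{M_i}}{\bbP^{M_j}} \leq T\,\big(\bar\mu_i(i)+\bar\mu_i(j)\big)\cdot O(\Delta^2).
\]
A standard averaging argument (there always exists $i$ with $\bar\mu_i(i) \leq 2/N$ under the ``hardest'' prior) combined with Pinsker's inequality then yields an $i\in[N]$ with expected regret $\Omega(\Delta T)$ at the choice $\Delta \asymp \sqrt{N/T}$, producing $\En[\Reg] \gtrsim \sqrt{NT} = \sqrt{\min\{S,2^H\}\cdot T}$.

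\textbf{Main obstacle.} The construction is the main subtlety: one must verify that the tree (plus self-loops, in the $S<2^H$ regime) genuinely uses $\leq S$ states while producing exactly $N$ reachable and distinguishable leaves at depth $H$, and that $|\cM| = N \leq S$ can be arranged without inflating the ambient model class. Once the reduction is set up, the bandit lower bound itself is standard machinery, since the MDP's Markov transitions carry no information about $i$ and the only nontrivial observation is the terminal reward.
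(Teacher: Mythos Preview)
The paper does not give its own proof of this proposition; it is stated with a citation to \citet{krishnamurthy2016pac}. Your tree-plus-bandit reduction is exactly the standard construction used for this result, and the high-level plan is correct: deterministic shared transitions make the trajectory uninformative except for the terminal Bernoulli reward, so the problem collapses to an $N$-armed bandit with $N=\min\{S,2^H\}$.

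Two small points to tighten. First, the state count is cleaner than you fear: because the MDP formulation here has layer-indexed transitions $P_h$, you may reuse state labels across layers, so the tree only needs $\max_h 2^{h-1}$ distinct states rather than $2^{H+1}-1$. This makes the $|\cS|\leq S$ accounting go through without rescaling. Second, your averaging step is not quite right as written: under $M_i$ the algorithm may well have $\bar\mu_i(i)$ large (it could have identified the good leaf), so ``there exists $i$ with $\bar\mu_i(i)\leq 2/N$'' does not follow directly. The clean fix is the usual one: introduce the null model $M_0$ with all leaves $\mathrm{Ber}(1/2)$, pick $i$ with $\bar\mu_0(i)\leq 1/N$, bound $\Dkl{\bbP^{M_0}}{\bbP^{M_i}}\leq T\bar\mu_0(i)\cdot O(\Delta^2)$, and transfer via Pinsker. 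With $\Delta\asymp\sqrt{N/T}$ this gives the claimed $\sqrt{NT}$ lower bound. The reference $M_0$ need not belong to $\cM$; it is only an analysis device.
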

The interpretation of this result is that even if model realizability
holds, any algorithm needs regret that scales with
$\min\crl{\abs{\cS},\abs{\cM},2^{H}}$. This means additional structural assumptions on the
underlying MDP $\Mstar$---beyond realizability---are required if we want to obtain
sample-efficient learning guarantees. Note that since this
construction satisfies model realizability, the strongest form of
realizability, it also rules out sample-efficient results for value
function and policy realizability.

In what follows, we will explore different structural assumptions that
facilitate low regret for reinforcement learning with function
approximation. Briefly, the idea will be to make assumptions that
either i) allow for extrapolation across the state space, or ii)
control the number of ``effective'' state distributions the algorithm
can encounter. We will begin by investigating reinforcement learning with
linear models, then explore a general structural property known as
Bellman rank. 

\begin{rem}[Comparison to structured bandits]
  \cref{prop:realizability_insufficient} is analogous to the impossibility result we proved for
  structured bandits (\pref{ex:structured_realizability}), which is
  subsumed by the RL framework. That result required a large number of actions,
  while \cref{prop:realizability_insufficient} holds even when $\abs{\cA}=2$.
\end{rem}
\begin{rem}[Further notions of realizability]
  There are many notions of realizability beyond those we
consider above. For example, for value function approximation, one can
assume that $\Qm{\Mstar}{\pi}\in\cQ$ \emph{for all $\pi$}, or assume
that the class $\cQ$ obeys certain notions of consistency with respect
to the Bellman operator for $\Mstar$.
\end{rem}

\subsection{Linear Function Approximation}

Toward understanding the complexity of RL with function
approximation, let us consider perhaps the simplest possible modeling
approach: Linear function approximation. A natural idea here is to
assume linearity of the underlying $Q$-function corresponding to the true model $M$, generalizing the
linear bandit setting in \pref{sec:structured}: 
\begin{align}
  \label{eq:linear_qstar}
  \Qmstar_h(s,a) = \tri*{\phi(s,a),\thetam_h},\quad\forall{}h\in\brk{H}
\end{align}
where $\phi(s,a)\in\sB_2^d(1)$ is a feature map that is known to the
learner and $\thetam_h\in\unitball$ is an unknown parameter
vector. Equivalently, we
can define
\begin{align}
  \label{eq:linear_value}
\cQ = \crl*{ Q_h(s,a) = \tri*{\phi(s,a),\theta_h} \mid{}\theta_h\in\sB_2^d(1)\;\forall{}h},
\end{align}
and assume that $\Qmstar\in\cQ$. This is called the \emph{\linqstar} model. 

Linearity is a strong assumption, and it is reasonable to imagine that this
would be sufficient for low regret. Indeed, one might hope that using
linearity, we can extrapolate the value of $\Qmstar$ once we estimate
it for a small number of states. Unfortunately, even for this very simple class of functions,
it turns out that realizability is still insufficient.
\begin{prop}[\citet{weisz2021exponential,wang2021exponential}]
  \label{prop:realizability_insufficient_linear}
For any $d\in\bbN$ and $H\in\bbN$ sufficiently large, any algorithm
for the \linqstar model must have
\[
  \En\brk*{\Reg} \approxgeq \min\crl*{2^{\bigom(d)}, 2^{\bigom(H)}}.
\]
\end{prop}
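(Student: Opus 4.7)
The plan is to prove this lower bound by constructing a large family of MDPs $\{M_v\}_{v \in V}$, indexed by an exponentially large packing $V \subseteq \bbS^{d-1}$ of nearly orthogonal unit vectors, together with a common reference MDP $M_0$. All members of the family will satisfy the linear-$Q^\star$ condition \eqref{eq:linear_qstar} with respect to a single, common feature map $\phi$, but they will be statistically indistinguishable from $M_0$ unless the learner's trajectory aligns very precisely with the hidden direction $v$. By concentration of measure on the sphere (cf.\ the bound \eqref{eq:orthog} used in the ReLU exercise), a random vector and an arbitrary direction have inner product exceeding $\alpha$ with probability at most $\exp(-\alpha^2 d/2)$, so the chance that any algorithm's decisions happen to align with $v$ within $H$ steps will be $\min\{e^{-\Omega(d)}, 2^{-\Omega(H)}\}$. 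Converting this to a regret lower bound will follow the usual change-of-measure recipe: since the KL divergence between the law of the history under $M_v$ and under $M_0$ is tiny, by Pinsker/Le Cam no algorithm can reliably identify $v$, and on a typical $v$ it must incur $\Omega(1)$ per-episode regret, yielding the claimed bound on $\En[\Reg]$.

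First I would describe a layered ``combination-lock''–style construction. Each state carries a latent $d$-dimensional feature; at each layer $h$, the learner selects an action $a$ with $\nrm{a}_2 \leq 1$, and in model $M_v$ there is a deterministic ``progress'' transition to the next good state only if $\tri{a,v} \geq 1 - \veps$ for a small threshold $\veps$; otherwise the process dead-ends in an absorbing zero-reward state. A single reward of $1$ is delivered if and only if the learner reaches the terminal good state at layer $H$. In $M_0$, transitions are independent of $v$ and yield reward $0$ deterministically. This ensures that observations collected along any non-aligned trajectory have identical laws under $M_v$ and $M_0$, which is what will let us bound the KL divergence between the induced distributions over histories by $T \cdot \exp(-\Omega(d))$.

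The main technical obstacle—and the step I expect to be hardest—is verifying the linear $Q^\star$ property \emph{for every $M_v$ simultaneously, with a common feature map $\phi$ that does not encode $v$}. Naively, the optimal value function in $M_v$ depends sharply on whether $\tri{a,v} \geq 1-\veps$, which is not linear in the natural features. The resolution, following \citet{weisz2021exponential,wang2021exponential}, is to augment both states and actions with auxiliary coordinates so that $\phi(s,a)$ lives in a slightly larger dimension $d' = O(d)$, and to choose the transition/reward structure so that the optimal $Q^\star_h$ equals a carefully chosen linear functional $\tri{\phi(s,a),\theta_h^{\sss{M_v}}}$ on the \emph{entire} state--action space, not just at on-policy states. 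The standard trick is to make the off-policy completions of $Q^\star$ artificially linear by introducing dummy features whose values match the desired linear extrapolation; this is delicate because $\theta_h$ must satisfy $\nrm{\theta_h} \leq 1$ and the feature norms must stay in $\sB_2^{d'}(1)$.

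Once the construction is in place, the lower bound itself is routine: I would follow the proof template of \pref{prop:lower_main_expectation} or a direct Fano/Le Cam argument, setting the reference to $M_0$ and treating $M_v$ for $v$ drawn uniformly from the packing $V$ as the alternatives. Because the only nontrivial observation under any alternative requires $H$ consecutive $(1-\veps)$-alignments with the hidden $v$, the expected KL divergence of $T$ episodes of history between $M_v$ and $M_0$ (averaged over $v$) is at most $T \cdot \min\{e^{-\Omega(d)}, 2^{-\Omega(H)}\}$. Rearranging then gives that any algorithm with $\En[\Reg] = o\!\left(\min\{2^{\Omega(d)}, 2^{\Omega(H)}\}\right)$ would be able to identify $v$ with nontrivial probability, contradicting the information-theoretic bound. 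The essential message is that although $Q^\star$ is linear, the class $\{\theta^{\sss{M_v}}\}_v$ has exponential metric entropy on the sphere in a way the learner cannot exploit, because exploration in the MDP only reveals information about $v$ along a direction the learner must already guess.
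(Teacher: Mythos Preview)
The paper does not actually prove this proposition. It is stated as a cited result from \citet{weisz2021exponential,wang2021exponential}, and the only additional remark is that the bound ``can be proven by lower bounding the \CompText{} \citep{foster2021statistical}.'' There is no construction or argument given in the paper itself, so there is nothing to compare your proposal against.

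On the proposal itself: your high-level plan (exponential packing on the sphere, combination-lock structure, change-of-measure lower bound) is in the spirit of the cited references, and you correctly identify the crux---making $Q^\star$ genuinely linear with a \emph{common} feature map $\phi$ that does not encode $v$. However, the specific construction you sketch does not achieve this: a deterministic threshold transition on $\tri{a,v}\geq 1-\veps$ makes $Q^\star_h(s,a)$ a step function of $\tri{a,v}$, and adding ``dummy features whose values match the desired linear extrapolation'' cannot turn an indicator into a linear function while keeping $\phi$ independent of $v$. The actual constructions in \citet{weisz2021exponential,wang2021exponential} are substantially more intricate: the action set is a finite (exponentially large) collection of near-orthogonal directions rather than the full ball, and the transitions and rewards are designed so that the Bellman backups are \emph{exactly} linear at every reachable state--action pair, with the hard part being to simultaneously ensure that informative observations are exponentially rare. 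Your information-theoretic half (KL bounded by $T\cdot e^{-\Omega(d)}$, then Le~Cam/Fano) is the right template once such a construction is in hand.
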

This contrasts the situation for contextual bandits and linear
bandits, where linear rewards were sufficient for low regret. The
intuition is that, even though $\Qmstar$ is linear, it might take a
very long time to estimate the value for even a small number of
states. That is, linearity of the optimal value function is not
a useful assumption unless there is some kind of additional structure that can guide us
toward the optimal value function to being with.

We mention in passing that
\cref{prop:realizability_insufficient_linear} can be proven by lower
bounding the \CompText \citep{foster2021statistical}.

\paragraph{The Low-Rank MDP model}
\pref{prop:realizability_insufficient_linear} implies that linearity
of the optimal $Q$-function alone is not sufficient for
sample-efficient RL. To proceed, we will make a stronger assumption, which asserts that the transition probabilities themselves
have linear structure: For all $s\in\cS$, $a\in\cA$, and
$h\in\brk{H}$, we have
\begin{align}
  \label{eq:low_rank_mdp}
  \Pm_h(s'\mid{}s,a) = \tri*{\phi(s,a),\mum_h(s')},\mathand \En\brk*{r_h|s,a}=\tri*{\phi(s,a),\wm_h}.
\end{align}
Here, $\phi(s,a)\in\unitball$ is a feature map that is known to the
learner, $\mum_h(s')\in\reals^d$ is another feature map which is
\emph{unknown} to the learner, and $\wm_h\in\sB_2^d(\sqrt{d})$ is an unknown
parameter vector. Additionally, for simplicity, we assume that $\norm{\sum_{s'\in\cS} |\mum_h(s')|}\leq \sqrt{d}$, which in particular holds in the tabular example below. 
As before, assume that both cumulative and individual-step rewards are
in $[0,1]$. For the remainder of the subsection, we let $\cM$ denote the set of MDPs with these properties.

The linear structure in \cref{eq:low_rank_mdp} implies that the
transition matrix has rank at most $d$, thus facilitating (as we shall
see shortly) information sharing and generalization across states,
even when the cardinality of $\cS$ and $\cA$ is large or infinite. For
this reason, we refer to MDPs with this structure as \emph{low-rank
  MDPs} \citep{RendleFS10,YaoSPZ14,jin2020provably,agarwal2020flambe}.

Just as linear bandits generalize unstructured multi-armed bandits,
the low-rank MDP model \cref{eq:low_rank_mdp} generalizes tabular RL,
which corresponds to the special case in which $d=|\cS|\cdot |\cA|$,
$\phi(s,a)=\boldsymbol{e}_{s,a}$, and $(\mu_h(s'))_{s,a}=\Pm_h(s'\mid{}s,a)$.

\paragraph{Properties of low-rank MDPs}

The linear structure of the transition probabilities and mean rewards is a significantly more stringent assumption than linearity of $\Qmstar_h(s,a)$ in \eqref{eq:linear_qstar}. Notably, it implies that Bellman backups of \emph{arbitrary functions} are linear.
\begin{lem}
	\label{lem:lsvi_bellman_linear}
	For any low-rank MDP $M\in\cM$ and any $Q:\cS\times\cA \to \reals$ and any $h\in[H]$, the Bellman operator is linear in $\phi$:
	$$\brk*{\cTm_h Q}(s,a) = \tri*{\phi(s,a), \theta\subs{Q}\sups{M}}$$
	for some $\theta\subs{Q}\sups{M}\in \reals^d$. In particular, this implies that for any policy $\pi=(\pi\subs{1},\ldots,\pi\subs{H})$, functions $Q\sups{M,\pi}_{h}$ are linear in $\phi$ for every $h$. Finally, for $Q:\cS\times\cA\to[0,1]$, it holds that $\norm{\theta\subs{Q}\sups{M}}\leq 2\sqrt{d}$.
      \end{lem}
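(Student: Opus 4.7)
The plan is a direct calculation that leverages the bilinear structure of the transition kernel and the reward function. First, I would expand the Bellman operator using its definition:
\[
[\mathcal{T}^M_h Q](s,a) = \mathbb{E}^{\sss{M}}[r_h \mid s,a] + \sum_{s'\in\cS} P^M_h(s'\mid s,a)\cdot \max_{a'\in\cA} Q(s',a').
\]
Substituting the low-rank factorizations $P^M_h(s'\mid s,a) = \langle \phi(s,a), \mu^M_h(s')\rangle$ and $\mathbb{E}^{\sss{M}}[r_h\mid s,a] = \langle \phi(s,a), w^M_h\rangle$, and then pulling $\phi(s,a)$ out of the sum by linearity of the inner product, one obtains the explicit representation
\[
[\mathcal{T}^M_h Q](s,a) = \Big\langle \phi(s,a),\; w^M_h + \sum_{s'\in\cS}\mu^M_h(s')\,\max_{a'\in\cA}Q(s',a')\Big\rangle,
\]
so $\theta^{\sss{M}}_Q \ldef w^M_h + \sum_{s'}\mu^M_h(s')\,\max_{a'}Q(s',a')$ witnesses linearity.

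For the statement about $Q^{\sss{M},\pi}_h$, I would proceed by backwards induction on $h$. The base case $Q^{\sss{M},\pi}_{H+1} \equiv 0$ is trivial, and for the inductive step I would apply the policy-restricted Bellman equation $Q^{\sss{M},\pi}_h(s,a) = \mathbb{E}^{\sss{M}}[r_h\mid s,a] + \sum_{s'} P^M_h(s'\mid s,a)\, \mathbb{E}_{a'\sim \pi_{h+1}(s')}[Q^{\sss{M},\pi}_{h+1}(s',a')]$; the same calculation as above shows this equals $\langle \phi(s,a), \widetilde{\theta}\rangle$ for a suitable $\widetilde{\theta}$. Note that the induction does not require $Q^{\sss{M},\pi}_{h+1}$ itself to be linear in $\phi$---only that its expectation under the next-state distribution can be collapsed through the $\mu^M_h(s')$ factor.

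For the norm bound, I would apply the triangle inequality to the explicit form $\theta^{\sss{M}}_Q = w^M_h + \sum_{s'}\mu^M_h(s')\,\max_{a'}Q(s',a')$. The first term satisfies $\nrm{w^M_h} \leq \sqrt{d}$ by the standing assumption. For the second term, since $Q(s',a') \in [0,1]$, applying the inequality $\nrm{\sum_{s'} \mu^M_h(s') g(s')} \leq \nrm{\sum_{s'} \abs{\mu^M_h(s')}\cdot \abs{g(s')}} \leq \nrm{\sum_{s'}\abs{\mu^M_h(s')}}$ (where $\abs{\cdot}$ is coordinatewise) and invoking the hypothesis $\nrm{\sum_{s'}\abs{\mu^M_h(s')}} \leq \sqrt{d}$ gives a bound of $\sqrt{d}$. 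Summing yields $\nrm{\theta^{\sss{M}}_Q} \leq 2\sqrt{d}$, as claimed.

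There is no real obstacle here: the result is essentially a tautology of the low-rank factorization, and the only mild subtlety is keeping the norm bookkeeping clean when passing from pointwise bounds on $Q$ to an $\ell_2$ bound on a sum of vector-valued embeddings---this is handled cleanly by the coordinatewise $\abs{\cdot}$ trick above.
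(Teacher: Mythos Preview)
Your proposal is correct and follows essentially the same approach as the paper: expand the Bellman backup, factor out $\phi(s,a)$ via the low-rank structure, and bound the norm by the triangle inequality together with the assumption $\nrm{\sum_{s'}\abs{\mu^{\sss{M}}_h(s')}}\leq\sqrt{d}$. Your treatment is in fact slightly more careful than the paper's in two places: you correctly use the policy-restricted Bellman recursion for $Q^{\sss{M},\pi}_h$ (the paper writes $Q^{\sss{M},\pi}_h=\cT^{\sss{M}}_hQ^{\sss{M},\pi}_{h+1}$, which as stated uses the $\max$-operator and is only literally true for the optimal policy), and your coordinatewise-absolute-value argument makes explicit the step the paper compresses into a single line.
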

As a special case, this lemma implies that for low-rank MDPs,
$\Qmpi_h$ is linear \emph{for all $\pi$}.
      
\begin{proof}[\pfref{lem:lsvi_bellman_linear}]
	We have
  \begin{align}
	  \brk*{\cTm_h Q}(s,a) &= 
	  	\tri*{\phi(s,a),\wm_h} +  
	  	\sum_{s'} \Pm_h(s'\mid{}s,a)  \max_{a'}Q (s', a') \\
		&= \tri*{\phi(s,a),\wm_h} + \sum_{s'} \tri*{\phi(s,a),\mum_h(s')} \max_{a'} Q(s', a') \\
		&= \tri*{\phi(s,a), \wm_h + \sum_{s'} \mum_h(s')  \max_{a'}Q(s', a')}. 
	\end{align}
	The second statement follows since $Q\sups{M,\pi}_{h}=\brk*{\cTm_h Q\sups{M,\pi}_{h+1}}$. For the last statement,
	\begin{align}
		\label{eq:norm_of_bellman_backups}
		\norm{\theta\subs{Q}\sups{M}}\leq \norm{\wm_h} + \norm{\sum_{s'} \mum_h(s')  Q(s')} \leq 2\sqrt{d},
	\end{align}
	since $\mum_h$ is a vector of distributions on $\cS$.
\end{proof}

\subsubsection{The \lsviucb Algorithm}
\newcommand{\bhdelta}{b_{h,\delta}}

To provide regret bounds for the low-rank MDP model, we analyze an
algorithm called \lsviucb (``Least Squares Value Iteration UCB''), which was introduced and analyzed in the influential
paper of \citet{jin2020provably}. Similar to the \ucbvi algorithm we
analyzed for tabular RL, the main idea behind the algorithm is to
compute a state-action value $\Qbar\ind{t}$ with the optimistic property that
\[
\Qbar_h\ind{t}(s,a) \geq \Qmstar_h(s,a)
\]
for all $s,a,h$. This is achieved by combining the principle of
dynamic programming with an appropriate choice of bonus to ensure
optimism. However, unlike \ucbvi, the algorithm does not directly
estimate transition probabilities (which is not feasible when $\mum$
is unknown), and instead implements approximate value iteration by
solving a certain least squares objective.
\begin{whiteblueframe}
	\begin{algorithmic}
          \State \textsf{\lsviucb}
          \State \text{Input:} $R,\rho>0$
	\For{$t=1,\ldots,T$}
	\State Let $\Qbar\ind{t}\subs{H+1}\equiv 0$. 
        \For{$h=H,\ldots,1$}
        \State \multiline{Compute least-squares estimator 
        \[
          \thetahat_h\ind{t} =
          \argmin_{\theta\in\sB_2^d(\rho)}\sum_{i<t}\prn*{\tri*{\phi(s_h\ind{i},a_h\ind{i}),\theta}
            - r_h\ind{i} - \max_{a}\Qbar_{h+1}\ind{t}(s_{h+1}\ind{i}, a)}^2,
        \]
        and let 
        $\Qhat_h\ind{t}(s,a)\ldef\tri[\big]{\phi(s,a),\thetahat_h\ind{t}}$.}
      \State Define
      \[
        \Sigma_h\ind{t}=\sum_{i<t}\phi(s_h\ind{i},a_h\ind{i})
        \phi(s_h\ind{i},a_h\ind{i})^{\trn} + I.
      \]
      \State Compute bonus:
      \[
        b\subs{h,\delta}\ind{t}(s,a) = \sqrt{R} \nrm*{\phi(s,a)}_{(\Sigma_h\ind{t})^{-1}}.
      \]
        \State
		\State %
                Compute optimistic value function:
		$$\Qbar\sups{t}\subs{h}(s,a) = \crl*{
                  \Qhat_h\ind{t}(s,a) + b\sups{t}\subs{h,\delta}(s,a) } \wedge  1. $$ 
		\State Set $\Vbar\sups{t}\subs{h}(s) = \max_{a\in\cA} \Qbar\sups{t}\subs{h}(s,a)$ and $\pihat\sups{t}\subs{h}(s)=\argmax_{a\in\cA} \Qbar\sups{t}\subs{h}(s,a).$
		\EndFor{}
	\State Collect trajectory $(s\subs{1}\ind{t},a\subs{1}\ind{t},r\subs{1}\ind{t}),\ldots,(s\subs{H}\ind{t},a\subs{H}\ind{t},r\subs{H}\ind{t})$ according to $\pihat\sups{t}$.
	\EndFor{}
	\end{algorithmic}
      \end{whiteblueframe}
      In more detail, for each episode $t$, the algorithm computes
      $\Qbar\ind{t}_1,\ldots,\Qbar\ind{t}_H$ through approximate dynamic
      programming. At layer $h$, given $\Qbar\ind{t}_{h+1}$, the
      algorithm computes a linear $Q$-function
      $\Qhat_h\ind{t}(s,a)\ldef\tri[\big]{\phi(s,a),\thetahat_h\ind{t}}$,
      by solving a least squares problem in which $X=\phi(s_h,a_h)$ is
      the feature vector and
      $Y=r_h+\max_{a}\Qbar_{h+1}\ind{t}(s_{h+1},a)$ is the
      target/outcome. This is motivated by
      \cref{lem:lsvi_bellman_linear}, which asserts that the Bellman
      backup $\brk*{\cTm_h \Qbar_{h+1}\ind{t}}(s,a)$ is linear. Given
      $\Qhat_h\ind{t}$, the algorithm forms the optimistic estimate
      $\Qbar_h\ind{t}$ via
      		$$\Qbar\sups{t}\subs{h}(s,a) = \crl*{
                  \Qhat_h\ind{t}(s,a) + b\sups{t}\subs{h,\delta}(s,a)
                } \wedge  1,$$
                where
                      \[
        b\subs{h,\delta}\ind{t}(s,a) = \sqrt{R} \nrm*{\phi(s,a)}_{(\Sigma_h\ind{t})^{-1}},\quad\text{with}\quad         \Sigma_h\ind{t}=\sum_{i<t}\phi(s_h\ind{i},a_h\ind{i})
        \phi(s_h\ind{i},a_h\ind{i})^{\trn} + I,
      \]
      is an elliptic bonus analogous to the bonus used within
      LinUCB. With this, the algorithm proceeds to the next layer
      $h-1$. Once $\Qbar\ind{t}$ is computed for every layer, the
      algorithm executes the optimistic policy $\pihat\ind{t}$ given
      by $\pihat\sups{t}\subs{h}(s)=\argmax_{a\in\cA} \Qbar\sups{t}\subs{h}(s,a)$.      	 	  

The \lsviucb algorithm enjoys the following regret bound.      
      \begin{prop}
        \label{prop:lsvi_ucb}
        If any $\delta>0$, if we set $R=c\cdot{}d^2\log(HT/\delta)$ 
        for a sufficiently large numerical constant $c$ and $\rho=2\sqrt{d}$, \lsviucb has
        that with probability at least $1-\delta$,
        \begin{align}
          \label{eq:lsvi_ucb_regret}
          \Reg \approxleq H\sqrt{d^3\cdot{}T\log(HT/\delta)}.
        \end{align}
      \end{prop}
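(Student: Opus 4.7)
}

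The high-level plan is to mimic the analysis of \ucbvi (\pref{thm:ucb-vi}) combined with the elliptic-potential machinery from the \lsviucb analysis (\pref{prop:linucb}). That is, I will show that the functions $\Qbar_h\ind{t}$ are optimistic, then bound per-step Bellman residuals $\Qbar_h\ind{t} - \cTm_h\Qbar_{h+1}\ind{t}$ by the bonus, then apply \pref{lem:reg_decomp_optimistic} together with the elliptic potential lemma (\pref{lem:elliptic_potential}) to sum the bonuses.

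\emph{Step 1 (linearity).} By \pref{lem:lsvi_bellman_linear}, for any function $Q:\cS\times\cA\to[0,1]$, the Bellman backup $[\cTm_h Q](s,a)=\tri{\phi(s,a),\theta_Q^{\sss{M}}}$ is linear with $\nrm{\theta_Q^{\sss{M}}}\leq 2\sqrt{d}$. In particular, conditionally on history, the target $y_h\ind{i} \ldef r_h\ind{i}+\max_a \Qbar\ind{t}_{h+1}(s_{h+1}\ind{i},a)$ satisfies $\En[y_h\ind{i}\mid s_h\ind{i},a_h\ind{i}]=\tri{\phi(s_h\ind{i},a_h\ind{i}),\theta\ind{t}_{h,\star}}$, where $\theta\ind{t}_{h,\star}\ldef\theta_{\Qbar\ind{t}_{h+1}}^{\sss{M}}$, and $\theta\ind{t}_{h,\star}\in\sB_2^d(2\sqrt{d})$, justifying the choice $\rho=2\sqrt{d}$.

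\emph{Step 2 (self-normalized concentration --- the main obstacle).} The central difficulty is that the target $y_h\ind{i}$ depends on $\Qbar\ind{t}_{h+1}$, which is itself a data-dependent random function; hence a vanilla self-normalized martingale inequality against a fixed parameter does not apply. I will address this by a covering argument. Let $\cV_h$ denote the class of all functions of the form
\[
V_\theta^\Sigma(s) = \max_{a\in\cA}\crl[\big]{\tri{\phi(s,a),\theta}+\sqrt{R}\nrm{\phi(s,a)}_{\Sigma^{-1}}}\wedge 1,
\]
indexed by $\theta\in\sB_2^d(2\sqrt{d})$ and $\Sigma\psdgeq I$. A standard $\veps$-net argument (covering $\theta$ at scale $1/T$ gives $d\log(T)$ bits, and covering the matrix $\Sigma$ at scale $1/T$ in operator norm gives $d^2\log(T)$ bits) shows $\log\abs{\cV_h^{\veps}}\lesssim d^2\log(dT)$. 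Applying the self-normalized Bernstein-style tail bound (Abbasi-Yadkori--Pál--Szepesvári) at each $\theta,\Sigma$ in the cover and taking a union bound over the cover, over $h\in[H]$, and over $t\in[T]$ yields that with probability at least $1-\delta$, simultaneously for all $h,t$,
\[
\nrm{\thetahat_h\ind{t}-\theta\ind{t}_{h,\star}}_{\Sigma_h\ind{t}}^{2}\lesssim d^2\log(HT/\delta) = R.
\]
This is the step where $R$ must scale with $d^2$ (rather than $d$, as for fixed-target linear regression). Consequently, for all $h,t,s,a$,
\[
\abs[\big]{\Qhat_h\ind{t}(s,a)-\brk{\cTm_h\Qbar\ind{t}_{h+1}}(s,a)}\leq\nrm{\thetahat_h\ind{t}-\theta\ind{t}_{h,\star}}_{\Sigma_h\ind{t}}\cdot\nrm{\phi(s,a)}_{(\Sigma_h\ind{t})^{-1}}\leq b_{h,\delta}\ind{t}(s,a).
\]

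\emph{Step 3 (optimism and Bellman residual).} Condition on the event in Step 2. By backward induction on $h$, exactly as in \pref{lem:ucb_vi_optimism}, the bound from Step 2 implies $\Qbar_h\ind{t}\geq\Qmstar_h$, so the functions $\Qbar\ind{t}_1,\ldots,\Qbar\ind{t}_H$ are optimistic. Moreover, the same Step 2 bound gives the one-sided residual bound
\[
0\leq\Qbar_h\ind{t}(s,a)-\brk{\cTm_h\Qbar\ind{t}_{h+1}}(s,a)\leq 2\,b_{h,\delta}\ind{t}(s,a)\wedge 1.
\]

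\emph{Step 4 (regret decomposition and elliptic potential).} By optimism and \pref{lem:reg_decomp_optimistic}, the instantaneous regret in episode $t$ is bounded by $\sum_{h=1}^H\En^{\sss{M},\pihat\ind{t}}[2\,b_{h,\delta}\ind{t}(s_h,\pihat\ind{t}(s_h))]$. Summing over $t$, applying Azuma--Hoeffding to pass from expectations to realized samples (contributing a lower-order $\sqrt{HT\log(1/\delta)}$ term), and applying Cauchy--Schwarz gives
\[
\Reg\lesssim\sqrt{R}\sum_{h=1}^H\sqrt{T\sum_{t=1}^T\nrm{\phi(s_h\ind{t},a_h\ind{t})}_{(\Sigma_h\ind{t})^{-1}}^{2}}+\text{l.o.t.}
\]
The elliptic potential lemma (\pref{lem:elliptic_potential}) bounds each inner sum by $2d\log(1+T/d)$. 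Combining with $R\lesssim d^2\log(HT/\delta)$ yields
\[
\Reg\lesssim H\sqrt{d^2\log(HT/\delta)\cdot T\cdot d\log(1+T/d)}\lesssim H\sqrt{d^3 T\log(HT/\delta)},
\]
as claimed. The hardest step is Step 2, since discretizing the random class $\cV_h$ of optimistic value functions (and in particular the random PSD matrix appearing in the bonus) requires care to get both a valid concentration event and the right polynomial dependence on $d$.
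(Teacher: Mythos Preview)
Your proposal is correct and follows essentially the same route as the paper's proof. The paper packages Step~2 as an abstract ``uniform regression'' lemma (\pref{lem:uniform_regression}) applied to the class $\cQ$ in \eqref{eq:lsvi_set_Qs}, whereas you phrase it as a self-normalized martingale bound plus a union bound over a cover of the same class; in both cases the key observation is that the data-dependent target $\Qbar_{h+1}\ind{t}$ lies in a class indexed by $(\theta,\Sigma)$ with $O(d^2)$ parameters, and covering this class is what forces $R\propto d^2\log(HT/\delta)$. Steps~3 and~4 match the paper's \pref{lem:lsvi_ucb_optimism} and final regret calculation verbatim.
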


      \subsubsection{Proof of \pref{prop:lsvi_ucb}}
      The starting point of our analysis for \ucbvi was
      \pref{lem:reg_decomp_optimistic}, which states that it is
      sufficient to construct optimistic estimates
      $\{\Qbar_1,\ldots,\Qbar_H\}$ (i.e. $\Qmstar_h \leq \Qbar_h$)
      such that the Bellman residuals $\En^{\sss{M},
        \pihat}\brk*{(\Qbar_h- \cT_h^{\sss{M}} \Qbar_{h+1})(s_h,
        a_h)}$ are small under the greedy (with respect to $\Qbar$'s)
      policy $\pihat$. In order to control these residuals, we
      constructed an estimated model $\Mhat$ and defined empirical
      Bellman operators $\cT_h^{\sss{\Mhat}}$ in terms of estimated
      transition kernels. We then set $\Qbar_h$ to be the empirical
      Bellman backup $\cT_h^{\sss{\Mhat}}\Qbar_{h+1}$, plus an
      optimistic bonus term. In contrast, \lsviucb does not directly estimate the model. Instead, it performs regression with a target that is an empirical Bellman backup. As we shall see shortly, subtleties arise in the analysis of this regression step due to lack of independence.

\paragraph{Technical lemmas for regression}

Recall from \cref{lem:lsvi_bellman_linear} that for any fixed $Q:\cS\times\cA\to\reals$, 
  \begin{align}
	  \En\sups{\sss{M}}\brk*{r_h\ind{i} + \max_{a}Q(s_{h+1}\ind{i}, a) \mid s\ind{i}_h,a\ind{i}_h} = \brk*{\cTm_h Q}(s_h\ind{i},a_h\ind{i}).
  \end{align}
However, for layer $h$, the regression problem within \lsviucb concerns a
\emph{data-dependent} function $Q=\Qbar_{h+1}\ind{t}$ (with $i<t$),
which is chosen as a function of all the trajectories
$\tau\ind{1},\ldots,\tau\ind{t-1}$. This dependence implies that the
regression problem solved by \lsviucb is not of the type studied, say,
in \pref{prop:iid_finite_class}. Instead, in the language of
\pref{sec:sl}, the mean of the outcome variable is itself a function
that depends on all the data. The saving grace here is that this
dependence does not result in arbitrarily complex functions, which
will allow us to appeal to uniform convergence arguments. In
particular, for every $h$ and $t$, $\Qbar\ind{t}_h$ belongs to the class
\begin{align}
	\label{eq:lsvi_set_Qs}
	\cQ \ldef \crl*{ (s,a)\mapsto \crl*{
                  \inner{\theta, \phi(s,a)} + \sqrt{R} \nrm*{\phi(s,a)}_{(\Sigma)^{-1}} } \wedge  1
				  : \norm{\theta}\leq 2\sqrt{d}, \sigma_{\textsf{min}}(\Sigma)\geq 1}.
\end{align}
To make use of this fact, we first state an abstract result concerning
regression with dependent outcomes. 
\begin{lem}
	\label{lem:uniform_regression}
	Let $\cG$ be an abstract set with $\abs{\cG}<\infty$. Let
        $x_1,\ldots,x_T\in\cX$ be fixed, and for each $g\in\cG$, let
        $y_1(g),\ldots,y_T(g)\in\reals$ be 1-subGaussian
        outcomes satisfying $$\En\brk*{y_i(g) \mid x_i} = f_g(x_i)$$
        for $f_g \in\cF\subseteq \{f:\cX\to\reals\}$.\footnote{The
          random variables $\{y_i(g)\}_{g\in\cG}$ may be correlated.}
        In addition, assume that $y_1(g),\ldots,y_T(g)$ are
        conditionally independent given $x_1,\ldots,x_T$. For any latent $g\in\cG$, define the least-squares solution 
	$$\fhat_g \in \argmin_{f\in\cF}\sum_{i=1}^T (y_i(g) - f(x_i))^2.$$
With probability at least $1-\delta$, simultaneously for all $g\in\cG$,
$$\sum_{i=1}^T (\fhat_g(x_i) - f_g(x_i))^2 \lesssim \log(\abs{\cF}\abs{\cG}/\delta).$$ 
\end{lem}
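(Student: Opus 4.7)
The plan is to first establish a fast-rate bound of order $\log(\abs{\cF}/\delta')$ on the in-sample squared error for a fixed $g\in\cG$, then take a union bound over $g\in\cG$ with $\delta'=\delta/\abs{\cG}$. For each fixed $g$, the argument mirrors the standard realizable least-squares analysis outlined as an exercise in \pref{sec:exercise_intro}, except that I need to accommodate sub-Gaussian noise rather than bounded noise, and to track the conditional independence carefully.

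For a fixed $g\in\cG$ and $f\in\cF$, I introduce
\[
Z_i(f) \ldef (y_i(g) - f(x_i))^2 - (y_i(g) - f_g(x_i))^2 = (f(x_i)-f_g(x_i))^2 - 2\xi_i(g)(f(x_i)-f_g(x_i)),
\]
where $\xi_i(g)\ldef y_i(g)-f_g(x_i)$ is zero-mean and $1$-sub-Gaussian given $x_1,\ldots,x_T$, with $\xi_1(g),\ldots,\xi_T(g)$ conditionally independent. The key observation is that the noise term $2\xi_i(g)(f(x_i)-f_g(x_i))$ is conditionally $2\abs{f(x_i)-f_g(x_i)}$-sub-Gaussian, so its cumulant-generating function has the variance-like bound needed to invoke a Chernoff argument. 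Applying the standard sub-Gaussian moment-generating function inequality and Markov's inequality (cf.\ the derivation of \pref{eq:second_order_moment_gen}), I get that for any $\eta>0$, for each fixed $f$, with probability at least $1-\delta'$,
\[
2\sum_{i=1}^T \xi_i(g)(f(x_i)-f_g(x_i)) \;\leq\; 2\eta\sum_{i=1}^T (f(x_i)-f_g(x_i))^2 + \eta^{-1}\log(1/\delta').
\]
Choosing $\eta=1/4$ and union bounding over $f\in\cF$ (and over $g\in\cG$), I obtain simultaneously for all $f\in\cF$ and $g\in\cG$,
\[
\tfrac{1}{2}\sum_{i=1}^T (f(x_i)-f_g(x_i))^2 \;\leq\; \sum_{i=1}^T Z_i(f) + C\log(\abs{\cF}\abs{\cG}/\delta).
\]
Specializing to $f=\fhat_g$ and using that $f_g\in\cF$ together with optimality of $\fhat_g$, I have $\sum_i Z_i(\fhat_g)\leq 0$. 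Therefore $\sum_i (\fhat_g(x_i)-f_g(x_i))^2 \lesssim \log(\abs{\cF}\abs{\cG}/\delta)$, simultaneously for all $g\in\cG$.

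The main obstacle is the correlation between the outcome sequences $\{y_i(g)\}_{g\in\cG}$ for different $g$, which prevents any chaining or symmetrization argument from being applied directly over $\cG$, and also means one cannot plug in a data-dependent $\wh{g}$ into a per-$g$ bound without paying for uniformity. Since $\cG$ is finite, the clean workaround is a union bound, which costs only $\log\abs{\cG}$; the single-$g$ analysis is an ordinary realizable least-squares problem whose sub-Gaussian Chernoff argument relies precisely on the conditional independence of $\xi_1(g),\ldots,\xi_T(g)$ given $x_1,\ldots,x_T$ that is assumed in the lemma.
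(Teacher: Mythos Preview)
Your proof is correct and takes essentially the same approach as the paper: both arguments use optimality of $\fhat_g$ to reduce to controlling the noise inner product $\sum_i \xi_i(g)(f(x_i)-f_g(x_i))$, apply a sub-Gaussian tail bound for each fixed $(f,g)$, and then union bound over $\cF\times\cG$. The only cosmetic difference is that the paper first normalizes the direction $(f-f_g)/\nrm{f-f_g}_T$ so that the resulting inner product has a fixed variance proxy, whereas you keep the $f$-dependent variance proxy and absorb it via the self-bounding Chernoff inequality; these are two standard packagings of the same step.
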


\begin{proof}[Proof of \pref{lem:uniform_regression}]
	Fix $g\in\cG$. To shorten the notation, it is useful to introduce empirical norms $\norm{f}^2_{\scriptscriptstyle{T}} = \frac{1}{T}\sum_{i=1}^T f(x_i)^2$ and empirical inner product $\inner{f,f'}_{\scriptscriptstyle{T}}= \sum_{i=1}^T f(x_i)f'(x_i)$ for $f,f'\in\cF$. Optimality of $\fhat_g$ implies that
	$$\sum_{i=1}^T (y_i(g) - \fhat_g(x_i))^2 \leq \sum_{i=1}^T (y_i(g) - f_g(x_i))^2$$
	which can be written succinctly (with a slight abuse of notation) as $\nrm[\big]{Y_g - \fhat_g}_{\scriptscriptstyle{T}}^2 \leq \norm{Y_g-f_g}_{\scriptscriptstyle{T}}^2$ for $Y_g=(y_1(g),\ldots,y_T(g))$. This implies
	$$\nrm[\big]{\fhat_g-f_g}_{\scriptscriptstyle{T}}^2 \leq 2\tri[\big]{Y_g - f_g, \fhat_g-f_g}_{\scriptscriptstyle{T}}.$$
	Dividing both sides by $\nrm[\big]{\fhat_g-f_g}_{\scriptscriptstyle{T}}$ and taking supremum over $\fhat_g\in\cF$ leads to
	\begin{align}
		\label{eq:uniform_regression_1}
		\nrm[\big]{\fhat_g-f_g}_{\scriptscriptstyle{T}} \leq 2\max_{f\in\cF}\tri[\big]{Y_g - f_g, \frac{f-f_g}{\nrm[\big]{f-f_g}_{\scriptscriptstyle{T}}}}_{\scriptscriptstyle{T}}.
	\end{align}
	The random vector $Y_g-f_g$ has independent zero-mean
        $1$-subGaussian entries by assumption, while the multiplier
        $\frac{f-f_g}{\nrm[\big]{f-f_g}_{\scriptscriptstyle{T}}}$ is
        simply a $T$-dimensional vector of Euclidean length
        $\sqrt{T}$, for each $f\in\cF$. Hence, each inner product in
        \eqref{eq:uniform_regression_1} is a sub-Gaussian vector with
        variance proxy $\frac{1}{T}$ (see
        \pref{def:subGaussian}). Thus, with probability at least
        $1-\delta$, the maximum on the right-hand side does not exceed $C\sqrt{\log (\abs*{\cF}/\delta)/T}$ for an appropriate constant $C$. Taking the union bound over $g$ and squaring both sides of \eqref{eq:uniform_regression_1} yields the desired bound.
\end{proof}

We may now apply \pref{lem:uniform_regression} to analyze the regression step of \lsviucb.

\begin{lem}
  \label{lem:lsvi_ucb_confidence}
With probability at least $1-\delta$, we have that for all $t$ and
$h$,
\begin{align}
  \label{eq:19}
  \sum_{i<t}\prn*{\Qhat\ind{t}_h(s_h\ind{i},a_h\ind{i})-\brk*{\cTm_h\Qbar_{h+1}\ind{t}}(s_h\ind{i},a_h\ind{i})}^2
  \approxleq d^2\log(HT/\delta).
\end{align}

\end{lem}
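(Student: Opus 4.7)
The plan is to reduce the bound to an application of Lemma \ref{lem:uniform_regression} (uniform least-squares regression) by exploiting two structural facts: (i) the regression target function $\Qbar_{h+1}\ind{t}$ always lies in the restricted class $\cQ$ defined in \eqref{eq:lsvi_set_Qs}, and (ii) by Lemma \ref{lem:lsvi_bellman_linear}, for every $Q \in \cQ$ the Bellman backup $\cTm_h Q$ is an affine function of $\phi$ whose parameter vector has norm at most $2\sqrt{d}$. We then union-bound over an $\veps$-cover of $\cQ$ and a union bound over $t \in [T]$ and $h \in [H]$.

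More precisely, the first step is to observe that each $\Qbar_{h+1}\ind{t}$ produced by \lsviucb has the form prescribed by \eqref{eq:lsvi_set_Qs} with $\|\thetahat\ind{t}_{h+1}\| \leq 2\sqrt{d}$ and $\Sigma_{h+1}\ind{t} \succeq I$, $\|\Sigma_{h+1}\ind{t}\|_{\mathrm{op}} \leq T+1$. Both parameters therefore live in bounded finite-dimensional sets, so a standard covering argument produces an $\veps$-net $\cQ_\veps \subseteq \cQ$ (in sup norm) of cardinality $\log \abs{\cQ_\veps} \lesssim d^2 \log(T/\veps)$. For each fixed $Q \in \cQ_\veps$ define the random outcomes $y_i(Q) = r_h\ind{i} + \max_a Q(s_{h+1}\ind{i},a) \in [0,1]$ and the regressor $x_i = (s_h\ind{i},a_h\ind{i})$; then $\En[y_i(Q)|x_i] = [\cTm_h Q](x_i)$ is affine in $\phi$ with parameter norm $\leq 2\sqrt{d}$ by Lemma \ref{lem:lsvi_bellman_linear}. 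Applying Lemma \ref{lem:uniform_regression} with $\cG = \cQ_\veps$ and $\cF = \{[\cTm_h Q] \wedge 1 : Q \in \cQ_\veps\}$ (so $\log \abs{\cF} \leq \log \abs{\cQ_\veps}$), the least-squares estimator $\hat\theta_Q$ satisfies, simultaneously for all $Q \in \cQ_\veps$ with probability $1 - \delta/(HT)$,
\[
\sum_{i<t}\prn*{\tri{\hat\theta_Q, \phi(x_i)} - [\cTm_h Q](x_i)}^2 \lesssim d^2 \log(HT/\veps\delta).
\]

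The remaining step is to pass from the cover to the actual data-dependent target $\Qbar_{h+1}\ind{t}$. Choose $\wt{Q} \in \cQ_\veps$ with $\nrm{\Qbar_{h+1}\ind{t} - \wt{Q}}_\infty \leq \veps$. Since both the regression objective with target $\Qbar_{h+1}\ind{t}$ and the one with target $\wt{Q}$ have integrands that differ by at most $O(\veps)$ per sample, and $\|\theta\| \leq \rho = 2\sqrt{d}$, both objectives differ uniformly by $O(\veps T \sqrt{d})$ on the constraint set. This implies $\nrm{\thetahat\ind{t}_h - \hat\theta_{\wt{Q}}}^2_{\Sigma_h\ind{t}} \lesssim \veps T \sqrt{d}$, so
\[
\sum_{i<t}(\tri{\thetahat\ind{t}_h - \hat\theta_{\wt{Q}}, \phi(x_i)})^2 \lesssim \veps T \sqrt{d}.
\]
Similarly $\sup_{s,a}|[\cTm_h \Qbar_{h+1}\ind{t}](s,a) - [\cTm_h \wt{Q}](s,a)| \leq \veps$, contributing $\veps^2 T$ to the error on the $t-1$ sample points. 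Combining via the triangle inequality and choosing $\veps = 1/T$ absorbs both approximation errors into the main $d^2 \log(HT/\delta)$ term. Taking a union bound over $t \in [T]$ and $h \in [H]$ (paying $\log(HT)$, absorbed into the logarithm) yields the claim.

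The main technical obstacle is the passage from the covered class $\cQ_\veps$ to the data-dependent $\Qbar_{h+1}\ind{t}$: one must show that perturbing the regression target in sup-norm by $\veps$ perturbs the least-squares minimizer (in the relevant empirical seminorm) by a controllable amount that scales polynomially in $\veps$, $T$, and $\sqrt{d}$. The argument is standard but requires carefully tracking the parameter bound $\rho$ (which is why the choice $\rho = 2\sqrt{d}$ from Lemma \ref{lem:lsvi_bellman_linear} is important) and verifying that the covering-induced approximation errors of order $\veps T$ can be dominated by the $d^2 \log(HT/\delta)$ statistical term after choosing $\veps$ polynomially small in $T$.
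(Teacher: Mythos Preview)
Your approach is essentially the same as the paper's sketch: reduce to Lemma~\ref{lem:uniform_regression} by covering the data-dependent target class $\cQ$ from \eqref{eq:lsvi_set_Qs} at log-cardinality $\widetilde{O}(d^2)$, invoke the linearity of Bellman backups (Lemma~\ref{lem:lsvi_bellman_linear}) to make the regression well-specified, absorb discretization error via Lipschitzness, and union-bound over $t,h$.

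One correction is needed: your choice $\cF=\{[\cTm_h Q]\wedge 1: Q\in\cQ_\veps\}$ does not match the set over which the algorithm actually minimizes. In Lemma~\ref{lem:uniform_regression} the estimator $\fhat_g$ is the argmin over $\cF$, but $\thetahat_h\ind{t}$ (and your $\hat\theta_Q$) is the argmin over the entire ball $\{\theta:\nrm{\theta}\leq 2\sqrt d\}$, so the lemma applied with your $\cF$ bounds the wrong object. The paper instead takes $\cF$ to be (an $\veps$-cover of) the full linear class, with $\log|\cF|\lesssim d\log(1/\veps)$; combined with $\log|\cG|=\log|\cQ_\veps|\lesssim d^2\log(1/\veps)$ this still yields the $d^2$ rate, and the Lipschitz/perturbation step you describe for the target side must also be carried out on the hypothesis side. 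This is a routine fix and does not change the architecture of your argument.
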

\begin{proof}[Proof sketch for \cref{lem:lsvi_ucb_confidence}]
  Let $t\in\brk{T}$ and $h\in\brk{H}$ be fixed. To make the correspondence with \pref{lem:uniform_regression}
        explicit, for the data $(s\subs{h}\sups{i}, a\subs{h}\sups{i},
        s\subs{h+1}\sups{i}, r\subs{h}\sups{i})$, we define $x_i =
        \phi(s\subs{h}\sups{i}, a\subs{h}\sups{i})$ and $y_i(Q) =
        r\subs{h}\sups{i} + \max_{a} Q(s\subs{h+1}\sups{i}, a)$, with
        $Q\in\cQ$ playing the role of the index $g\in\cG$. With this,
        we have
 	$$\En\brk*{y_i(Q) \mid x_i} = \En^{M}\brk*{r\subs{h}\sups{i} + \max_{a} Q(s\subs{h+1}\sups{i}, a) \mid s_h\ind{i},a_h\ind{i}} = \brk*{\cTm_h Q}(s_h\ind{i},a_h\ind{i}) = \inner{\phi(s_h\ind{i},a_h\ind{i}), \theta\subs{Q}\sups{M}}$$
which is linear in $x\ind{i}=\phi(s_h\ind{i},a_h\ind{i})$, with the
vector of coefficients $\theta\subs{Q}\sups{M}$ depending on $Q$. The regression problem is well-specified as long as we choose 
$$\cF=\crl*{\phi(s,a)\mapsto \inner{\phi(s,a), \theta}: \norm{\theta}\leq 2\sqrt{d}}$$ 
and $\cQ$ as in \eqref{eq:lsvi_set_Qs}. While both of these sets are
infinite, we can to a standard covering number argument for an
appropriate scale $\veps$. The cardinalities of $\veps$-discretized
classes can be shown to be of size $\widetilde{O}(d)$ and
$\widetilde{O}(d^2)$, respectively, up to factors logarithmic in
$1/\veps$ and $d$. The statement follows after checking that
discretization incurs a small price due to Lipschitzness with respect
to parameters. Finally, we union bound over $t$ and $h$.

\end{proof}

\paragraph{Establishing optimism}

The next lemma shows that closeness of the regression estimate to the
Bellman backup on the data $\crl*{(s\subs{h}\sups{i},
  a\subs{h}\sups{i})}_{i<t}$ translates into closeness at an arbitrary
$(s,a)$ pair as long as $\phi(s,a)$ is sufficiently covered by the
data collected so far. This, in turn, implies that
$\Qbar\ind{t}_1,\ldots,\Qbar\ind{t}_H$ are optimistic.
\begin{lem}
  \label{lem:lsvi_ucb_optimism}
Whenever the event in \pref{lem:lsvi_ucb_confidence} occurs, we have
that for all $(s,a,h)$ and $t\in[T]$,
\begin{align}
  \label{eq:20}
  \abs*{\Qhat_h\ind{t}(s,a)-\brk*{\cTm_h\Qbar_{h+1}\ind{t}}(s,a)}
  \approxleq{} \sqrt{d^2\log(HT/\delta)}\cdot{}\nrm*{\phi(s,a)}_{(\Sigma_h\ind{t})^{-1}} \rdef b\subs{h,\delta}\ind{t}(s,a).
\end{align}
and
\begin{align}
	\label{eq:lsvi_optimistic_Q}
  \Qbar_h\ind{t}(s,a) \geq{} \Qmstar_h(s,a).
\end{align}
\end{lem}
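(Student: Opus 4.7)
The plan is to prove both parts of the lemma by combining the least-squares concentration guarantee of the previous lemma with the linearity of Bellman backups in low-rank MDPs (\pref{lem:lsvi_bellman_linear}).

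For the first claim, I would use that by \pref{lem:lsvi_bellman_linear}, the Bellman backup $[\cTm_h\Qbar_{h+1}\ind{t}]$ is linear in $\phi$: there exists a parameter $\theta^{\sss{M}}_Q \in \bbR^d$ with $\nrm{\theta^{\sss{M}}_Q}\leq{}2\sqrt{d}$ (since $\Qbar_{h+1}\ind{t}\in\brk*{0,1}$) such that $[\cTm_h\Qbar_{h+1}\ind{t}](s,a)=\tri{\phi(s,a),\theta^{\sss{M}}_Q}$. Then the pointwise error is
\[
\abs*{\Qhat_h\ind{t}(s,a)-[\cTm_h\Qbar_{h+1}\ind{t}](s,a)} = \abs{\tri{\phi(s,a),\thetahat_h\ind{t}-\theta^{\sss{M}}_Q}}
\leq \nrm{\phi(s,a)}_{(\Sigma_h\ind{t})^{-1}}\cdot \nrm{\thetahat_h\ind{t}-\theta^{\sss{M}}_Q}_{\Sigma_h\ind{t}}
\]
by Cauchy--Schwarz with respect to $\Sigma_h\ind{t}$. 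The remaining task is to bound $\nrm{\thetahat_h\ind{t}-\theta^{\sss{M}}_Q}_{\Sigma_h\ind{t}}^2$. Since $\Sigma_h\ind{t}=I+\sum_{i<t}\phi(s_h\ind{i},a_h\ind{i})\phi(s_h\ind{i},a_h\ind{i})^{\trn}$, this decomposes as
\[
\nrm{\thetahat_h\ind{t}-\theta^{\sss{M}}_Q}^{2} + \sum_{i<t}\prn*{\Qhat_h\ind{t}(s_h\ind{i},a_h\ind{i})-[\cTm_h\Qbar_{h+1}\ind{t}](s_h\ind{i},a_h\ind{i})}^{2}.
\]
The first term is bounded by $(2\sqrt{d}+2\sqrt{d})^2=16d$ via triangle inequality (both parameters have norm at most $2\sqrt{d}$ by the choice $\rho=2\sqrt{d}$ and by \pref{lem:lsvi_bellman_linear}), while the second term is at most $cd^2\log(HT/\delta)$ by \pref{lem:lsvi_ucb_confidence} on the high-probability event. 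Taking a square root and absorbing the lower-order $d$ into the leading $d^2\log(HT/\delta)$ term, we obtain the claimed bound $\abs*{\Qhat_h\ind{t}(s,a)-[\cTm_h\Qbar_{h+1}\ind{t}](s,a)}\leq \sqrt{R}\cdot\nrm{\phi(s,a)}_{(\Sigma_h\ind{t})^{-1}} = b_{h,\delta}\ind{t}(s,a)$ for the choice $R=cd^2\log(HT/\delta)$ specified in \pref{prop:lsvi_ucb}.

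For the second claim (optimism), I would proceed by backward induction on $h$, conditioning on the event of \pref{lem:lsvi_ucb_confidence}. The base case $h=H+1$ is immediate since $\Qbar_{H+1}\ind{t}\equiv 0 = \Qmstar_{H+1}$. For the inductive step, assume $\Qbar_{h+1}\ind{t}\geq \Qmstar_{h+1}$ pointwise. The Bellman operator $\cTm_h$ is monotone in its argument (it involves a maximum over actions of a non-negatively-weighted expectation), so
\[
[\cTm_h\Qbar_{h+1}\ind{t}](s,a) \geq [\cTm_h\Qmstar_{h+1}](s,a) = \Qmstar_h(s,a).
\]
Combining with part 1, which gives $\Qhat_h\ind{t}(s,a)+b_{h,\delta}\ind{t}(s,a) \geq [\cTm_h\Qbar_{h+1}\ind{t}](s,a)$, we find that $\Qhat_h\ind{t}(s,a)+b_{h,\delta}\ind{t}(s,a)\geq \Qmstar_h(s,a)$. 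Since additionally $\Qmstar_h(s,a)\leq 1$ by the normalization $\sum_{h=1}^{H}r_h\in\brk{0,1}$, taking the minimum with $1$ preserves the inequality and yields $\Qbar_h\ind{t}(s,a)\geq \Qmstar_h(s,a)$, completing the induction.

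The only genuinely non-trivial step is the first: converting the in-sample regression guarantee into a pointwise error bound via the elliptic norm. This is the RL analogue of the argument used for LinUCB in \pref{prop:linucb}, but an extra subtlety here is that the target of regression, $\Qbar_{h+1}\ind{t}$, is itself data-dependent; fortunately this dependence has already been handled by \pref{lem:lsvi_ucb_confidence} via a uniform-convergence argument over the class $\cQ$ in \eqref{eq:lsvi_set_Qs}, so at this stage we may treat the corresponding $\theta^{\sss{M}}_Q$ as a fixed (albeit random) linear parameter and the algebra proceeds cleanly.
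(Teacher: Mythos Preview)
Your proof is correct and follows essentially the same route as the paper: linearity of the Bellman backup via \pref{lem:lsvi_bellman_linear}, Cauchy--Schwarz in the $\Sigma_h\ind{t}$-norm, and then backward induction using monotonicity of $\cTm_h$ and clipping. The only cosmetic difference is that the paper carries the induction hypothesis as $\Vbar_{h}\ind{t}\geq \Vmstar_{h}$ rather than $\Qbar_{h}\ind{t}\geq \Qmstar_{h}$, which is equivalent here.
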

\begin{proof}[Proof of \pref{lem:lsvi_ucb_optimism}]
Writing the Bellman backup, via \pref{lem:lsvi_bellman_linear}, as
  \begin{align*}
	  \brk*{\cTm_h\Qbar_{h+1}\ind{t}}(s,a) = \tri*{\phi(s,a), \theta_h\ind{t}} 
  \end{align*}
  for some $\theta_h\ind{t}\in\bbR^d$ with $\nrm{\theta_h\ind{t}}_2\leq2\sqrt{d}$, we have that
	\begin{align*}
		\abs*{\Qhat_h\ind{t}(s,a)-\brk*{\cTm_h\Qbar_h\ind{t}}(s,a)} &= \abs*{\inner{\phi(s,a),\thetahat_h\ind{t}-\theta_h\ind{t}}} \\
		&= \abs*{\inner{(\Sigma_h\ind{t})^{-1/2}\phi(s,a),(\Sigma_h\ind{t})^{1/2}(\thetahat_h\ind{t}-\theta_h\ind{t})}}\\
		&\leq \nrm[\big]{\phi(s,a)}_{(\Sigma_h\ind{t})^{-1}} \cdot \nrm[\big]{\thetahat_h\ind{t}-\theta_h\ind{t}}_{\Sigma_h\ind{t}}.
	\end{align*}
	\pref{lem:lsvi_ucb_confidence} then implies \eqref{eq:20}, since
	\begin{align}
		\nrm[\big]{\thetahat_h\ind{t}-\theta_h\ind{t}}_{\Sigma_h\ind{t}}^2 &= 	(\thetahat_h\ind{t}-\theta_h\ind{t})^\tr \prn*{\sum_{i<t}\phi(s_h\ind{i},a_h\ind{i})
        \phi(s_h\ind{i},a_h\ind{i})^{\trn} + I}(\thetahat_h\ind{t}-\theta_h\ind{t}) \\
		&= \sum_{i<t}\prn*{\Qhat\ind{t}_h(s_h\ind{i},a_h\ind{i})-\brk*{\cTm_h\Qbar_{h+1}\ind{t}}(s_h\ind{i},a_h\ind{i})}^2 +  \nrm[\big]{\thetahat_h\ind{t}-\theta_h\ind{t}}^2
	\end{align}
	and $\nrm[\big]{\thetahat_h\ind{t}-\theta_h\ind{t}}^2\lesssim d$ by \eqref{eq:norm_of_bellman_backups}.  
	
	To show \eqref{eq:lsvi_optimistic_Q}, we proceed by induction on $\Vbar\ind{t}_h \geq \Vmstar_h$, as in the proof of \pref{lem:ucb_vi_optimism}.
	We start with the base case $h=H+1$, which has $\Vbar\ind{t}_{H+1} = \Vmstar_{H+1} \equiv 0$. Assuming $\Vbar\ind{t}_{h+1} \geq \Vmstar_{h+1}$, we first observe that $\cTm_h$ is monotone and $\cTm_h\Vbar_{h+1}\ind{t} \geq \cTm_h\Vmstar_{h+1} = \Qmstar_{h}$. Hence,
	\begin{align}
		\Qhat_h\ind{t} &=  \Qhat_h\ind{t} - \cTm_h\Vbar_{h+1} + \cTm_h\Vbar_{h+1} \\
		&\geq \Qhat_h\ind{t} - \cTm_h\Vbar_{h+1} + \Qmstar_{h} \\
		&\geq -b\subs{h,\delta}\ind{t} + \Qmstar_{h}
	\end{align}
	and thus $\Qhat_h\ind{t}+ b\subs{h,\delta}\ind{t} \geq \Qmstar_{h}$. Since $\Qmstar_h\leq 1$, the clipped version $\Qbar_h\ind{t}$ also satisfies $\Qbar_h\ind{t}\geq \Qmstar_h$. This, in turn, implies $\Vbar_h\ind{t}\geq \Vmstar_h$.
\end{proof}

\paragraph{Finishing the proof}
With the technical results above established, the proof of \cref{prop:lsvi_ucb} follows fairly quickly.
      
\begin{proof}[\pfref{prop:lsvi_ucb}]
  Let $M$ be the true model. Condition on the event in \pref{lem:lsvi_ucb_confidence}. Then,
  since $\Qbar$ is optimistic by \pref{lem:lsvi_ucb_optimism}, we have that for each timestep $t$,
  \begin{align*}
    \fm(\pim) - \fm(\pihat\ind{t})
    &\leq{}
    \En_{s_1\sim{}d_1}\brk*{\Vbar_1\ind{t}(s_1)} - \fm(\pihat\ind{t}) \\
          &=  \sum_{h=1}^{H}\Enm{M}{\pihat\ind{t}}\brk*{\Qbar\ind{t}_h(s_h,a_h) - \brk*{\cTm_h\Qbar_{h+1}\ind{t}}(s_h,a_h)}
  \end{align*}
  by \pref{lem:bellman_residual}. Using the definition of
  $\bhdelta\ind{t}$ and \pref{lem:lsvi_ucb_optimism}, we have
  \begin{align*}
    \sum_{h=1}^{H}\Enm{M}{\pihat\ind{t}}\brk*{\Qbar\ind{t}_h(s_h,a_h) -
    \brk*{\cTm_h\Qbar_{h+1}\ind{t}}(s_h,a_h)}
    \approxleq \sqrt{R}\sum_{h=1}^{H}\Enm{M}{\pihat\ind{t}}\brk*{\nrm*{\phi(s_h,a_h)}_{(\Sigma_h\ind{t})^{-1}}}.
  \end{align*}
  Summing over all timesteps $t$ gives
  \begin{align*}
    \Reg
    \leq{} \sqrt{R}\sum_{t=1}^{T}\sum_{h=1}^{H}\Enm{M}{\pihat\ind{t}}\brk*{\nrm*{\phi(s_h,a_h)}_{(\Sigma_h\ind{t})^{-1}}}.
  \end{align*}
  By Hoeffding's inequality, we have that with probability at least
  $1-\delta$, this is at most
  \begin{align*}
    \sqrt{R}\sum_{t=1}^{T}\sum_{h=1}^{H}\nrm*{\phi(s\ind{t}_h,a\ind{t}_h)}_{(\Sigma_h\ind{t})^{-1}}
    + \sqrt{RHT\log(1/\delta)}.
  \end{align*}
  The elliptic potential lemma (\pref{lem:elliptic_potential}) now allows
  us to bound
  \begin{align*}
    \sum_{t=1}^{T}\nrm*{\phi(s\ind{t}_h,a\ind{t}_h)}_{(\Sigma_h\ind{t})^{-1}}
    \approxleq \sqrt{dT\log(T/d)}
  \end{align*}
  for each $h$, which gives the result.

      \end{proof}

\subsection{Bellman Rank}
In this section, we continue our study of value-based methods, which
assume access to a class $\cQ$ of state-action value functions such that
$\Qmstarstar\in\cQ$. In the prequel, we saw that the Low-Rank MDP
assumption facilitates sample-efficient reinforcement learning when
$\cQ$ is a class of linear functions, but what if we want
to learn with \emph{nonlinear} functions such as neural networks? To
this end, we will introduce a new structural property, \emph{Bellman
  rank} \citep{jiang2017contextual,du2021bilinear,jin2021bellman}, which allows for sample-efficient learning with general
classes $\cQ$, and subsumes a number of well-studied MDP families, including:
\begin{multicols}{2}
  \begin{itemize}
  \item Low-Rank MDPs \citep{yang2019sample,jin2020provably,agarwal2020flambe}
  \item Block MDPs and reactive POMDPs
  \citep{krishnamurthy2016pac,du2019latent}.
  \item MDPs with Linear $Q^{\star}$ and $V^{\star}$ \citep{du2021bilinear}.
  \item MDPs with low occupancy complexity \citep{du2021bilinear}.
  \item Linear mixture MDPs
  \citep{modi2020sample,ayoub2020model}.
  \item Linear dynamical systems (LQR) \citep{dean2020sample}.
  \end{itemize}
\end{multicols}
\noindent We will learn about these examples in \cref{sec:bellman_rank_example}.

\paragraph{Building intuition}%
\newcommand{\thetaq}{\theta\sups{Q}}%
\newcommand{\wq}{w\sups{Q}}%
Bellman rank is a property of the underlying MDP $\Mstar$ which gives
a way of controlling \emph{distribution shift}---that is, how many
times a deliberate algorithm can be surprised by a substantially new
state distribution $d^{\sss{M},\pi}$ when it updates its policy. To motivate the property, let us revisit the low-rank MDP
model. Let $M$ be a low-rank MDP with feature map
$\phi(s,a)\in\bbR^{d}$, and let
$Q_h(s,a)=\tri*{\phi(s,a),\thetaq_h}$ be an arbitrary linear value
function. Observe that since $M$ is a Low-Rank MDP, we have
$\brk*{\cTm_h{}Q}(s,a)=\tri*{\phi(x,a),\tilde{\theta}_h\sups{M,Q}}$,
where
$\tilde{\theta}_h\sups{M,Q}\ldef{}\wm_h+\int\mum_h(s')\max_{a'}Q_{h+1}(s',a')ds'$. As
a result, for any policy $\pi$, we can write the Bellman residual for $Q$ as
\begin{align}
  \Enm{M}{\pi}\brk*{Q_h(s_h,a_h)-r_h-\max_{a}Q_{h+1}(s_{h+1},a)}
  &=
    \tri*{\Enm{M}{\pi}\brk*{\phi(s_h,a_h)},\thetaq_h-\wm_h-\tilde{\theta}_h\sups{M,Q}}\\
  &=   \tri*{\Xm_h(\pi),\Wm_h(Q)},
    \label{eq:low_rank_mdp_bellman}
\end{align}
where $\Xm_h(\pi)\ldef{}\Enm{M}{\pi}\brk*{\phi(s_h,a_h)}\in\bbR^{d}$ is
  an ``embedding'' that depends on $\pi$ but not $Q$, and
  $\Wm_h(Q)\ldef{}\thetaq_h-\wm_h-\tilde{\theta}_h\sups{M,Q}\in\bbR^{d}$
  is an embedding that depends on $Q$ but not $\pi$ (both embeddings
  depend on $M$). Notably, if we view the Bellman residual as a
  huge $\Pi\times\cQ$ matrix $\cE_h(\cdot,\cdot)\in\bbR^{\Pi\times\cQ}$ with 
        \begin{equation}
          \cE_h(\pi,Q) \ldef
          \Enm{M}{\pi}\brk*{Q_h(s_h,a_h)-\prn*{r_h+\max_{a}Q_{h+1}(s_{h+1},a)}},
          \label{eq:bellman_residual}
        \end{equation}
        then the property \cref{eq:low_rank_mdp_bellman} implies that $\rank(\cE_h(\cdot,\cdot))\leq{}d$. Bellman rank is
        an abstraction of this property.\footnote{Bellman rank was
          originally introduced in the pioneering work of \citet{jiang2017contextual}. The definition of
          Bellman rank we present, which is slightly different from the
          original definition, is taken from the later work of
          \citet{du2021bilinear,jin2021bellman}, and is often referred
          to as \emph{Q-type} Bellman rank.
        }
        \begin{definition}[Bellman rank]
          For an MDP $M$ with value function class $\cQ$ and policy
          class $\Pi$, the Bellman rank is defined
          as
          \begin{align}
            \label{eq:bellman_rank}
            \dimbel(M)=\max_{h\in\brk{H}}\rank(\crl{\cE_h(\pi,Q)}_{\pi\in\Pi,Q\in\cQ}).
          \end{align}
          Equivalently, Bellman rank is the smallest dimension $d$
          such that for all $h$, there exist embeddings
          $\Xm_h(\pi),\Wm_h(Q)\in\bbR^{d}$ such that
          \begin{equation}
            \label{eq:bellman_rank_factorization}
            \cE_h(\pi,Q)=\tri*{\Xm_h(\pi),\Wm_h(Q)}
            \end{equation}
          for all $\pi\in\Pi$
          and $Q\in\cQ$.
        \end{definition}
        The utility of Bellman rank is that the factorization in
        \pref{eq:bellman_rank_factorization} gives a way of
        controlling distribution shift in the MDP $M$, which
        facilitates the application of standard generalization
        guarantees for supervised learning/estimation. Informally, there are only $d$
        effective directions in which we can be ``surprised'' by the
        state distribution induced by a policy $\pi$, to the extent
        that this matters for the class $\cQ$ under consideration;
        this property was used implicitly in the proof of the regret
        bound for \lsviucb.  As
        we will see, low Bellman rank is satisfied in many settings that go beyond the Low-Rank MDP model.

  \subsubsection{The \bilinucb Algorithm}
  \newcommand{\cEhat}{\wh{\cE}}

We now present an algorithm, \bilinucb \citep{du2021bilinear}, which
attains low regret for MDPs with low Bellman rank under the
realizability assumption that
\[
\Qmstarstar\in\cQ.
\]
Like many of the
algorithms we have covered, \bilinucb is based on confidence sets and
optimism, though the way we will construct the confidence sets and
implement optimism is new.

\paragraph{PAC versus regret}
For technical reasons, we will not directly give a regret bound for
\bilinucb. Instead, we will prove a \emph{PAC} (``probably
approximately correct'') guarantee. For PAC, the algorithm plays for
$T$ episodes, then outputs a final policy $\pihat$, and its
performance is measured via
\begin{align}
  \label{eq:pac}
  \fmstar(\pimstar) - \fmstar(\pihat).
\end{align}
That is, instead of considering cumulative performance as with regret,
we are only concerned with final performance. For PAC, we want to
ensure that $\fmstar(\pimstar) - \fmstar(\pihat)\leq\veps$ for some
$\veps\ll 1$ using a number of
episodes that is polynomial in $\veps^{-1}$ and other problem
parameters. This is an easier task than achieving low regret: If we have an algorithm
that ensures that $\En\brk*{\Reg}\approxleq{}\sqrt{CT}$ for some
problem-dependent constant $C$, we can turn this into an algorithm that
achieves PAC error $\veps$ using $\bigoh\prn*{\frac{C}{\veps^2}}$ episodes
via online-to-batch conversion. In the other direction, if we have an
algorithm that achieves PAC error $\veps$ using
$\bigoh\prn*{\frac{C}{\veps^2}}$ episodes, one can use this to achieve
$\En\brk*{\Reg}\approxleq{}C^{1/3}T^{2/3}$ using a simple
explore-then-commit approach; this is lossy, but is the best one can
hope for in general.

\paragraph{Algorithm overview}
\bilinucb proceeds in $K$ iterations, each of which consists of $n$
episodes. The algorithm maintains a confidence set
$\cQ\ind{k}\subseteq\cQ$ of value functions (generalizing the
confidence sets we constructed for structured bandits in
\cref{sec:structured}), with the property that  $\Qmstarstar\in\cQ$ with high
probability. Each iteration $k$ consists of two parts:
\begin{itemize}
\item Given the current confidence set $\cQ\ind{k}$, the algorithm
  computes a value function $Q\ind{k}$ and corresponding policy
  $\pi\ind{k}\ldef\pi\subs{Q\ind{k}}$ that is \emph{optimistic on
    average}
  \[
    Q\ind{k}=\argmax_{Q\in\cQ\ind{k}}\En_{s_1\sim{}d_1}\brk*{Q_1(s_1,\piq(s_1))}.
  \]
  The main novelty here is that we are only aiming for optimism with
  respect to the initial state distribution.
\item Using the new policy $\pi\ind{k}$, the algorithm gathers $n$
  episodes and uses these to compute estimators $\crl{\cEhat_h\ind{k}(Q)}_{h\in\brk{H}}$
  which approximate the Bellman residual $\cE_h(\pi\ind{k},Q)$
  for all $Q\in\cQ$. Then, in \pref{eq:bilinucb_confidence_set}, the algorithm computes the new confidence
  set $\cQ\ind{k+1}$ by restricting to value functions for which the
  estimated Bellman residual is small for
  $\pi\ind{1},\ldots,\pi\ind{k}$. Eliminating value functions with large
  Bellman residual is a natural idea, because we know from the Bellman
  equation that $\Qmstarstar$
  has zero Bellman residual.
\end{itemize}

\begin{whiteblueframe}
	\begin{algorithmic}
          \State \textsf{\bilinucb}
          \State \text{Input:} $\beta>0$, iteration count $K\in\bbN$,
          batch size $n\in\bbN$.
          \State $\cQ\ind{1}\gets\cQ$.
	\For{iteration $k=1,\ldots,K$}
        \State \multiline{Compute optimistic value function:
        \[
          Q\ind{k}=\argmax_{Q\in\cQ\ind{k}}\En_{s_1\sim{}d_1}\brk*{Q_1(s_1,\piq(s_1))}.
        \]
        and let $\pi\ind{k}\ldef{}\pi\subs{Q\ind{k}}$.}
      \For{$l=1,\ldots,n$}
      \State Execute $\pi\ind{k}$ for an episode
      and observe trajectory
      $(s_1\ind{k,l},a_1\ind{k,l},r_1\ind{k,l}),\ldots,
      (s_H\ind{k,l},a_H\ind{k,l},r_H\ind{k,l})$.
      \EndFor{}
      \State \multiline{Compute confidence set
        \begin{align}
          \cQ\ind{k+1}=\crl*{Q\in\cQ\mid{}
          \sum_{i\leq{}k}(\cEhat_h\ind{i}(Q))^2 \leq
          \beta\quad\forall{}h\in\brk{H}},
          \label{eq:bilinucb_confidence_set}
        \end{align}
        where
        \[
          \cEhat_h\ind{i}(Q)\ldef{}\frac{1}{n}\sum_{l=1}^{n}\prn*{Q_h(s_h\ind{i,l},a_h\ind{i,l})-r_h\ind{i,l}
          - \max_{a\in\cA}Q_{h+1}(s_{h+1}\ind{i,l},a)}.
        \]}
      \EndFor{}
      \State Let $\wh{k}=\argmax_{k\in\brk{K}}\wh{V}\ind{k}$, where
      $\wh{V}\ind{k}\ldef{}\frac{1}{n}\sum_{l=1}^{n}\sum_{h=1}^{H}r_h\ind{k,l}$.
      \State Return $\pihat=\pi\ind{\wh{k}}$.
	\end{algorithmic}
\end{whiteblueframe}

\paragraph{Main guarantee}
The main result for this section is the following PAC guarantee for \bilinucb.
\begin{prop}
  \label{prop:bilinucb_regret}
  Suppose that $\Mstar$ has Bellman rank $d$ and $\Qmstarstar\in\cQ$.
  For any $\veps>0$ and $\delta>0$, if we set
  $n\approxgeq\frac{H^3d\log(\abs{\cQ}/\delta)}{\veps^2}$,
  $K\approxgeq{}Hd\log(1+n/d)$, and $\beta \propto
c\cdot{}K\frac{\log\abs{\cQ}+\log(HK/\delta)}{n}$, then \bilinucb learns a policy $\pihat$ such
  \[
    \fmstar(\pimstar)-\fmstar(\pihat) \leq \veps
  \]
  with probability at least $1-\delta$, and does so using
  \[
    \bigoht\prn*{
      \frac{H^4d^2\log(\abs{\cQ}/\delta)}{\veps^2}
      }
    \]
    episodes.
  \end{prop}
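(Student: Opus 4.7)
The plan is to combine three ingredients: (i) a confidence set built from empirical Bellman residuals under the current exploratory policies; (ii) initial-state optimism, which converts per-iteration suboptimality into the Bellman residual of the selected value function under its own induced policy via \pref{lem:bellman_residual}; and (iii) the bilinear factorization $\cE_h(\pi,Q) = \langle X_h(\pi), W_h(Q) \rangle$ afforded by low Bellman rank, which, together with an elliptic potential argument, controls the cumulative Bellman residuals across iterations.

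First, I would show that $\Qmstarstar \in \cQ^{(k)}$ for every $k \in [K]$ with probability at least $1-\delta/2$. Since $\cE_h(\pi^{(i)}, \Qmstarstar) = 0$ by the Bellman equation, each $\hat\cE_h^{(i)}(\Qmstarstar)$ is a mean-zero average of $n$ bounded summands; Hoeffding's inequality together with a union bound over $Q \in \cQ$, $h \in [H]$, and $i \in [K]$ yields $|\hat\cE_h^{(i)}(Q) - \cE_h(\pi^{(i)}, Q)| \lesssim \sqrt{\log(|\cQ| HK/\delta)/n}$ uniformly. Choosing $\beta \propto K \log(|\cQ|HK/\delta)/n$ therefore keeps $\Qmstarstar$ in every $\cQ^{(k+1)}$, and the same concentration gives, for every $Q \in \cQ^{(k+1)}$, the population counterpart $\sum_{i \leq k}(\cE_h(\pi^{(i)}, Q))^2 \lesssim \beta$.

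Next, on this good event, initial-state optimism gives $\En_{s_1 \sim d_1}[\max_a Q_1^{(k)}(s_1,a)] \geq \En_{s_1 \sim d_1}[\Vmstarstar_1(s_1)] = \fmstar(\pimstar)$, so by the Bellman residual decomposition \eqref{eq:bellman_residual2} applied to $Q = Q^{(k)}$ and $\pi_Q = \pi^{(k)}$,
$$\fmstar(\pimstar) - \fmstar(\pi^{(k)}) \leq \sum_{h=1}^H \cE_h(\pi^{(k)}, Q^{(k)}).$$
The central technical step now uses the bilinear factorization: the confidence set constraint on $Q^{(k)}$, combined with the population bound above, states that $W_h(Q^{(k)})$ lies in an ellipsoid defined by $\Sigma_h^{(k)} \ldef I + \sum_{i<k} X_h(\pi^{(i)}) X_h(\pi^{(i)})^{\trn}$, with $\|W_h(Q^{(k)})\|_{\Sigma_h^{(k)}}^2 \lesssim \beta$ (up to a constant from the regularizer and a bound on $\|W_h(Q)\|_2$, which follows from boundedness of $\cQ$). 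Cauchy--Schwarz then gives $|\cE_h(\pi^{(k)}, Q^{(k)})|^2 \lesssim \beta \cdot \|X_h(\pi^{(k)})\|_{(\Sigma_h^{(k)})^{-1}}^2$, and the elliptic potential lemma (\pref{lem:elliptic_potential}) yields $\sum_{k=1}^K \|X_h(\pi^{(k)})\|_{(\Sigma_h^{(k)})^{-1}}^2 \lesssim d \log(1+K/d)$. A further Cauchy--Schwarz over $k$ and summation over $h$ gives
$$\sum_{k=1}^K \bigl[\fmstar(\pimstar) - \fmstar(\pi^{(k)})\bigr] \lesssim H \sqrt{\beta \, d K \log(1 + K/d)}.$$

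Finally, I would extract a good policy via the best-iterate selection step. Since $\hat V^{(\hat k)} \geq \frac{1}{K}\sum_k \hat V^{(k)}$ and each $\hat V^{(k)}$ concentrates around $\fmstar(\pi^{(k)})$ within $O(\sqrt{\log(K/\delta)/n})$ by Hoeffding, the average suboptimality is transferred to $\hat\pi$ with a small additive loss. Substituting $\beta \propto K \log(|\cQ|HK/\delta)/n$ and the stated $K \asymp Hd \log(1+n/d)$ balances the elliptic potential bound against the per-batch concentration cost, giving $\fmstar(\pimstar) - \fmstar(\hat\pi) \lesssim H \sqrt{H d \log(|\cQ|/\delta)/n}$, which is at most $\veps$ for $n \gtrsim H^3 d \log(|\cQ|/\delta)/\veps^2$, and a total sample complexity of $Kn = \wt O(H^4 d^2 \log(|\cQ|/\delta)/\veps^2)$ as claimed. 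I expect the main obstacle to be the uniform-in-$Q$ transfer from empirical to population Bellman residuals inside the confidence set: since each batch contains only $n$ samples, one must be careful that the estimation error is uniformly small enough for the ellipsoidal bound on $W_h(Q^{(k)})$ to hold with a constant factor, which is precisely what forces the $\beta \propto K/n$ scaling and ultimately the $H^3$ dependence in $n$.
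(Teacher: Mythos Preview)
Your proposal is correct and follows the same overall architecture as the paper: Hoeffding-based confidence sets (with the union bound over $\cQ\times[H]\times[K]$), initial-state optimism combined with the Bellman residual decomposition \eqref{eq:bellman_residual2}, and the bilinear factorization plus an elliptic-potential argument.

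The one substantive technical difference is how a good policy is extracted from the potential bound. You control the \emph{cumulative} suboptimality $\sum_k[\fmstar(\pimstar)-\fmstar(\pi^{(k)})]\lesssim H\sqrt{\beta dK\log(1+K/d)}$ via the standard elliptic potential lemma and Cauchy--Schwarz over $k$, then transfer to $\hat\pi$ by comparing the maximum empirical value to the average. The paper instead proves a pigeonhole lemma (their \pref{lem:bilinucb_potential}): once $K\gtrsim Hd\log(1+n/d)$, there must exist a \emph{single} iterate $k^\star$ for which $\|X_h(\pi^{(k^\star)})\|_{(\lambda I+\Sigma_h^{(k^\star)})^{-1}}^2\lesssim Hd\log(\cdot)/K$ holds simultaneously for every $h$, and then argues $\hat\pi$ is at least as good as that specific $\pi^{(k^\star)}$. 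Your route is the more standard regret-style argument and in fact saves a $\sqrt{H}$: carrying your arithmetic through carefully gives $H\sqrt{d\log(\abs{\cQ}/\delta)/n}$ rather than $H\sqrt{Hd\log(\abs{\cQ}/\delta)/n}$, since you never need one iterate to be good across all layers at once. The paper's approach pays that extra $\sqrt{H}$ for the simultaneous-across-$h$ requirement. Either way the stated proposition follows.
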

  This result shows that low Bellman rank suffices to learn a
  near-optimal policy, with sample complexity that only depends on the
  rank $d$, the horizon $H$, and the capacity $\log\abs{\cQ}$ for the
  value function class; this reflects that the algorithm is able to
  generalize across the state space, with $d$ and $\log\abs{\cQ}$
  controlling the degree of generalization. The basic principles at play are:
\begin{itemize}
\item By choosing $Q\ind{k}$ optimistically, we ensure that the
  suboptimality of the algorithm is controlled by the Bellman residual
  for $Q\ind{k}$, \emph{on-policy}, similar to what we saw for \ucbvi
  and \lsviucb. An important difference compared to the \lsviucb algorithm we
  covered in the previous section is that \bilinucb is only optimistic
  ``on average'' with respect to the initial state distribution, i.e.,
  \[
\En_{s_1\sim{}d_1}\brk*{Q\ind{k}_1(s_1,\pi\subs{Q\ind{k}}(s_1))} \geq\fmstar(\pimstar),
  \] while
  \lsviucb aims to find a value function that is uniformly optimistic
  for all states and actions. 
\item The confidence set construction
  \pref{eq:bilinucb_confidence_set} explicitly removes value
  functions that have large Bellman residual on the policies
  encountered so far. The key role of the Bellman rank property is to
  ensure that there are only $\bigoht(d)$ ``effective'' state
  distributions that lead to substantially different values for the
  Bellman residual, which means that eventually, only value functions
  with low residual will remain.
\end{itemize}
Interestingly, the Bellman rank property is only used for analysis,
and the algorithm does not explicitly compute or estimate the factorization.

  \paragraph{Regret bounds}
The \bilinucb algorithm can be lifted to provide a regret guarantee via a 
explore-then-commit strategy:  Run the algorithm for $T_0$ episodes to
learn a $\veps$-optimal policy, then commit to this policy for the
remaining rounds. It is a simple exercise to show that by choosing
$T_0$ appropriately, this approach gives
\[
  \Reg \leq\bigoht\prn*{
    (H^4d^2\log(\abs{\cQ}/\delta))^{1/3}\cdot{}T^{2/3}
    }.
  \]
Under an additional assumption known as \emph{Bellman completeness}, it
is possible to attain $\sqrt{T}$ with a variant of this algorithm that
uses a slightly different confidence set construction \citep{jin2021bellman}.

\subsubsection{Proof of \pref{prop:bilinucb_regret}}
Recall from the definition of Bellman rank that there exist embeddings
$\Xmstar_h(\pi),\Wmstar_h(Q)\in\bbR^{d}$ such that for all $\pi\in\Pi$
and $Q\in\cQ$,
\begin{align*}
  \cE_h(\pi,Q) = \tri*{\Xmstar_h(\pi),\Wmstar_h(Q)}.
\end{align*}
We assume throughout this proof that
$\nrm*{\Xmstar_h(\pi)},\nrm*{\Wmstar_h(Q)}_2\leq{}1$ for
simplicity.

\paragraph{Technical lemmas}
Before proceeding, we state two technical
lemmas. The first lemma establishes validity for the confidence set
$\cQ\ind{k}$ constructed by \bilinucb.
\begin{lem}
  \label{lem:bilinucb_confidence}
For any $\delta>0$, if we set $\beta =
c\cdot{}K\frac{\log\abs{\cQ}+\log(HK/\delta)}{n}$, where $c>0$ is
sufficiently large absolute constant, then with probability at least
$1-\delta$, for all $k\in\brk{K}$:
\begin{enumerate}
\item All $Q\in\cQ\ind{k}$ have
  \begin{equation}
    \label{eq:bilinucb_confidence1}
    \sum_{i<k}(\cE_h(\pi\ind{i},Q))^2 \approxleq\beta\quad\forall{}h\in\brk{H}.
  \end{equation}
\item $\Qmstarstar\in\cQ\ind{k}$.
\end{enumerate}

\end{lem}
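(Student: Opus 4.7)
}
The plan is a standard concentration-plus-union-bound argument, exploiting the fact that the empirical Bellman residual estimator $\cEhat_h\ind{i}(Q)$ is an average of $n$ i.i.d., bounded, unbiased estimates of the true Bellman residual $\cE_h(\pi\ind{i}, Q)$. Concretely, the trajectories collected in iteration $i$ are i.i.d.\ under $\Mstar$ and $\pi\ind{i}$, so for each fixed $Q$ and $h$,
\[
\En\brk*{Q_h(s_h\ind{i,l},a_h\ind{i,l}) - r_h\ind{i,l} - \max_{a}Q_{h+1}(s_{h+1}\ind{i,l},a)} = \cE_h(\pi\ind{i},Q),
\]
and the per-sample quantity lies in an interval of width $O(1)$ since rewards and $Q$-values are bounded.

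I would first apply Hoeffding's inequality to $\cEhat_h\ind{i}(Q)$ for each fixed $(Q,h,i)\in\cQ\times[H]\times[K]$, and union bound over these $\abs{\cQ}HK$ events. This yields that, with probability at least $1-\delta$, simultaneously for all $Q\in\cQ$, $h\in[H]$, and $i\in[K]$,
\[
\abs*{\cEhat_h\ind{i}(Q) - \cE_h(\pi\ind{i},Q)} \leq \Delta, \qquad \Delta \ldef c_0\cdot\sqrt{\frac{\log(\abs{\cQ}HK/\delta)}{n}},
\]
for a suitable absolute constant $c_0$. Call this high-probability event $\cE$, and work on $\cE$ for the remainder of the argument.

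To prove Part 2, note that by Bellman optimality (\pref{prop:bellman}), $\Qmstarstar$ satisfies $\Qmstarstar_h = \cT_h^{\sss{\Mstar}}\Qmstarstar_{h+1}$, so $\cE_h(\pi,\Qmstarstar)=0$ for every $\pi$ and $h$. Hence on $\cE$, $\abs{\cEhat_h\ind{i}(\Qmstarstar)}\leq\Delta$ for all $i,h$, and therefore
\[
\sum_{i\leq k}\prn*{\cEhat_h\ind{i}(\Qmstarstar)}^{2} \leq K\Delta^{2} \approxleq \frac{K(\log\abs{\cQ}+\log(HK/\delta))}{n} \leq \beta,
\]
for the prescribed choice of $\beta$, which verifies $\Qmstarstar\in\cQ\ind{k+1}$ for every $k$, and in particular $\Qmstarstar\in\cQ\ind{k}$ for all $k\in[K]$ by induction (with the base case $\Qmstarstar\in\cQ\ind{1}=\cQ$ holding by realizability).

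For Part 1, fix $Q\in\cQ\ind{k}$, so by definition of the confidence set $\sum_{i<k}(\cEhat_h\ind{i}(Q))^{2}\leq\beta$. Using the elementary inequality $(a+b)^2\leq 2a^2+2b^2$ together with the event $\cE$,
\[
\sum_{i<k}\prn*{\cE_h(\pi\ind{i},Q)}^{2} \leq 2\sum_{i<k}\prn*{\cEhat_h\ind{i}(Q)}^{2} + 2\sum_{i<k}\prn*{\cEhat_h\ind{i}(Q)-\cE_h(\pi\ind{i},Q)}^{2} \leq 2\beta + 2K\Delta^{2} \approxleq \beta,
\]
again using the choice of $\beta$. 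This gives the desired bound with a constant factor that can be absorbed into the $\approxleq$ in the lemma statement.

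The main things to be careful about are book-keeping: choosing the constant in $\beta$ large enough to cover both the Hoeffding slack $K\Delta^{2}$ and the factor of $2$ in the triangle-type inequality, and arranging the union bound so that the logarithmic factor correctly scales as $\log\abs{\cQ}+\log(HK/\delta)$. I do not foresee any conceptual obstacle; the only subtlety is that the policies $\pi\ind{i}$ are data-dependent across iterations, but within a single iteration $i$, conditioning on $\pi\ind{i}$ makes the $n$ rollouts i.i.d., so Hoeffding applies per $(Q,h,i)$ and the union bound handles the adaptivity.
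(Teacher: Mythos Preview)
Your proposal is correct and follows essentially the same approach as the paper's proof: Hoeffding plus a union bound over $\cQ\times[H]\times[K]$ to get uniform deviation $\Delta$, then the elementary inequality $(a+b)^2\leq 2a^2+2b^2$ in both directions to move between $\cE_h$ and $\cEhat_h$, using Bellman optimality to zero out the true residual for $\Qmstarstar$. Your remark about handling adaptivity by conditioning on $\pi\ind{i}$ is apt and matches the implicit reasoning in the paper.
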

\begin{proof}[\pfref{lem:bilinucb_confidence}]
Using Hoeffding's inequality and a union bound (\pref{lem:hoeffding}),
we have that with probability at least $1-\delta$, for all
$k\in\brk{K}$, $h\in\brk{H}$, and $Q\in\cQ$,
\begin{align}
  \label{eq:bilinucb_conv}
  \abs*{\cEhat_h\ind{k}(Q)-\cE_h(\pi\ind{k},Q)} \leq C\cdot{} \sqrt{\frac{\log(\abs{\cQ}HK/\delta)}{n}},
\end{align}
where $c$ is an absolute constant.

To prove Part 1, we observe that for all $k$, using the AM-GM inequality, we have that for all $Q\in\cQ$,
\begin{align*}
  \sum_{i<k}(\cE_h(\pi\ind{i}, Q))^2
  &\leq 2\sum_{i<k}(\cEhat\ind{i}_h(Q))^2 + 2\sum_{i<k}(\cE_h(\pi\ind{i},Q) -
    \cEhat\ind{i}_h(Q))^2.
\end{align*}
For $Q\in\cQ\ind{k}$, the definition of $\cQ\ind{k}$ implies that
$\sum_{i<k}(\cEhat\ind{i}_h(Q))^2\leq\beta$, while
\pref{eq:bilinucb_conv} implies that $\sum_{i<k}(\cE_h(\pi\ind{i},Q) -
    \cEhat\ind{i}_h(Q))^2\approxleq\beta$, which gives the result.

    For Part 2, we similarly observe that for all $k$, $h$ and $Q\in\cQ$,
    \begin{align*}
        \sum_{i<k}(\cEhat\ind{i}_h(Q))^2
  &\leq 2\sum_{i<k}(\cE_h(\pi\ind{i},Q))^2 + 2\sum_{i<k}(\cE_h(\pi\ind{i},Q) -
    \cEhat\ind{i}_h(Q))^2.
    \end{align*}
    Since $\Qmstarstar$ has $\cE_h(\pi,\Qmstarstar)=0\;\forall{}\pi$ by Bellman
    optimality, we have
    \[
      \sum_{i<k}(\cEhat\ind{i}_h(\Qmstarstar))^2
      \leq 2\sum_{i<k}(\cE_h(\pi\ind{i},\Qmstarstar) -
    \cEhat\ind{i}_h(\Qmstarstar))^2 \leq 2C^2\frac{\log(\abs{\cQ}HK/\delta)}{n},
  \]
  where the last inequality uses \pref{eq:bilinucb_conv}. It follows
  that as long as $\beta\geq{}2C^2\frac{\log(\abs{\cQ}HK/\delta)}{n}$,
  we will have $\Qmstarstar\in\cQ\ind{k}$ for all $k$.
\end{proof}

The next result shows that whenever the event in the previous lemma
holds, the value functions constructed by \bilinucb are optimistic.
\begin{lem}
  \label{lem:bilinucb_good_event}
  Whenever the event in \pref{lem:bilinucb_confidence} occurs, the
  following properties hold:
  \begin{enumerate}
  \item Define
    \begin{align}
      \label{eq:bilinucb_sigma}
      \Sigma_h\ind{k}=\sum_{i<k}\Xmstar_h(\pi\ind{i})\Xmstar_h(\pi\ind{i})^{\trn}.
    \end{align}
    For all $k\in\brk{K}$, all $Q\in\cQ\ind{k}$ satisfy
    \begin{align}
      \label{eq:bilinucb_elliptic_bound}
            \nrm*{\Wmstar_h(Q)}_{\Sigma_h\ind{k}}^2\approxleq \beta.
    \end{align}
  \item For all $k$, $Q\ind{k}$ is optimistic in the sense that
    \begin{align}
      \label{eq:bilinucb_optimism}
      \En_{s_1\sim{}d_1}\brk*{Q\ind{k}_1(s_1,\piq(s_1))}
      \geq \En_{s_1\sim{}d_1}\brk[\big]{\Qmstarstar_1(s_1,\pimstar(s_1))}
      = \fmstar(\pimstar).
    \end{align}
  \end{enumerate}
  
\end{lem}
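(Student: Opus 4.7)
My plan is that both parts of this lemma follow almost mechanically from the conclusions of \pref{lem:bilinucb_confidence}, once we unpack the low-Bellman-rank factorization and the optimistic construction of $Q\ind{k}$; the technical work has already been done in the previous lemma.

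For Part 1, the key is to rewrite the sum of squared Bellman residuals $\sum_{i<k}(\cE_h(\pi\ind{i},Q))^2$ as a quadratic form in $\Wmstar_h(Q)$. By the definition of Bellman rank, we have $\cE_h(\pi\ind{i},Q)=\tri*{\Xmstar_h(\pi\ind{i}),\Wmstar_h(Q)}$ for all $i$. Therefore
\[
\sum_{i<k}(\cE_h(\pi\ind{i},Q))^2
= \sum_{i<k}\tri*{\Xmstar_h(\pi\ind{i}),\Wmstar_h(Q)}^{2}
= \Wmstar_h(Q)^{\trn}\Sigma_h\ind{k}\Wmstar_h(Q)
= \nrm*{\Wmstar_h(Q)}_{\Sigma_h\ind{k}}^{2},
\]
using the definition of $\Sigma_h\ind{k}$ in \pref{eq:bilinucb_sigma}. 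For $Q\in\cQ\ind{k}$, \pref{lem:bilinucb_confidence} Part 1 bounds the left-hand side by $O(\beta)$, which gives \pref{eq:bilinucb_elliptic_bound}.

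For Part 2, the argument is a one-liner once we invoke the second conclusion of \pref{lem:bilinucb_confidence}. Under the good event, $\Qmstarstar\in\cQ\ind{k}$, and hence $\Qmstarstar$ is a feasible choice in the optimization problem defining $Q\ind{k}$. Since $Q\ind{k}$ is by construction the maximizer of $Q\mapsto\En_{s_1\sim d_1}\brk{Q_1(s_1,\piq(s_1))}$ over $\cQ\ind{k}$, we obtain
\[
\En_{s_1\sim d_1}\brk*{Q\ind{k}_1(s_1,\pi\subs{Q\ind{k}}(s_1))}
\geq \En_{s_1\sim d_1}\brk[\big]{\Qmstarstar_1(s_1,\pi\subs{\Qmstarstar}(s_1))}.
\]
Because $\pi\subs{\Qmstarstar}=\pimstar$ maximizes $\Qmstarstar_1(s_1,\cdot)$ at every $s_1$, the right-hand side equals $\En_{s_1\sim d_1}\brk*{\Vmstarstar_1(s_1)}=\fmstar(\pimstar)$, which is exactly \pref{eq:bilinucb_optimism}.

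There is no real obstacle here — the work has already been done in \pref{lem:bilinucb_confidence}. Part 1 is essentially a bookkeeping identity that converts the uniform bound on squared Bellman residuals along the executed policies into an elliptic-norm bound on the ``$W$-side'' of the bilinear decomposition, which is the form we need to later run an elliptic-potential argument (\pref{lem:elliptic_potential}) on the $\Xmstar_h(\pi\ind{k})$ sequence. Part 2 is simply the definition of the algorithm combined with feasibility of $\Qmstarstar$ in the confidence set.
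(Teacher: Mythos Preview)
Your proposal is correct and follows essentially the same approach as the paper: Part 1 is the identity $\sum_{i<k}(\cE_h(\pi\ind{i},Q))^2=\nrm*{\Wmstar_h(Q)}_{\Sigma_h\ind{k}}^2$ combined with \pref{lem:bilinucb_confidence} Part 1, and Part 2 is feasibility of $\Qmstarstar$ in $\cQ\ind{k}$ combined with the definition of $Q\ind{k}$ as the maximizer.
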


\begin{proof}[\pfref{lem:bilinucb_good_event}]
For Part 1, recall that by the Bilinear class property, we can write
$\cE_h(\pi\ind{k},Q)=\tri*{\Xmstar(\pi\ind{k}),\Wmstar(Q)}$, so that
\pref{eq:bilinucb_confidence1} implies that
\begin{align*}
  \nrm*{\Wmstar(Q)}_{\Sigma_h\ind{k}}
  = \sum_{i<k}\tri*{\Xmstar(\pi\ind{i}),\Wmstar(Q)}^2
  = \sum_{i<k}(\cE_h(\pi\ind{i},Q))^2 \approxleq\beta.
\end{align*}
For Part 2, we observe that for all $k$, since
$\Qmstarstar\in\cQ\ind{k}$, we have
\begin{align*}
  \En_{s_1\sim{}d_1}\brk*{Q\ind{k}_1(s_1,\piq(s_1))}
    &= \sup_{Q\in\cQ}\En_{s_1\sim{}d_1}\brk[\big]{Q_1(s_1,\piq(s_1))} \\
  &\geq \En_{s_1\sim{}d_1}\brk[\big]{\Qmstarstar_1(s_1,\pimstar(s_1))}\\
  &= \fmstar(\pimstar).
\end{align*}
\end{proof}

\paragraph{Proving the main result}
Equipped with the lemmas above, we prove \cref{prop:bilinucb_regret}.
\begin{proof}[\pfref{prop:bilinucb_regret}]

We first prove a generic bound on the suboptimality of each policy
$\pi\ind{k}$ for $k\in\brk{K}$.
Let us
condition on the event in \pref{lem:bilinucb_confidence}, which occurs
with probability at least $1-\delta$. Whenever this event occurs,
\pref{lem:bilinucb_good_event} implies that $Q\ind{k}$ is optimistic,
so we have can bound
\begin{align}
  \label{eq:29}
  \fmstar(\pimstar) - \fmstar(\pi\ind{k})
  &\leq{}
  \En_{s_1\sim{}d_1}\brk*{Q_1\ind{k}(s_1,\pi\subs{Q\ind{k}}(s_1)} -
    \fmstar(\pi\ind{k}) \\
    &=
      \sum_{h=1}^{H}\Enm{\Mstar}{\pi\ind{k}}\brk*{Q_h\ind{k}(s_h,a_h)
      - r_h - \max_{a\in\cA}Q_{h+1}\ind{k}(s_{h+1},a)}\\
      &=
        \sum_{h=1}^{H}\tri*{\Xmstar_h(\pi\ind{k}),\Wmstar_h(Q\ind{k})},
\end{align}
where the first equality uses the Bellman residual decomposition
(\pref{lem:bellman_residual}), and the second inequality uses the
Bellman rank assumption. For any $\lambda\geq{}0$, using Cauchy-Schwarz, we can bound
\begin{align*}
  \sum_{h=1}^{H}\tri*{\Xmstar_h(\pi\ind{k}),\Wmstar_h(Q\ind{k})}
  &\leq{}
    \sum_{h=1}^{H}\nrm*{\Xmstar_h(\pi\ind{k})}_{(\lambda{}I+\Sigma_h\ind{k})^{-1}}\nrm*{\Wmstar_h(Q\ind{k})}_{\lambda{}I+\Sigma_h\ind{k}}
    \end{align*}
    For each $h\in\brk{H}$, applying the bound in
    \pref{eq:bilinucb_elliptic_bound}
    gives
    \[
      \nrm*{\Wmstar_h(Q\ind{k})}_{\lambda{}I+\Sigma_h\ind{k}}
      \leq{}\sqrt{\lambda\nrm*{\Wmstar_h(Q\ind{k})}_2^2 + \beta}
      \leq\lambda^{1/2}+\beta^{1/2},
    \]
    where we have used that $\nrm*{\Wmstar_h(Q\ind{k})}_2\leq{}1$ by
    assumption. This allows us to bound

  \begin{align}
  \sum_{h=1}^{H}\tri*{\Xmstar_h(\pi\ind{k}),\Wmstar_h(Q\ind{k})}  &\approxleq{}
                                                                    (\lambda^{1/2}+\beta^{1/2})\cdot{}\sum_{h=1}^{H}\nrm*{\Xmstar_h(\pi\ind{k})}_{(\lambda{}I+\Sigma_h\ind{k})^{-1}}.
    \label{eq:bilinucb_intermediate}
  \end{align}
  If we can find a policy $\pi\ind{k}$ for which the \rhs of
  \pref{eq:bilinucb_intermediate} is small, this policy will be
  guaranteed to have low regret. The following lemma shows that such a
  policy is guaranteed to exist.
\begin{lem}
  \label{lem:bilinucb_potential}
  For any $\lambda>0$, as long as $K\geq{}Hd \log
  \prn*{1+\lambda^{-1}K/d}$, there exists $k\in\brk{K}$ such that
    \begin{align}
    \label{eq:bilinucb_inverse}
      \nrm*{\Xmstar_h(\pi\ind{k})}_{(\lambda{}I+\Sigma_h\ind{k})^{-1}}^2
      \approxleq
      \frac{Hd \log \prn*{1+\lambda^{-1}K/d}}{K}\quad\forall{}h\in\brk{H}.
  \end{align}
\end{lem}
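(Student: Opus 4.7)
The plan is to reduce \pref{lem:bilinucb_potential} to the elliptic potential lemma (\pref{lem:elliptic_potential}) applied separately at each layer $h$, and then combine across $h \in \brk{H}$ via a pigeonhole argument. The extra regularization by $\lambda I$ only affects constants and will be absorbed by a rescaling.

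First, fix a layer $h \in \brk{H}$ and consider the sequence of embeddings $\Xmstar_h(\pi\ind{1}), \ldots, \Xmstar_h(\pi\ind{K}) \in \bbR^d$, each of norm at most $1$. Define rescaled vectors $b_k = \Xmstar_h(\pi\ind{k})/\sqrt{\lambda \vee 1}$, which have $\nrm{b_k} \leq 1$, and let $V_k = I + \sum_{i \leq k} b_i b_i^\trn$. A direct calculation shows that $V_{k-1}^{-1}$ is proportional to $(\lambda I + \Sigma_h\ind{k})^{-1}$, so up to constants $\nrm{b_k}^2_{V_{k-1}^{-1}}$ equals $\nrm{\Xmstar_h(\pi\ind{k})}^2_{(\lambda I + \Sigma_h\ind{k})^{-1}}$. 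Applying \pref{lem:elliptic_potential} (with the standard ``clip at $1$'' trick, since the elliptic potential lemma bounds $\sum_k \min\{\nrm{b_k}^2_{V_{k-1}^{-1}}, 1\}$) then yields
\[
  \sum_{k=1}^K \min\crl[\big]{\nrm*{\Xmstar_h(\pi\ind{k})}^2_{(\lambda I + \Sigma_h\ind{k})^{-1}}, 1} \,\lesssim\, d \log\prn*{1 + \lambda^{-1}K/d}.
\]

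Next, summing this inequality over $h \in \brk{H}$ gives
\[
  \sum_{k=1}^K \sum_{h=1}^H \min\crl[\big]{\nrm*{\Xmstar_h(\pi\ind{k})}^2_{(\lambda I + \Sigma_h\ind{k})^{-1}}, 1} \,\lesssim\, H d \log\prn*{1 + \lambda^{-1}K/d}.
\]
By averaging (pigeonhole), there exists some index $k \in \brk{K}$ such that
\[
  \sum_{h=1}^H \min\crl[\big]{\nrm*{\Xmstar_h(\pi\ind{k})}^2_{(\lambda I + \Sigma_h\ind{k})^{-1}}, 1} \,\lesssim\, \frac{H d \log\prn*{1 + \lambda^{-1}K/d}}{K}.
\]
Since each term in the sum is non-negative, this per-layer bound holds individually for each $h$.

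The main (minor) obstacle is dealing with the clipping at $1$: to convert the bound on the clipped quantity into the unclipped bound \eqref{eq:bilinucb_inverse}, we use the hypothesis $K \gtrsim H d \log(1 + \lambda^{-1}K/d)$, which forces the right-hand side to be at most a constant. With a sufficient constant, the minimum above is achieved by the unclipped term, so the inequality \eqref{eq:bilinucb_inverse} follows for the chosen $k$ and all $h \in \brk{H}$ simultaneously. This same condition on $K$ is what \pref{prop:bilinucb_regret} assumes when choosing the iteration count.
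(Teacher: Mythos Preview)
Your approach is essentially the same as the paper's: apply the elliptic potential lemma per layer, sum over $h\in\brk{H}$, pigeonhole to find a good $k$, and use the hypothesis $K\geq Hd\log(1+\lambda^{-1}K/d)$ to remove the clipping. One small technical slip: your claim that $V_{k-1}^{-1}$ is proportional to $(\lambda I + \Sigma_h\ind{k})^{-1}$ only holds when $\lambda\geq 1$; for $\lambda<1$ you get $(I+\Sigma)^{-1}\preceq(\lambda I+\Sigma)^{-1}$, which is the wrong direction for the upper bound you need. The paper avoids this by working directly with a $\lambda$-regularized variant of the elliptic potential lemma (\pref{lem:elliptic_potential_var}) in its $\log(1+\cdot)$ form, then exponentiating at the end via $e^x\leq 1+2x$ for $x\in[0,1]$.
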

We choose $\lambda=\beta$, which implies that it suffices to take
$K\approxgeq{}Hd\log(1+n/d)$ to satisfy the condition in
\pref{lem:bilinucb_potential}. By choosing $k$ to satisfy \pref{eq:bilinucb_inverse} and plugging this
bound into \pref{eq:bilinucb_intermediate}, we conclude that the
policy $\pi\ind{k}$ has
\begin{align}
  \fmstar(\pimstar) - \fmstar(\pi\ind{k}) &\approxleq{} H\sqrt{\beta\cdot\frac{Hd\log(1+\beta^{-1}K/d)}{K}}
    \approxleq \bigoht\prn*{H^{3/2}\sqrt{\frac{d\log(\abs{\cQ}/\delta)}{n}}}\approxleq\veps\label{eq:bilinucb_final}
\end{align}
as desired. 

Finally, we need to argue that the policy $\pihat$ returned by the
algorithm is at least as good as $\pi\ind{k}$. This is
straightforward and we only sketch the argument: By Hoeffding's inequality and a union bound, we
have that with probability at least $1-\delta$, for all $k$,
\[
  \abs*{\fmstar(\pi\ind{k})-\Vhat\ind{k}}\approxleq \sqrt{\frac{\log(K/\delta)}{n}},
\]
which implies that $\fmstar(\pihat)\approxgeq\fmstar(\pi\ind{k}) -
\sqrt{\frac{\log(K/\delta)}{n}}$. The error term here is of lower
order than \pref{eq:bilinucb_final}.

\end{proof}

\paragraph{Deferred proofs}
To finish up, we prove \cref{lem:bilinucb_potential}.
\begin{proof}[\pfref{lem:bilinucb_potential}]
To prove the result, we need a variant of the elliptic potential lemma
(\pref{lem:elliptic_potential}). 
\begin{lem}[e.g. Lemma 19.4 in \citet{lattimore2020bandit}]
\label{lem:elliptic_potential_var}
	Let $a_1,\ldots,a_T\in \reals^d$ satisfy $\norm{a_t}_2\leq 1$
        for all $t\in[T]$. Fix $\lambda>0$, and let $V_t = \lambda{}I + \sum_{s<t} a_s a_s^\tr$. Then
	\begin{align}
	    \label{eq:elliptic_potential_bound2}
          \sum_{t=1}^T \log(1 + \norm{a_t}^2_{V_{t}^{-1}}) \leq
          d \log \prn*{1+\lambda^{-1}T/d}.
	\end{align}
      \end{lem}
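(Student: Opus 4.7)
The plan is to prove this by using the classical determinant-telescoping argument that underlies all elliptic potential bounds, combined with AM-GM to control the final determinant. The structure is quite standard, so I will outline the four main steps and note where the $\lambda$ appears (the only mild twist relative to the $\lambda=1$ version in \pref{lem:elliptic_potential}).

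First, I would establish the key per-step identity relating $\det(V_{t+1})$ to $\det(V_t)$ and the potential term $\|a_t\|^2_{V_t^{-1}}$. Since $V_{t+1} = V_t + a_t a_t^{\trn}$ (using the convention in the statement that $V_t$ does not yet include $a_t$), the matrix determinant lemma yields
\[
\det(V_{t+1}) = \det(V_t)\,\bigl(1 + a_t^{\trn} V_t^{-1} a_t\bigr) = \det(V_t)\,\bigl(1 + \|a_t\|^2_{V_t^{-1}}\bigr).
\]
This is well-defined because $V_t \succeq \lambda I \succ 0$ for all $t$, so inverses exist.

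Second, I would take logarithms and telescope:
\[
\sum_{t=1}^{T} \log\bigl(1 + \|a_t\|^2_{V_t^{-1}}\bigr) = \sum_{t=1}^{T} \bigl[\log\det(V_{t+1}) - \log\det(V_t)\bigr] = \log\det(V_{T+1}) - \log\det(V_1),
\]
and since $V_1 = \lambda I$, the subtracted term is $d\log\lambda$.

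Third, I would bound $\log\det(V_{T+1})$ from above using AM-GM on the eigenvalues $\sigma_1,\ldots,\sigma_d$ of $V_{T+1}$, giving $\det(V_{T+1}) \leq \bigl(\tfrac{1}{d}\mathrm{tr}(V_{T+1})\bigr)^d$. The trace is easy to compute directly: $\mathrm{tr}(V_{T+1}) = d\lambda + \sum_{t=1}^{T} \|a_t\|_2^2 \leq d\lambda + T$ using the assumption $\|a_t\|_2 \leq 1$.

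Finally, combining the three steps gives
\[
\sum_{t=1}^T \log\bigl(1 + \|a_t\|^2_{V_t^{-1}}\bigr) \leq d\log\!\left(\lambda + \tfrac{T}{d}\right) - d\log\lambda = d\log\!\left(1 + \lambda^{-1}T/d\right),
\]
which is the claimed bound. There is no real obstacle here — the argument is a routine adaptation of the proof sketched for \pref{lem:elliptic_potential}, the only difference being that initializing with $\lambda I$ rather than $I$ produces the $\lambda^{-1}$ factor inside the logarithm (and keeps $V_t$ well-conditioned without needing $\|a_t\| \leq 1$ for invertibility).
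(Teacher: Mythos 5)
Your proof is correct and uses exactly the standard determinant-telescoping plus AM-GM argument that the paper cites to Lattimore--Szepesv\'ari and sketches for the closely related \pref{lem:elliptic_potential}; the handling of the $\lambda I$ initialization (producing the $-d\log\lambda$ term that turns $d\log(\lambda + T/d)$ into $d\log(1+\lambda^{-1}T/d)$) is right. Note that here, unlike in \pref{lem:elliptic_potential}, the $u\wedge 1\leq 2\log(1+u)$ step is unnecessary since the claim is stated directly in terms of the logarithmic potential, as you correctly observed.
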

      For any $\lambda>0$, applying this result for each $h\in\brk{H}$
      and summing gives
      \begin{align*}
        \sum_{k=1}^K\sum_{h=1}^{H} \log\prn[\big]{1 + \norm{\Xmstar_h(\pi\ind{k})}^2_{(\lambda{}I+\Sigma_h\ind{k})^{-1}}} \leq
        Hd \log \prn*{1+\lambda^{-1}K/d}.
      \end{align*}
      This implies that there exists $k$ such that
      \[
\sum_{h=1}^{H} \log\prn[\big]{1 + \norm{\Xmstar_h(\pi\ind{k})}^2_{(\lambda{}I+\Sigma_h\ind{k})^{-1}}} \leq
\frac{Hd \log \prn*{1+\lambda^{-1}K/d}}{K},
\]
which means that for all $h\in\brk{H}$, $\log\prn[\big]{1 + \norm{\Xmstar_h(\pi\ind{k})}^2_{(\lambda{}I+\Sigma_h\ind{k})^{-1}}} \leq
\frac{Hd \log \prn*{1+\lambda^{-1}K/d}}{K}$, or equivalently:
\[
\norm{\Xmstar_h(\pi\ind{k})}^2_{(\lambda{}I+\Sigma_h\ind{k})^{-1}} \leq
\exp\prn*{\frac{Hd \log \prn*{1+\lambda^{-1}K/d}}{K}}-1.
\]
As long as $K\geq{}Hd \log \prn*{1+\lambda^{-1}K/d}$, using that
$e^{x}\leq{}1+2x$ for $0\leq{}x\leq{}1$, we have
\begin{align*}
  \norm{\Xmstar_h(\pi\ind{k})}^2_{(\lambda{}I+\Sigma_h\ind{k})^{-1}}
  \leq{} 2\frac{Hd \log \prn*{1+\lambda^{-1}K/d}}{K}.
\end{align*}
  
\end{proof}

\subsubsection{Bellman Rank: Examples}
\label{sec:bellman_rank_example}

We now highlight concrete examples of models with low Bellman rank.
We start with familiar examples, the introduce new models that allow for nonlinear
function approximation.

\begin{example}[Tabular MDPs]
If $M$ is a tabular MDP with $\abs{\cS}\leq{}S$ and
$\abs{\cA}\leq{}A$, we can write the Bellman residual for any function
$Q$ and policy $\pi$ as
\begin{align*}
  \cE_h(\pi,Q) &=
          \Enm{M}{\pi}\brk*{Q_h(s_h,a_h)-\prn*{r_h+\max_{a}Q_{h+1}(s_{h+1},a)}}
  \\
  &=
    \sum_{s,a}\dmpi_h(s,a)\En\sups{M}\brk*{Q_h(s,a)-\prn*{r_h+\max_{a'}Q_{h+1}(s_{h+1},a')}\mid{}s_h=s,a_h=a}.
\end{align*}
It follows that if we define
\[\Xm_h(\pi)=\crl*{\dmpi_h(s,a)}_{s\in\cS,a\in\cA}\in\bbR^{SA}\] and
\[\Wm_h(Q)=\crl*{\En\sups{M}\brk*{Q_h(s,a)-\prn*{r_h+\max_{a'}Q_{h+1}(s_{h+1},a')}\mid{}s_h=s,a_h=a}}_{s\in\cS,a\in\cA}\in\bbR^{SA},\]
we have
\[
\cE_h(\pi,Q) = \tri*{\Xm_h(\pi),\Wm_h(Q)}.
\]
This shows that $\dimbel(M)\leq{}SA$.
\end{example}

\begin{example}[Low-Rank MDPs]
The calculation in \pref{eq:low_rank_mdp_bellman} shows that by
choosing
$\Xm_h(\pi)\ldef{}\Enm{M}{\pi}\brk*{\phi(s_h,a_h)}\in\bbR^{d}$ and
$\Wm_h(Q)\ldef{}\thetaq_h-\wm_h-\tilde{\theta}_h\sups{M,Q}\in\bbR^{d}$,
any Low-Rank MDP $M$ has $\dimbel(M)\leq{}d$. When specialized to this
setting, the regret of \bilinucb is worse than that of \lsviucb
(though still polynomial in all of the problem parameters). This is
because \bilinucb is a more general algorithm, and does not take
advantage of an additional feature of the Low-Rank MDP model known as \emph{Bellman
  completeness}: If $M$ is a Low-Rank MDP, then for all $Q\in\cQ$, we have
$\cTm_hQ_{h+1}\in\cQ_{h+1}$. By using a more specialized relative
of \bilinucb that incorporates a modified confidence set construction
to exploit completeness, it is possible to match and actually improve
upon the
regret of \lsviucb \citep{jin2021bellman}.

\end{example}
We now explore Bellman rank for some MDP families that have not already
been covered.
\begin{example}[Low Occupancy Complexity]
  \newcommand{\thetampi}{\theta^{\sss{M},\pi}}
  An MDP $M$ is said to have \emph{low occupancy complexity} if there
  exists a feature map $\phim(s,a)\in\bbR^{d}$ such that for all
  $\pi$, there exists $\thetampi_h\in\bbR^{d}$ such that
  \begin{equation}
    \label{eq:low_occupancy}
    \dmpi_h(s,a) = \tri*{\phim(s,a),\thetampi_h}.
  \end{equation}
  Note that neither $\phim$ nor $\thetampi$ is assumed to be known to the learner.
If $M$ has low occupancy complexity, then for any value function $Q$
and policy $\pi$, we have
\begin{align*}
  \cE_h(\pi,Q) &=
          \Enm{M}{\pi}\brk*{Q_h(s_h,a_h)-\prn*{r_h+\max_{a}Q_{h+1}(s_{h+1},a)}}
  \\
               &=
    \sum_{s,a}\dmpi_h(s,a)\En\sups{M}\brk*{Q_h(s,a)-\prn*{r_h+\max_{a'}Q_{h+1}(s_{h+1},a')}\mid{}s_h=s,a_h=a}
  \\
                 &=
                   \sum_{s,a}\tri*{\phim(s,a),\thetampi_h}\En\sups{M}\brk*{Q_h(s,a)-\prn*{r_h+\max_{a'}Q_{h+1}(s_{h+1},a')}\mid{}s_h=s,a_h=a}
                     \\
               &=
\tri*{\thetampi_h,\sum_{s,a}\phim(s,a)\En\sups{M}\brk*{Q_h(s,a)-\prn*{r_h+\max_{a'}Q_{h+1}(s_{h+1},a')}\mid{}s_h=s,a_h=a}}.
\end{align*}
It follows that if we define
\[\Xm_h(\pi)=\thetampi_h\]
and
\[\Wm_h(Q)=\sum_{s,a}\phim(s,a)\En\sups{M}\brk*{Q_h(s,a)-\prn*{r_h+\max_{a'}Q_{h+1}(s_{h+1},a')}\mid{}s_h=s,a_h=a},
\]
then $\cE_h(\pi,Q) = \tri*{\Xm_h(\pi),\Wm_h(Q)}$, which shows that
$\dimbel(M)\leq{}d$.

This setting subsumes tabular MDPs and low-rank MDPs, but is
substantially more general. Notably, low occupancy complexity allows for \emph{nonlinear} function
approximation: As long as the occupancies satisfy \cref{eq:low_occupancy},
the Bellman rank is at most $d$ for \emph{any class $\cQ$}, which might consist of neural networks or other
nonlinear models.
\end{example}

We close with two more new examples.
\begin{example}[LQR]
  \newcommand{\Amstar}{A\sups{M}}
    \newcommand{\Bmstar}{B\sups{M}}
A classical problem in continuous control is the Linear Quadratic
Regulator, or LQR. Here, we have $\cS=\cA=\bbR^{d}$, and states evolve
via
\begin{align*}
  s_{h+1}=\Amstar{}s_{h} + \Bmstar{}a_h + \zeta_h,
\end{align*}
where $\zeta_h\sim{}\cN(0,I)$, and $s_1\sim{}\cN(0,I)$. We assume that
rewards have the form\footnote{LQR is typically stated in terms of
  losses; we negate because we consider rewards.}
\[
r_h = -s_h^{\trn}Q\sups{M}s_h - a_h^{\trn}R\sups{M}a_h
\]
for matrices $Q\sups{M},R\sups{M}\psdgeq{}0$. A classical
result, dating back to Kalman, is that the optimal controller for this
system is a linear mapping of the form
\[
\pi\subs{M,h}(s) = K\sups{M}_hs,
\]
and that the value function $\Qmstar(s,a) =
(s,a)^{\trn}P_h\sups{M}(s,a)$ is quadratic. Hence, it suffices to take $\cQ$ to be the set of all quadratic functions in
$(s,a)$. With this choice, it can be shown that
$\dimbel(M)\leq{}d^2+1$. The basic idea is to choose
$\Xm_h(\pi)=(\mathrm{vec}(\Enm{M}{\pi}\brk*{s_hs_h^{\trn}}),
1)$ and use the quadratic structure of the value functions.
  \end{example}

 \begin{example}[Linear $\Qstar$/$\Vstar$]
In \pref{prop:realizability_insufficient_linear}, we showed that for RL
with linear function approximation, assuming only that $\Qmstarstar$
is linear is not enough to achieve low regret. It turns
out that if we also assume in addition that $\Vmstarstar$ is linear,
the situation improves.

Consider an MDP $M$. Assume that there known feature maps $\phi(s,a)\in\bbR^{d}$ and
$\psi(s')\in\bbR^{d}$ such that
\[
\Qmstar_h(s,a)=\tri*{\phi(s,a),\thetam_h},\mathand\Vmstar_h(s)=\tri*{\psi(s),\wm_h}.
\]
Let
\[
\cQ=\crl*{Q\mid{} Q_h(s,a)=\tri*{\phi(s,a),\theta_h}: \theta_h\in\bbR^d
  ,\exists{} w\; \max_{a\in\cA}\tri*{\phi(s,a),\theta_h}=\tri*{\psi(s),w}\;\forall{}s}.
\]
Then $\dimbel(M) \leq{} 2d$. We will not prove this result, but the
basic idea is to choose $\Xm_h(\pi) = \Enm{M}{\pi}\brk*{(\phi(s_h,a_h),\psi(s_{h+1}))}$.
  \end{example}

  See \citet{jiang2017contextual,du2021bilinear} for further examples.

\subsubsection{Generalizations of Bellman Rank}
\newcommand{\Platentm}[1][M]{P^{\sss{#1},\mathrm{latent}}}
\newcommand{\phistarm}{\phi\sups{M}}
While we gave \pref{eq:bellman_rank_factorization} as the definition
for Bellman rank, there are many variations on the assumption that also
lead to low regret. One well-known variant is \emph{V-type}
Bellman rank \citep{jiang2017contextual}, which asserts that for all $\pi\in\Pi$ and $Q\in\cQ$,
\begin{align}
  \En^{\sss{M},\pi}
  \En\sups{M}_{s_{h+1}\mid{}s_h,a_h\sim{}\piq(s_h)}
  \brk*{
  Q_h(s_h,a_h)
  -r_h - \max_{a\in\cA}Q_{h+1}(s_{h+1},a)
  }
  =\tri*{\Xm_h(\pi),\Wm_h(Q)}.\label{eq:v_type}
\end{align}

This is the same as the definition
\pref{eq:bellman_rank_factorization} (which is typically referred to
as \emph{Q-type} Bellman rank), except that we take $a_h=\piq(s_h)$
instead of $a_h=\pi(s_h)$.\footnote{The name ``V-type'' refers to the
  fact that \pref{eq:v_type} only depends on $Q$ through the induced
  $V$-function $s\mapsto{}Q_h(s,\piq(s))$, while
  \pref{eq:bellman_rank_factorization} depends on the full
  $Q$-function, hence ``Q-type''.} With an appropriate modification,
\bilinucb can be shown to give sample complexity guarantees that scale
with V-type Bellman rank instead of Q-type. This definition captures
meaningful classes of tractable RL models that are not captured by
the Q-type definition \pref{eq:bellman_rank_factorization}, with a
canonical example being \emph{Block MDPs}.
\begin{example}[Block MDP]
The Block MDP
\citep{jiang2017contextual,du2019latent,misra2020kinematic} is a model
in which the (``observed'') state space $\cS$ is
large/high-dimensional, but the dynamics are governed by a (small)
latent state space $\cZ$. Formally, a Block MDP
$M=(\cS,\cA,P,R,H,d_1)$ 
is defined based on an (unobserved) \emph{latent state space} $\cZ$, with
$z_h$ denoting the latent state at layer $h$. We first describe the
dynamics for the latent space. Given initial latent state $z_1$, the latent states evolve via
\[
  z_{h+1}\sim{}\Platentm_h(z_h,a_h).
\]
The latent state $z_h$ is not observed. Instead, we observe
\[
s_h\sim{}\qm_h(z_h),
\]
where $\qm_h:\cZ\to\Delta(\cS)$ is an \emph{emission distribution} with
the property that $\supp(q_h(z))\cap\supp(q_h(z'))=\emptyset$ if
$z\neq z'$. This property (\emph{decodability}) ensures that there exists a unique mapping $\phistarm_h:\cS\to\cZ$ that maps the observed state $s_h$ to the corresponding latent state $z_h$. We assume that $\Rm_h(s,a)=\Rm_h(\phistarm_h(s),a)$, which implies that optimal policy $\pim$ depends only on the endogenous latent state, i.e. $\pi\subs{M,h}(s)=\pi\subs{M,h}(\phistarm_h(s))$.

The main challenge of learning in Block MDPs is that the decoder
$\phistarm$ is not known to the learner in advance. Indeed, given
access to the decoder, one can obtain regret
$\poly(H,\abs{\cZ},\abs{\cA})\cdot\sqrt{T}$ by applying tabular
reinforcement learning algorithms to the latent state space. In light
of this, the aim of the Block MDP setting is to obtain sample
complexity guarantees that are independent of the size of the observed
state space $\abs{\cS}$, and scale as $\poly(\abs{\cZ}, \abs{\cA},
H,\log\abs{\cF})$, where $\cF$ is an appropriate class of function
approximators (typically either a value function class $\cQ$
containing $\Qmstar$ or a class of decoders $\Phi$ that attempts to
model $\phistarm$ directly).

We now show that the Block MDP setting admits low V-type Bellman
rank. Observe that we can write
\begin{align*}
  &%
      \En^{\sss{M},\pi}
  \En\sups{M}_{s_{h+1}\mid{}a_h,s_h\sim{}\piq(s_h)}
  \brk*{
  Q_h(s_h,\piq(s_h))
  -r_h - \max_{a\in\cA}Q_{h+1}(s_{h+1},\piq(s_{h+1}))
    }\\
  &=\sum_{z\in\cZ}\dmpi_h(z)\En_{s\sim{}\qm_h(z)}\En\sups{M}_{s_{h+1}\mid{}s_h,a_h\sim{}\piq(s_h)}
  \brk*{
  Q_h(s_h,\piq(s_h))
  -r_h - \max_{a\in\cA}Q_{h+1}(s_{h+1},\piq(s_{h+1}))
  }.
\end{align*}
This implies that we can take
\[
\Xm_h(\pi) = \crl*{\dmpi_h(z)}_{z\in\cZ}
\]
and
\[
  \Wm_h(Q) = \crl*{
   \En_{s\sim{}\qm_h(z)}\En\sups{M}_{s_{h+1}\mid{}s_h,a_h\sim{}\piq(s_h)}
  \brk*{
  Q_h(s_h,\piq(s_h))
  -r_h - \max_{a\in\cA}Q_{h+1}(s_{h+1},\piq(s_{h+1}))
  } 
    }_{z\in\cZ}
  \]
  so that the V-type Bellman rank is at most $\abs{\cZ}$. This means
  that as long as $\cQ$ contains $\Qmstar$, we can obtain sample complexity guarantees that scale
  with $\abs{\cZ}$ rather than $\abs{\cS}$, as desired.
\end{example}
There are a number of variants of Bellman rank, including Bilinear
rank \citep{du2021bilinear} and Bellman-Eluder dimension
\citep{jin2021bellman}, which subsume and slightly generalize both Bellman rank definitions.

\subsubsection{\CompText for Bellman Rank}
\label{sec:dec_bellman}
An alternative to the \bilinucb method is to appeal to the \etd
meta-algorithm and the \CompText. The following result
\citep{foster2021statistical} shows that the
\CompText is always bounded for classes with low Bellman rank.
\begin{prop}
  \label{prop:dec_bellman_rank}
  For any class of MDPs $\cM$ for which all $M\in\cM$ have Bellman
  rank at most $d$, we have
  \begin{align}
    \label{eq:22}
    \comp(\cM) \approxleq \frac{H^2d}{\gamma}.
  \end{align}
\end{prop}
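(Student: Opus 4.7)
The plan is to bound $\comp(\cM,\Mhat)$ for a fixed $\Mhat$ (by standard reductions of the type in \pref{prop:dec_unconstrained}, we may assume $\Mhat\in\cM$, which guarantees $\Mhat$ itself has Bellman rank $\leq d$), by combining a Bellman residual decomposition with a $d$-dimensional experimental design on the Bellman-rank embedding of $\Mhat$, in the spirit of the linear-bandit bound \pref{prop:dec_linear}. The central identity is: for any $M\in\cM$, by \pref{lem:bellman_residual} applied to the pair $(M,\Mhat)$ at the policy $\pim$ (so that $Q^{M,\pim}=\Qmstar$),
\[
\fm(\pim)-\fmhat(\pim) \;=\; \sum_{h=1}^{H}\cE_h^{\Mhat}(\pim,\Qmstar) \;=\; \sum_{h=1}^{H}\tri*{X_h^{\Mhat}(\pim),\,W_h^{\Mhat}(\Qmstar)},
\]
where the second equality is the Bellman-rank factorization of $\Mhat$. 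Crucially, since $\Qmstar$ satisfies the Bellman equation under $M$, we have $\cE_h^{M}(\pi,\Qmstar)=0$ for every $\pi$, so each residual above is purely a ``change-of-measure'' discrepancy between $M$ and $\Mhat$.

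I will then decompose per-round regret as
\[
\En_{\pi\sim p}\brk*{\fm(\pim)-\fm(\pi)}
= \underbrace{\fm(\pim)-\fmhat(\pimhat)}_{\text{(I)}}
+ \underbrace{\En_{\pi\sim p}\brk*{\fmhat(\pimhat)-\fmhat(\pi)}}_{\text{(II)}}
+ \underbrace{\En_{\pi\sim p}\brk*{\fmhat(\pi)-\fm(\pi)}}_{\text{(III)}}.
\]
Term (III) is handled by the simulation lemma (\pref{lem:simulation_basic}) and AM-GM, yielding a $1/\gamma$ bias plus $\gamma\En_p[\Dhels{M(\pi)}{\Mhat(\pi)}]$, which is exactly what the offset DEC allows. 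Term (I) is bounded by $\fm(\pim)-\fmhat(\pim)$ (by $\Mhat$-optimality of $\pimhat$) and treated via the identity above. Choosing $p$ as a mixture of $\pimhat$ and a G-optimal design distribution $q$ for the family $\{X_h^{\Mhat}(\pi)\}_\pi$ (cycled across $h=1,\ldots,H$), Cauchy--Schwarz on the factorization gives
\[
\tri*{X_h^{\Mhat}(\pim),W_h^{\Mhat}(\Qmstar)}
\leq \sqrt{d}\cdot\sqrt{\En_{\pi\sim q}\brk*{(\cE_h^{\Mhat}(\pi,\Qmstar))^2}}.
\]

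The main technical step is to relate the per-layer squared Bellman residual back to trajectory-level Hellinger distance. Writing $\cE_h^{\Mhat}(\pi,\Qmstar)=\Enm{\Mhat}{\pi}\brk*{(\cT_h^M-\cT_h^{\Mhat})Q^{M,\star}_{h+1}(s_h,a_h)}$ and using Jensen's inequality, each squared residual is dominated by the per-step TV$^2$ at layer $h$ along the $\Mhat$-trajectory. Summing over $h$ and applying the RL change-of-measure bound (\pref{lem:change_of_measure}, specifically \eqref{eq:com1}) gives
\[
\sum_{h=1}^{H}\En_{\pi\sim q}\brk*{(\cE_h^{\Mhat}(\pi,\Qmstar))^2}
\;\lesssim\; H\cdot\En_{\pi\sim q}\brk*{\Dhels{M(\pi)}{\Mhat(\pi)}}.
\]
Combining with Cauchy--Schwarz over $h$, Term~(I) is bounded by $\sqrt{dH}\cdot\sqrt{H\cdot\En_q[\Dhels{M(\pi)}{\Mhat(\pi)}]}=H\sqrt{d\,\En_q[\Dhels{\cdot}{\cdot}]}$, and AM--GM converts this into $\tfrac{H^2 d}{\gamma}+\gamma\En_q[\Dhels{\cdot}{\cdot}]$, which is exactly the target rate.

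The main obstacle will be Term~(II), together with the fact that a single distribution $q$ must simultaneously serve as an approximate G-optimal design for all $H$ layer-wise embeddings $\{X_h^{\Mhat}(\pi)\}_\pi$. A naive averaging $q=\tfrac1H\sum_h q_h$ of per-layer designs loses an extra factor of $H$ in $\|X_h^{\Mhat}(\pi)\|_{\Sigma_h(q)^{-1}}^2$, degrading the bound to $H^3 d/\gamma$. To avoid this, I would pass to the \emph{dual/Bayesian} \CompShort via \pref{prop:randomized_equivalence} and \eqref{eq:minimax_swap_dec}, and use a posterior-sampling strategy $p(\pi)=\bbP_{M\sim\mu}(\pim=\pi)$ together with the decoupling lemma (\pref{lem:mab_decoupling_basic}), which replaces the ``single-$q$'' design requirement by an averaged quantity under the prior that automatically couples across $h$. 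Inverse gap weighting with $\pimhat$ can then be layered on top (as in \pref{prop:dec_linear} and \pref{prop:igw_tabular}) to control Term~(II) simultaneously with Term~(I) and deliver the final $H^2 d/\gamma$ bound.
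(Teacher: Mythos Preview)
The paper does not actually prove \pref{prop:dec_bellman_rank}; it only sketches two approaches---a generalized \pcigw strategy incorporating optimal design over policies, and a Bayesian \CompShort argument via posterior sampling---and refers to \citet{foster2021statistical} for details. Your proposal is a reasonable blend of both sketches, and the core identity $\fm(\pim)-\fmhat(\pim)=\sum_h\tri{X_h^{\Mhat}(\pim),W_h^{\Mhat}(\Qmstar)}$, together with the change-of-measure bound \eqref{eq:com1} relating squared Bellman residuals to trajectory Hellinger distance, is correct and central to the actual proof.

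Two concrete gaps. First, the reduction ``we may assume $\Mhat\in\cM$'' does not follow from \pref{prop:dec_unconstrained}: that result is for the squared-error \CompShort, and for Hellinger the analogous reduction only gets you $\Mhat\in\conv(\cM)$. A convex combination of MDPs is a mixture model, not an MDP, and need not have Bellman rank $\leq d$. The correct route---which you do invoke at the end---is to work with the randomized \CompShort via \pref{prop:hellinger_randomized}, so that $\Mhat\sim\nu$ lies in $\cM$ pointwise. You should start there rather than bolt it on as a fix.

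Second, the final step is underspecified. You correctly identify that a single design $q$ across all $H$ layers costs an extra $H$, and you propose to repair this by posterior sampling plus the decoupling lemma plus \igw. But \pref{lem:mab_decoupling_basic} is a statement about finite-armed bandits with $p(\pi)=\bbP_{f\sim\nu}(\pi_f=\pi)$; the RL analogue you need must decouple $\pim$ from $M$ \emph{inside} a sum of $H$ bilinear forms, and the ``automatic coupling across $h$'' you allude to does not fall out of that lemma as stated. The argument in \citet{foster2021statistical} does use posterior sampling, but the decoupling step is more delicate than a direct invocation of \pref{lem:mab_decoupling_basic}, and the \igw layer is not a separate add-on but is integrated into how the policy-cover weights are chosen (cf.\ \pref{eq:pcigw_multiplicative}). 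As written, your last paragraph is a plausible plan rather than a proof.
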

This implies that that \etd meta-algorithm has
$\En\brk*{\Reg}\approxleq{}H\sqrt{dT\cdot\EstHel}$ whenever we have
access to a realizable model class with low Bellman rank. As a special
case, for any finite class $\cM$, using averaged exponential
weights as an estimation oracle gives
\begin{align}
\En\brk*{\Reg}\approxleq{}H\sqrt{dT\log\abs{\cM}}.\label{eq:bellman_rank_model}
\end{align}
We will not
prove \pref{prop:dec_bellman_rank}, but interested readers can refer
to \citet{foster2021statistical}. The result can be proven using two
approaches, both of which build on the techniques we have already
covered. The first approach is to apply a more general version of the \pcigw algorithm from
\pref{sec:dec_tabular}, which incorporates optimal
design in the space of policies. The second approach is to move to the Bayesian \CompShort and
appeal to posterior sampling, as in
\pref{sec:dec_posterior}. 

\paragraph{Value-based guarantees via optimistic estimation}
In general, the model estimation complexity $\log\abs{\cM}$ in
\cref{eq:bellman_rank_model} can be arbitrarily large compared to the
complexity $\log\abs{\cQ}$ for a realizable value function class
(consider the low-rank MDP---since $\mu$ is unknown, it is not
possible to construct a small model class $\cM$). To
derive value-based guarantees along the lines of what \bilinucb
achieves in \cref{prop:bilinucb_regret}, a natural approach is to
replace the Hellinger distance appearing in the \CompShort with a
divergence tailored to value function iteration, following the
development in \cref{sec:general_divergence,sec:optimistic}. Once such
choice is the divergence
\begin{align*}
  \Dsbrpi{Q}{M}
  = \sum_{h=1}^{H}\prn*{\Enm{M}{\pi}\brk*{Q_h(s_h,a_h)-\prn*{r_h+\max_{a}Q_{h+1}(s_{h+1},a)}}}^2,
\end{align*}
which measures the squared bellman residual for an estimated value
function under $M$. With this choice, we appeal to the optimistic \etd
algorithm (\etdopt) from \cref{sec:optimistic}. One can show that the
optimistic \CompShort for this divergence is bounded as 
\begin{align*}
    \ocompsbr(\cM) \approxleq{} \frac{H\cdot{}d}{\gamma}.
\end{align*}
This implies that \etdopt, with an appropriate choice of estimation
algorithm tailored to $\Dsbrpi{\cdot}{\cdot}$, achieves
\begin{align*}
\En\brk*{\Reg} \approxleq (H^2d\log\abs{\cQ})^{1/2}T^{3/4}.
\end{align*}
Note that due to the
asymmetric nature of $\Dsbrpi{\cdot}{\cdot}$, it is critical to appeal
to optimistic estimation to derive this result. Indeed, the
non-optimistic generalized DEC $\compgen[\Dsbrshort]$ does not enjoy a
polynomial bound. See \citet{foster2022note} for details.

\begin{center}
  \todontextversion
\end{center}

\newpage

\bibstyle{plain}
\bibliography{refs}

\newpage

\appendix

\section{Technical Tools}
\label{sec:technical}

\subsection{Probabilistic Inequalities}
\label{app:probabilistic}

\subsubsection{Tail Bounds with Stopping Times}

\begin{lem}[Hoeffding's inequality with adaptive stopping time]
  \label{lem:hoeffding_adaptive}
  For \iid random variables $Z_1,\ldots,Z_T$ taking values in $[a,b]$ almost surely, with probability at least $1-\delta$,
  \begin{align}
    \label{eq:hoeffding_adaptive1}
    \frac{1}{T'}\sum_{i=1}^{T'} Z_i - \En\brk*{Z}\leq (b-a)\sqrt{\frac{\log(T/\delta)}{2T'}}\qquad\forall{}1\leq{}T'\leq{}T.
  \end{align}
  As a consequence, for any random variable $\tau\in\brk{T}$ with the property that for all $t\in\brk{T}$, $\indic{\tau\leq{}t}$ is a measurable function of $Z_1,\ldots,Z_{t-1}$ ($\tau$ is called a \emph{stopping time}), we have that with probability at least $1-\delta$,
    \begin{align}
    \label{eq:hoeffding_adaptive2}
      \frac{1}{\tau}\sum_{i=1}^{\tau} Z_i - \En\brk*{Z}\leq (b-a)\sqrt{\frac{\log(T/\delta)}{2\tau}}.
  \end{align}
\end{lem}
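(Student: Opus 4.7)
The plan is to prove Part 1 first (a uniform-in-$T'$ bound) and then derive Part 2 as an essentially trivial consequence. For Part 1, I would apply the standard Hoeffding inequality (the two-sided version already stated in the paper as \pref{lem:hoeffding}) to each fixed index $T' \in \{1,\ldots,T\}$. For a single $T'$, that result gives, with probability at least $1 - \delta'$,
\[
\frac{1}{T'}\sum_{i=1}^{T'} Z_i - \En\brk*{Z} \leq (b-a)\sqrt{\frac{\log(1/\delta')}{2T'}}.
\]
I would then set $\delta' = \delta/T$ and take a union bound over the $T$ possible values of $T'$, which replaces $\log(1/\delta')$ by $\log(T/\delta)$ and yields \cref{eq:hoeffding_adaptive1} holding simultaneously for all $T' \in [T]$ with probability at least $1 - \delta$.

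For Part 2, once the uniform bound from Part 1 is in hand, the claim is immediate: on the event where \cref{eq:hoeffding_adaptive1} holds for every $T' \in [T]$, substituting $T' = \tau(\omega)$ pointwise (which is a valid choice because $\tau$ takes values in $[T]$) gives \cref{eq:hoeffding_adaptive2}. Notably, this reduction does not even use the fact that $\tau$ is a stopping time: the uniform-in-$T'$ bound is strong enough to handle any random index valued in $[T]$, adaptive or not. The measurability assumption on $\tau$ is essentially a hypothesis that ensures $\tau$ is a well-defined random variable in the probabilistic model under consideration.

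There is no genuine obstacle here; the only two mildly non-trivial points are (i) remembering that Hoeffding's inequality provides sub-Gaussian tails whose exponent scales linearly in $T'$, so the $\sqrt{\log(T/\delta)/(2T')}$ scaling is preserved uniformly after the union bound, and (ii) recognizing that turning a uniform bound into a random-time bound is a purely deterministic step, making the ``stopping time'' aspect of the statement somewhat decorative. In a more refined version one could replace the crude union bound by a peeling/maximal-inequality argument (e.g., via the Azuma--Hoeffding inequality for the martingale $S_{T'} - T'\En[Z]$ together with a doubling device over dyadic blocks of $T'$), which would shave the $\log T$ factor down to $\log\log T$, but this is not needed for the stated result.
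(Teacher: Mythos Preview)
Your proposal is correct and matches the paper's proof essentially line-for-line: apply Hoeffding for each fixed $T'$, union bound with $\delta' = \delta/T$ to get \cref{eq:hoeffding_adaptive1}, then observe that the uniform bound immediately yields \cref{eq:hoeffding_adaptive2} by substituting $T' = \tau$ (the paper writes this as bounding the $\tau$-term by a max over $T' \in [T]$, which is the same pointwise substitution). Your remark that the stopping-time property is not actually used is also accurate and consistent with the paper's argument.
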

\begin{proof}[\pfref{lem:hoeffding_adaptive}]
  \pref{lem:hoeffding} states that for any fixed $T'\in\brk{T}$, with probability at least $1-\delta$,
  \[
    \frac{1}{T'}\sum_{i=1}^{T'} Z_i - \En\brk*{Z}\leq (b-a)\sqrt{\frac{\log(T/\delta)}{2T'}}.
  \]
  \pref{eq:hoeffding_adaptive1} follows by applying this result with $\delta'=\delta/T$ and taking a union bound over all $T$ choices for $T'\in\brk{T}$.
  For \pref{eq:hoeffding_adaptive2}, we observe that
  \begin{align*}
    \frac{1}{\tau}\sum_{i=1}^{\tau}\prn*{ Z_i - \En\brk*{Z}} -  (b-a)\sqrt{\frac{\log(T/\delta)}{2\tau}}
    \leq{} \max_{T'\in\brk{T}}\crl*{\frac{1}{T'}\sum_{i=1}^{T'} \prn*{Z_i - \En\brk*{Z}} -  (b-a)\sqrt{\frac{\log(T/\delta)}{2T'}}
    }.
  \end{align*}
  The result now follows from \pref{eq:hoeffding_adaptive1}.  
\end{proof}

\subsubsection{Tail Bounds for Martingales}

  \begin{lem}
    \label{lem:martingale_chernoff}
    For any sequence of real-valued random variables $\prn{X_t}_{t\leq{}T}$ adapted to a
    filtration $\prn{\filt_t}_{t\leq{}T}$, it holds that with probability at least
    $1-\delta$, for all $T'\leq{}T$,
    \begin{equation}
      \label{eq:martingale_chernoff}
      \sum_{t=1}^{T'}X_t \leq
      \sum_{t=1}^{T'}\log\prn*{\En_{t-1}\brk*{e^{X_t}}} + \log(\delta^{-1}).
    \end{equation}
  \end{lem}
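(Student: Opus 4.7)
The plan is to use the standard exponential supermartingale trick together with Ville's maximal inequality. Define the process
\[
  M_{T'} \;=\; \exp\!\prn*{\sum_{t=1}^{T'} X_t \;-\; \sum_{t=1}^{T'}\log\En_{t-1}\brk*{e^{X_t}}},
\]
with the convention $M_0 = 1$. The key observation is that $(M_{T'})_{T'\leq T}$ is a nonnegative martingale with respect to $(\filt_{T'})$: conditioning on $\filt_{T'-1}$, the factor $\sum_{t=1}^{T'-1} X_t - \sum_{t=1}^{T'-1}\log\En_{t-1}[e^{X_t}]$ is $\filt_{T'-1}$-measurable, and likewise the log-MGF term at step $T'$ is $\filt_{T'-1}$-measurable, so
\[
  \En_{T'-1}\brk*{M_{T'}} \;=\; M_{T'-1}\cdot \frac{\En_{T'-1}\brk{e^{X_{T'}}}}{\En_{T'-1}\brk{e^{X_{T'}}}} \;=\; M_{T'-1}.
\]

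Next, I would apply Ville's inequality (the maximal inequality for nonnegative supermartingales, proven via optional stopping): for any $\lambda>0$,
\[
  \bbP\!\prn*{\sup_{T'\leq T} M_{T'} \geq \lambda} \;\leq\; \frac{\En\brk{M_0}}{\lambda} \;=\; \frac{1}{\lambda}.
\]
Setting $\lambda = 1/\delta$ yields that with probability at least $1-\delta$, $M_{T'}\leq 1/\delta$ simultaneously for every $T'\leq T$. Taking logarithms of this event and rearranging gives
\[
  \sum_{t=1}^{T'} X_t \;\leq\; \sum_{t=1}^{T'}\log\En_{t-1}\brk{e^{X_t}} + \log(\delta^{-1})\qquad\forall T'\leq T,
\]
which is exactly \pref{eq:martingale_chernoff}.

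There is no real obstacle here — the main subtlety is simply verifying that one has a genuine martingale (not merely a supermartingale) by carefully checking the conditional expectation, and invoking Ville's inequality in its maximal form to obtain the ``for all $T'\leq T$'' uniformity, rather than a pointwise Markov-style bound that would only control a fixed $T'$. If one prefers to avoid citing Ville directly, an equivalent route is to apply Markov's inequality to $\sup_{T'\leq T} M_{T'}$ after invoking Doob's maximal inequality, or to argue by optional stopping at the first time $M_{T'}$ exceeds $1/\delta$.
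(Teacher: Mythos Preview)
Your proposal is correct and follows essentially the same approach as the paper: define the exponential process, verify it is a nonnegative (super)martingale with initial value $1$, apply Ville's inequality with $\lambda=1/\delta$, and take logarithms. The paper's proof is identical in structure, differing only in notation.
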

\begin{proof}[\pfref{lem:martingale_chernoff}]
  We claim that the sequence
  \[
    Z_{\tau} \ldef \exp\prn*{\sum_{t=1}^{\tau}X_t- \log\prn*{\En_{t-1}\brk*{e^{X_t}}}}
  \]
 is a nonnegative supermartingale with
  respect to the filtration $(\filt_{\tau})_{\tau\leq{}T}$. Indeed, for any choice of $\tau$, we have
  \begin{align*}
    \En_{\tau-1}\brk*{Z_{\tau}} &=    \En_{\tau-1}\brk*{\exp\prn*{\sum_{t=1}^{\tau}X_t-
    \log\prn*{\En_{t-1}\brk*{e^{X_t}}}}}\\
    &= \exp\prn*{\sum_{t=1}^{\tau-1}X_t-
      \log\prn*{\En_{t-1}\brk*{e^{X_t}}}}\cdot\En_{\tau-1}\brk*{\exp\prn*{X_\tau-
      \log\prn*{\En_{\tau-1}\brk*{e^{X_\tau}}}}}\\
&= \exp\prn*{\sum_{t=1}^{\tau-1}X_t-
                                                     \log\prn*{\En_{t-1}\brk*{e^{X_t}}}}\\
    &= Z_{\tau}.
  \end{align*}
Since $Z_0=1$, Ville's inequality (e.g., \citet{howard2020time}) implies that for
all $\lambda>0$,
\[
\bbP_0(\exists{}\tau : Z_{\tau}>\lambda)\leq{}\frac{1}{\lambda}.
\]
The result now follows by the Chernoff method.
\end{proof}  

The next result is a martingale counterpart to Bernstein's
inequality (\cref{lem:bernstein}).
\begin{lem}[Freedman's inequality (Bernstein for martingales)]
  \label{lem:freedman}
  Let $(X_t)_{t\leq{T}}$ be a real-valued martingale difference
  sequence adapted to a filtration $\prn{\filt_t}_{t\leq{}T}$. If
  $\abs*{X_t}\leq{}R$ almost surely, then for any $\eta\in(0,1/R)$, with probability at least $1-\delta$, for all $T'\leq{}T$,
    \[
      \sum_{t=1}^{T'} X_t \leq{} \eta\sum_{t=1}^{T'} \En_{t-1}\brk*{X_t^{2}} + \frac{\log(\delta^{-1})}{\eta}.
    \]
  \end{lem}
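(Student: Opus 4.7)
The plan is to derive this from the martingale Chernoff bound (\pref{lem:martingale_chernoff}) just stated above, which already packages the maximal/uniform-in-$T'$ part of the argument. All that remains is to supply a sharp enough bound on the one-step log-MGF $\log\En_{t-1}[e^{\eta X_t}]$ of a bounded centered random variable, and then rescale.

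First I would apply \pref{lem:martingale_chernoff} to the sequence $(\eta X_t)_{t\leq T}$, which yields, with probability at least $1-\delta$, for all $T'\leq T$,
\[
\eta\sum_{t=1}^{T'}X_t \;\leq\; \sum_{t=1}^{T'}\log\!\prn*{\En_{t-1}\brk*{e^{\eta X_t}}} + \log(\delta^{-1}).
\]
So it suffices to show the per-step bound $\log\En_{t-1}\brk*{e^{\eta X_t}}\leq \eta^2\En_{t-1}\brk*{X_t^2}$ whenever $\eta\in(0,1/R)$; dividing the display by $\eta$ then produces exactly the claimed inequality.

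For the per-step bound, the key elementary fact is that $e^x\leq 1+x+x^2$ for all $x\leq 1$ (checked at $x=1$ where $e\leq 3$, and easier for smaller $x$). Since $\abs{X_t}\leq R$ and $\eta R\leq 1$, we have $\eta X_t\leq 1$ almost surely, so applying this inequality pointwise and taking conditional expectation gives
\[
\En_{t-1}\brk*{e^{\eta X_t}} \;\leq\; 1 + \eta\,\En_{t-1}\brk*{X_t} + \eta^2\,\En_{t-1}\brk*{X_t^2} \;=\; 1 + \eta^2\,\En_{t-1}\brk*{X_t^2},
\]
where the last equality uses the martingale-difference property $\En_{t-1}[X_t]=0$. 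Finally, the inequality $1+y\leq e^y$ and taking logarithms yields $\log\En_{t-1}[e^{\eta X_t}]\leq \eta^2\En_{t-1}[X_t^2]$, as needed.

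There is no real obstacle here; the only mild subtlety is making sure the MGF inequality $e^x\leq 1+x+x^2$ is applied on the right range, which is exactly why the hypothesis $\eta<1/R$ appears. One could get slightly better constants (matching the standard Bernstein $\tfrac{1}{2(1-\eta R/3)}$ form) by using a tighter elementary bound on $e^x-1-x$, but the version above already produces the stated inequality with the constants as written.
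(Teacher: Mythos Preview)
Your proof is correct and follows exactly the same route as the paper: apply \pref{lem:martingale_chernoff} to $\eta X_t$, then bound the one-step log-MGF using a quadratic upper bound on $e^x$ for $x\leq 1$ together with $1+y\leq e^y$. The paper uses the slightly sharper $e^a\leq 1+a+(e-2)a^2$ in place of your $e^x\leq 1+x+x^2$, but since $e-2<1$ both versions deliver the stated inequality with the same constants.
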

  \begin{proof}[\pfref{lem:freedman}]
  Without loss of generality, let $R=1$, and fix $\eta\in(0,1)$. The result follows by invoking \pref{lem:martingale_chernoff} with $\eta X_t$ in place of $X_t$, and by the facts that $e^a \leq 1+a+(e-2)a^2$ for $a\leq 1$ and $1+b\leq e^b$ for all $b\in\reals$.
  \end{proof}

  The following result is an immediate consequence of \pref{lem:freedman}.
  
\begin{lem}
      \label{lem:multiplicative_freedman}
            Let $(X_t)_{t\leq{T}}$ be a sequence of random
      variables adapted to a filtration $\prn{\filt_{t}}_{t\leq{}T}$. If
  $0\leq{}X_t\leq{}R$ almost surely, then with probability at least
  $1-\delta$,
  \begin{align*}
    &\sum_{t=1}^{T}X_t \leq{}
                        \frac{3}{2}\sum_{t=1}^{T}\En_{t-1}\brk*{X_t} +
                        4R\log(2\delta^{-1}),
    \intertext{and}
      &\sum_{t=1}^{T}\En_{t-1}\brk*{X_t} \leq{} 2\sum_{t=1}^{T}X_t + 8R\log(2\delta^{-1}).
  \end{align*}
\end{lem}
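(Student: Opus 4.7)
The plan is to derive both bounds as a direct consequence of Freedman's inequality (\pref{lem:freedman}) applied to the martingale difference sequence $Y_t \ldef X_t - \En_{t-1}[X_t]$. Since $0 \leq X_t \leq R$ implies $0 \leq \En_{t-1}[X_t] \leq R$, we have $|Y_t| \leq R$ almost surely, so $Y_t$ meets the hypothesis of \pref{lem:freedman}.

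The key observation that yields the ``multiplicative'' flavor of the bound is the variance estimate
\[
\En_{t-1}\brk*{Y_t^2} \leq \En_{t-1}\brk*{X_t^2} \leq R\cdot\En_{t-1}\brk*{X_t},
\]
which uses only non-negativity and the upper bound $X_t \leq R$. Plugging this into Freedman's inequality, for any $\eta \in (0, 1/R)$, with probability at least $1-\delta/2$,
\[
\sum_{t=1}^{T} X_t - \sum_{t=1}^{T}\En_{t-1}\brk*{X_t} \leq \eta R \sum_{t=1}^{T}\En_{t-1}\brk*{X_t} + \frac{\log(2\delta^{-1})}{\eta},
\]
and symmetrically, applied to $-Y_t$, with probability at least $1-\delta/2$,
\[
\sum_{t=1}^{T}\En_{t-1}\brk*{X_t} - \sum_{t=1}^{T} X_t \leq \eta R \sum_{t=1}^{T}\En_{t-1}\brk*{X_t} + \frac{\log(2\delta^{-1})}{\eta}.
\]

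Choosing $\eta = 1/(2R)$ in the first inequality yields
\[
\sum_{t=1}^{T} X_t \leq \frac{3}{2}\sum_{t=1}^{T}\En_{t-1}\brk*{X_t} + 2R\log(2\delta^{-1}),
\]
which implies the first claim. Applying the same choice in the second inequality gives $\frac{1}{2}\sum_t \En_{t-1}[X_t] \leq \sum_t X_t + 2R\log(2\delta^{-1})$, and rearranging yields $\sum_t \En_{t-1}[X_t] \leq 2\sum_t X_t + 4R\log(2\delta^{-1})$, which is stronger than the stated bound. A union bound over the two events completes the argument. There is no real obstacle here; the only ``choice'' is the learning-rate $\eta = 1/(2R)$, which balances the variance term with the logarithmic remainder to produce the constants $3/2$ and $2$, respectively. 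The constants in the stated result are deliberately loose to account for the union bound.
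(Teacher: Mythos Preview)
Your proof is correct and follows exactly the approach the paper indicates: it states only that the result is ``an immediate consequence of \pref{lem:freedman}'' without spelling out the details, and your argument---applying Freedman to the martingale difference $Y_t = X_t - \En_{t-1}[X_t]$, using the variance bound $\En_{t-1}[Y_t^2]\leq R\,\En_{t-1}[X_t]$ from nonnegativity and boundedness, then choosing $\eta=1/(2R)$---is precisely the intended route.
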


\subsection{Information Theory}

\subsubsection{Properties of Hellinger Distance}

\begin{lem}
  \label{lem:hellinger_pair}
    For any distributions $\bbP$ and $\bbQ$ over a pair of random variables $(X,Y)$,
    \[
      \En_{X\sim{}\bbP_X}\brk*{\Dhels{\bbP_{Y\mid{}X}}{\bbQ_{Y\mid{}X}}} \leq{} 
      4\Dhels{\bbP_{X,Y}}{\bbQ_{X,Y}}.      
    \]
  \end{lem}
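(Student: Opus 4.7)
The plan is to work with densities and exploit the factorization $p(x,y) = p(x)p(y\mid x)$ and $q(x,y)=q(x)q(y\mid x)$ with respect to a common dominating measure. In particular, I would write the joint Hellinger integrand using the algebraic identity
\[
\sqrt{p(x)}\bigl(\sqrt{p(y\mid x)} - \sqrt{q(y\mid x)}\bigr) = \bigl(\sqrt{p(x,y)} - \sqrt{q(x,y)}\bigr) + \sqrt{q(y\mid x)}\bigl(\sqrt{q(x)} - \sqrt{p(x)}\bigr),
\]
which is the key decomposition: it expresses the ``conditional'' Hellinger increment (scaled by $\sqrt{p(x)}$) in terms of the joint Hellinger increment plus an error that only depends on the marginals in $x$.

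From here I would square both sides, apply the elementary inequality $(a+b)^2 \leq 2a^2 + 2b^2$, and then integrate in $y$ against the dominating measure. Since $\int q(y\mid x)\,dy = 1$, the second term integrates to $2(\sqrt{q(x)}-\sqrt{p(x)})^2$, yielding
\[
p(x)\int \bigl(\sqrt{p(y\mid x)} - \sqrt{q(y\mid x)}\bigr)^{2}dy \;\leq\; 2\int \bigl(\sqrt{p(x,y)} - \sqrt{q(x,y)}\bigr)^{2}dy \;+\; 2\bigl(\sqrt{p(x)}-\sqrt{q(x)}\bigr)^{2}.
\]
The left-hand side is precisely $p(x)\,\Dhels{\bbP_{Y\mid X=x}}{\bbQ_{Y\mid X=x}}$, so after integrating in $x$ I get
\[
\En_{X\sim \bbP_X}\brk*{\Dhels{\bbP_{Y\mid X}}{\bbQ_{Y\mid X}}} \;\leq\; 2\,\Dhels{\bbP_{X,Y}}{\bbQ_{X,Y}} \;+\; 2\,\Dhels{\bbP_X}{\bbQ_X}.
\]

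To finish, I would invoke the data-processing inequality for $f$-divergences (noting that marginalization over $Y$ is a valid Markov kernel), which gives $\Dhels{\bbP_X}{\bbQ_X} \leq \Dhels{\bbP_{X,Y}}{\bbQ_{X,Y}}$, and combine the two to obtain the stated factor of $4$. There isn't really a hard step here; the only subtlety is picking the right additive decomposition above — a symmetric split like $\sqrt{p(x,y)}-\sqrt{q(x,y)} = \sqrt{p(x)}(\sqrt{p(y\mid x)}-\sqrt{q(y\mid x)}) + \sqrt{q(y\mid x)}(\sqrt{p(x)}-\sqrt{q(x)})$ is what makes the $q(y\mid x)$ term integrate cleanly to $1$, isolating a pure marginal-in-$x$ Hellinger term that can then be absorbed via data processing.
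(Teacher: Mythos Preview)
Your proof is correct and is essentially the same as the paper's: your algebraic identity is precisely the density-level unpacking of the paper's step of routing through the mixed distribution $\bbQ_{Y\mid X}\otimes\bbP_X$ via the triangle inequality for Hellinger distance, and both proofs arrive at the same intermediate bound $2\Dhels{\bbP_{X,Y}}{\bbQ_{X,Y}} + 2\Dhels{\bbP_X}{\bbQ_X}$ before invoking data processing.
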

  \begin{proof}[\pfref{lem:hellinger_pair}]
    Since squared Hellinger distance is an $f$-divergence, we have
    \begin{align*}
      \En_{X\sim{}\bbP_X}\brk*{\Dhels{\bbP_{Y\mid{}X}}{\bbQ_{Y\mid{}X}}}
      & = \Dhels{\bbP_{Y\mid{}X}\tens{}\bbP_X}{\bbQ_{Y\mid{}X}\tens{}\bbP_X}.
        \end{align*}
        Next, using that Hellinger distance satisfies the
        triangle inequality, along with the elementary inequality
        $(a+b)^2\leq{}2(a^2+b^2)$, we have,
        \begin{align*}
\En_{X\sim{}\bbP_X}\brk*{\Dhels{\bbP_{Y\mid{}X}}{\bbQ_{Y\mid{}X}}}      & \leq{} 2\Dhels{\bbP_{Y\mid{}X}\tens{}\bbP_X}{\bbQ_{Y\mid{}X}\tens{}\bbQ_X}
        + 2\Dhels{\bbQ_{Y\mid{}X}\tens{}\bbP_X}{\bbQ_{Y\mid{}X}\tens{}\bbQ_X}\\
      & = 2\Dhels{\bbP_{X,Y}}{\bbQ_{X,Y}}
        + 2\Dhels{\bbP_X}{\bbQ_X}\\
      & \leq{} 4\Dhels{\bbP_{X,Y}}{\bbQ_{X,Y}},
    \end{align*}
    where the final line follows from the data processing inequality.
  \end{proof}

\begin{lem}[Subadditivity for squared Hellinger distance]
  \label{lem:hellinger_chain_rule}
  Let $(\cX_1,\filt_1),\ldots,(\cX_n,\filt_n)$ be a sequence of
  measurable spaces, and let $\cX\ind{i}=\prod_{i=t}^{i}\cX_t$ and
  $\filt\ind{i}=\bigotimes_{t=1}^{i}\filt_t$. For each $i$, let
  $\bbP\ind{i}(\cdot\mid{}\cdot)$ and $\bbQ\ind{i}(\cdot\mid{}\cdot)$ be probability kernels from
  $(\cX\ind{i-1},\filt\ind{i-1})$ to $(\cX_i,\filt_i)$. Let $\bbP$ and
  $\bbQ$ be
  the laws of $X_1,\ldots,X_n$ under
  $X_i\sim{}\bbP\ind{i}(\cdot\mid{}X_{1:i-1})$ and
  $X_i\sim{}\bbQ\ind{i}(\cdot\mid{}X_{1:i-1})$ respectively. Then it
  holds that
\begin{align}
  \label{eq:hellinger_chain_rule}
  \Dhels{\bbP}{\bbQ}
  &\leq{}
10^2\log(n)\cdot\En_{\bbP}\brk*{\sum_{i=1}^{n}\Dhels{\bbP\ind{i}(\cdot\mid{}X_{1:i-1})}{\bbQ\ind{i}(\cdot\mid{}X_{1:i-1})}}.
\end{align}
\end{lem}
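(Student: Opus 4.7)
The main difficulty is that squared Hellinger distance lacks an exact chain rule, while the only general Hellinger-to-KL comparison, $\Dhels{\cdot}{\cdot}\leq\Dkl{\cdot}{\cdot}$ from \cref{lem:divergence_inequality}, is too lossy to combine naïvely with the exact KL chain rule (\cref{lem:kl_chain_rule}): per-step KL divergences can be infinite even when per-step squared Hellinger divergences are small. The plan is to \emph{soften} the second argument so that per-step density ratios are bounded by a tunable constant $V$, paying only a $\log V$ factor via \cref{lem:kl_hellinger} while still enabling use of the exact KL chain rule. Choosing $V$ polynomially in $n$ will produce the stated $\log(n)$ factor.

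Fix a parameter $V\geq e$ to be tuned, and define softened conditional kernels
\[
\wt{\bbQ}\ind{i}(\cdot\mid{}x_{1:i-1})\ldef (1-V^{-1})\,\bbQ\ind{i}(\cdot\mid{}x_{1:i-1})+V^{-1}\,\bbP\ind{i}(\cdot\mid{}x_{1:i-1}),
\]
and let $\wt{\bbQ}$ denote the joint law assembled from these kernels. Two elementary facts drive the argument. First, the per-step density ratio $p_i/\wt{q}_i$ is bounded by $V$ pointwise, so the same bound holds for any measurable sub-event. Second, by joint convexity of $\Dhelshort$ in its second argument, $\Dhels{\bbP\ind{i}(\cdot\mid{}x_{1:i-1})}{\wt{\bbQ}\ind{i}(\cdot\mid{}x_{1:i-1})}\leq\Dhels{\bbP\ind{i}(\cdot\mid{}x_{1:i-1})}{\bbQ\ind{i}(\cdot\mid{}x_{1:i-1})}$.

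Writing $\varepsilon\ldef\En_{\bbP}\brk*{\sum_i \Dhels{\bbP\ind{i}(\cdot\mid{}X_{1:i-1})}{\bbQ\ind{i}(\cdot\mid{}X_{1:i-1})}}$, combining these two facts with \cref{lem:divergence_inequality,lem:kl_chain_rule,lem:kl_hellinger} applied at each conditioning yields
\begin{align*}
  \Dhels{\bbP}{\wt{\bbQ}}\leq\Dkl{\bbP}{\wt{\bbQ}}
  =\En_{\bbP}\brk*{\sum_{i=1}^n\Dkl{\bbP\ind{i}(\cdot\mid{}X_{1:i-1})}{\wt{\bbQ}\ind{i}(\cdot\mid{}X_{1:i-1})}}
  \leq(2+\log V)\,\varepsilon.
\end{align*}
The triangle inequality for $\Dhel$ combined with $(a+b)^2\leq 2a^2+2b^2$ then gives $\Dhels{\bbP}{\bbQ}\leq 2(2+\log V)\varepsilon+2\Dhels{\bbQ}{\wt{\bbQ}}$, so it remains to bound $\Dhels{\bbQ}{\wt{\bbQ}}$. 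Since $q_i/\wt{q}_i\leq(1-V^{-1})^{-1}$ pointwise, a direct calculation using $-\log(1-x)\leq 2x$ for $x\leq 1/2$ shows that $\Dkl{\bbQ\ind{i}(\cdot\mid{}x_{1:i-1})}{\wt{\bbQ}\ind{i}(\cdot\mid{}x_{1:i-1})}\leq 2/V$ pointwise, and the KL chain rule together with \cref{lem:divergence_inequality} yield $\Dhels{\bbQ}{\wt{\bbQ}}\leq 2n/V$.

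Setting $V=\max(e,\,2n/\varepsilon)$ produces the bound $\Dhels{\bbP}{\bbQ}\leq (6+2\log(2n/\varepsilon))\varepsilon$, which gives the target $O(\log n)\,\varepsilon$ in the regime $\varepsilon\approxgeq 1/n$; the complementary regime $\varepsilon\ll 1/n$ is handled by combining with the trivial bound $\Dhels{\bbP}{\bbQ}\leq 2$ and adjusting the numerical constant. The main obstacle I anticipate is precisely this last step: the coarse bound $\Dhels{\bbQ}{\wt{\bbQ}}\leq 2n/V$ ignores the smallness of $\varepsilon$ under $\bbP$, so producing a clean $\log(n)\cdot\varepsilon$ bound without a hidden $\log(1/\varepsilon)\,\varepsilon$ term (which arises from the choice $V=2n/\varepsilon$) would require either (a) controlling the Radon--Nikodym derivative $q/p$ on $X_{1:i-1}$ so as to transfer smallness of $\En_\bbP[\sum_i h_i^2]$ to $\En_\bbQ[\sum_i h_i^2]$, or (b) truncating the joint likelihood ratio rather than the per-step kernels.
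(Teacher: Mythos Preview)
The paper does not prove this lemma; it is stated without proof in the appendix of technical tools, so there is no paper argument to compare against.

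Your softening-plus-KL route is the natural one, and the chain of inequalities leading to $\Dhels{\bbP}{\bbQ}\leq c\,(1+\log(n/\varepsilon))\,\varepsilon$ is correct. The gap is exactly where you flag it: the claim that the regime $\varepsilon\ll 1/n$ is handled by the trivial bound $\Dhels{\bbP}{\bbQ}\leq 2$ is false. When $\varepsilon$ is tiny (e.g., $\varepsilon=n^{-100}$), the target $10^2\log(n)\,\varepsilon$ is also tiny, and $2$ is nowhere close. So as written the argument proves $O(\log(n/\varepsilon))\,\varepsilon$, not $O(\log n)\,\varepsilon$, and the extra $\varepsilon\log(1/\varepsilon)$ term is a genuine gap rather than a constant to be absorbed.

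Your diagnosis of the source of the loss is right: the bound $\Dhels{\bbQ}{\wt{\bbQ}}\leq 2n/V$ is what forces $V\propto n/\varepsilon$. Note that by joint convexity one actually has $\Dhels{\bbQ\ind{i}(\cdot\mid x)}{\wt{\bbQ}\ind{i}(\cdot\mid x)}\leq V^{-1}\,\Dhels{\bbP\ind{i}(\cdot\mid x)}{\bbQ\ind{i}(\cdot\mid x)}$ per step, so the per-step contribution is already $O(V^{-1}h_i^2)$; the obstruction is precisely your point (a), that the resulting expectation sits under $\bbQ$ rather than $\bbP$. Of your two proposed fixes, (b) is the one that closes the gap: truncating on the event where the \emph{joint} likelihood ratio (or a running martingale version of it) exceeds a fixed polynomial in $n$ lets you take $V=\mathrm{poly}(n)$ independent of $\varepsilon$, and the contribution of the truncated event can be charged back to $\varepsilon$ via a change-of-measure argument in the spirit of \cref{lem:mp_min}. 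This is the approach used in the original reference (Foster--Kakade--Qian--Rakhlin, 2021) from which the lemma is drawn.
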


\subsubsection{Change-of-Measure Inequalities}

\begin{lem}[Pinsker for \subg random variables]
  \label{lem:pinsker_subgaussian}
  Suppose that $X\sim\bbP$ and $Y\sim\bbQ$ are both $\sigma^2$-\subgaussian. Then
  \[
    \abs{\En_{\bbP}\brk*{X}-\En_{\bbQ}\brk*{Y}}
    \leq{}\sqrt{2\sigma^2\cdot\Dkl{\bbP}{\bbQ}}.
  \]
\end{lem}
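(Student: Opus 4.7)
The plan is to derive the inequality from the Donsker--Varadhan variational representation of KL divergence, which states that for any measurable $f$ for which the relevant expectations exist,
\[
\Dkl{\bbP}{\bbQ} \geq \En_{\bbP}\brk*{f(X)} - \log \En_{\bbQ}\brk*{e^{f(Y)}}.
\]
I would apply this with the linear test function $f(x) = \eta x$ for a scalar $\eta \in \bbR$ to be optimized, which reduces the problem to bounding the log-moment generating function of $Y$ under $\bbQ$.

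First I would use the sub-Gaussianity of $Y \sim \bbQ$ (per \pref{def:subGaussian}) to rewrite
\[
\log \En_{\bbQ}\brk*{e^{\eta Y}} \;=\; \eta \En_{\bbQ}\brk*{Y} + \log \En_{\bbQ}\brk*{e^{\eta (Y - \En_{\bbQ}\brk*{Y})}} \;\leq\; \eta \En_{\bbQ}\brk*{Y} + \tfrac{1}{2}\sigma^2 \eta^2.
\]
Substituting into the Donsker--Varadhan bound and rearranging yields
\[
\eta \prn*{\En_{\bbP}\brk*{X} - \En_{\bbQ}\brk*{Y}} \;\leq\; \Dkl{\bbP}{\bbQ} + \tfrac{1}{2}\sigma^2 \eta^2
\]
for every $\eta \in \bbR$.

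Next I would optimize: for $\eta > 0$ this controls $\En_{\bbP}\brk*{X} - \En_{\bbQ}\brk*{Y}$, and the sub-Gaussian MGF bound applies equally for $\eta < 0$ (since \pref{def:subGaussian} holds for all real $\eta$), which handles $\En_{\bbQ}\brk*{Y} - \En_{\bbP}\brk*{X}$ after a sign flip. Setting $\eta = \pm (\En_{\bbP}\brk*{X} - \En_{\bbQ}\brk*{Y})/\sigma^2$ in the appropriate case gives
\[
\tfrac{1}{2\sigma^2}\prn*{\En_{\bbP}\brk*{X} - \En_{\bbQ}\brk*{Y}}^{2} \;\leq\; \Dkl{\bbP}{\bbQ},
\]
and taking square roots yields the claim. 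The main (and only) subtlety will be ensuring the variational representation is justified; the case $\Dkl{\bbP}{\bbQ} = \infty$ is trivial, and when $\bbP \ll \bbQ$ the Donsker--Varadhan identity is standard, so no real obstacle arises. Note that only sub-Gaussianity of $Y$ (not of $X$) is actually used in this argument; the hypothesis on $X$ is redundant for the one-sided bound but makes the statement symmetric and allows the two-sided optimization above.
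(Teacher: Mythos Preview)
Your argument via the Donsker--Varadhan variational formula is correct and complete: choosing $f(x)=\eta x$, invoking the sub-Gaussian MGF bound for $Y$ under $\bbQ$, and optimizing over $\eta\in\bbR$ yields exactly $\tfrac{1}{2\sigma^2}(\En_{\bbP}[X]-\En_{\bbQ}[Y])^2\leq\Dkl{\bbP}{\bbQ}$. Your observation that only the sub-Gaussianity of $Y$ under $\bbQ$ is really used (beyond ensuring $\En_{\bbP}[X]$ is finite) is also accurate. The paper states this lemma in the appendix without proof, so there is no alternative argument to compare against; your proof would serve perfectly well as the missing justification.
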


    \begin{lem}[Multiplicative Pinsker-type inequality for Hellinger distance]{lemma}{mpmin}
  \label{lem:mp_min}
  Let $\bbP$ and $\bbQ$ be probability measures on $(\cX,\filt)$. For all
  $h:\cX\to\bbR$ with $0\leq{}h(X)\leq{}R$ almost surely under $\bbP$
  and $\bbQ$, we have
  \begin{align}
    \label{eq:mp_min_sqrt}
    \abs*{\En_{\bbP}\brk*{h(X)} - \En_{\bbQ}\brk*{h(X)}} \leq \sqrt{2R(\En_{\bbP}\brk*{h(X)} + \En_{\bbQ}\brk*{h(X)})\cdot\Dhels{\bbP}{\bbQ}}.
  \end{align}
In particular, 
  \begin{align}
    \label{eq:mp_min}
\En_{\bbP}\brk*{h(X)}
    &\leq{} 3\En_{\bbQ}\brk*{h(X)} + 4R\cdot\Dhels{\bbP}{\bbQ}.
  \end{align}
\end{lem}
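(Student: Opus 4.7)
The plan is to establish \eqref{eq:mp_min_sqrt} by a single application of Cauchy–Schwarz tailored to the Hellinger geometry, and then deduce \eqref{eq:mp_min} by a routine AM--GM step applied to the square-root bound.

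For \eqref{eq:mp_min_sqrt}, let $\nu$ be a common dominating measure for $\bbP,\bbQ$ with densities $p,q$. The central algebraic identity is the factorization
\[
 p - q \;=\; (\sqrt{p}-\sqrt{q})(\sqrt{p}+\sqrt{q}),
\]
which is the $f$-divergence analogue of the identity used in Pinsker's inequality but tuned to Hellinger distance. Writing
\[
\En_{\bbP}[h(X)] - \En_{\bbQ}[h(X)] \;=\; \int h(x)\prn*{\sqrt{p(x)}-\sqrt{q(x)}}\prn*{\sqrt{p(x)}+\sqrt{q(x)}}\,d\nu(x),
\]
Cauchy--Schwarz bounds this in absolute value by
\[
\sqrt{\int \prn*{\sqrt{p}-\sqrt{q}}^{2} d\nu}\cdot\sqrt{\int h^{2}\prn*{\sqrt{p}+\sqrt{q}}^{2} d\nu}
\;=\; \sqrt{\Dhels{\bbP}{\bbQ}}\cdot\sqrt{\int h^{2}\prn*{\sqrt{p}+\sqrt{q}}^{2} d\nu}.
\]
For the second factor I would use $(\sqrt{p}+\sqrt{q})^{2}\leq 2(p+q)$, so the integral is at most $2(\En_{\bbP}[h(X)^{2}] + \En_{\bbQ}[h(X)^{2}])$, and then the boundedness hypothesis $0\le h\le R$ yields $h^{2}\le R\cdot h$, giving $2R(\En_{\bbP}[h(X)]+\En_{\bbQ}[h(X)])$. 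Combining gives \eqref{eq:mp_min_sqrt}.

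For \eqref{eq:mp_min}, write $A=\En_{\bbP}[h(X)]$, $B=\En_{\bbQ}[h(X)]$, $D=\Dhels{\bbP}{\bbQ}$. From \eqref{eq:mp_min_sqrt}, $A\le B + \sqrt{2R(A+B)D}$. Applying the weighted AM--GM inequality $\sqrt{xy}\le \tfrac{x}{2\alpha}+\tfrac{\alpha y}{2}$ with $x=A+B$, $y=2RD$ and $\alpha=1$ yields $\sqrt{2R(A+B)D}\le \tfrac{A+B}{2}+RD$, so $A\le B + \tfrac{A+B}{2}+RD$, which rearranges to $A\le 3B + 2RD\le 3B+4RD$.

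There is no real obstacle here: the key conceptual step is spotting that Hellinger distance is exactly tuned to the factorization $p-q=(\sqrt p-\sqrt q)(\sqrt p+\sqrt q)$, after which everything is mechanical. The only minor care needed is justifying that working with densities w.r.t. a dominating measure is legitimate (standard), and choosing the AM--GM weight to recover the stated constants.
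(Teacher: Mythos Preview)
Your proof is correct, and it takes a different route from the paper's. The paper first proves the bound for indicator functions: for a measurable event $A$, setting $p=\bbP(A)$ and $q=\bbQ(A)$, it uses the elementary inequality $(p-q)^2/(2(p+q))\le(\sqrt p-\sqrt q)^2$ together with the data-processing inequality for Hellinger distance to get $|p-q|\le\sqrt{2(p+q)\Dhels{\bbP}{\bbQ}}$. It then extends to general bounded $h$ via the layer-cake representation $\En[h(X)]=\int_0^R\bbP(h(X)>t)\,dt$ and Jensen's inequality applied to the concave square root. Your argument bypasses this two-step reduction by working directly with densities and a single Cauchy--Schwarz, using the factorization $p-q=(\sqrt p-\sqrt q)(\sqrt p+\sqrt q)$ together with $(\sqrt p+\sqrt q)^2\le 2(p+q)$ and $h^2\le Rh$. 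Your route is more streamlined; the paper's route has the mild conceptual advantage that it avoids densities and only invokes data processing, so it would extend verbatim to settings where one prefers not to fix a dominating measure. For \eqref{eq:mp_min}, both arguments apply AM--GM to \eqref{eq:mp_min_sqrt}; your choice of weight in fact yields the sharper constant $A\le 3B+2RD$, which you then relax to match the stated $4RD$.
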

\begin{proof}[\pfref{lem:mp_min}]
Let a measurable event $A$ be fixed. Let $p = \bbP(A)$ and
$q=\bbQ\prn{A}$. Then we have
\[
\frac{(p-q)^{2}}{2(p+q)} \leq{}(\sqrt{p}-\sqrt{q})^{2}\leq{}\Dhels{(p,1-p)}{(q,1-q)}\leq{}\Dhels{\bbP}{\bbQ},
\]
where the third inequality is the data-processing inequality. It follows that
\[
\abs*{p-q}\leq{}\sqrt{2(p+q) \Dhels{\bbP}{\bbQ}},
\]
To deduce the final result for $R=1$, we observe that
$\En_{\bbP}\brk*{h(X)}=\int_{0}^{1}\bbP\prn{h(X)>t}dt$ and likewise
for $\En_{\bbQ}\brk*{h(X)}$, then apply Jensen's inequality. The result for
general $R$ follows by rescaling.

The inequality in \pref{eq:mp_min} follows by applying the AM-GM
inequality to \pref{eq:mp_min_sqrt} and rearranging.

\end{proof}

\subsection{Minimax Theorem}
\label{sec:minimax_appendix}
  \begin{lem}[Sion's Minimax Theorem \citep{sion1958minimax}]
          \label{lem:sion}
          Let $\cX$ and $\cY$ be convex sets in linear topological
          spaces, and assume $\cX$ is compact. Let
          $f:\cX\times\cY\to\bbR$ be such that (i) $f(x, \cdot)$ is
          concave and upper semicontinuous over $\cY$ for all
          $x\in\cX$ and (ii) $f(\cdot,y)$ is convex and lower
          semicontinuous over $\cX$ for all $y\in\cY$. Then
          \begin{equation}
            \label{eq:minimax_theorem}
            \inf_{x\in\cX}\sup_{y\in\cY}f(x,y) = \sup_{y\in\cY}\inf_{x\in\cX}f(x,y).
          \end{equation}
        \end{lem}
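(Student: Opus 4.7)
My plan is to prove Sion's theorem via the standard two-step strategy: the trivial direction plus a reduction to a finite intersection argument whose crux is the two-point case.

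The trivial direction is $\sup_{y\in\cY}\inf_{x\in\cX}f(x,y)\leq \inf_{x\in\cX}\sup_{y\in\cY}f(x,y)$, which holds for any function whatsoever. For the reverse, I would argue by contradiction: fix any scalar $\alpha$ with $\sup_{y}\inf_{x}f(x,y)<\alpha$ and show that $\inf_{x}\sup_{y}f(x,y)\leq\alpha$, which gives the desired equality by taking $\alpha$ to the supremum. To do this, for each $y\in\cY$ define the sublevel set
\[
C_y \;\ldef\; \{x\in\cX : f(x,y)\leq \alpha\}.
\]
Each $C_y$ is convex by the convexity of $f(\cdot,y)$ and closed by the lower semicontinuity of $f(\cdot,y)$. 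If I can produce a point $x^\star\in\bigcap_{y\in\cY}C_y$, then $\sup_{y}f(x^\star,y)\leq \alpha$ and we are done. Since $\cX$ is compact and each $C_y$ is closed, by the finite intersection property it suffices to prove that every finite subfamily $\{C_{y_1},\ldots,C_{y_n}\}$ has nonempty intersection.

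The heart of the argument is the two-point case: for any $y_0,y_1\in\cY$, show that $C_{y_0}\cap C_{y_1}\neq\emptyset$. This is the step I expect to be the main obstacle, because it is the only place where the upper semicontinuity and concavity of $f(x,\cdot)$ and the convexity of $\cY$ are used nontrivially. The classical approach is to assume toward contradiction that the two sets are disjoint and consider, for each $t\in[0,1]$, the point $y_t=(1-t)y_0+ty_1$ together with the associated sublevel set $C_{y_t}$. By concavity of $f(x,\cdot)$, one has $f(x,y_t)\geq (1-t)f(x,y_0)+tf(x,y_1)$, which lets one relate $C_{y_t}$ to $C_{y_0}$ and $C_{y_1}$. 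Using compactness of $\cX$ and the semicontinuity hypotheses, one shows that the sets $\{t:C_{y_t}\cap C_{y_0}\neq\emptyset\}$ and $\{t:C_{y_t}\cap C_{y_1}\neq\emptyset\}$ are both nonempty, closed, and together cover $[0,1]$; connectedness of $[0,1]$ then forces them to overlap, producing a point in $C_{y_0}\cap C_{y_1}$.

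With the two-point case in hand, I would conclude by induction on $n$. For the inductive step with $C_{y_1},\ldots,C_{y_{n+1}}$, apply the induction hypothesis to the slightly perturbed collection $\{C_{y_i}\cap C_{y_{n+1}}\}_{i\leq n}$, which requires replaying the same semicontinuity/compactness scheme on the restricted (still convex, still compact) set. Combining this with the finite intersection property yields a common point $x^\star\in\bigcap_y C_y$, completing the proof. The only delicate technical point throughout is managing the gap between upper/lower semicontinuity and genuine continuity, which is precisely what the closed-sublevel-set formulation is designed to bypass.
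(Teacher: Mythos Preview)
The paper does not prove this lemma; it is stated with a citation to \citep{sion1958minimax} and used as a black box (e.g., in \pref{exe:dec_unconstrained} and the discussion around \pref{eq:minimax_swap_dec}). There is no in-paper argument to compare against.

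On its own merits, your outline follows the classical route (essentially Komiya's elementary proof), and the overall architecture is right. There is, however, one genuine gap in the two-point step. With a single threshold $\alpha$, the closedness of $\{t:C_{y_t}\cap C_{y_0}\neq\emptyset\}$ does \emph{not} follow from the hypotheses: given $t_n\to t$ and $x_n\in C_{y_{t_n}}\cap C_{y_0}$ with a cluster point $x$, you need $f(x,y_t)\leq\alpha$, but you only know $f(x_{n},y_{t_n})\leq\alpha$; lower semicontinuity in the first argument and upper semicontinuity in the second do not combine to give joint lower semicontinuity, so the limit does not go through. The standard fix is to work with two levels $\alpha<\beta$ strictly between $\sup_y\inf_x f$ and $\inf_x\sup_y f$: take the disjointness hypothesis at level $\beta$ (i.e., $C_{y_0}^\beta\cap C_{y_1}^\beta=\emptyset$) and partition $[0,1]$ according to whether $C_{y_t}^\alpha\subseteq C_{y_0}^\beta$ or $C_{y_t}^\alpha\subseteq C_{y_1}^\beta$. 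The strict gap $\alpha<\beta$ is exactly what lets upper semicontinuity in $y$ turn $f(x,y_t)\leq\alpha$ into $f(x,y_{t_n})<\beta$ for large $n$, after which closedness follows. Your final sentence suggests the sublevel-set formulation alone bypasses the semicontinuity issue; it does not---the two-level trick is the missing ingredient.
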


\end{document}